\newcolumntype{M}[1]{>{\centering\arraybackslash}m{#1}}
\newcolumntype{L}[1]{>{\raggedright\arraybackslash}m{#1}}
\definecolor{lightblue}{RGB}{60,60,200}
\definecolor{darkblue}{RGB}{1,1,255}
\definecolor{mplblue}{RGB}{31, 119, 180}
\definecolor{cskyblue}{rgb}{0.01,0.39,0.75}
\theoremstyle{definition}  
\newtheorem{definition}{Definition}[section]
\newtheorem{theorem}{Theorem}[section]
\newtheorem{lemma}[theorem]{Lemma}
\newtheorem{proposition}[theorem]{Proposition}
\newtheorem{remark}{Remark}[section]
\newtheorem{corollary}[theorem]{Corollary}
\newtheorem{fact}[theorem]{Fact}
\newcommand{\predict}{LIMIT}
\DeclareMathOperator{\USI}{SI}
\DeclareMathOperator{\FSI}{F-SI}
\def \ofcmi {f\text{-}\mathrm{CMI}}
\def \fcmi {f\text{-}\mathrm{CMI}_P}
\newcommand{\rbr}[1]{\left(#1\right)}
\newcommand{\bbr}[1]{\Big(#1\Big)}
\newcommand{\sbr}[1]{\left[#1\right]}
\newcommand{\cbr}[1]{\left\{#1\right\}}
\newcommand{\nbr}[1]{\left\|#1\right\|}
\newcommand{\abs}[1]{\left|#1\right|}
\newcommand{\bs}[1]{{\boldsymbol{#1}}}  
\newcommand{\mathset}[1]{\cbr{#1}}  
\providecommand{\ind}[1]{{\bf 1}{\cbr{#1}}}  
\newcommand\numberthis{\addtocounter{equation}{1}\tag{\theequation}}  
\newcommand{\der}[2]{\frac{\text{d} #1}{\text{d} #2} }
\newcommand{\indep}{\protect\mathpalette{\protect\independenT}{\perp}}  
\def\independenT#1#2{\mathrel{\rlap{$#1#2$}\mkern2mu{#1#2}}}
\DeclareMathOperator*{\argmin}{\operatorname{argmin}}
\DeclareMathOperator{\sign}{\operatorname{sign}}
\DeclareMathOperator{\trace}{\operatorname{Tr}}
\DeclareMathOperator*{\var}{\operatorname{Var}}
\DeclareMathOperator*{\cov}{\operatorname{Cov}}
\DeclareMathOperator{\E}{\mathbb{E}}
\newcommand{\KL}[2]{\operatorname{KL}\rbr{#1\,\|\,#2}}
\newcommand{\KLdis}[3]{\operatorname{KL}^{#1}\rbr{#2\,\|\,#3}}
\newcommand{\Choose}[2]{\binom{#1}{#2}}
\newcommand{\floor}[1]{\lfloor #1 \rfloor}
\newcommand{\thesame}[1]{#1}
\newcommand{\best}[1]{\cellcolor{gray!25}{#1}}
\def \HH {\mathcal{H}}
\def \XX {\mathcal{X}}
\def \YY {\mathcal{Y}}
\def \ZZ {\mathcal{Z}}
\def \LL {\mathcal{L}}
\def \FF {\mathcal{F}}
\def \GG {\mathcal{G}}
\def \WW {\mathcal{W}}
\def \bR {\mathbb{R}}
\begin{document}

\title{{On Information Captured by Neural Networks:\\Connections with Memorization and Generalization}}

\author{Hrayr Harutyunyan}

\majorfield{COMPUTER SCIENCE}

\submitdate{August 2023}

\begin{preface}

  \prefacesection{Acknowledgements}
  First and foremost, I would like to express my deepest gratitude to my advisors, Aram Galstyan and Greg Ver Steeg.
This work would not exist without their invaluable guidance and unwavering support.
I am immensely thankful to them for cultivating an environment with complete academic freedom.
I would also like to thank Bistra Dilkina, Haipeng Luo, and Mahdi Soltanolkotabi for their insightful comments as members of various committees during my Ph.D. journey.

I extend my heartfelt appreciation to my fellow former or present Ph.D. students Sami Abu-El-Haija, Shushan Arakelyan, Rob Brekelmans, Aaron Ferber, Palash Goyal, Umang Gupta, David Kale, Neal Lawton, Myrl Marmarelis, Daniel Moyer, Kyle Reing, for their friendship, stimulating discussions, and collaborative spirit.
I would like to thank my friends and collaborators at YerevaNN, who warmly welcomed me in their office whenever I was in Armenia.

I would like to express my sincere appreciation to my collaborators Alessandro Achille, Rahul Bhotika, Sanjiv Kumar, Orchid Majumder, Aditya Krishna Menon, Giovanni Paolini, Maxim Raginsky, Avinash Ravichandran, Ankit Singh Rawat, Stefano Soatto, and Seungyeon Kim for their invaluable contributions and support throughout our collaborative endeavors.
I am thankful to Alessandro Achille and Avinash Ravichandran for hosting two fruitful summer internships at Amazon Web Services; and to Ankit Singh Rawat and Aditya Krishna Menon for the productive internship at Google Research.

I am indebted to the family of Avanesyans, who treated me like a family member and provided much-needed family support when I was away from my home country.
Finally, I would like to express my deepest gratitude to my parents and brother, whose unconditional love and sacrifices have been the foundation of my academic pursuits.

I acknowledge support from the USC Annenberg Fellowship.
Works described in \cref{ch:label-noise,ch:sample-info} were based partly on research sponsored by Air Force Research Laboratory under agreement number FA8750-19-1-1000.

  {
  \hypersetup{hidelinks} 
  \tableofcontents
  \listoftables
  \listoffigures
  }

  \prefacesection{Abstract}
  Despite the popularity and success of deep learning, there is limited understanding of when, how, and why neural networks generalize to unseen examples.
Since learning can be seen as extracting information from data, we formally study information captured by neural networks during training.
Specifically, we start with viewing learning in presence of noisy labels from an information-theoretic perspective and derive a learning algorithm that limits label noise information in weights.
We then define a notion of unique information that an individual sample provides to the training of a deep network, shedding some light on the behavior of neural networks on examples that are atypical, ambiguous, or belong to underrepresented subpopulations.
We relate example informativeness to generalization by deriving nonvacuous generalization gap bounds.
Finally, by studying knowledge distillation, we highlight the important role of data and label complexity in generalization.
Overall, our findings contribute to a deeper understanding of the mechanisms underlying neural network generalization.

\end{preface}

\chapter{Introduction}\label{ch:introduction}
Over the past decade, deep learning has achieved remarkable success in a wide range of applications, including computer vision, natural language processing, speech recognition, robotics, and generative modeling. 
Large neural networks trained with variants of stochastic gradient descent demonstrate excellent generalization capability, despite having enough capacity to memorize their training set~\citep{zhang2016understanding}.
Although some progress has been made toward understanding deep learning, a comprehensive understanding of \emph{when}, \emph{why}, and \emph{how} neural networks generalize remains elusive.

\section{Memorization in deep learning}
One aspect of deep learning that needs to be understood better is \emph{memorization}.
In a broad sense, it is unclear what information neural networks memorize; what types of memorization occur during training; which are harmful and which are helpful to generalization; and how to measure and control various types of memorization.
The fact that the term ``memorization'' has many definitions and interpretations indicates that memorization can come in different flavors.
The simplest form of memorization is memorizing incorrect labels or label noise, which has been the subject of many studies~\citep{survey,song2022learning}.
In specific learning scenarios, memorizing noisy labels does not significantly affect test error~\citep{liang2020just, bartlett2020benign,hastie2022surprises,frei2022benign,cao2022benign}.
However, label noise memorization significantly degrades test performance in typical deep learning settings~\citep{zhang2016understanding,chen2019understanding,mallinar2022benign}.
Given that real-world labeled datasets often have mislabeled examples due to ambiguities, labeling errors, or measurement errors, there is a great need for methods of measuring memorization and for training algorithms that are robust to label noise.

Memorization of noisy labels is also important because it provides insights into understanding the behavior of neural networks.
It has been observed that neural networks learn simple
and generalized patterns first~\citep{arpit2017closer} and generalize well in the early training epochs~\citep{li2020gradient}.
Even when trained on a dataset with noisy labels, neural networks still learn useful representations, especially in early layers~\citep{dosovitskiy2014discriminative,pondenkandath2018leveraging,maennel2020neural,anagnostidis2023the}.

Besides label noise, neural networks can memorize other information about their training set.
For example, an image classifier can memorize individual training examples~\citep{feldman2020neural}.
Given a white-box or black-box access to a neural network, it is possible to extract membership information, as shown by
\citet{shokri2017membership} and \citet{nasr2019comprehensive}.
Neural networks trained for language modeling memorize sensitive personal information~\citep{carlini2019secret}, factual knowledge~\citep{petroni-etal-2019-language}, long sequences of words from training data verbatim~\citep{carlini2021extracting,tirumala2022memorization,carlini2023quantifying}, and idioms~\citep{haviv-etal-2023-understanding}.
A diffusion model can memorize specific training examples~\citep{carlini2023extracting}.
Some of these instances of memorization are undesirable due to privacy concerns.
Nevertheless, from the generalization perspective, some types of memorization can be beneficial.
Indeed, memorization is an optimal strategy in some settings~\citep{feldman2020does,brown2021memorization}.
There is currently limited knowledge regarding when neural networks transition from generalizing to memorizing~\citep{cohen2018dnn,Zhang2020Identity}.

\section{The generalization puzzle}
Arguably the most important question in deep learning is that of generalization.
As demonstrated by \citet{zhang2016understanding}, overparameterized neural networks generalize well despite being able to memorize the training set mechanically.
Classical learning theory results based on various notions of hypothesis set complexity do not explain this phenomenon --
the same class of neural networks can generalize well for one training data but fail for another.
Furthermore, the modern practice of training neural networks goes against the conventional bias-variance trade-off wisdom in that neural networks are often trained to interpolation without explicit generalization~\citep{belkin2019reconciling}.

These observations have sparked a search for effective notions of complexity, implicit biases, and for data- and algorithm-dependent generalization bounds~\citep{bartlett2021deep,Jiang*2020Fantastic,dziugaite2020search}.
It has been well-established that number of parameters does not serve as a good notion of complexity, and other alternatives were proposed, such as size-based measures~\citep{661502, neyshabur2015norm, bartlett2017spectrally,golowich2018size}.
The good generalization of stochastic gradient descent has been attributed to implicit biases such as finding flat minima~\citep{keskar2017on}, finding minimum norm solutions~\citep{soudry2018implicit}, spectral bias~\citep{rahaman2019spectral}, simplicity bias~\citep{Nakkiran:2019}, representation compression~\citep{shwartzziv2017opening}, 
and stability~\citep{hardt2016train}, among others.
Nevertheless, it has been challenging to find generalization bounds that produce good quantitative results~\citep{neyshabur2017exploring,nagarajan2019uniform,dziugaite2020search}.
Exceptions include some PAC-Bayes bounds~\citep{DBLP:conf/uai/DziugaiteR17,zhou2018nonvacuous} that apply to modified learning algorithms.

While the role of learning algorithms and architectures have been studied extensively, the same cannot be said for the role of data distribution.
Evidently, a good learning algorithm and neural network architecture alone do not necessarily result in good generalization.
Therefore, an adequate generalization theory should also make some assumptions about the data distribution.
For most data-dependent generalization bounds, this is done by assuming access to training data, limiting the explanatory power.
Ideally, a theory should predict good generalization without making strong assumptions about the data distribution.
For example, \citet{yang2022does} show that the eigenspectrum of the input correlation matrix of typical datasets has a certain property that effectively induces a capacity control for a neural network.
\citet{arora2019fine} and \citet{ortiz2021can} show that a good alignment between the neural tangent kernel and labels ensures fast learning and good generalization.
Finding sufficient data properties that are characteristic of real-world datasets and enable good generalization is thus a key direction in explaining the success of deep learning.

Overall, besides satisfying scientific curiosity and providing generalization guarantees, understanding generalization in deep learning is also a practically fruitful research direction.
We expect that an adequate generalization theory will lead to improved training algorithms, neural network architectures, and better scaling with respect to data and model size.
Some instances of confirming this expectation already exist, such as training algorithms Entropy-SGD~\citep{chaudhari2019entropy} and sharpness-aware minimization~\citep{foret2020sharpness}.
We present another such development in this dissertation (\cref{ch:sup-complexity}).

\section{Our contributions}
The results presented in this dissertation can be seen as contributions to the problems of memorization and generalization in deep learning.
We view training a neural network as extracting information from samples in a dataset and storing it in the weights of the network so that it may be used in future inference or prediction.
Consequently, we formally study information captured by a neural network during training to understand the complex interplay between data, learning algorithm, and hypothesis class.
As we shall see, this allows us to define principled notions of memorization, shed some light on the behavior of neural networks, derive nonvacuous generalization gap bounds, and guide the training algorithm design process.
\cref{tab:contributions} presents the mapping between the chapters and papers.

\begin{table}[t]
    \small
    \caption{A mapping from chapters to papers.}
    \centering
    \begin{tabular}{cc}
    \toprule
    Chapter & Paper\\
    \midrule
    \cref{ch:label-noise}   & \citet{harutyunyan2020improving} \\
    \cref{ch:unique-info}   & \citet{harutyunyan2021estimating} \\
    \cref{ch:sample-info}   & \citet{harutyunyan2021informationtheoretic} \\
    \cref{ch:limitations}   & \citet{harutyunyan2022formal} \\
    \cref{ch:sup-complexity}   & \citet{harutyunyan2023supervision} \\
    \bottomrule
    \end{tabular}
    \label{tab:contributions}
\end{table}

\subsection*{Label noise memorization}
In \cref{ch:label-noise}, we start with studying the simplest form of memorization conceptually: memorizing label noise.
As a fundamental measure of label noise memorization, we consider the Shannon mutual information $I(W; \bs{Y} \mid \bs{Y})$ between the learned weights $W$ and the training labels $\bs{Y}$, given the training inputs $\bs{X}$.
We show that reducing the training error beyond the noise level results in a large value of $I(W; \bs{Y} \mid \bs{X})$.
Furthermore, we show that learning with constrained label noise information provides a certain level of label noise robustness.
Starting with this constrained problem, we derive a new learning algorithm in which the main classifier is trained with gradient updates predicted by another neural network that does not access the dataset labels.
In addition to good results on standard benchmark tasks, the proposed algorithm provides a partial justification for the well-known co-teaching approach~\citep{han2018co}.

\subsection*{A more general notion of example memorization}
In \cref{ch:unique-info}, we go beyond label noise memorization and aim to quantify how much information about a particular example is captured during the training of a neural network.
Due to combinatorial challenges related to redundancies, synergies, and high-order dependencies, we focus on measuring \emph{unique} information, which can be seen as a measure of example memorization (not necessarily harmful to generalization).
We define, both in weight space and function space, a notion of unique information that an example provides to the training of a neural network.
While rooted in information theory, these quantities capture some aspects of stability theory and influence functions.
The proposed unique information measures apply to even deterministic training algorithms and can be approximated efficiently for wide or pretrained neural networks using a linearization of the model.

Apart from having important applications, such as data valuation, active learning, data summarization, and guiding the data collection process, measuring unique information provides insights into \emph{how} neural networks generalize.
We find that typically only a small portion of examples are informative. These are usually atypical, hard, ambiguous, mislabeled, or underrepresented examples.
In some cases, one can remove up to 90\% of uninformative examples without degrading test set performance.
Conversely, removing highly memorized examples decreases test accuracy substantially, indicating that memorization is sometimes needed for good generalization.
Furthermore, we find that some uninformative examples can become highly informative when some other examples are removed from the training set.
Our findings add to the increasing amount of research focused on revealing the function of example memorization in generalization~\citep{koh2017understanding, feldman2020does, feldman2020neural, paul2021deep, sorscher2022beyond, carlini2022privacy}.

\subsection*{Information-theoretic generalization bounds}
Information about the training set captured by a neural network is also helpful for studying generalization.
In their seminal work, \citet{xu2017information} introduce a generalization gap bound depending on the Shannon information $I(W; S)$ between the learned weights $W$ and the training set $S$.
This result confirms the intuition that a learner will generalize well if it does not memorize the training set.

Unfortunately, the bound is vacuous in realistic settings and gives only qualitative insights.
Many better bounds were introduced subsequently, but two problems remained: (a) the bounds were vacuous in practical settings, and (b) they were hard to estimate due to challenges in estimating mutual information between high-dimensional variables.
In \cref{ch:sample-info}, building on the conditional mutual information bounds of \citet{steinke2020reasoning} and the sample-wise bounds of \citet{bu2020tightening}, we derive expected generalization bounds for supervised learning algorithms based on information contained in predictions rather than in the output of the training algorithm.
These bounds improve over the existing information-theoretic bounds, apply to a wider range of algorithms, give meaningful results for deterministic algorithms, and are significantly easier to estimate.
Furthermore, some classical learning theory results, such as expected generalization gap bounds based on Vapnik–Chervonenkis (VC) dimension~\citep{vapnik1998statistical} and algorithmic stability~\citep{bousquet2002stability}, follow directly from our results.
More importantly, the bounds are nonvacuous in practical scenarios for deep learning.
In one case, for a neural network with 3M parameters that reaches 9\% test error with just 75 MNIST training examples, the estimated test error bound is 22\%.

An essential ingredient in recent improvements of information-theoretic generalization bounds (including in our work presented in \cref{ch:sample-info}) is the introduction of sample-wise information bounds by \citet{bu2020tightening} that depend on the average amount of information the learned hypothesis has about a single training example.
In particular, for a training set of $n$ examples $S=(Z_1,\ldots,Z_n)$, their bound depends on $1/n \sum_{i=1}^n I(W;Z_i)$ rather than $I(W;S)/n$.
However, these sample-wise bounds were derived only for the \emph{expected} generalization gap, where the expectation is taken over both the training set and stochasticity of the training algorithm.
While PAC-Bayes and information-theoretic bounds are intimately related, the same technique did not work for deriving sample-wise single-draw or PAC-Bayes generalization gap bounds.
In \cref{ch:limitations},we show that sample-wise bounds are generally possible only for the \emph{expected} generalization gap (the weakest form of generalization guarantee).
In other words, sample-wise single-draw and PAC-Bayes generalization bounds are impossible unless additional assumptions are made.
Surprisingly, we also find that single-draw and PAC-Bayes bounds with information captured in pairs of examples, $\frac{1}{n(n-1)}\sum_{i\neq j} I(W;Z_i,Z_j)$, are possible without additional assumptions.

\subsection*{The role of supervision complexity in generalization}
One drawback of information-theoretic generalization bounds is that they depend too much on training data and data distribution. While these bounds can give concrete generalization guarantees and help design better learning algorithms, they do not specify what data properties are desirable for good generalization.
Their strong data-dependent nature enables differentiating cases like learning with ground truth labels and learning with random labels but comes at the cost of reduced explanatory power.
There are cases when subtle differences in data result in significantly different generalization performances.
One can argue that even the tightest information-theoretic bounds do not explain why these subtle differences have such effects.

One such prominent case arises in knowledge distillation~\citep{Bucilla:2006, hinton2015distilling}, which is a popular method of transferring knowledge from a large ``teacher'' model to a more compact ``student'' model.
In the most basic form of knowledge distillation, the student is trained to fit the teacher's predicted \emph{label distribution} (also called \emph{soft labels}) for each training example.
It has been well-established that distilled students usually perform better than students trained on raw dataset labels~\citep{hinton2015distilling, furlanello2018born, stanton2021does, gou2021knowledge}.
Notably, the teacher itself is usually trained on the same inputs but with original hard labels.
From a purely information-theoretic perspective, this phenomenon is quite surprising because, by the data processing inequality, the teacher predictions do not add any new information that was not presented in the original training dataset.
Clearly, the distillation dataset must satisfy some desired property that enables better generalization.

Several works have attempted to uncover \emph{why} knowledge distillation can improve the student performance.
Some prominent observations are that (self-)distillation induces certain favorable optimization biases in the training objective~\citep{phuong19understand,ji2020knowledge}, 
lowers variance of the objective~\citep{menon2021statistical,Dao:2021,ren2022better}, 
increases regularization towards learning ``simpler'' functions~\citep{mobahi2020self}, 
transfers information from different data views~\citep{allen-zhu2023understanding}, and 
scales per-example gradients based on the teacher's confidence~\citep{furlanello2018born,tang2020understanding}.
Nevertheless, there are still no compelling answers to why knowledge distillation works, what the exact role of temperature scaling is, what effects the teacher-student capacity gap has, and what makes a good teacher ultimately.

In \cref{ch:sup-complexity}, we provide a new perspective on knowledge distillation through the lens of \emph{supervision complexity}.
To put it concisely, supervision complexity quantifies why certain targets 
(e.g., temperature-scaled teacher probabilities) may be ``easier'' for a student model to learn compared to others (e.g., raw one-hot labels), 
owing to better alignment with the student's \emph{neural tangent kernel} (\emph{NTK})~\citep{jacot2018ntk,lee2019wide}.
We derive a new generalization bound for distillation that highlights how student generalization is controlled by a balance of 
the \emph{teacher generalization}, the student's \emph{margin} with respect to the soft labels, and the supervision complexity of the soft labels.
We show that both temperature scaling and early stopping
reduce the supervision complexity at the expense of lowering the classification margin.
Based on our analysis, we advocate using a simple \emph{online distillation} algorithm, wherein the student receives progressively more complex soft labels corresponding to teacher predictions at various checkpoints during its training.
Online distillation improves significantly over standard distillation and is especially successful for students with weak inductive biases, for which the final teacher predictions are often as complex as dataset labels, particularly during the early stages of training.

\section{Notation and preliminaries}
Before proceeding to our contributions, we first introduce some basic notation, describe an abstract learning setting, and provide an overview of some information-theoretic quantities used in this dissertation.

\paragraph{Notation.}
We use capital letters ($X$, $Y$, $Z$, etc) for random variables, corresponding lowercase letters ($x$, $y$, $z$, etc) for their values, and calligraphic letters for their domains ($\XX$, $\YY$, $\ZZ$, etc).
We use $\E_{P_X}\sbr{f(X)} = \int f \text{d} P_X$ to denote expectations.
Whenever the distribution over which the expectation is taken is clear from the context, we simply write $\E_{X}\sbr{f(X)}$ or $\E\sbr{f(X)}$.
A random variable $X$ is called $\sigma$-subgaussian if $ \E \exp(t (X - \E X)) \le \exp(\sigma^2 t^2 / 2), \ \forall t \in \mathbb{R}$.
For example, a random variable that takes values in $[a,b]$ almost surely, is $(b-a)/2$-subgaussian.

Throughout this dissertation, $[n]$ denotes the set $\mathset{1,2,\ldots,n}$.
If $A = (a_1,\ldots,a_n)$ is a collection, then $A_{-i} \triangleq  (a_1,\ldots,a_{i-1},a_{i+1},\ldots,a_n)$ and $A_{1:k} \triangleq \mathset{a_1,\ldots,a_k}$.
For $J\in\mathset{0,1}^n$, $\bar{J} \triangleq (1-J_1,\ldots,1 - J_n)$ is the negation of $J$.
When there can be confusion about whether a variable refers to a collection of items or a single item, we use bold symbols to denote the former.
For a pair of integers $n \ge m \ge 0$, $\binom{n}{m} = \frac{n!}{m!(n-m)!}$ denotes the binomial coefficient.
If $y(x)\in\mathbb{R}^m$ and $x\in\mathbb{R}^n$, then the Jacobian $\frac{\partial y}{\partial x}$ is an $m \times n$ matrix.
The gradient  $\nabla_x y$ is an $n \times m$ matrix denoting the transpose of the Jacobian.
This convention in convenient when working with gradient-descent-like algorithms.
In particular, when $w \in \mathbb{R}^d$ is a column vector and $\mathcal{L}(w)$ is a scalar, $\nabla_w \mathcal{L}(w)$ is also a column vector.

\paragraph{Learning setup.}
In the subsequent chapters, we consider the following abstract learning setting or an instance of it.
We will consider a standard learning setting.
There is an unknown data distribution $P_Z$ on an input space $\ZZ$.
The learner observes a collection of $n$ i.i.d examples $S=(Z_1,\ldots,Z_n)$ sampled from $P_Z$ and outputs a hypothesis (possibly random) belonging to a hypothesis space $\mathcal{W}$.
We will treat the learning algorithm as a probability kernel $Q_{W|S}$, which given a training set $s$ outputs a hypothesis $W$ sampled from the distribution $Q_{W|S=s}$.
For deterministic algorithms, $Q_{W|S=s}$ is a point mass distribution.
Together with $P_S$, the algorithm $Q_{W|S}$ induces a joint probability distribution $P_{W,S}=P_S Q_{W|S}$ on $\mathcal{W} \times \mathcal{Z}^n$.

The performance of a hypothesis $w\in\mathcal{W}$ on an example $z\in\ZZ$ is measured with a loss function $\ell : \mathcal{W} \times \mathcal{Z} \rightarrow \bR$.
For a hypothesis $w\in\mathcal{W}$, the population risk $R(w)$ is defined as $\E_{Z'\sim P_Z}\sbr{\ell(w,Z')}$, while the empirical risk is defined as $r_S(w) = 1/n \sum_{i=1}^n \ell(w,Z_i)$.
Note that the empirical risk is a random variable depending on $S$.
We will often study the difference between population and empirical risks, $R(W) - r_S(W)$, which is called generalization gap or generalization error.
Note that generalization gap is a random variable depending on both training set $S$ and randomness of the training algorithm $Q$.

Often we will be interested in supervised learning problems, where $\ZZ = \XX \times \YY$ and $Z_i = (X_i, Y_i)$.
In such cases, we define $\bs{X}\triangleq(X_1,\ldots,X_n)$ and $\bs{Y} \triangleq (Y_1,\ldots,Y_n)$.

\paragraph{Information-theoretic concepts.} 
The entropy of a discrete random variable $X$ with probability mass function $p(x)$ is defined as $H(X) = -\sum_{x \in \XX} p(x) \log p(x)$.
Analogously, the differential entropy of a continuous random variable $X$ with probability density $p(x)$ is defined as $H(X) = -\int_\XX p(x) \log p(x) \text{d}x $.
Given two probability measures $P$ and $Q$ defined on the same measurable space, such that $P$ is absolutely continuous with respect to $Q$, the Kullback–Leibler (KL) divergence from $P$ to $Q$ is defined as $\KL{P}{Q} = \int  \log \frac{\text{d} P}{\text{d} Q} \text{d}P$, where $\frac{\text{d} P}{\text{d} Q}$ is the Radon-Nikodym derivative of $P$ with respect to $Q$.
When $X$ and $Y$ are random variables defined on the same probability space, we sometimes shorten $\KL{P_X}{P_Y}$ to $\KL{X}{Y}$.
The Shannon mutual information between random variables $X$ and $Y$ is $I(X; Y) = \KL{P_{X,Y}}{P_X \otimes P_Y}$.

The conditional variants of entropy, differential entropy, KL divergence, and Shannon mutual information entail an expectation over the random variable in the condition.
For example, $I(X; Y \mid Z) \allowbreak = \int_\ZZ \KL{P_{X,Y | Z}}{P_{X|Z}\otimes P_{Y|Z}} \text{d} P_Z$.
The disintegrated variants of these quantities are denoted with a superscript indicating the condition.
For example, $I^Z(X;Y) \triangleq \KL{P_{X,Y | Z}}{P_{X|Z}\otimes P_{Y|Z}}$ denotes the disintegrated mutual information~\citep{negrea2019information}.
Note that the disintegrated mutual information is a random variable depending on $Z$.
In this dissertation, all information-theoretic quantities are measured in nats, unless specified otherwise.
Please refer to \citep{cover} for more in-detail description of the aforementioned concepts.

\chapter{Improving Generalization by Controlling Label Noise Information in Neural Network Weights}\label{ch:label-noise}
\section{Introduction}
Despite having millions of parameters, modern neural networks generalize surprisingly well.
However, their training is particularly susceptible to noisy labels, as shown by \citet{zhang2016understanding} in their analysis of generalization error.
In the presence of noisy or incorrect labels, networks start to memorize the training labels, which degrades the generalization performance~\citep{chen2019understanding}.
At the extreme, standard architectures have the capacity to achieve 100\% classification accuracy on training data, even when labels are assigned at random~\citep{zhang2016understanding}.
Furthermore, standard explicit or implicit regularization techniques such as dropout, weight decay or data augmentation do not directly address nor completely prevent label memorization~\citep{zhang2016understanding, arpit2017closer}.

Poor generalization due to label memorization is a significant problem because many large, real-world datasets are imperfectly labeled.
Label noise may be introduced when building datasets from unreliable sources of information or using crowd-sourcing resources like Amazon Mechanical Turk.
A practical solution to the memorization problem is likely to be algorithmic as sanitizing labels in large datasets is costly and time consuming.
Existing approaches for addressing the problem of label noise and generalization performance include deriving robust loss functions~\citep{natarajan2013learning, mae, gce, dmi}, loss correction techniques~\citep{Sukhbaatar2014TrainingCN,xiao2015learning,goldberger2016training, patrini2017making}, re-weighting samples~\citep{jiang2018mentornet, ren2018learning}, detecting incorrect samples and relabeling them~\citep{Reed2014TrainingDN, tanaka2018joint, ma2018dimensionality}, and employing two networks that select training examples for each other~\citep{han2018co,yu2019does}.

We propose an information-theoretic approach that directly addresses the root of the problem. If a classifier is able to correctly predict a training label that is actually random, it must have somehow stored information about this label in the parameters of the model. 
To quantify this information, \citet{achille2018emergence} consider weights as a random variable, $W$, that depends on stochasticity in training data and parameter initialization. 
The entire training dataset is considered a random variable consisting of a vector of inputs, $\bs{X}$, and a vector of labels for each input, $\bs{Y}$.
The amount of label memorization is then given by the Shannon mutual information between weights and labels conditioned on inputs, $I(W ; \bs{Y} \mid \bs{X})$. \citet{achille2018emergence} show that this term appears in a decomposition of the commonly used expected cross-entropy loss, along with three other individually meaningful terms. Surprisingly, cross-entropy rewards large values of $I(W ; \bs{Y} \mid \bs{X})$, which may promote memorization if labels contain information beyond what can be inferred from $\bs{X}$.
Such a result highlights that in addition to the network's representational capabilities, the loss function -- or more generally, the learning algorithm -- plays an important role in memorization. To this end, we wish to study the utility of limiting $I(W ; \bs{Y} \mid \bs{X})$, and how it can be used to modify training algorithms to reduce memorization.

\begin{figure}[!t]
    \centering
    \begin{subfigure}{0.48\textwidth}
    \includegraphics[width=\textwidth]{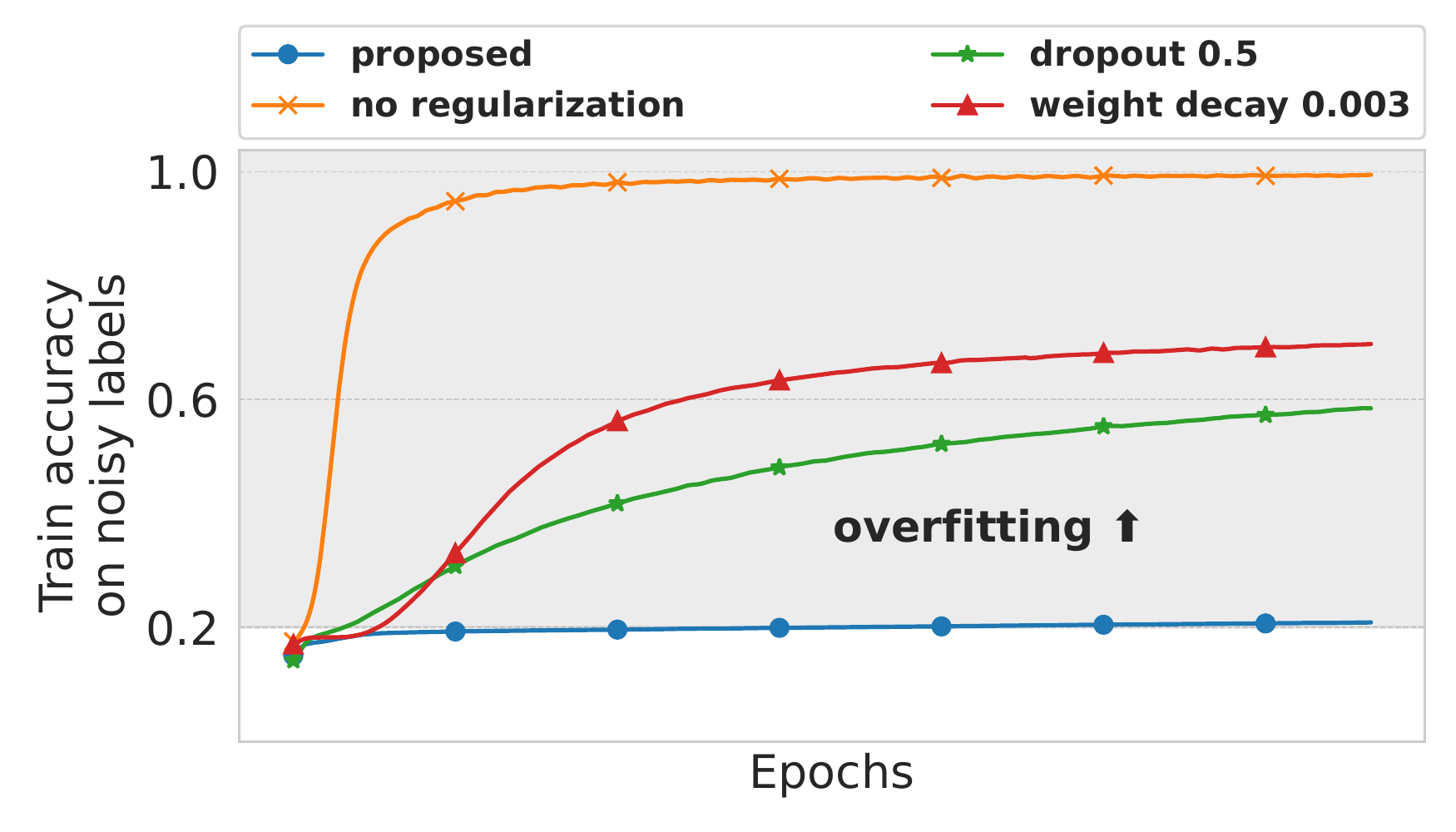}
    \end{subfigure}%
    \begin{subfigure}{0.48\textwidth}
    \includegraphics[width=\textwidth]{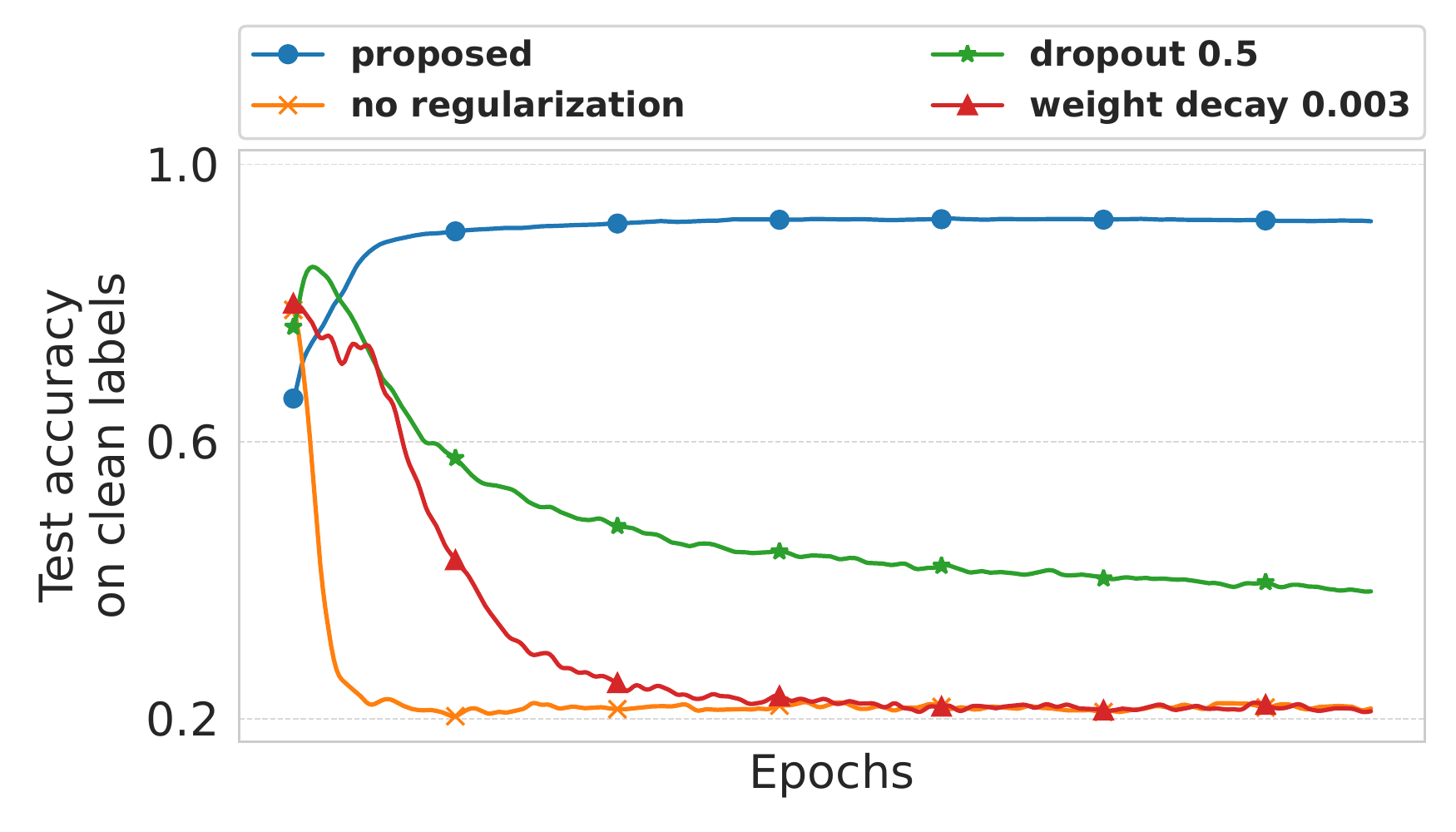}
    \end{subfigure}
    \caption{Neural networks tend to memorize labels when trained with noisy labels (80\% noise in this case), even when dropout or weight decay are applied.
    Our training approach limits label noise information in neural network weights, avoiding memorization of labels and improving generalization. See \cref{subsec:fano} for more details.}
    \label{fig:preventing_memorization}
\end{figure}

Our main contributions towards this goal are as follows: 1) We show that low values of $I(W ; \bs{Y} \mid \bs{X})$ correspond to reduction in memorization of label noise, and lead to better generalization gap bounds. 2) We propose training methods that control memorization by regularizing label noise information in weights.
When the training algorithm is a variant of stochastic gradient descent, one can achieve this by controlling label noise information in gradients. A promising way of doing this is through an additional network that tries to predict the classifier gradients without using label information. We experiment with two training procedures that incorporate gradient prediction in different ways: one which uses the auxiliary network to penalize the classifier, and another which uses predicted gradients to train it. In both approaches, we employ a regularization that penalizes the L2 norm of predicted gradients to control their capacity.
The latter approach can be viewed as a search over training algorithms, as it implicitly looks for a loss function that balances training performance with label memorization.
3) Finally, we show that the auxiliary network can be used to detect incorrect or misleading labels.
To illustrate the effectiveness of the proposed approaches, we apply them on corrupted versions of MNIST, CIFAR-10, CIFAR-100 with various label noise models, and on the Clothing1M dataset, which already contains noisy labels. We show that methods based on gradient prediction yield drastic improvements over standard training algorithms (like cross-entropy loss), and outperform  competitive approaches designed for learning with noisy labels.

\section{Label noise information in weights}
We begin by formally introducing a measure of label noise information in weights, and discuss its connections to memorization and generalization.
Consider a labeled training set $S = (Z_1, \ldots, Z_n)$ consisting of $n$ i.i.d. examples from $P_Z$, with $Z_i=(X_i,Y_i)$.
Let $\bs{X}\triangleq(X_1,\ldots,X_n)$ and $\bs{Y}\triangleq (Y_1,\ldots,Y_n)$.
Consider a neural network $f_w(y \mid x)$ with parameters $w$ that models the conditional distribution of labels.
In practice such neural networks are trained by minimizing the empirical negative log-likelihood loss:
\begin{equation}
    r_S(w) = -\frac{1}{n}\sum_{i=1}^n \log f_w(Y_i \mid X_i).
\end{equation}
For a given training algorithm $Q_{W|S}$, the expected value of this empirical risk can be decomposed as follows~\citep{achille2018emergence}:
\begin{align}
\E_{P_S}\E_{W\sim Q_{W|S}}\sbr{r_S(W)} &= H(\bs{Y} \mid \bs{X}) + \mathbb{E}_{P_{\bs{X},W}}\sbr{ \KL{p(\bs{Y} \mid \bs{X})}{f_W(\bs{Y} \mid \bs{X})}} - I(W ; \bs{Y} \mid \bs{X}).
\label{eq:ce-decomp}
\end{align}
The first term measures inherent uncertainty of labels, which is independent of the model.
The second term measures how well the learned model $f_W(y \mid x)$ approximates the ground truth conditional distribution $p(y \mid x)$ on training inputs.
The third term is the label noise information, and enters the equation with a negative sign.

The problem of minimizing this expected cross-entropy is equivalent to selecting an appropriate training algorithm.
If the labels contain information beyond what can be inferred from inputs (i.e., $H(\bs{Y} \mid \bs{X}) > 0$), such an algorithm may do well by memorizing the labels through the third term of \cref{eq:ce-decomp}.
Indeed, minimizing the empirical cross-entropy loss $Q^\mathrm{ERM}_{W|S} = \delta(w^*(S))$, where $w^*(S) \in \argmin_{w} r_S(w)$, does exactly that~\citep{zhang2016understanding}.

\subsection{Decreasing label noise information reduces memorization}\label{subsec:fano}
To demonstrate that $I(W ; \bs{Y} \mid \bs{X})$ is directly linked to memorization, we prove that any algorithm with small $I(W ; \bs{Y} \mid \bs{X})$ overfits less to label noise in the training set.
\begin{theorem}
Consider a dataset $S=(Z_1,\ldots,Z_n)$ of $n$ i.i.d. examples, with $Z_i=(X_i, Y_i)$. Assume that the domain of labels, $\YY$, be a finite set of at least two elements.
Let $Q_{W|S}$ be possibly stochastic learning algorithm producing weights for a classifier.
Let $\widehat{Y}_i$ denote the prediction of the classifier on the $i$-th example and let $E_i \triangleq \ind{\widehat{Y}_i \neq Y_i}$ be a random variable corresponding to predicting $Y_i$ incorrectly.
Then, the following inequality holds:
\begin{equation}
\E\sbr{\sum_{i=1}^n E_i} \ge \frac{H(\bs{Y} \mid \bs{X}) - I(W ; \bs{Y} \mid \bs{X}) - \sum_{i=1}^n H(E_i)}{ \log \left(\lvert \YY \rvert - 1\right)}.
\end{equation}
\label{thm:fano}
\end{theorem}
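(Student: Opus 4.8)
The plan is to recognize the claimed bound as a multi-sample version of Fano's inequality, where the ``side information'' used to estimate the labels is played by the weights $W$ together with the inputs $\bs{X}$. First I would rewrite the numerator using the definition of conditional mutual information,
\[
H(\bs{Y}\mid\bs{X}) - I(W;\bs{Y}\mid\bs{X}) = H(\bs{Y}\mid\bs{X},W),
\]
so that the quantity to control is the residual uncertainty in the labels once both the inputs and the trained weights are known. The intuition is that if $I(W;\bs{Y}\mid\bs{X})$ is small, then $W$ has captured little of the label noise, so $H(\bs{Y}\mid\bs{X},W)$ stays close to $H(\bs{Y}\mid\bs{X})$ and the classifier cannot reproduce the labels accurately.

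Next I would upper bound $H(\bs{Y}\mid\bs{X},W)$ in terms of the per-example predictions. Using subadditivity of entropy (equivalently, the chain rule together with the fact that conditioning cannot increase entropy),
\[
H(\bs{Y}\mid\bs{X},W) \le \sum_{i=1}^n H(Y_i\mid\bs{X},W).
\]
Since the prediction $\widehat{Y}_i$ is a (deterministic) function of $W$ and $X_i$, conditioning additionally on $\widehat{Y}_i$ changes nothing, and dropping the remaining conditioning only increases entropy:
\[
H(Y_i\mid\bs{X},W) = H(Y_i\mid\bs{X},W,\widehat{Y}_i) \le H(Y_i\mid\widehat{Y}_i).
\]
If the classifier is stochastic, I would first absorb its internal randomness into the conditioning; it is independent of $Y_i$ given $(\bs{X},W)$, so the first equality is unaffected.

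Finally I would apply the classical Fano inequality to each term, $H(Y_i\mid\widehat{Y}_i)\le H(E_i) + \E\sbr{E_i}\log(\abs{\YY}-1)$, where $E_i=\ind{\widehat{Y}_i\neq Y_i}$ is the binary error indicator, so that $\E\sbr{E_i}$ is the error probability. Summing over $i$ and combining with the previous displays gives
\[
H(\bs{Y}\mid\bs{X}) - I(W;\bs{Y}\mid\bs{X}) \le \sum_{i=1}^n H(E_i) + \log(\abs{\YY}-1)\,\E\!\sbr{\sum_{i=1}^n E_i},
\]
and rearranging yields the claim. The main thing to get right is the middle step: correctly inserting the predictions $\widehat{Y}_i$ into the conditioning by exploiting that they are functions of $(W,X_i)$, which is precisely what converts an information quantity into a statement about error rates. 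A minor point to handle separately is the binary-label case $\abs{\YY}=2$, where $\log(\abs{\YY}-1)=0$; there the same chain shows the numerator is nonpositive, so the inequality holds vacuously and the division should be interpreted accordingly.
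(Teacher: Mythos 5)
Your proposal is correct and follows essentially the same route as the paper's proof: both reduce the claim to $H(\bs{Y}\mid\bs{X},W)=H(\bs{Y}\mid\bs{X})-I(W;\bs{Y}\mid\bs{X})$, split into per-example terms via subadditivity of conditional entropy (the paper phrases this as nonnegativity of total correlation), and apply Fano's inequality to each $Y_i$ with estimator $\widehat{Y}_i$ a function of $(W,X_i)$. Your explicit handling of the stochastic-classifier case and of the degenerate $\abs{\YY}=2$ case are minor additions that the paper addresses only implicitly or in a separate remark.
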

This result establishes a lower bound on the expected number of prediction errors on the training set, which increases as $I(W; \bs{Y} \mid \bs{X})$ decreases.
For example, consider a corrupted version of the MNIST dataset where each label is changed with probability $0.8$ to a uniformly random incorrect label.
By the above bound, every algorithm for which $I(W; \bs{Y} \mid \bs{X}) = 0$ will make at least $80\%$ prediction errors on the training set in expectation.
In contrast, if the weights retain $1$ bit of label noise information per example, the classifier will make at least 40.5\% errors in expectation.
Below we discuss the dependence of error probability on $I(W; \bs{Y} \mid \bs{X})$.

\begin{remark}
Let $k = |\YY|$, $r = \frac{1}{n}\mathbb{E}\sbr{\sum_{i=1}^n E_i}$ denote the expected training error rate, and $h(r) = -r \log r - (1-r)\log(1-r)$ be the binary entropy function.
Then we can simplify the results of \cref{thm:fano} as follows:
\begin{align}
r &\ge \frac{H(Y_1 \mid X_1) - I(W ; \bs{Y} \mid \bs{X}) / n - \frac{1}{n}\sum_{i=1}^n h\rbr{P(E_i=1)}}{\log(k - 1)}\\
&\ge \frac{H(Y_1 \mid X_1) - I(W ; \bs{Y} \mid \bs{X}) / n - h\rbr{\frac{1}{n}\sum_{i=1}^n P(E_i=1)}}{\log(k - 1)}&&\text{(by Jensen's inequality)}\\
&= \frac{H(Y_1 \mid X_1) - I(W ; \bs{Y} \mid \bs{X}) / n - h(r)}{\log(k - 1)}.\numberthis\label{eq:fano_r}
\end{align}
Solving this inequality for $r$ is challenging. 
One can simplify the right hand side further by bounding $H(E_1) \le 1$ (assuming that entropies are measured in bits).
However, this will loosen the bound.
Alternatively, we can find the smallest $r_0$ for which \cref{eq:fano_r} holds and claim that $r \ge r_0$.
\end{remark}

\begin{remark}
If $|\YY| = 2$, then $\log(|\YY| - 1) = 0$, putting which in \cref{eq:original-fano} of \cref{app:proofs}:
\begin{equation}
h(r) \ge H(Y_1 \mid X_1) - I(W ; \bs{Y} \mid \bs{X}) / n.
\end{equation}

\begin{figure}[t]
    \centering
    \includegraphics[width=0.5\textwidth]{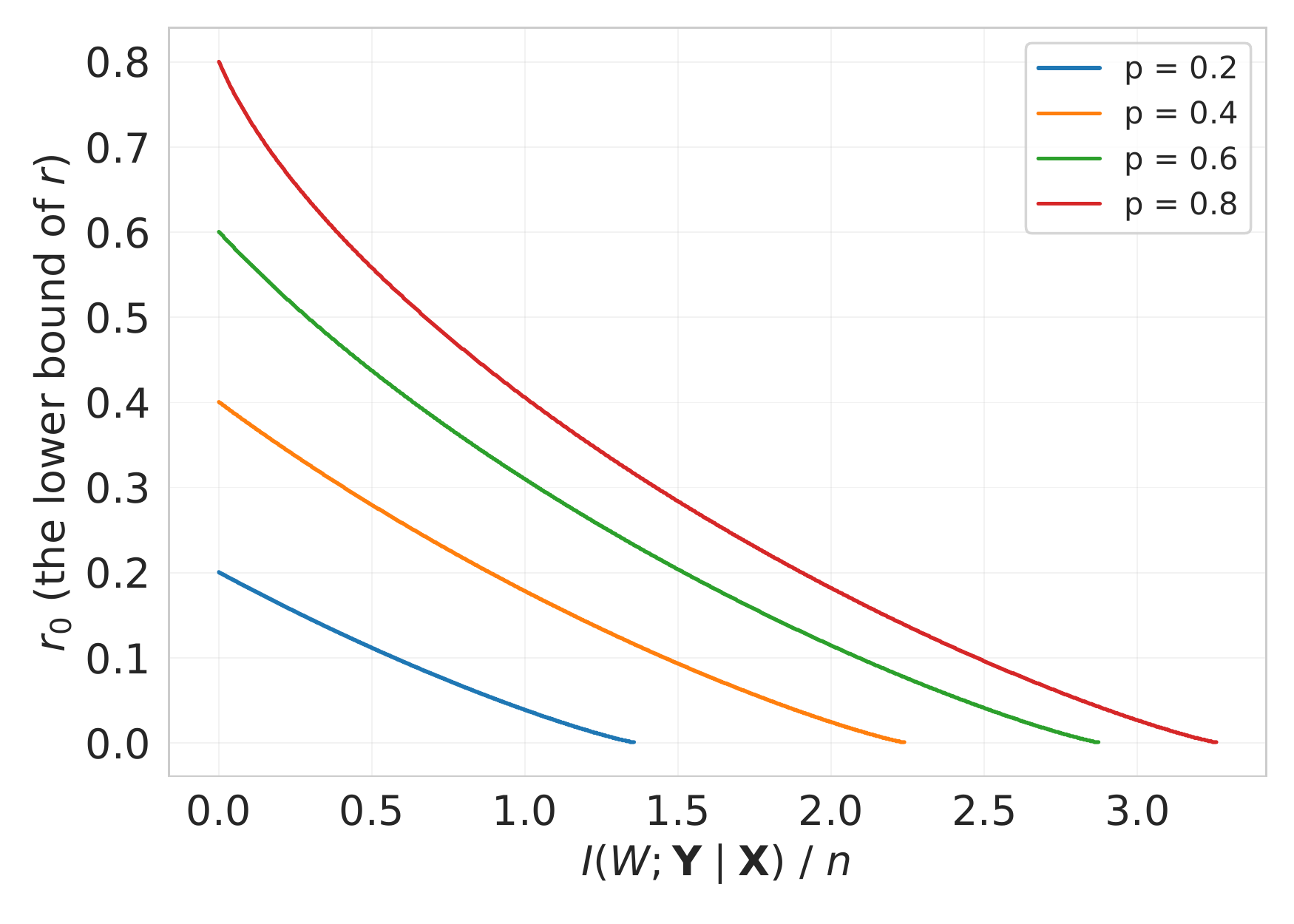}
    \caption{The lower bound $r_0$ on the rate of training errors $r$ \cref{thm:fano} establishes for varying values of $I(W; \bs{Y} \mid \bs{X})$, in the case when label noise is uniform and probability of a label being incorrect is $p$.}
    \label{fig:fano}
\end{figure}
\end{remark}

\begin{remark}
When we have uniform label noise where a label is incorrect with probability $p$ ($0 \le p < \frac{k-1}{k}$) and $I(W ; \bs{Y} \mid \bs{X}) = 0$, the bound of \cref{eq:fano_r} is tight, i.e., implies that $r \ge p$. 
To see this, we note that $H(Y_1 \mid X_1) = h(p) + p \log (k-1)$, putting which in \cref{eq:fano_r} gives us:
\begin{equation}
    r \ge \frac{h(p) + p \log(k-1) - h(r)}{\log(k-1)} = p + \frac{h(p) - h(r)}{\log(k-1)}.
    \label{eq:fano_uniform}
\end{equation}
Therefore, when $r=p$, the inequality holds, implying that $r_0 \le p$. To show that $r_0 = p$, we need to show that for any $0 \le r < p$, the \cref{eq:fano_uniform} does not hold.
Let $r \in [0, p)$ and assume that \cref{eq:fano_uniform} holds. Then
\begin{align}
    r &\ge p + \frac{h(p) - h(r)}{\log(k-1)}\\
      &\ge p + \frac{h(p) - \left(h(p) + (r-p) h'(p)\right)}{\log(k-1)}\\
      &\ge p + \frac{-(r-p) \log(k-1)}{\log(k-1)} = 2p - r,\numberthis\label{eq:fano-zero-mi-case}
\end{align}
where the second line above follows from concavity of $h(x)$, and the third line follows from the fact that $h'(p) > -\log(k-1)$ when $0\le p < (k-1)/k$.
\cref{eq:fano-zero-mi-case} directly contradicts with $r<p$. Therefore, \cref{eq:fano_uniform} cannot hold for any $r < p$.
\end{remark}

When $I(W ; \bs{Y} \mid \bs{X}) > 0$, we can find the smallest $r_0$ by a numerical method.
\cref{fig:fano} plots $r_0$ vs $I(W ; \bs{Y} \mid \bs{X})$ when the label noise is uniform.
When the label noise is not uniform, the bound of \cref{eq:fano_r} becomes loose as Fano's inequality becomes loose.

\cref{thm:fano} provides theoretical guarantees that memorization of noisy labels is prevented when $I(W; \bs{Y} \mid \bs{X})$ is small, in contrast to standard regularization techniques -- such as dropout, weight decay, and data augmentation -- which only slow it down~\citep{zhang2016understanding, arpit2017closer}.
To demonstrate this empirically, we compare an algorithm that controls $I(W ; \bs{Y} \mid \bs{X})$ (presented in \cref{sec:label-noise-method}) against these regularization techniques on the aforementioned corrupted MNIST setup. We see in \cref{fig:preventing_memorization} that explicitly preventing memorization of label noise information leads to optimal training performance (20\% training accuracy) and good generalization on a non-corrupted validation set. Other approaches quickly exceed 20\% training accuracy by incorporating label noise information, and generalize poorly as a consequence.
The classifier here is a fully connected neural network with 4 hidden layers each having 512 ReLU units.
The rates of dropout and weight decay were selected according to the performance on a validation set.

\subsection{Decreasing label noise information improves generalization}
The information that weights contain about a training dataset $S$ has previously been linked to generalization~\citep{xu2017information}.
While we will explore connections between information in weights and generalization in detail in \cref{ch:sample-info,ch:limitations}, it is instructive to consider here the seminal result of \citet{xu2017information}.
They prove that when $\ell(w, Z')$, with $Z'\sim P_Z$, is $\sigma$-subgaussian, we have that
\begin{equation}
    \underbrace{\E_{P_S}\E_{W\sim Q_{W|S}} \abs{\E\sbr{R(W) - r_S(W)}}}_{\text{expected generalization gap}} \le \sqrt{\frac{2\sigma^2}{n} I(W; S)}.
    \label{eq:xu-raginsky}
\end{equation}

For good test performance, learning algorithms need to have both a small generalization gap, and good training performance.
The latter may require retaining more information about the training set, meaning there is a natural conflict between increasing training performance and decreasing the generalization gap bound of \cref{eq:xu-raginsky}.
Furthermore, information in weights can be decomposed as follows: $I(W ; S) = I(W ; \bs{X}) + I(W ; \bs{Y} \mid \bs{X})$.
We claim that one needs to prioritize reducing $I(W ; \bs{Y} \mid \bs{X})$ over $I(W ; \bs{X})$ for the following reason.
When noise is present in the training labels, fitting this noise implies a non-zero value of $I(W ; \bs{Y} \mid \bs{X})$, which grows linearly with the number of samples $n$.
In such cases, the generalization gap bound of \cref{eq:xu-raginsky} does not improve as $n$ increases.
To get meaningful generalization bounds via \cref{eq:xu-raginsky} one needs to limit $I(W ; \bs{Y} \mid \bs{X})$.
We hypothesize that for efficient learning algorithms, this condition might be also sufficient.

\section{Methods limiting label information}\label{sec:label-noise-method}
We now consider how to design training algorithms that control $I(W ; \bs{Y} \mid \bs{X})$.
We assume $f_w(y \mid x) = \mathrm{Multinoulli}(y; \mathrm{softmax}(a))$, where $a$ is the output of a neural network $h_w(x)$.
We consider the case when $h_w(x)$ is trained with a variant of stochastic gradient descent for $T$ iterations.
The inputs and labels of a mini-batch at iteration $t$ are denoted by $X_t$ and $Y_t$ respectively, and are selected using a deterministic procedure (such as cycling through the dataset or using pseudo-randomness).
To keep the notation simple, we treat ($X_t$, $Y_t$) as a single example.
The derivations below can be straightforwardly extended to the case when $X_t$ and $Y_t$ are a mini-batch of inputs and labels.

Let $W_0$ denote the random weights at initialization, and $W_t$ denote the weights after iteration $t$.
Let $\ell(w, (x,y))$ be some classification loss function (e.g, the cross-entropy loss).
Let $G^\ell_t \triangleq \nabla_{w} \ell(W_{t-1}, (X_t, Y_t))$ be the gradient at iteration $t$.
Let the update rule be $W_t = \Psi(W_0, G_{1:t})$, where $G_t$ denote the gradients used to update the weights, possibly different from $G^\ell_t$.
The final weights $W_T$ are the output of the algorithm (i.e., $W=W_T$).
In the simplest case, $W_T = W_0 -\sum_{t=1}^T \eta_t G_t$, with some learning rate schedule $\eta_t$.

To limit $I(W; \bs{Y} \mid \bs{X})$ the following sections will discuss two approximations which relax the computational difficulty, while still providing meaningful bounds: 1) first, we show that the information in weights can be replaced by information in the gradients; 2) we introduce a variational bound on the information in gradients.
The bound employs an auxiliary network that predicts gradients of the original loss without label information.
We then explore two ways of incorporating predicted gradients: (a) using them in a regularization term for gradients of the original loss, and (b) using them to train the classifier. 

\subsection{Penalizing information in gradients}

Looking at \cref{eq:ce-decomp} it is tempting to add $I(W ; \bs{Y} \mid \bs{X})$ as a regularization to $\E_{P_{S,W}}\sbr{r_S(W)}$ and minimize over all training algorithms:
\begin{equation}
\min_{Q_{W|S}} \E_{P_{S,W}}\sbr{r_S(W)} + I(W ; \bs{Y} \mid \bs{X}).
\label{eq:opt-over-algos}
\end{equation}
This will become equivalent to minimizing $\E_{P_{\bs{X},W}} \KL{p(\bs{Y} \mid \bs{X})}{f_W(\bs{Y} \mid \bs{X})}$.
Unfortunately, the optimization problem \cref{eq:opt-over-algos} is hard to solve for two major reasons.
First, the optimization is over training algorithms (rather than over the weights of a classifier, as in the standard supervised learning setting).
Second, the penalty $I(W ; \bs{Y} \mid \bs{X})$ is hard to compute.

To simplify the problem of \cref{eq:opt-over-algos}, we relate information in weights to information in gradients as follows:
\begin{align}
I(W; \bs{Y} \mid \bs{X}) &\le I(G_{1:T} ; \bs{Y} \mid \bs{X})=\sum_{t=1}^T I(G_t ; \bs{Y} \mid \bs{X}, G_{<t}),\label{eq:chain-rule}
\end{align}
where $G_{<t}$ is a shorthand for the set $\mathset{G_1,\ldots,G_{t-1}}$.
Hereafter, we focus on constraining $I(G_t ; \bs{Y} \mid \bs{X}, G_{<t})$ at each iteration.
Our task becomes choosing a loss function to optimize such that $I(G_t ; \bs{Y} \mid \bs{X}, G_{<t})$ is small and $f_{W_t}(y \mid x)$ is a good classifier.
One key observation is that if our task is to minimize label noise information in gradients it may be helpful to consider gradients with respect to the last layer only and compute the remaining gradients using back-propagation.
As these steps of back-propagation do not use labels, by data processing inequality, subsequent gradients would have at most as much label information as the last layer gradient.

To simplify information-theoretic quantities, we add a small independent Gaussian noise to the gradients of the original loss: $\tilde{G}^\ell_t \triangleq G^\ell_t + \xi_t$, where $\xi_t \sim \mathcal{N}(0, \sigma_\xi^2 I)$ and $\sigma_\xi$ is small enough to have no significant effect on training (e.g., less than $10^{-9}$).
With this convention, we formulate the following regularized objective function:
\begin{equation}
\min_w \ell(W_{t-1}, (X_t, Y_t)) + \lambda I(\tilde{G}^\ell_t; \bs{Y} \mid \bs{X}, G_{<t}),
\label{eq:penalize-formulation}
\end{equation}
where $\lambda > 0$ is a regularization coefficient.
The term $I(\tilde{G}^\ell_t ; \bs{Y} \mid \bs{X}, G_{<t})$ is a function $\Phi(W_{t-1}; X_t)$ of $W_{t-1}$ and $X_t$.
Computing this function would allow the optimization of \cref{eq:penalize-formulation} through gradient descent: $G_t = G^\ell_t + \xi_t + \nabla_w \Phi(W_{t-1}; X_t)$.
Importantly, label noise information is equal in both $G_t$ and $\tilde{G}^\ell_t$, as the gradient from the regularization is constant given $\bs{X}$ and $G_{<t}$:
\begin{align}
I(G_t ; \bs{Y} \mid \bs{X}, G_{<t}) &= I(G^\ell_t + \xi_t + \nabla_w \Phi(W_{t-1}; X_t) ; \bs{Y} \mid \bs{X}, G_{<t})\\
&= I(G^\ell_t + \xi_t ; \bs{Y} \mid \bs{X}, G_{<t})\\
&= I(\tilde{G}^\ell_t ; \bs{Y} \mid \bs{X}, G_{<t}).
\end{align}
Therefore, by minimizing $I(\tilde{G}^\ell_t ; \bs{Y} \mid \bs{X}, G_{<t})$ in \cref{eq:penalize-formulation} we minimize $I(G_t ; \bs{Y} \mid \bs{X}, G_{<t})$, which is used to upper bound $I(W ; \bs{Y} \mid \bs{X})$ in \cref{eq:chain-rule}.
We rewrite this regularization in terms of entropy and discard the constant term, $H(\xi_t)$:
\begin{align*}
I(\tilde{G}^\ell_t ; \bs{Y} \mid \bs{X}, G_{<t}) &= H(\tilde{G}^\ell_t \mid \bs{X}, G_{<t}) - H(\tilde{G}^\ell_t \mid \bs{X}, \bs{Y}, G_{<t})\\
&= H(\tilde{G}^\ell_t \mid \bs{X}, G_{<t}) - H(\xi_t)\numberthis\label{eq:entropy-of-grad}.
\end{align*}

\subsection{Variational bounds on gradient information}
The first term in \cref{eq:entropy-of-grad} is still challenging to compute, as we typically only have one sample from the unknown distribution $p(y_t \mid x_t)$.
Nevertheless, we can upper bound it with the cross-entropy $H_{p,q} = -\E_{\tilde{G}^\ell_t}\sbr{ \log q_\phi(\tilde{G}^\ell_t \mid \bs{X}, G_{<t})}$, where $q_\phi(\cdot \mid \bs{x}, g_{<t})$ is a variational approximation for $p(\tilde{g}^\ell_t \mid \bs{x}, g_{<t})$:
\begin{align}
H(\tilde{G}^\ell_t \mid \bs{X}, G_{<t}) \le -\E_{\tilde{G}^\ell_t}\sbr{\log q_\phi(\tilde{G}^\ell_t \mid \bs{X}, G_{<t})}.
\end{align}
This bound is correct when $\phi$ is a constant or a random variable that depends only on $\bs{X}$.
With this upper bound, \cref{eq:penalize-formulation} reduces to:
\begin{equation}
    \min_{w, \phi} \ell(W_{t-1}, (X_t, Y_t)) - \lambda \E_{\tilde{G}^\ell_t}\sbr{ \log q_\phi(\tilde{G}^\ell_t \mid \bs{X}, G_{<t})}.
    \label{eq:penalize-cross-entropy}
\end{equation}
This formalization introduces a soft constraint on the classifier by attempting to make its gradients predictable without labels $\bs{Y}$, effectively reducing $I(G_t ; \bs{Y} \mid \bs{X}, G_{<t})$.

Assuming $\widehat{y} = \mathrm{softmax}(h_w(x))$ denotes the predicted class probabilities of the classifier and $\ell$ is the cross-entropy loss, the gradient with respect to logits $a=h_w(x)$ is $\widehat{y} - y$ (assuming $y$ has a one-hot encoding).
We have that $I(\widehat{Y_t} - Y_t + \xi_t ; \bs{Y} \mid \bs{X}, G_{<t})$ = $I(Y_t + \xi_t ; \bs{Y} \mid \bs{X}, G_{<t})$.
Therefore, if we only consider gradients with respect to logits in the penalty of \cref{eq:penalize-cross-entropy}, the resulting penalty would not serve as a meaningful regularizer, as it has no dependent on $W_{t-1}$.
Instead, we descend an additional level to consider gradients of the final layer parameters.
When the final layer of $h_w(x)$ is fully connected with inputs $z$ and weights $U$ (i.e., $a = U z$), the gradients with respect to its parameters is equal to $(\widehat{y} - y)z^T$.
If we only consider this gradient in \cref{eq:penalize-cross-entropy}, we have that $I(\tilde{G}^\ell_t ; \bs{Y} \mid \bs{X}, G_{<t}) = I((\widehat{Y}_t - Y_t)Z_t^T + \xi_t ; \bs{Y} \mid \bs{X}, G_{<t}) = I(Y_t Z_t^T + \xi_t ; \bs{Y} \mid \bs{X}, G_{<t})$.
There is now dependence on $W_{t-1}$ through $Z_t$.
We choose to parametrize $q_\phi(\cdot \mid \bs{x}, g_{<t})$ as a Gaussian distribution with mean $\mu_t = (\mathrm{softmax}(a_t) - \mathrm{softmax}(r_\phi(x_t))) z_t^T$ and fixed covariance $\sigma_q I$, where $r_\phi(\cdot)$ is another neural network.
Under this assumption, $H_{p,q}$ becomes proportional to:
\begin{align}
\E\sbr{\left\lVert (\widehat{Y}_t - Y_t) Z_t^T + \xi_t - \mu_t\right\rVert^2_2} &= \E\sbr{\xi_t^2} + \E\sbr{\left\lVert Z_t \right\rVert_2^2 \left\lVert Y_t - \mathrm{softmax}(r_\phi(X_t))\right\rVert^2_2}.
\end{align}
Ignoring constants and approximating the expectation above with one Monte Carlo sample computed using the label $Y_t$, the objective of \cref{eq:penalize-cross-entropy} becomes:
\begin{equation}
\min_w \ell(W_{t-1}, (X_t, Y_t)) + \lambda \rbr{\left\lVert Z_t \right\rVert_2^2 \left\lVert Y_t - \mathrm{softmax}(r_\phi(X_t))\right\rVert^2_2}.
\label{eq:penalize-final}
\end{equation}

While this may work in principle, in practice the dependence on $w$ is only through the norm of $Z_t$, making it weak to have much effect on the overall objective.
We confirm this experimentally in \cref{sec:label-noise-experiments}.
To introduce more complex dependencies on $w$, one would need to model the gradients of deeper layers.

\subsection{Predicting gradients without label information}
An alternative approach is to use gradients predicted by $q_\phi(\cdot \mid \bs{X}, G_{<t})$ to update classifier weights, i.e., sample $G_t \sim q_\phi(\cdot \mid \bs{X}, G_{<t})$.
This is a much stricter condition, as it implies $I(G_t ; \bs{Y} \mid \bs{X}, G_{<t})=0$ (again assuming $\phi$ is a constant or a random variable that depends only on $\bs{X}$).
Note that minimizing $H_{p,q}$ makes the predicted gradient $G_t$ a good substitute for the cross-entropy gradients $\tilde{G}^\ell_t$.
Therefore, we write down the following objective function:
\begin{equation}
    \min_{w, \phi} \tilde{\ell}(W_{t-1}, \phi, X_t) - \lambda \E_{\tilde{G}^\ell_t}\sbr{ \log q_\phi(\tilde{G}^\ell_t \mid \bs{X}, G_{<t})},
    \label{eq:predict-grad}
\end{equation}
where $\tilde{\ell}(w, \phi, x)$  is a probabilistic function defined implicitly such that $\nabla_w \tilde{\ell}(w, \phi, x) \sim q_\phi(\cdot \mid x, w)$.
We found that this approach performs significantly better than the penalizing approach of \cref{eq:penalize-cross-entropy}.
To update $w$ only with predicted gradients, we disable the dependence of the second term of \cref{eq:predict-grad} on $w$ in the implementation.
Additionally, one can set $\lambda=1$ above as the first term depends only on $w$, while the second term depends only on $\phi$.

We choose to predict the gradients with respect to logits only and compute the remaining gradients using backpropagation.
We consider two distinct parameterizations for $q_\phi$ -- \textbf{Gaussian:} $q_\phi(\cdot \mid x, w) = \mathcal{N}\left(\mu, \sigma_q^2 I\right)$, and \textbf{Laplace:} $q_\phi(\cdot \mid x, w) = \prod_j \mathrm{Lap}\left(\mu_j, \sigma_q /\sqrt{2})\right)$,
with $\mu = \mathrm{softmax}(a) - \mathrm{softmax}(r_\phi(x))$ and $r_\phi(\cdot)$ being an auxiliary neural network as before.
Under these Gaussian and Laplace parameterizations, $H_{p,q}$ becomes proportional to  $\E\nbr{\mu_t - \tilde{G}^\ell_t}_2^2$ and $\E\nbr{\mu_t - \tilde{G}^\ell_t}_1$ respectively.
In the Gaussian case $\phi$ is updated with a mean square error loss (MSE) function, while in the Laplace case it is updated with a mean absolute error loss (MAE).
The former is expected to be faster at learning, but less robust to noise~\citep{mae}.

\subsection{Reducing overfitting in gradient prediction}
In both approaches of \cref{eq:penalize-cross-entropy,eq:predict-grad}, the classifier can still overfit if $q_\phi(\cdot \mid x, w)$ overfits.
There are multiple ways to prevent this.
One can choose $q_\phi$ to be parametrized with a small network, or pretrain and freeze some of its layers in an unsupervised fashion.
In this work, we choose to control the L2 norm of the mean of predicted gradients, $\lVert \mu \rVert^2_2$, while keeping the variance $\sigma_q^2$ fixed.
This can be viewed as limiting the capacity of gradients $G_t$.
\begin{proposition}
If $G_t = \mu_t + \epsilon_t$, where $\epsilon_t \sim \mathcal{N}(0, \sigma_q^2 I_d)$ is independent noise, and $\E\sbr{ \mu^T_t \mu_t } \le L^2$, then the following inequality holds:
\begin{align}
I(G_t ; \bs{Y} \mid \bs{X}, G_{<t}) \le \frac{d}{2}\log\left(1 + \frac{L^2}{d\sigma_q^2}\right).
\end{align}
\label{prop:gradient_capacity}
\end{proposition}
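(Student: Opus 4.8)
The plan is to recognize this as the capacity bound for a power-constrained additive isotropic Gaussian channel, in which $\mu_t$ plays the role of the input signal, $\epsilon_t$ the channel noise, and the hypothesis $\E\sbr{\mu_t^T \mu_t}\le L^2$ the power budget. Concretely, I would write the conditional mutual information as a difference of differential entropies,
\begin{equation*}
I(G_t ; \bs{Y} \mid \bs{X}, G_{<t}) = H(G_t \mid \bs{X}, G_{<t}) - H(G_t \mid \bs{Y}, \bs{X}, G_{<t}),
\end{equation*}
and then bound the two terms separately, upper-bounding the first and lower-bounding the second, so that their difference produces the claimed right-hand side.

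For the subtracted term I would exploit that conditioning cannot increase entropy together with the independence of $\epsilon_t$: enlarging the conditioning set by $\mu_t$ gives $H(G_t \mid \bs{Y}, \bs{X}, G_{<t}) \ge H(G_t \mid \bs{Y}, \bs{X}, G_{<t}, \mu_t) = H(\epsilon_t) = \frac{d}{2}\log\rbr{2\pi e\, \sigma_q^2}$, since once $\mu_t$ is fixed $G_t = \mu_t + \epsilon_t$ is merely a deterministic shift of the isotropic noise. For the first term, dropping the conditioning only increases entropy, so $H(G_t \mid \bs{X}, G_{<t}) \le H(G_t)$, and the Gaussian maximum-entropy inequality yields $H(G_t) \le \frac{1}{2}\log\rbr{(2\pi e)^d \det \cov(G_t)}$. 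Because $\epsilon_t$ is independent, $\cov(G_t) = \cov(\mu_t) + \sigma_q^2 I_d \preceq \E\sbr{\mu_t \mu_t^T} + \sigma_q^2 I_d$, whose trace is at most $L^2 + d\sigma_q^2$; the AM--GM inequality applied to the eigenvalues then converts this trace constraint into a determinant bound, $\det \cov(G_t) \le \rbr{\trace \cov(G_t)/d}^d \le \rbr{\sigma_q^2 + L^2/d}^d$.

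Substituting both estimates and cancelling the common $\frac{d}{2}\log\rbr{2\pi e\,\sigma_q^2}$ term gives exactly $\frac{d}{2}\log\rbr{1 + L^2/(d\sigma_q^2)}$, as claimed. The individual steps are standard, so the main thing to get right is the careful bookkeeping of conditional differential entropies — in particular justifying that the subtracted entropy is lower-bounded by the pure noise entropy $H(\epsilon_t)$ (which rests on $\epsilon_t$ being independent of $(\mu_t, \bs{Y}, \bs{X}, G_{<t})$), and identifying the AM--GM/power-constraint optimization as the single inequality that turns the scalar budget $\E\sbr{\mu_t^T \mu_t}\le L^2$ into the determinant bound that produces the $1/d$ factor inside the logarithm. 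An essentially equivalent route keeps the conditioning on the first term and instead invokes the data-processing inequality along the Markov chain $\bs{Y} \to \mu_t \to G_t$ to reduce the quantity to $I(G_t;\mu_t\mid\bs{X},G_{<t})$, at the cost of one extra Jensen step using concavity of $\log\det$ to handle the averaging over $(\bs{X},G_{<t})$.
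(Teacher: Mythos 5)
Your proposal is correct and follows essentially the same route as the paper's proof: decompose the conditional mutual information into a difference of entropies, upper-bound $H(G_t \mid \bs{X}, G_{<t}) \le H(G_t)$ via the maximum-entropy property of Gaussians under the second-moment constraint $\E\sbr{G_t^T G_t} \le d\sigma_q^2 + L^2$, and identify the subtracted term with $H(\epsilon_t)$. Your AM--GM step on the eigenvalues of $\cov(G_t)$ is just an explicit derivation of the power-constrained max-entropy fact the paper invokes directly, and your lower bound $H(G_t \mid \bs{Y}, \bs{X}, G_{<t}) \ge H(\epsilon_t)$ via conditioning on $\mu_t$ is a slightly more careful version of the paper's equality, yielding the same final bound.
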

The same bound holds when $\epsilon_t$ is sampled from a product of $d$ univariate zero-mean Laplace distributions with variance $\sigma_q^2$, since the proof relies only on $\epsilon_t$ being zero-mean and having variance $\sigma^2_q$.
The final objective of our main method becomes:
\begin{equation}
    \min_{w, \phi} \tilde{\ell}(W_{t-1}, \phi, X_t) - \lambda \E_{\tilde{G}^\ell_t}\sbr{ \log q_\phi(\tilde{G}^\ell_t \mid \bs{X}, G_{<t})} + \beta \lVert \mu_t \rVert_2^2.
\end{equation}
As before, to update $w$ only with predicted gradients, we disable the dependence of the second and third terms above on $w$ in the implementation.
We name this final approach \textbf{LIMIT} -- \underline{\textbf{l}}imiting label \underline{\textbf{i}}nformation \underline{\textbf{m}}emorization \underline{\textbf{i}}n \underline{\textbf{t}}raining.
We denote the variants with Gaussian and Laplace distributions as LIMIT$_\mathcal{G}$ and LIMIT$_\mathcal{L}$ respectively.
The pseudocode of LIMIT is presented \cref{alg:training-loop}.
Note that in contrast to the previous approach of \cref{eq:penalize-formulation}, this follows the spirit of \cref{eq:opt-over-algos}, in the sense that the optimization over $\phi$ can be seen as optimizing over training algorithms; namely, learning a loss function implicitly through gradients.
With this interpretation, the gradient norm penalty can be viewed as a way to smooth the learned loss, which is a good inductive bias and facilitates learning.

\begin{algorithm}[t]
\small
\caption{LIMIT: limiting label information memorization in training.\\Our implementation is available at \url{https://github.com/hrayrhar/limit-label-memorization}.}
\begin{algorithmic}[1]
    \STATE {\bfseries Input:} Training dataset $S$.
    \STATE {\bfseries Input:} Gradient norm regularization coefficient $\beta$.
    \quad\COMMENT{$\lambda$ is set to $1$}
    \STATE Initialize the classifier $f_w(y \mid x)$ and gradient predictor $q_\phi(\cdot \mid \bs{x}, g_{<t})$.
    \FOR{$t=1..T$}
        \STATE Fetch the next batch $(X_t, Y_t)$ and compute the predicted logits $a_t$.
        \STATE Compute the cross-entropy gradient, $G^\ell_t \leftarrow \mathrm{softmax}(a_t) - Y_t$.
        \IF{sampling of gradients is enabled}
            \STATE $G_t \sim q_\phi(\cdot \mid \bs{X}, G_{<t})$.
        \ELSE
            \STATE $G_t \leftarrow \mu_t$ \quad\COMMENT{the mean of predicted gradient}
        \ENDIF
        \STATE Starting with $G_t$, backpropagate to compute the gradient with respect to $w$.
        \STATE Update $W_{t-1}$ to $W_t$.
        \STATE Update $\phi$ using the gradient of the following loss: $-\log q_\phi(\tilde{G}^\ell_t \mid \bs{X}, G_{<t}) + \beta \lVert \mu_t \rVert_2^2$.
    \ENDFOR
\end{algorithmic}
\label{alg:training-loop}
\end{algorithm}

\section{Experiments}\label{sec:label-noise-experiments}
We set up experiments with noisy datasets to see how well the proposed methods perform for different types and amounts of label noise.
The simplest baselines in our comparison are  standard cross-entropy (CE) and mean absolute error (MAE) loss functions.
The next baseline is the forward correction approach (FW) proposed by~\citet{patrini2017making}, where the label noise transition matrix is estimated and used to correct the loss function.
Finally, we include the determinant mutual information (DMI) loss, which is the log-determinant of the confusion matrix between predicted and given labels~\citep{dmi}.
Both FW and DMI baselines require initialization with the best result of the CE baseline.
To avoid small experimental differences, we implement all baselines, closely following the original implementations of FW and DMI.

We train all baselines except DMI using the ADAM optimizer~\citep{kingma2014adam} with learning rate $\alpha = 10^{-3}$ and $\beta_1 = 0.9$.
As DMI is very sensitive to the learning rate, we tune it by choosing the best from the following grid of values $\mathset{10^{-3},10^{-4},10^{-5},10^{-6}}$.
The soft regularization approach of \cref{eq:penalize-final} has two hyperparameters: $\lambda$ and $\beta$.
We select $\lambda$ from $\mathset{0.001, 0.01, 0.03, 0.1}$ and $\beta$ from $\mathset{0.0, 0.01, 0.1, 1.0, 10.0}$.
The objective of LIMIT instances has two terms: $\lambda H_{p,q}$ and $\beta \lVert \mu_t \rVert_2^2$. 
Consequently, we need only one hyperparameter instead of two. We choose to set $\lambda = 1$ and select $\beta$ from $\mathset{0.0, 0.1, 0.3, 1.0, 3.0, 10.0, 30.0, 100.0}$.
When sampling is enabled, we select $\sigma_q$ from $\mathset{0.01, 0.03, 0.1, 0.3}$.

For all baselines, model selection is done by choosing the model with highest accuracy on a validation set that follows the noise model of the corresponding train set.
All scores are reported on a clean test set.
The implementation of the proposed method and the code for replicating the experiments is available at \url{https://github.com/hrayrhar/limit-label-memorization}.

\subsection{MNIST with uniform label noise}
\begin{table}[t]
    \small
    \centering
    \caption{Architecture of a convolutional neural network with 5 hidden layers.}
    \begin{tabular}{ll}
    \toprule
    Layer type & Parameters\\
    \midrule
    Conv & 32 filters, $4 \times 4$ kernels, stride 2, padding 1, batch normalization, ReLU\\
    Conv & 32 filters, $4 \times 4$ kernels, stride 2, padding 1, batch normalization, ReLU\\
    Conv & 64 filters, $3 \times 3$ kernels, stride 2, padding 0, batch normalization, ReLU\\
    Conv & 256 filters, $3 \times 3$ kernels, stride 1, padding 0, batch normalization, ReLU\\
    FC & 128 units, ReLU\\
    FC & 10 units, linear activation\\
    \bottomrule
    \end{tabular}
\label{tab:4-layer-cnn-arch}
\end{table}

To compare the variants of our approach discussed earlier and see which ones work well, we do experiments on the MNIST dataset with corrupted labels.
In this experiment, we use a simple uniform label noise model, where each label is set to an incorrect value uniformly at random with probability $p$.
In our experiments we try 4 values of $p$ -- 0\%, 50\%, 80\%, 89\%.
We split the 60K images of MNIST into training and validation sets, containing 48K and 12K samples respectively.
For each noise amount we try 3 different training set sizes -- $10^3$, $10^4$, and $4.8\cdot 10^4$.
All classifiers and auxiliary networks are 5-layer convolutional neural networks (CNN) described in \cref{tab:4-layer-cnn-arch}.
We train all models for 400 epochs and terminate the training early when the best validation accuracy is not improved in the previous 100 epochs.

For this experiment we include two additional baselines where additive noise (Gaussian or Laplace) is added to the gradients with respect to logits.
We denote these baselines with names ``CE + GN'' and ``CE + LN''.
The comparison with these two baselines demonstrates that the proposed method does more than simply reduce information in gradients via noise.
We also consider a variant of LIMIT where instead of sampling $G_t$ from $q$ we use the predicted mean $\mu_t$.

\begin{table}[t]
\small
\centering
\caption{Test accuracy comparison on multiple versions of MNIST corrupted with uniform label noise. The error bars are standard deviations computed over 5 random training/validation splits.}
\begin{tabular}{lcccccc}
\toprule
\multirow{2}{*}{Method} & \multicolumn{3}{c}{$p=0.0$} & \multicolumn{3}{c}{$p=0.5$}\\
\cmidrule(lr){2-4}
\cmidrule(lr){5-7}
& $n=10^3$ & $n=10^4$ & All & $n=10^3$ & $n=10^4$ & All\\
\midrule
CE                             & 94.3 $\pm$  0.5 & 98.4 $\pm$  0.2 & \best{99.2 $\pm$  0.0} & 71.8 $\pm$  4.3 & 93.1 $\pm$  0.6 & 97.2 $\pm$  0.2\\
CE + GN         & 89.5 $\pm$  0.8 & 95.4 $\pm$  0.5 & 97.1 $\pm$  0.5 & 70.5 $\pm$  3.5 & 92.3 $\pm$  0.7 & 97.4 $\pm$  0.5\\
CE + LN          & 90.0 $\pm$  0.5 & 95.3 $\pm$  0.6 & 96.7 $\pm$  0.7 & 66.8 $\pm$  1.3 & 92.0 $\pm$  1.5 & 97.6 $\pm$  0.1\\
MAE                            & 94.6 $\pm$  0.5 & 98.3 $\pm$  0.2 & \best{99.1 $\pm$  0.1} & 75.6 $\pm$  5.0 & 95.7 $\pm$  0.5 & 98.1 $\pm$  0.1\\
FW                             & 93.6 $\pm$  0.6 & 98.4 $\pm$  0.1 & \best{99.2 $\pm$  0.1} & 64.3 $\pm$  9.1 & 91.6 $\pm$  2.0 & 97.3 $\pm$  0.3\\
DMI                            & 94.5 $\pm$  0.5 & 98.5 $\pm$  0.1 & \best{99.2 $\pm$  0.0} & 79.8 $\pm$  2.9 & 95.7 $\pm$  0.3 & 98.3 $\pm$  0.1\\
Soft reg. (\ref{eq:penalize-final})                       & \best{95.7 $\pm$  0.2} & 98.4 $\pm$  0.1 & \best{99.2 $\pm$  0.0} & 76.4 $\pm$  2.4 & 95.7 $\pm$  0.0 & 98.2 $\pm$  0.1\\
LIMIT$_\mathcal{G}$ + S        & \best{95.6 $\pm$  0.3} & \best{98.6 $\pm$  0.1} & \best{99.3 $\pm$  0.0} & 82.8 $\pm$  4.6 & 97.0 $\pm$  0.1 & 98.7 $\pm$  0.1\\
LIMIT$_\mathcal{L}$ + S         & 94.8 $\pm$  0.3 & \best{98.6 $\pm$  0.2} & \best{99.3 $\pm$  0.0} & \best{88.7 $\pm$  3.8} & \best{97.6 $\pm$  0.1} & \best{98.9 $\pm$  0.0}\\
LIMIT$_\mathcal{G}$ - S               & \best{95.7 $\pm$  0.2} & \best{98.7 $\pm$  0.1} & \best{99.3 $\pm$  0.1} & 83.3 $\pm$  2.3 & 97.1 $\pm$  0.2 & 98.6 $\pm$  0.1\\
LIMIT$_\mathcal{L}$ - S                & 95.0 $\pm$  0.2 & \best{98.7 $\pm$  0.1} & \best{99.3 $\pm$  0.1} & \best{88.2 $\pm$  2.9} & \best{97.7 $\pm$  0.1} & \best{99.0 $\pm$  0.1}\\
\midrule
\midrule
\multirow{2}{*}{Method} & \multicolumn{3}{c}{$p=0.8$} & \multicolumn{3}{c}{$p=0.89$}\\
\cmidrule(lr){2-4}
\cmidrule(lr){5-7}
& $n=10^3$ & $n=10^4$ & All & $n=10^3$ & $n=10^4$ & All\\
\midrule
CE                             & 27.0 $\pm$  3.8 & 69.9 $\pm$  2.6 & 87.2 $\pm$  1.0 & 10.3 $\pm$  1.6 & 13.4 $\pm$  3.3 & 13.2 $\pm$  1.8\\
CE + GN         & 25.9 $\pm$  4.6 & 51.9 $\pm$ 10.5 & 85.3 $\pm$  8.3 & 10.4 $\pm$  4.5 & 10.2 $\pm$  3.3 & 11.1 $\pm$  0.4\\
CE + LN          & 30.2 $\pm$  4.8 & 53.1 $\pm$  6.4 & 74.5 $\pm$ 19.1 & 11.9 $\pm$  3.9 &  8.8 $\pm$  5.4 & 14.1 $\pm$  4.3\\
MAE                            & 25.1 $\pm$  3.3 & 74.6 $\pm$  2.7 & 93.2 $\pm$  1.1 & 10.9 $\pm$  1.4 & 12.1 $\pm$  3.9 & 17.6 $\pm$  8.1\\
FW                             & 19.0 $\pm$  4.1 & 61.2 $\pm$  5.0 & 89.1 $\pm$  2.1 &  8.7 $\pm$  2.8 & 11.4 $\pm$  1.4 & 12.3 $\pm$  1.8\\
DMI                            & 30.3 $\pm$  5.1 & 79.0 $\pm$  1.5 & 88.8 $\pm$  0.9 & 10.5 $\pm$  1.2 & 14.1 $\pm$  5.1 & 12.5 $\pm$  1.5\\
Soft reg. (\ref{eq:penalize-final})                       & 28.8 $\pm$  2.2 & 67.0 $\pm$  1.9 & 89.3 $\pm$  0.6 & 10.3 $\pm$  1.6 & 10.5 $\pm$  0.8 & 12.7 $\pm$  2.6\\
LIMIT$_\mathcal{G}$ + S        & \best{35.9 $\pm$  6.3} & 80.6 $\pm$  2.8 & 93.4 $\pm$  0.5 & 10.0 $\pm$  1.0 & 14.3 $\pm$  5.4 & 13.1 $\pm$  4.3\\
LIMIT$_\mathcal{L}$ + S         & \best{35.6 $\pm$  3.2} & \best{93.3 $\pm$  0.3} & \best{97.6 $\pm$  0.3} & 10.1 $\pm$  0.7 & 12.5 $\pm$  2.1 & \best{28.3 $\pm$  8.1}\\
LIMIT$_\mathcal{G}$ - S               & \best{37.1 $\pm$  5.4} & 82.0 $\pm$  1.5 & 94.7 $\pm$  0.6 &  9.9 $\pm$  1.0 & 12.6 $\pm$  0.3 & 16.0 $\pm$  5.9\\
LIMIT$_\mathcal{L}$ - S                & \best{35.9 $\pm$  4.3} & \best{93.9 $\pm$  0.8} & \best{97.7 $\pm$  0.2} & 11.1 $\pm$  0.7 & 11.8 $\pm$  1.0 & \best{28.6 $\pm$  4.0}\\
\bottomrule
\end{tabular}
\label{tab:mnist-with-error-bars}
\end{table}

\begin{figure}[t]
    \centering
    \begin{subfigure}{0.49\textwidth}
    \includegraphics[width=\textwidth]{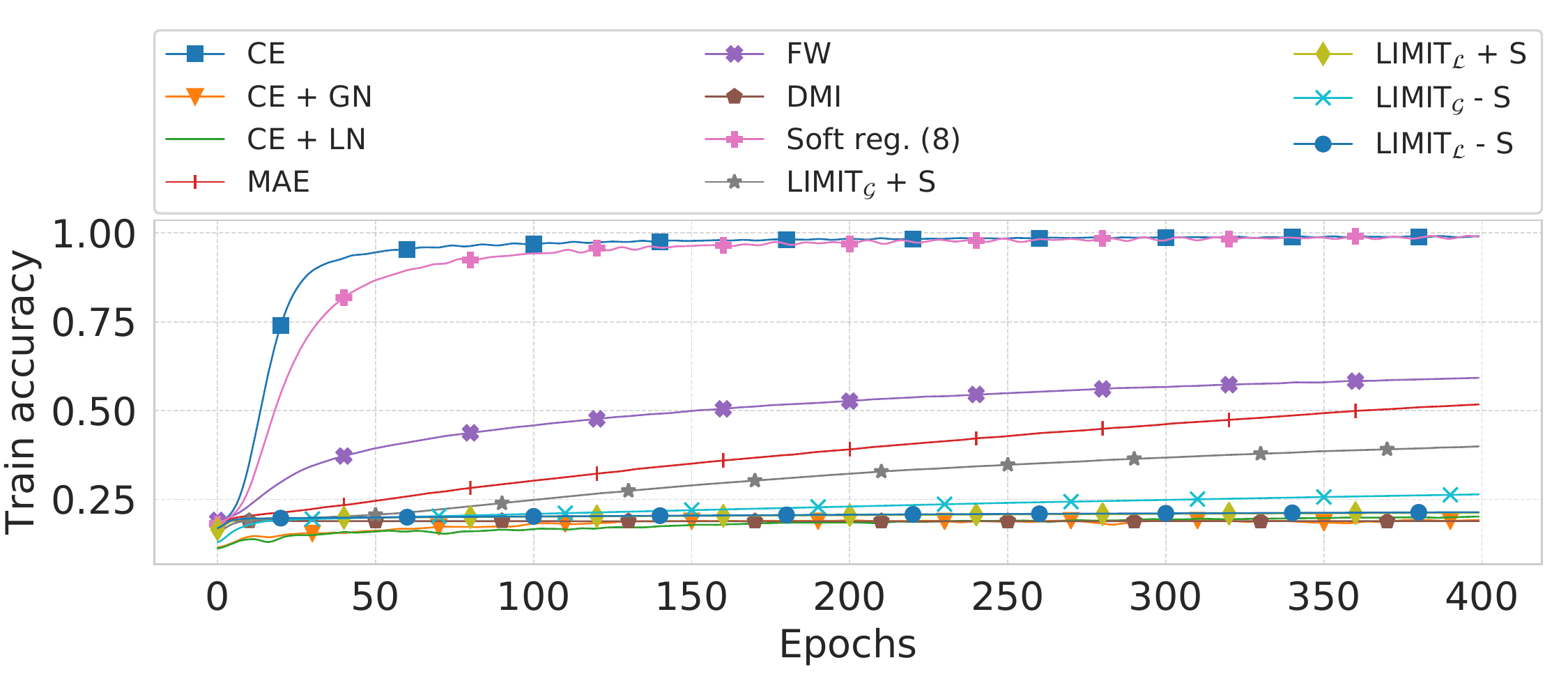}
    \caption{Training curves}
    \end{subfigure}%
    \hspace{0.5em}
    \begin{subfigure}{0.49\textwidth}
    \includegraphics[width=\textwidth]{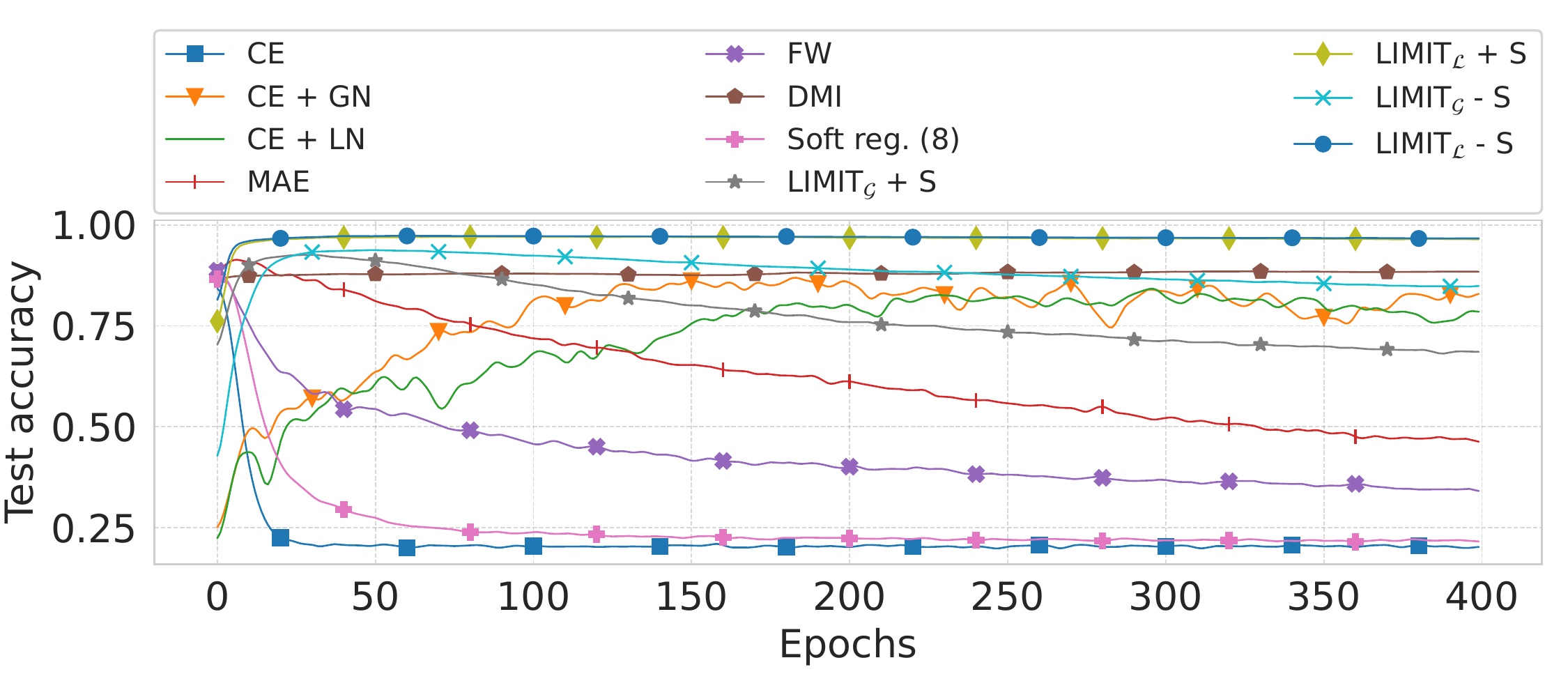}
    \caption{Testing curves}
    \end{subfigure}
    \caption{Smoothed training and testing accuracy plots of various approaches on MNIST with 80\% uniform noise.}
    \label{fig:mnist-error-noise80-curves}
\end{figure}

\cref{tab:mnist-with-error-bars} shows the average test performances and standard deviations of different approaches over 5 training/validation splits.
Additionally, \cref{fig:mnist-error-noise80-curves} shows the training and testing performances of the best methods during the training when $p=0.8$ and all training samples are used.
Overall, variants of LIMIT produce the best results and improve significantly over standard approaches.
The variants with a Laplace distribution perform better than those with a Gaussian distribution.
This is likely due to the robustness of MAE.
Interestingly, LIMIT works well and trains faster when the sampling of $G_t$ in $q$ is disabled (rows with ``-S'').
Thus, hereafter we consider this as our primary approach.
As expected, the soft regularization approach of \cref{eq:penalize-cross-entropy} and cross-entropy variants with noisy gradients perform significantly worse than LIMIT.
We exclude these baselines in our future experiments.

\paragraph{Effectiveness of gradient norm penalty.}
\begin{figure*}[t]
    \centering
    \begin{subfigure}{0.49\textwidth}
    \includegraphics[width=\textwidth]{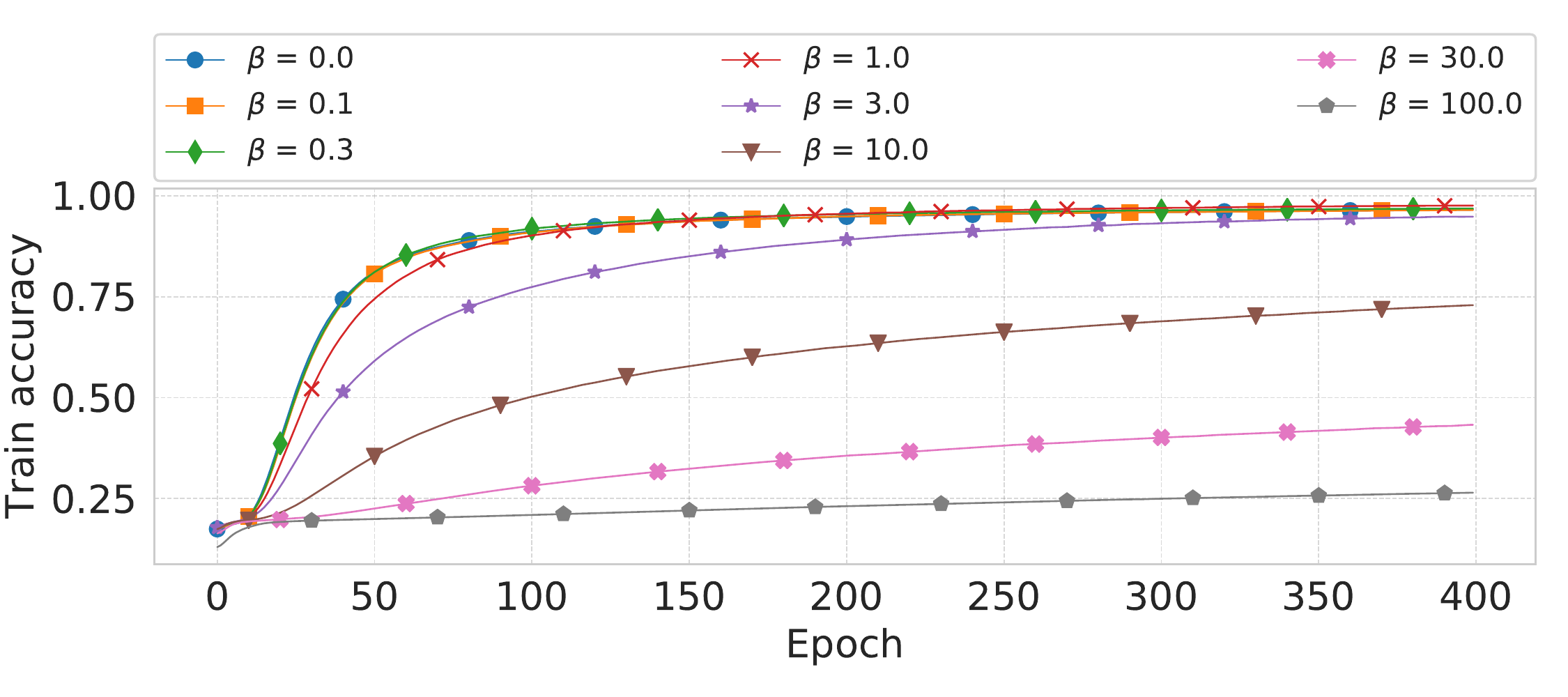}
    \caption{Training performance of LIMIT$_\mathcal{G}$ - S}
    \end{subfigure}%
    \hspace{0.5em}
    \begin{subfigure}{0.49\textwidth}
    \includegraphics[width=\textwidth]{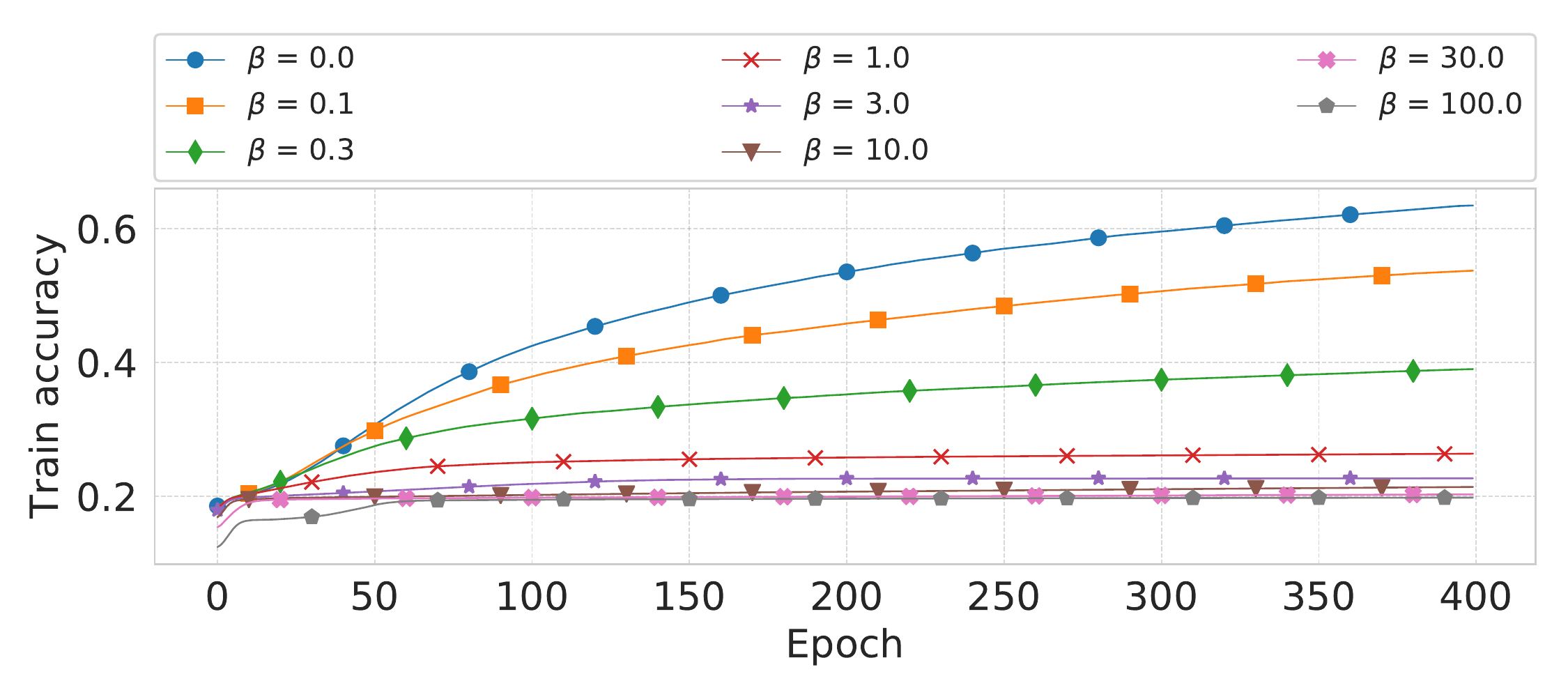}
    \caption{Training performance of LIMIT$_\mathcal{L}$ - S}
    \end{subfigure}
    
    \begin{subfigure}{0.49\textwidth}
    \includegraphics[width=\textwidth]{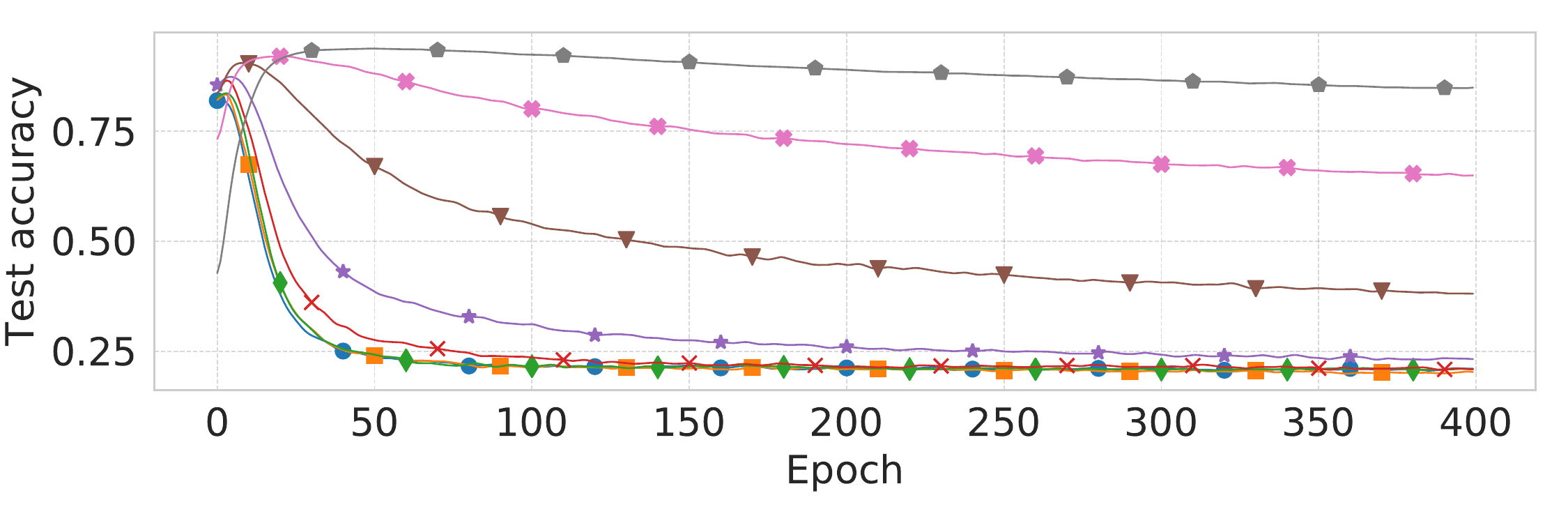}
    \caption{Testing performance of LIMIT$_\mathcal{G}$ - S}
    \end{subfigure}%
    \hspace{0.5em}
    \begin{subfigure}{0.49\textwidth}
    \includegraphics[width=\textwidth]{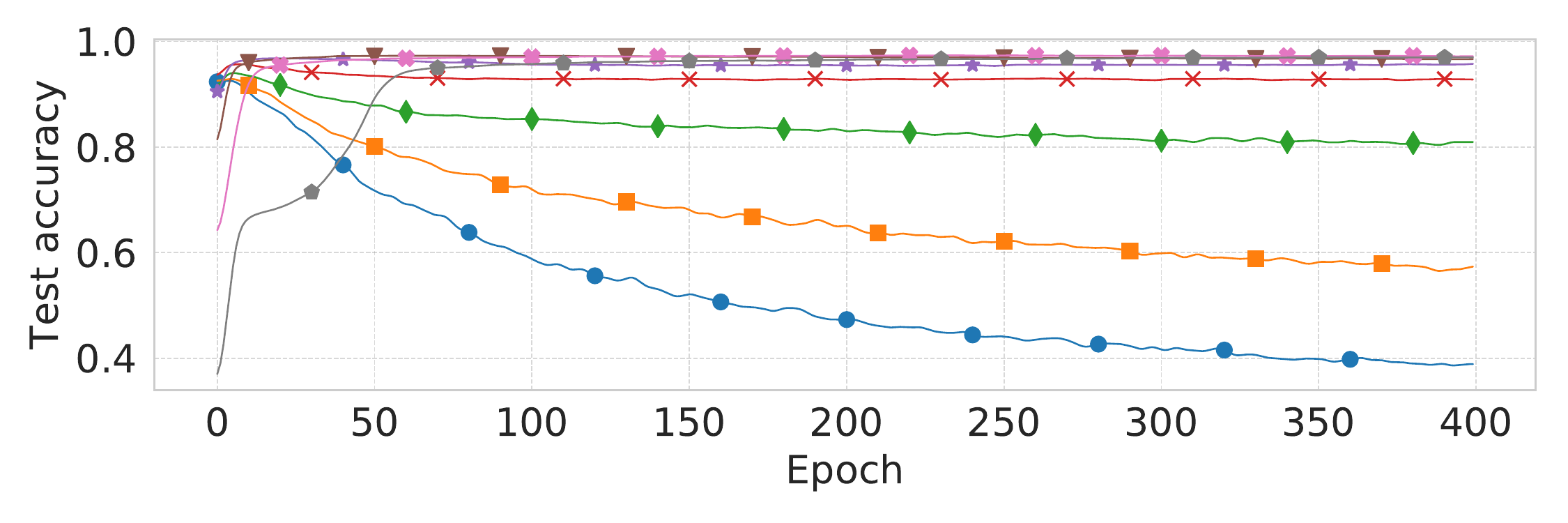}
    \caption{Testing performance LIMIT$_\mathcal{L}$ - S}
    \end{subfigure}
    \caption{Training and testing accuracies of ``LIMIT$_\mathcal{G}$ - S'' and ``LIMIT$_\mathcal{L}$ - S'' instances with varying values of $\beta$ on MNIST with 80\% uniform label noise.
    The curves are smoothed for better presentation.}
    \label{fig:mnist-error-noise80-beta}
\end{figure*}
Additionally, we test the importance of penalizing norm of predicted gradients by comparing training and testing performances of LIMIT with varying regularization strength $\beta$.
\cref{fig:mnist-error-noise80-beta} presents the training and testing accuracy curves of LIMIT with varying values of $\beta$.
We see that increasing $\beta$ decreases overfitting on the training set and usually results in better generalization.

\subsection{CIFAR-10 with uniform and pair noise}\label{subsec:cifar-pair}
\begin{table}[t]
    \small
    \centering
    \caption{Test accuracy comparison on CIFAR-10, corrupted with uniform label noise (top) and pair label noise (bottom). The error bars are standard deviations computed by bootstrapping the test set 1000 times.}
    \begin{tabular}{lccccc}
    \toprule
    \multirow{2}{*}{Method} & \multicolumn{5}{c}{uniform label noise}\\
    \cmidrule{2-6}
    & $p=0.0$ & $p=0.2$ & $p=0.4$  & $p=0.6$  & $p=0.8$\\
    \midrule
    CE                             & 92.7 $\pm$  0.3 & 85.2 $\pm$  0.4 & 81.0 $\pm$  0.4 & 69.0 $\pm$  0.5 & 38.8 $\pm$  0.5\\
    MAE                            & 84.4 $\pm$  0.4 & 85.4 $\pm$  0.4 & 64.6 $\pm$  0.5 & 15.4 $\pm$  0.4 & 12.0 $\pm$  0.3\\
    FW                             & 92.9 $\pm$  0.3 & 86.2 $\pm$  0.3 & 81.4 $\pm$  0.4 & 69.7 $\pm$  0.5 & 34.4 $\pm$  0.5\\
    DMI                            & 93.0 $\pm$  0.3 & 88.3 $\pm$  0.3 & 85.0 $\pm$  0.3 & 72.5 $\pm$  0.4 & 38.9 $\pm$  0.5\\
    LIMIT$_\mathcal{G}$               & \best{93.5 $\pm$  0.2} & 90.7 $\pm$  0.3 & 86.6 $\pm$  0.3 & 73.7 $\pm$  0.4 & 38.7 $\pm$  0.5\\
    LIMIT$_\mathcal{L}$                & 93.1 $\pm$  0.3 & 91.5 $\pm$  0.3 & 88.2 $\pm$  0.3 & 75.7 $\pm$  0.4 & 35.8 $\pm$  0.5\\
    LIMIT$_\mathcal{G}$ + init.        & \best{93.3 $\pm$  0.3} & \best{92.4 $\pm$  0.3} & \best{90.3 $\pm$  0.3} & 81.9 $\pm$  0.4 & \best{44.1 $\pm$  0.5}\\
    LIMIT$_\mathcal{L}$ + init.         & \best{93.3 $\pm$  0.2} & \best{92.2 $\pm$  0.3} & \best{90.2 $\pm$  0.3} & \best{82.9 $\pm$  0.4} & \best{44.3 $\pm$  0.5}\\
    \midrule
    \midrule
    \multirow{2}{*}{Method} & \multicolumn{5}{c}{pair label noise}\\
    \cmidrule{2-6}
    & $p=0.0$ & $p=0.2$ & $p=0.4$  & $p=0.6$  & $p=0.8$\\
    \midrule
    CE                             & 92.7 $\pm$  0.3 & 90.0 $\pm$  0.3 & 88.1 $\pm$  0.3 & 87.2 $\pm$  0.3 & 81.8 $\pm$  0.4\\
    MAE                            & 84.4 $\pm$  0.4 & 88.6 $\pm$  0.3 & 83.2 $\pm$  0.4 & 72.1 $\pm$  0.4 & 61.1 $\pm$  0.5\\
    FW                             & 92.9 $\pm$  0.3 & 90.1 $\pm$  0.3 & 88.0 $\pm$  0.3 & 86.8 $\pm$  0.3 & 84.6 $\pm$  0.3\\
    DMI                            & 93.0 $\pm$  0.3 & 91.4 $\pm$  0.3 & 90.6 $\pm$  0.3 & 90.4 $\pm$  0.3 & \best{89.6 $\pm$  0.3}\\
    LIMIT$_\mathcal{G}$               & \best{93.5 $\pm$  0.2} & 92.8 $\pm$  0.3 & 91.3 $\pm$  0.3 & 89.2 $\pm$  0.3 & 86.0 $\pm$  0.3\\
    LIMIT$_\mathcal{L}$                & 93.1 $\pm$  0.3 & 91.9 $\pm$  0.3 & 91.1 $\pm$  0.3 & 88.8 $\pm$  0.3 & 84.2 $\pm$  0.4\\
    LIMIT$_\mathcal{G}$ + init.        & \best{93.3 $\pm$  0.3} & \best{93.3 $\pm$  0.3} & \best{92.9 $\pm$  0.3} & \best{90.8 $\pm$  0.3} & 88.3 $\pm$  0.3\\
    LIMIT$_\mathcal{L}$ + init.         & \best{93.3 $\pm$  0.2} & \best{93.0 $\pm$  0.2} & 92.3 $\pm$  0.3 & \best{91.1 $\pm$  0.3} & \best{90.0 $\pm$  0.3}\\
    \bottomrule
    \end{tabular}
    \label{tab:cifar10_with_error_bars}
\end{table}

Next we consider a harder dataset, CIFAR-10~\citep{krizhevsky2009learning},
with two label noise models: uniform noise and pair noise.
For pair noise, certain classes are confused with some other similar class.
Following the setup of \citet{dmi} we use the following four pairs: truck $\rightarrow$ automobile, bird $\rightarrow$ airplane, deer $\rightarrow$ horse, cat $\rightarrow$ dog.
Note in this type of noise $H(\bs{Y} \mid \bs{X})$ is much smaller than in the case of uniform noise.
We split the 50K images of CIFAR-10 into training and validation sets, containing 40K and 10K samples respectively.
For the CIFAR experiments we use ResNet-34 networks~\citep{he2016deep} that differ from the standard ResNet-34 architecture (which is used for $224 \times 224$ images) in two ways: (a) the first convolutional layer has 3x3 kernels and stride 1, and (b) the max pooling layer after it is skipped.
We use standard data augmentation consisting of random horizontal flips and random 28x28 crops padded back to 32x32.
We train all models for 400 epochs and terminate the training early when the best validation accuracy is not improved in the previous 100 epochs.

For our proposed methods, the auxiliary network $q$ is ResNet-34 as well.
We noticed that for more difficult datasets, it may happen that while $q$ still learns to produce good gradients, the updates with these less informative gradients may corrupt the initialization of the classifier.
For this reason, we add an additional variant of LIMIT, which initializes the $q$ network with the best CE baseline, similar to the DMI and FW baselines.

\cref{tab:cifar10_with_error_bars} presents the results on CIFAR-10.
Again, variants of LIMIT improve significantly over standard baselines, especially in the case of uniform label noise.
As expected, when $q$ is initialized with the best CE model (similar to FW and DMI), the results are better.
As in the case of MNIST, our approach helps even when the dataset is noiseless.

\subsection{CIFAR-100 with uniform label noise}
\begin{table}[!t]
    \small
    \centering
    \caption{Test accuracy comparison on CIFAR-100 with 40\% uniform label noise and on Clothing1M dataset. The error bars are standard deviations computed by bootstrapping the test set 1000 times.}
    \begin{tabular}{lcc}
    \toprule
    \multirow{2}{*}{Method} & CIFAR-100 & Clothing1M\\
    & \multicolumn{1}{c}{40\% uniform label noise} & \multicolumn{1}{c}{original noisy train set}\\
    \midrule
    CE                             & 44.9 $\pm$ 0.5 &  68.91 $\pm$ 0.46\\
    MAE                            & 1.8 $\pm$ 0.1 & 6.52 $\pm$ 0.23\\
    FW                             & 23.3 $\pm$ 0.4 & 68.70 $\pm$ 0.45\\
    DMI                            & 46.1 $\pm$ 0.5 & \best{71.19} $\pm$ \best{0.43}\\
  \predict$_\mathcal{G}$               & 58.4 $\pm$ 0.4 & 70.32 $\pm$ 0.42\\
    \predict$_\mathcal{L}$                & 49.5 $\pm$ 0.5 & 70.35 $\pm$ 0.45\\
    \predict$_\mathcal{G}$ + init.        & 59.2 $\pm$ 0.5 & \best{71.39} $\pm$ \best{0.44}\\
    \predict$_\mathcal{L}$ + init.         & \best{60.8 $\pm$ 0.5} & 70.53 $\pm$ 0.44\\
    \bottomrule
    \end{tabular}
    \label{tab:joint-table}
\end{table}

To test proposed methods on a classification task with many classes, we apply them on CIFAR-100 with 40\% uniform noise.
We use the same networks and training strategies as in the case of CIFAR-10.
Results presented in \cref{tab:joint-table} indicate several interesting phenomena.
First, training with the MAE loss fails, which was observed by other works as well~\citep{gce}.
The gradient of MAE with respect to logits is $f(x)_y (\widehat{y}-y)$.
When $f(x)_y$ is small, there is small signal to fix the mistake.
In fact, in the case of CIFAR-100, $f(x)_y$ is approximately 0.01 in the beginning, slowing down the training.
The performance of FW degrades as the approximation error of noise transition matrix become large.
The DMI does not give significant improvement over CE due to numerical issues with computing a determinant of a 100x100 confusion matrix.
LIMIT$_\mathcal{L}$ performs worse than other variants, as training $q$ with MAE becomes challenging.
However, performance improves when $q$ is initialized with the CE model.
LIMIT$_\mathcal{G}$ does not suffer from the mentioned problem and works with or without initialization.

\subsection{Clothing1M}
Finally, as in our last experiment, we consider the Clothing1M dataset~\citep{xiao2015learning}, which has 1M images labeled with one of 14 possible clothing labels.
The dataset has very noisy training labels, with roughly 40\% of examples incorrectly labeled.
More importantly, the label noise in this dataset is realistic and instance dependent.
For this dataset we use ResNet-50 networks~\citep{he2016deep} and employ standard data augmentation, consisting of random horizontal flips and random crops of size 224x224 after resizing images to size 256x256.
We train all models for 30 epochs.
The results shown in the last column of \cref{tab:joint-table} demonstrate that DMI and \predict{} with initialization perform the best, producing similar results.

\subsection{Detecting mislabeled examples}
\begin{figure}[t]
    \captionsetup[subfigure]{justification=centering}
    \centering
    \begin{subfigure}{0.242\textwidth}
    \centering
    \includegraphics[width=\textwidth]{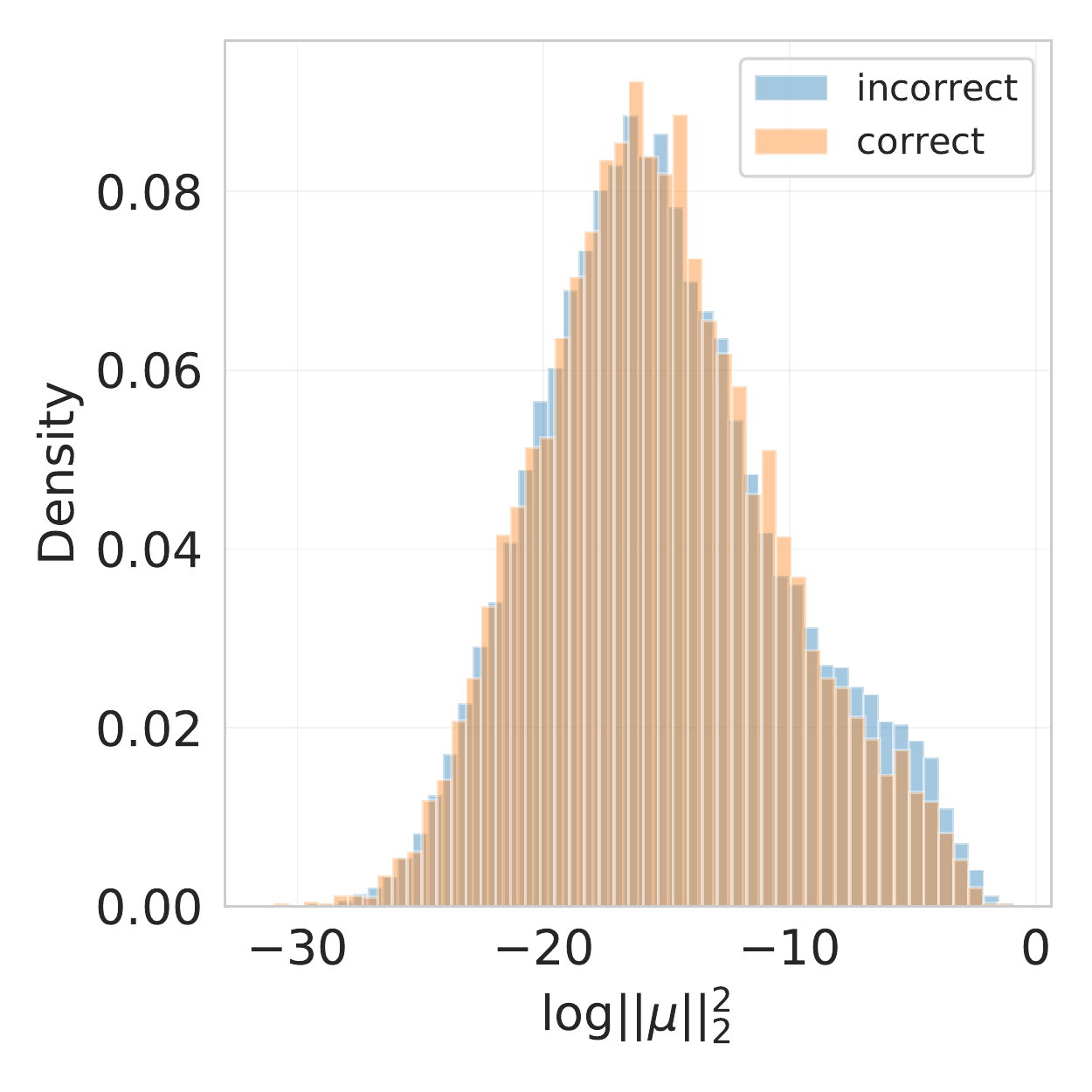}
    \caption{MNIST\\80\% uniform noise}
    \end{subfigure}%
    ~
    \begin{subfigure}{0.242\textwidth}
    \includegraphics[width=\textwidth]{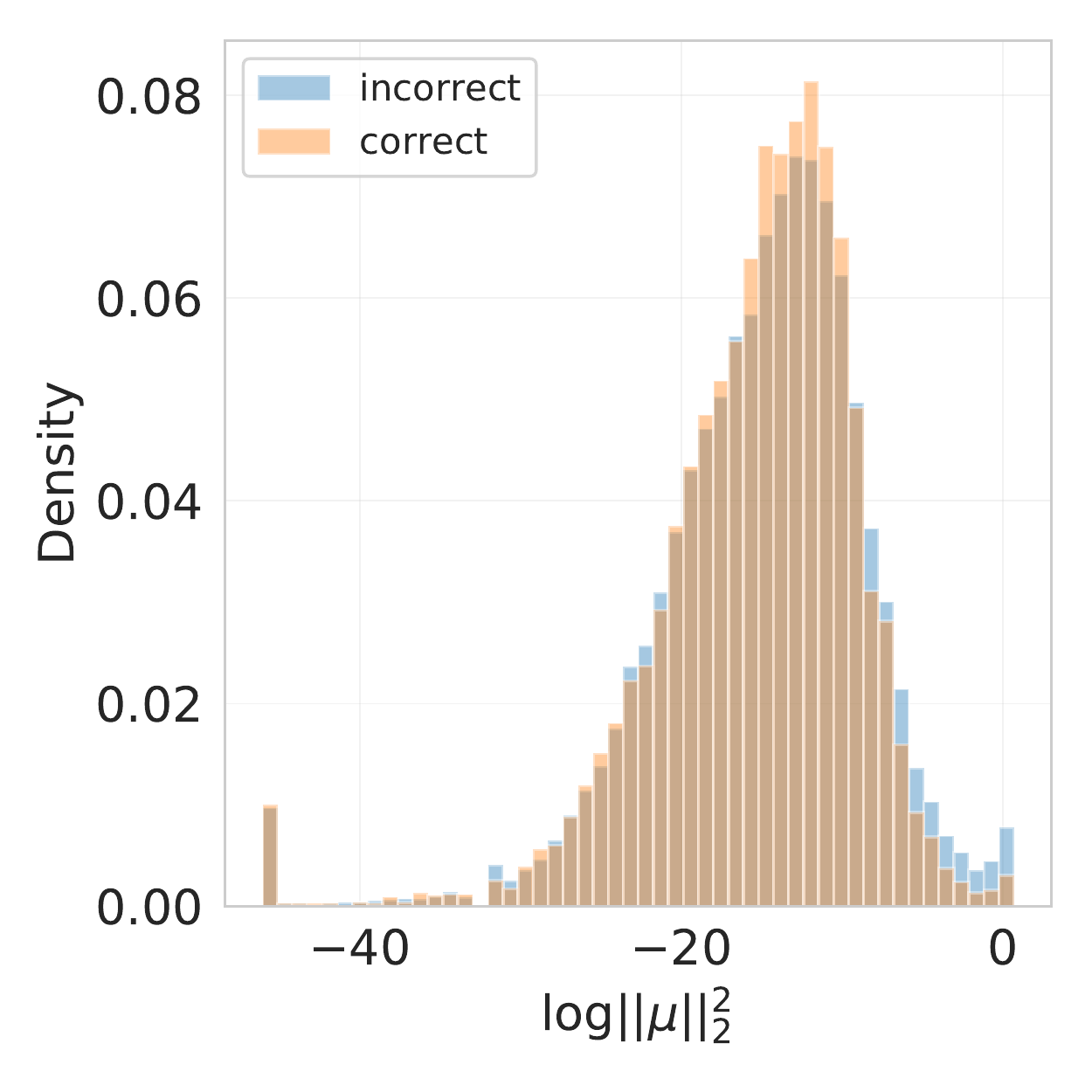}
    \caption{CIFAR-10\\40\% uniform noise}
    \end{subfigure}%
    ~
    \begin{subfigure}{0.242\textwidth}
    \includegraphics[width=\textwidth]{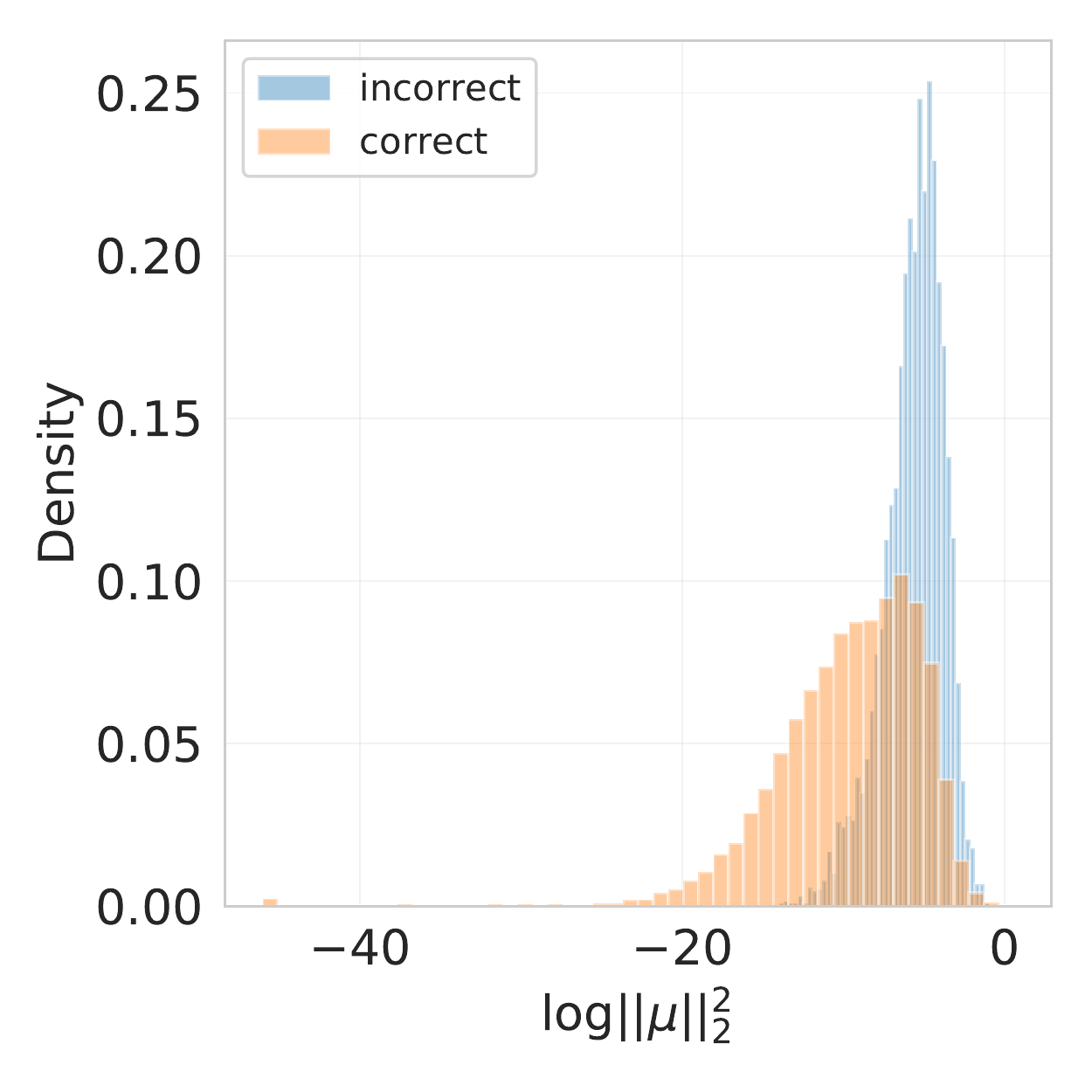}
    \caption{CIFAR-10\\40\% pair noise}
    \end{subfigure}%
    ~
    \begin{subfigure}{0.242\textwidth}
    \includegraphics[width=\textwidth]{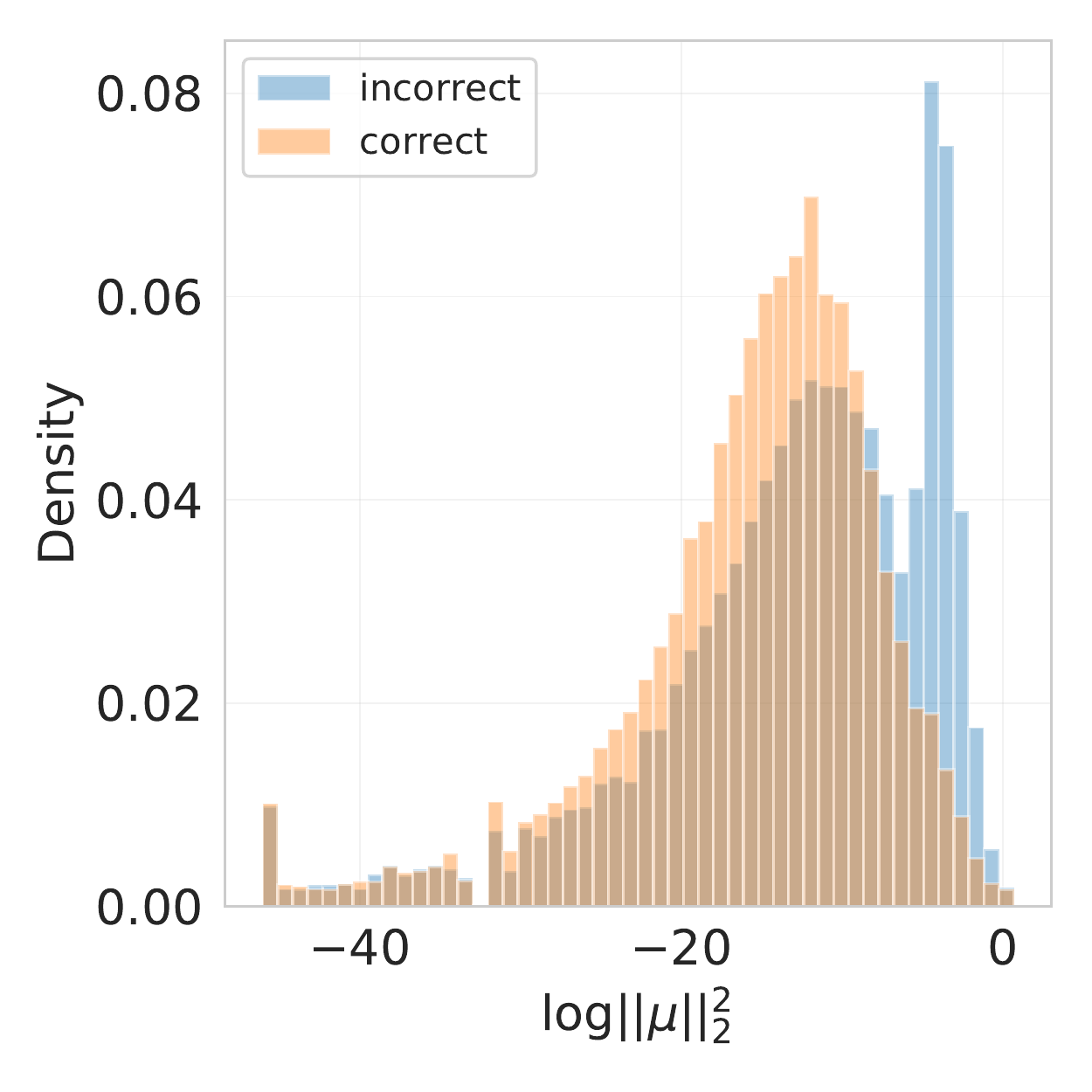}
    \caption{CIFAR-100\\40\% uniform noise}
    \end{subfigure}
    \caption{Histograms of the norm of predicted gradients for examples with correct and incorrect labels. The gradient predictions are done using the best instances of LIMIT.}
    \label{fig:grad-norm-hist}
\end{figure}

\begin{figure}[t]
    \captionsetup[subfigure]{justification=centering}
    \centering
    \begin{subfigure}{0.242\textwidth}
    \centering
    \includegraphics[width=\textwidth]{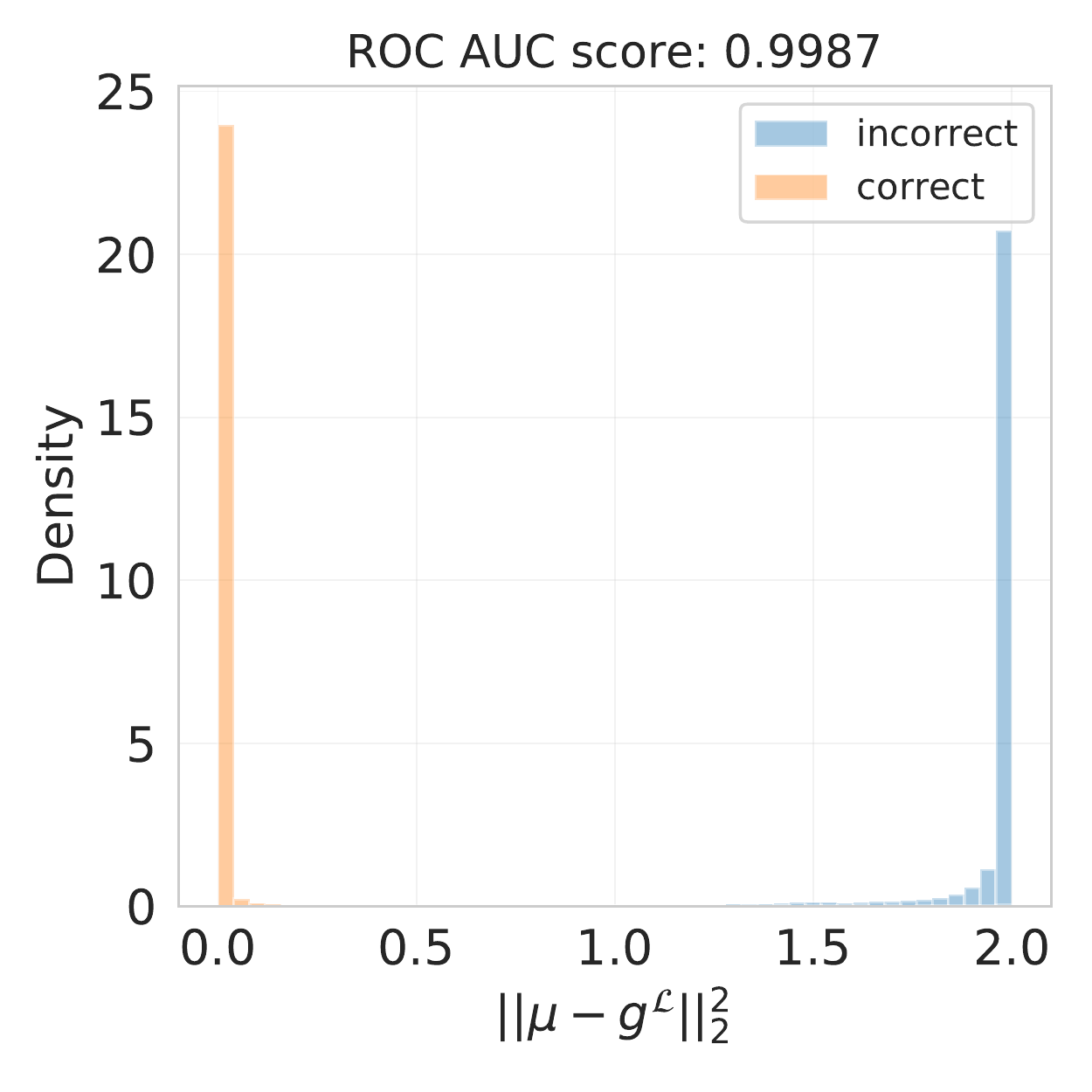}
    \caption{MNIST\\80\% uniform noise}
    \end{subfigure}%
    ~
    \begin{subfigure}{0.242\textwidth}
    \includegraphics[width=\textwidth]{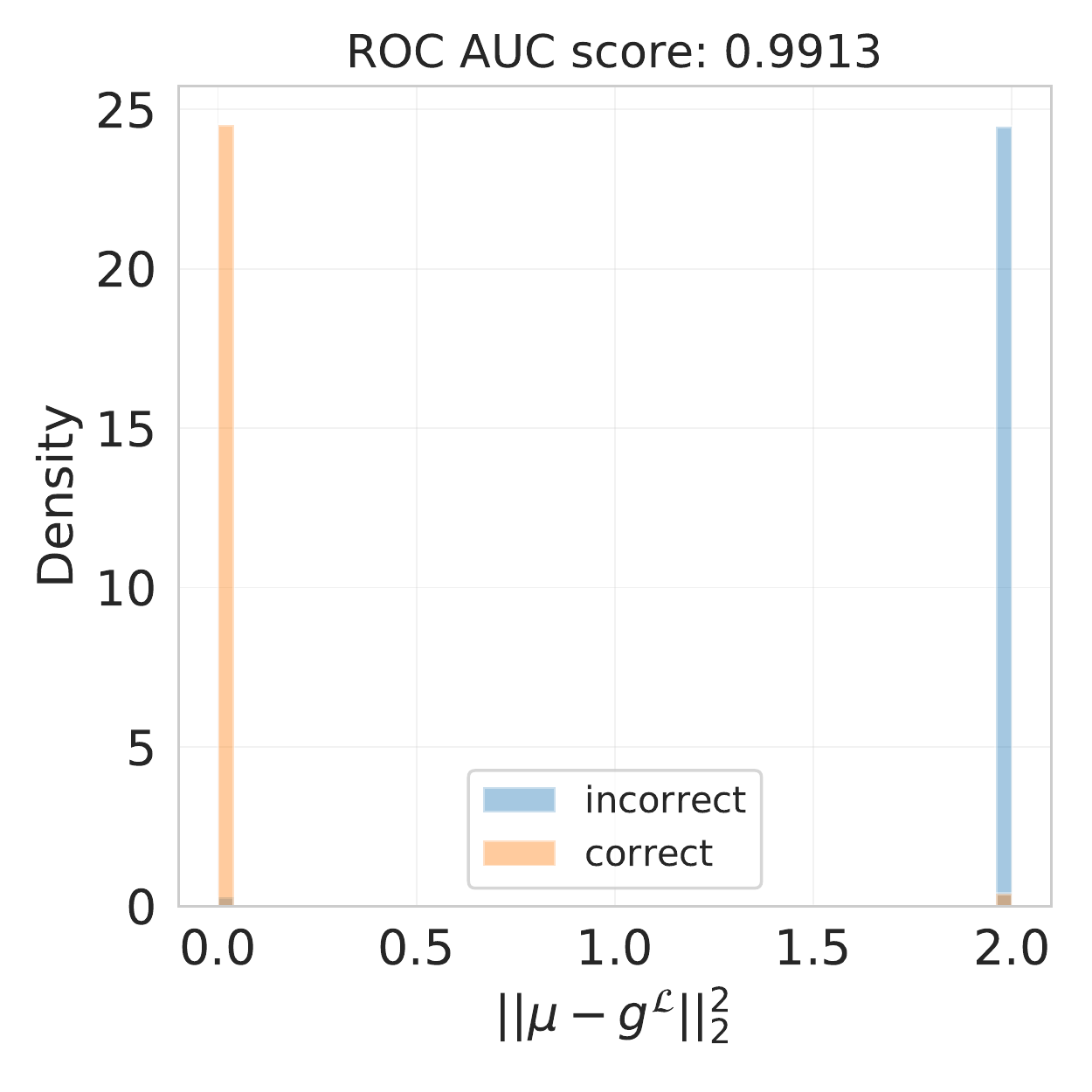}
    \caption{CIFAR-10\\40\% uniform noise}
    \end{subfigure}%
    ~
    \begin{subfigure}{0.242\textwidth}
    \includegraphics[width=\textwidth]{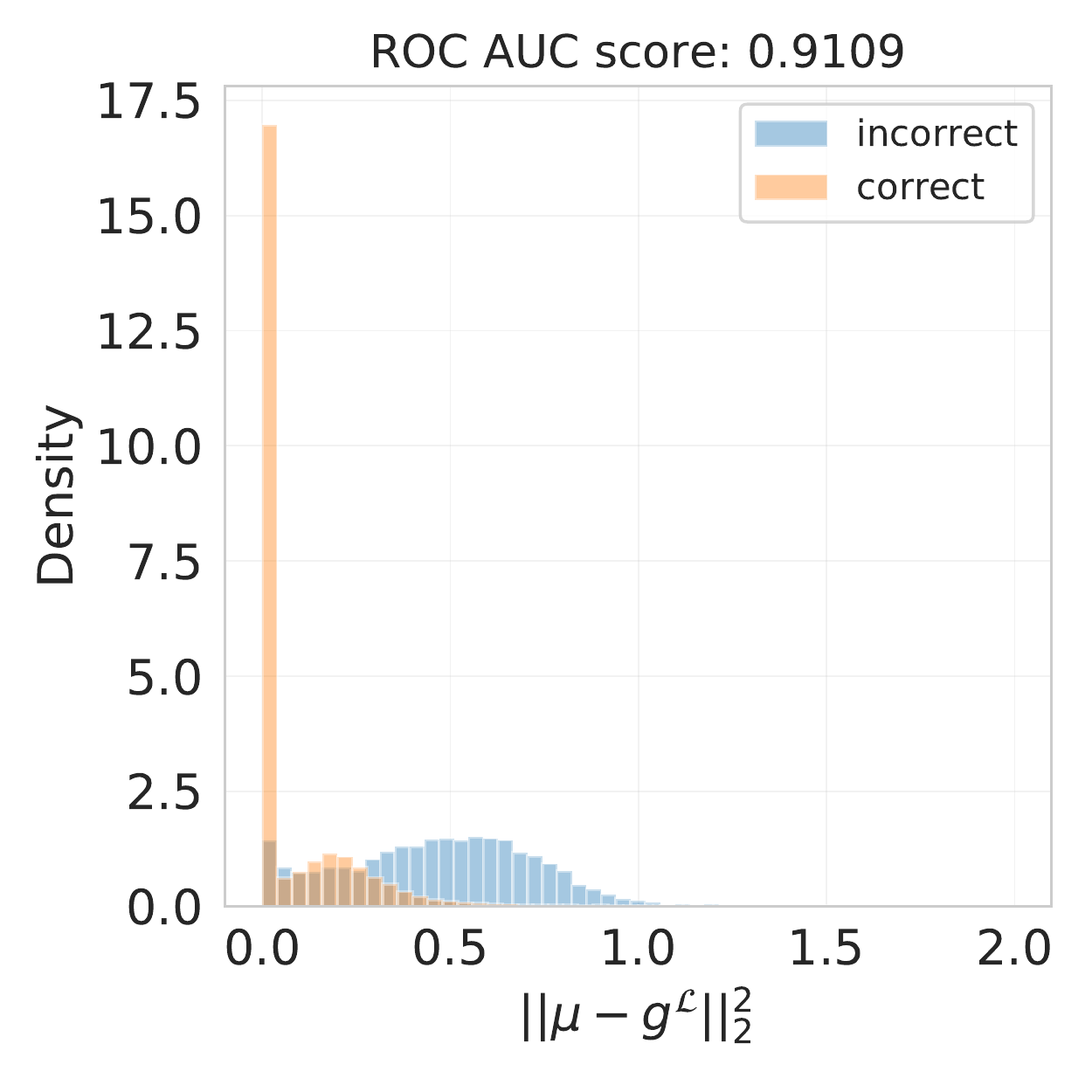}
    \caption{CIFAR-10\\40\% pair noise}
    \end{subfigure}%
    ~
    \begin{subfigure}{0.242\textwidth}
    \includegraphics[width=\textwidth]{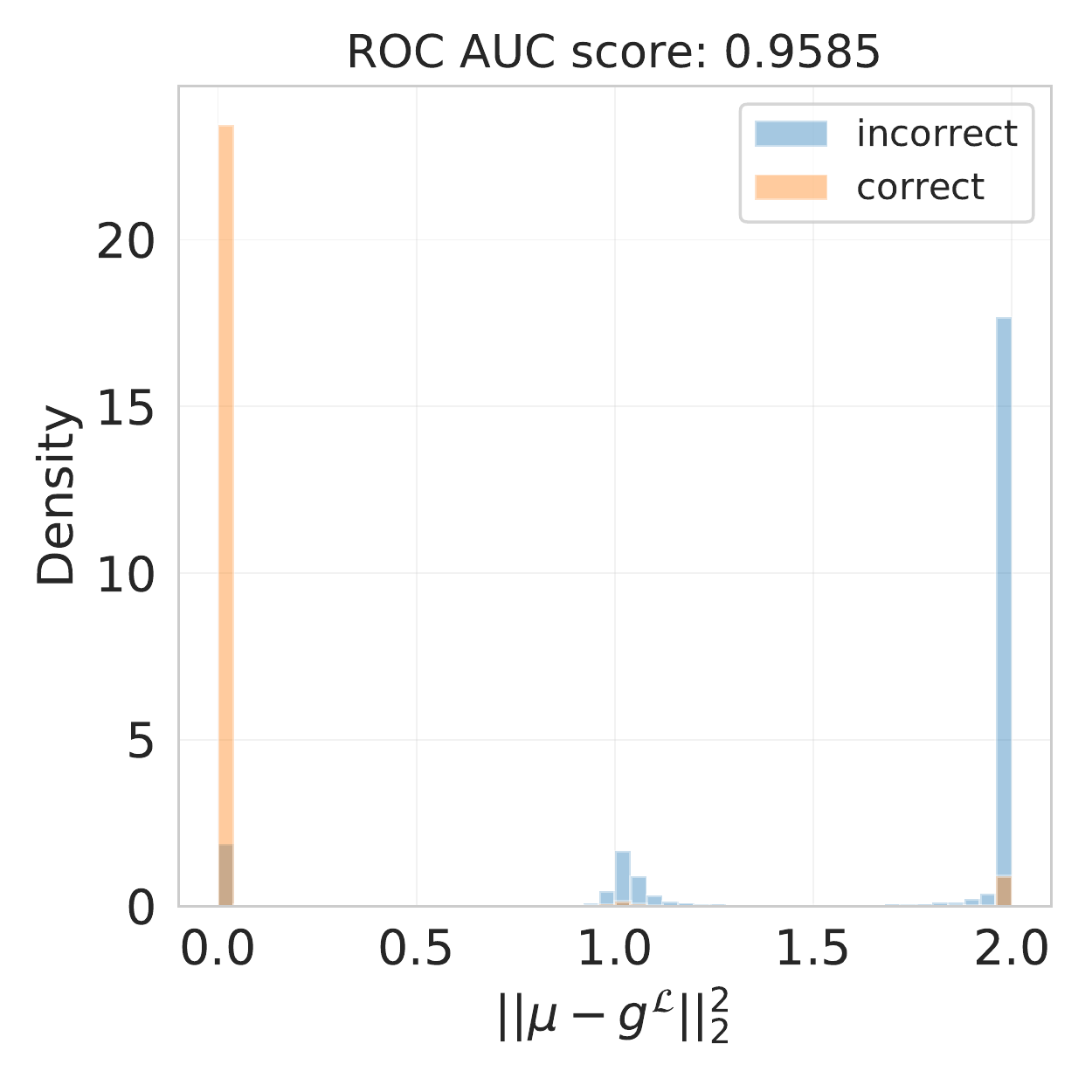}
    \caption{CIFAR-100\\40\% uniform noise}
    \end{subfigure}
    \caption{Histograms of the distance between predicted and actual gradient for examples with correct and incorrect labels. The gradient predictions are done using the best instances of LIMIT.}
    \label{fig:grad-diff-hist}
\end{figure}

In the proposed approach, the auxiliary network $q$ should not be able to distinguish correct and incorrect samples, unless it overfits.
In fact, \cref{fig:grad-norm-hist} shows that if we look at the norm of predicted gradients, examples with correct and incorrect labels are indistinguishable in easy cases (MNIST with 80\% uniform noise and CIFAR-10 with 40\% uniform noise) and have large overlap in harder cases (CIFAR-10 with 40\% pair noise and CIFAR-100 with 40\% uniform noise).
Therefore, we hypothesize that the auxiliary network learns to utilize incorrect samples effectively by predicting ``correct'' gradients.
\cref{fig:grad-diff-hist} confirms this intuition, demonstrating that this distance separates correct and incorrect samples perfectly in easy cases (MNIST with 80\% uniform noise and CIFAR-10 with 40\% uniform noise) and separates them well in harder cases (CIFAR-10 with 40\% pair noise and CIFAR-100 with 40\% uniform noise).
If we interpret this distance as a score for classifying correctness of a label, we get 91.1\% ROC AUC score in the hardest case: CIFAR-10 with 40\% pair noise, and more than 99\% score in the easier cases: MNIST with 80\% uniform noise and CIFAR-10 with 40\% uniform noise.

Motivated by these results, we use the same technique to detect samples with incorrect or confusing labels in the original MNIST, CIFAR-10, and Clothing1M datasets.
\cref{fig:confusing-samples} presents one incorrectly labeled or confusing example per class.
More examples for each class are presented in \cref{fig:mnist-cifar-confusing-more-examples,fig:clothgin1m-confusing-more-examples} of the \cref{app:add-results}.

\begin{figure*}[t]
    \centering
    \begin{subfigure}{0.49\textwidth}
    \includegraphics[width=\textwidth]{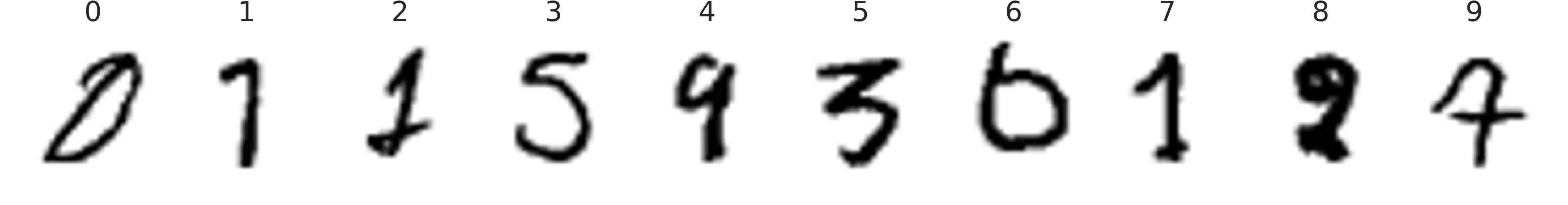}
    \caption{MNIST}
    \end{subfigure}%
    ~
    \begin{subfigure}{0.48\textwidth}
    \includegraphics[width=\textwidth]{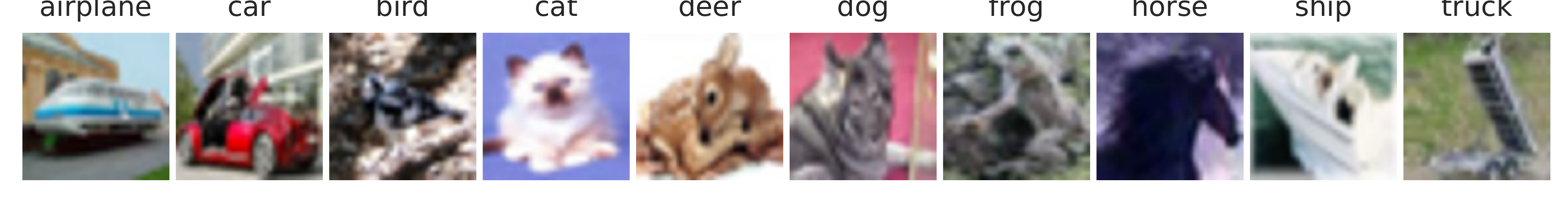}
    \caption{CIFAR-10}
    \end{subfigure}

    \vspace{1em}

    \begin{subfigure}{0.99\textwidth}
    \includegraphics[width=0.99\textwidth]{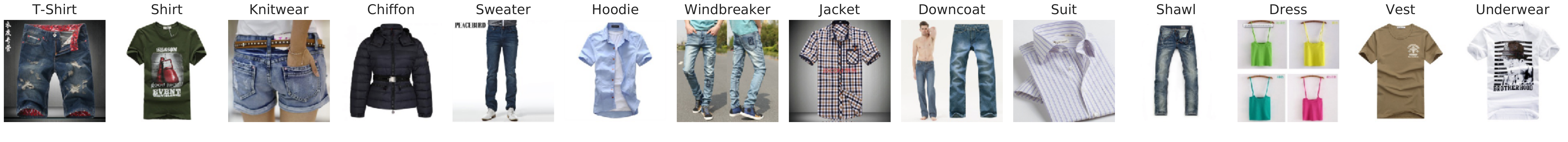}
    \caption{Clothing1M}
    \end{subfigure}%
    \caption{Most mislabeled examples in MNIST, CIFAR-10,  and Clothing1M datasets, according to the distance between predicted and cross-entropy gradients.}
    \label{fig:confusing-samples}
\end{figure*}

\section{Related work}
Our approach is related to many works that study memorization and learning with noisy labels.
Our work also builds on theoretical results studying how generalization relates to information in neural network weights.
In this section we present the related work and discuss the connections.

\paragraph{Learning with noisy labels.}
Learning with noisy labels is a longstanding problem and has been studied extensively~\citep{survey}.
Many works studied and proposed loss functions that are robust to label noise.
\citet{natarajan2013learning} propose robust loss functions for binary classification with label-dependent noise.
\citet{mae} generalize this result for multiclass classification problem and show that the mean absolute error (MAE) loss function is tolerant to label-dependent noise.
However, as seen in our experiments, training with MAE progresses slowly and performs poorly on challenging datasets.
\citet{gce} propose a new loss function, called generalized cross-entropy (GCE), that interpolates between MAE and CE with a single parameter $q \in [0,1]$.
\citet{dmi} propose a new loss function (DMI), which is equal to the log-determinant of the confusion matrix between predicted and given labels, and show that it is robust to label-dependent noise.
These loss functions are robust in the sense that the best performing hypothesis on clean data and noisy data are the same in the regime of infinite data.
When training on finite datasets, training with these loss functions may result in memorization of training labels.

Another line of research seeks to estimate label noise and correct the loss function \citep{Sukhbaatar2014TrainingCN,xiao2015learning,goldberger2016training, patrini2017making,hendrycks2018using,safeguard2019}.
Some works use meta-learning to treat the problem of noisy/incomplete labels as a decision problem in which one determines the reliability of a sample~\citep{jiang2018mentornet, ren2018learning, shu2019meta}.
Others seek to detect incorrect examples and relabel them~\citep{Reed2014TrainingDN, tanaka2018joint, ma2018dimensionality, han2019deep, arazo2019unsupervised}.
\citet{han2018co,yu2019does} employ an approach where two networks select training examples for each other using the small-loss trick. While our approach also has a teaching component, the network uses all samples instead of filtering.
\citet{li2019learning} propose a meta-learning approach that optimizes a classification loss along with a consistency loss between predictions of a mean teacher and predictions of the model after a single gradient descent step on a synthetically labeled mini-batch.

Some approaches assume particular label noise models, while our approach assumes that $H(\bs{Y} \mid \bs{X}) > 0$, which may happen because of any type of label noise or attribute noise (e.g., corrupted images or partially observed inputs).
Additionally, the techniques used to derive our approach can be adopted for regression or multilabel classification tasks.
Furthermore, some methods require access to small clean validation data, which is not required in our approach.

\paragraph{Information in weights and generalization.}
Defining and quantifying information in neural network weights is an open challenge and has been studied by multiple authors.
One approach is to relate information in weights to their description length.
A simple way of measuring description length was proposed by \citet{hinton1993keeping} and reduces to the L2 norm of weights.
Another way to measure it is through the intrinsic dimension of an objective landscape~\citep{li2018intrinsic, blier2018description}.
\citet{li2018intrinsic} observed that the description length of neural network weights grows when they are trained with noisy labels~\citep{li2018intrinsic}, indicating memorization of labels.

\citet{achille2018emergence} define information in weights as the KL divergence from the posterior of weights to the prior.
In a subsequent study they provide generalization bounds involving the KL divergence term~\citep{achille2019information}.
Similar bounds were derived in the PAC-Bayesian setup and have been shown to be nonvacuous~\citep{DBLP:conf/uai/DziugaiteR17}.
With an appropriate selection of prior on weights, the above KL divergence becomes the Shannon mutual information between the weights and training dataset, $I(W; S)$.
\citet{xu2017information} derive generalization bounds that involve this latter quantity.
\citet{pensia2018generalization} upper bound $I(W; S)$ when the training algorithm consists of iterative noisy updates.
They use the chain-rule of mutual information as we did in \cref{eq:chain-rule} and bound information in updates by adding independent noise.
It has been observed that adding noise to gradients can help to improve generalization in certain cases~\citep{neelakantan2015adding}.
Another approach restricts information in gradients by clipping them~\citep{menon2020can} .

\citet{achille2018emergence} also introduce the term $I(W; \bs{Y} \mid \bs{X})$ and show the decomposition of the  cross-entropy described in \cref{eq:ce-decomp}.
\citet{yin2020metalearning} consider a similar term in the context of meta-learning and use it as a regularization to prevent memorization of meta-testing labels.
Given a meta-learning dataset $\mathscr{M}$, they consider the information in the meta-weights $\theta$ about the labels of meta-testing tasks given the inputs of meta-testing tasks, $I(\theta ; \bs{\mathscr{Y}}\mid \bs{\mathscr{X}})$.
They bound this information with a variational upper bound $\KL{q(\theta \mid \mathscr{M})}{r(\theta)}$ and use multivariate Gaussian distributions for both.
For isotropic Gaussians with equal covariances, the KL divergence reduces to $\lVert \theta - \theta_0 \rVert_2^2$, which was studied by~\citet{hu2020simple} as a regularization to achieve robustness to label noise.
Note that this bounds not only $I(\theta ; \bs{\mathscr{Y}} \mid \bs{\mathscr{X}})$ but also $I(\theta ; \bs{\mathscr{X}}, \bs{\mathscr{Y}})$. 
In contrast, we bound only $I(W; \bs{Y} \mid \bs{X})$ and work with information in gradients.

\section{Conclusion}
Several theoretical works have highlighted the importance of the information about the training data that is memorized in the weights.
We distinguished two components of it and demonstrated that the conditional mutual information of weights and labels given inputs is closely related to memorization of labels and generalization performance.
By bounding this quantity in terms of information in gradients, we were able to derive the first practical schemes for controlling label information in the weights and demonstrated that this outperforms approaches for learning with noisy labels.

\chapter{Estimating Informativeness of Samples with Smooth Unique Information}\label{ch:unique-info}
\section{Introduction}
Training a deep neural network (DNN) entails extracting information from samples in a dataset and storing it in the weights of the network, so that it may be used in future inference or prediction. But how much information does a \textit{particular} sample contribute to the trained model? The answer can be used to provide strong generalization bounds (if no information is used, the network is not memorizing the sample), privacy bounds (how much information the network can leak about a particular sample), and enable better interpretation of the training process and its outcome.
To determine the information content of samples, we need to define and compute information. In the classical sense, information is a property of random variables, which may be degenerate for the deterministic process of computing the output of a trained DNN in response to a given input (inference). So, even posing the problem presents some technical challenges. But beyond technicalities, how can we know whether a given sample is memorized by the network and, if it is, whether it is used for inference?

We propose a notion of {\em unique sample information} that, while rooted in information theory, captures some aspects of stability theory and influence functions.
Unlike most information-theoretic measures, our notion of information can be approximated efficiently for large networks, especially in the case of transfer learning, which encompasses many real-world applications of deep learning. Our definition can be applied to either ``weight space'' or ``function space.'' This allows us to study the non-trivial difference between information the weights possess (weight space) and the information the network actually {\em uses} to make predictions on new samples (function space).
 
Our method yields a valid notion of information without relying on the randomness of the training algorithm ({\em e.g.}, stochastic gradient descent, SGD), and works even for deterministic training algorithms. Our main work-horse is a first-order approximation of the network.
This approximation is accurate when the network is pretrained \citep{mu2020gradients} --- as is common in practical applications --- or is randomly initialized but very wide \citep{lee2019wide}, and can be used to obtain a closed-form  expression of the per-sample information.
In addition, our method has better scaling with respect to the number of parameters than most other information measures, which makes it applicable to massively overparametrized models such as DNNs.
Our information measure can be computed without actually training the network, making it amenable to use in problems like dataset summarization.

We apply our method to remove a large portion of uninformative examples from a training set with minimum impact on the accuracy of the resulting model (dataset summarization).
We also apply our method to detect mislabeled samples, which we show carry more unique information.

To summarize, our contributions are
    (1) We introduce a notion of unique information that a sample contributes to the training of a DNN, both in weight space and in function space, and relate it with the stability of the training algorithm;
    (2) We provide an efficient method to compute unique information even for large networks using a linear approximation of the DNN, and without having to train a network;
    (3) We show applications to dataset summarization and analysis.
The implementation of the proposed method and the code for reproducing the experiments is available at \url{https://github.com/awslabs/aws-cv-unique-information}.

\paragraph{Notation.}
In this chapter we consider a particular instance of a labeled training dataset $s = (z_1,\ldots,z_n)$, where $z_i = (x_i, y_i)$, $x_i \in \mathcal{X},\ y_i \in \mathbb{R}^k$.
We consider a neural network $f_w : \mathcal{X} \rightarrow \mathbb{R}^k$ with parameters $w \in \mathbb{R}^d$.
Throughout this chapter $s_{-i}=\{z_1, \ldots, z_{i-1}, z_{i+1}, \ldots, z_n\}$ denotes the training set without the $i$-th sample; $f_{w_t}$ is often shortened to $f_t$; the concatenation of all training examples is denoted by $\bs{x}$; the concatenation of all training labels by $\bs{y} \in \mathbb{R}^{nk}$; and the concatenation of all outputs by $f_w(\bs{x}) \in \mathbb{R}^{nk}$.
The loss on the $i$-th example is denoted by $\ell_i(w)$ and is equal to $\frac{1}{2}\lVert f_w(x_i) - y_i\rVert_2^2$, unless specified otherwise. This choice is useful when dealing with linearized models and is justified by \citet{hui2021evaluation}, who show that the mean-squared error (MSE) loss is as effective as cross-entropy for classification tasks.
The training loss is $\LL(w) = \sum_{i=1}^n \ell_i(w) + \frac{\lambda}{2} \lVert w - w_0\rVert_2^2$, where $\lambda \ge 0$ is a weight decay regularization coefficient and $w_0$ is the weight initialization point.
Note that the regularization term differs from standard weight decay $\lVert w\rVert_2^2$ and is more appropriate for linearized neural networks, as it allows us to derive the dynamics analytically (see \cref{sec:wd-linearized}).

\section{Unique information of a sample in the weights}
\label{sec:smoothed-sample-information}
We start with defining a notion of unique information in the weight space.
Consider a (possibly stochastic) training algorithm $Q_{W|S}$ that, given a training dataset $s$, returns weights $W \sim Q_{W|S=s}$ for the classifier $f_w$.
From an information-theoretic point of view, the amount of unique information a sample $z_i = (x_i, y_i)$ provides about the weights is given by the conditional point-wise mutual information:
\begin{equation}
I(W; Z_i=z_i \mid S_{-i}=s_{-i}) = \KL{Q_{W|S=s}}{P_{W|S_{-i}=s_{-i}}},
\label{eq:unique-info}
\end{equation}
where $P_{W|S_i}$ is derived from the joint distribution $P_{S,W}$ induced by $P_S$ and $Q_{W|S}$ (i.e., denotes the average distribution of the weights over all possible samplings of $Z_i$ and fixed $S_{-i}=s_{-i}$).

\subsection{Approximating unique information with leave-one-out KL divergence}\label{sec:appendix-approximating-sample-info-with-kl}

\begin{figure}[t]
    \centering
    \begin{subfigure}{0.25\textwidth}
    \includegraphics[width=\textwidth]{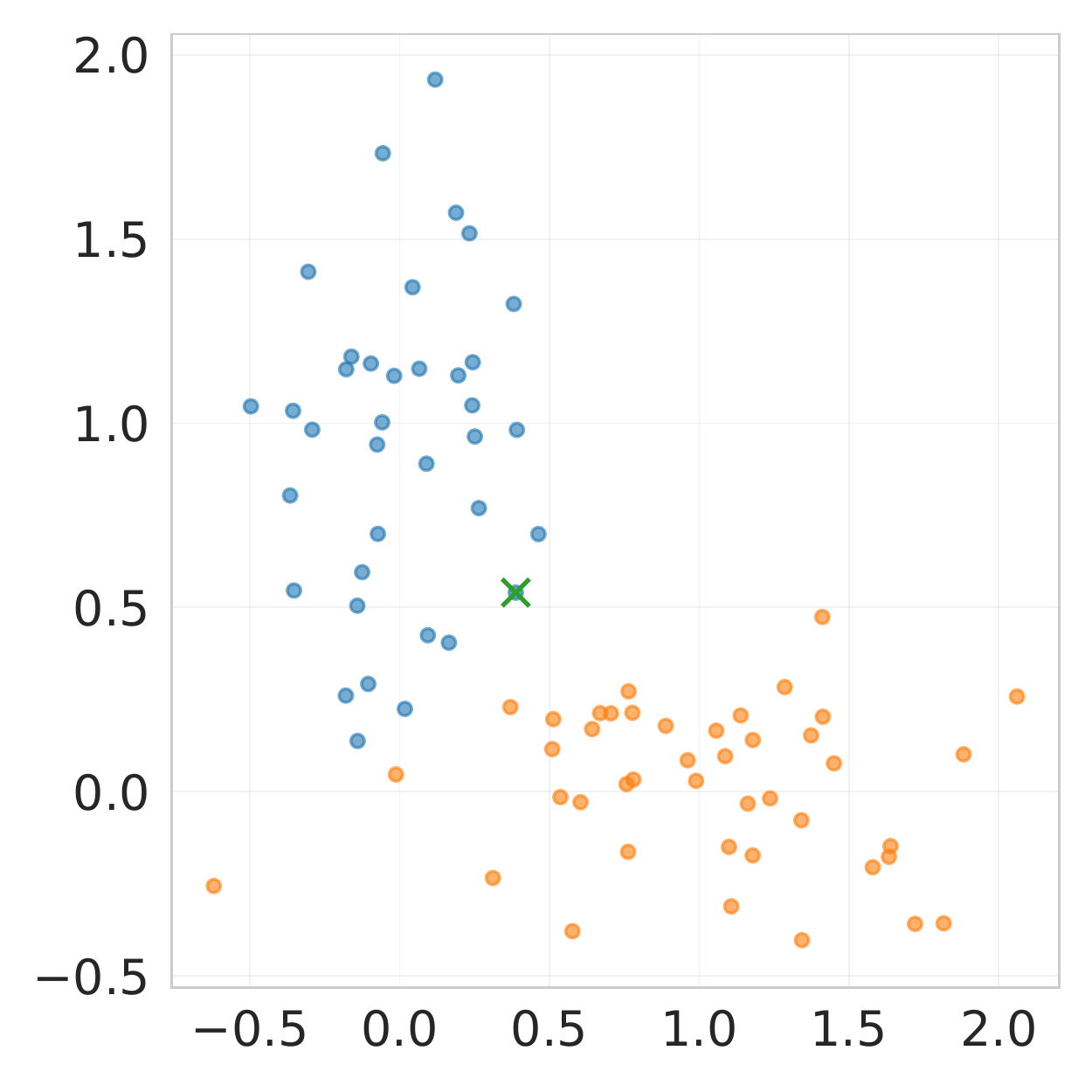}
    \caption{Data}
    \label{fig:2d-dataset}
    \end{subfigure}%
    \begin{subfigure}{0.25\textwidth}
    \includegraphics[width=\textwidth]{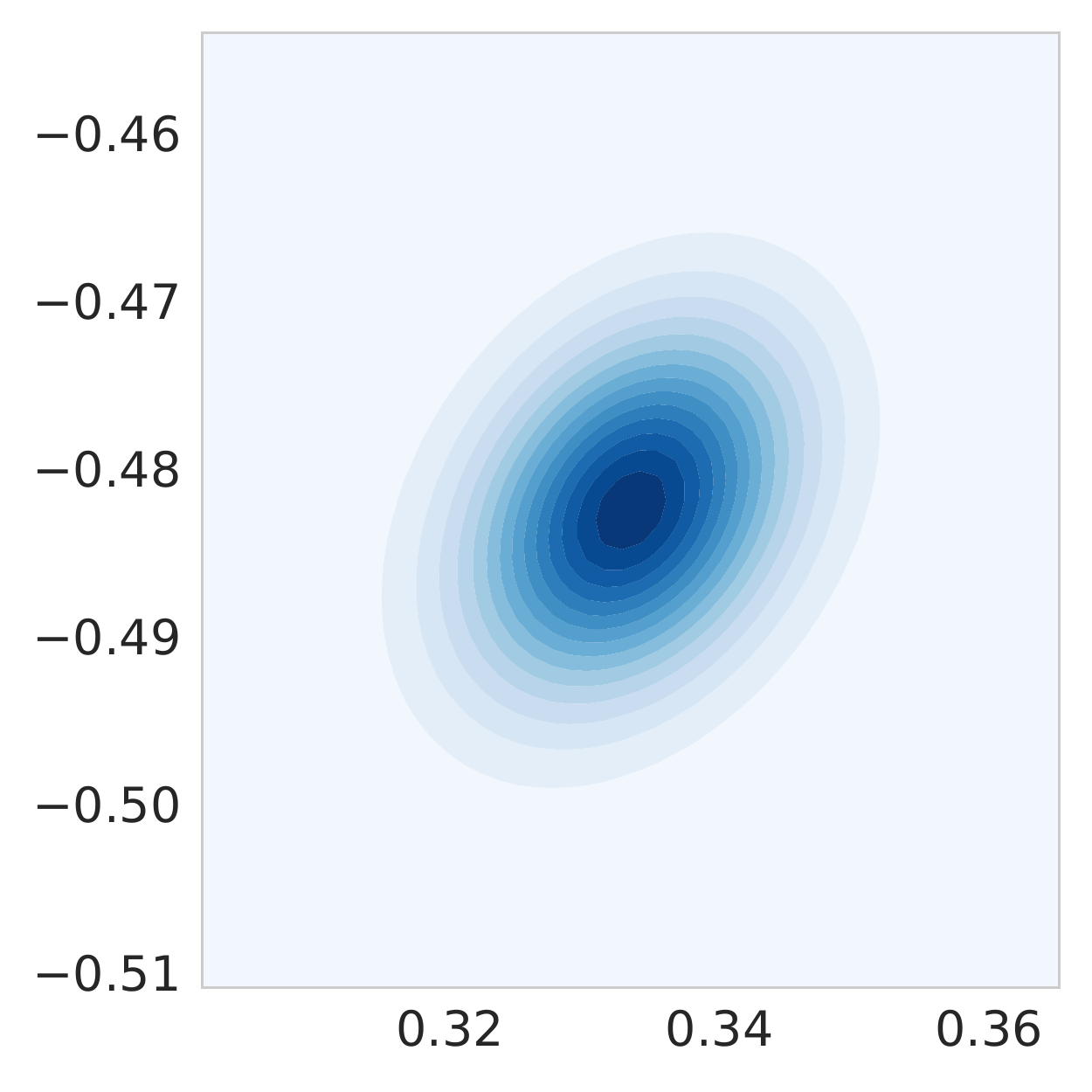}
    \caption{$Q_{W|S=s}$}
    \label{fig:kl-approx-full}
    \end{subfigure}%
    \begin{subfigure}{0.25\textwidth}
    \includegraphics[width=\textwidth]{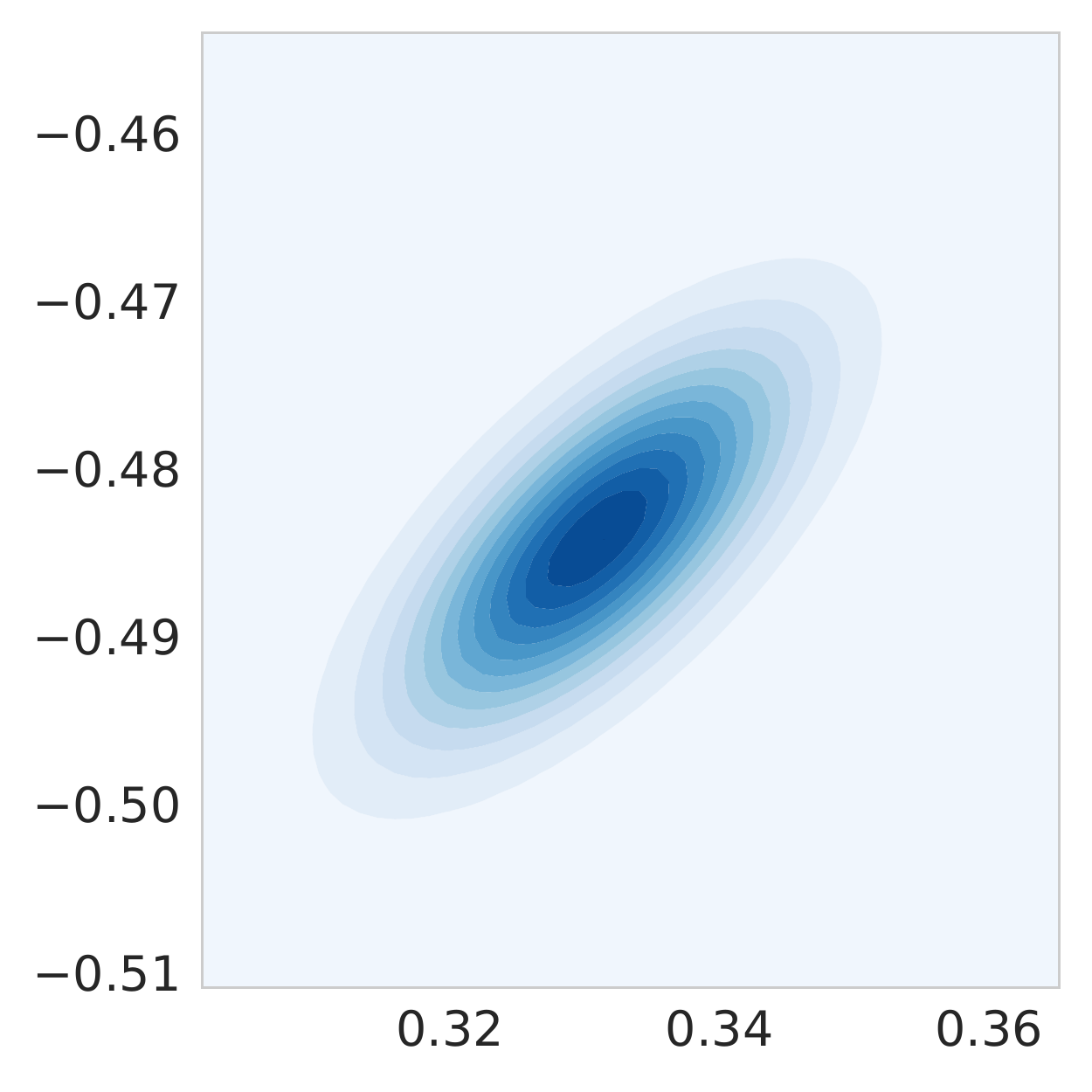}
    \caption{$Q_{W|S=s_{-i}}$}
    \label{fig:kl-approx-remove}
    \end{subfigure}%
    \begin{subfigure}{0.25\textwidth}
    \includegraphics[width=\textwidth]{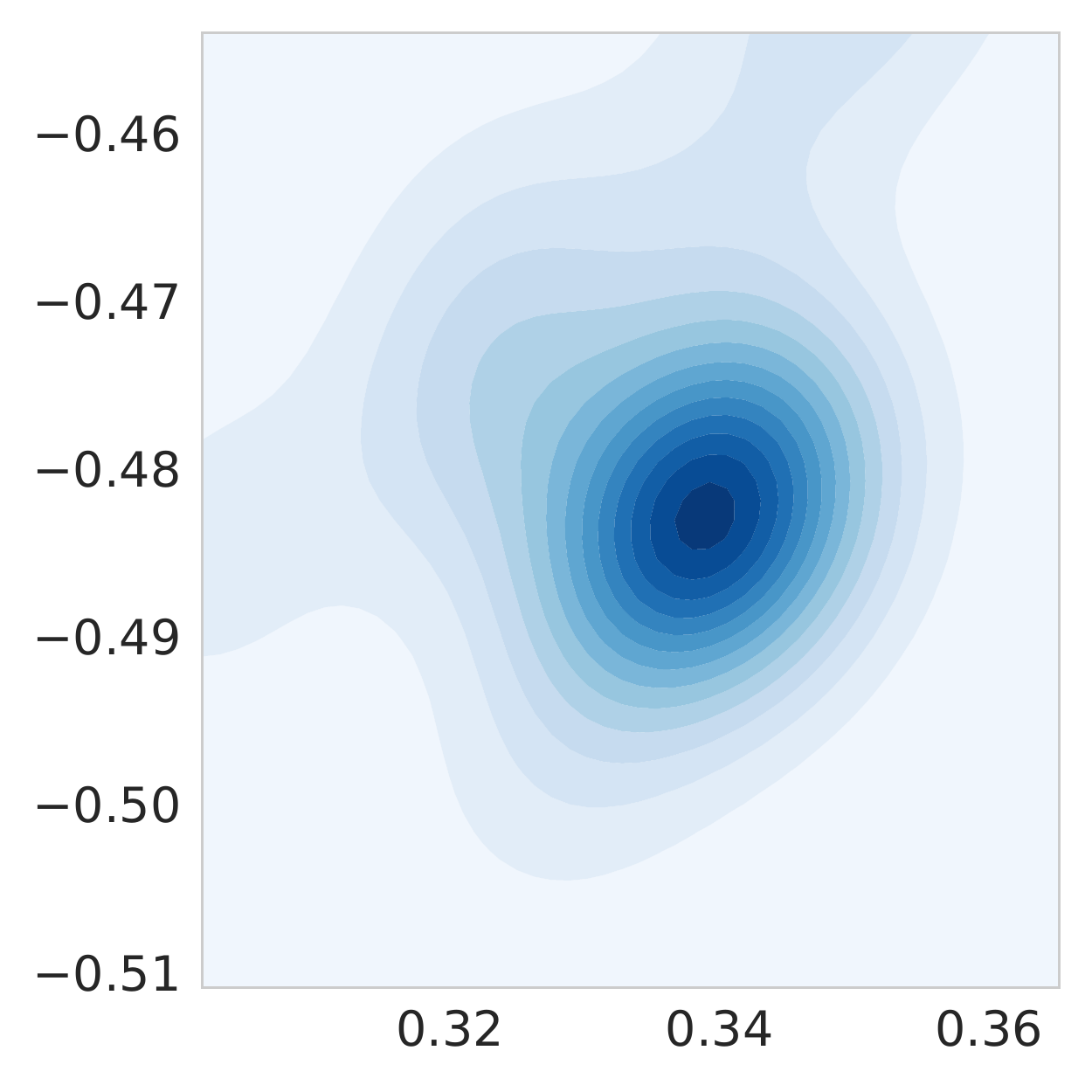}
    \caption{$P_{W|S_{-i}=s_{-i}}$}
    \label{fig:kl-approx-marginal}
    \end{subfigure}
    \caption{A toy dataset and key distributions involved in upper bounding the unique sample information with leave-one-out KL divergence.}
    \label{fig:apprimxating-sample-info-with-kl}
\end{figure}

Computing the conditional distribution $P_{W|S_{-i}=s_{-i}}$ is challenging because of the high-dimensionality and the cost of training for multiple replacements of $z_i$.
One can address this problem by using the following upper bound.
\begin{proposition}
Let $P_{W,S}$ be the joint distribution induced by $P_S$ and $Q_{W|S}$. Assume that $\forall S=s, i\in[n],\ P_{W|S_{-i}=s_{-i}} \ll Q_{W|S=s_{-i}}$. Then $\forall i \in [n]$
\begin{equation}
    \KL{Q_{W|Z_i, S_{-i}=s_{-i}}}{P_{W|S_{-i}=s_{-i}}} \le \KL{Q_{W|Z_i,S_{-i}=s_{-i}}}{Q_{W|S=s_{-i}}}.
\end{equation}
\label{prop:unique-info-upper-bound}
\end{proposition}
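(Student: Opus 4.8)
The plan is to read both divergences in the statement as \emph{conditional} KL divergences, which by the dissertation's convention carry an expectation over the random variable $Z_i$ left in the condition; since $Z_i$ is independent of $S_{-i}$, this expectation is over $Z_i \sim P_Z$. The crucial structural fact is that $P_{W|S_{-i}=s_{-i}}$ is exactly the $Z_i$-marginal of the kernel $Q_{W|Z_i, S_{-i}=s_{-i}}$, i.e. $P_{W|S_{-i}=s_{-i}} = \E_{Z_i \sim P_Z}\sbr{Q_{W|Z_i, S_{-i}=s_{-i}}}$. Consequently the left-hand side is precisely the disintegrated conditional mutual information $I^{S_{-i}=s_{-i}}(W;Z_i)$, whereas the right-hand side is the average leave-one-out divergence $\E_{Z_i}\sbr{\KL{Q_{W|Z_i,S_{-i}=s_{-i}}}{Q_{W|S=s_{-i}}}}$.

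The main vehicle is then the ``golden formula'': for any reference law $R$ that does not depend on $Z_i$,
\begin{align*}
\E_{Z_i}\sbr{\KL{Q_{W|Z_i, S_{-i}=s_{-i}}}{R}}
&= \E_{Z_i}\sbr{\KL{Q_{W|Z_i, S_{-i}=s_{-i}}}{P_{W|S_{-i}=s_{-i}}}} \\
&\quad + \KL{P_{W|S_{-i}=s_{-i}}}{R}.
\end{align*}
I would prove this by splitting $\log \frac{dQ_{W|Z_i,S_{-i}=s_{-i}}}{dR} = \log \frac{dQ_{W|Z_i,S_{-i}=s_{-i}}}{dP_{W|S_{-i}=s_{-i}}} + \log \frac{dP_{W|S_{-i}=s_{-i}}}{dR}$ and integrating against $Q_{W|Z_i,S_{-i}=s_{-i}}$ and then $P_Z$; the second term's $Z_i$-average collapses because $\E_{Z_i}\E_{W|Z_i}[\,\cdot\,] = \E_{W \sim P_{W|S_{-i}=s_{-i}}}[\,\cdot\,]$, turning it into $\KL{P_{W|S_{-i}=s_{-i}}}{R}$. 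Equivalently, this says the marginal is the unique minimizer of $R \mapsto \E_{Z_i}\sbr{\KL{Q_{W|Z_i,S_{-i}=s_{-i}}}{R}}$. Taking $R = Q_{W|S=s_{-i}}$ --- a legitimate reference since it is a function of $s_{-i}$ only, and the hypothesis $P_{W|S_{-i}=s_{-i}} \ll Q_{W|S=s_{-i}}$ makes $\KL{P_{W|S_{-i}=s_{-i}}}{Q_{W|S=s_{-i}}}$ well-defined and nonnegative --- and rearranging gives $\text{(LHS)} = \text{(RHS)} - \KL{P_{W|S_{-i}=s_{-i}}}{Q_{W|S=s_{-i}}} \le \text{(RHS)}$, which is the claim.

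The subtle point I would stress is that the inequality genuinely requires the averaging over $Z_i$ and is \emph{false} pointwise in $z_i$: one can rig $Q_{W|S=s_{-i}}$ to lie closer to a particular $Q_{W|Z_i=z_i,S_{-i}=s_{-i}}$ than the mixture $P_{W|S_{-i}=s_{-i}}$ does, so that for that single $z_i$ the left divergence exceeds the right one. The slack that saves the inequality is exactly the nonnegative term $\KL{P_{W|S_{-i}=s_{-i}}}{Q_{W|S=s_{-i}}}$, which only materializes after integrating over $Z_i$. The one technical obligation is to justify the Radon--Nikodym splitting: the chain $Q_{W|Z_i,S_{-i}=s_{-i}} \ll P_{W|S_{-i}=s_{-i}} \ll Q_{W|S=s_{-i}}$ holds, where the first absolute continuity is automatic (a mixture dominates its components) and the second is the stated assumption, so all the densities above are defined on the relevant supports and the identity holds in $[0,\infty]$ (the case of an infinite right-hand side being trivial).
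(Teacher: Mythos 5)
Your proposal is correct and follows essentially the same route as the paper's proof: both rest on the identity $\E_{Z_i}\KL{Q_{W|Z_i,S_{-i}=s_{-i}}}{Q_{W|S=s_{-i}}} - \E_{Z_i}\KL{Q_{W|Z_i,S_{-i}=s_{-i}}}{P_{W|S_{-i}=s_{-i}}} = \KL{P_{W|S_{-i}=s_{-i}}}{Q_{W|S=s_{-i}}} \ge 0$, obtained by splitting the log Radon--Nikodym derivative and using that $P_{W|S_{-i}=s_{-i}}$ is the $Z_i$-mixture of $Q_{W|Z_i,S_{-i}=s_{-i}}$, with the same absolute-continuity chain. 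Your remark that the inequality fails pointwise in $z_i$ and only holds after averaging is a correct and worthwhile clarification of the paper's notational convention.
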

This proposition shows that the expectation (over $Z_i$) of the unique information of \cref{eq:unique-info} can be upper bounded by the expectation of the following the following quantity:
\begin{equation}
    \USI(z_i, Q) \triangleq \KL{Q_{W|S=s}}{Q_{W|S=s_{-i}}},
    \label{eq:si-truly-continuous-algorithm}
\end{equation}
which we call \emph{sample information} of $z_i$ w.r.t. algorithm $Q$.

\cref{fig:apprimxating-sample-info-with-kl} illustrates this approximation step for a toy 2D dataset.
The dataset has two classes, each with 40 examples, generated from a Gaussian distribution (see \cref{fig:2d-dataset}).
We consider training a linear regression on this dataset using stochastic gradient descent for 200 epochs, with batch size equal to 5 and 0.1 learning rate.
We are interested in approximating the unique information of the $i$-th example (denoted with a green cross in \cref{fig:2d-dataset}).
\cref{fig:kl-approx-full} plots the distribution of regression weights after training on the entire dataset: $Q_{W|S=s}$.
\cref{fig:kl-approx-marginal} plots the distribution of regression weights averaging out the effect of the $i$-th example: $P_{W \mid S_{-i}=s_{-i}}=\E_{Z'\sim P_Z}\sbr{Q_{W | S_{-i}=s_{-i},Z_i=Z'}}$.
We see that compared to $Q_{W|S=s}$ this distribution is more complex.
Precisely for this reason we replace it with the distribution of regression weights after training on the dataset that excludes the $i$-th example: $Q_{W | S=s_{-i}}$, shown in \cref{fig:kl-approx-remove}.
In this case we have that $I(W; Z_i=z_i \mid S_{-i}=s_{-i}) \approx 1.3$, while $\mathrm{SI}(z_i, Q) = \KL{Q_{W|S=s}}{Q_{W|S=s_{-i}}} \approx 3.0$.

\subsection{Smoothed sample information}
The formulation above is valid in theory but, in practice, even SGD is used in a deterministic fashion by fixing the random seed and, in the end, we obtain just one set of weights rather than a distribution of them.
Under these circumstances, all the above KL divergences are degenerate, as they evaluate to infinity. 
It is common to address the problem by assuming that $Q_{W|S}$ is a continuous stochastic optimization algorithm, such as stochastic gradient Langevin dynamics (SGLD) or a continuous approximation of SGD which adds Gaussian noise to the gradients.
However, this creates a disconnect with the practice, where such approaches do not perform at the state-of-the-art.
Our definitions below aim to overcome this disconnect.

\begin{definition}[Smooth sample information]
Let $Q_{W|S}$ be a possibly stochastic algorithm.
Following \cref{eq:si-truly-continuous-algorithm}, we define the \emph{smooth sample information} with smoothing $\Sigma$ as:
\begin{equation}
\boxed{\USI_\Sigma(z_i, Q) = \KL{Q^\Sigma_{W|S=s}}{Q^\Sigma_{W|S=s_{-i}}},}
\label{eq:si-definition}
\end{equation}
where $Q^\Sigma_{W|S}$ denotes the distribution of $W + \xi$ with $W \sim Q_{W|S}$ and $\xi \sim \mathcal{N}(0, \Sigma)$.
\end{definition}
Note that if the algorithm $Q_{W|S}$ is continuous, we can pick $\Sigma \rightarrow 0$, which will make $\USI_\Sigma(z_i, Q) \rightarrow \USI(z_i, Q)$.
The following proposition shows how to compute the value of $\USI_\Sigma$ when $Q_{W|S}$ is deterministic (the most common case in practice).
\begin{proposition}\label{prop:smooth-SI-deterministic}
Let $Q_{W|S}$ be a deterministic training algorithm (i.e., a distribution that puts all the mass on a point $W = A(S)$). We have that
\begin{align}
\USI_\Sigma(z_i, Q) &= \frac{1}{2} (w - w_{-i})^T \Sigma^{-1} (w - w_{-i}),
\label{eq:si-stability-approximation}
\end{align}
where $w=A(s)$ and $w_{-i} = A(s_{-i})$ are the weights obtained by training respectively with and without the training sample $z_i$.
\end{proposition}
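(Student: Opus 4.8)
The plan is to exploit the fact that, for a deterministic algorithm, the Gaussian smoothing turns each conditional law into a single Gaussian, collapsing the definition in \cref{eq:si-definition} into a KL divergence between two Gaussians sharing a common covariance.

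First I would unpack $Q^\Sigma_{W|S}$. Conditioned on $S=s$, the algorithm puts all its mass on the single point $W = A(s) = w$, so the smoothed law $Q^\Sigma_{W|S=s}$ is the law of $w + \xi$ with $\xi \sim \mathcal{N}(0,\Sigma)$, namely $\mathcal{N}(w, \Sigma)$. The same reasoning applied to $s_{-i}$ yields $Q^\Sigma_{W|S=s_{-i}} = \mathcal{N}(w_{-i}, \Sigma)$. This removes the degeneracy that rendered the unsmoothed KL divergence infinite, and reduces $\USI_\Sigma(z_i, Q)$ to $\KL{\mathcal{N}(w,\Sigma)}{\mathcal{N}(w_{-i},\Sigma)}$.

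Second, I would evaluate this via the standard closed form for the KL divergence between multivariate Gaussians: for $\mathcal{N}(\mu_0,\Sigma_0)$ and $\mathcal{N}(\mu_1,\Sigma_1)$ in $\bR^d$ it equals
\[
\frac{1}{2}\left[\log\frac{\det\Sigma_1}{\det\Sigma_0} - d + \trace(\Sigma_1^{-1}\Sigma_0) + (\mu_1-\mu_0)^T\Sigma_1^{-1}(\mu_1-\mu_0)\right].
\]
Substituting $\Sigma_0=\Sigma_1=\Sigma$, $\mu_0=w$, $\mu_1=w_{-i}$, the log-determinant term vanishes and $\trace(\Sigma^{-1}\Sigma)=d$ cancels the $-d$, leaving $\frac{1}{2}(w_{-i}-w)^T\Sigma^{-1}(w_{-i}-w)$; by symmetry of the quadratic form under negation this equals the claimed $\frac{1}{2}(w-w_{-i})^T\Sigma^{-1}(w-w_{-i})$.

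There is essentially no hard step: the result is immediate once the deterministic limit is recognized, precisely because the two smoothed laws differ only in their means, so all covariance-dependent terms cancel. If one prefers to avoid citing the Gaussian KL formula, the only mild bookkeeping is to expand $\log\rbr{\mathrm{d}Q^\Sigma_{W|S=s}/\mathrm{d}Q^\Sigma_{W|S=s_{-i}}}$ directly: the ratio of two Gaussian densities with common $\Sigma$ is an affine function of the integration variable, and taking its expectation under $\mathcal{N}(w,\Sigma)$ produces the stated quadratic form, with the terms quadratic in the smoothing noise canceling exactly because the covariances agree.
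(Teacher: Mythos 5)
Your proof is correct and follows exactly the route the paper takes: observing that smoothing a point mass yields $\mathcal{N}(w,\Sigma)$ and $\mathcal{N}(w_{-i},\Sigma)$, and then applying the closed-form KL divergence between Gaussians with equal covariances. No further comment is needed.
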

That is, the value of $\USI_\Sigma(z_i, Q)$ depends on the distance between the solutions obtained training with and without the sample $z_i$, rescaled by $\Sigma$.
\cref{prop:smooth-SI-deterministic} follows from the fact that the KL divergence between two Gaussian distributions with with means $\mu_1$ and $\mu_2$ and equal covariance matrices $\Sigma_1=\Sigma_2=\Sigma$ is equal to $\frac{1}{2} (\mu_1 - \mu_2)^T \Sigma^{-1} (\mu_1 - \mu_2)$.

The smoothing of the weights by a matrix $\Sigma$ can be seen as a form of soft-discretization.
Rather than simply using an isotropic discretization $\Sigma=\sigma^2I$ -- since different filters have different norms and/or importance for the final output of the network -- it makes sense to discretize them differently.
In \cref{sec:functional-sample-information,sec:linear-networks} we show two canonical choices for $\Sigma$.
One is the inverse of the Fisher information matrix, which discounts weights not used for classification, and the other is the covariance of the steady-state distribution of SGD, which respects the level of SGD noise and flatness of the loss.

\section{Unique information in the predictions}
\label{sec:functional-sample-information}

$\USI_\Sigma(z_i, Q)$ measures how much information an example $z_i$ provides to the weights.
Alternatively, instead of working in weight-space, we can approach the problem in function-space, and measure the informativeness of a training example for the network predictions.
Let $X \sim P_X$ be an independent test example and let $Q_{\widehat{Y} | S, X}$ denote the distribution of the prediction on $X$ after training on $S$.
Following the reasoning in \cref{sec:appendix-approximating-sample-info-with-kl}, we define functional sample information as
\begin{equation}
    \FSI(z_i, Q) = \E_{P_X}\KL{Q_{\widehat{Y}|S=s, X}}{Q_{\widehat{Y} | S=s_{-i}, X}}.
    \label{eq:fsi-truly-continuous}
\end{equation}
Again, when training with a discrete algorithm and/or when the output of the network is deterministic, the above quantity may be infinite.
Similar to smooth sample information, we define:
\begin{definition}[Smooth functional sample information]
Let $Q_{W|S}$ be a possibly stochastic training algorithm and let $Q_{\widehat{Y} | S, X}$ denote the distribution of the prediction on $X$ after training on $S$. We define the \emph{smooth functional sample information} ($\FSI$) as:
\begin{equation}
    \boxed{\FSI_\sigma(z_i, Q) = \E_{P_X}\KL{Q^\sigma_{\widehat{Y}|S=s, X}}{Q^\sigma_{\widehat{Y} | S=s_{-i}, X}}},
    \label{eq:fsi-smoothed}
\end{equation}
where $Q^\sigma_{\widehat{Y}|S, X}$ is the distribution of $\widehat{Y} + \xi$ with $\widehat{Y} \sim Q_{\widehat{Y} | S, X}$ and $\xi \sim \mathcal{N}(0, \sigma^2 I)$.
\end{definition}

The following proposition shows how to compute the value of $\FSI_\Sigma$ when $Q_{W|S}$ is deterministic.

\begin{proposition}\label{prop:smooth-functional-SI-deterministic}
Let $Q_{W|S}$ be a deterministic training algorithm (i.e., a distribution that puts all the mass on a point $W = A(S)$). Let $w = A(s)$ and $w_{-i} = A(s_{-i})$ be the weights obtained training respectively with and without sample $z_i$. Then,
\begin{align}
\FSI_\sigma(z_i, Q) &= \frac{1}{2\sigma^2}  \E_{P_X}\lVert f_w (X) - f_{w_{-i}}(X) \rVert_2^2\label{eq:fsi-difference-of-predictions}.
\end{align}
\end{proposition}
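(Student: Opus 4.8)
The plan is to reduce this to the same Gaussian KL computation that underlies \cref{prop:smooth-SI-deterministic}, now carried out in prediction space and for each fixed test input. First I would observe that, because $Q_{W|S}$ is deterministic, training on $s$ (respectively $s_{-i}$) produces the single weight vector $w = A(s)$ (respectively $w_{-i} = A(s_{-i})$), so for a fixed test point $X$ the prediction distribution $Q_{\widehat{Y}|S=s,X}$ is the point mass at $f_w(X)$ and $Q_{\widehat{Y}|S=s_{-i},X}$ is the point mass at $f_{w_{-i}}(X)$.

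Next I would unfold the smoothing. By definition $Q^\sigma_{\widehat{Y}|S,X}$ is the law of $\widehat{Y} + \xi$ with $\xi \sim \mathcal{N}(0,\sigma^2 I)$ independent of $\widehat{Y}$; convolving a point mass with this Gaussian yields exactly $Q^\sigma_{\widehat{Y}|S=s,X} = \mathcal{N}(f_w(X), \sigma^2 I)$ and $Q^\sigma_{\widehat{Y}|S=s_{-i},X} = \mathcal{N}(f_{w_{-i}}(X), \sigma^2 I)$. These two Gaussians share the covariance $\sigma^2 I$, so I can invoke the closed-form KL divergence between equal-covariance Gaussians already used in \cref{prop:smooth-SI-deterministic}, namely $\KL{\mathcal{N}(\mu_1,\Sigma)}{\mathcal{N}(\mu_2,\Sigma)} = \tfrac{1}{2} (\mu_1-\mu_2)^\top \Sigma^{-1}(\mu_1-\mu_2)$ with $\Sigma = \sigma^2 I$. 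This gives, for each fixed $X$, the conditional divergence $\frac{1}{2\sigma^2}\nbr{f_w(X) - f_{w_{-i}}(X)}_2^2$.

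Finally I would take the expectation over $X \sim P_X$ as prescribed in the definition of $\FSI_\sigma$, which yields $\FSI_\sigma(z_i,Q) = \frac{1}{2\sigma^2}\E_{P_X}\nbr{f_w(X)-f_{w_{-i}}(X)}_2^2$, as claimed. I do not anticipate any genuine obstacle here: the statement is essentially the function-space analogue of \cref{prop:smooth-SI-deterministic}, and the only point requiring a moment of care is confirming that the smoothing and the conditioning on $X$ commute correctly, i.e. that for each fixed $X$ the smoothed predictive law is exactly the stated isotropic Gaussian centered at the deterministic prediction. Once that is pinned down, the result is an immediate per-input specialization of the Gaussian KL identity followed by averaging over $P_X$.
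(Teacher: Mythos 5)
Your proposal is correct and follows exactly the paper's argument: the deterministic algorithm makes each conditional prediction law a point mass, smoothing turns these into equal-covariance isotropic Gaussians, and the standard formula $\KL{\mathcal{N}(\mu_1,\Sigma)}{\mathcal{N}(\mu_2,\Sigma)} = \tfrac{1}{2}(\mu_1-\mu_2)^\top\Sigma^{-1}(\mu_1-\mu_2)$ followed by the expectation over $P_X$ gives the result. The paper proves this proposition in one line by the same appeal to the Gaussian KL formula used for \cref{prop:smooth-SI-deterministic}, so there is nothing to add.
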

As \cref{prop:smooth-SI-deterministic}, this proposition also follows directly from formula of KL divergence between two Gaussians.

By using first-order Taylor approximation of $f_w(x)$ with respect to $w$ and assuming that $\nabla_w f_w(x) \approx \nabla_{w_{-i}}f_{w_{-i}}(x)$, we can approximate smooth functional sample information as follows:
\begin{align}
\FSI_\sigma(z_i, A) &= \frac{1}{2\sigma^2}  \E_{P_X}\lVert f_w (X) - f_{w_{-i}}(X) \rVert_2^2\\
&\approx \frac{1}{2} (w-w_{-i})^T \E_{P_X} \left[ \nabla_w f_w(X) \nabla_w f_w(X)^T \right](w-w_{-i})\label{eq:fsi-fisher-approx-intermediate-step} \\
&=\frac{1}{2\sigma^2} (w-w_{-i})^T F(w) (w-w_{-i}),
\label{eq:fsi-fisher-approx}
\end{align}
with $F(w) = \E_{P_X} \left[ \nabla_w f_w(X) \nabla_w f_w(X)^T\right]$ being the Fisher information matrix of $Q^{\sigma=1}_{\widehat{Y} | S, X}$.
By comparing \cref{eq:si-stability-approximation} and \cref{eq:fsi-fisher-approx}, we see that the functional sample information is approximated by using the inverse of the Fisher information matrix to smooth the weight space. However, this smoothing is not isotropic as it depends on the point $w$.

\section{Exact solution for linearized networks}
\label{sec:linear-networks}
In this section, we derive a close-form expression for $\USI_\Sigma$ and $\FSI_\sigma$ using a linear approximation of the network around the initial weights. We show that this approximation can be computed efficiently and, as we validate empirically in \cref{sec:unique-info-experiments}, correlates well with the actual informativeness values. We also show that the covariance matrix of SGD's steady-state distribution is a canonical choice for the smoothing matrix $\Sigma$ of $\USI_\Sigma$.

\paragraph{Linearized Network.}
Linearized neural networks are a class of neural networks obtained by taking the first-order Taylor expansion of a DNN around the initial weights~\citep{jacot2018ntk, lee2019wide}:
\begin{equation}
f^\mathrm{lin}_w(x) \triangleq f_{w_0}(x) + \nabla_w f_w(x)^T|_{w=w_0} (w - w_0).
\end{equation}
These networks are linear with respect to their parameters $w$, but can be highly non-linear with respect to their input $x$.
Continuous-time gradient descent dynamics of linearized neural networks is
\begin{equation}
\dot w_t = -\eta \left(\nabla_{f_t^\mathrm{lin}(\bs{x})} \LL\right)^T \nabla_w f_0(\bs{x}),
\label{eq:cont-time-gd-linearized}
\end{equation}
where $\eta > 0$ is the learning rate.
One of the advantages of linearized neural networks is that the dynamics of continuous-time or discrete-time gradient descent can be written analytically if the loss function is the mean squared error (MSE).
In particular, for the continuous-time gradient descent of \cref{eq:cont-time-gd-linearized}, we have that \citep{lee2019wide}:
\begin{align}
w_t &= \nabla_w f_0(\bs{x}) \Theta_0^{-1} \left(I - e^{-\eta \Theta_0 t}\right) (f_0(\bs{x}) - \bs{y}),\label{eq:lin-weights-at-t} \\
f^\mathrm{lin}_t(x) &= f_0(x) + \Theta_0(x, \bs{x}) \Theta_0^{-1} \left(I - e^{-\eta \Theta_0 t}\right) (\bs{y} - f_0(\bs{x})), \label{eq:lin-preds-at-t}
\end{align}
where $\Theta_0 = \nabla_w f_0(\bs{x})^T\nabla_w f_0(\bs{x}) \in \mathbb{R}^{nk \times nk}$ is the Neural Tangent Kernel (NTK) \citep{jacot2018ntk, lee2019wide} and $\Theta_0(x, \bs{x}) = \nabla_w f_0(x)^T \nabla_w f_0(\bs{x})$.
Note that the NTK matrix will generally be invertible for overparametrized neural networks ($d \gg nk$).
Analogs of equations \cref{eq:lin-weights-at-t,eq:lin-preds-at-t} for discrete time can be derived by replacing $e^{-\eta t \Theta_0}$ with $(I - \eta \Theta_0)^t$.
The expressions for networks trained with weight decay is essentially the same (see \cref{sec:wd-linearized}).
Namely, when weight decay of form $\frac{\lambda}{2} \lVert w - w_0\rVert_2^2$ is added, the only change that happens to \cref{eq:lin-weights-at-t,eq:lin-preds-at-t} is that $\Theta_0$ gets replaced by $(\Theta_0 + \lambda I)$.
To keep the notation simple, we will use  $f_w(x)$ to indicate  $f^\mathrm{lin}_w(x)$ from now on.

\paragraph{Stochastic Gradient Descent.}
As mentioned in \cref{sec:smoothed-sample-information}, a popular alternative approach to make information quantities well-defined is to use continuous-time SGD~\citep{li2017stochastic, mandt2017stochastic}, which is defined as
\begingroup
\setlength{\abovedisplayskip}{3pt}
\setlength{\belowdisplayskip}{3pt}
\begin{align}
d w_t = -\eta \nabla_w \LL_w(w_t) dt + \eta \sqrt{\frac{1}{b} \Lambda(w_t)} dn(t),
\label{eq:SGD-SDE-plain}
\end{align}
\endgroup
where $\eta$ is the learning rate, $b$ is the batch size, $n(t)$ is a Brownian motion, and $\Lambda(w_t)$ is the covariance matrix of the per-sample gradients (see \cref{sec:sgd-noise-cov} for details).
Let $Q^\mathrm{SGD}$ be the algorithm that returns a random sample from the steady-state distribution of \cref{eq:SGD-SDE-plain}, and let $Q^\mathrm{ERM}$ be the deterministic algorithm that returns the global minimum $w^*$ of the loss $\LL(w)$ (for a regularized linearized network $\LL(w)$ is strictly convex). 
We now show that the non-smooth sample information $\USI(z_i, Q^\mathrm{SGD})$ is the same as the smooth sample information using SGD's steady-state covariance as the smoothing matrix and $Q^\mathrm{ERM}$ as the training algorithm.

\begin{proposition}\label{prop:equality-of-non-smooth-and-smooth-SIs}
\begingroup
\setlength{\abovedisplayskip}{3pt}
\setlength{\belowdisplayskip}{3pt}
Let the loss function be regularized MSE, $w^*$ be the global minimum of it, and algorithms $Q^\mathrm{SGD}$ and $Q^\mathrm{ERM}$ be defined as above. Assuming $\Lambda(w)$ is approximately constant around $w^*$ and SGD's steady-state covariance remains constant after removing an example, we have
\begin{align}
\USI(z_i, Q^\mathrm{SGD}) = \USI_\Sigma(z_i, Q^\mathrm{ERM}) = \frac{1}{2} (w^* - w^*_{-i})^T \Sigma^{-1} (w^* - w^*_{-i}),
\label{eq:equality-of-non-smooth-and-smooth-SIs}
\end{align}
where $\Sigma$ is the solution of
\begin{align}
    H \Sigma + \Sigma H^T &= \frac{\eta}{b}\Lambda(w^*), \label{eq:lyapunov}
\end{align}
with $H = (\nabla_w f_0(\bs{x}) \nabla_w f_0(\bs{x})^T + \lambda I)$ being the Hessian of the loss function.
\endgroup
\end{proposition}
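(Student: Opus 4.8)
The plan is to exploit the fact that, for a regularized linearized network under MSE loss, the training objective $\LL(w)$ is a strictly convex quadratic, so the continuous-time SGD dynamics of \cref{eq:SGD-SDE-plain} reduce to an Ornstein--Uhlenbeck (OU) process whose stationary law is Gaussian. Once I establish that $Q^\mathrm{SGD}_{W|S=s}$ equals $\mathcal{N}(w^*, \Sigma)$, both equalities follow from the closed-form KL divergence between two Gaussians sharing a covariance. First I would write $\nabla_w \LL(w) = H(w - w^*)$, using that $H = \nabla_w f_0(\bs{x})\nabla_w f_0(\bs{x})^T + \lambda I$ is the constant Hessian of the quadratic loss and $w^*$ its unique minimizer.

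The core step is the stationary-distribution computation. Substituting this gradient into \cref{eq:SGD-SDE-plain} and setting $u_t = w_t - w^*$ gives the linear SDE
\begin{equation}
du_t = -\eta H u_t\, dt + \eta \sqrt{\tfrac{1}{b}\Lambda(w^*)}\, dn(t),
\end{equation}
which is an OU process under the assumption that $\Lambda(w)$ is approximately constant near $w^*$. I would then track the mean and covariance separately: the mean obeys $\dot{\bar u}_t = -\eta H \bar u_t$, so it relaxes to $0$ and the stationary mean is $w^*$; the covariance obeys $\dot\Sigma_t = -\eta H \Sigma_t - \eta \Sigma_t H^T + \tfrac{\eta^2}{b}\Lambda(w^*)$, so the stationary covariance satisfies $\eta H \Sigma + \eta \Sigma H^T = \tfrac{\eta^2}{b}\Lambda(w^*)$, which is precisely the Lyapunov equation \cref{eq:lyapunov} after dividing by $\eta$. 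This identifies $Q^\mathrm{SGD}_{W|S=s} = \mathcal{N}(w^*, \Sigma)$ with $\Sigma$ exactly as claimed. I expect this OU-stationarity argument to be the main obstacle, since it is the only step that is not a direct substitution; it can be discharged either by invoking the standard OU stationary-law result or by verifying that the associated Fokker--Planck operator annihilates the Gaussian density with the Lyapunov covariance.

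With the stationary law in hand, the two equalities are immediate. Removing $z_i$ yields the analogous quadratic objective with minimizer $w^*_{-i}$, and the assumption that the steady-state covariance is unchanged gives $Q^\mathrm{SGD}_{W|S=s_{-i}} = \mathcal{N}(w^*_{-i}, \Sigma)$. Applying the equal-covariance Gaussian KL formula $\KL{\mathcal{N}(\mu_1, \Sigma)}{\mathcal{N}(\mu_2, \Sigma)} = \tfrac{1}{2}(\mu_1 - \mu_2)^T \Sigma^{-1}(\mu_1 - \mu_2)$ to the definition \cref{eq:si-truly-continuous-algorithm} gives
\begin{equation}
\USI(z_i, Q^\mathrm{SGD}) = \KL{\mathcal{N}(w^*, \Sigma)}{\mathcal{N}(w^*_{-i}, \Sigma)} = \tfrac{1}{2}(w^* - w^*_{-i})^T \Sigma^{-1}(w^* - w^*_{-i}).
\end{equation}
For the middle term, $Q^\mathrm{ERM}$ is deterministic and returns $w^*$ (resp. $w^*_{-i}$), so \cref{prop:smooth-SI-deterministic} with the same smoothing matrix $\Sigma$ yields the identical quadratic form for $\USI_\Sigma(z_i, Q^\mathrm{ERM})$. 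Chaining the two computations establishes \cref{eq:equality-of-non-smooth-and-smooth-SIs} and completes the argument.
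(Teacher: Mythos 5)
Your proposal is correct and follows essentially the same route as the paper: identify the SGD steady state as $\mathcal{N}(w^*,\Sigma)$ with $\Sigma$ solving the Lyapunov equation, then apply the Gaussian KL formula under the assumption $\Sigma_{-i}=\Sigma$ and match it to $\USI_\Sigma(z_i,Q^\mathrm{ERM})$ via \cref{prop:smooth-SI-deterministic}. The only cosmetic difference is that you derive the stationary law from the Ornstein--Uhlenbeck moment ODEs, whereas the paper verifies it by checking that the Gaussian density satisfies the Fokker--Planck equation --- an alternative you yourself note as equivalent.
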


This proposition motivates the use of SGD's steady-state covariance as a smoothing matrix.
From \cref{eq:equality-of-non-smooth-and-smooth-SIs,eq:lyapunov} we see that SGD's steady-state covariance is proportional to the flatness of the loss at the minimum, the learning rate, and to SGD's noise, while inversely proportional to the batch size.
When $H$ is positive definite, as in our case when using weight decay, the continuous Lyapunov equation (\ref{eq:lyapunov}) has a unique solution,
which can be found in $O(d^3)$ time using the Bartels-Stewart algorithm~\citep{bartles1972}.
One particular case when the solution can be found analytically is when $\Lambda(w^*)$ and $H$ commute, in which case $\Sigma = \frac{\eta}{2b} \Lambda H^{-1}$.
For example, this is the case for Langevin dynamics, for which $\Lambda(w) = \sigma^2 I$ in \cref{eq:SGD-SDE-plain}.
In this case, we have
\begin{equation}
    \USI(z_i, Q^\mathrm{SGD}) = \USI_\Sigma(z_i, Q^\mathrm{ERM}) = \frac{b}{\eta \sigma^2} (w^* - w^*_{-i})^T H (w^* - w^*_{-i}),
\label{eq:weights-stability-diagonal-noise}
\end{equation}
which was already suggested by \citet{cook1977detection} as a way to measure the importance of a datum in linear regression.

\paragraph{Functional Sample Information.}
The definition in \Cref{sec:functional-sample-information}  simplifies for linearized neural networks: The step from \cref{eq:fsi-difference-of-predictions} to \cref{eq:fsi-fisher-approx} becomes exact,  and the Fisher information matrix becomes independent of $w$ and equal to $F = \E_{P_X}\left[ \nabla_w f_0(X) \nabla_w f_0(X)^T\right]$.
This shows that functional sample information can be seen as weight sample information with smoothing covariance $\Sigma=F^{-1}$.
The functional sample information depends on the data distribution $P_X$, which is usually unknown.
We can estimate $\FSI$ using a test set:
\begin{align}
    \FSI_\sigma(z_i, Q) &\approx \frac{1}{2 \sigma^2  n_\mathrm{test}} \sum_{j=1}^{n_\mathrm{test}} \left\lVert f_w(x_j^\mathrm{test}) - f_{w_{-i}}(x_j^\mathrm{test})\right\rVert\label{eq:fsi-pred-diff-empirical}\\
    &= \frac{1}{2 \sigma^2 n_\mathrm{test}} (w-w_{-i})^T \nabla f_0(\bs{x}_\mathrm{test}) \nabla f_0(\bs{x}_\mathrm{test})^T (w-w_{-i})\\
    &= \frac{1}{2 \sigma^2 n_\mathrm{test}} (w-w_{-i})^T (H_\mathrm{test}-\lambda I) (w-w_{-i}).\label{eq:fsi-empirical}
\end{align}
It is instructive to compare the sample weight information of \cref{eq:weights-stability-diagonal-noise} and functional sample information of \cref{eq:fsi-empirical}.
Besides the constants, the former uses the Hessian of the training loss, while the latter uses the Hessian of the test loss (without the $\ell_2$ regularization term).
One advantage of the latter is computational cost: As demonstrated in the next section,
we can use \cref{eq:fsi-pred-diff-empirical} to compute the prediction information, entirely in the function space, without any costly operation on weights.
For this reason, we focus on the linearized F-SI approximation in our experiments.
Since $\sigma^{-2}$ is just a multiplicative factor in \cref{eq:fsi-empirical} we set $\sigma=1$.
We also focus on the case where the training algorithm $A$ is discrete gradient descent running for $t$ epochs (\cref{eq:lin-weights-at-t,eq:lin-preds-at-t}).

\paragraph{Efficient Implementation.}
To compute the proposed sample information measures for linearized neural networks, we need to compute the change in weights $w - w_{-i}$ (or change in predictions $f_w(x) - f_{w_i}(x)$) after excluding an example from the training set.
This can be done without retraining using the analytical expressions of weight and prediction dynamics of linearized neural networks \cref{eq:lin-weights-at-t,eq:lin-preds-at-t}, which also work when the algorithm has not yet converged ($t<\infty$).
We now describe a series of measures to make the problem tractable.  First, to compute the NTK matrix we would need to store the Jacobian $\nabla f_0(x_i)$ of all training points and compute $\nabla_w f_0(\bs{x})^T \nabla_w f_0(\bs{x})$.
This is prohibitively slow and memory consuming for large DNNs.
Instead, similarly to \citet{zancato2020predicting}, we use low-dimensional random projections of per-example Jacobians to obtain provably good approximations of dot products~\citep{achlioptas2003database, li2006very}.
We found that just taking $2000$ random weights coordinates per layer provides a good enough approximation of the NTK matrix.
Importantly, we consider each layer separately, as different layers may have different gradient magnitudes.
With this method, computing the NTK matrix takes $O(nkd + n^2k^2 d_0)$ time, where $d_0 \approx 10^4$ is the number of subsampled weight indices ($d_0 \ll d$).
We also need to recompute $\Theta_0^{-1}$ after removing an example from the training set. This can be done in quadratic time by using rank-one updates of the inverse (see \cref{sec:inverse-update}).
Finally, when $t \neq \infty$ we need to recompute $e^{-\eta \Theta_0 t}$ after removing an example. This can be done in $O(n^2 k^2)$ time by downdating the eigendecomposition of $\Theta_0$~\citep{gu1995downdating}.
Overall, the complexity of computing $w-w_i$ for all training examples is $O(n^2 k^2 d_0 + n (n^2 k^2 + C))$, $C$ is the complexity of a single pass over the training dataset.
The complexity of computing functional sample information for $m$ test samples is $O(C + n m k^2 d_0 + n (mnk^2 + n^2 k^2))$.
This depends on the network size lightly, only through $C$.

\section{Experiments}\label{sec:unique-info-experiments}
In this section, we test the validity of linearized network approximation in terms of estimating the effects of removing an example and show several applications of the proposed information measures.

\subsection{Accuracy of the linearized network approximation}
\begin{table}[t]
    \centering
    \small
    \caption{Pearson correlations of weight change $\lVert w - w_{-i}\rVert_2^2$ and test prediction change $\lVert f_w(\bs{x}_\text{test}) - f_{w_{-i}}(\bs{x}_\text{test}) \lVert_2^2$ norms computed with influence functions and linearized neural networks with their corresponding measures computed for standard neural networks with retraining.}
    \begin{tabular}{lllccccc}
    \toprule
     & Reg. & Method & MNIST MLP & \multicolumn{2}{c}{MNIST CNN} & \multicolumn{2}{c}{Cats and Dogs}\\
    \cmidrule{4-6}\cmidrule{7-8}
     &  &  & scratch & scratch & pretrained & pr. ResNet-18 & pr. ResNet-50 \\
    \midrule
    \multirow{4}{*}{\rotatebox[origin=c]{90}{weights}} & \multirow{2}{*}{$\lambda=0$} & Linearization & \thesame{0.987} & 0.193 & \thesame{0.870} & \thesame{0.895} & \thesame{0.968}\\
    & & Infl. functions & 0.935 & \thesame{0.319} & 0.736 & 0.675 & 0.897\\
    \cmidrule{2-8}
    & \multirow{2}{*}{$\lambda=10^3$} & Linearization & \thesame{0.977} & -0.012 & 0.964 & \thesame{0.940} & 0.816\\
    & & Infl. functions & \thesame{0.978} & \thesame{0.069} & \thesame{0.979} & 0.858 & \thesame{0.912}\\
    \midrule
    \multirow{4}{*}{\rotatebox[origin=c]{90}{predictions}} & \multirow{2}{*}{$\lambda=0$} & Linearization & \thesame{0.993} & 0.033 & \thesame{0.875} & \thesame{0.877} & \thesame{0.895}\\
    & & Infl. functions & 0.920 & \thesame{0.647} & 0.770 & 0.530 & 0.715\\
    \cmidrule{2-8}
    & \multirow{2}{*}{$\lambda=10^3$} & Linearization & \thesame{0.993} & 0.070 & \thesame{0.974} & \thesame{0.931} &  \thesame{0.519}\\
    & & Infl. functions & \thesame{0.990} & \thesame{0.407} & 0.954 & 0.753 & 0.506 \\
    \bottomrule
    \end{tabular}
    \label{tab:ground-truth-correlations}
\end{table}

\begin{table}[th]
    \centering
    \small
    \caption{Details of experiments presented in \cref{tab:ground-truth-correlations}. For influence functions, we add a dumping term with magnitude 0.01 whenever $\ell_2$ regularization is not used ($\lambda = 0$).}
    \begin{tabular}{lll}
    \toprule
    Experiment & Method & Details\\
    \midrule
    \multirow{3}{*}{MNIST MLP (scratch)} & Brute force & 2000 epochs, learning rate $=$ 0.001, batch size $=$ 1000\\
    & Infl. functions & LiSSA algorithm, 1000 recursion steps, scale $=$ 1000 \\
    & Linearization & $t=2000$, learning rate $=$ 0.001\\
    \midrule
    \multirow{3}{*}{MNIST CNN (scratch)} & Brute force & 1000 epochs, learning rate $=$ 0.01, batch size $=$ 1000\\
    & Infl. functions & LiSSA algorithm, 1000 recursion steps, scale $=$ 1000 \\
    & Linearization & $t=1000$, learning rate $=$ 0.01\\
    \midrule
    \multirow{3}{*}{MNIST CNN (pretrained)} & Brute force & 1000 epochs, learning rate $=$ 0.002, batch size $=$ 1000\\
    & Infl. functions & LiSSA algorithm, 1000 recursion steps, scale $=$ 1000 \\
    & Linearization & $t=1000$, learning rate $=$ 0.002\\
    \midrule
    \multirow{3}{*}{Cats and dogs} & Brute force & 500 epochs, learning rate $=$ 0.001, batch size $=$ 500\\
    & Infl. functions & LiSSA algorithm, 50 recursion steps, scale $=$ 1000 \\
    & Linearization & $t=1000$, learning rate $=$ 0.001\\
    \bottomrule
    \end{tabular}
    \label{tab:ground-truth-details}
\end{table}

We measure $\lVert w - w_{-i} \rVert_2^2$ and $\E_\rVert f_w(X_\mathrm{test}) - f_{w_{-i}}(X_\mathrm{test})\rVert_2^2$ for each sample $z_i$ by training with and without that example. Then, instead of retraining, we use the efficient linearized approximation to estimate the same quantities and measure their correlation with the ground-truth values (\cref{tab:ground-truth-correlations}). For comparison, we also estimate these quantities using influence functions~\citep{koh2017understanding}. We consider two classification tasks: (a) a toy MNIST 4 vs 9 classification task and (b) Kaggle cats vs dogs classification task \citep{dogsvscats}, both with 1000 training examples.
For MNIST we consider a fully connected network with a single hidden layer of 1024 ReLU units (MLP) and a small 5-layer convolutional neural network (the same as in \cref{tab:4-layer-cnn-arch} but with one output unit), either trained from scratch or pretrained on EMNIST letters~\citep{cohen2017emnist}.
For cats vs dogs classification, we consider  ResNet-18 and ResNet-50 networks~\citep{he2016deep} pretrained on ImageNet. In both tasks, we train both with and without weight decay  ($\ell_2$ regularization).
In all our experiments, when using pretrained ResNets, we disable the exponential averaging of batch statistics in batch norm layers.
For computing functional sample information we use tests sets consisting of 1000 examples for both MNIST and cats vs dogs.
The exact details of running influence functions and linearized neural network predictions are presented in \cref{tab:ground-truth-details}.

The results in \cref{tab:ground-truth-correlations} shows that linearized approximation correlates well with ground-truth when the network is wide enough (MLP) and/or pretraining is used (CNN with pretraining and pretrained ResNets).
This is expected, as wider networks can be approximated with linearized ones better~\citep{lee2019wide}, and pretraining decreases the distance from initialization, making the Taylor approximation more accurate.
Adding regularization also keeps the solution close to initialization, and generally increases the accuracy of the approximation.
Furthermore, in most cases linearization gives better results compared to influence functions, while also being around 30 times faster in our settings.

\subsection{Which examples are informative?}
\begin{figure}
    \centering
    \includegraphics[width=\textwidth]{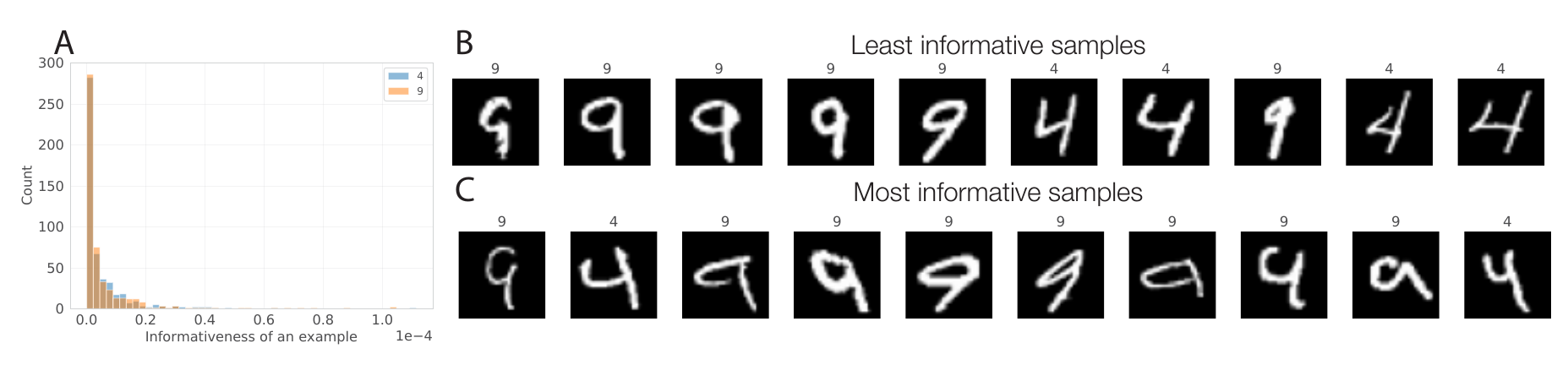}
    \includegraphics[width=\textwidth]{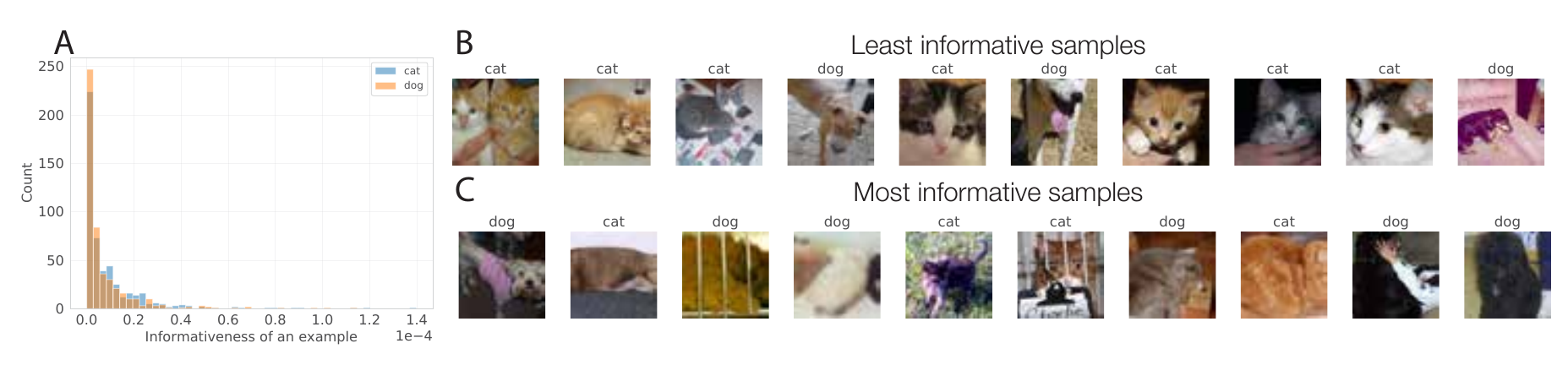}
    \includegraphics[width=\textwidth]{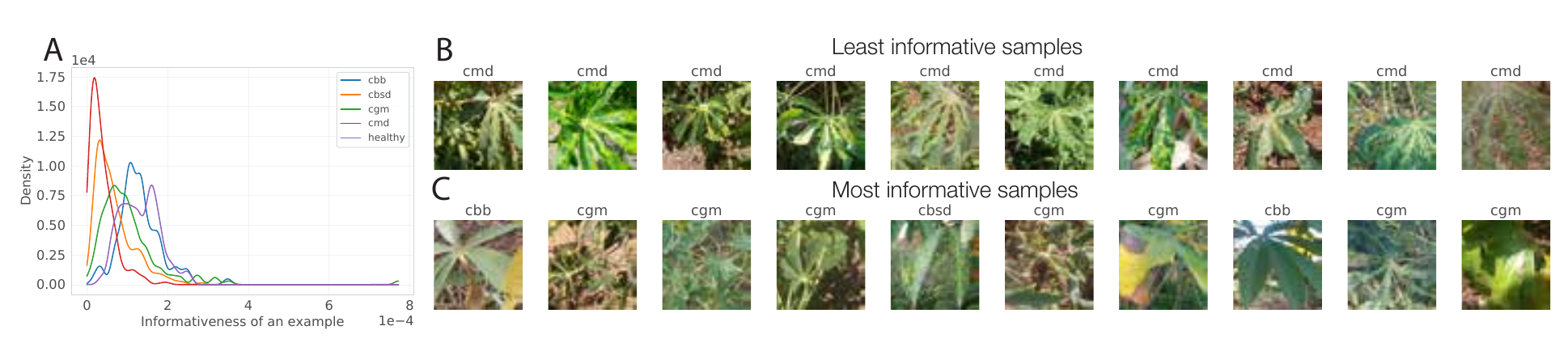}
    \caption{Functional sample information of samples in MNIST 4 vs 9 classification task (top), Kaggle cats vs dogs (middle), and iCassava (bottom) classification tasks.
    \textbf{A}: histogram of sample informations, \textbf{B}: 10 least informative samples, \textbf{C}: 10 most informative samples.}
    \label{fig:informative-examples-summary}
\end{figure}

To understand which examples are informative, we start with analyzing the top 10 least and most informative examples in MNIST 4 vs 9, Kaggle cats vs dogs, and iCassava plant disease classification~\citep{mwebaze2019icassava} tasks.
For MNIST 4 vs 9, we use the MLP of the previous subsection, setting $t=2000$ and $\eta=0.001$.
For cats vs dogs classification we use ResNet-18 and set $t=1000$, $\eta=0.001$.
Finally, for iCassava with subsample a training set of 1000 examples, use an ImageNet-pretrained ResNet-18 and set $t=5000, \eta=0.0003$.

The results presented in \cref{fig:informative-examples-summary} indicate that in all datasets majority of the examples are not highly informative.
Furthermore, especially in the cases of MNIST 4 vs 9 and Kaggle cats vs dogs, we see that the least informative samples look typical and easy, while the most informative ones look more challenging and atypical.
In the case of iCassava, the informative samples are more zoomed on features that are important for classification (e.g., the plant disease spots).
We observe that most samples have small unique information, possibly because they are easier or because the dataset may have many similar-looking examples.
While in the case of MNIST 4 vs 9 and cats vs dogs, the two classes have on average similar information scores, in Fig.~1a we see that in iCassava examples from rare classes (such as `healthy' and `cbb') are on average more informative.
For all tasks, it is true that most of the examples are relatively not informative.

\paragraph{Mislabeled examples.}
\begin{figure}[t]
    \centering
    \begin{subfigure}{0.45\textwidth}
    \includegraphics[width=1.0\textwidth]{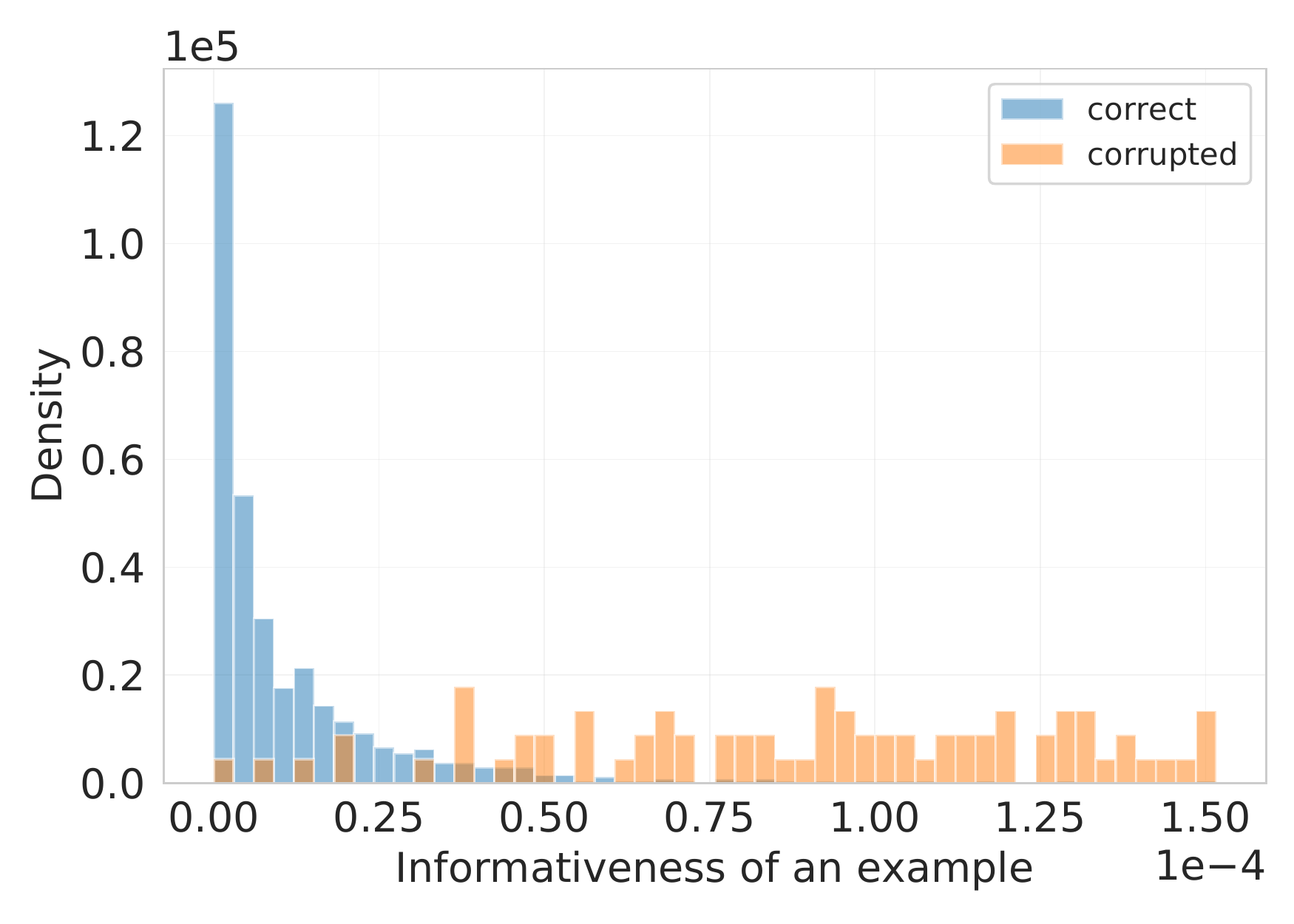}
    \caption{Kaggle cats vs dogs}
    \end{subfigure}
    \hspace{1em}
    \begin{subfigure}{0.45\textwidth}
    \includegraphics[width=1.0\textwidth]{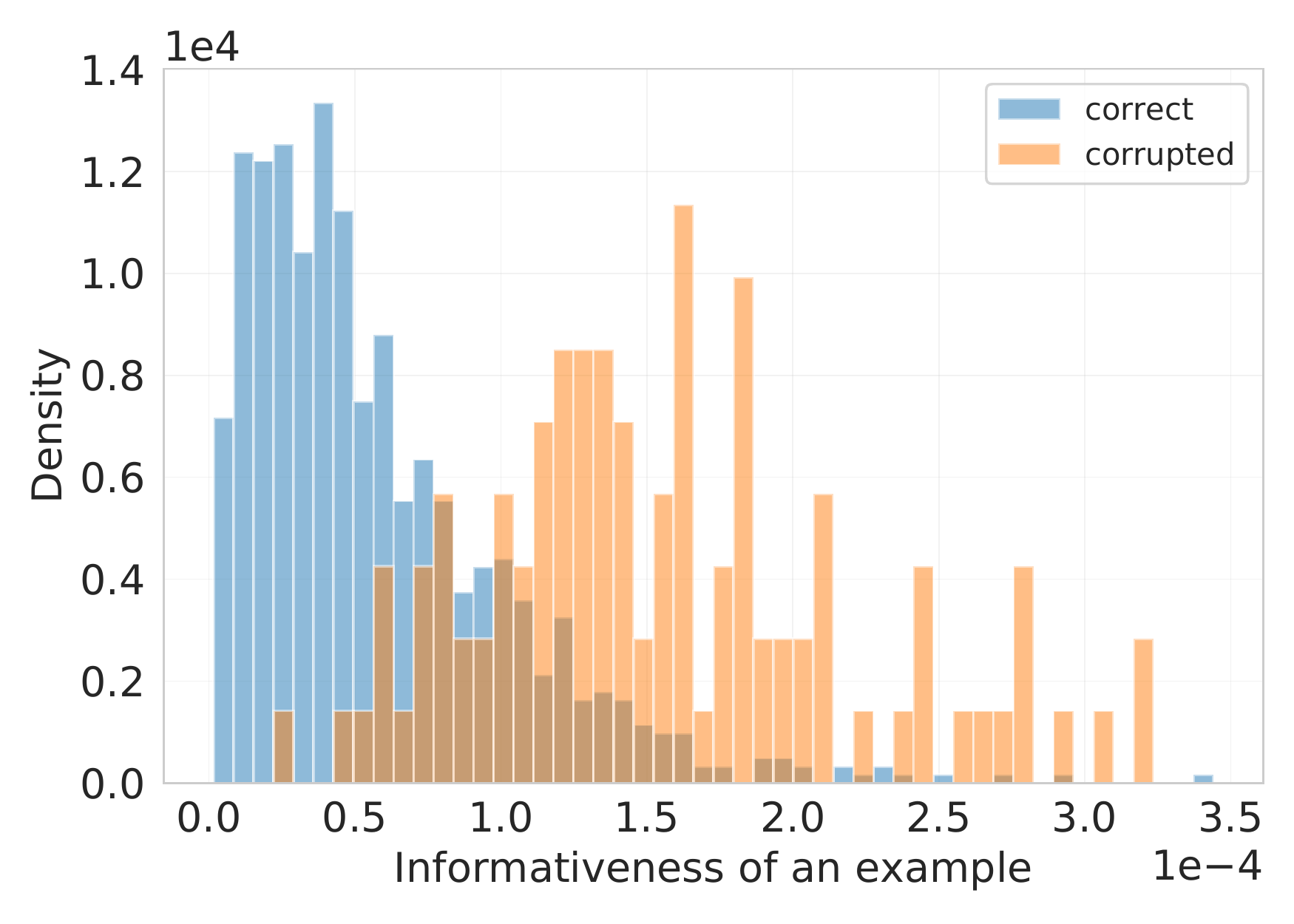}
    \caption{iCassava}
    \end{subfigure}%
    \caption{Comparison of functional sample information of examples with correct and incorrect labels.}
    \label{fig:unique-info-detecting-mislabeled}
\end{figure}
We expect a mislabeled example to carry more unique information, since the network needs to memorize unique features of that particular example to classify it.
To test this, we add 10\% uniform label noise to Kaggle cats vs dogs and iCassava classification tasks (both with 1000 examples in total), while keeping the test sets clean.
For both datasets with use a ResNet-19 pretrained on ImageNet.
We set $t=1000, \eta=0.001$ for cats vs dogs classification and 
$t=10000 \eta = 0.0001$ for the iCassava task.
\cref{fig:unique-info-detecting-mislabeled} presents the histogram of functional sample information for both correct and mislabeled examples for cats vs dogs and iCassava, respectively.
The results indicate that mislabeled examples are indeed much more informative on average.
This suggests that the proposed informativeness measure can be used to detect outliers or corrupted examples.

\paragraph{Harder examples.}
\begin{figure}[t]
  \centering
  \begin{minipage}[b]{0.45\linewidth}
    \centering
    \includegraphics[width=\textwidth]{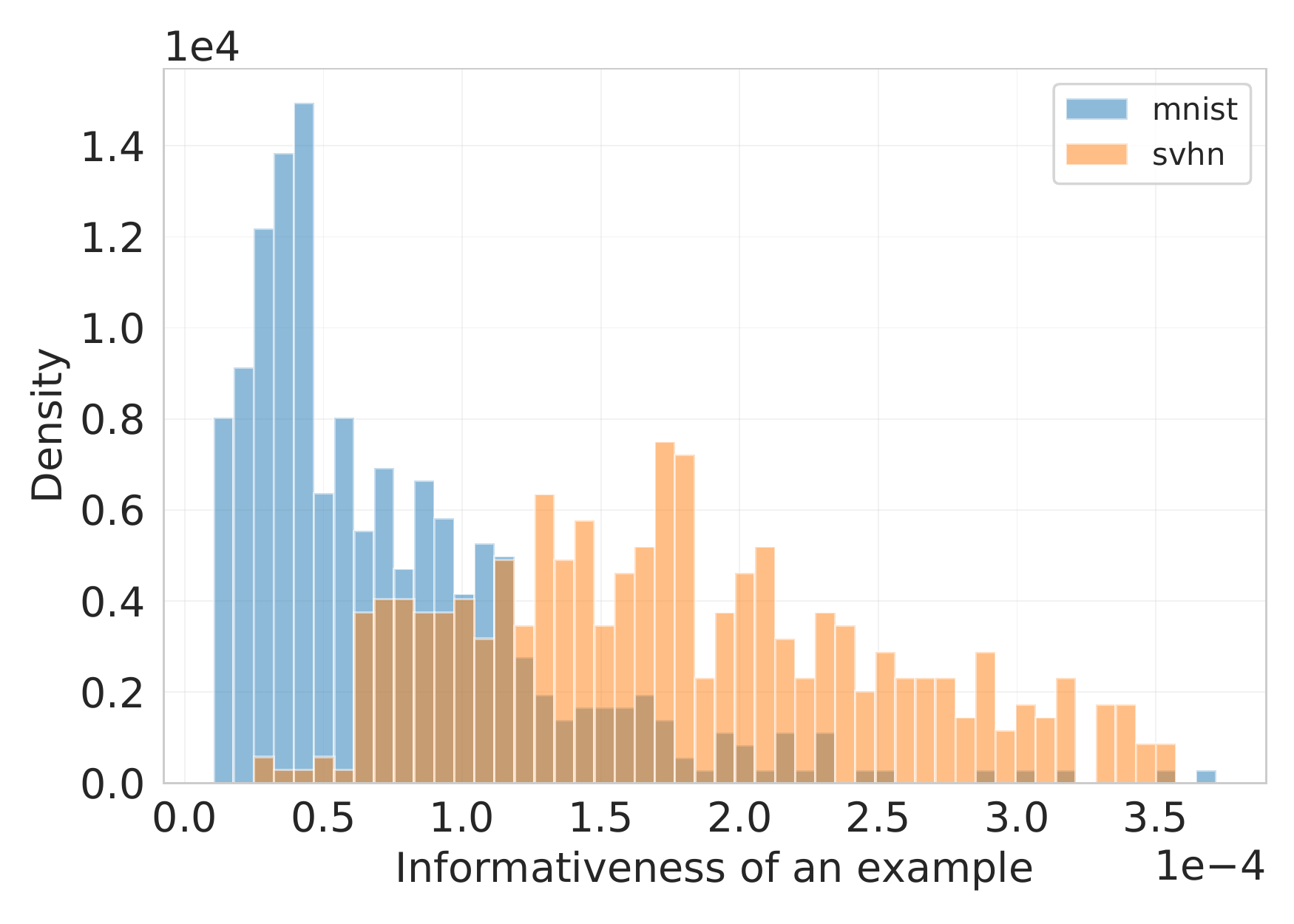}
    \caption{Comparison of functional sample information of MNIST and SVHN examples in the context of a joint digit classification task with equal number of examples per dataset.}
    \label{fig:mnist-svhn}
  \end{minipage}
  \hspace{1em}
  \begin{minipage}[b]{0.45\linewidth}
    \centering
    \includegraphics[width=\textwidth]{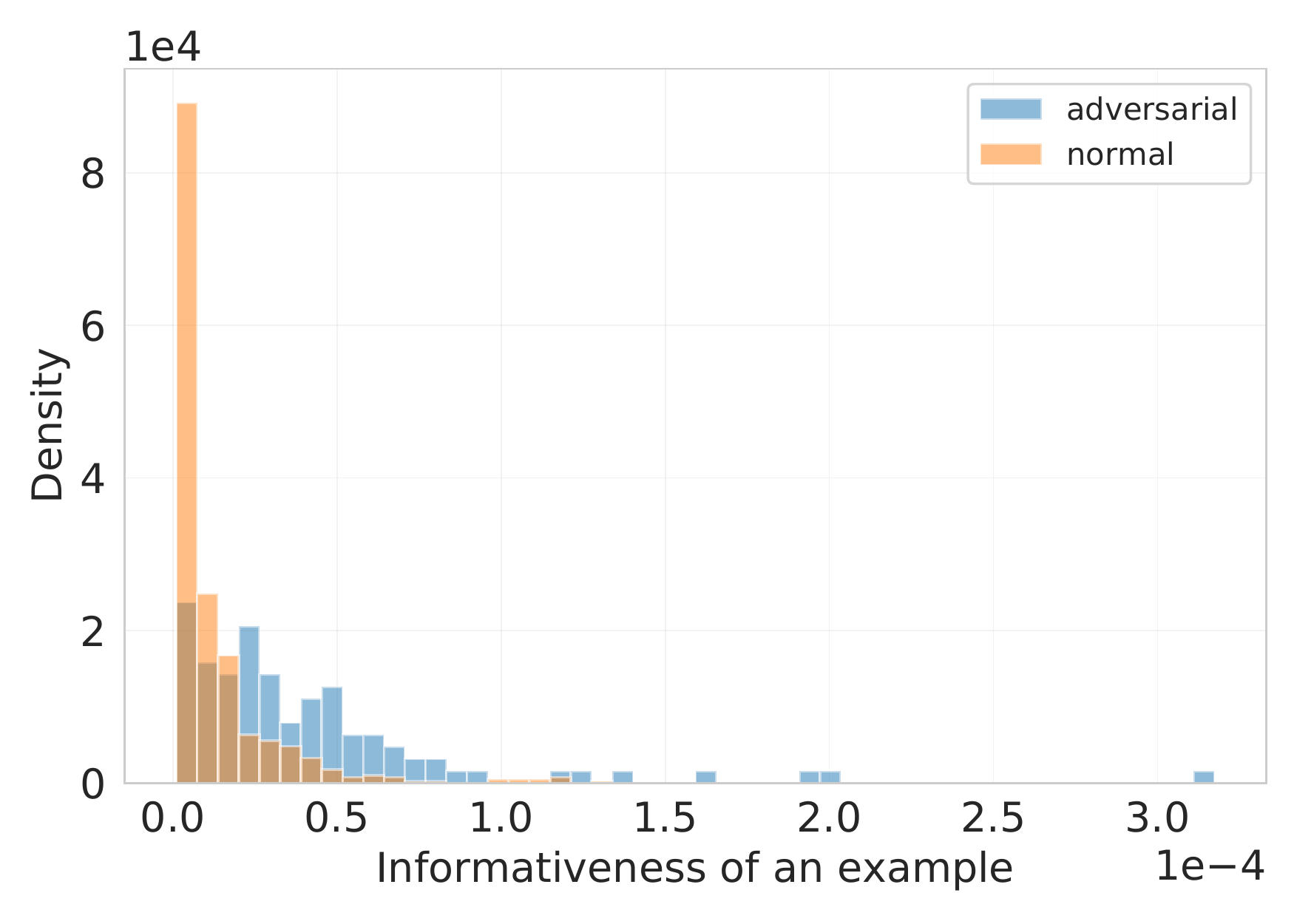}
    \caption{Histogram of the functional sample information of samples of the Kaggle cats vs dogs classification task, where 10\% of examples are adversarial.}
    \label{fig:adversarial}
  \end{minipage}
\end{figure}
After examining the top 10 most and least informative examples in \cref{fig:informative-examples-summary}, we hypothesize that ``more challenging'' examples are more informative.
To test this hypothesis, we train a 10-way digit classification on a dataset consisting of randomly selected 500 examples from MNIST and randomly selected 500 examples from SVHN.
Examples from SVHN are colored and have more variations.
Therefore, for the purpose of this experiment we consider them harder examples.
We use an ImageNet-pretrained ResNet-18 and set $t=20000,\eta=0.0001$.
We compute functional sample information of training examples with respect to a test set consisting of 500 MNIST and 500 SVHN unseen examples.
The results presented in \cref{fig:mnist-svhn} demonstrate that, as expected, SVHN examples are more informative than MNIST examples.

\paragraph{Adversarial examples.}
It is also natural to expect that training examples closer to the decision boundary of a neural network are have probably been informative in the training of that same neural network.
Perhaps such examples can be also informative in training of other neural networks.
To test this hypothesis, we consider creating adversarial examples~\citep{adv-examples} for one neural network and measuring their informativeness with respect to another neural network.
In particular, we fine-tune a pretrained ResNet-18 on 1000 examples from the cats vs dogs dataset.
We then use the FGSM method \citep{adv-training} with $\epsilon = 0.01$ to create successful adversarial examples.
Next, for the 10\% of examples, we replace the original images with the corresponding adversarial ones (keeping the original correct label), and fine-tune a new pretrained ResNet-18.
Finally, we compute functional sample information of all examples of the modified training set, setting $t=1000$ and $\eta=0.001$.
The results reported in \cref{fig:adversarial} confirm that adversarial examples are on average more informative.
This partly explains the findings of \citet{adv-training} who show that adversarial training (i.e., adding adversarial examples to the training dataset) improves adversarial robustness and generalization.

\begin{figure}[t]
  \centering
  \begin{minipage}[b]{0.45\linewidth}
    \centering
    \includegraphics[width=\textwidth]{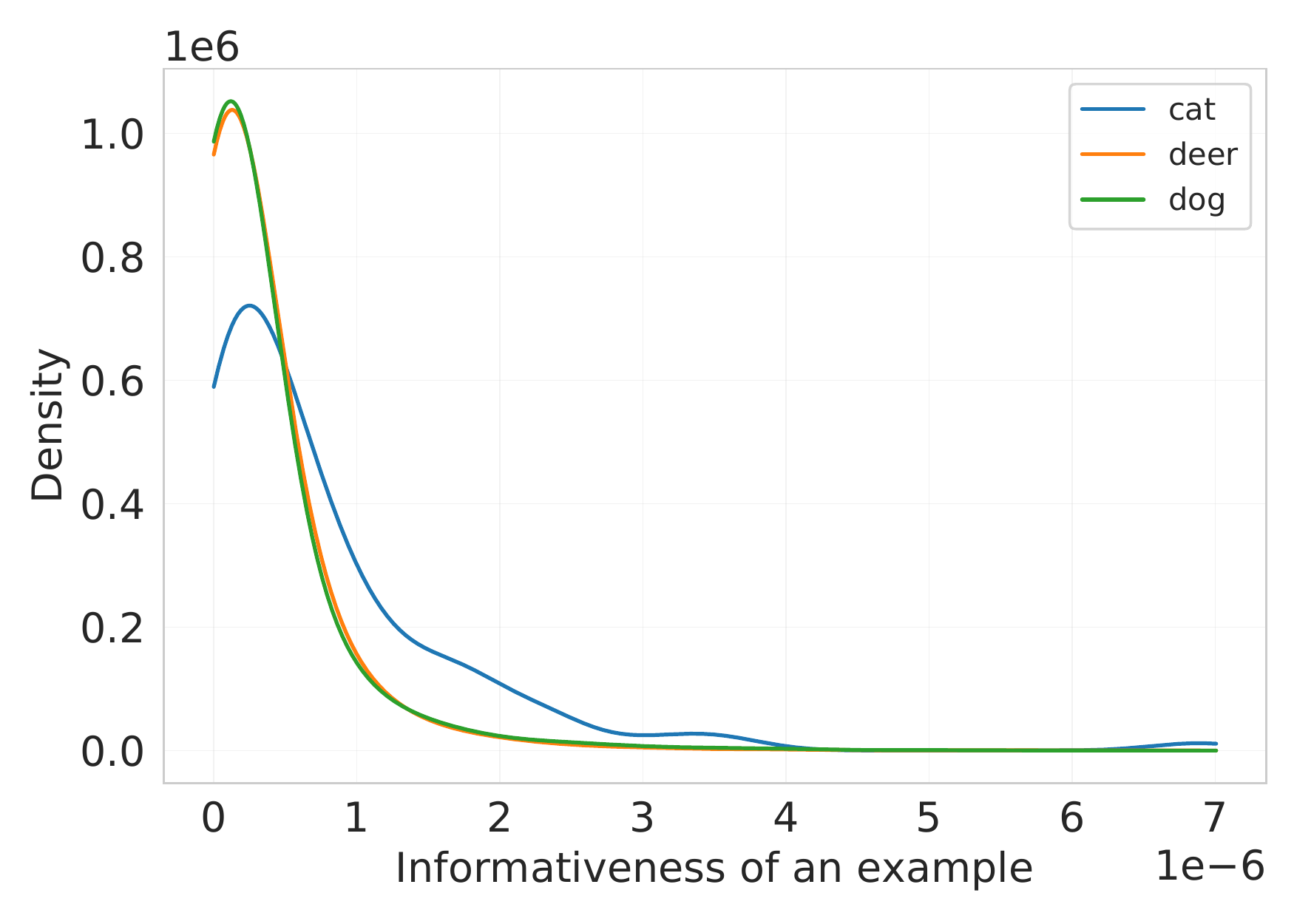}
    \caption{Histogram of the functional sample information of examples of the 3 subpopulations of the pets vs deer dataset. Since cats are under-represented, cat images tend to be more informative on average compared to dog images.}
    \label{fig:cat-dog-deer}
  \end{minipage}
  \hspace{1em}
  \begin{minipage}[b]{0.45\linewidth}
    \centering
    \includegraphics[width=\textwidth]{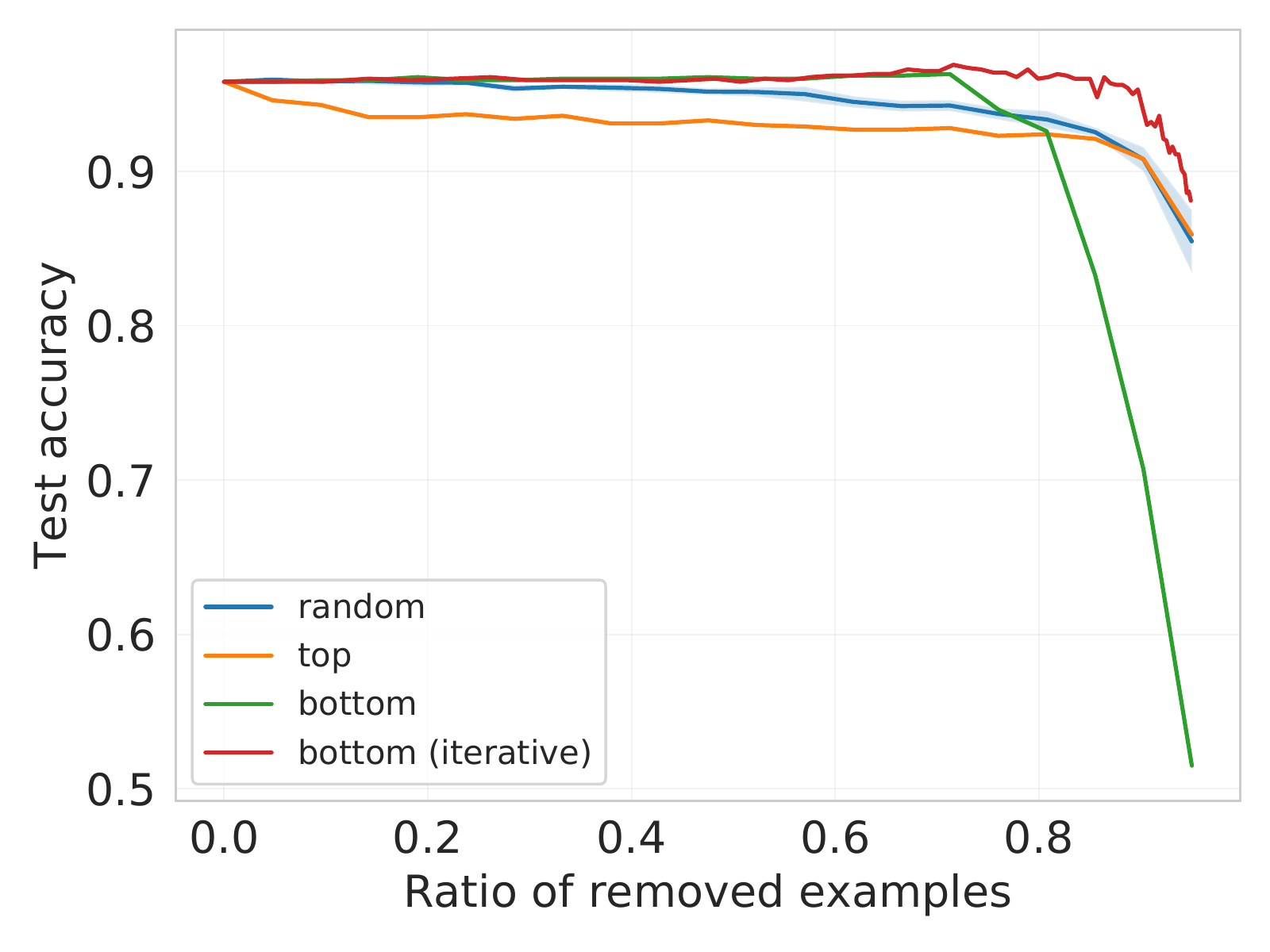}
    \caption{Dataset summarization without training. Test accuracy as a function of the ratio of removed training examples for different strategies.}
    \label{fig:mnist-data-sum}
  \end{minipage}
\end{figure}
\paragraph{Underrepresented subpopulations.}
The histogram of functional sample information in the case of iCassava (\cref{fig:informative-examples-summary} bottom) suggests that examples of underrepresented classes or subpopulations might be more informative on average.
However, we cannot conclude this based on the results of \cref{fig:informative-examples-summary} as there can be confounding factors affecting both representation and informativeness.
Therefore, using CIFAR-10 images, we create a dataset for ``pets vs deer'' classification, where the class ``pets'' consists of two subpopulations: cats (200 examples) and dogs (4000 examples), while the class 'deer' consist of 4000 deer examples.
In this dataset, cats are underrepresented, but there is no other significant difference between the cat and dog subpopulations.
Since there are relatively few cat images, we expect each to carry more unique information. This is confirmed when we compute $\FSI$ for a pretrained ResNet-18 with $t=10000$ and $\eta=0.0001$ (see \cref{fig:cat-dog-deer}).
This result also suggests that analyzing $\FSI$ can help to detect underrepresented subpopulations.

\subsection{Data summarization}
As we saw in \cref{fig:informative-examples-summary}, majority of examples in considered datasets are have low sample information.
This hints that it might be possible to remove a significant fraction of examples from these datasets, while not degrading performance of networks trained on them.
To test whether such data summarization is possible, given a training dataset, we remove a fraction of its least informative examples and measure the test performance of a network trained on the remaining examples.
We expect that removing the least informative training samples should not affect the performance of the model.
Note however that, since we are considering the \textit{unique} information, removing one example can increase the informativeness of another.
For this reason, we consider two strategies: In one we compute the informativeness scores once, and remove a given percentage of the least informative samples. In the other we remove 5\% of the least informative samples, recompute the scores, and iterate until we remove the desired number of samples. For comparison, we also consider removing the most informative examples (we call this ``top'' baseline) and randomly selected examples (we call this ``random'' baseline).

The results on MNIST 4 vs 9 classification task with the one-hidden-layer network described earlier, $t=2000$, and $\eta=0.001$, are shown in \cref{fig:mnist-data-sum}.
Indeed, removing the least informative training samples has little effect on the test error, while removing the top examples has the most impact.
Moreover, recomputing the information scores after each removal steps (``bottom iterative'') greatly improves the performance when many samples are removed, confirming that $\USI$ and $\FSI$ are good practical measures of unique information of an example, but also that the total information of a group of examples is not simply the sum of the unique information of its members.
Interestingly, removing more than 80\% of the least informative examples degrades the performance more than removing the same number of the most informative examples.
In the former case, we are left with a small number of hard, atypical, or mislabeled examples, while in the latter case we are left with the same number of easy and typical examples.
Consequently, the performance is better in the latter case.
In other words, when learning a classifier with just 200 examples, it is better to have these examples be typical.

\subsection{How much does sample information depend on algorithm?}\label{sec:dependence-on-training-algo}
\begin{figure}[t]
\centering
\begin{subfigure}{0.49\textwidth}
\includegraphics[width=0.8\textwidth]{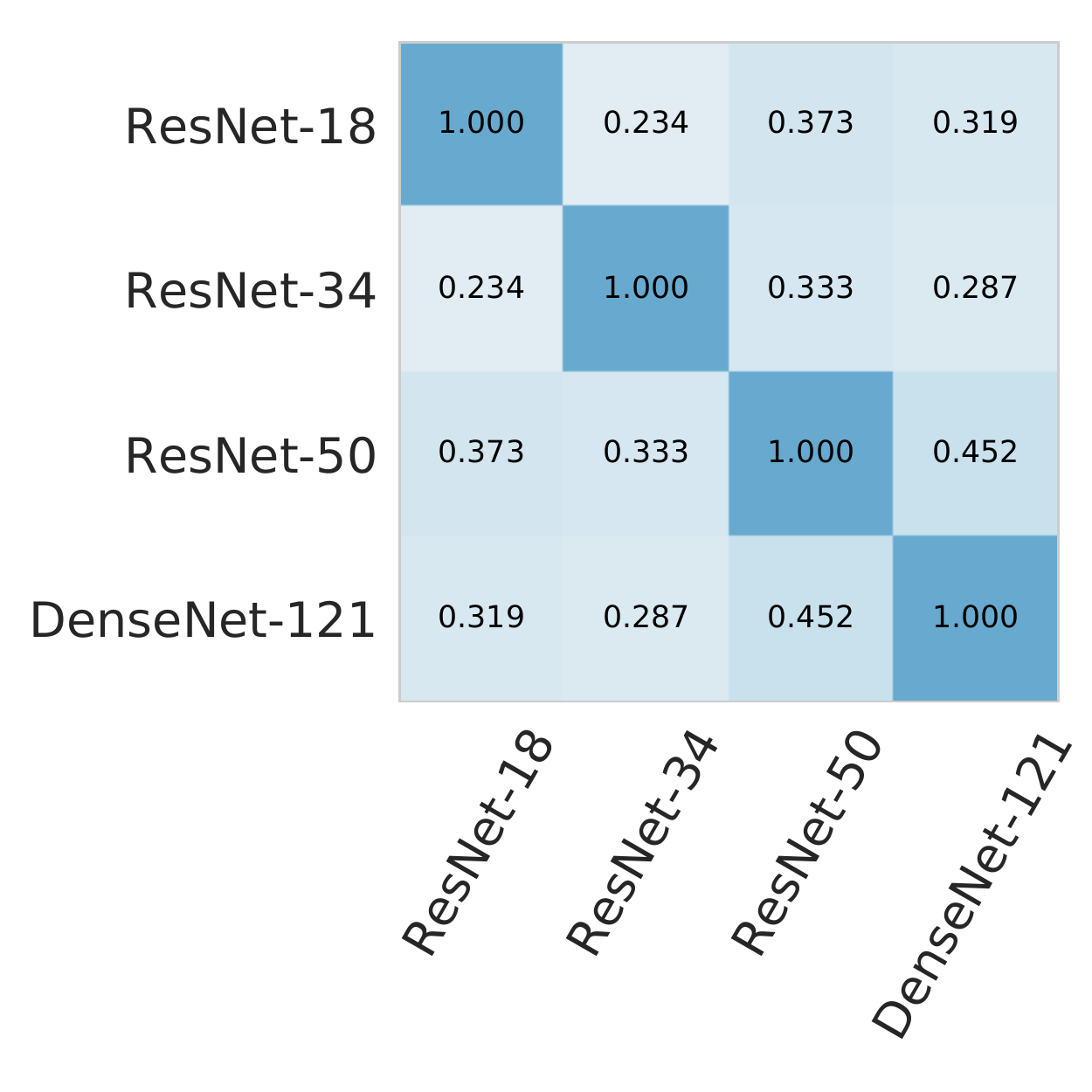}
\caption{Varying architecture}
\label{fig:inf-scores-for-different-networks}
\end{subfigure}%
\begin{subfigure}{0.49\textwidth}
\includegraphics[width=0.8\textwidth]{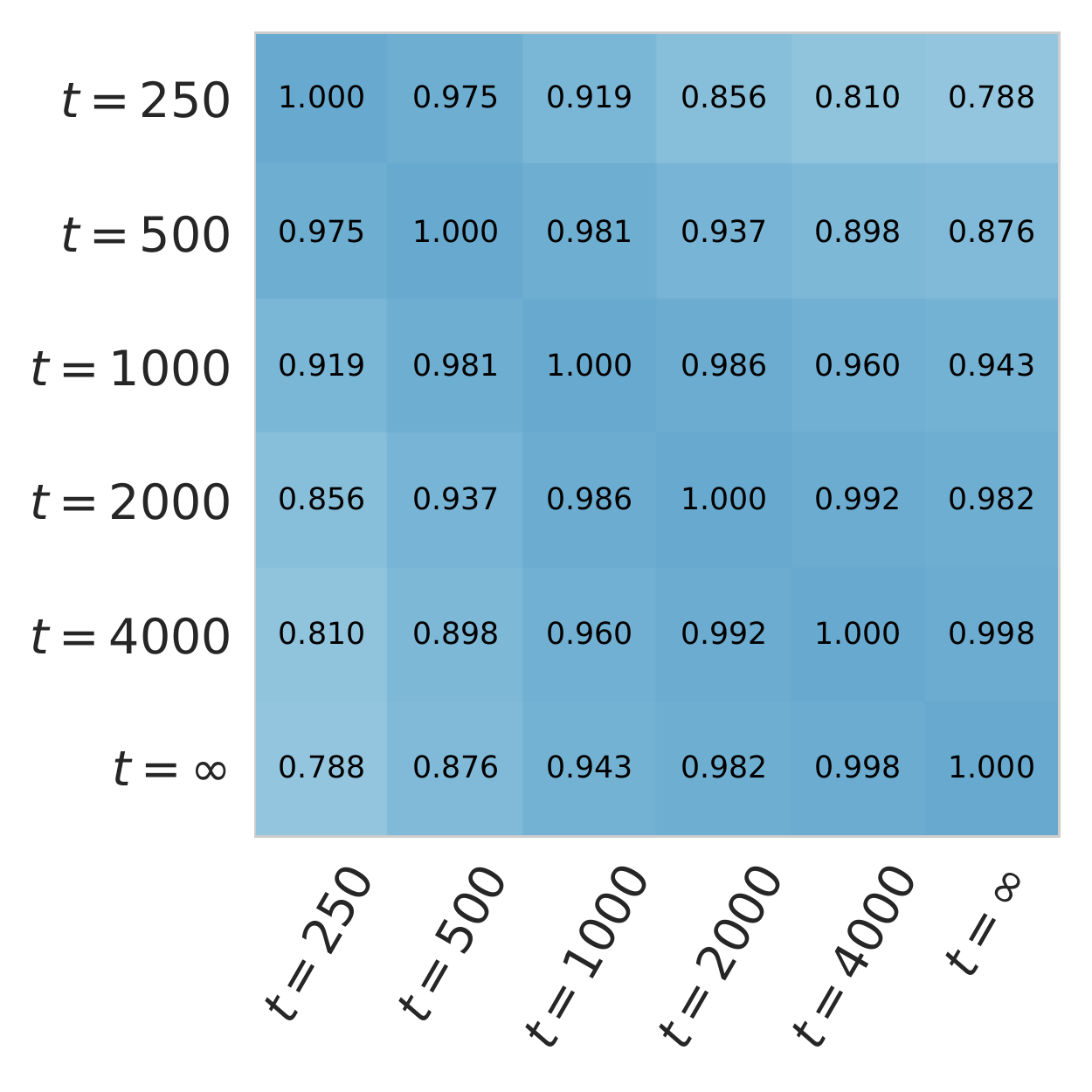}
\caption{Varying training length}
\label{fig:inf-scores-at-different-times}
\end{subfigure}
\caption{Correlations between functional sample information scores computed for different architectures and training lengths. \textbf{On the left}: correlations between F-SI scores of the 4 pretrained networks, all computed with setting $t=1000$ and $\eta=0.001$.
\textbf{On the right:} correlations between F-SI scores computed for pretrained ResNet-18s, with learning rate $\eta=0.001$, but varying training lengths $t$. All reported correlations are averages over 10 different runs. The training dataset consists of 1000 examples from the Kaggle cats vs dogs classification task.
}
\end{figure}
\begin{figure}
    \centering
    \begin{subfigure}{\textwidth}
    \includegraphics[width=\textwidth]{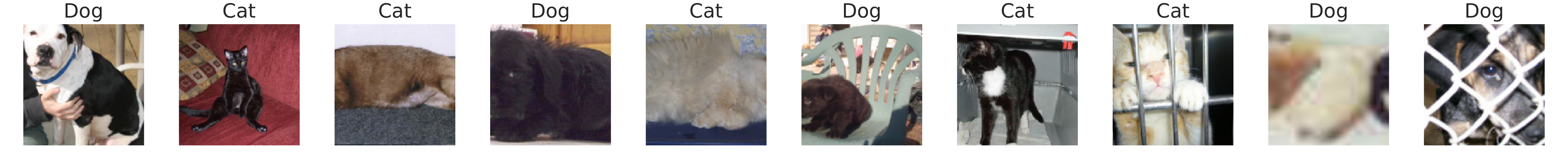}
    \caption{ResNet-18}
    \end{subfigure}
    
    \begin{subfigure}{\textwidth}
    \includegraphics[width=\textwidth]{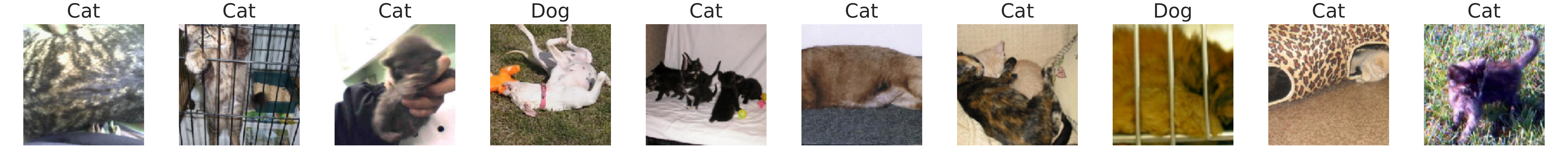}
    \caption{ResNet-50}
    \end{subfigure}
    
    \begin{subfigure}{\textwidth}
    \includegraphics[width=\textwidth]{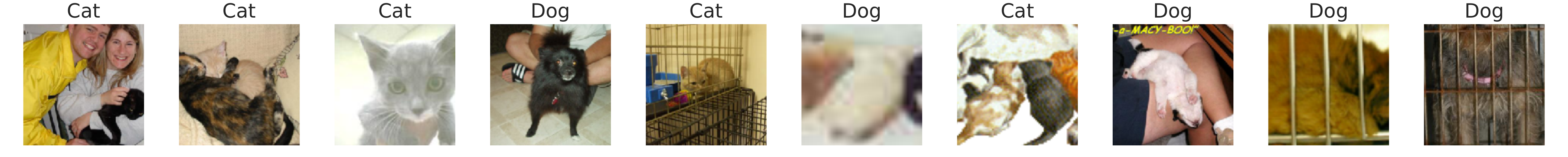}
    \caption{DenseNet-121}
    \end{subfigure}
    \caption{Top 10 most informative examples from Kaggle cats vs dogs classification task for 3 pretrained networks: ResNet-18, ResNet-50, and DenseNet-121.}
    \label{fig:most-informative-examples-for-different-networks}
\end{figure}
The proposed information measures depend on the training algorithm, which includes the architecture, random seed, initialization, and training length.
This is unavoidable as one example can be more informative for one algorithm and less informative for another.
Nevertheless, in this subsection, we test how much does informativeness depend on the network, initialization, and training time.
We consider the Kaggle cats vs dogs classification task with 1000 training examples.
First, fixing the training time $t=1000$, we consider four pretrained architectures: ResNet-18, ResNet-34, ResNet-50, and DenseNet-121.
The correlations between F-SI scores computed for these four architectures are presented in \cref{fig:inf-scores-for-different-networks}.
We see that F-SI scores computed for two completely different architectures, such as ResNet-50 and DenseNet-121 have significant correlation, around 45\%.
Furthermore, there is a significant overlap in top 10 most informative examples for these networks (see \cref{fig:most-informative-examples-for-different-networks}).
Next, fixing the network to be a pretrained ResNet-18 and fixing the training length $t=1000$, we consider changing initialization of the classification head (which is not pretrained).
In this case the correlation between F-SI scores is $0.364 \pm 0.066$.
Finally, fixing the network to be a ResNet-18 and fixing the initialization of the classification head, we consider changing the number of iterations in the training.
We find strong correlations between F-SI scores of different training lengths (see \cref{fig:inf-scores-at-different-times}).

\begin{figure}[t]
    \centering
    \begin{subfigure}{0.49\textwidth}
    \includegraphics[width=0.8\textwidth]{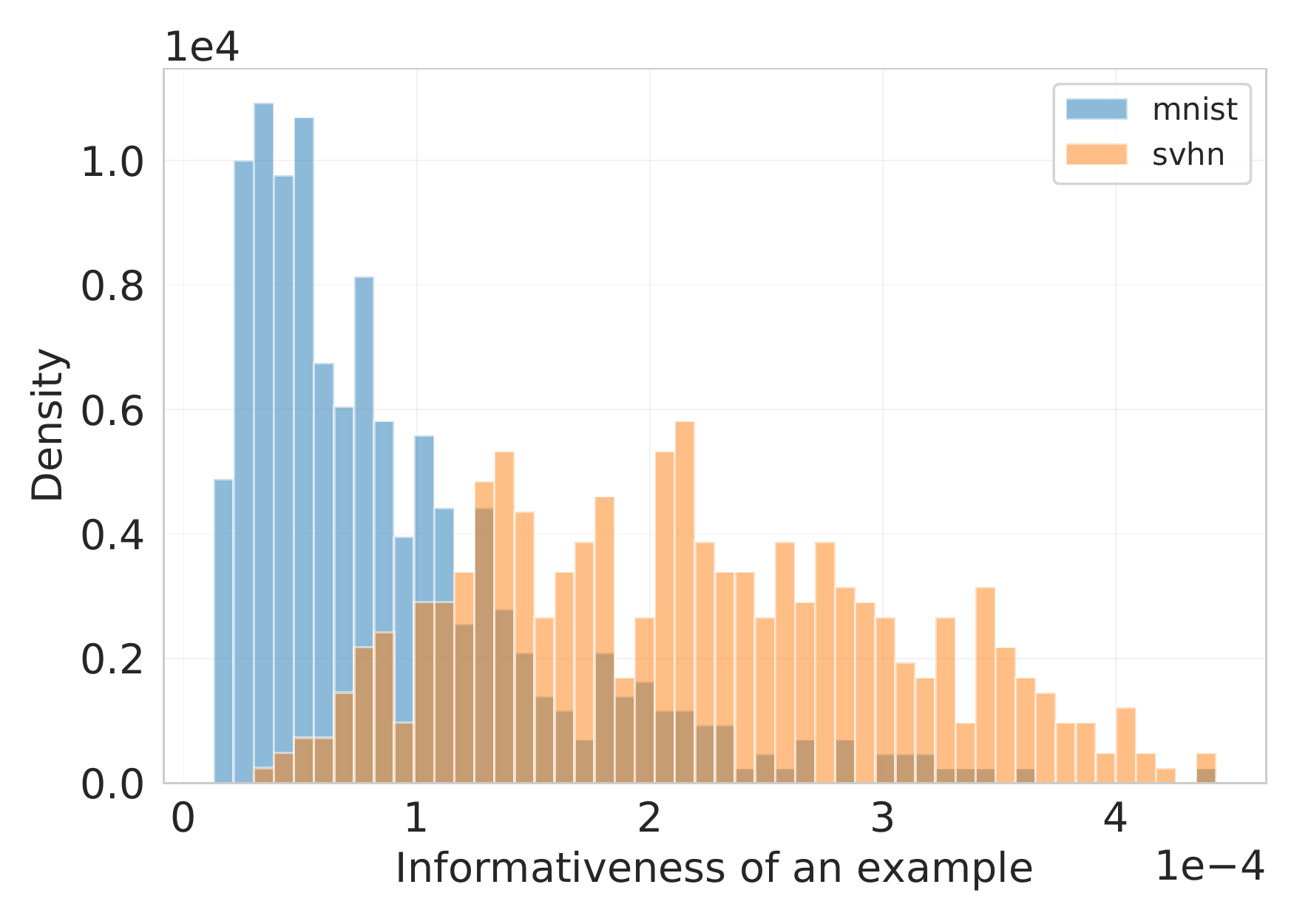}
    \caption{MNVT vs SVHN experiment for DenseNet-121}
    \label{fig:mnist-svhn-densenet121}
    \end{subfigure}%
    \begin{subfigure}{0.49\textwidth}
    \includegraphics[width=0.8\textwidth]{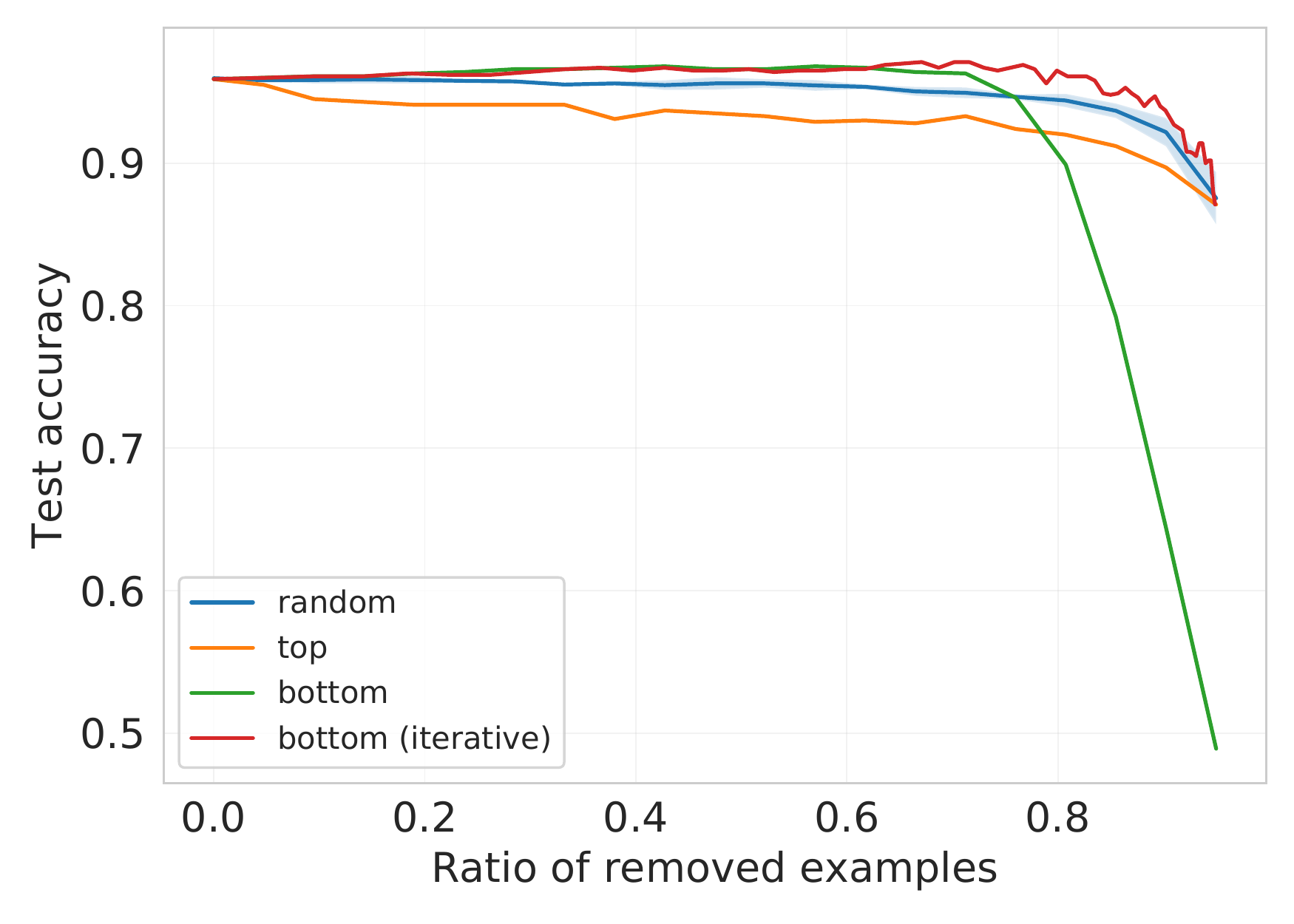}
    \caption{MNIST 4 vs 9 data summarization}
    \label{fig:mnist-data-sum-chaning-network}
    \end{subfigure}%
    \label{fig:dependence-of-results-on-network-choice}
    \caption{Testing how much F-SI scores computed for different networks are qualitatively different. \textbf{On the left:} the MNIST vs SVHN experiment with a pretrained DesneNet-121 instead of a pretrained ResNet-18.
    \textbf{On the right:} Data summarization for the MNIST 4 vs 9 classification task, where the F-SI scores are computed for a one-hidden-layer network, but a two-hidden-layer network is trained to produce the test accuracies.}
\end{figure}

\paragraph{MNIST vs SVHN experiment for DenseNet-121.} We also redo the experiment with the joint MNIST and SVHN classification experiment (\cref{fig:mnist-svhn}) but for a different network, a pretrained DenseNet-121, to test the dependence of the results on the architecture choice. The results are presented in \cref{fig:mnist-svhn-densenet121} and are qualitatively identical to those with a pretrained ResNet18 (\cref{fig:mnist-svhn}).

\paragraph{Data summarization with a change of architecture.} To test how much sample information scores computed for one network are useful for another network, we reconsider the MNIST 4 vs 9 data summarization experiment (\cref{fig:mnist-data-sum}). This time we compute F-SI scores for the original network with one hidden layer, but train a two-hidden-layer neural network (both layers having 1024 ReLU units). The data summarization results presented in \cref{fig:mnist-data-sum-chaning-network} are qualitatively and quantitively almost identical to the original results presented in \cref{fig:mnist-data-sum}. This confirms that F-Si scores computed for one network can be useful for another network.

\section{Discussion and future work}
The smooth (functional) sample information depends not only on the example itself, but on the network architecture, initialization, and the training procedure (i.e. the training algorithm).
This has to be the case, since an example can be informative with respect to one algorithm or architecture, but not informative for another one. Similarly, some examples may be more informative at the beginning of the training (e.g., simpler examples) rather than at the end.
Nevertheless, the results presented in \cref{sec:dependence-on-training-algo} indicate that $\FSI$ still captures something inherent in the example.
This suggests that $\FSI$ computed with respect to one network can reveal useful information for another one.

The proposed sample information measures only the unique information provided by an example.
For this reason, it is not surprising that typical examples are usually the least informative, while atypical and rare ones are more informative.
While the typical examples are usually less informative according to the proposed measures, they still provide information about the decision functions, which is evident in the data summarization experiment -- removing lots of typical examples was worse than removing the same number of random examples.
Generalizing sample information to capture this kind of contributions is an interesting direction for future work.
Similar to Data Shapley~\citep{pmlr-v97-ghorbani19c}), one can look at the average unique information an example provides when considering along with a random subset of data.
One can also consider common information, high-order information, synergistic information, and other notions of information \textit{between} samples. The relation among these quantities is complex in general, even for 3 variables and is an open challenge~\citep{williams-beer}.

\section{Related work}
Our work is related to information-theoretic stability notions~\citep{bassily2016algorithmic, raginsky2016information, feldman2018calibrating} that seek to measure the influence of a sample on the output, and to measure generalization.
\citet{raginsky2016information} define information stability as $\E_S\left[ \frac{1}{n}\sum_{i=1}^n I(W; Z_i \mid S_{-i})\right]$,
the expected average amount of unique (Shannon) information that weights have about an example.
This, without the expectation over $S$, is also our starting point (\cref{eq:unique-info}).
\citet{bassily2016algorithmic} define KL-stability $\sup_{s,s'}\KL{Q_{W|S=s}}{Q_{W|S=s'}}$, where $s$ and $s'$ are datasets that differ by one example, while \citet{feldman2018calibrating} define average leave-one-out KL stability as $\sup_s \frac{1}{n}\sum_{i=1}^n \KL{Q_{W|S=s}}{Q_{W|S=s_{-i}}}$.
The latter closely resembles our definition (\cref{eq:si-definition}).
Unfortunately,
while the weights are continuous, the optimization algorithm (such as SGD) is usually discrete. This generally makes the resulting quantities degenerate (infinite).
Most works address this issue by replacing the discrete optimization algorithm with a continuous one, such as stochastic gradient Langevin dynamics \citep{welling2011bayesian} or continuous stochastic differential equations that approximate SGD \citep{li2017stochastic} in the limit.
We aim to avoid such assumptions and give a definition that is directly applicable to real networks trained with standard algorithms. To do this, we apply a smoothing procedure to a standard discrete algorithm. The final result can still be interpreted as a valid bound on Shannon mutual information, but for a slightly modified optimization algorithm. 
Our definitions relate informativeness of a sample to the notion of algorithmic stability~\citep{bousquet2002stability,hardt2016train}, where a training algorithm is called stable if its outputs on datasets differing by only one example are close to each other.

To ensure our quantities are well-defined, we apply a smoothing technique which is reminiscent of a soft discretization of weight space. 
In \Cref{sec:functional-sample-information}, we show that a canonical discretization is obtained using the Fisher information matrix, which relates to classical results of \cite{rissanen1996fisher} on optimal coding length.  It also relates to the use of a post-distribution by \cite{achille2019information}, who however use it to estimate the total amount of information in the weights of a network.

We use a first-order approximation (linearization) inspired by the Neural Tangent Kernel (NTK)~\citep{jacot2018ntk,lee2019wide} to efficiently estimate informativeness of a sample.
While NTK predicts that, in the limit of an infinitely wide network, the linearized model is an accurate approximation, we do not observe this on more realistic architectures and datasets. 
However, we show that, when using pretrained networks as common in practice, linearization yields an accurate approximation, similarly to what is observed by \cite{mu2020gradients}.
\cite{ziv2020information} study the total information contained by an ensemble of randomly initialized linearized networks. They notice that, while considering ensembles makes the mutual information finite, it still diverges to infinity as training time goes to infinity. On the other hand, we consider the unique information about a single example, without the need for ensembles, by considering smoothed information, which remains bounded for any time.
Other complementary works study how information about an input sample propagates through the network~\citep{shwartzziv2017opening, achille2018emergence, saxe2019information} or total amount of information (complexity) of a classification dataset~\citep{lorena2019complex}, rather than how much information the sample itself contains. 

In terms of applications, our work is related to works that estimate influence of an example~\citep{koh2017understanding, toneva2018an, katharopoulos-fleuret-2018, pmlr-v97-ghorbani19c, yoon2020data}.
This can be done by estimating the change in weights if a sample is removed from the training set,
which is addressed by several works~\citep{koh2017understanding, golatkar2020forgetting, wu2020deltagrad}.
Influence functions \citep{cook1977detection, koh2017understanding} model removal of a sample as reducing its weight infinitesimally in the loss function, and show an efficient first-order approximation of its effect on other measures (such as test time predictions).
We found influence functions to be prohibitively slow for the networks and data regimes we consider.
\citet{basu2021influence} found that influence functions are not accurate for large DNNs.
Additionally, influence functions assume that the training has converged, which does not hold when early stopping is used.
We instead use linearization of neural networks to estimate the effect of removing an example efficiently.
We find that this approximation is accurate in realistic settings, and that the computational cost scales better with network size, making it applicable to very large neural networks.

Our work is orthogonal to that of feature selection: while we aim to evaluate the informativeness for the final weights of a subset of training samples, feature selection aims to quantify the informativeness for the task variable of a subset of features. However, they share some high-level similarities. In particular, \cite{kohavi1997wrappers} propose the notion of strongly-relevant feature as one that changes the discriminative distribution when it is excluded. This notion is similar to the notion of unique sample information in \cref{eq:unique-info}.

\section{Conclusion}
There are many notions of information that are relevant to understanding the inner workings of neural networks.
Recent efforts have focused on defining information in the weights or activations that do not degenerate for deterministic training.
We look at the information in the training data, which ultimately affects both the weights and the activations.
In particular, we focus on the most elementary case, which is the unique information contained in an example, because it can be the foundation for understanding more complex notions.
However, our approach can be readily generalized to unique information of a group of samples.
Unlike most previously introduced information measures, ours is tractable even for real datasets used to train standard network architectures, and does not require restriction to limiting cases.
In particular, we can approximate our quantities without requiring the limit of small learning rate (continuous training time), or the limit of infinite network width.

\chapter{Information-theoretic generalization bounds for black-box learning algorithms}\label{ch:sample-info}
\section{Introduction}
Large neural networks trained with variants of stochastic gradient descent have excellent generalization capabilities, even in regimes where the number of parameters is much larger than the number of training examples.
\citet{zhang2016understanding} showed that classical generalization bounds based on various notions of complexity of hypothesis set fail to explain this phenomenon, as the same neural network can generalize well for one choice of training data and memorize completely for another one.
This observation has spurred a tenacious search for algorithm-dependent and data-dependent generalization bounds that give meaningful results in practical settings for deep learning~\citep{Jiang*2020Fantastic,dziugaite2020search}.

One line of attack bounds generalization error based on the information about training dataset stored in the weights~\citep{xu2017information, bassily2018learners, negrea2019information, bu2020tightening, steinke2020reasoning, haghifam2020sharpened, neu2021information, raginsky202110}.
The main idea is that when the training and testing performance of a neural network are different, the network weights necessarily capture some information about the training dataset.
However, the opposite might not be true: A neural network can store significant portions of training set in its weights and still generalize well~\citep{shokri2017membership,yeom2018privacy,nasr2019comprehensive}.
Furthermore, because of their information-theoretic nature, these generalization bounds become infinite or produce trivial bounds for deterministic algorithms.
When such bounds are not infinite, they are notoriously hard to estimate, due to the challenges arising in estimation of Shannon mutual information between two high-dimensional variables (e.g., weights of a ResNet and a training dataset).

This work addresses the aforementioned challenges.
We first improve some of the existing information-theoretic generalization bounds, providing a unified view and derivation of them (\cref{sec:w-gen-bounds}).
We then derive novel generalization bounds that measure information with predictions, rather than with the output of the training algorithm  (\cref{sec:fcmi}).
These bounds are applicable to a wide range of methods, including neural networks, Bayesian algorithms, ensembling algorithms, and non-parametric approaches.
In the case of neural networks, the proposed bounds improve over the existing weight-based bounds, partly because they avoid a counter-productive property of weight-based bounds that information stored in \emph{unused} weights affects generalization bounds, even though it has no effect on generalization.
The proposed bounds produce meaningful results for deterministic algorithms and are significantly easier to estimate.
For example, in case of classification, computing our most efficient bound involves estimating mutual information between a pair of predictions and a binary variable.

We apply the proposed bounds to ensembling algorithms, binary classification algorithms with finite VC dimension hypothesis classes, and to stable learning algorithms (\cref{sec:applications}).
We compute our most efficient bound on realistic classification problems involving neural networks, and show that the bound closely follows the generalization error, even in situations when a neural network with 3M parameters is trained deterministically on 4000 examples, achieving 1\% generalization error.

\section{Weight-based generalization bounds}\label{sec:w-gen-bounds}
We start by describing the necessary notation and definitions, after which we present some of the existing weigh-based information-theoretic generalization bounds, slightly improve some of them, and prove relations between them.
The purpose of this section is to introduce the relevant existing bounds and prepare grounds for the functional conditional mutual information bounds introduced in \cref{sec:fcmi}, which we consider our main contribution. 
Theorems proved in the subsequent sections will be relying on the following lemma.
\begin{lemma} Let $(\Phi, \Psi)$ be a pair of random variables with joint distribution $P_{\Phi, \Psi}$. 
If $g(\phi, \psi)$ is a measurable function such that $\E_{P_{\Phi, \Psi}}\sbr{g(\Phi, \Psi)}$ exists and $g(\Phi, \Psi)$ is $\sigma$-subgaussian under $P_\Phi \otimes P_\Psi$, then
\begin{align}
\abs{\E_{P_{\Phi, \Psi}}\sbr{g(\Phi, \Psi)} - \E_{P_{\Phi}\otimes P_{\Psi}}\sbr{g(\Phi, \Psi})} \le \sqrt{2 \sigma^2 I(\Phi; \Psi)}.
\end{align}
Furthermore, if $g(\phi,\Psi)$ is $\sigma$-subgaussian for each $\phi$, then
\begin{equation}
\E_{P_{\Phi,\Psi}}\sbr{\rbr{g(\Phi, \Psi) - \E_{\Psi' \sim P_\Psi} g(\Phi, \Psi')}^2} \le 4 \sigma^2 (I(\Phi; \Psi) + \log 3),
\label{eq:mutual-inf-lemma-squared-gap}
\end{equation}
and
\begin{equation}
P_{\Phi,\Psi}\rbr{\abs{g(\Phi, \Psi) - \E_{\Psi' \sim P_{\Psi}}g(\Phi,\Psi')} \ge \epsilon} \le \frac{4 \sigma^2 (I(\Phi; \Psi) + \log 3)}{\epsilon^2}, \quad\forall \epsilon > 0,
\end{equation}
provided that the expectation in \cref{eq:mutual-inf-lemma-squared-gap} exists.
\label{lemma:mutual-info-lemma}
\end{lemma}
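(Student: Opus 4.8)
The plan is to derive all three statements from a single tool: the Donsker--Varadhan variational characterization of the KL divergence, $\KL{P}{Q} \ge \E_P\sbr{T} - \log \E_Q\sbr{e^T}$ for every test function $T$ with $\E_Q\sbr{e^T} < \infty$, applied throughout with $P = P_{\Phi,\Psi}$ and $Q = P_\Phi \otimes P_\Psi$, so that $\KL{P}{Q} = I(\Phi;\Psi)$, and with subgaussianity used to control the log-moment-generating term on the product side.

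For the first inequality I would take $T = \lambda g$ for a free parameter $\lambda \in \bR$. Donsker--Varadhan then gives $\lambda \E_{P_{\Phi,\Psi}}\sbr{g} \le I(\Phi;\Psi) + \log\E_{P_\Phi\otimes P_\Psi}\sbr{e^{\lambda g}}$, and the $\sigma$-subgaussianity of $g(\Phi,\Psi)$ under $P_\Phi\otimes P_\Psi$ bounds the last term by $\lambda\E_{P_\Phi\otimes P_\Psi}\sbr{g} + \sigma^2\lambda^2/2$. Rearranging yields $\lambda\rbr{\E_{P_{\Phi,\Psi}}\sbr{g} - \E_{P_\Phi\otimes P_\Psi}\sbr{g}} \le I(\Phi;\Psi) + \sigma^2\lambda^2/2$ for all $\lambda$; optimizing the right-hand side over $\lambda$ (taking $\lambda = \pm\sqrt{2I(\Phi;\Psi)/\sigma^2}$ for the two sign choices) gives the claimed bound $\sqrt{2\sigma^2 I(\Phi;\Psi)}$ on the absolute difference.

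For the squared-gap bound I would center $g$ and square it. Define $h(\phi,\psi) \triangleq g(\phi,\psi) - \E_{\Psi'\sim P_\Psi}\sbr{g(\phi,\Psi')}$, so that for each fixed $\phi$ the variable $h(\phi,\Psi)$ is zero-mean and $\sigma$-subgaussian under $P_\Psi$. Applying Donsker--Varadhan with $T = \lambda h^2$ for $\lambda > 0$ gives $\E_{P_{\Phi,\Psi}}\sbr{h^2} \le \frac{1}{\lambda}\rbr{I(\Phi;\Psi) + \log \E_{P_\Phi\otimes P_\Psi}\sbr{e^{\lambda h^2}}}$. The crux is the moment bound $\E\sbr{e^{Z^2/(4\sigma^2)}} \le 3$ for any zero-mean $\sigma$-subgaussian $Z$, which I would establish from the standard tail estimate $P\rbr{\abs{Z}\ge t} \le 2e^{-t^2/(2\sigma^2)}$ via the layer-cake identity: with $\lambda = 1/(4\sigma^2)$ the tail exponent becomes $u^{-2}$, so $\E\sbr{e^{\lambda Z^2}} \le 1 + \int_1^\infty 2u^{-2}\,du = 3$. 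Since conditioning on $\Phi = \phi$ makes $h(\phi,\Psi)$ a zero-mean $\sigma$-subgaussian under the product measure, this gives $\E_{P_\Phi\otimes P_\Psi}\sbr{e^{\lambda h^2}} \le 3$; substituting $\lambda = 1/(4\sigma^2)$ produces exactly $\E_{P_{\Phi,\Psi}}\sbr{h^2} \le 4\sigma^2\rbr{I(\Phi;\Psi) + \log 3}$. The final tail statement then follows immediately by Markov's inequality applied to $h^2 \ge 0$, since $P_{\Phi,\Psi}\rbr{\abs{h} \ge \epsilon} = P_{\Phi,\Psi}\rbr{h^2 \ge \epsilon^2} \le \E_{P_{\Phi,\Psi}}\sbr{h^2}/\epsilon^2$.

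The main obstacle I anticipate is pinning down the exponential-square moment bound with the exact constant: one must choose $\lambda = 1/(4\sigma^2)$ precisely so that the tail exponent becomes $u^{-2}$ and the integral converges cleanly, simultaneously matching the $4\sigma^2$ prefactor arising from $1/\lambda$ and the $\log 3$ term. Secondary care is needed to verify the integrability hypotheses so that Donsker--Varadhan and the passage through the exponential are valid, and to confirm that centering by $\E_{P_\Psi}\sbr{g(\phi,\cdot)}$ both zeroes the conditional mean and preserves $\sigma$-subgaussianity---routine points, but essential for the stated constants to come out exactly.
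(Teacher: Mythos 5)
Your proposal is correct and follows the same overall architecture as the paper's proof: Donsker--Varadhan with test function $\lambda g$ plus optimization over $\lambda$ for the first inequality, Donsker--Varadhan with test function $\lambda h^2$ applied to the centered function $h(\phi,\psi) = g(\phi,\psi) - \E_{\Psi'\sim P_\Psi} g(\phi,\Psi')$ for the second, and Markov's inequality for the third. The one place you genuinely diverge is the auxiliary exponential-square moment bound. The paper proves $\E e^{\lambda X^2} \le 1 + 8\lambda\sigma^2$ for $\lambda \in [0, 1/(4\sigma^2))$ via the subgaussian moment inequality $\E|X|^k \le (2\sigma^2)^{k/2} k\,\Gamma(k/2)$, a Fubini-justified power-series expansion, and a geometric-series estimate, and then lets $\lambda \to 1/(4\sigma^2)$ to obtain the $\log 3$; you instead combine the Chernoff tail bound $P(|X|\ge t)\le 2e^{-t^2/(2\sigma^2)}$ with the layer-cake identity to get $\E e^{X^2/(4\sigma^2)} \le 1 + \int_1^\infty 2u^{-2}\,du = 3$ directly at the endpoint $\lambda = 1/(4\sigma^2)$. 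Both routes yield the identical constant; yours is somewhat cleaner in that it avoids the series manipulation and the limiting argument, at the mild cost of first converting the moment-generating-function definition of subgaussianity into a two-sided tail bound. The remaining steps --- verifying that centering preserves $\sigma$-subgaussianity and zeroes the conditional mean, and that under $P_\Phi\otimes P_\Psi$ one may integrate the conditional bound over $\Phi$ --- match the paper's treatment.
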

The first part of this lemma is equivalent to Lemma 1 of \citet{xu2017information}, which in turn has its roots in \citep{russo2019much}.
The second part generalizes Lemma 2 of \citet{hafez2020conditioning} by also providing bounds on the expected squared difference.

\subsection{Generalization bounds with input-output mutual information}
Let $S = (Z_1, Z_2, \ldots, Z_n)$ be a dataset of $n$ i.i.d. examples sampled from $P_Z$ and $Q_{W|S}$ be a training algorithm with hypothesis set $\WW$.
Given a loss function $\ell : \WW \times \ZZ \rightarrow \mathbb{R}$, the empirical risk of a hypothesis $w$ is $r_S(w) = \frac{1}{n}\sum_{i=1}^n \ell(w,Z_i)$ and the population risk is $R(w) = \E_{Z'\sim P_Z}\ell(w,Z')$.
Let $W \sim Q_{W|S}$ be a random hypothesis drawn after training on $S$.
We are interested in bounding the generalization gap $R(W) - r_S(W)$, also referred as generalization error sometimes.
\citet{xu2017information} establish the following information-theoretic bound on the absolute value of the expected generalization gap.
\begin{theorem}[Thm. 1 of \citet{xu2017information}]
Let $W \sim Q_{W|S}$. If $\ell(w,Z')$, where $Z' \sim P_Z$, is $\sigma$-subgaussian for all $w\in\WW$, then
\begin{equation}
    \abs{\E_{P_{S,W}}\sbr{R(W) - r_S(W)}} \le \sqrt{\frac{2\sigma^2 I(W; S)}{n}}.
    \label{eq:xu-ragisnky}
\end{equation}
\label{thm:xu-raginsky}
\end{theorem}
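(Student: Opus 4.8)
The plan is to apply the first part of \cref{lemma:mutual-info-lemma} with $\Phi = W$, $\Psi = S$, and the carefully chosen function $g(w, s) = r_s(w) - R(w)$, i.e. the empirical risk recentered by the population risk. The mutual information in the lemma is then exactly $I(W; S) = \KL{P_{W,S}}{P_W \otimes P_S}$, so all that remains is to identify the two expectations appearing in the lemma and to verify the subgaussianity hypothesis with the correct constant.

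First I would compute the two sides. Under the joint law $P_{W,S}$, linearity gives $\E_{P_{W,S}}\sbr{g(W,S)} = \E_{P_{W,S}}\sbr{r_S(W) - R(W)} = -\E_{P_{W,S}}\sbr{R(W) - r_S(W)}$, the negated expected generalization gap. Under the product law $P_W \otimes P_S$ the dataset is independent of the hypothesis yet still consists of i.i.d. draws from $P_Z$, so for every fixed $w$ we have $\E_{S}\sbr{r_S(w)} = \frac{1}{n}\sum_{i=1}^n \E_{Z_i \sim P_Z}\sbr{\ell(w, Z_i)} = R(w)$; hence $\E_{P_W \otimes P_S}\sbr{g(W,S)} = \E_W\sbr{R(W) - R(W)} = 0$. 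Consequently the left-hand side of \cref{lemma:mutual-info-lemma} collapses to exactly $\abs{\E_{P_{S,W}}\sbr{R(W) - r_S(W)}}$, the quantity we wish to bound.

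The crux is verifying that $g(W, S)$ is $(\sigma/\sqrt{n})$-subgaussian under $P_W \otimes P_S$. Conditioning on $W = w$, we have $g(w, S) = \frac{1}{n}\sum_{i=1}^n \rbr{\ell(w, Z_i) - R(w)}$, an average of $n$ independent, zero-mean, $\sigma$-subgaussian random variables; tensorizing the subgaussian moment-generating function shows this average is $(\sigma/\sqrt{n})$-subgaussian, with a bound $\log \E_{S}\exp\rbr{t\, g(w,S)} \le \sigma^2 t^2 / (2n)$ that is \emph{uniform} in $w$. Integrating this conditional estimate over $W \sim P_W$ yields the unconditional subgaussian bound under the product measure. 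Feeding $\sigma' = \sigma/\sqrt{n}$ and $I(\Phi; \Psi) = I(W; S)$ into \cref{lemma:mutual-info-lemma} then gives $\abs{\E_{P_{S,W}}\sbr{R(W) - r_S(W)}} \le \sqrt{2 \sigma'^2 I(W;S)} = \sqrt{2\sigma^2 I(W;S)/n}$, as claimed.

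I expect the main obstacle to be precisely this choice of $g$. The naive attempt with $g(w,s) = r_s(w)$ fails: under $P_W \otimes P_S$ the empirical risk $r_S(W)$ is a \emph{mixture} over $w$ of distributions centered at the varying means $R(w)$, and this mixing injects additional variance from the fluctuations of $R(W)$ that destroys any $w$-independent subgaussian constant. Subtracting the conditional mean $R(w)$ is exactly what recenters each conditional slice at zero and restores a subgaussian parameter independent of $w$ — this is the only delicate point, and once it is in place the remainder is routine bookkeeping.
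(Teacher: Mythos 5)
Your proof is correct and matches the paper's own argument: the paper establishes this as the $m=n$ case of \cref{thm:generalized-xu-raginsky}, whose proof applies \cref{lemma:mutual-info-lemma} with exactly your centered function $g(w,s)=\frac{1}{n}\sum_{i}\bigl(\ell(w,Z_i)-\E_{Z'\sim P_Z}\ell(w,Z')\bigr)$, checks the $\sigma/\sqrt{n}$-subgaussianity under the product measure via \cref{lemma:zero-mean-sub-gaussianity}, and notes the product-measure expectation vanishes. Your closing remark about why the recentering by $R(w)$ is the delicate point is also consistent with how the paper structures the argument.
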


We generalize this result by showing that instead of measuring information with the entire dataset, one can measure information with a subset of size $m$ chosen uniformly at random.
For brevity, hereafter we call subsets chosen uniformly at random just ``random subsets''.

\begin{theorem}
Let $W \sim Q_{W|S}$. Let also $U$ be a random subset of $[n]$ with size $m$, independent of $S$ and $W$. If $\ell(w,Z')$, where $Z'\sim P_Z$, is $\sigma$-subgaussian for all $w \in \WW$, then
\begin{equation}
    \abs{\E_{P_{S,W}}\sbr{R(W) - r_S(W)}} \le \E_{P_U}\sqrt{\frac{2\sigma^2}{m} I^U(W; S_U)},
    \label{eq:generalized-xu-raginsky-bound}
\end{equation}
and
\begin{equation}
    \E_{P_{S,W}}\sbr{\rbr{R(W) - r_S(W)}^2} \le \frac{4\sigma^2}{n} \rbr{I(W; S) + \log 3}.\label{eq:generalized-xu-raginsky-variance}
\end{equation}
\label{thm:generalized-xu-raginsky}
\end{theorem}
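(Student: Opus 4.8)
The plan is to derive both inequalities from \cref{lemma:mutual-info-lemma}, which already packages the Donsker--Varadhan change-of-measure estimates we need. The variance bound \cref{eq:generalized-xu-raginsky-variance} is essentially a direct application of the second part of the lemma, whereas the subset bound \cref{eq:generalized-xu-raginsky-bound} requires an additional symmetrization over the random index set $U$ before the lemma can be invoked.

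For \cref{eq:generalized-xu-raginsky-variance} I would set $\Phi = W$, $\Psi = S$, and $g(w, s) = r_s(w) = \frac1n \sum_{i=1}^n \ell(w, z_i)$. For each fixed $w$ the quantity $r_S(w)$ is an average of $n$ independent $\sigma$-subgaussian terms, hence $\sigma/\sqrt{n}$-subgaussian, and $\E_{S'\sim P_S}\sbr{r_{S'}(w)} = R(w)$, so the centered quantity $g(\Phi,\Psi) - \E_{\Psi'}g(\Phi,\Psi')$ is exactly $r_S(W) - R(W)$. Plugging $\sigma' = \sigma/\sqrt{n}$ into \cref{eq:mutual-inf-lemma-squared-gap} yields the claim immediately.

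For \cref{eq:generalized-xu-raginsky-bound} the key step is a subsampling identity. First I would condition on $U = u$ and define $g(w, s_u) = R(w) - r_{s_u}(w)$, where $r_{s_u}$ is the empirical risk over the $m$ examples indexed by $u$. For each fixed $w$ this is a shift of an average of $m$ independent $\sigma$-subgaussian terms, hence $\sigma/\sqrt{m}$-subgaussian, and its mean over $S_u \sim P_Z^{\otimes m}$ is zero; since the mean vanishes for \emph{every} $w$, the same $\sigma/\sqrt{m}$-subgaussianity (with zero mean) also holds under the product $P_{W|U=u}\otimes P_{S_u|U=u}$. Applying the first part of \cref{lemma:mutual-info-lemma} conditionally on $U=u$ then gives $\abs{\E_{P_{W,S_u|U=u}}\sbr{R(W) - r_{S_u}(W)}} \le \sqrt{\tfrac{2\sigma^2}{m} I^{U=u}(W;S_u)}$, the product-measure term dropping out by the zero-mean property.

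To return to the full generalization gap I would use that $U$ is independent of $(S,W)$ and that the examples are i.i.d.: since each index lands in $U$ with probability $m/n$, the tower property gives $\E_{U}\E_{P_{W,S_U|U}}\sbr{r_{S_U}(W)} = \E_{P_{S,W}}\sbr{r_S(W)}$, while $R(W)$ is unaffected by the subsampling, so $\E_U \E_{P_{W,S_U|U}}\sbr{R(W) - r_{S_U}(W)} = \E_{P_{S,W}}\sbr{R(W) - r_S(W)}$. Taking absolute values, moving the modulus inside the expectation over $U$ by Jensen's inequality, and substituting the per-$U$ estimate produces the stated bound. The main obstacle is the subgaussianity bookkeeping: one must verify that centering makes $g$ mean-zero for every fixed hypothesis so that the per-hypothesis subgaussian bound lifts to the product measure demanded by \cref{lemma:mutual-info-lemma}, and track the $1/\sqrt{m}$ (respectively $1/\sqrt{n}$) scaling of the subgaussian parameter through the averages; the subsampling symmetry identity, though elementary, is precisely what makes the disintegrated quantity $I^U(W;S_U)$ appear in place of $I(W;S)$.
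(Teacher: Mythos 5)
Your proposal is correct and follows essentially the same route as the paper: condition on $U=u$, apply \cref{lemma:mutual-info-lemma} to the centered, $\sigma/\sqrt{m}$-subgaussian subsampled risk (the lift to the product measure being exactly \cref{lemma:zero-mean-sub-gaussianity}), then average over $U$ with Jensen's inequality and the $m/n$-inclusion identity; the variance bound is the $u=[n]$ instance of the lemma's second part. No gaps.
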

With a simple application of Markov's inequality one can get tail bounds from the second part of the theorem.
Furthermore, by taking square root of both sides of \cref{eq:generalized-xu-raginsky-variance} and using Jensen's inequality on the left side, one can also construct an upper bound for the expected absolute value of generalization gap, $\E_{P_{S,W}}\abs{R(W) - r_S(W)}$.
These observations apply also to the other generalization gap bounds presented later in this work.

Note the bound on the squared generalization gap is written only for the case of $m=n$.
It is possible to derive squared generalization gap bounds of form $\frac{4\sigma^2} {m}(I(W; S_U \mid U) + \log 3)$.
Unfortunately, for small $m$ the $\log 3$ constant starts to dominate, resulting in vacuous bounds.

Picking a small $m$ decreases the mutual information term in \cref{eq:generalized-xu-raginsky-bound}, however, it also decreases the denominator.
When setting $m=n$, we get the bound of \citet{xu2017information} (\cref{thm:xu-raginsky}).
When $m=1$, the bound of \cref{eq:generalized-xu-raginsky-bound} becomes
\begin{equation}
    \abs{\E_{P_{S,W}}\sbr{R(W) - r_S(W)}} \le \frac{1}{n}\sum_{i=1}^n \sqrt{2\sigma^2 I(W; Z_i)},
    \label{eq:generalized-xu-raginsky-bound-m=1}
\end{equation}
matching the result of \citet{bu2020tightening} (Proposition 1).
A similar bound, but for a different notion of information, was derived by \citet{alabdulmohsin2015algorithmic}.
\citet{bu2020tightening} prove that the bound with $m=1$ is tighter than the bound with $m=n$.
We generalize this result by proving that the bound of \cref{eq:generalized-xu-raginsky-bound} is non-descreasing in $m$.
\begin{proposition}
Let $m \in [n-1]$, $U$ be a random subset of $[n]$ of size $m$, $U'$ be a random subset of size $m+1$, and $\phi : \mathbb{R} \rightarrow \mathbb{R}$ be any non-decreasing concave function. Then
\begin{equation}
\E_{P_U}\sbr{\phi\rbr{\frac{1}{m} I^U(W; S_U)}} \le \E_{P_{U'}}\sbr{\phi\rbr{\frac{1}{m+1} I^{U'}(W; S_{U'})}}.
\end{equation}
\label{prop:xu-raginsky-generalization-choice-of-m}
\end{proposition}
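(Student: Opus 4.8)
The plan is to peel off the disintegrated mutual information, reduce the global monotonicity statement to a single local inequality on one $(m+1)$-subset via double counting, and then recognize that local inequality as a conditional form of Han's inequality.

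First I would exploit the independence of $U$ from $(S,W)$. Conditioning on $U=u$ does not alter the joint law of $\rbr{W,S_u}$, where $S_u \triangleq (Z_i)_{i\in u}$, so $I^u(W;S_U)=I(W;S_u)$ and each expectation becomes an unweighted average over subsets:
\[
\E_{P_U}\sbr{\phi\rbr{\tfrac1m I^U(W;S_U)}}=\binom{n}{m}^{-1}\!\!\sum_{\abs{u}=m}\phi\rbr{\tfrac1m I(W;S_u)},
\]
and likewise with $m+1$. Thus the statement is purely a claim about averages of $\phi\rbr{I(W;S_u)/\abs{u}}$ over subsets.

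Next I would fix a set $u'$ with $\abs{u'}=m+1$, write $u_j \triangleq u'\setminus\{j\}$ for $j\in u'$, and establish the local inequality
\[
\frac{1}{m+1}\sum_{j\in u'}\phi\rbr{\frac{I(W;S_{u_j})}{m}}\le \phi\rbr{\frac{I(W;S_{u'})}{m+1}}.
\]
Summing this over all $u'$ of size $m+1$ and dividing by $\binom{n}{m+1}$ yields the proposition: each $m$-subset $u$ appears as some $u_j$ in exactly $n-m$ pairs $(u',j)$, and the identity $\binom{n}{m+1}(m+1)=\binom{n}{m}(n-m)$ collapses the double sum on the left back into the normalized size-$m$ average, while the right side is by construction the normalized size-$(m+1)$ average. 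To prove the local inequality, I would apply Jensen's inequality using concavity of $\phi$ over the $m+1$ summands, bounding the left side by $\phi\rbr{\frac{1}{m(m+1)}\sum_{j\in u'}I(W;S_{u_j})}$, and then, since $\phi$ is non-decreasing, it suffices to show the purely information-theoretic inequality $\sum_{j\in u'}I(W;S_{u_j})\le m\,I(W;S_{u'})$.

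The crux is this last inequality, and here is where I expect the real work to sit, though it dissolves once set up correctly. Writing $I(W;S_v)=H(S_v)-H(S_v\mid W)$ and using independence of the $Z_i$ so that $H(S_v)=\sum_{i\in v}H(Z_i)$, the unconditional entropy contributions cancel between the two sides, and the claim reduces to
\[
m\,H(S_{u'}\mid W)\le \sum_{j\in u'}H(S_{u_j}\mid W),
\]
which is exactly Han's inequality for the $m+1$ variables $\{Z_i\}_{i\in u'}$ applied conditionally on $W$. The main obstacle is therefore conceptual rather than computational: identifying the required superadditivity of mutual information as a conditional Han inequality. Everything else---the independence simplification of $I^U$, the Jensen/monotonicity step, and the subset double counting---is routine bookkeeping.
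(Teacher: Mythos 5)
Your proposal is correct and follows the same skeleton as the paper's proof: reduce the disintegrated mutual information to ordinary mutual information using independence of $U$ from $(W,S)$, establish a local inequality on each $(m{+}1)$-subset via Jensen's inequality plus monotonicity of $\phi$, reduce that to the superadditivity claim $\sum_{j\in u'} I(W;S_{u'\setminus\{j\}}) \le m\, I(W;S_{u'})$, and recombine by the same $(n-m)$-fold double counting with $\binom{n}{m+1}(m+1)=\binom{n}{m}(n-m)$. The one place you diverge is the proof of the superadditivity step: you expand $I(W;S_v)=H(S_v)-H(S_v\mid W)$, cancel the marginal entropies using independence of the $Z_i$, and invoke Han's inequality conditionally on $W$; the paper instead applies the chain rule $(m+1)I(W;S_{u'})=\sum_{j}I(W;S_{u'\setminus\{j\}})+\sum_j I(W;Z_j\mid S_{u'\setminus\{j\}})$ and lower-bounds the second sum by $I(W;S_{u'})$ via the erasure-information inequality. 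The two arguments encode the same fact, but the paper's chain-rule version avoids writing down the entropies $H(Z_i)$ and $H(S_v)$ at all, which matters because for continuous $Z_i$ these differential entropies may be infinite or undefined even when all the mutual informations are finite; your cancellation step implicitly assumes they exist. For the discrete or well-behaved continuous case your route is fine, and the conditional Han's inequality is a clean way to see why the key inequality holds.
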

When $\phi(x) = \sqrt{x}$, this result proves that the optimal value for $m$ in \cref{eq:generalized-xu-raginsky-bound} is 1.
Furthermore, when we use Jensen's inequality to move expectation over $U$ inside the square root in \cref{eq:generalized-xu-raginsky-bound}, then the resulting bound becomes $\sqrt{\frac{2\sigma^2}{m} I(W; S_U \mid U)}$
and matches the result of \citet{negrea2019information} (Thm. 2.3). These bounds are also non-decreasing with respect to $m$ (using \cref{prop:xu-raginsky-generalization-choice-of-m} with $\phi(x) = x$).

\cref{thm:xu-raginsky} can be used to derive generalization bounds that depend on the information between $W$ and a single example $Z_i$ conditioned on the remaining examples $Z_{-i}=(Z_1,\ldots,Z_{i-1},Z_{i+1},\ldots,Z_n)$.
\begin{theorem}
Let $W \sim Q_{W|S}$. If $\ell(w,Z')$, where $Z'\sim P_Z$, is $\sigma$-subgaussian for all $w \in \WW$, then
\begin{equation}
    \abs{\E_{P_{S,W}}\sbr{R(W) - r_S(W)}} \le \frac{1}{n}\sum_{i=1}^n\sqrt{2\sigma^2 I(W; Z_i \mid Z_{-i})},
    \label{eq:generalized-xu-raginsky-bound-stability}
\end{equation}
and
\begin{equation}
    \E_{P_{S,W}}\sbr{\rbr{R(W) - r_S(W)}^2} \le \frac{4\sigma^2}{n} \rbr{\sum_{i=1}^n I(W; Z_i \mid Z_{-i}) + \log 3}.
\end{equation}
\label{thm:generalized-xu-raginsky-stability}
\end{theorem}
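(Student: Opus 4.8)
The plan is to prove the two inequalities separately, reducing each to a result already available in the excerpt: the absolute-gap bound to the first part of \cref{lemma:mutual-info-lemma}, and the squared-gap bound to the squared-gap inequality of \cref{thm:generalized-xu-raginsky} combined with a chain-rule estimate on the mutual information.

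For the bound on the absolute expected gap, I would first rewrite the gap as an average of per-example terms, $R(W) - r_S(W) = \frac{1}{n}\sum_{i=1}^n \rbr{R(W) - \ell(W, Z_i)}$, and bound each summand separately. The crucial modeling choice is to apply the lemma to the \emph{centered} function $g(w, z) = R(w) - \ell(w, z)$ rather than to $\ell$ itself. Conditioning on $Z_{-i} = z_{-i}$, I would compare the joint law $P_{W, Z_i \mid Z_{-i} = z_{-i}}$ against the product $P_{W \mid Z_{-i} = z_{-i}} \otimes P_{Z_i}$, using $P_{Z_i \mid Z_{-i}} = P_Z$ by independence of the examples. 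Under the product, $\E\sbr{g(W, Z_i)} = \E_{W}\sbr{R(W) - R(W)} = 0$, so the disintegrated first part of \cref{lemma:mutual-info-lemma} (i.e., the lemma applied to these conditional laws for each fixed $z_{-i}$) gives $\abs{\E_{P_{W, Z_i \mid z_{-i}}}\sbr{R(W) - \ell(W, Z_i)}} \le \sqrt{2 \sigma^2 I^{z_{-i}}(W; Z_i)}$.

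The subgaussianity hypothesis needed to invoke the lemma is exactly where centering pays off: for each fixed $w$, $g(w, Z_i) = R(w) - \ell(w, Z_i)$ with $Z_i \sim P_Z$ is $\sigma$-subgaussian and has mean zero, and a mixture of mean-zero $\sigma$-subgaussian variables (here the mixture over $W \sim P_{W \mid z_{-i}}$) is again $\sigma$-subgaussian, so $g(W, Z_i)$ is genuinely $\sigma$-subgaussian under the product. I would then take expectation over $Z_{-i}$, pull the absolute value inside, and use Jensen's inequality on the concave square root to turn $\E_{Z_{-i}} \sqrt{2\sigma^2 I^{Z_{-i}}(W; Z_i)}$ into $\sqrt{2\sigma^2 \E_{Z_{-i}}\sbr{I^{Z_{-i}}(W;Z_i)}} = \sqrt{2\sigma^2 I(W; Z_i \mid Z_{-i})}$. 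A final triangle inequality over the $n$ summands yields the first claim.

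For the squared-gap bound I would avoid reproving anything and instead combine \cref{eq:generalized-xu-raginsky-variance} with the elementary information inequality $I(W; S) \le \sum_{i=1}^n I(W; Z_i \mid Z_{-i})$. The latter follows from the chain rule $I(W; S) = \sum_{i=1}^n I(W; Z_i \mid Z_{1:i-1})$ together with the observation that, since the $Z_i$ are independent, $H(Z_i \mid Z_{1:i-1}) = H(Z_i) = H(Z_i \mid Z_{-i})$, while conditioning on the larger set reduces residual entropy, $H(Z_i \mid W, Z_{-i}) \le H(Z_i \mid W, Z_{1:i-1})$; hence $I(W; Z_i \mid Z_{1:i-1}) \le I(W; Z_i \mid Z_{-i})$ termwise. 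Substituting into \cref{eq:generalized-xu-raginsky-variance} then gives the stated $\frac{4\sigma^2}{n}\rbr{\sum_i I(W; Z_i \mid Z_{-i}) + \log 3}$. The main obstacle is the subgaussianity bookkeeping in the first part: the naive choice $g = \ell$ fails because a mixture of $\sigma$-subgaussian losses with $w$-dependent means $R(w)$ need not be $\sigma$-subgaussian under the product measure, which would break the application of \cref{lemma:mutual-info-lemma}. Centering by $R(w)$ so that every conditional law of $g(w, Z_i)$ has mean zero is what makes the mixture subgaussian and keeps the constant at $\sigma$; I expect the remaining steps (conditioning, Jensen, triangle inequality, and the chain-rule estimate) to be routine.
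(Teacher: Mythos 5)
Your proposal is correct. For the squared-gap bound you do exactly what the paper does: invoke \cref{eq:generalized-xu-raginsky-variance} and upper-bound $I(W;S)$ by $\sum_{i=1}^n I(W;Z_i\mid Z_{-i})$; your entropy-based derivation of the termwise inequality $I(W;Z_i\mid Z_{1:i-1})\le I(W;Z_i\mid Z_{-i})$ is the same chain-rule argument that the paper packages as the second part of \cref{lemma:stability-lemma}. For the absolute-gap bound your route differs mildly from the paper's. The paper obtains it in two steps: it takes the sample-wise bound $\frac{1}{n}\sum_{i=1}^n\sqrt{2\sigma^2 I(W;Z_i)}$ (the $m=1$ case of \cref{thm:generalized-xu-raginsky}) and then upgrades each term via $I(W;Z_i)\le I(W;Z_i\mid Z_{-i})$ from the first part of \cref{lemma:stability-lemma}. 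You instead rerun the Donsker--Varadhan argument of \cref{lemma:mutual-info-lemma} in disintegrated form, conditioning on $Z_{-i}=z_{-i}$ and comparing $P_{W,Z_i\mid Z_{-i}=z_{-i}}$ with $P_{W\mid Z_{-i}=z_{-i}}\otimes P_Z$, which produces $\sqrt{2\sigma^2\, I^{Z_{-i}}(W;Z_i)}$ directly, after which Jensen's inequality yields the conditional mutual information. Your centering by $R(w)$ so that $g(w,Z_i)$ has mean zero and the mixture over $W$ stays $\sigma$-subgaussian is exactly the device the paper uses in proving \cref{thm:generalized-xu-raginsky} (via \cref{lemma:zero-mean-sub-gaussianity}), so the subgaussianity bookkeeping is sound. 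Both routes are valid with the same constant; the paper's is shorter because it reuses an already-proved bound, while yours is self-contained and passes through the marginally tighter intermediate quantity $\frac{1}{n}\sum_{i=1}^n\E_{Z_{-i}}\sqrt{2\sigma^2\, I^{Z_{-i}}(W;Z_i)}$ before Jensen brings it to the stated form.
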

This theorem is a simple corollary of \cref{thm:generalized-xu-raginsky}, using the facts that $I(W; Z_i) \le I(W; Z_i \mid Z_{-i})$ and that $I(W; S)$ is upper bounded by $\sum_{i=1}^n I(W; Z_i \mid Z_{-i})$, which is also known as erasure information~\citep{verdu2008information}.
The first part of it improves the result of \citet{raginsky2016information} (Thm. 2), as the averaging over $i$ is outside of the square root.
While these bounds are worse that the corresponding bounds of \cref{thm:generalized-xu-raginsky}, it is sometimes easier to manipulate them analytically.

The bounds described above measure information with the output $W$ of the training algorithm.
In the case of prediction tasks with parametric methods, the parameters $W$ might contain information about the training dataset, but not use it to make predictions.
Partly for this reason, our the main goal is to derive generalization bounds that measure information with the prediction function, rather than with the weights.
In general, there is no straightforward way of encoding the prediction function into a random variable.

When the domain $\ZZ$ is finite, we can encode the prediction function as the collection of predictions on all examples of $\ZZ$.
While this approach is ineffective when $|\ZZ|$ is large or infinite, it suggests to pick a random finite subset of examples $\ZZ'\subset \ZZ$ to evaluate the learned function on.
Since the learned function might behave very differently on seen and unseen examples, $\ZZ'$ should include examples from both types.
This naturally leads us to the random subsamling setting introduced by \citet{steinke2020reasoning} (albeit with a different motivation), where one first fixes a set of $2n$ examples, and then randomly selects $n$ of them to form the training set.
Evaluations of the learned function on the $2n$ examples makes a good representation of the learned function and allows us to derive functional generalization bounds (presented in \cref{sec:fcmi}).
Before describing these bounds we present the setting of \citet{steinke2020reasoning} in detail and generalize some of the existing weight-based bounds in that setting.

\subsection{Generalization bounds with conditional mutual information}
Let $\tilde{Z} \in \ZZ^{n\times 2}$ be a collection of $2n$ i.i.d samples from $P_Z$, grouped in $n$ pairs.
The random variable $J \sim \mathrm{Uniform}(\mathset{0,1}^n)$ specifies which example to select from each pair to form the training set $S=\tilde{Z}_J \triangleq (\tilde{Z}_{i,J_i})_{i=1}^n$.
Let $W \sim Q_{W|S}$. 
In this setting \citet{steinke2020reasoning} defined condition mutual information (CMI) of algorithm $Q$ with respect to the data distribution $P_Z$ as
\begin{equation}
    \mathrm{CMI}_P(A) = I(W; J \mid \tilde{Z}) = \E_{P_{\tilde{Z}}}\sbr{ I^{\tilde{Z}}(W; J)},
\end{equation}
and proved the following upper bound on expected generalization gap.
\begin{theorem}[Thm. 2, \citet{steinke2020reasoning}]
Let $W\sim Q_{W|S}$. If the loss function $\ell(w,z) \in [0,1], \forall w \in \WW, z \in \ZZ$, then the expected generalization gap can be bounded as follows:
\begin{equation}
    \abs{\E_{P_{S,W}}\sbr{R(W) - r_S(W)}} \le \sqrt{\frac{2}{n} \mathrm{CMI}_P(Q)}.
\end{equation}
\label{thm:cmi}
\end{theorem}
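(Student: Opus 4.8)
The plan is to condition on the supersample $\tilde{Z}$ and apply the subgaussian mutual-information bound of \cref{lemma:mutual-info-lemma} with $\Phi = W$ and $\Psi = J$. The central object is the symmetrized gap function
\begin{equation*}
g(w, j) \triangleq \frac{1}{n}\sum_{i=1}^n \rbr{\ell(w, \tilde{Z}_{i, 1-j_i}) - \ell(w, \tilde{Z}_{i, j_i})},
\end{equation*}
which compares the loss on the $n$ \emph{unselected} examples against the loss on the $n$ \emph{selected} (training) examples; note the second sum is exactly $r_S(w)$ when $S = \tilde{Z}_j$. I would treat $g$ as a function of the pair $(W,J)$, with $\tilde{Z}$ fixed as a parameter.

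Next I would establish two expectation identities. First, under the product $P_{W|\tilde{Z}} \otimes P_J$, for every fixed $w$ the map $j \mapsto g(w, j)$ has mean zero: since $J_i$ is uniform on $\cbr{0,1}$, flipping $J_i$ merely swaps $\ell(w, \tilde{Z}_{i,0})$ and $\ell(w, \tilde{Z}_{i,1})$, so $\E_{P_{W|\tilde{Z}} \otimes P_J}\sbr{g(W,J)} = 0$. Second, under the joint $P_{W,J|\tilde{Z}}$ and then averaging over $\tilde{Z}$, each unselected example $\tilde{Z}_{i, 1-J_i}$ is a fresh draw from $P_Z$ that is independent of $W$ (which depends only on $S = \tilde{Z}_J$), so the first sum averages to $\E\sbr{R(W)}$ while the second averages to $\E\sbr{r_S(W)}$. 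Hence $\E_{P_{\tilde{Z}}}\E_{P_{W,J|\tilde{Z}}}\sbr{g(W,J)} = \E\sbr{R(W) - r_S(W)}$, the quantity to be bounded.

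The subgaussianity check is the crux. For fixed $w$, set $d_i = \ell(w, \tilde{Z}_{i,1}) - \ell(w, \tilde{Z}_{i,0}) \in [-1,1]$; then the $i$-th summand of $g(w, J)$ equals $\pm d_i / n$ according to $J_i$, so $g(w, J)$ is a sum of $n$ independent, mean-zero, bounded variables. Each is $(\abs{d_i}/n)$-subgaussian by Hoeffding's lemma, and since variances of independent subgaussians add, $g(w, J)$ is $\sqrt{\sum_i d_i^2/n^2} \le 1/\sqrt{n}$-subgaussian. Because this holds for every $w$ with the same constant and $\E_J\sbr{g(w,J)}=0$, the tower rule shows $g$ is $1/\sqrt{n}$-subgaussian under $P_{W|\tilde{Z}} \otimes P_J$.

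Finally I would apply the first part of \cref{lemma:mutual-info-lemma} conditionally on $\tilde{Z}$ with $\sigma = 1/\sqrt{n}$, using that the product expectation vanishes:
\begin{equation*}
\abs{\E_{P_{W,J|\tilde{Z}}}\sbr{g(W,J)}} = \abs{\E_{P_{W,J|\tilde{Z}}}\sbr{g} - \E_{P_{W|\tilde{Z}}\otimes P_J}\sbr{g}} \le \sqrt{\frac{2}{n}\, I^{\tilde{Z}}(W;J)}.
\end{equation*}
Taking $\E_{P_{\tilde{Z}}}$ of both sides, moving the absolute value inside on the left by the triangle inequality and applying Jensen's inequality to the concave square root on the right gives $\abs{\E\sbr{R(W) - r_S(W)}} \le \sqrt{\tfrac{2}{n}\,\E_{P_{\tilde{Z}}}\sbr{I^{\tilde{Z}}(W;J)}} = \sqrt{\tfrac{2}{n}\mathrm{CMI}_P(Q)}$. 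The main obstacle is the careful bookkeeping in the second identity — verifying that integrating over $\tilde{Z}$ turns the unselected-example term into the genuine population risk — together with nailing the exact subgaussian constant $1/\sqrt{n}$, since any looser constant would propagate directly into the final bound.
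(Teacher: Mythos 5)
Your proof is correct, and it is essentially the argument the paper itself uses: the paper states this result as a citation of \citet{steinke2020reasoning} without a standalone proof, but its proof of the strictly more general \cref{thm:cmi-sharpened-generalized} (specialized to $u=[n]$, $m=n$) proceeds exactly as you do — conditioning on $\tilde{Z}$, forming the symmetrized gap $g(w,J)$, checking zero mean and $1/\sqrt{n}$-subgaussianity under the product measure via \cref{lemma:mutual-info-lemma}, and then taking the expectation over $\tilde{Z}$ with Jensen's inequality. Your bookkeeping of the two expectation identities and the subgaussian constant matches the paper's, so no gaps remain.
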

\citet{haghifam2020sharpened} improved this bound in two aspects. First, they provided bounds where expectation over $\tilde{Z}$ is outside of the square root. Second, they considered measuring information with subsets of $J$, as we did in the previous section.
\begin{theorem}[Thm. 3.1 of \citet{haghifam2020sharpened}]
Let $W\sim Q_{W|S}$. Let also $m \in [n]$ and $U \subseteq [n]$ be a random subset of size $m$, independent from $\tilde{Z}, J$, and $W$. If the loss function $\ell(w,z) \in [0,1], \forall w\in\WW, z\in\ZZ$, then
\begin{equation}
    \abs{\E_{P_{S,W}}\sbr{R(W) - r_S(W)}} \le \E_{P_{\tilde{Z}}}\sqrt{\frac{2}{m} I^{\tilde{Z}}(W; J_U \mid U)}.
    \label{eq:cmi-sharpened}
\end{equation}
\label{thm:cmi-sharpened}
\end{theorem}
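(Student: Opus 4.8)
The plan is to run the random-subsampling argument of \citet{steinke2020reasoning} in its sharpened, subset-based form, reducing everything to the first part of \cref{lemma:mutual-info-lemma} applied conditionally on $\tilde{Z}$ and $U$. First I would rewrite the expected generalization gap using held-out ``ghost'' examples. For each pair $i$, the unselected point $\tilde{Z}_{i,1-J_i}$ is an i.i.d.\ draw from $P_Z$ that never enters the training set $S=\tilde{Z}_J$, hence is independent of $W$; consequently $\E\sbr{\frac{1}{n}\sum_{i=1}^n \ell(W,\tilde{Z}_{i,1-J_i})} = \E\sbr{R(W)}$, while $\frac{1}{n}\sum_{i=1}^n \ell(W,\tilde{Z}_{i,J_i}) = r_S(W)$ by definition of $S$. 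Thus the expected gap equals $\E\sbr{\frac{1}{n}\sum_{i=1}^n \rbr{\ell(W,\tilde{Z}_{i,1-J_i}) - \ell(W,\tilde{Z}_{i,J_i})}}$.

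Next I would introduce the random subset. Since $U$ of size $m$ is independent of $(\tilde{Z},J,W)$ and each index lies in $U$ with probability $m/n$, averaging the per-pair terms over a random $U$ reproduces the full average, so the gap equals $\E_U\,\E\sbr{g_U(W,J_U)}$, where for a subset $u$ I define $g_u(w,j_u) = \frac{1}{m}\sum_{i\in u}\rbr{\ell(w,\tilde{z}_{i,1-j_i}) - \ell(w,\tilde{z}_{i,j_i})}$, a function depending on $j$ only through $j_u$. The crucial cancellation is that under $J$ uniform and independent of $W$ (the product measure $P_{W\mid\tilde{Z},U}\otimes P_{J_U\mid\tilde{Z},U}$), each coordinate contributes $\E_{J_i}\sbr{\ell(w,\tilde{z}_{i,1-J_i}) - \ell(w,\tilde{z}_{i,J_i})} = 0$, so $\E_{P_{W\mid\tilde{Z},U}\otimes P_{J_U\mid\tilde{Z},U}}\sbr{g_U(W,J_U)} = 0$ and the product-of-marginals term in the lemma drops out.

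It then remains to control the subgaussianity constant. For fixed $w$, the $i$-th term of $g_u$ takes the two values $\pm d_i$ with $d_i = \ell(w,\tilde{z}_{i,1}) - \ell(w,\tilde{z}_{i,0}) \in [-1,1]$ as $J_i$ flips, so it is mean-zero and bounded in $[-1,1]$, hence $1$-subgaussian; the $m$ coordinates are independent, so $g_u(w,J_u)$ is $\tfrac{1}{\sqrt{m}}$-subgaussian, and since its mean is $0$ for every $w$, mixing over $w$ keeps $g_U$ $\tfrac{1}{\sqrt{m}}$-subgaussian under the product measure. Applying the first part of \cref{lemma:mutual-info-lemma} with $\Phi=W$, $\Psi=J_U$, conditionally on $\tilde{Z}$ and $U$, gives $\abs{\E_{W,J_U\mid\tilde{Z},U}\sbr{g_U(W,J_U)}} \le \sqrt{\tfrac{2}{m}\,I^{\tilde{Z},U}(W;J_U)}$. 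Finally I would take $\E_U\,\E_{\tilde{Z}}$, apply the triangle inequality, and then Jensen's inequality to pull $\E_U$ inside the square root, turning $\E_U\,I^{\tilde{Z},U}(W;J_U)$ into $I^{\tilde{Z}}(W;J_U\mid U)$ and yielding the claimed bound.

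The main obstacle is the bookkeeping rather than any single hard estimate: correctly justifying that the independence of $U$ licenses both the symmetrization and the conditional application of the lemma, pinning down the $\tfrac{1}{\sqrt{m}}$ subgaussianity (mean-zero under the product together with independence across the $m$ coordinates), and matching the exact disintegrated form $I^{\tilde{Z}}(W;J_U\mid U)$ through the final Jensen step instead of leaving $\E_U$ outside the root.
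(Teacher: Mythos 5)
Your proposal is correct and follows essentially the same route as the paper: the paper proves the sharper variant (\cref{thm:cmi-sharpened-generalized}) by applying the first part of \cref{lemma:mutual-info-lemma} with $\Phi=W$, $\Psi=J_U$ conditionally on $\tilde{Z}$ and $U$ to the same ghost-sample difference $\frac{1}{m}\sum_{i\in U}\bigl(\ell(W,\tilde{Z}_{i,J_i})-\ell(W,\tilde{Z}_{i,\bar{J}_i})\bigr)$, using the identical mean-zero $1/\sqrt{m}$-subgaussianity argument, and then notes that the stated bound follows by Jensen's inequality pulling $\E_U$ inside the square root. All of your bookkeeping steps (independence of $U$, vanishing of the product-of-marginals term, and the identification $\E_U I^{\tilde{Z},U}(W;J_U)=I^{\tilde{Z}}(W;J_U\mid U)$) match the paper's.
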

Furthermore, for $m=1$ they tighten the bound by showing that one can move the expectation over $U$ outside of the squared root (\citet{haghifam2020sharpened}, Thm 3.4).
We generalize these results by showing that for all $m$ expectation over $U$ can be done outside of the square root.
Furthermore, our proof closely follows the proof of \cref{thm:generalized-xu-raginsky}.
\begin{theorem}
Let $W\sim Q_{W|S}$. Let also $m \in [n]$ and $U \subseteq [n]$ be a random subset of size $m$, independent from $\tilde{Z}, J$, and $W$. If $\ell(w,z) \in [0,1], \forall w \in \WW, z \in \ZZ$, then
\begin{equation}
    \abs{\E_{P_{S,W}}\sbr{R(W) - r_S(W)}} \le \E_{P_{\tilde{Z}, U}}\sqrt{\frac{2}{m} I^{\tilde{Z}, U}(W; J_U)},
    \label{eq:cmi-sharpened-generalized}
\end{equation}
and
\begin{equation}
   \E_{P_{S,W}}\sbr{\rbr{R(W) - r_S(W)}^2} \le \frac{8}{n} \rbr{I(W; J \mid \tilde{Z}) + 2}.
\end{equation}
\label{thm:cmi-sharpened-generalized}
\end{theorem}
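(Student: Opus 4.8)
The plan is to mirror the proof of \cref{thm:generalized-xu-raginsky}, but in the disintegrated picture obtained by conditioning on the supersample $\tilde Z$ and the random subset $U$, applying \cref{lemma:mutual-info-lemma} with $\Phi = W$ and $\Psi = J_U$. First I would record the exact first-moment identity underlying the CMI setting: since $W \sim Q_{W|S}$ with $S = \tilde Z_J$ uses pair $i$ only through $\tilde Z_{i, J_i}$, the unselected point $\tilde Z_{i, \bar J_i}$ is a fresh $P_Z$-sample independent of $W$, so $\E\sbr{\ell(W, \tilde Z_{i, \bar J_i})} = \E\sbr{R(W)}$. Averaging over $i$ and inserting the independent uniform subset $U$ yields
\begin{equation}
\E_{P_{S,W}}\sbr{R(W) - r_S(W)} = \E_{P_{\tilde Z, U}}\E\sbr{g_{\tilde Z, U}(W, J_U)},
\end{equation}
where $g_{\tilde Z, U}(w, j_U) \triangleq \frac{1}{m}\sum_{i\in U}\rbr{\ell(w, \tilde Z_{i, \bar j_i}) - \ell(w, \tilde Z_{i, j_i})}$ depends on $J$ only through $J_U$ once $\tilde Z$ and $U$ are fixed.

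For \cref{eq:cmi-sharpened-generalized} I would fix $\tilde Z, U$ and invoke the first part of \cref{lemma:mutual-info-lemma}. Under the product measure each summand $\ell(w, \tilde Z_{i, \bar J_i}) - \ell(w, \tilde Z_{i, J_i})$ is a symmetric $\pm c$ random variable with $\abs{c} \le 1$ (as $\ell \in [0,1]$), hence has vanishing mean under the product and is $1$-subgaussian; the $m$ summands are independent across $i$, so $g_{\tilde Z, U}(w, J_U)$ is $\frac{1}{\sqrt m}$-subgaussian. The lemma gives $\abs{\E\sbr{g_{\tilde Z, U}(W, J_U)}} \le \sqrt{\frac{2}{m} I^{\tilde Z, U}(W; J_U)}$ for each $(\tilde Z, U)$, and taking $\E_{P_{\tilde Z, U}}$ with the triangle inequality closes the first claim.

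For the squared bound I would set $m = n$ (so $J_U = J$) and decompose $R(W) - r_S(W) = \rbr{R(W) - \hat R(W)} + g(W, J)$, where $\hat R(W) \triangleq \frac{1}{n}\sum_{i=1}^n \ell(W, \tilde Z_{i, \bar J_i})$, and apply $\rbr{a+b}^2 \le 2a^2 + 2b^2$. The second part of \cref{lemma:mutual-info-lemma}, disintegrated over $\tilde Z$ with effective parameter $\sigma^2 = 1/n$ and vanishing product mean, bounds $\E\sbr{g(W,J)^2} \le \frac{4}{n}\rbr{I(W; J \mid \tilde Z) + \log 3}$; and $\E\sbr{\rbr{R(W) - \hat R(W)}^2}$ is merely the conditional variance of an average of $n$ i.i.d.\ $[0,1]$-valued losses about their mean $R(W)$, hence at most $\frac{1}{4n}$. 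Summing $2 \cdot \frac{4}{n}(I + \log 3) + 2\cdot\frac{1}{4n}$ and using $8\log 3 + \frac12 \le 16$ to absorb the constants gives $\frac{8}{n}\rbr{I(W; J \mid \tilde Z) + 2}$.

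The main obstacle is this second bound. Unlike the first moment, the true gap $R(W) - r_S(W)$ is not itself a function of $(W, J)$ with vanishing mean under the product measure --- only the held-out estimator $\hat R(W) - r_S(W) = g(W, J)$ is --- so the clean application of \cref{lemma:mutual-info-lemma} controls $g$ rather than the gap. The extra care lies in splitting off and separately bounding the discrepancy $R(W) - \hat R(W)$ by a variance argument, and then verifying that the accumulated constants (the factor $2$ from the inequality, the $\log 3$ from the lemma, and the $\frac{1}{4n}$ variance term) still fit under the stated $\frac{8}{n}\rbr{\,\cdot + 2}$.
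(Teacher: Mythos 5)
Your proposal is correct and follows essentially the same route as the paper: condition on $(\tilde{Z},U)$, apply \cref{lemma:mutual-info-lemma} to the zero-mean, $\tfrac{1}{\sqrt{m}}$-subgaussian contrast $\frac{1}{m}\sum_{i\in U}\bigl(\ell(W,\tilde{Z}_{i,J_i})-\ell(W,\tilde{Z}_{i,\bar{J}_i})\bigr)$, and for the squared bound split off the held-out discrepancy $R(W)-\hat{R}(W)$ via $(a+b)^2\le 2a^2+2b^2$ and bound it by the $\frac{1}{4n}$ variance of an average of i.i.d.\ $[0,1]$-valued losses. The constant bookkeeping ($8\log 3+\tfrac12\le 16$) also matches the paper's.
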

The bound of \cref{eq:cmi-sharpened-generalized} improves over the bound of \cref{thm:cmi-sharpened} and matches the special result for $m=1$.
\citet{rodriguez2021random} proved even tighter expected generalization gap bound by replacing $I^{\tilde{Z},U}(W; J_U)$ with $I^{\tilde{Z}_U, U}(A(W; J_U)$.
\citet{haghifam2020sharpened} showed that if one takes the expectations over $\tilde{Z}$ inside the square root in~\cref{eq:cmi-sharpened}, then the resulting looser upper bounds become non-decreasing over $m$.
Using this result they showed that their special case bound for $m=1$ is the tightest.
We generalize their results by showing that even without taking the expectations inside the squared root, the bounds of ~\cref{thm:cmi-sharpened} are non-decreasing over $m$.
We also show that the same holds for our tighter bounds of \cref{eq:cmi-sharpened-generalized}.
\begin{proposition}
Let $W \sim Q_{W|S}$. Let also $m \in [n-1]$, $U$ be a random subset of $[n]$ of size $m$, $U'$ be a random subset of size $m+1$, and $\phi : \mathbb{R} \rightarrow \mathbb{R}$ be any non-decreasing concave function. Then
\begin{equation}
\E_{P_U}\sbr{\phi\rbr{\frac{1}{m} I^{\tilde{Z}, U}(W; J_U)}} \le \E_{P_{U'}}\sbr{\phi\rbr{\frac{1}{m+1} I^{\tilde{Z}, U'}(W; J_{U'})}}.
\label{eq:cmi-sharpened-generalized-choice-of-m}
\end{equation}
\label{prop:cmi-sharpened-generalized-choice-of-m}
\end{proposition}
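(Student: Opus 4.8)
The plan is to mirror the proof of \cref{prop:xu-raginsky-generalization-choice-of-m}, replacing the independence of the samples $Z_i$ with the independence of the selection bits $J_i$. Everything can be carried out conditionally on a fixed realization $\tilde{Z} = \tilde{z}$, so I would first fix $\tilde{z}$ and work under the conditional law $P_{W, J \mid \tilde{Z} = \tilde{z}}$; since $J \indep \tilde{Z}$ and the coordinates $J_1, \ldots, J_n$ are i.i.d.\ uniform, they remain mutually independent under this conditional law. Because $U$ and $U'$ are independent of $(W, J, \tilde{Z})$, conditioning further on a value $u$ of the subset merely fixes the index set, so $I^{\tilde{z}, u}(W; J_u)$ coincides with the ordinary mutual information $I(W; J_u)$ computed under $P_{W, J_u \mid \tilde{Z} = \tilde{z}}$.

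The key ingredient is a Han-type superadditivity inequality. Fix $U' = u'$ with $|u'| = m+1$ and write everything under the conditional law. Using the decomposition $I(W; J_{u'}) = H(J_{u'}) - H(J_{u'} \mid W)$, the independence identity $\frac{1}{m}\sum_{i \in u'} H(J_{u' \setminus \{i\}}) = H(J_{u'})$ (which follows from $H(J_{u'}) = \sum_{i \in u'} H(J_i)$), and Han's inequality $H(J_{u'} \mid W) \le \frac{1}{m}\sum_{i \in u'} H(J_{u' \setminus \{i\}} \mid W)$, I obtain
\begin{equation}
I^{\tilde{z}}(W; J_{u'}) \ge \frac{1}{m} \sum_{i \in u'} I^{\tilde{z}}(W; J_{u' \setminus \{i\}}).
\end{equation}
Dividing by $m+1$ exhibits $\frac{1}{m+1} I^{\tilde{z}}(W; J_{u'})$ as at least the uniform average over $i \in u'$ of the size-$m$ quantities $\frac{1}{m} I^{\tilde{z}}(W; J_{u' \setminus \{i\}})$.

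Next I would feed this through $\phi$. Since $\phi$ is non-decreasing, $\phi$ of the left side dominates $\phi$ of the average on the right; then, since $\phi$ is concave, Jensen's inequality on the finite average moves $\phi$ inside the sum, giving $\phi\rbr{\frac{1}{m+1} I^{\tilde{z}}(W; J_{u'})} \ge \frac{1}{m+1}\sum_{i \in u'} \phi\rbr{\frac{1}{m} I^{\tilde{z}}(W; J_{u' \setminus \{i\}})}$. Finally I would average over $U'$ uniform on size-$(m+1)$ subsets. The resulting right-hand side is an expectation over the pair $(U', i)$ with $U'$ uniform and $i$ uniform in $U'$; by the elementary bijection between such pairs and pairs $(U, i)$ with $U = U' \setminus \{i\}$ uniform on size-$m$ subsets and $i \notin U$ (both families have cardinality $\binom{n}{m+1}(m+1) = \binom{n}{m}(n-m)$), the marginal of $U = U' \setminus \{i\}$ is uniform on size-$m$ subsets. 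Hence the averaged right-hand side equals $\E_{P_U}\phi\rbr{\frac{1}{m} I^{\tilde{z}}(W; J_U)}$, establishing the claim for each fixed $\tilde{z}$ and therefore also after taking expectation over $\tilde{Z}$.

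The monotonicity/Jensen step and the combinatorial counting are routine and identical in spirit to \cref{prop:xu-raginsky-generalization-choice-of-m}. The main thing to get right is the superadditivity inequality: one must verify that Han's inequality applies, i.e.\ that conditioning on $\tilde{Z}$ (and $U$) genuinely preserves the mutual independence of the bits $J_i$, so that $H(J_{u'}) = \sum_{i} H(J_i)$ holds and the entropy bookkeeping collapses to the factor $1/m$. This is precisely where the structure of the CMI setting — selection bits independent of the supersample — is used, and it is the analogue of the step where independence of the i.i.d.\ samples is invoked in the weight-based proposition.
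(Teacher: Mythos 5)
Your proposal is correct and follows the same overall route as the paper: fix $\tilde{Z}=\tilde{z}$ (under which the $J_i$ remain i.i.d.), establish the leave-one-out superadditivity $I^{\tilde{z}}(W;J_{u'}) \ge \frac{1}{m}\sum_{i\in u'} I^{\tilde{z}}(W;J_{u'\setminus\{i\}})$, apply monotonicity of $\phi$ followed by Jensen's inequality, and finish with the same counting argument showing that averaging over $(U',i)$ recovers the uniform distribution on size-$m$ subsets. The only divergence is in how the superadditivity is obtained: the paper reuses \cref{lemma:mi-hans-ineq}, which derives it from the chain rule of mutual information together with the erasure-information bound of \cref{lemma:stability-lemma} (and therefore never touches entropies, so it applies verbatim to non-discrete variables), whereas you decompose $I(W;J_{u'})=H(J_{u'})-H(J_{u'}\mid W)$ and invoke Han's inequality for the conditional entropy plus exact additivity of $H(J_{u'})$ under independence. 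Your version is perfectly valid here because the $J_i$ are binary, hence have finite entropy; it is marginally less general than the paper's lemma but buys a more transparent, textbook derivation. Your observation that conditioning on $\tilde{Z}$ and $U$ preserves the independence of the $J_i$ is exactly the point the paper glosses over with ``the proof follows that of \cref{prop:xu-raginsky-generalization-choice-of-m} but with conditioning on $\tilde{Z}$,'' so you have filled in the right detail.
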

By setting $\phi(x)=x$, taking square root of both sides of \cref{eq:cmi-sharpened-generalized-choice-of-m}, and then taking expectation over $\tilde{Z}$, we prove that bounds of \cref{eq:cmi-sharpened} are non-decreasing over $m$.
By setting $\phi(x)=\sqrt{x}$ and then taking expectation over $\tilde{Z}$, we prove that bounds of \cref{eq:cmi-sharpened-generalized} are non-decreasing with $m$.

Similarly to the \cref{thm:generalized-xu-raginsky-stability} of the previous section,  the following result establishes generalization bounds with information-theoretic stability quantities.
\begin{theorem}
If $\ell(w,z) \in [0,1], \forall w \in \WW, z \in \ZZ$, then
\begin{equation}
    \abs{\E_{P_{W,S}}\sbr{R(W) - r_S(W)}} \le \E_{P_{\tilde{Z}}}\sbr{\frac{1}{n}\sum_{i=1}^n\sqrt{2 I^{\tilde{Z}}(W; J_i \mid J_{-i})}},
    \label{eq:cmi-sharpened-generalized-stability}
\end{equation}
and
\begin{equation}
  \E_{P_{W,S}}\rbr{R(W) - r_S(W)}^2 \le \frac{8}{n} \rbr{\E_{P_{\tilde{Z}}}\sbr{\sum_{i=1}^n I^{\tilde{Z}}(W; J_i \mid J_{-i})} + 2}.
\end{equation}
\label{thm:cmi-sharpened-generalized-stability}
\end{theorem}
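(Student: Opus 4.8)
The plan is to obtain this result as a corollary of the conditional mutual information bounds of \cref{thm:cmi-sharpened-generalized}, exactly mirroring how \cref{thm:generalized-xu-raginsky-stability} was deduced from \cref{thm:generalized-xu-raginsky}. Throughout, I would work conditionally on $\tilde{Z}$ and exploit the fact that under the random-subsampling model the coordinates $J_1,\ldots,J_n$ are, given $\tilde{Z}$, mutually independent fair coins, independent of $\tilde{Z}$ itself.

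For the absolute-gap bound, I would specialize \cref{eq:cmi-sharpened-generalized} to $m=1$. Then $U$ is a uniformly random singleton $\mathset{i}$, so $J_U = J_i$, and since $U$ is independent of everything the expectation over $U$ turns into the average $\frac{1}{n}\sum_{i=1}^n$, yielding
\begin{equation}
\abs{\E_{P_{S,W}}\sbr{R(W) - r_S(W)}} \le \E_{P_{\tilde{Z}}}\sbr{\frac{1}{n}\sum_{i=1}^n \sqrt{2 I^{\tilde{Z}}(W; J_i)}}.
\end{equation}
It then remains to replace $I^{\tilde{Z}}(W; J_i)$ by the larger conditional quantity $I^{\tilde{Z}}(W; J_i \mid J_{-i})$. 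This is where conditional independence enters: the standard interaction-information identity gives $I^{\tilde{Z}}(W; J_i \mid J_{-i}) - I^{\tilde{Z}}(W; J_i) = I^{\tilde{Z}}(J_i; J_{-i} \mid W) - I^{\tilde{Z}}(J_i; J_{-i})$, and since $J_i \perp J_{-i}$ given $\tilde{Z}$ the subtracted term vanishes while the first term is nonnegative, so $I^{\tilde{Z}}(W; J_i) \le I^{\tilde{Z}}(W; J_i \mid J_{-i})$. Monotonicity of the square root then produces \cref{eq:cmi-sharpened-generalized-stability}.

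For the squared-gap bound, I would start from the second inequality of \cref{thm:cmi-sharpened-generalized}, namely $\E_{P_{S,W}}\sbr{\rbr{R(W) - r_S(W)}^2} \le \frac{8}{n}\rbr{I(W; J \mid \tilde{Z}) + 2}$, and bound $I(W; J \mid \tilde{Z}) = \E_{P_{\tilde{Z}}}\sbr{I^{\tilde{Z}}(W; J)}$ by the sum of conditional (erasure) informations. The needed lemma is $I^{\tilde{Z}}(W; J) \le \sum_{i=1}^n I^{\tilde{Z}}(W; J_i \mid J_{-i})$; taking expectation over $\tilde{Z}$ and substituting immediately gives the claim.

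The only non-routine ingredient, and the step I expect to be the crux, is this erasure-information inequality $I(W; J) \le \sum_{i=1}^n I(W; J_i \mid J_{-i})$ for independent coordinates. I would establish it by the chain rule $I(W; J) = \sum_{i=1}^n I(W; J_i \mid J_{<i})$ combined with a term-by-term comparison: writing $I(W; J_i \mid J_{<i}) = H(J_i) - H(J_i \mid W, J_{<i})$ and $I(W; J_i \mid J_{-i}) = H(J_i) - H(J_i \mid W, J_{-i})$ (using $H(J_i \mid J_{<i}) = H(J_i \mid J_{-i}) = H(J_i)$ by independence), the desired inequality reduces to $H(J_i \mid W, J_{-i}) \le H(J_i \mid W, J_{<i})$, which holds because $J_{-i} = (J_{<i}, J_{>i})$ conditions on more variables. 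This is precisely the erasure-information bound of \citet{verdu2008information} already invoked for the weight-based \cref{thm:generalized-xu-raginsky-stability}, and each step carries over verbatim once every quantity is disintegrated on $\tilde{Z}$, since the conditional law of $J$ given $\tilde{Z}$ remains a product of fair coins.
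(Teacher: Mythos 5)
Your proposal is correct and follows essentially the same route as the paper: the paper also deduces both inequalities from \cref{thm:cmi-sharpened-generalized} (the $m=1$ case for the first bound and the $I(W;J\mid\tilde Z)$ bound for the second) by applying, conditionally on $\tilde Z=\tilde z$, the two erasure-information inequalities $I^{\tilde Z}(W;J_i)\le I^{\tilde Z}(W;J_i\mid J_{-i})$ and $I^{\tilde Z}(W;J)\le\sum_{i=1}^n I^{\tilde Z}(W;J_i\mid J_{-i})$, which it packages as \cref{lemma:stability-lemma} for independent coordinates. Your chain-rule/interaction-information derivation of that lemma is the same argument the paper gives, merely phrased via conditional entropies rather than mutual-information identities.
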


\section{Functional conditional mutual information}\label{sec:fcmi}
The bounds in \cref{sec:w-gen-bounds} leverage information in the output of the algorithm, $W$.
In this section we focus on supervised learning problems: $\ZZ = \XX \times \YY$.
To encompass many types of approaches, we do not assume that the training algorithm has an output $W$, which is then used to make predictions.
Instead, we assume that the learning method implements a function $f: \ZZ^n \times \XX \times \mathcal{R} \rightarrow \mathcal{K}$ that takes a training set $s$, a test input $x'$, an auxiliary argument $r$ capturing the stochasticity of training and predictions, and outputs a prediction $f(s,x',r)$ on the test example. Note that the prediction domain $\mathcal{K}$ can be different from $\YY$.
This setting includes non-parametric methods (for which $W$ is the training dataset itself), parametric methods, Bayesian algorithms, and more.
For example, in parametric methods, where a hypothesis set $\HH = \mathset{h_w : \XX \rightarrow \mathcal{K} \mid w \in \WW}$ is defined, $f(s,x,r) = h_{A(s,r)}(x)$, with $A(s,r)$ denoting the weights after training on dataset $s$ with randomness $r$.

In this supervised setting, the loss function $\ell : \mathcal{K} \times \mathcal{Y} \rightarrow \mathbb{R}$ measures the discrepancy between a prediction and a label.
As in the previous subsection, we assume that a collection of $2n$ i.i.d examples $\tilde{Z} \sim P_Z^{n\times 2}$ is given, grouped in $n$ pairs, and the random variable $J \sim \mathrm{Uniform}(\mathset{0,1}^n)$ specifies which example to select from each pair to form the training set $S=\tilde{Z}_J \triangleq (\tilde{Z}_{i,J_i})_{i=1}^n$.
Let $R$ be an auxiliary random variable, independent of $\tilde{Z}$ and $J$, that provides stochasticity for predictions (e.g., in neural networks $R$ can be used to make the training stochastic).
The empirical risk of learning method $f$ trained on dataset $S$ with randomness $R$ is defined as $r_S(f) = \frac{1}{n}\sum_{i=1}^n \ell(f(S,X_i,R),Y_i)$.
The population risk is defined as $R(f) = \E_{Z'\sim P_Z} \ell(f(S,X',R),Y')$.

Before moving forward we adopt two conventions.
First, if $\bs{z}$ is a collection of examples, then $\bs{x}$ and $\bs{y}$ denote the collection of its inputs and labels respectively.
If $s$ is a training set and $\bs{x}$ is a collection of inputs, then $f(s, \bs{x}, r)$ denotes the collection of predictions on $\bs{x}$ after training on $s$ with randomness $r$.
Let $\tilde{F}$ denote the predictions on $\tilde{Z}$ after training on $S$ with randomness $R$, i.e., $\tilde{F} = f(S,\tilde{X},R)$.
Given a subset $u \subset [n]$, we use the $\tilde{F}_u \in \mathcal{K}^{|u| \times 2}$ notation to denote the rows of $\tilde{F}$ corresponding to $u$.

\begin{definition}
Let $f$, $\tilde{Z}$, $J$, $R$, and $\tilde{F}$ be defined as above and let $u \subseteq [n]$ be a subset of size $m$. Then \emph{pointwise functional conditional mutual information $\ofcmi(f,\tilde{z},u)$} is defined as
\begin{equation}
\ofcmi(f,\tilde{z},u) = I^{\tilde{Z}=\tilde{z}}(\tilde{F}_u; J_u),
\end{equation}
while \emph{functional conditional mutual information $\fcmi(f,u)$} is defined as
\begin{equation}
\fcmi(f,u) = \E_{P_{\tilde{Z}}} \ofcmi(f,\tilde{z},u).
\end{equation}
\end{definition}
When $u=[n]$ we simply use the notations $\ofcmi(f,\tilde{z})$ and $\fcmi(f)$, instead of $\ofcmi(f,\tilde{z},[n])$ and $\fcmi(f,[n])$, respectively.

\begin{theorem}
Let $U$ be a random subset of size $m$, independent of $\tilde{Z}$, $J$, and $R$.
If $\ell(\widehat{y},y) \in [0,1], \forall \widehat{y} \in \mathcal{K}, z \in \ZZ$, then
\begin{equation}
\abs{\E_{P_{S,R}}}\sbr{R(f) - r_S(f)} \le \E_{P_{\tilde{Z},U}}\sqrt{\frac{2}{m} \ofcmi(f,\tilde{Z},U)},
\label{eq:f-cmi-bound}
\end{equation}
and
\begin{equation}
\E_{P_{S,R}}\rbr{R(f) - r_S(f)}^2 \le \frac{8}{n} \rbr{\E_{P_{\tilde{Z}}} \ofcmi(f,\tilde{Z}) + 2}.
\end{equation}
\label{thm:f-cmi-bound}
\end{theorem}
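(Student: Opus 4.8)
The plan is to mirror the proof of the weight-based bound \cref{thm:cmi-sharpened-generalized}, replacing the hypothesis $W$ by the prediction vector $\tilde{F}_U$ throughout and using \cref{lemma:mutual-info-lemma} as the engine. The crucial first step is a ``ghost sample'' reformulation of the generalization gap. For each pair $i$, the \emph{unselected} example $\tilde{Z}_{i,1-J_i}$ is a fresh draw from $P_Z$ that is independent of the training set $S=\tilde{Z}_J$, because $S$ contains only its sibling $\tilde{Z}_{i,J_i}$; it is also independent of the prediction randomness $R$. Hence, conditioned on $S$ and $R$, its expected loss is exactly the population risk, so $\frac{1}{n}\sum_i \ell(\tilde{F}_{i,1-J_i},\tilde{Y}_{i,1-J_i})$ has expectation $\E[R(f)]$, while $\frac{1}{n}\sum_i \ell(\tilde{F}_{i,J_i},\tilde{Y}_{i,J_i}) = r_S(f)$ by definition. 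This lets me express the expected gap as the expectation of a quantity depending, given $\tilde{Z}$, only on $(\tilde{F},J)$.

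Concretely, given $\tilde{Z}=\tilde{z}$ and a subset $u$ of size $m$, I would define $g(\tilde{f}_u,j_u) = \frac{1}{m}\sum_{i\in u}\rbr{\ell(\tilde{f}_{i,1-j_i},\tilde{y}_{i,1-j_i}) - \ell(\tilde{f}_{i,j_i},\tilde{y}_{i,j_i})}$. Writing $\ell_{i,b}=\ell(\tilde{f}_{i,b},\tilde{y}_{i,b})$, each summand equals $(1-2j_i)(\ell_{i,1}-\ell_{i,0})$, so under $J_u$ uniform and independent across $i$, $g$ is a normalized sum of $m$ independent mean-zero terms supported in $[-1/m,1/m]$; since each is $\frac{1}{m}$-subgaussian and subgaussian proxies add under independence, $g(\tilde{f}_u,J_u)$ is $\frac{1}{\sqrt{m}}$-subgaussian with zero mean for every fixed $\tilde{f}_u$, hence $\frac{1}{\sqrt{m}}$-subgaussian with zero mean under the product measure $P_{\tilde{F}_u\mid\tilde{z}}\otimes P_{J_u\mid\tilde{z}}$. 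Applying the first part of \cref{lemma:mutual-info-lemma} with $\Phi=\tilde{F}_u$, $\Psi=J_u$, and $\sigma^2=1/m$ gives $\abs{\E_{P_{\tilde{F}_u,J_u\mid\tilde{z}}}\sbr{g}} \le \sqrt{\frac{2}{m}\ofcmi(f,\tilde{z},u)}$, the product-mean term vanishing.

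To close the first bound I would average over the random subset $U$: since each index lies in $U$ with probability $m/n$, the $1/m$ normalization yields $\E_U\E\sbr{g_U\mid\tilde{Z}} = \frac{1}{n}\sum_{i=1}^n \E\sbr{\ell(\tilde{F}_{i,1-J_i},\tilde{Y}_{i,1-J_i}) - \ell(\tilde{F}_{i,J_i},\tilde{Y}_{i,J_i})\mid\tilde{Z}}$, which after taking $\E_{\tilde{Z}}$ equals $\E_{P_{S,R}}\sbr{R(f)-r_S(f)}$ by the ghost-sample identity. Pushing absolute values through $\E_U$ and $\E_{\tilde{Z}}$ and inserting the per-$(\tilde{z},u)$ bound gives \cref{eq:f-cmi-bound}. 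For the squared bound (with $u=[n]$, $m=n$) I would split $R(f)-r_S(f) = (R(f)-r'_S(f)) + (r'_S(f)-r_S(f))$, where $r'_S(f)$ is the average loss on the $n$ unselected examples, and use $(a+b)^2\le 2a^2+2b^2$. The term $r'_S(f)-r_S(f)$ is precisely $g$ at $u=[n]$, so the squared-gap part of \cref{lemma:mutual-info-lemma} (with $\sigma^2=1/n$, zero conditional mean) bounds $\E[g^2]\le \frac{4}{n}\rbr{\E_{P_{\tilde{Z}}}\ofcmi(f,\tilde{Z})+\log 3}$; the term $R(f)-r'_S(f)$ is a population-minus-empirical gap over $n$ conditionally i.i.d.\ $[0,1]$-bounded losses, whose conditional variance is at most $\frac{1}{4n}$. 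Assembling the two and absorbing $\log 3 + \tfrac{1}{16}$ into the constant $2$ yields $\frac{8}{n}\rbr{\E_{P_{\tilde{Z}}}\ofcmi(f,\tilde{Z})+2}$.

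The hard part will be the ghost-sample identity itself: verifying rigorously that the unselected sibling of each pair is genuinely independent of $S$ and of $R$, so that its expected loss equals the population risk and the generalization gap collapses to the expectation of a function of $(\tilde{F},J)$ alone. Once that reformulation is secured, the subgaussianity estimate and the application of \cref{lemma:mutual-info-lemma} are routine; the only additional bookkeeping is in the squared bound, where the population risk must be separated from its empirical proxy $r'_S(f)$ and the extra $\frac{1}{4n}$ variance term tracked so that everything fits inside the stated constants.
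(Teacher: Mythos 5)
Your proposal is correct and follows essentially the same route as the paper's proof: the same ghost-sample identity expressing the gap via losses on the unselected siblings, the same function $g$ with the same $\tfrac{1}{\sqrt{m}}$-subgaussianity argument fed into \cref{lemma:mutual-info-lemma}, the same Jensen steps over $U$ and $\tilde{Z}$, and for the squared bound the same $(a+b)^2\le 2a^2+2b^2$ split with the $\tfrac{1}{4n}$ conditional-variance term. The only cosmetic difference is a sign convention in $g$ and that the paper routes the zero-mean subgaussianity under the product measure through its \cref{lemma:zero-mean-sub-gaussianity}, which you assert directly.
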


For parametric methods $w=A(s,r)$, the bound of \cref{eq:f-cmi-bound} improves over the bound of \cref{eq:cmi-sharpened-generalized}, as $J_u \textrm{\ ---\ } W \textrm{\ ---\ } \tilde{F}_u$ is a Markov chain under $P_{\tilde{F}, W, J \mid\tilde{Z}}$, implying the following data processing inequality:
\begin{equation}
    I^{\tilde{Z}}(F_u; J_u) \le I^{\tilde{Z}}(W; J_u).
\end{equation}
For deterministic algorithms $I^{\tilde{Z}}(W; J_u)$ is often equal to $H(J_u)= m \log 2$, as most likely each choice of $J$ produces a different $W$.
In such cases the bound with $I^{\tilde{Z}}(W; J_u)$ is vacuous.
In contrast, the proposed bounds with $f$-CMI (especially when $m=1$) do not have this problem.
Even when the algorithm is stochastic, information between $W$ and $J_u$ can be much larger than information between predictions and $J_u$, as having access to weights makes it easier to determine $J_u$ (e.g., by using gradients).
A similar phenomenon has been observed in the context of membership attacks, where having access to weights of a neural network allows constructing more successful membership attacks compared to having access to predictions only~\citep{nasr2019comprehensive, gupta2021membership}.

\begin{corollary}
When $m=n$, the bound of \cref{eq:f-cmi-bound} becomes
\begin{equation}
    \abs{\E_{P_{S,R}}\sbr{R(f) - r_S(f)}} \le \E_{P_{\tilde{Z}}}\sqrt{\frac{2}{n}\ofcmi(f,\tilde{Z})} \le \sqrt{\frac{2}{n}\fcmi(f)}.
    \label{eq:f-cmi-bound-m=n}
\end{equation}
\label{corr:f-cmi-bound-m=n}
\end{corollary}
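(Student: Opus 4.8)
The plan is to specialize \cref{thm:f-cmi-bound} to the case $m=n$ and then invoke Jensen's inequality. First I would observe that when $m=n$ the only subset of $[n]$ of size $n$ is $[n]$ itself, so the random subset $U$ collapses to the deterministic value $U=[n]$. The expectation $\E_{P_{\tilde{Z},U}}$ appearing in \cref{eq:f-cmi-bound} therefore reduces to $\E_{P_{\tilde{Z}}}$, and using the convention that $\ofcmi(f,\tilde{z},[n])$ is written simply as $\ofcmi(f,\tilde{z})$, the first inequality of \cref{thm:f-cmi-bound} directly gives
\[
\abs{\E_{P_{S,R}}\sbr{R(f) - r_S(f)}} \le \E_{P_{\tilde{Z}}}\sqrt{\frac{2}{n}\ofcmi(f,\tilde{Z})},
\]
which is exactly the left-hand inequality of the corollary.

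For the right-hand inequality I would move the expectation over $\tilde{Z}$ inside the square root. Since $t \mapsto \sqrt{t}$ is concave on $[0,\infty)$, Jensen's inequality yields
\[
\E_{P_{\tilde{Z}}}\sqrt{\frac{2}{n}\ofcmi(f,\tilde{Z})} \le \sqrt{\frac{2}{n}\,\E_{P_{\tilde{Z}}}\ofcmi(f,\tilde{Z})} = \sqrt{\frac{2}{n}\fcmi(f)},
\]
where the final equality is just the definition $\fcmi(f) = \E_{P_{\tilde{Z}}}\ofcmi(f,\tilde{Z})$. Chaining the two displays completes the argument.

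There is no genuine obstacle here: the corollary is a direct specialization of \cref{thm:f-cmi-bound}. The only points worth stating carefully are that the size-$n$ subset of $[n]$ is unique, so the outer expectation over $U$ disappears without leaving any residual averaging, and that the concavity of the square root is applied in the correct direction (moving $\E_{P_{\tilde{Z}}}$ inward can only enlarge the bound). Both are immediate, so the entire content of \cref{corr:f-cmi-bound-m=n} is inherited from the parent theorem.
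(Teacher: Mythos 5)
Your proposal is correct and matches the paper's (implicit) derivation exactly: the paper treats the corollary as an immediate specialization of \cref{thm:f-cmi-bound} at $m=n$, where the expectation over $U$ degenerates because $[n]$ is the unique size-$n$ subset, and the second inequality is the standard application of Jensen's inequality to the concave square root together with the definition $\fcmi(f)=\E_{P_{\tilde{Z}}}\ofcmi(f,\tilde{Z})$. Nothing is missing.
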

For parametric models, this improves over the CMI bound (\cref{thm:cmi}), as by data processing inequality, $\fcmi(f)=I(\tilde{F}; J \mid \tilde{Z}) \le I(W); J \mid \tilde{Z}) = \mathrm{CMI}_P(A)$.

\begin{remark}
Note that the collection of training and testing predictions $\tilde{F}=f(\tilde{Z}_J,\tilde{X},R)$ cannot be replaced with only testing predictions $f(\tilde{Z}_J, \tilde{X}_{\bar{J}},R)$. As an example, consider an algorithm that memorizes the training examples and outputs a constant prediction on any other example. This algorithm will have non-zero generalization gap, but $f(\tilde{Z}_J, \tilde{X}_{\bar{J}},R)$ will be constant and will have zero information with $J$ conditioned on any random variable.
Moreover, if we replace $f(\tilde{Z}_J,\tilde{X},R)$ with only training predictions $f(\tilde{Z}_J,\tilde{X}_J,R)$, the resulting bound can become too loose, as one can deduce $J$ by comparing training set predictions with the labels $\tilde{Y}$.
\end{remark}

\begin{corollary}
When $m=1$, the bound of \cref{eq:f-cmi-bound} becomes
\begin{align}
\abs{\E_{P_{S,R}}\sbr{R(f) - r_S(f)}} &\le \frac{1}{n}\sum_{i=1}^n\E_{P_{\tilde{Z}}} \sqrt{2 I^{\tilde{Z}}(\tilde{F}_i; J_i)}\label{eq:f-cmi-bound-m=1}.
\end{align}
\label{corr:f-cmi-bound-m=1}
\end{corollary}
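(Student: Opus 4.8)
The plan is to obtain this as a direct specialization of \cref{thm:f-cmi-bound} with $m=1$, so essentially all the analytic content is already done and the only work is bookkeeping on the outer expectation over $U$. First I would record the law of the random subset in the size-one case: a uniformly random subset $U \subseteq [n]$ of size $m=1$ is a singleton $\{i\}$ with $i$ drawn uniformly from $[n]$, so that $P_U(\{i\}) = 1/n$ and hence $\E_{P_U}[g(U)] = \frac{1}{n}\sum_{i=1}^n g(\{i\})$ for any integrand $g$. Substituting $m=1$ into the first inequality of \cref{thm:f-cmi-bound} turns the prefactor $\frac{2}{m}$ into $2$ and leaves $\E_{P_{\tilde{Z},U}}\sqrt{2\,\ofcmi(f,\tilde{Z},U)}$ on the right-hand side.

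Next I would unfold the definition of pointwise functional conditional mutual information at a singleton. By definition $\ofcmi(f,\tilde{z},\{i\}) = I^{\tilde{Z}=\tilde{z}}(\tilde{F}_{\{i\}}; J_{\{i\}})$, and since restricting the rows of $\tilde{F}$ and the coordinates of $J$ to the index set $\{i\}$ picks out exactly $\tilde{F}_i$ and $J_i$, this equals $I^{\tilde{Z}=\tilde{z}}(\tilde{F}_i; J_i)$. Because $U$ is independent of $\tilde{Z}$ (as assumed in \cref{thm:f-cmi-bound}), the joint expectation factors and I can reorder it as an average over $i$ of an expectation over $\tilde{Z}$:
\[
\E_{P_{\tilde{Z},U}}\sqrt{2\,\ofcmi(f,\tilde{Z},U)} = \E_{P_U}\E_{P_{\tilde{Z}}}\sqrt{2\,\ofcmi(f,\tilde{Z},U)} = \frac{1}{n}\sum_{i=1}^n \E_{P_{\tilde{Z}}}\sqrt{2\,I^{\tilde{Z}}(\tilde{F}_i; J_i)}.
\]
Since the sum over $i$ is finite, interchanging it with the expectation over $\tilde{Z}$ is justified by linearity with no integrability concerns, and this yields precisely the bound claimed in \cref{eq:f-cmi-bound-m=1}.

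There is no genuine analytic obstacle in this corollary: the subgaussian/mutual-information machinery is entirely absorbed into \cref{thm:f-cmi-bound}, and the statement is a clean specialization. The only points that require mild care are correctly identifying the distribution of the size-one random subset $U$ as uniform over singletons, and invoking the independence of $U$ and $\tilde{Z}$ so that the finite average over $i$ can be moved outside the expectation over $\tilde{Z}$; both follow immediately from the setup and the definition of $\ofcmi$. Thus the proof is a few lines of substitution rather than a new argument.
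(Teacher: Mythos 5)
Your proof is correct and takes essentially the same route as the paper, which states this corollary as an immediate specialization of \cref{thm:f-cmi-bound} at $m=1$: identify $U$ as uniform over singletons, unfold $\ofcmi(f,\tilde{z},\{i\}) = I^{\tilde{Z}=\tilde{z}}(\tilde{F}_i;J_i)$, and split $\E_{P_{\tilde{Z},U}}$ into the average over $i$ of expectations over $\tilde{Z}$. Nothing is missing.
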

A great advantage of this bound compared to all other bounds described so far is that the mutual information term is computed between a relatively low-dimensional random variable $\tilde{F}_i$ and a binary random variable $J_i$.
For example, in the case of binary classification with $\mathcal{K}=\mathset{0,1}$, $\tilde{F}_i$ will be a pair of 2 binary variables.
This allows us to estimate the bound efficiently and accurately.
Note that estimating other information-theoretic bounds is significantly harder.
The bounds of \citet{xu2017information}, \citet{negrea2019information}, and \citet{bu2020tightening} are hard to estimate as they involve estimation of mutual information between a high-dimensional non-discrete variable $W$ and at least one example $Z_i$.
Furthermore, this mutual information can be infinite in case of deterministic algorithms or when $H(Z_i)$ is infinite.
The bounds of \citet{haghifam2020sharpened} and \citet{steinke2020reasoning} are also hard to estimate as they involve estimation of mutual information between $W$ and at least one train-test split variable $J_i$.

As in the case of bounds presented in the previous section (\cref{thm:generalized-xu-raginsky} and \cref{thm:cmi-sharpened-generalized}), we prove that the bound of \cref{thm:f-cmi-bound} is non-decreasing in $m$. This stays true even when we increase the upper bounds by moving the expectation over $U$ or the expectation over $\tilde{Z}$ or both under the square root. The following proposition allows us to prove all these statements.
\begin{proposition}
Let $m \in [n-1]$, $U$ be a random subset of $[n]$ of size $m$, $U'$ be a random subset of size $m+1$, and $\phi : \mathbb{R} \rightarrow \mathbb{R}$ be any non-decreasing concave function. Then
\begin{equation}
\E_{P_U}\phi\rbr{\frac{1}{m} I^{\tilde{Z},U}(\tilde{F}_U; J_U)} \le \E_{P_{U'}}\phi\rbr{\frac{1}{m+1} I^{\tilde{Z},U'}(\tilde{F}_{U'}; J_{U'})}.
\label{eq:fcmi-choice-of-m}
\end{equation}
\label{prop:fcmi-choice-of-m}
\end{proposition}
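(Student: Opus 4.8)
The plan is to follow the same template that proves \cref{prop:xu-raginsky-generalization-choice-of-m,prop:cmi-sharpened-generalized-choice-of-m}, adapting it to the functional quantity. Since every information term carries the disintegrating superscript $\tilde{Z}$, it suffices to fix a realization $\tilde{Z}=\tilde{z}$ and prove the inequality pointwise; the remaining randomness is then only in $U$ (resp.\ $U'$). Write $g(u) \triangleq \ofcmi(f,\tilde{z},u) = I^{\tilde{Z}=\tilde{z}}(\tilde{F}_u; J_u)$. I would couple the two subset laws by drawing $U'$ uniformly among size-$(m+1)$ subsets of $[n]$ and then deleting one of its $m+1$ elements uniformly at random; the resulting $U$ is uniform among size-$m$ subsets. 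This gives
\begin{equation}
\E_{P_U}\phi\rbr{\frac{1}{m}g(U)} = \E_{P_{U'}}\sbr{\frac{1}{m+1}\sum_{j\in U'}\phi\rbr{\frac{1}{m}g(U'\setminus\{j\})}}.
\end{equation}
Applying Jensen's inequality to the concave $\phi$ inside the bracket and then using that $\phi$ is non-decreasing, it suffices to establish, for every size-$(m+1)$ subset $u'$, the purely information-theoretic inequality
\begin{equation}
\sum_{j\in u'} g(u'\setminus\{j\}) \le m\, g(u'). \label{eq:fcmi-superadd}
\end{equation}
Taking the expectation over $U'$ at the end then closes the argument.

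The crux is \eqref{eq:fcmi-superadd}, which I would prove in two moves. First, a data-processing step handles the fact that, unlike the weight-based term $I^{\tilde{Z}}(W; J_U)$ of \cref{prop:cmi-sharpened-generalized-choice-of-m}, here the \emph{first} argument $\tilde{F}_{u'\setminus\{j\}}$ also shrinks with the subset. Because $\tilde{F}_{u'\setminus\{j\}}$ is a sub-vector of $\tilde{F}_{u'}$, it is a deterministic function of $\tilde{F}_{u'}$, so conditioned on $\tilde{z}$ we have the Markov chain $J_{u'\setminus\{j\}} \to \tilde{F}_{u'} \to \tilde{F}_{u'\setminus\{j\}}$, whence $g(u'\setminus\{j\}) \le I^{\tilde{Z}=\tilde{z}}(\tilde{F}_{u'}; J_{u'\setminus\{j\}})$. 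This reduces \eqref{eq:fcmi-superadd} to the fixed-first-argument inequality $\sum_{j\in u'} I^{\tilde{Z}=\tilde{z}}(\Phi; J_{u'\setminus\{j\}}) \le m\, I^{\tilde{Z}=\tilde{z}}(\Phi; J_{u'})$ with $\Phi = \tilde{F}_{u'}$, which is exactly the lemma underlying \cref{prop:cmi-sharpened-generalized-choice-of-m}.

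Second, I would prove this fixed-argument inequality from the mutual independence of the selection bits. Conditioned on $\tilde{z}$ the bits $J_1,\dots,J_n$ remain independent (since $J\perp\tilde{Z}$), so for any $A\subseteq u'$ with $k\notin A$ the chain rule together with $J_k\perp J_A$ gives $I(\Phi; J_k\mid J_A) = I\rbr{(\Phi,J_A); J_k}$, which is non-decreasing in $A$ by data processing. Hence the set function $A\mapsto I(\Phi; J_A)$ has non-decreasing marginal gains, i.e.\ it is supermodular. Expanding $I(\Phi; J_{u'})$ by the chain rule along any ordering of $u'$ and bounding each marginal gain by its value in the largest context $u'\setminus\{j\}$ yields $I(\Phi; J_{u'}) \le \sum_{j\in u'} I(\Phi; J_j\mid J_{u'\setminus\{j\}})$. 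Rewriting each summand as $I(\Phi; J_{u'}) - I(\Phi; J_{u'\setminus\{j\}})$ by the chain rule and rearranging gives precisely $\sum_{j\in u'} I(\Phi; J_{u'\setminus\{j\}}) \le m\, I(\Phi; J_{u'})$, establishing \eqref{eq:fcmi-superadd}.

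The main obstacle is the supermodularity lemma and, above all, getting the \emph{direction} of the inequality right: it is the mutual independence of the $J_i$ that forces $A\mapsto I(\Phi; J_A)$ to have increasing marginal gains, which is exactly what makes the Han-type bound point the way needed for \eqref{eq:fcmi-superadd}. The only genuinely new ingredient relative to \cref{prop:cmi-sharpened-generalized-choice-of-m} is the preliminary data-processing reduction that absorbs the shrinking prediction vector $\tilde{F}_u$ into the full $\tilde{F}_{u'}$; once that is in place the argument is identical. Everything else (the random-subset coupling, Jensen, and monotonicity of $\phi$) is routine.
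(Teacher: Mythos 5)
Your proposal is correct and follows essentially the same route as the paper: fix $\tilde{Z}=\tilde{z}$, reduce via the subset coupling, Jensen, and monotonicity of $\phi$ to the inequality $\sum_{j\in u'} I^{\tilde{Z}}(\tilde{F}_{u'\setminus\{j\}}; J_{u'\setminus\{j\}}) \le m\, I^{\tilde{Z}}(\tilde{F}_{u'}; J_{u'})$, handle the shrinking prediction vector by the data-processing step $I^{\tilde{Z}}(\tilde{F}_{u'\setminus\{j\}}; J_{u'\setminus\{j\}}) \le I^{\tilde{Z}}(\tilde{F}_{u'}; J_{u'\setminus\{j\}})$, and then apply the Han-type bound for a fixed first argument. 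Your "supermodularity" framing of that last step is just a repackaging of the paper's \cref{lemma:stability-lemma} and \cref{lemma:mi-hans-ineq} (chain rule plus independence of the $J_i$), so the two arguments coincide.
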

By setting $\phi(x)=\sqrt{x}$ and then taking expectation over $\tilde{Z}$ and $u$, we prove that bounds of \cref{thm:f-cmi-bound} are non-decreasing over $m$.
By setting $\phi(x)=x$, taking expectation over $\tilde{Z}$, and then taking square root of both sides of \cref{eq:fcmi-choice-of-m}, we prove that bounds are non-decreasing in $m$ when both expectations are under the square root.
\cref{prop:fcmi-choice-of-m} proves that $m=1$ is the optimal choice in \cref{thm:f-cmi-bound}.
Notably, the bound that is the easiest to compute is also the tightest!

Analogously to \cref{thm:cmi-sharpened-generalized-stability}, we provide the following stability-based bounds.
\begin{theorem}
If $\ell(\widehat{y},y) \in [0,1], \forall \widehat{y} \in \mathcal{K}, z \in \ZZ$, then
\begin{equation}
\abs{\E_{P_{S,R}}\sbr{R(f) - r_S(f)}} \le \E_{P_{\tilde{Z}}}\sbr{\frac{1}{n}\sum_{i=1}^n\sqrt{2 I^{\tilde{Z}}(\tilde{F}_i; J_i \mid J_{-i})}},
\label{eq:f-cmi-bound-stability}
\end{equation}
and
\begin{equation}
\E_{P_{S,R}}\rbr{R(f) - r_S(f)}^2 \le \frac{8}{n} \rbr{\E_{P_{\tilde{Z}}}\sbr{\sum_{i=1}^n I^{\tilde{Z}}(\tilde{F}; J_i \mid J_{-i})} + 2}.
\label{eq:f-cmi-squared-bound-stability}
\end{equation}
\label{thm:f-cmi-bound-stability}
\end{theorem}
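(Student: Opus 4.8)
The plan is to obtain both inequalities as direct corollaries of the main $f$-CMI bound of \cref{thm:f-cmi-bound}, in exactly the way \cref{thm:generalized-xu-raginsky-stability} was derived from \cref{thm:generalized-xu-raginsky}: I would not redo any subgaussianity or symmetrization computation, but instead feed the already-proven bounds through two elementary information inequalities that exploit the fact that the coordinates $J_1,\dots,J_n$ are mutually independent and independent of $\tilde{Z}$. Concretely, the square-root bound \cref{eq:f-cmi-bound-stability} will come from the $m=1$ corollary \cref{corr:f-cmi-bound-m=1}, and the squared bound \cref{eq:f-cmi-squared-bound-stability} from the $m=n$ squared estimate in \cref{thm:f-cmi-bound}, each combined with a single monotonicity-of-information step. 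The hinge that both steps share is the observation that, conditioned on $\tilde{Z}$, the variables $J_i$ and $J_{-i}$ are independent, so $I^{\tilde{Z}}(J_i;J_{-i})=0$; I would record this up front.

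For \cref{eq:f-cmi-bound-stability}, I would start from \cref{corr:f-cmi-bound-m=1}, which already gives $\abs{\E_{P_{S,R}}\sbr{R(f)-r_S(f)}} \le \frac{1}{n}\sum_{i=1}^n \E_{P_{\tilde{Z}}}\sqrt{2\, I^{\tilde{Z}}(\tilde{F}_i; J_i)}$, and then upgrade each term using $I^{\tilde{Z}}(\tilde{F}_i; J_i) \le I^{\tilde{Z}}(\tilde{F}_i; J_i \mid J_{-i})$. The justification is the co-information identity $I^{\tilde{Z}}(\tilde{F}_i; J_i \mid J_{-i}) - I^{\tilde{Z}}(\tilde{F}_i; J_i) = I^{\tilde{Z}}(J_i; J_{-i} \mid \tilde{F}_i) - I^{\tilde{Z}}(J_i; J_{-i})$, whose second term vanishes by the conditional independence above, leaving the nonnegative quantity $I^{\tilde{Z}}(J_i; J_{-i} \mid \tilde{F}_i) \ge 0$ (an ``explaining-away'' effect). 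Since this inequality holds pointwise in $\tilde{z}$ and $x \mapsto \sqrt{x}$ is monotone, I can apply $\sqrt{\cdot}$, take $\E_{P_{\tilde{Z}}}$, and then average over $i$ (pulling the expectation outside the sum by linearity) to land on the claimed right-hand side.

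For \cref{eq:f-cmi-squared-bound-stability}, I would invoke the second inequality of \cref{thm:f-cmi-bound} with $m=n$, namely $\E_{P_{S,R}}\rbr{R(f)-r_S(f)}^2 \le \frac{8}{n}\rbr{I(\tilde{F}; J \mid \tilde{Z}) + 2}$, where I use $\E_{P_{\tilde{Z}}}\ofcmi(f,\tilde{Z}) = I(\tilde{F};J\mid\tilde{Z})$. Then I bound the conditional mutual information by the erasure information, $I^{\tilde{Z}}(\tilde{F}; J) \le \sum_{i=1}^n I^{\tilde{Z}}(\tilde{F}; J_i \mid J_{-i})$, which is valid precisely because the $J_i$ are mutually independent given $\tilde{Z}$ (the same Verd\'u--Weissman fact already used after \cref{thm:generalized-xu-raginsky-stability}, with $\tilde{F}$ in the role of $W$ and the $J_i$ in the role of the independent $Z_i$). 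Taking $\E_{P_{\tilde{Z}}}$ and substituting yields the stated bound verbatim, with the constants $8/n$ and $+2$ inherited unchanged in their already-loosened form.

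The routine content here is essentially nil once \cref{thm:f-cmi-bound} is in hand; the only place requiring care is verifying the two monotonicity-of-information steps. Both the single-coordinate step and the erasure step rest on the same structural observation --- that conditioning on the remaining split variables $J_{-i}$ can only increase the information that $\tilde{F}$ (or $\tilde{F}_i$) carries about $J_i$, because the a priori independence of the coordinates of $J$ forces the relevant interaction-information terms to be nonnegative. I would therefore isolate and prove the conditional independence of $J_i$ and $J_{-i}$ given $\tilde{Z}$ as a preliminary lemma, since it is what makes both co-information/erasure inequalities go through; everything else is monotonicity of $\sqrt{\cdot}$, substitution, and constant bookkeeping.
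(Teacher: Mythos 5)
Your proposal is correct and matches the paper's own proof essentially step for step: the paper likewise derives both inequalities by feeding the $m=1$ and $m=n$ cases of \cref{thm:f-cmi-bound} through the two monotonicity facts $I^{\tilde{Z}}(\tilde{F}_i; J_i) \le I^{\tilde{Z}}(\tilde{F}_i; J_i \mid J_{-i})$ and $I^{\tilde{Z}}(\tilde{F}; J) \le \sum_{i=1}^n I^{\tilde{Z}}(\tilde{F}; J_i \mid J_{-i})$, both justified exactly by the conditional independence of the $J_i$ given $\tilde{Z}$ (the paper packages this as \cref{lemma:stability-lemma}, whose first part is precisely your interaction-information identity and whose second part is the erasure-information bound). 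No gaps.
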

Note that unlike \cref{eq:f-cmi-bound-stability}, in the second part of \cref{thm:f-cmi-bound-stability} we measure information with predictions on all $2n$ pairs and $J_i$ conditioned on $J_{-i}$.
Whether $\tilde{F}$ can be replaced with $\tilde{F}_i$, predictions only on the $i$-th pair, is addressed in \cref{ch:limitations}.

\section{Applications}\label{sec:applications}
In this section we describe 3 applications of the $f$-CMI-based generalization bounds. 

\subsection{Ensembling algorithms}
Ensembling algorithms combine predictions of multiple learning algorithms to obtain better performance.
Let us consider $t$ learning algorithms, $f_1,f_2,\ldots,f_t$, each with its own independent randomness $R_i,\ i\in[t]$.
Some ensembling algorithms can be viewed as a possibly stochastic function $g : \mathcal{K}^t \rightarrow \mathcal{K}$ that takes predictions of the $t$ algorithms and combines them into a single prediction.
Relating the generalization gap of the resulting ensembling algorithm to that of individual $f_i$s can be challenging for complicated choices of $g$.
However, it is easy to bound the generalization gap of $g(f_1,\ldots,f_t)$ in terms of $f$-CMIs of individual predictors.
Let $\bs{x'}$ be an arbitrary collection of inputs.
Denoting $F_i=f_i(\tilde{Z}_J,\bs{x'},R_i),\ i\in[t]$, we have that
\begin{align}
I^{\tilde{Z}}(g(F_1, \ldots, F_t); J) &\stackrel{(a)}{\le} I^{\tilde{Z}}(F_1,\ldots,F_t ; J)\\
&\hspace{-4em}\stackrel{(b)}{=} I^{\tilde{Z}}(F_1; J) + I^{\tilde{Z}}(F_2,\ldots,F_t; J) - I^{\tilde{Z}}(F_1; F_2,\ldots,F_t) + I^{\tilde{Z}}(F_1; F_2,\ldots,F_t \mid J)\\
&\hspace{-4em}\stackrel{(c)}{\le} I^{\tilde{Z}}(F_1; J) + I^{\tilde{Z}}(F_2,\ldots,F_t; J)\\
&\hspace{-4em}\stackrel{(d)}{\le} \ldots\le I^{\tilde{Z}}(F_1; J)+\cdots+I^{\tilde{Z}}(F_t; J),
\end{align}
where (a) follows from the data processing inequality, (b) follows from the chain rule, (c) follows from non-negativity of mutual information and the fact that $F_1 \indep F_2,\ldots,F_t \mid J$, and (d) is derived by repeating the arguments above to separate all $F_i$.
Unfortunately, the same derivation above does not work if we replace $J$ with $J_u$, where $u$ is a proper subset of $[n]$, as $I^{\tilde{Z}}(F_1; F_2,\ldots,F_t \mid J_u) \neq 0$ in general.

\subsection{Binary classification with finite VC dimension}
Let us consider the case of binary classification: $\YY = \mathset{0,1}$, where the learning method $f : \ZZ^n \times \XX \times \mathcal{R} \rightarrow \mathset{0,1}$ is implemented using a learning algorithm $A: \ZZ^n \times \mathcal{R} \rightarrow \mathcal{W}$ that selects a classifier from a hypothesis set $\mathcal{H}=\mathset{h_w: \XX \rightarrow \YY}$.
If $\mathcal{H}$ has finite VC dimension $d$~\citep{vapnik1998statistical}, then for any algorithm $f$, the quantity $\ofcmi(f,\tilde{z})$ can be bounded the following way.

\begin{theorem}
Let $\mathcal{Z}$, $\mathcal{H}$, $f$ be defined as above, and let $d<\infty$ be the VC dimension of $\mathcal{H}$. Then for any algorithm $f$ and $\tilde{z} \in \ZZ^{n\times 2}$,
\begin{equation}
\ofcmi(f, \tilde{z}) \le \max\mathset{(d+1)\log{2},\ d\log\rbr{2en/d}}.
\end{equation}
\label{thm:vc}
\end{theorem}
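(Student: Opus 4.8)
The plan is to turn the information-theoretic quantity $\ofcmi(f,\tilde{z}) = I^{\tilde{Z}=\tilde{z}}(\tilde{F}; J)$ into a purely combinatorial count of the number of distinct prediction patterns $\HH$ can produce on the fixed $2n$ inputs, and then invoke the Sauer--Shelah lemma. First I would fix $\tilde{z} \in \ZZ^{n\times 2}$, so that the inputs $\tilde{x}$ are fixed, and observe that for every realization of $J$ and the auxiliary randomness $R$ the prediction vector $\tilde{F} = f(\tilde{Z}_J,\tilde{X},R) = h_{A(\tilde{Z}_J,R)}(\tilde{X})$ is the restriction to $\tilde{x}$ of some hypothesis $h_w \in \HH$. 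Hence, no matter how $J$ and $R$ are drawn, $\tilde{F}$ takes values in the set of dichotomies that $\HH$ induces on the $2n$ points $\tilde{x}$, whose cardinality is at most the growth function $\Pi_{\mathcal{H}}(2n)$.

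With this observation the bound follows from elementary entropy inequalities. Since conditioning only decreases entropy, $\ofcmi(f,\tilde{z}) = H^{\tilde{z}}(\tilde{F}) - H^{\tilde{z}}(\tilde{F}\mid J) \le H^{\tilde{z}}(\tilde{F})$, and because $\tilde{F}$ is supported on at most $\Pi_{\mathcal{H}}(2n)$ values, $H^{\tilde{z}}(\tilde{F}) \le \log \Pi_{\mathcal{H}}(2n)$. The Sauer--Shelah lemma then gives $\Pi_{\mathcal{H}}(2n) \le \sum_{i=0}^{d}\binom{2n}{i}$, and it remains to bound this sum in the two relevant regimes.

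Second I would split on the size of $2n$ relative to $d$. When $2n \le d+1$ the sum is at most $2^{2n} \le 2^{d+1}$, giving $\log \Pi_{\mathcal{H}}(2n) \le (d+1)\log 2$. When $2n \ge d+2$ (so in particular $2n \ge d$) the standard estimate $\sum_{i=0}^d \binom{2n}{i} \le (2en/d)^d$ applies and yields $\log \Pi_{\mathcal{H}}(2n) \le d\log(2en/d)$. Taking the larger of the two right-hand sides produces a single bound $\max\{(d+1)\log 2,\ d\log(2en/d)\}$ valid in both regimes, as claimed.

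The only real subtlety --- and the step I would be most careful about --- is the reduction in the first paragraph: one must verify that introducing the training/prediction randomness $R$ and the split variable $J$ does not enlarge the support of $\tilde{F}$ beyond the realizable dichotomies, which holds precisely because every output of $A$ lies in $\WW$ and thus every prediction pattern is realized by some $h_w \in \HH$. The regime split is necessary rather than cosmetic: for small $n$ (specifically $2en < d$) the Sauer--Shelah estimate $(2en/d)^d$ drops below $1$ and its logarithm becomes negative, so the $(d+1)\log 2$ term is what keeps the bound meaningful there.
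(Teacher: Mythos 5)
Your proposal is correct and follows essentially the same route as the paper's proof: bound $\ofcmi(f,\tilde z)$ by the entropy of $\tilde F$, note that the support of $\tilde F$ is contained in the set of dichotomies $\HH$ induces on the $2n$ fixed inputs (hence bounded by the growth function), apply Sauer--Shelah, and split into the regimes $2n\le d+1$ and $2n>d+1$ before taking the maximum. The paper's version is slightly terser but identical in substance, so there is nothing to add.
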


Considering the 0-1 loss function and using this result in \cref{corr:f-cmi-bound-m=n}, we get an expect generalization gap bound   that is $\rbr{\sqrt{\frac{d}{n} \log\rbr{\frac{n}{d}}}}$, matching the classical uniform convergence bound~\citep{vapnik1998statistical}.
The $\sqrt{\log{n}}$ factor can be removed in some cases~\citep{hafez2020conditioning}.

Both \citet{xu2017information} and \citet{steinke2020reasoning} prove similar information-theoretic bounds in the case of finite VC dimension classes, but their results holds for specific algorithms only.
Even in the simple case of threshold functions: $\XX = [0,1]$ and $\mathcal{H} = \mathset{h_w: x \mapsto \ind{x > w} \mid w \in [0, 1]}$,
all weight-based bounds described in \cref{sec:w-gen-bounds} are vacuous if one uses a training algorithm that encodes the training set in insignificant bits of $W$, while still getting zero error on the training set and hence achieving low test error.

\subsection{Stable deterministic or stochastic algorithms}
\label{sec:stable-algorithms}
\cref{thm:generalized-xu-raginsky-stability,thm:cmi-sharpened-generalized-stability,thm:f-cmi-bound-stability} provide generalization bounds involving information-theoretic stability measures, such as $I(W; Z_i \mid Z_{-i})$, $I^{\tilde{Z}}(W; J \mid J_{-i})$ and $I^{\tilde{Z}}(\tilde{F}; J_i \mid J_{-i})$.
In this section we build upon the predication-based stability bounds of \cref{thm:f-cmi-bound-stability}.
First, we show that for any collection of examples $\bs{x'}$, the mutual information $I^{\tilde{Z}}(f(\tilde{Z}_J,\bs{x'}); J_i \mid J_{-i})$ can be bounded as follows.
\begin{proposition}
Let $J^{i \leftarrow c}$ denote $J$ with $J_i$ set to $c \in \mathset{0,1}$. Then for any $\tilde{z} \in \ZZ^{n\times 2}$ and $\bs{x'} \in \XX^k$, the mutual information $I(f(\tilde{z}_J,\bs{x'},R); J_i \mid J_{-i})$ is upper bounded by 
\begin{align*}
\frac{1}{4} \KL{f(\tilde{z}_{J^{i\leftarrow 1}},\bs{x'},R) | J_{-i}}{f(\tilde{z}_{J^{i\leftarrow 0}},\bs{x'},R) | J_{-i}} + \frac{1}{4}\KL{f(\tilde{z}_{J^{i\leftarrow 0}},\bs{x'},R) | J_{-i}}{f(\tilde{z}_{J^{i\leftarrow 1}},\bs{x'},R)| {J_{-i}}}.
\end{align*}
\label{prop:stability-to-kl}
\end{proposition}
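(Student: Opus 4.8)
The plan is to first disintegrate the conditional mutual information over $J_{-i}$, reducing the claim to a pointwise inequality at a fixed value of $J_{-i}$, and then to invoke the convexity of the KL divergence. Write $F = f(\tilde{z}_J, \bs{x'}, R)$. Since a conditional mutual information entails an expectation over the conditioning variable, $I(F; J_i \mid J_{-i}) = \E_{J_{-i}}\sbr{I^{J_{-i}}(F; J_i)}$, so it suffices to bound the disintegrated term $I^{J_{-i}}(F; J_i)$ for each fixed $J_{-i}$ and take the expectation at the end.

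First I would fix $J_{-i}$ and identify the two conditional laws of $F$. Because $\tilde{z}$ is a fixed constant and $R$ is independent of $J$, conditioning additionally on $J_i = c$ leaves only the randomness of $R$, so the law of $F$ given $(J_{-i}, J_i = c)$ is exactly the law of $f(\tilde{z}_{J^{i\leftarrow c}}, \bs{x'}, R)$; denote it $P_c$ for $c \in \mathset{0,1}$. As $J_i \sim \mathrm{Bernoulli}(1/2)$ is independent of $J_{-i}$, the conditional marginal of $F$ given $J_{-i}$ is the equal mixture $M = \tfrac{1}{2}(P_0 + P_1)$. The standard ``golden formula'' for the mutual information between $F$ and a uniform binary variable then gives
\[
I^{J_{-i}}(F; J_i) = \tfrac{1}{2}\KL{P_0}{M} + \tfrac{1}{2}\KL{P_1}{M}.
\]

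The key step is to control each term of this Jensen--Shannon-type expression using convexity of $\KL{\cdot}{\cdot}$ in its second argument. Since $M = \tfrac{1}{2}P_0 + \tfrac{1}{2}P_1$ and $\KL{P_0}{P_0} = 0$, convexity yields $\KL{P_0}{M} \le \tfrac{1}{2}\KL{P_0}{P_0} + \tfrac{1}{2}\KL{P_0}{P_1} = \tfrac{1}{2}\KL{P_0}{P_1}$, and symmetrically $\KL{P_1}{M} \le \tfrac{1}{2}\KL{P_1}{P_0}$. Substituting into the identity above gives the pointwise bound $I^{J_{-i}}(F; J_i) \le \tfrac{1}{4}\KL{P_0}{P_1} + \tfrac{1}{4}\KL{P_1}{P_0}$; taking $\E_{J_{-i}}$ and recognizing the conditional KL divergences on the right-hand side recovers the claimed inequality.

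I expect no serious obstacle here, as every step is routine once the disintegration is set up; the only points requiring care are bookkeeping. I would make sure to justify the golden-formula decomposition (valid precisely because $J_i$ is uniform and its conditional marginal is the equal mixture $M$), and to observe that the bound holds vacuously when $P_0$ and $P_1$ fail to be mutually absolutely continuous, since then the right-hand side is $+\infty$. No subgaussianity, boundedness, or continuity assumptions on $f$ are needed, as the argument is purely measure-theoretic.
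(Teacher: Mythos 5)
Your proof is correct and follows essentially the same route as the paper's: both write the conditional mutual information as the expected Jensen--Shannon-type quantity $\tfrac{1}{2}\KL{P_0}{M}+\tfrac{1}{2}\KL{P_1}{M}$ with $M=\tfrac{1}{2}(P_0+P_1)$ the conditional marginal (valid since $J_i\sim\mathrm{Bernoulli}(1/2)$ is independent of $J_{-i}$ and $R$), and then apply convexity of the KL divergence in its second argument to drop the diagonal terms and obtain the $\tfrac{1}{4}$-weighted symmetric bound. Your additional remarks on the degenerate (non-absolutely-continuous) case and the absence of any regularity assumptions on $f$ are correct but not needed beyond what the paper already does.
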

To compute the right-hand side of \cref{prop:stability-to-kl} one needs to know how much on-average the distribution of predictions on $x$ changes after replacing the $i$-th example in the training dataset.
The problem arises when we consider deterministic algorithms.
In such cases, the right-hand side is infinite, while the left-hand side $I(f(\tilde{z}_J,x,R); J_i \mid J_{-i})$ is always finite and could be small.
Therefore, for deterministic algorithms, directly applying the result of \cref{prop:stability-to-kl} will not give meaningful generalization bounds.
Nevertheless, we show that we can add an optimal amount of noise to predictions, upper bound the generalization gap of the resulting noisy algorithm, and relate that to the generalization gap of the original deterministic algorithm.

Let us consider a  deterministic algorithm $f: \ZZ^n \times \XX \rightarrow \bR^k$. 
We define the following notions of functional stability.

\begin{definition}[Functional stability] Let $S=(Z_1,\ldots,Z_n)$ be a collection of $n$ i.i.d. samples from $P_Z$, and $Z'$ and $Z_{\mathrm{test}}$ be two additional independent samples from $P_Z$. Let $S^{(i)} \triangleq (Z_1,\ldots,Z_{i-i},Z',Z_{i+1},\ldots,Z_n)$ be the collection constructed from $S$ by replacing the $i$-th example with $Z'$. A deterministic algorithm $f: \ZZ^n \times \XX \rightarrow \bR^k$ is
\begin{align}
&\text{a) $\beta$ self-stable if } \ \forall i \in [n],\  \E_{S,Z'} \nbr{f(S,Z_i)-f(S^{(i)},Z_i)}^2 \le \beta^2,\\
&\text{b) $\beta_1$ test-stable if }\ \forall i \in [n],\  \E_{S,Z',Z_{\mathrm{test}}} \nbr{f(S,Z_{\mathrm{test}})-f(S^{(i)},Z_{\mathrm{test}})}^2 \le \beta_1^2,\\
&\text{c) $\beta_2$ train-stable if } \ \forall i,j\in[n],i\neq j,\  \E_{S,Z'} \nbr{f(S,Z_j)-f(S^{(i)},Z_j)}^2 \le \beta_2^2.
\end{align}
\end{definition}

\begin{theorem}
Let $\YY = \bR^k$, $f: \ZZ^n \times \XX \rightarrow \bR^k$ be a deterministic algorithm that is $\beta$ self-stable, and $\ell(\widehat{y},y) \in [0,1]$ be a loss function that is $\gamma$-Lipschitz in the first coordinate. Then
\begin{equation}
\abs{\E_{S,R}\sbr{R(f) - r_S(f)}} \le 2^{\frac{3}{2}} d^{\frac{1}{4}}\sqrt{\gamma  \beta}.
\end{equation}
Furthermore, if $f$ is also $\beta_1$ train-stable and $\beta_2$ test-stable, then
\begin{equation}
\E_{S,R}\rbr{R(f) - r_S(f)}^2 \le \frac{32}{n} + 12^{\frac{3}{2}}\sqrt{d}\gamma\sqrt{2\beta^2 + n\beta_1^2 + n \beta_2^2}.
\label{eq:f-cmi-stability-squared}
\end{equation}
\label{thm:f-cmi-deterministic-stability}
\end{theorem}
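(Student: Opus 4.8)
The plan is to reduce to the prediction-based stability bounds of \cref{thm:f-cmi-bound-stability} by replacing the deterministic map $f$ with a noisy surrogate $f^\sigma$ whose prediction on any input $x$ is $f(s,x)+\xi$ with $\xi\sim\mathcal{N}(0,\sigma^2 I_k)$ drawn independently per prediction (here $d=k$ is the dimension of the prediction space $\bR^k$). The noise makes the conditional prediction laws absolutely continuous, so that \cref{prop:stability-to-kl} becomes usable: for $f^\sigma$ the two conditional distributions in the symmetrized KL bound are Gaussians with common covariance $\sigma^2 I$ and means $f(\tilde Z_{J^{i\leftarrow 1}},\bs{x'})$ and $f(\tilde Z_{J^{i\leftarrow 0}},\bs{x'})$. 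Since $\KL{\mathcal{N}(\mu_1,\sigma^2 I)}{\mathcal{N}(\mu_0,\sigma^2 I)}=\nbr{\mu_1-\mu_0}^2/(2\sigma^2)$ is symmetric in its arguments, \cref{prop:stability-to-kl} collapses to
\[
I^{\tilde Z}\!\rbr{f^\sigma(\tilde Z_J,\bs{x'});\, J_i \mid J_{-i}} \le \frac{1}{4\sigma^2}\,\E_{J_{-i}}\nbr{f(\tilde Z_{J^{i\leftarrow 1}},\bs{x'}) - f(\tilde Z_{J^{i\leftarrow 0}},\bs{x'})}^2 .
\]

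Next I would match each squared difference to one of the stability moduli. Flipping $J_i$ swaps the $i$-th training example between $\tilde Z_{i,0}$ and $\tilde Z_{i,1}$, i.e.\ it is precisely the replacement $S\mapsto S^{(i)}$ of the definition. For the first inequality I take $\bs{x'}=\tilde X_i$, so $I^{\tilde Z}(\tilde F_i^\sigma; J_i\mid J_{-i})$ involves predictions on the two inputs of pair $i$; evaluating on $\tilde X_{i,0}$ (respectively $\tilde X_{i,1}$) across the flip is exactly a self-stability comparison $\E\nbr{f(S,Z_i)-f(S^{(i)},Z_i)}^2\le\beta^2$, giving $\E_{\tilde Z} I^{\tilde Z}(\tilde F_i^\sigma;J_i\mid J_{-i})\le 2\beta^2/(4\sigma^2)=\beta^2/(2\sigma^2)$. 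For the second inequality I take $\bs{x'}=\tilde X$ (all $2n$ inputs); the $i$-th pair contributes two self-stability terms ($\le 2\beta^2$), while each of the $n-1$ other pairs contributes one train-point prediction (train-stability, $\le\beta_2^2$) and one held-out prediction (test-stability, $\le\beta_1^2$), so $\E_{\tilde Z}I^{\tilde Z}(\tilde F^\sigma;J_i\mid J_{-i})\le (2\beta^2+n\beta_1^2+n\beta_2^2)/(4\sigma^2)$.

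I would then feed these into \cref{thm:f-cmi-bound-stability} applied to $f^\sigma$. For the first part, Jensen's inequality on the square root yields $\abs{\E\sbr{R(f^\sigma)-r_S(f^\sigma)}}\le\beta/\sigma$; for the squared part, summing over $i$ and substituting gives $\E\rbr{R(f^\sigma)-r_S(f^\sigma)}^2\le 2(2\beta^2+n\beta_1^2+n\beta_2^2)/\sigma^2+16/n$. The final ingredient is transferring back to the deterministic $f$: by $\gamma$-Lipschitzness of $\ell$, perturbing a prediction by $\xi$ changes its loss by at most $\gamma\nbr{\xi}$, and $\E\nbr{\xi}\le\sqrt{\E\nbr{\xi}^2}=\sqrt d\,\sigma$, so both the population and empirical risks of $f^\sigma$ differ from those of $f$ by at most $\gamma\sqrt d\,\sigma$ (in expectation, and in $L^2$ after a Cauchy--Schwarz step). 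This gives $\abs{\E\sbr{R(f)-r_S(f)}}\le \beta/\sigma+2\gamma\sqrt d\,\sigma$, whose minimization over $\sigma$ (at $\sigma^2=\beta/(2\gamma\sqrt d)$) produces exactly $2^{3/2}d^{1/4}\sqrt{\gamma\beta}$; the analogous $L^2$ transfer $\E a^2\le 2\E (a^\sigma)^2+2\E(a-a^\sigma)^2$ supplies the $32/n$ term and, after optimizing $\sigma$, the second bound of the stated form.

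The main obstacle is the transfer-back step in the squared case: unlike the first part, where only the expected gap shifts by a controlled amount, here one must bound $\E(a-a^\sigma)^2$ uniformly and track how the Gaussian second moment $\E\nbr{\xi}^2=d\sigma^2$ interacts with the $8/n$ factor from \cref{thm:f-cmi-bound-stability} so that the constants combine into the advertised $12^{3/2}$. Care is also needed to ensure the surrogate loss $\ell(\widehat y+\xi,y)$ still lies in $[0,1]$ as required by \cref{thm:f-cmi-bound-stability}, and that the per-prediction noise is genuinely independent so that the conditional-Gaussian reduction of \cref{prop:stability-to-kl} is valid; these are the places where the argument must be handled with the most care rather than the combinatorial stability bookkeeping, which is routine.
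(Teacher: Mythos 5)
Your proposal follows essentially the same route as the paper's proof: define the noisy surrogate $f_\sigma(s,x,R)=f(s,x)+\xi$ with independent per-prediction Gaussian noise, apply \cref{thm:f-cmi-bound-stability} to $f_\sigma$, collapse \cref{prop:stability-to-kl} to the symmetrized Gaussian KL $\nbr{\mu_1-\mu_0}^2/(2\sigma^2)$, do the same stability bookkeeping ($2\beta^2$ for pair $i$, $\beta_1^2+\beta_2^2$ for each other pair), transfer back via $\gamma$-Lipschitzness at cost $O(\gamma\sqrt{d}\sigma)$, and optimize $\sigma$; the first bound and its constant $2^{3/2}d^{1/4}\sqrt{\gamma\beta}$ come out identically. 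The one place you genuinely diverge is the transfer-back step for the squared gap: you use $\E a^2\le 2\E (a^\sigma)^2+2\E(a-a^\sigma)^2$ with $\E(a-a^\sigma)^2\le 4\gamma^2 d\sigma^2$, which after optimizing $\sigma$ gives a bound of order $8\sqrt{2}\,\gamma\sqrt{d}\sqrt{2\beta^2+n\beta_1^2+n\beta_2^2}$ plus $32/n$ — in fact slightly \emph{sharper} than the stated $12^{3/2}$ constant — whereas the paper keeps the cross term, obtains a self-bounding inequality $G\le \tfrac{16}{n}+\tfrac{2}{\sigma^2}(\cdots)+4\gamma\sqrt{d}\sigma\sqrt{G}$, and resolves it by a case analysis; your decomposition avoids that case analysis entirely. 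The two caveats you flag (boundedness of the perturbed loss, handled in the paper by clipping predictions and a data-processing argument, and independence of the per-prediction noise) are exactly the points the paper also has to address, so nothing is missing.
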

It is expected that $\beta_2$ is smaller than $\beta$ and $\beta_1$.
For example, in the case of neural networks interpolating the training data or in the case of empirical risk minimization in the realizable setting, $\beta_2$ will be zero.
It is also expected that $\beta$ is larger than $\beta_1$.
However, the relation of $\beta^2$ and $n\beta_1^2$ is not trivial.

The notion of pointwise hypothesis stability $\beta'_2$ defined by \citet{bousquet2002stability} (definition 4) is comparable to our notion of self-stability $\beta$.
The first part of Theorem 11 in \cite{bousquet2002stability} describes a generalization bound where the difference between empirical and population losses is of order $1/\sqrt{n}+\sqrt{\beta'_2}$, which is comparable with our result of \cref{thm:f-cmi-deterministic-stability} ($\Theta(\sqrt{\beta})$).
The proof there also contains a bound on the expected squared difference of empirical and population losses. That bound is of order $1/n + \beta'_2$.
In contrast, our result of \cref{eq:f-cmi-stability-squared} contains two extra terms related to test-stability and train-stability (the terms 
$n \beta_1^2$ and $n \beta_2^2$).
If $\beta$ dominates $n\beta_1^2 + n\beta_2^2$, then the bound of \cref{eq:f-cmi-stability-squared} will match the result of \citet{bousquet2002stability}.

\section{Experiments}
As mentioned earlier, the expected generalization gap bound of \cref{corr:f-cmi-bound-m=1} is significantly easier to compute compared to existing information-theoretic bounds, and does not give trivial results for deterministic algorithms.
To understand how well the bound does in challenging situations, we consider cases when the algorithm generalizes well despite the high complexity of the hypothesis class and relatively small number of training examples.
The code for reproducing the experiments can be found at \url{github.com/hrayrhar/f-CMI}.
The exact experimental details of the experiments presented in \cref{tab:mnist4vs9-standard,tab:mnist4vs9-langevin,tab:cifar10-resnet50,tab:cifar10-langevin} of \cref{app:add-exp-details}.

\subsection{Experimental setup}

\paragraph{Estimation of generalization gap.}
In all experiments below we draw $k_1$ samples of $\tilde{Z}$, each time by randomly drawing $2n$ examples from the corresponding dataset and grouping then into $n$ pairs.
For each sample $\tilde{z}$, we draw $k_2$ samples of the training/test split variable $J$ and randomness $R$.
We then run the training algorithm on these $k_2$ splits (in total $k_1 k_2$ runs).
For each $\tilde{z}$, $j$ and $r$, we estimate the population risk with the average error on the test examples $\tilde{z}_{\bar{j}}$.
For each $\tilde{z}$, we average over the $k_2$ samples of $J$ and $R$ to get an estimate $\widehat{g}(\tilde{z})$ of $\E_{J,R|\tilde{Z}=\tilde{z}}\sbr{\ell(\tilde{F}_{\bar{J}}, \tilde{Y}_{\bar{J}}) - \ell(\tilde{F}_J, \tilde{Y}_J)}$.
Note that this latter quantity is not the expected generalization gap yet, as it still misses an expectation over $\tilde{Z}$.
In the figures of this section, we report the mean and standard deviation of $\widehat{g}(\tilde{Z})$ estimated using the $k_1$ samples of $\tilde{Z}$, unless $k_1 = 1$ in which case we only report the single estimate.
Note that this mean will be an unbiased estimate of the true expected generalization gap $\E_{S,R}\sbr{R(f) - r_S(f)}$.

\paragraph{Estimation of $f$-CMI bound.}
Similarly, for each $\tilde{z}$ we use the $k_2$ samples of $J$ and $R$ to estimate $\ofcmi(f, \tilde{z}, \mathset{i}) = I^{\tilde{Z}=\tilde{z}}(\tilde{F}_i; J_i),\ i\in[n]$.
As in all considered cases we deal with classification problems (i.e., having discrete output variables), this is done straightforwardly by estimating all the states of the joint distribution of $\tilde{F}_i$ and $J_i$ given $\tilde{Z}=\tilde{z}$, and then using a plug-in estimator of mutual information.
The bias of this plug-in estimator is $\rbr{\frac{1}{k_2}}$, while the variance is $\rbr{\frac{(\log k_2)^2}{k_2}}$~\citep{paninski2003estimation}.
To estimate $\fcmi(f,\mathset{i})=\E_{\tilde{Z}}\sbr{\ofcmi(f,\tilde{Z},\mathset{i})}$ we use $k_1$ samples of $\tilde{Z}$.
After this step the estimation bias stays the same, while the variance increases by $\rbr{\frac{1}{k_1}}$.
In figures of this section, we report the mean and standard deviation of our estimate of $\fcmi(f,\mathset{i})$ computed using $k_1$ samples of $\tilde{Z}$.

\begin{figure}
    \centering
    \begin{subfigure}{0.33\textwidth}
        \includegraphics[width=\textwidth]{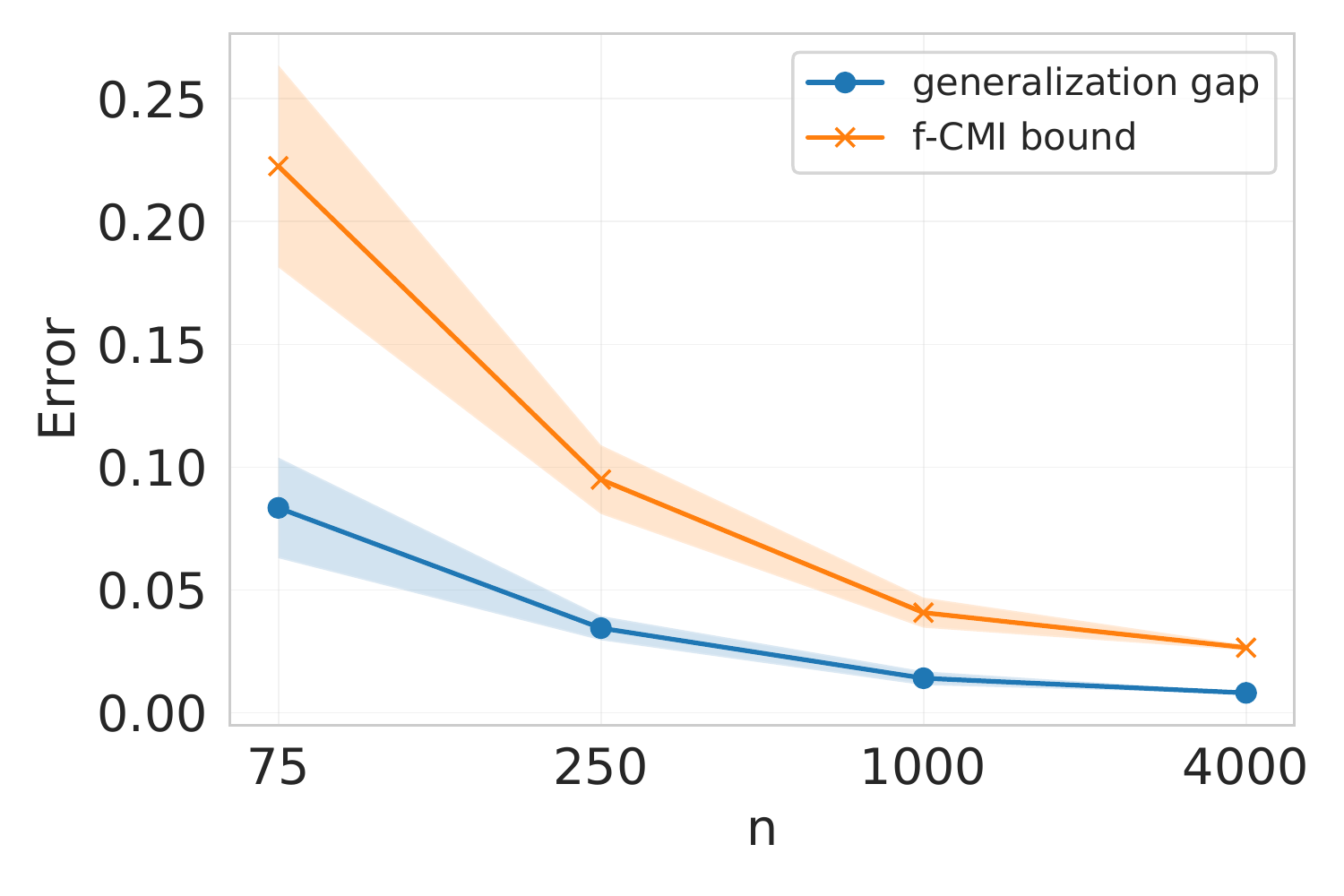}
        \caption{using the default parameters}
        \label{fig:mnist4vs9-cnn-x=n-deterministic}
    \end{subfigure}%
    \begin{subfigure}{0.33\textwidth}
        \includegraphics[width=\textwidth]{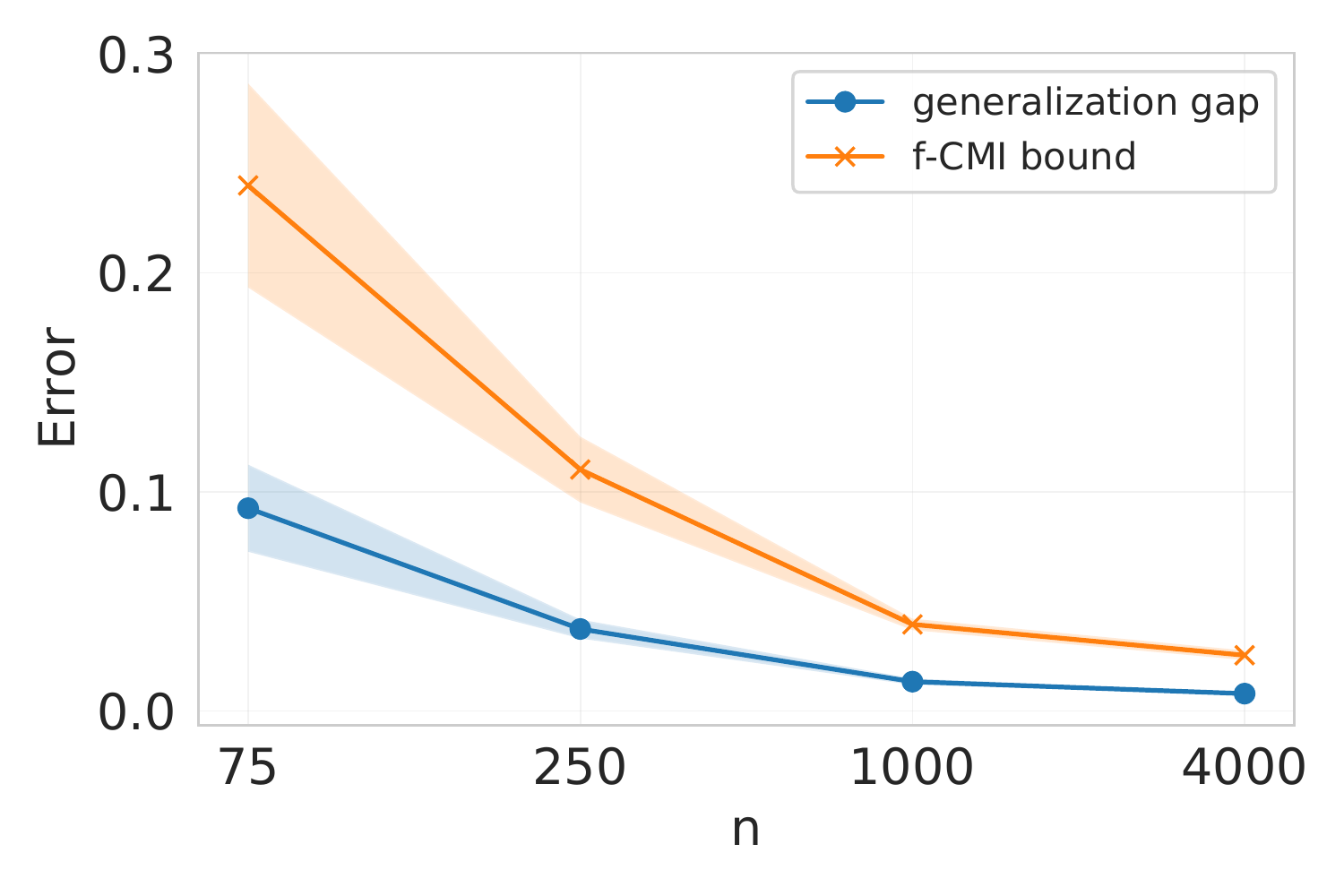}
        \caption{using 4 times wider network}
        \label{fig:mnist-4vs-9-wide}
    \end{subfigure}%
    \begin{subfigure}{0.33\textwidth}
        \includegraphics[width=\textwidth]{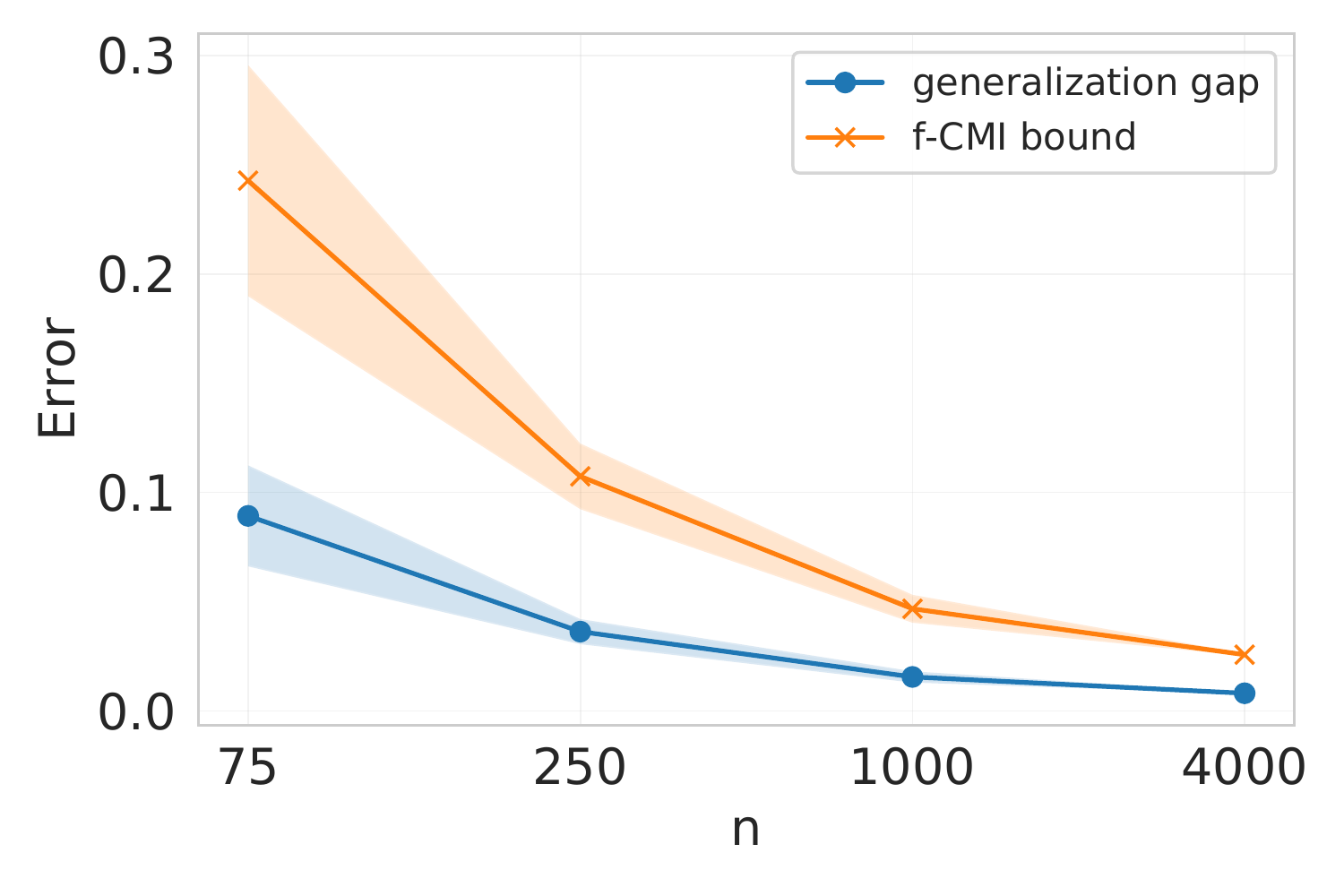}
        \caption{training with a random seed}
        \label{fig:mnist-4vs-9-stochastic}
    \end{subfigure}
    \caption{Comparison of expected generalization gap and $f$-CMI bound for MNIST 4 vs 9 classification with a 5-layer CNN. Panel (a) shows the results for the fixed-seed deterministic algorithm. Panel (b) repeats the experiment of panel (a) while modifying the network to have 4 times more neurons at each hidden layer. Panel (c) repeats the experiment of panel (a) while making the training algorithm stochastic by randomizing the seed.
    }
    \label{fig:mnist-4vs9-fcmi}
\end{figure}

\subsection{Results and discussion}
First, we consider the MNIST 4 vs 9 digit classification task~\citep{lecun2010mnist} using a 5-layer convolutional neural network (CNN) described in \cref{tab:4-layer-cnn-arch} (but with 2 output units) that has approximately 200K parameters.
We train the network with the cross-entropy loss for 200 epochs using the ADAM algorithm~\citep{kingma2014adam} with 0.001 learning rate, $\beta_1=0.9$, and mini-batches of 128 examples.
Importantly, we fix the random seed that controls the initialization of weights and the shuffling of training data, making the training algorithm deterministic.
To estimate generalization gap and the bound, we set $k_1 = 5$ and $k_2 = 30$.
\cref{fig:mnist4vs9-cnn-x=n-deterministic} plots the expected generalization gap and the $f$-CMI bound of \cref{eq:f-cmi-bound-m=1}.
We see that the bound is not vacuous and is not too far from the expected generalization gap even when considering only 75 training examples -- multiple orders of magnitude smaller than the number of parameters.
As shown in the \cref{fig:mnist-4vs-9-wide}, if we increase the width of all layers 4 times, making the number of parameters approximately 3M, the results remain largely unchanged.

Next, we move away from binary classification and consider the CIFAR-10 classication task~\citep{krizhevsky2009learning}.
To construct a well-generalizing algorithm, we use the ResNet-50~\citep{he2016deep} network pretrained on the ImageNet~\citep{deng2009imagenet}, and fine-tune it with the cross-entropy loss for 40 epochs using SGD with mini-batches of size 64, 0.01 learning rate, 0.9 momentum, and standard data augmentations.
The results presented in \cref{fig:cifar-10-pretrained-resnet50} indicate that the $f$-CMI bound is always approximately 3 times larger than the expected generalization gap.
In particular, when $n=20000$, the expected generalization gap is 5\%, while the bound predicts 16\%.

\begin{figure}
    \centering
    \includegraphics[width=0.35\textwidth]{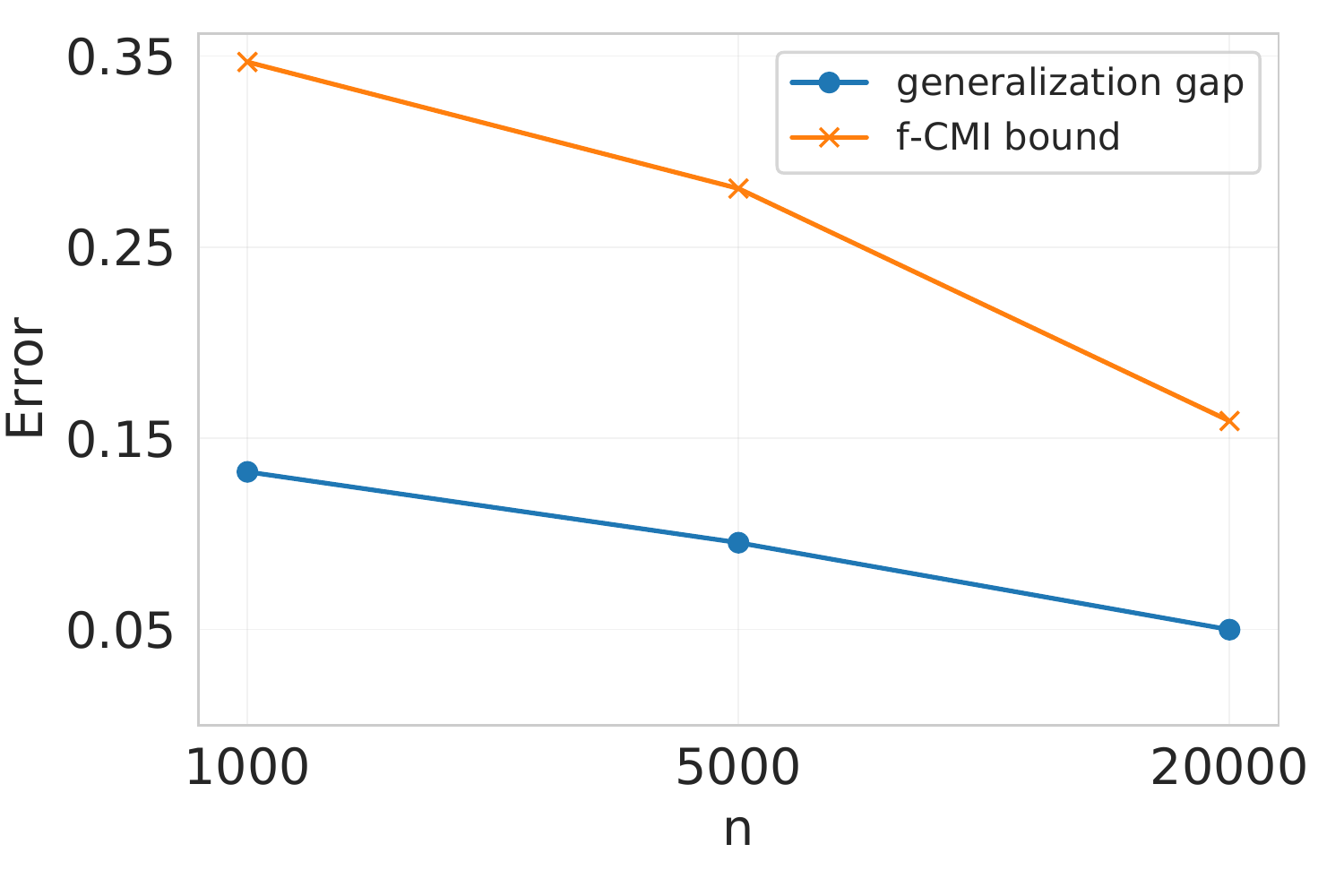}
    \caption{Comparison of expected generalization gap and $f$-CMI bound for a pretrained ResNet-50 fine-tuned on CIFAR-10 in a standard fashion.}
    \label{fig:cifar-10-pretrained-resnet50}
\end{figure}

\begin{figure}
    \centering
    \begin{subfigure}{0.33\textwidth}
        \includegraphics[width=\textwidth]{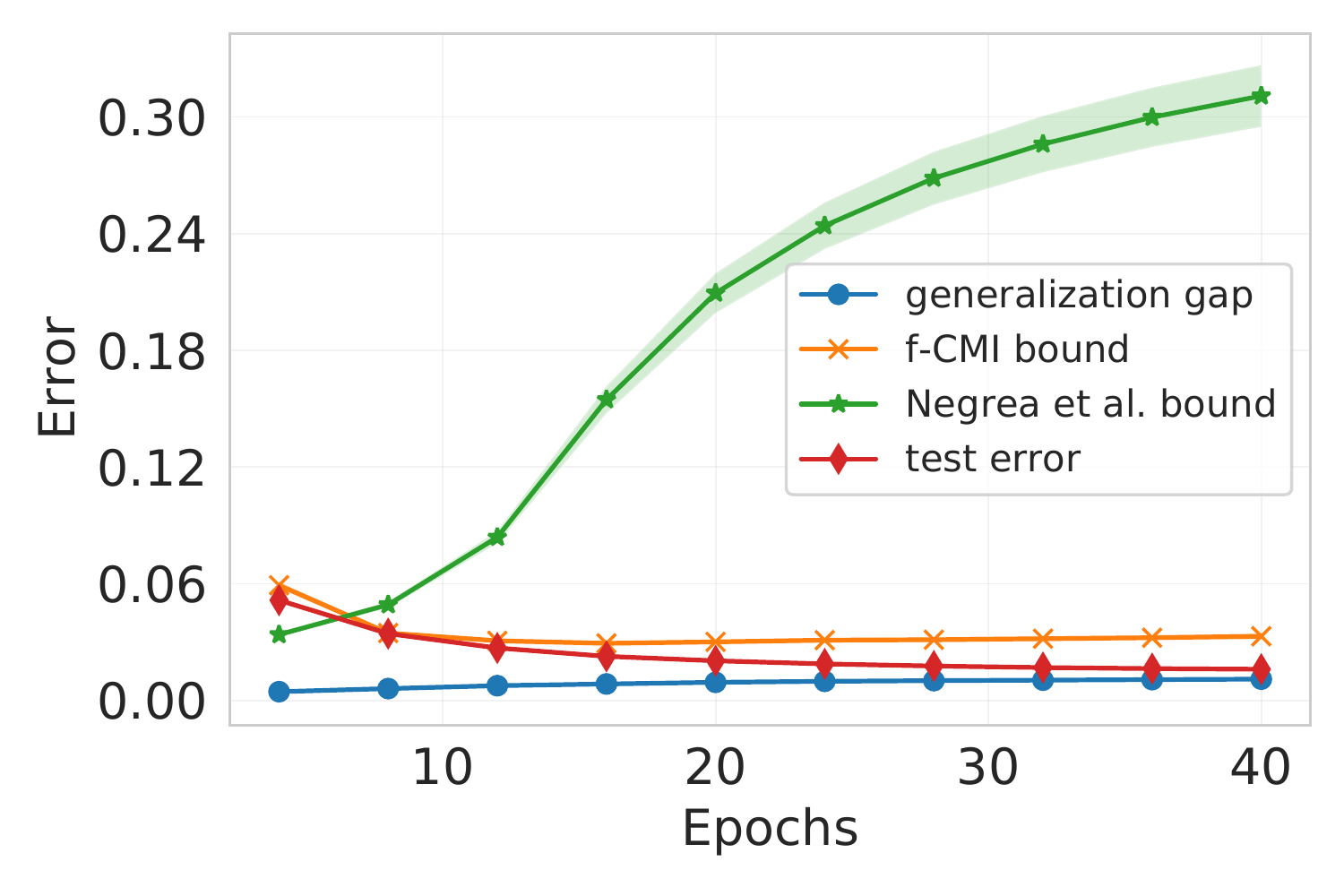}
        \caption{MNIST 4 vs 9}
        \label{fig:mnist-4vs9-langevin-dynamics}
    \end{subfigure}
    \begin{subfigure}{0.33\textwidth}
        \includegraphics[width=\textwidth]{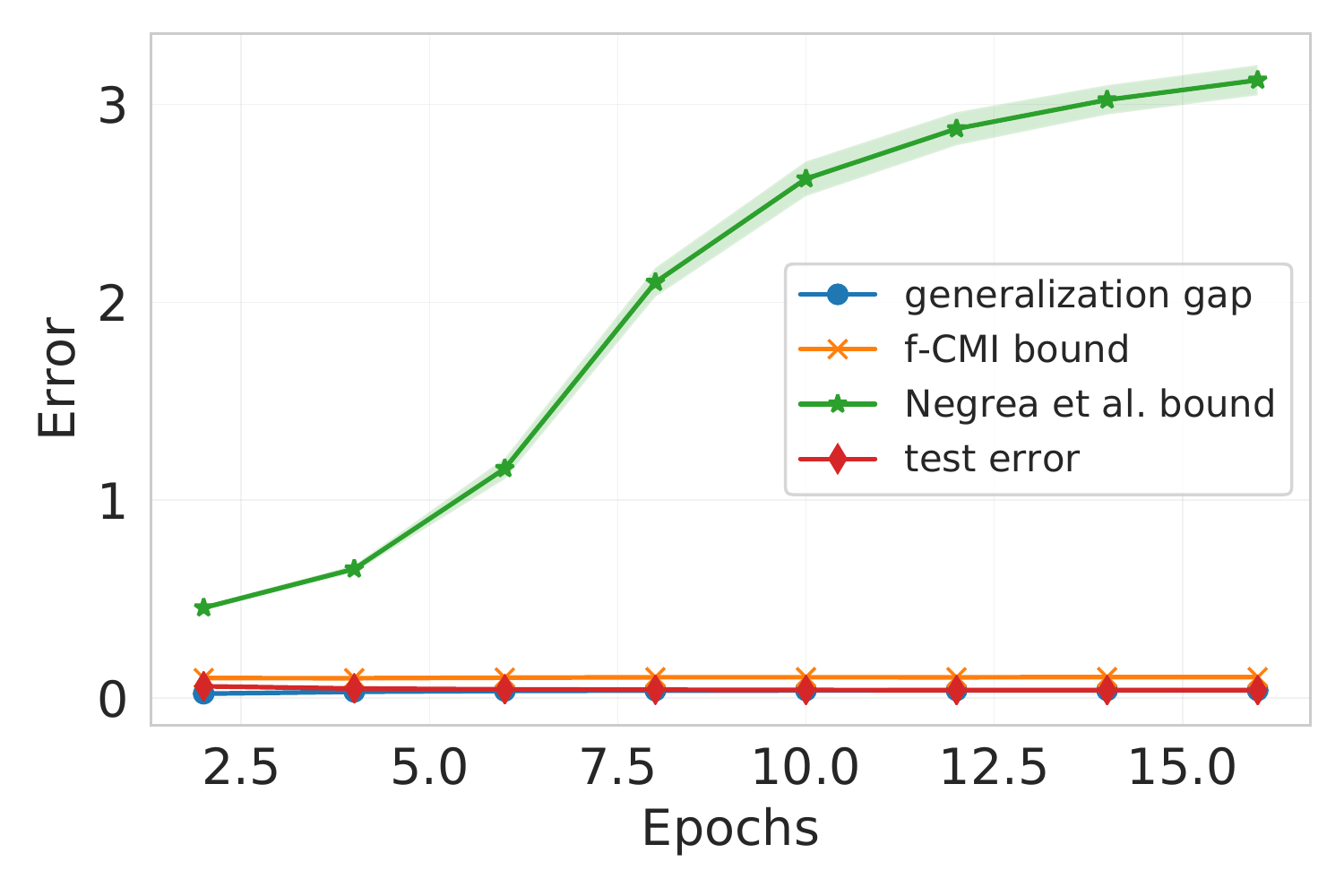}
        \caption{CIFAR-10}
        \label{fig:cifar-10-pretrained-resnet50-LD-w-SGLD}
    \end{subfigure}%
    \begin{subfigure}{0.33\textwidth}
        \includegraphics[width=\textwidth]{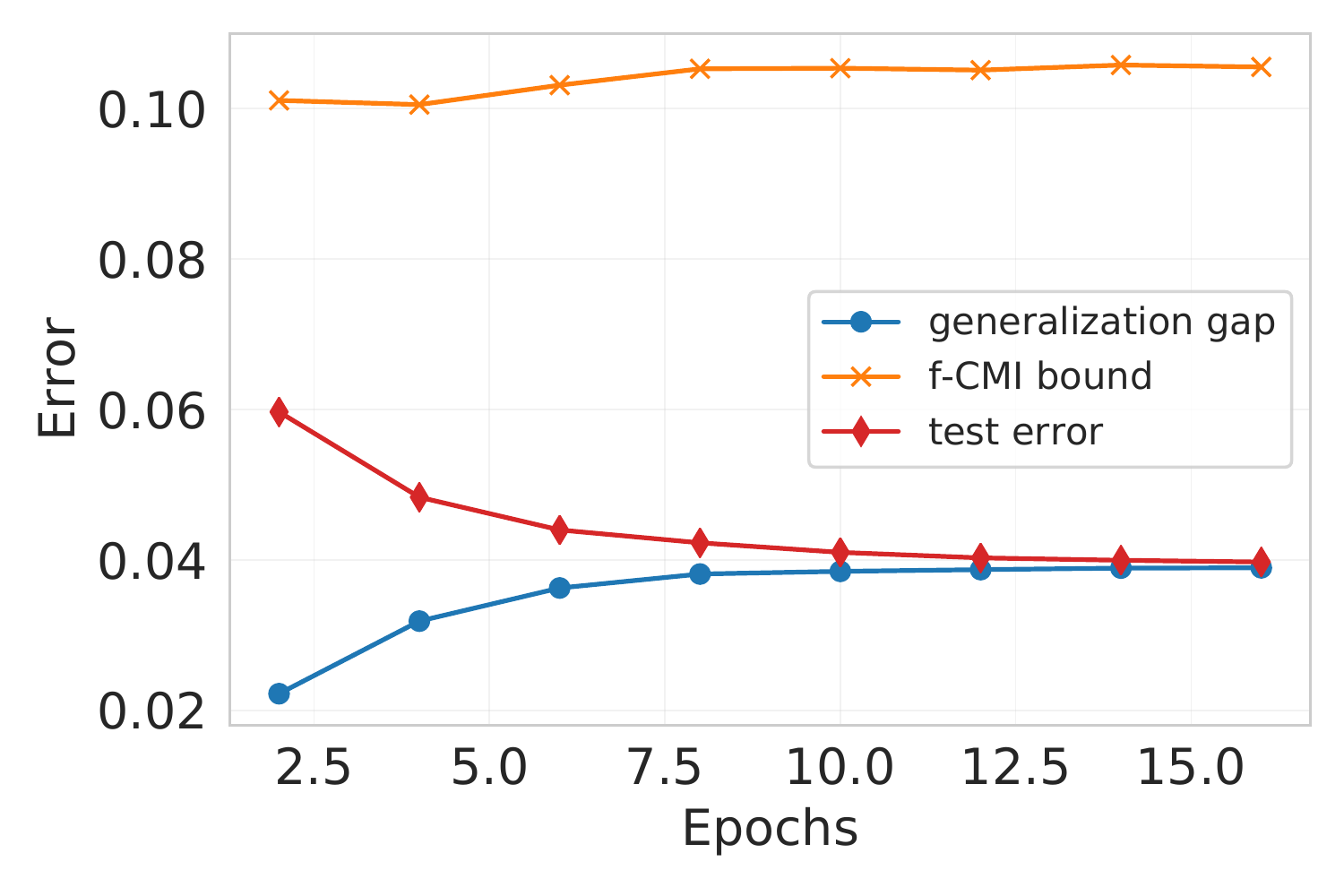}
        \caption{CIFAR-10, zoomed-in}
        \label{fig:cifar-10-pretrained-resnet50-LD-wno-SGLD}
    \end{subfigure}
    \caption{Comparison of expected generalization gap, \citet{negrea2019information} SGLD bound and $f$-CMI bound in case of a pretrained ResNet-50 fine-tuned with SGLD on a subset of CIFAR-10 of size $n=20000$. The figure on the right is the zoomed-in version of the figure in the middle.
    }
    \label{fig:cifar-10-pretrained-resnet50-LD}
\end{figure}

Note that the weight-based information-theoretic bounds discussed in \cref{sec:w-gen-bounds} would give either infinite or trivial bounds for the deterministic algorithm described above.
Even when we make the training algorithm stochastic by randomizing the seed, the quantities like $I(W; S)$ still remain infinite, while both the generalization gap and the $f$-CMI bound do not change significantly (see \cref{fig:mnist-4vs-9-stochastic}).
For this reason, we change the training algorithm to Stochastic Gradient Langevin Dynamics (SGLD)~\citep{gelfand1991recursive, welling2011bayesian} and compare the $f$-CMI-based bound against the specialized bound of \citet{negrea2019information} (see eq. (6) of \citep{negrea2019information}).
This bound (referred as SGLD bound here) is derived from a weight-based information-theoretic generalization bound, and depends on the the hyperparameters of SGLD and on the variance of per-example gradients along the training trajectory.
The SGLD algorithm is trained for 40 epochs, with learning rate and inverse temperature schedules described in \cref{tab:cifar10-langevin} of \cref{app:add-exp-details}.
\cref{fig:mnist-4vs9-langevin-dynamics} plots the expected generalization gap, the expected test error, the $f$-CMI bound and the SGLD bound.
We see that the test accuracy plateaus after 16 epochs.
Starting at epoch 8, the estimated $f$-CMI bound is always smaller than the SGLD bound.
Furthermore, as one increases the number of epochs, the former stays small, while the latter increases to very high values.

The difference between the $f$-CMI bound and the SGLD bound becomes more striking when we change the dataset to be a subset of CIFAR-10 consisting of 20000 examples, and fine-tune a pretrained ResNet-50 with SGLD. As shown in \cref{fig:cifar-10-pretrained-resnet50-LD}, even after a single epoch the SGLD bound is approximately 0.45, while the generalization gap is around 0.02.
For comparison, the $f$-CMI is approximately 0.1 after one epoch of training.


\section{Related work}
This work is closely related to a rich literature of information-theoretic generalization bounds, some of which were discussed earlier~\citep{xu2017information, bassily2018learners, pensia2018generalization, negrea2019information, bu2020tightening, steinke2020reasoning, haghifam2020sharpened, hafez2020conditioning, alabdulmohsin2020towards, neu2021information, raginsky202110, esposito2021generalization}.
Most of these work derive generalization bounds that depend on a mutual information quantity measured between the output of the training algorithm and some quantity related to training data.
Different from this major idea, \citet{xu2017information} and \citet{russo2019much} discussed the idea of bounding generalization gap with the information between the input and the vector of loss functions computed on training examples.
This idea was later extended to the setting of conditional mutual information by \citet{steinke2020reasoning}.
These works are similar to ours in the sense that they move away from measuring information with weights, but they did not develop this line of reasoning enough to arrive to efficient bounds similar to \cref{corr:f-cmi-bound-m=1}.
Additionally, we believe that measuring information with the prediction function allows better interpretation and is easier to work with analytically.

Another related line of research are the stability-based bounds~\citep{bousquet2002stability, alabdulmohsin2015algorithmic, raginsky2016information,  bassily2016algorithmic, feldman2018calibrating, wang2016average, raginsky202110}.
In \cref{sec:w-gen-bounds} and \cref{sec:fcmi} we improve existing generalization bounds that use information stability.
In \cref{sec:stable-algorithms} we describe a technique of applying information stability bounds to deterministic algorithms.
The main idea is to add noise to predictions, but only for analysis purposes.
We employed the same idea in \cref{ch:unique-info} when defining smooth unique information of an individual example.
In fact, our notion of test-stability defined in \cref{sec:stable-algorithms} comes very close to the definition of functional sample information (\cref{eq:fsi-smoothed}).
A similar idea was used by \citet{neu2021information} in analyzing generalization performance of SGD.
More broadly our work is related to PAC-Bayes bounds and to classical generalization bounds. Please refer to the the survey by \citet{Jiang*2020Fantastic} for more information on these bounds.

Finally, our work has connections with attribute and membership inference attacks~\citep{shokri2017membership, yeom2018privacy, nasr2019comprehensive, gupta2021membership}.
Some of these works show that having a white-box access to models allows constructing better membership inference attacks, compared to having a black-box access.
This is analogous to our observation that prediction-based bounds are better than weight-based bounds. \citet{shokri2017membership} and \citet{yeom2018privacy} demonstrate that even in the case of \emph{black-box access to a well-generalizing} model, sometimes it is still possible to construct successful membership attacks.
This is in line with our observation that the $f$-CMI bound can be significantly large, despite of small generalization gap (see epoch 4 of \cref{fig:mnist-4vs9-langevin-dynamics}).
This suggests a possible direction of improving the $f$-CMI-based bounds.

\section{Conclusion}
We derived information-theoretic generalization bounds for supervised learning algorithms based on the information contained in predictions rather than in the output of the training algorithm.
These bounds improve over the existing information-theoretic bounds, are applicable to a wider range of algorithms, and solve two key challenges: (a) they give meaningful results for deterministic algorithms and (b) they are significantly easier to estimate.
We showed experimentally that the proposed bounds closely follow the generalization gap in practical scenarios for deep learning.

\chapter{Formal limitations of sample-wise information-theoretic generalization bounds}\label{ch:limitations}
\section{Introduction}\label{sec:limitations-intro}
In the previous chapter we discussed various information-theoretic bounds based on different notions of training set information captured by the training algorithm~\citep{xu2017information, bassily2018learners, negrea2019information, bu2020tightening, steinke2020reasoning, haghifam2020sharpened, neu2021information,raginsky202110,hellstrom2020generalization, esposito2021generalization}.
The data and algorithm dependent nature of these bounds make them applicable to typical settings of deep learning, where powerful and overparameterized neural networks are employed.
Related to the considered information-theoretic generalization bounds are PAC-Bayes bounds, which are usually based on the Kullback-Leilber divergence from the distribution of hypotheses after training (the ``posterior'' distribution) to a fixed ``prior'' distribution~\citep{ShaweTaylor1997APA,mcallester1999some, mcallester1999pac, catonibook2007, alquier2021user}.
For both types of bounds the main conclusion is that when a training algorithm captures little information about the training data then the generalization gap should be small.
A few works demonstrated that some of the best information-theoretic and PAC-Bayes generalization bounds produce nonvacuous results in practical settings of deep learning~\citep{DBLP:conf/uai/DziugaiteR17,perez2021tighter,harutyunyan2021informationtheoretic}.

A key ingredient in recent improvements of information-theoretic generalization bounds was the introduction of sample-wise information bounds by \citet{bu2020tightening}, where one measures how much information on average the learned hypothesis has about a single training example.
While PAC-Bayes and information-theoretic bounds are intimately connected to each other, the technique of measuring information with single examples 
has not appeared in PAC-Bayes bounds.  
In this chapter we explain the curious omission of single example PAC-Bayes bounds by proving the non-existence of such bounds, revealing a striking difference between information-theoretic and PAC-Bayesian perspectives. 
The reason for this difference is that PAC-Bayes upper bounds the probability of average population and empirical risks being far from each other, while information-theoretic generalization methods upper bound expected difference of population and empirical risks, which is an easier task.

\subsection{Preliminaries}
Let us consider again the abstract learning setting (not necessarily supervised), in which
the learner observes a collection of $n$ i.i.d examples $S=(Z_1,\ldots,Z_n)$ sampled from $P_Z$ and outputs a hypothesis $W \sim Q_{W|S}$ belonging to a hypothesis space $\mathcal{W}$.
As before, $P_S$ and $Q_{W|S}$ induce a joint probability $P_{W,S}$ on $\mathcal{W} \times \mathcal{Z}^n$.
In this setting one can consider different types of generalization bounds.

\paragraph{Expected generalization gap bounds.}
The simplest quantity to consider is the expected generalization gap: $\abs{\E_{P_{W,S}}\sbr{R(W) - r_S(W)}}$, which is the left-hand side of the majority of the generalization bounds presented in the previous chapter.
We saw that when $\ell(w, Z')$ with $Z'\sim P_Z$ is $\sigma$-subgaussian for all $w\in\mathcal{W}$, the following bounds hold~\citep[\cref{thm:generalized-xu-raginsky}]{xu2017information,bu2020tightening}:
\begin{align}
    \abs{\E_{P_{W,S}}\sbr{R(W) - r_S(W)}} &\le \frac{1}{n}\sum_{i=1}^n\sqrt{2\sigma^2 I(W;Z_i)}\label{eq:bu-et-al-bound}\\
    &\le \E_{U}\sbr{\sqrt{\frac{2\sigma^2 I^{U}(W;S_{U})}{m}}}\label{eq:random-subset-bound}\\
    &\le \sqrt{\frac{2\sigma^2 I(W;S)}{n}}\label{eq:full-info-bound},
\end{align}
where $U$ is a uniformly random subset of $[n]$, independent of $S$ and $W$.
We call bounds like \cref{eq:bu-et-al-bound} that depend on information quantities related to individual examples \emph{sample-wise} bounds.
The difference between sample-wise bounds and bounds that depend on $I(W;S)$ can be significant.
In fact, there are cases when the sample-wise bound of \cref{eq:bu-et-al-bound} is finite, while the bound of \cref{eq:full-info-bound} is infinite~\citep{bu2020tightening}.

\paragraph{PAC-Bayes bounds.}
A practically more useful quantity is the average difference between population and empirical risks for a fixed training set $S$: $\E_{P_{W|S}}\sbr{R(W) - r_S(W)}$.
Typical PAC-Bayes bounds are of the following form: with probability at least $1-\delta$ over $P_S$,
\begin{equation}
    \E_{P_{W | S}}\sbr{R(W) - r_S(W)} \le B\rbr{\KL{P_{W|S}}{\pi}, n, \delta},
    \label{eq:typical-pac-bayes}
\end{equation}
where $\pi$ is a prior distribution over $\mathcal{W}$ that does not depend on $S$.
If we choose the prior distribution to be the marginal distribution of $W$ (i.e., $\pi = P_W=\E_{S}\sbr{Q_{W|S}}$), then the KL term in \cref{eq:typical-pac-bayes} will be the integrand of the mutual information, as $I(W;S) = \E_{S}\sbr{\KL{P_{W|S}}{P_W}}$.
When the function $B$ depends on the KL term linearly, the expectation of the bound over $S$ will depend on the mutual information $I(W;S)$.
There are no known PAC-Bayes bounds where $B$ depends on KL divergences of form $\KL{P_{W|Z_i}}{P_W}$ or on sample-wise mutual information $I(W; Z_i)$.

\paragraph{Single-draw bounds.}
In practice, even when the learning algorithm is stochastic, one usually draws a single hypothesis $W \sim Q_{W|S}$ and is interested in bounding the population risk of $W$. Such bounds are called single-draw bounds and are usually of the following form: with probability at least $1-\delta$ over $P_{W,S}$:
\begin{equation}
    R(W) - r_S(W) \le B\rbr{\der{Q_{W|S}}{\pi}, n,\delta},
\end{equation}
where $\pi$ is a prior distribution as in PAC-Bayes bounds.
When $\pi = P_W$ the single-draw bounds depends on the information density $\iota(W,S) = \der{Q_{W|S}}{P_W}$.
Single-draw bounds are the hardest to obtain and no sample-wise versions are known for them either.

\paragraph{Expected squared generalization gap bounds.}
In terms of difficulty, expected generalization bounds are the easiest to obtain, and as indicated above, some of those bounds are sample-wise bounds.
If we consider a simple change of moving the absolute value inside: $\E_{P_{W,S}}\abs{R(W) - r_S(W)}$, then no sample-wise bounds are known. 
The same is true for the expected squared generalization gap $\E_{P_{W,S}}\rbr{R(W) - r_S(W)}^2$, for which the known bounds are of the following form~\citep[\cref{thm:generalized-xu-raginsky}]{steinke2020reasoning, aminian2021information}:
\begin{equation}
    \E_{P_{W,S}}\sbr{\rbr{R(W) - r_S(W)}^2} \le \frac{I(W;S) + c}{n},
\end{equation}
where c is some constant.
From this result one can upper bound the expected absolute value of generalization gap using Jensen's inequality.

\subsection{Our contributions}
In \cref{sec:counterexample} we show that even for the expected squared generalization gap, sample-wise information-theoretic bounds are impossible.
The same holds for PAC-Bayes and single-draw bounds as well.
In \cref{sec:implications} we discuss the consequences for other information-theoretic generalization bounds.
Finally, in \cref{sec:m=2} we show that starting at subsets of size 2, there are expected squared generalization gap bounds that measure information between $W$ and a subset of examples.
This in turn implies that such PAC-Bayes and single-draw bounds are also possible, albeit they are not tight and high-probability bounds.

\section{A useful lemma}
We first prove a lemma that will be used in the proof of the main result of the next section.
\begin{lemma}
Consider a collection of bits $a_1,\ldots,a_{N_0 + N_1}$, such that $N_0$ of them are zero and the remaining $N_1$ of them are one.
We want to partition these numbers into $k=(N_0 + N_1)/n$ groups of size $n$ (assuming that $n$ divides $N_0 + N_1$).
Consider a uniformly random ordered partition $(A_1,\ldots,A_k)$.
Let $Y_i = \oplus_{a \in A_i} a$ be the parity of numbers of subset $A_i$.
For any given $\delta > 0$, there exists $N'$ such that when $\min\mathset{N_0,N_1} \ge N'$ then $\cov\sbr{Y_i, Y_j} \le \delta\E\sbr{Y_1}^2$, for all $i, j \in [k], i \neq j$.
\label{lemma:cov}
\end{lemma}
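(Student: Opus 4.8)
The plan is to pass to sign variables and reduce the whole statement to the behaviour of a single one-dimensional quantity — the parity expectation of a uniformly random subset — and then exploit an exact cancellation against the ``sampling-with-replacement'' baseline. Write $Z_i \triangleq (-1)^{Y_i} = (-1)^{X_i}$, where $X_i$ is the number of ones landing in block $A_i$; since $Y_i = (1-Z_i)/2$, one obtains $\cov\sbr{Y_i,Y_j} = \tfrac14\cov\sbr{Z_i,Z_j}$ and $\E\sbr{Y_1}^2 = \tfrac14\rbr{1-\E\sbr{Z_1}}^2$. By exchangeability of the uniformly random ordered partition, the marginal law of any single block is that of a uniformly random $n$-subset, the union $A_i\cup A_j$ of any two blocks is a uniformly random $2n$-subset (independently of $k$), and $Z_iZ_j = (-1)^{X_i+X_j}$ is the parity of that union. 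Hence, setting $g(m) \triangleq \E\sbr{(-1)^{X}}$ with $X$ the number of ones in a uniformly random $m$-subset, we have $\E\sbr{Z_1}=g(n)$ and $\E\sbr{Z_1Z_2}=g(2n)$, and the claim becomes: for every $\delta>0$ there is $N'$ such that $\min\{N_0,N_1\}\ge N'$ forces $g(2n)-g(n)^2 \le \delta\rbr{1-g(n)}^2$.

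First I would expand $g(m)$ over the draw-indicators. Writing $p=N_1/N$, $\bar p = N_0/N$, and $q_\ell = (N_1)_\ell/(N)_\ell$ for the ratio of falling factorials, inclusion--exclusion gives $g(m)=\sum_{\ell=0}^m\binom m\ell(-2)^\ell q_\ell$. The structural point is that $q_0$ and $q_1$ coincide \emph{exactly} with their with-replacement values $p^0,p^1$, so writing $q_\ell = p^\ell r_\ell$ with $r_0=r_1=1$ and $\abs{r_\ell-1}\le \binom{2n}{2}/N_1$ for all $\ell\le 2n$, the remainder
\[
E_m \triangleq g(m)-(1-2p)^m = \sum_{\ell\ge 2}\binom m\ell(-2p)^\ell(r_\ell-1)
\]
starts at $\ell=2$ and therefore carries a factor $p^2$. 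A crude term-by-term bound gives $\abs{E_m}\le C(n)\,p^2/N_1$ for a constant $C(n)$ depending only on $n$ (all sums run to $\ell\le 2n$). Since $N_1=pN$ this reads $\abs{E_m}\le C(n)\,p/N$, and the symmetry exchanging zeros and ones yields the companion bound $\abs{E_m}\le C(n)\,\bar p/N$; hence $\abs{E_m}\le C(n)\min\{p,\bar p\}/N$.

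Next comes the cancellation. Because independent blocks have zero covariance, the pure baseline terms cancel in $g(2n)-g(n)^2 = E_{2n}-2(1-2p)^nE_n-E_n^2$, whence $\abs{g(2n)-g(n)^2}\le 4\max\{\abs{E_n},\abs{E_{2n}}\}\le 4C(n)\min\{p,\bar p\}/N$ once $N'$ is large enough that $\abs{E_m}\le 1$. For the denominator I would invoke the elementary uniform inequality $1-(1-2p)^n\ge 2\min\{p,\bar p\}$ (checked separately for $p\le\tfrac12$ and $p\ge\tfrac12$, the latter using the parity of $n$), which after absorbing the small remainder gives $\E\sbr{Y_1}=\tfrac12\rbr{1-g(n)}\ge\tfrac12\min\{p,\bar p\}$ for $N'$ large. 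Combining,
\[
\frac{\cov\sbr{Y_i,Y_j}}{\E\sbr{Y_1}^2}=\frac{g(2n)-g(n)^2}{(1-g(n))^2}\le \frac{4C(n)\min\{p,\bar p\}/N}{\min\{p,\bar p\}^2}=\frac{4C(n)}{\min\{N_0,N_1\}},
\]
using $N\min\{p,\bar p\}=\min\{N_0,N_1\}$; taking $N' > 4C(n)/\delta$ finishes the proof.

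The main obstacle is the \emph{relative} nature of the bound. A naive total-variation comparison between sampling with and without replacement shows only that $\cov\sbr{Z_i,Z_j}\to 0$ in absolute value, which is worthless precisely in the regime $p\to 0$ or $p\to 1$, where $\E\sbr{Y_1}^2$ itself collapses to zero (indeed the covariance can be \emph{positive} for small $N$, e.g.\ $N_0=N_1=2,n=2$). The resolution is the observation that the deviation from the independent baseline is \emph{second order} in $\min\{p,\bar p\}$ — a consequence of the $\ell=0,1$ terms of the expansion matching exactly — so the vanishing $\min\{p,\bar p\}^2$ factor in the numerator is cancelled by the one in the denominator, leaving a genuine $1/\min\{N_0,N_1\}$ decay. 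The one delicate spot is making the lower bound $\E\sbr{Y_1}\gtrsim\min\{p,\bar p\}$ hold uniformly, in particular handling the $p\to 1$ case according to the parity of $n$.
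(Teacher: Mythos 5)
Your proof is correct, but it takes a genuinely different route from the paper's. The paper works directly with the probability mass functions: it writes $P(Y_1{=}1,Y_2{=}1)$ and $P(Y_1{=}1)P(Y_2{=}1)$ as explicit double sums over the odd counts of ones in the two blocks, and observes that the ratio of corresponding summands --- a product of three ratios of binomial coefficients --- tends to $1$ as $\min\mathset{N_0,N_1}\to\infty$ with $n$ fixed; since there are only $O(n^2)$ summands, this gives $P(Y_1{=}1,Y_2{=}1)\le(1+\delta)P(Y_1{=}1)^2$ and hence the claim. You instead pass to sign variables, note that $\E\sbr{Z_1}$ and $\E\sbr{Z_1Z_2}$ are both values of the single function $g(m)=\E\sbr{(-1)^X}$ (at $m=n$ and $m=2n$), and compare the hypergeometric expansion of $g$ against its with-replacement baseline $(1-2p)^m$. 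Your key cancellation --- the deviation $E_m$ is second order in $\min\mathset{p,\bar p}$ because the $\ell=0,1$ terms match exactly, so it survives division by $\E\sbr{Y_1}^2\gtrsim\min\mathset{p,\bar p}^2$ --- plays the same role as the paper's observation that the summand ratios converge to $1$ \emph{multiplicatively} rather than additively; both proofs live or die on getting a relative rather than absolute comparison. What your route buys is an explicit rate, $\cov\sbr{Y_i,Y_j}/\E\sbr{Y_1}^2 = O(1/\min\mathset{N_0,N_1})$ with a constant depending only on $n$, and a two-sided bound on the covariance, at the cost of the extra bookkeeping around the baseline expansion and the uniform lower bound $1-(1-2p)^n\ge 2\min\mathset{p,\bar p}$; the paper's term-by-term comparison is shorter but purely qualitative as written. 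When writing yours up, make explicit that the bounds $\abs{r_\ell-1}\le\binom{2n}{2}/N_1$ and the parity argument both implicitly assume $\min\mathset{N_0,N_1}\ge 2n$, which is harmlessly absorbed into the choice of $N'$.
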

\begin{proof}
By symmetry all random variables $Y_i$ are identically distributed.
Without loss of generality let's prove the result for the covariance of $Y_1$ and $Y_2$, which can be written as follows:
\begin{align*}
    \cov\sbr{Y_1,Y_2} &= \E\sbr{Y_1 Y_2} - \E\sbr{Y_1}\E\sbr{Y_2}\\
    &\hspace{-4em}= P_{Y_1,Y_2}\rbr{Y_1 = 1, Y_2 = 1} - P(Y_1=1)P(Y_2=1).\numberthis\label{eq:cov_y1_y2}
\end{align*}
Consider the process of generating a uniformly random ordered partition by first picking $n$ elements for the first subset, then $n$ elements for the second subset, and so on.
In this scheme, the probability that both $Y_1=1$ and $Y_2=1$ equals to
\begin{align}
\frac{1}{M} &\sum_{u=0}^{\floor{(n-1)/2}}\sum_{v=0}^{\floor{(n-1)/2}}\Choose{N_1}{2u+1}\Choose{N_0}{n-2u-1}\Choose{N_1-2u-1}{2v+1}\Choose{N_0-n+2u+1}{n-2v-1},\label{eq:joint-prob}
\end{align}
where
\begin{equation}
M = \Choose{N_0+N1}{n}\Choose{N_0+N_1-n}{n}.
\end{equation}
Let $q_{u,v}$ be the $(u,v)$-th summand of \cref{eq:joint-prob} divided by $M$.
On the other hand, the product of marginals $P(Y_1=1) P(Y_2=1)$ is equal to
\begin{align*}
\frac{1}{M'} &\sum_{u=0}^{\floor{(n-1)/2}}\sum_{v=0}^{\floor{(n-1)/2}}\left[ \Choose{N_1}{2u+1}\Choose{N_0}{n-2u-1}\Choose{N_1}{2v+1}\Choose{N_0}{n-2v-1}\right],\numberthis\label{eq:product-of-marginals-prob}
\end{align*}
where
\begin{equation}
M' = \Choose{N_0 + N_1}{n}\Choose{N_0 + N_1}{n}.
\end{equation}
Let $q'_{u,v}$ be the $(u,v)$-th summand of \cref{eq:product-of-marginals-prob} divided by $M'$.
Consider the ratio $q_{u,v}/q'_{u,v}$:
\begin{align}
\frac{q_{u,v}}{q'_{u,v}} = \frac{\Choose{N_0 + N_1}{n}}{\Choose{N_0+N_1-n}{n}}\cdot\frac{\Choose{N_1-2u-1}{2v+1}}{\Choose{N_1}{2v+1}}\cdot\frac{\Choose{N_0-n+2u+1}{n-2v-1}}{\Choose{N_0}{n-2v-1}}.
\end{align}
By picking $N_0$ and $N_1$ large enough, we will make $q_{u,v}$ close to $q'_{u,v}$.
This is possible as for any fixed $n$, these 3 fractions converge to 1 as $\min\mathset{N_0, N_1}\rightarrow \infty$.
Therefore, for any $\delta > 0$ there exists $N'$ such that when $\min\mathset{N_0, N_1}\ge N'$ then $q_{u,v} \le (1+\delta) q'_{u,v}$.
This implies that  $P_{Y_1,Y_2}(Y_1=1,Y_2=1) \le (1+\delta) P(Y_1=1) P(Y_2=1)$. Combining this result with \cref{eq:cov_y1_y2} proves the desired result.
\end{proof}

\section{A counterexample}\label{sec:counterexample}
\begin{theorem}
For any training set size $n=2^r$ and $\delta > 0$, there exists a finite input space $\ZZ$, a data distribution $P_Z$, a learning algorithm $Q_{W|S}$ with a finite hypothesis space $\mathcal{W}$, and a binary loss function $\ell : \mathcal{W} \times \ZZ \rightarrow \mathset{0, 1}$ such that 
\begin{itemize}
    \item[(a)] $\KL{Q_{W|S}}{P_{W}} \ge n-1$ with probability at least $1-\delta$,
    \item[(b)] $W$ and $Z_i$ are independent for each $i\in[n]$, 
    \item[(c)] $\E_{P_{W,S}}\sbr{R(W) - r_S(W)} = 0$,
    \item[(d)] $P_{W,S}\rbr{R(W) - r_S(W) \ge \frac{1}{4}} \ge \frac{1}{2}$.
\end{itemize}
\label{thm:main}
\end{theorem}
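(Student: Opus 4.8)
The plan is to exhibit an explicit construction, over $\mathbb{F}_2$, and to note up front that the only genuinely delicate requirements are (a) and (d). Property (c) is essentially free: since $W\perp Z_i$ and the examples are identically distributed, $\E[r_S(W)]=\frac1n\sum_i\E_W\E_{Z_i}[\ell(W,Z_i)]=\E_W\E_{Z'\sim P_Z}[\ell(W,Z')]=\E[R(W)]$, so (b) $\Rightarrow$ (c). The real task is therefore to couple the per-example losses so strongly that the empirical risk is bimodal (to drive (d)) while keeping $W$ \emph{exactly} independent of each single $Z_i$ (for (b)); these requirements pull in opposite directions, and reconciling them is the crux.

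The construction I would use takes $\ZZ=\{0,1\}^{D}\times\{0,1\}^{D'}$ with $P_Z$ uniform, writing each example as $z=(z_{\mathrm{main}},u)$, and sets $D=n$, $D'=\lceil (n-1)/\log 2\rceil$. Given $S$, let $\Delta=\mathrm{span}\cbr{Z_{i,\mathrm{main}}\oplus Z_{j,\mathrm{main}}:i,j\in[n]}\subseteq\{0,1\}^{D}$ be the difference space of the main parts. The algorithm outputs $W=(W_{\mathrm{orth}},P)$, where $W_{\mathrm{orth}}$ is drawn uniformly from $\Delta^{\perp}\setminus\{0\}$ and $P=\bigoplus_{i=1}^n u_i$ is the parity of the pads. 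The loss reads off a single $\mathbb{F}_2$ inner product, $\ell(w,z)=\langle w_{\mathrm{orth}},z_{\mathrm{main}}\rangle\in\{0,1\}$, ignoring the pads.

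I then verify the four properties. For (b), the key observation is a shift-invariance: conditioned on $Z_i=z$, the remaining main parts $\{Z_{j,\mathrm{main}}\oplus z_{\mathrm{main}}\}_{j\ne i}$ are i.i.d.\ uniform regardless of $z$, so the law of $\Delta$ (hence of $\Delta^{\perp}$, hence of $W_{\mathrm{orth}}$) does not depend on $z$; combined with $P\perp u_i$, this gives $W\perp Z_i$ exactly. Because $W_{\mathrm{orth}}\in\Delta^{\perp}$, the inner products $\langle W_{\mathrm{orth}},Z_{i,\mathrm{main}}\rangle$ all equal one common bit $c$, so $r_S(W)=c\in\{0,1\}$; since $W_{\mathrm{orth}}\ne 0$ always and $Z'_{\mathrm{main}}$ is uniform and independent of $W$, we get $R(W)=\tfrac12$. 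By (b), $c=\langle W_{\mathrm{orth}},Z_{1,\mathrm{main}}\rangle$ is a fair coin, so $R(W)-r_S(W)=\tfrac12-c$ equals $\pm\tfrac12$ with probability $\tfrac12$ each, yielding (c) ($\E[\text{gap}]=0$) and (d) ($P(\text{gap}\ge\tfrac14)=P(c=0)=\tfrac12$) simultaneously. For (a), since $P$ is a deterministic, marginally uniform function of $S$ independent of $W_{\mathrm{orth}}$, the divergence splits as $\KL{Q_{W|S=s}}{P_W}=D'\log 2+\KL{Q_{W_{\mathrm{orth}}|S=s}}{P_{W_{\mathrm{orth}}}}\ge D'\log 2\ge n-1$ for every $s$ (probability $1\ge 1-\delta$).

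The step I expect to be the main obstacle is exactly (b): forcing the per-example losses to be perfectly correlated (all equal to $c$, which makes $r_S$ bimodal and drives (d)) while retaining exact independence of $W$ from each individual example. The difference-space construction threads this needle, and the shift-invariance argument is what makes the independence exact rather than approximate. In a more combinatorial instantiation — taking $P_Z$ uniform over a finite pool of $N_0$ zero- and $N_1$ one-labelled items and defining the losses through parities of $n$-element groups, where $n=2^r$ yields a clean divisible partition — the same reconciliation is obtained by invoking \cref{lemma:cov}: its bound $\cov[Y_i,Y_j]\le\delta\,\E[Y_1]^2$ certifies that the relevant group parities are arbitrarily close to pairwise independent, which is what pins the population risk at $\tfrac12$ and delivers the mean-zero and probability-$\tfrac12$ behaviour of the gap up to the slack $\delta$.
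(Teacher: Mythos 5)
Your construction is correct, and it verifies all four properties, but it is genuinely different from the one in the paper, so a comparison is in order. The paper takes $\mathcal{W}$ to be the set of partitions of $\ZZ=\mathset{0,1}^d$ into blocks of size $n$, has the algorithm output a uniformly random partition containing $S$ as a block, and defines the loss as the parity of all bits in the block containing $z$; the empirical risk is then exactly $\oplus^{(2)}(S)\in\mathset{0,1}$, but the population risk is an average of $N/n-1$ block parities, so pinning it in $[1/4,3/4]$ requires \cref{lemma:cov} (near pairwise independence of block parities) plus Chebyshev, together with bookkeeping for the duplicate-sample event and a support-size computation for the KL term. Your $\mathbb{F}_2$ construction short-circuits all of this: because $W_{\mathrm{orth}}\in\Delta^{\perp}\setminus\mathset{0}$ and the test point is uniform and independent of $W$, the population risk is \emph{exactly} $1/2$, the common training bit $c$ is \emph{exactly} a fair coin (your shift-invariance argument for $W\perp Z_i$ is the right one and is airtight, including the joint independence of $(W_{\mathrm{orth}},P)$ from $Z_i$ via the main-part/pad product structure), and all four properties hold surely rather than up to $\delta$-slack — no concentration lemma, no large-$d$ asymptotics. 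You were also right to add the pad: without it, $\KL{Q_{W_{\mathrm{orth}}|S=s}}{P_{W_{\mathrm{orth}}}}\approx n\log 2$ nats, which falls short of $n-1$ for $n\ge 4$, and your factorization $\KL{Q_{W|S=s}}{P_W}=D'\log 2+\KL{Q_{W_{\mathrm{orth}}|S=s}}{P_{W_{\mathrm{orth}}}}$ is valid since $W_{\mathrm{orth}}$ and $P$ are functions of independent coordinates. The trade-off is that in the paper's construction the large KL is intrinsic to the mechanism producing the gap (the hypothesis memorizes $S$ up to one example), and that specific partition structure is reused in the downstream discussion of \cref{sec:implications} (e.g., the claims about $I^{\tilde{Z}_i}(W;J_i)$ versus $I^{\tilde{Z}}(W;J_i)$), so substituting your construction there would require re-verifying those claims; as a proof of \cref{thm:main} itself, however, yours is cleaner.
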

This result shows that there can be no meaningful sample-wise expected squared generalization bounds, as $I(W; Z_i) = 0$ while $\E_{P_{W,S}}\rbr{R(W) - r_S(W)}^2 \ge 1/16$.
Similarly, there can be no sample-wise PAC-Bayes or single-draw generalization bounds that depend on quantities such as $\KL{P_{W|Z_i}}{P_W}$, $I(W;Z_i)$ or $\iota(W, Z_i)$, as all of them are zero while with probability at least $1/2$ the generalization gap will be at least $1/4$.
The first property verifies that in order to make the generalization gap large $W$ needs to capture a significant amount of information about the training set.

The main idea behind the counterexample construction is to ensure that $W$ contains sufficient information about the whole training set but no information about individual examples.
This captured information is then used to make the losses on all training examples equal to each other, but possibly different for different training sets.
This way we induce significant variance of empirical risk.
Along with making the population risk to be roughly the same for all hypotheses, we ensure that the generalization gap will be large on average.
Satisfying the information and risk constraints separately is trivial, the challenge is in satisfying both at the same time.

To better understand this result and its implications, it is instructive to go into the details of the construction.

\begin{proof}[Proof of \cref{thm:main}]
Let $n=2^r$ and $\ZZ$ be the set of all binary vectors of size $d$: $\ZZ=\mathset{0,1}^d$ with $d > r$. 
Let $N=2^d$ denote the cardinality of the input space.
We choose the data distribution to be the uniform distribution on $\ZZ$: $P_Z = \mathrm{Uniform}(\ZZ)$.
Let the hypothesis set $\mathcal{W}$ be the set of all partitions of $\ZZ$ into subsets of size $n$:
\begin{equation}
\mathcal{W} = \mathset{\{A_1,\ldots,A_{N/n}\} \mid |A_i|=n, \cup_i A_i=\ZZ, A_i\cap A_j = \emptyset}.
\end{equation}
This hypothesis space is finite, namely, $\abs{\mathcal{W}}=\frac{N!}{(n!)^{N/n}(N/n)!}$.
When the training algorithm receives a training set $S$ that contains duplicate examples, it outputs a hypothesis from $\mathcal{W}$ uniformly at random.
When $S$ contains no duplicates, then the learning algorithm outputs a uniformly random hypothesis from the set $\mathcal{W}_S \subset \mathcal{W}$ of hypotheses/partitions that contain $S$ as a partition subset.
Formally,
\begin{equation}
    Q_{W|S} = \left\{\begin{array}{ll}
    \mathrm{Uniform}(\mathcal{W}), & \text{if $S$ has duplicates},\\
    \mathrm{Uniform}\rbr{\mathcal{W}_S}, & \text{otherwise},
    \end{array}\right.
\end{equation}
where $\mathcal{W}_S \triangleq \mathset{\mathset{A_1,\ldots,A_{N/n}} \in \mathcal{W} \mid \exists i \text{ s.t. } A_i = S}$.
Let $\rho(n,d)$ be the probability of $S$ containing duplicate examples.
By picking $d$ large enough we can make $\rho$ as small as needed.

Given a partition $w=\{A_1,\ldots,A_{2^d/n}\}\in\mathcal{W}$ and an example $z\in\ZZ$, we define $[z]_w$ to be the subset $A_i \in w$ that contains $z$.
Given a set of examples $A \subset \ZZ$, we define $\oplus^{(2)}(A)$ to be xor of all bits of all examples of $A$:
\begin{equation}
    \oplus^{(2)}(A) = \oplus_{(a_1,\ldots,a_d)\in A}\rbr{\oplus_{i=1}^d a_i}.
\end{equation}
Finally, we define the loss function as follows:
\begin{equation}
    \ell(w,z) = \oplus^{(2)}\rbr{[z]_w} \in \mathset{0,1}.
\end{equation}

Let $W\sim Q_{W|S}$. Let us verify now that properties (a)-(d) listed in the statement of \cref{thm:main} hold.

\paragraph{(a).} By symmetry, the marginal distribution $P_W = \E_S\sbr{Q_{W|S}}$ will be the uniform distribution over $\mathcal{W}$. With probability $1-\rho(n,d)$, the training set $S$ has no duplicates and $Q_{W|S} = \mathrm{Uniform}(\mathcal{W}_S)$. In such cases, the support size of $Q_{W|S}$ is equal to $ \frac{(N-n)!}{(n!)^{N/n-1}(N/n-1)!}$, while the support size of $P_W$ is always equal to $\frac{N!}{(n!)^{N/n}(N/n)!}$.
Therefore, 
\begin{align}
    \KL{Q_{W|S}}{P_W} &= \log\rbr{\frac{\abs{\mathrm{supp}(P_W)}}{\abs{\mathrm{supp}(Q_{W|S})}}}\\
    &\hspace{-3em}=\log\rbr{\frac{(N-n+1)(N-n+2)\cdots N}{n!(N/n)}}\\
    &\hspace{-3em}\ge\log\rbr{\frac{n(N-n+1)^n}{n^n N}}\\
    &\hspace{-3em}=n\log\rbr{N-n+1} - (n-1)\log n - \log N\\
    &\hspace{-3em}\approx (n-1) d\log 2 - n\log n. &&\text{\hspace{-7.5em}(for a suff. large $d$)}
\end{align}
For large $d$, the quantity above is approximately $(n-1)d\log 2$, which is expected 
as knowledge of any $Z_i$ ($d$ bits) along with $W$ is enough to reconstruct the training set $S$ that has $nd$ bits of entropy.
To satisfy the property (a), we need to pick $d$ large enough to make $\rho(n,d)<\delta$ and $(n-1)d\log 2 - n\log n \ge n-1$.

\paragraph{(b).}
Consider any $z\in\ZZ$ and any $i \in [n]$. Then
    \begin{align}
        P_{W|Z_i=z}(W=w) &= \frac{P_W(W=w)P_{Z_i|W=w}(Z_i=z)}{P_Z(Z=z)}\\
        &= P_W(W=w),
    \end{align}
    where the second equality follows from the fact that conditioned on a fixed partition $w$, because of the symmetry, there should be no difference between probabilities of different $Z_i$ values.

\paragraph{(c).}
With probability $1-\rho(n,d)$,
    \begin{align}
        r_S(W) &= \frac{1}{n}\sum_{i=1}^n \ell(W,Z_i)\\
        &= \frac{1}{n}\sum_{i=1}^n \oplus^{(2)}([Z_i]_W)\\
        &= \frac{1}{n}\sum_{i=1}^n \oplus^{(2)}(S)\\
        &= \oplus^{(2)}(S) \in \{0,1\}\numberthis\label{eq:emp_risk}.
    \end{align}
    Furthermore, 
    \begin{align}
        R(W) &= \E_{P_Z}\sbr{\ell(W,Z)} = \frac{1}{N/n} \sum_{A \in W} \oplus^{(2)}(A).
        \numberthis\label{eq:risk}
    \end{align}
    Given \cref{eq:emp_risk} and \cref{eq:risk}, due to symmetry  $\E_{P_{W,S}}\sbr{r_S(W)}=1/2$ and $\E_{P_{W,S}}\sbr{R(W)} = 1/2$. Hence, the expected generalization gap is zero: $\E_{P_{W,S}}\sbr{R(W) - r_S(W)} = 0$.

\paragraph{(d).}
Consider a training set $S$ that has no duplicates and let $W=\mathset{S, A_1,\ldots,A_{N/n-1}} \sim Q_{W|S}$.
The population risk can be written as follows:
\begin{align}
    R(W) = \underbrace{\frac{n}{N} \oplus^{(2)}(S)}_{\le n/N} + \frac{n}{N} \sum_{i=1}^{N/n-1} \underbrace{\oplus^{(2)}(A_i)}_{\triangleq Y_i}.
\end{align}
Consider the set $\ZZ \setminus S$. Let $N_0$ be the number of examples in this set with parity 0 and $N_1$ be the number of examples with parity 1. We use \cref{lemma:cov} to show that $Y_i$ are almost pairwise independent. Formally, for any $\delta'>0$ there exists $N'$ such that when $\min\mathset{N_0, N_1} \ge N'$, we have that $\cov\sbr{Y_i, Y_j} \le \delta'\E\sbr{Y_1}^2$, for all $i, j \in [N/n-1], i\neq j$. Therefore,
\begin{align}
    \var\sbr{\frac{n}{N} \sum_{i=1}^{N/n-1}Y_i} &= \frac{n^2}{N^2} \rbr{\frac{N}{n}-1} \var\sbr{Y_1}\\
    &+\frac{n^2}{N^2}\rbr{\frac{N}{n}-1}\rbr{\frac{N}{n}-2}\cov\sbr{Y_1, Y_2}\\
    &\le \frac{n}{4N} + \delta'.
\end{align}
By Chebyshev's inequality
\begin{align}
P\rbr{\abs{\frac{n}{N}\sum_{i=1}^{N/n-1} Y_i - \frac{n}{N}\rbr{\frac{N}{n}-1}\E\sbr{Y_1}} \ge t} \le \frac{\frac{n}{4N} + \delta'}{t^2}.
\end{align}
Furthermore, $\E\sbr{Y_1}\rightarrow 1/2$ as $d \rightarrow \infty$. Therefore, we can pick a large enough $d$, appropriate $\delta'$ and $t$ to ensure that with at least 0.5 probability over $P_{W,S}$, $R(W) \in [1/4, 3/4]$ and $r_S(W) \in \mathset{0,1}$.
\end{proof}

\section{Implications for other bounds}\label{sec:implications}
Information-theoretic generalization bounds have been improved or generalized in many ways.
A few works have proposed to use other types of information measures and distances between distributions, instead of Shannon mutual information and Kullback-Leibler divergence respectively~\citep{esposito2021generalization,aminian2021jensen,rodriguez2021tighter}.
In particular, ~\citet{rodriguez2021tighter} derived expected generalization gap bounds that depend on the Wasserstein distance between $P_{W|Z_i}$ and $P_W$. \citet{aminian2021jensen} derived similar bounds but that depend on sample-wise Jensen-Shannon information $I_{JS}(W; Z_i)\triangleq\mathrm{JS}\rbr{P_{W,Z_i}\left\vert\right\vert P_W \otimes P_{Z_i}}$ or on lautum information $L(W;Z_i) \triangleq \KL{P_W \otimes P_{Z_i}}{P_{W,Z_i}}$.
\citet{esposito2021generalization} derive bounds on probability of an event in a joint distribution $P_{X,Y}$ in terms of the probability of the same event in the product of marginals distribution $P_X \otimes P_Y$ and an information measure between $X$ and $Y$ (Sibson’s $\alpha$-mutual information, maximal leakage, $f$-mutual information) or a divergence between $P_{X,Y}$ and $P_X \otimes P_Y$ (R\'{e}nyi $\alpha$-divergences, $f$-divergences, Hellinger divergences).
They note that one can derive in-expectation generalization bounds from these results. These results will be sample-wise if one starts with $X=W$ and $Y=Z_i$ and then takes an average over $i\in[n]$.
The property (b) of the counterexample implies that there are no PAC-Bayes, single-draw, or expected squared generalization bounds for the aforementioned information measures or divergences, as all of them will be zero when $\forall z\in\ZZ, P_W = P_{W|Z_i=z}$.

PAC-Bayes bounds have been improved by comparing  population and empirical risks differently (instead of just subtracting them)~\citep{langford2001bounds,germain2009pac,rivasplata2020pac}. The property (d) of the counterexamples implies that these improvements will not make sample-wise PAC-Bayes bounds possible, as the changed distance function will be at least constant when $r_S(W) \in\mathset{0,1}$ while $R(W) \in [1/4,3/4]$.

Another way of improving information-theoretic bounds is to use the random subsampling setting introduced by \citet{steinke2020reasoning}.
In this setting one considers $2n$ i.i.d. samples from $P_Z$ grouped into $n$ pairs: $\tilde{Z} \in \ZZ^{n\times 2}$.
A random variable $J \sim \mathrm{Uniform}(\mathset{0,1}^n)$, independent of $\tilde{Z}$, specifies which example to select from each pair to form the training set $S = (\tilde{Z}_{i,J_i})_{i=1}^n$.
\citet{steinke2020reasoning} proved that if $\ell(w,z) \in [0,1], \forall w \in \WW, z \in \ZZ$, then the expected generalization gap can be bounded as follows:
\begin{equation}
    \abs{\mathbb{E}_{S,W}\sbr{R(W) - r_S(W)}} \le \sqrt{\frac{2}{n} I(W; J \mid \tilde{Z})}.
\end{equation}
This result was improved in many works, leading to the following sample-wise bounds~\citep{haghifam2020sharpened, harutyunyan2021informationtheoretic,rodriguez2021random,Zhou2021IndividuallyCI}:
\begin{align}
    \abs{\mathbb{E}_{S,W}\sbr{R(W) - r_S(W)}} &\le \frac{1}{n}\sum_{i=1}^n \E_{\tilde{Z}_i}\sbr{\sqrt{2 I^{\tilde{Z}_i}(W; J_i)}}\label{eq:random-subsampling-sawple-wise-strong},\\
    \abs{\mathbb{E}_{S,W}\sbr{R(W) - r_S(W)}}&\le \frac{1}{n}\sum_{i=1}^n \E_{\tilde{Z}}\sbr{\sqrt{2 I^{\tilde{Z}}(W; J_i)}}\label{eq:random-subsampling-sawple-wise-weak},
\end{align}
where $\tilde{Z}_i=(\tilde{Z}_{i,1}, \tilde{Z}_{i,2})$ is the $i$-th row of $\tilde{Z}$.
Given a partition $W \sim Q_{W|S}$ and two examples $\tilde{Z}_i$, one cannot tell which of the examples was in the training set because of the symmetry.
Hence, the counterexample implies that expected squared, PAC-Bayes, and single draw generalization bounds that depend on quantities like $I^{\tilde{Z}_i}(W; J_i)$ cannot exist.
However, if we consider the weaker sample-wise bounds of \cref{eq:random-subsampling-sawple-wise-weak}, then the knowledge of $\tilde{Z}$ helps to reveal the entire $J$ at once with high probability.
This can be done by going over all possible choices of $J$ and checking whether $\tilde{Z}_J = (\tilde{Z}_{i, J_i})$ belongs to partition $W$.
This will be true for the true value of $J$, but will be increasingly unlikely for all other values of $J$ as $n$ and $d$ are increased.
In fact, we derive the following expected squared generalization gap that depends on terms $I^{\tilde{Z}}(W; J_i)$.
\begin{theorem}
In the random subsampling setting, let $W \sim Q_{W|S}$. If $\ell(w,z)\in[0,1]$, then
\begin{equation}
    \E_{P_{W,S}}\sbr{\rbr{R(W) - r_S(W)}^2} \le \frac{5}{2n} + \frac{2}{n}\sum_{i=1}^n \E_{\tilde{Z}}\sqrt{2 I^{\tilde{Z}}(W; J_i)}.
\end{equation}
\label{thm:g2-weak-sample-wise-bound}
\end{theorem}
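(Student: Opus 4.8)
The plan is to adapt the random-subsampling (super-sample) technique: rewrite the generalization gap as a held-out estimate plus a train/test discrepancy, dispose of the first piece as pure sampling noise, and reduce the second to a ``signal'' part governed by the single-example information $I^{\tilde{Z}}(W;J_i)$ plus a residual conditional variance that must be controlled at the $O(1/n)$ scale.

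First I would write, with $L_i = \ell(W,\tilde{Z}_{i,J_i})$ the training loss and $L_i' = \ell(W,\tilde{Z}_{i,1-J_i})$ the held-out loss on each pair,
\begin{equation}
R(W) - r_S(W) = \underbrace{\rbr{R(W) - \frac{1}{n}\sum_{i=1}^n L_i'}}_{A} + \underbrace{\frac{1}{n}\sum_{i=1}^n (L_i' - L_i)}_{B}.
\end{equation}
Since $W$ depends on $(\tilde{Z},J)$ only through $S=\tilde{Z}_J$, conditioned on $(W,J)$ the held-out points $\tilde{Z}_{i,1-J_i}$ are i.i.d.\ $P_Z$ and independent of $W$; hence $\frac1n\sum_i L_i'$ averages $n$ independent $[0,1]$-valued variables with mean $R(W)$, giving $\E\sbr{A^2}\le \frac{1}{4n}$. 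Using $(a+b)^2\le 2a^2+2b^2$, it then suffices to prove $\E\sbr{B^2}\le \frac1n + \frac1n\sum_i \E_{\tilde{Z}}\sqrt{2I^{\tilde{Z}}(W;J_i)}$, since $\E\sbr{(R-r_S)^2}\le 2\E\sbr{A^2}+2\E\sbr{B^2}$ then yields the stated $\frac{5}{2n}+\frac2n\sum_i\E_{\tilde{Z}}\sqrt{2I^{\tilde{Z}}(W;J_i)}$.

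For $\E\sbr{B^2}$ I would condition on $\tilde{Z}$ and split by the law of total variance over $W$. With $\Delta_i=L_i'-L_i$ and $\bar{B}(W)=\E\sbr{B \mid W,\tilde{Z}}$,
\begin{equation}
\E\sbr{B^2 \mid \tilde{Z}} = \E_W\sbr{\bar{B}(W)^2} + \E_W\sbr{\var\rbr{B \mid W,\tilde{Z}}}.
\end{equation}
The first, ``between'', term is the tractable one: because averaging $\Delta_i$ over $J_i$ (uniform and independent of $W,\tilde{Z}$) gives zero and $\Delta_i\bar{B}(W)\in[-1,1]$, the first part of \cref{lemma:mutual-info-lemma} gives $\abs{\E\sbr{\Delta_i\bar{B}(W)\mid\tilde{Z}}}\le\sqrt{2I^{\tilde{Z}}(W;J_i)}$; averaging the identity $\E_W\sbr{\bar{B}(W)^2}=\frac1n\sum_i\E\sbr{\Delta_i\bar{B}(W)\mid\tilde{Z}}$ and then taking $\E_{\tilde{Z}}$ produces exactly $\frac1n\sum_i\E_{\tilde{Z}}\sqrt{2I^{\tilde{Z}}(W;J_i)}$.

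The main obstacle is the ``within'' term $\E_{\tilde{Z},W}\sbr{\var\rbr{B \mid W,\tilde{Z}}}$, which I must bound by $\frac1n$. This cannot be done pointwise in $\tilde{z}$: for an adversarially chosen $\tilde{z}$ the posterior $P_{J\mid W,\tilde{z}}$ can be strongly correlated, driving $\var(B\mid W,\tilde{z})$ up to $1$ even when every $I^{\tilde{z}}(W;J_i)=0$, so the estimate can only hold after averaging over $\tilde{Z}$. Writing $\Delta_i=(1-2J_i)\,d_i(W)$ with $d_i(W)=\ell(W,\tilde{Z}_{i,1})-\ell(W,\tilde{Z}_{i,0})\in[-1,1]$ gives $\var\rbr{B\mid W}=\frac{4}{n^2}\var\rbr{\sum_i d_i(W)J_i\mid W}$, whose diagonal already contributes at most $\frac1n$; the delicate part is showing the aggregate off-diagonal mass $\frac{4}{n^2}\sum_{i\neq k}\E_W\sbr{d_i(W)d_k(W)\cov\rbr{J_i,J_k\mid W}}$ is controlled after averaging over $\tilde{Z}$. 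I would attack this through the two global facts that the per-$\tilde{z}$ view ignores: that $J\sim\mathrm{Uniform}(\{0,1\}^n)$ is independent of $\tilde{Z}$ (so the $J_i$ are marginally independent, yielding $\E_W\sbr{\cov\rbr{J_i,J_k\mid W}}=-\cov_W\rbr{\E\sbr{J_i\mid W},\E\sbr{J_k\mid W}}$), and that $W$ observes $J$ only through $S=\tilde{Z}_J$. Reconciling these cancellation identities with the $W$-measurable weights $d_i(W)$, which obstruct a direct use of the marginal-independence structure, is the step I expect to be hardest.
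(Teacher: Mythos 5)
Your setup — the split $R(W)-r_S(W)=A+B$ with the held-out average, the bound $\E\sbr{A^2}\le \tfrac{1}{4n}$, the inequality $(a+b)^2\le 2a^2+2b^2$, and the $\tfrac1n$ diagonal contribution — matches the paper's proof exactly, and your treatment of the ``between'' term via \cref{lemma:mutual-info-lemma} is sound. But the proposal has a genuine gap precisely where you flag it: the ``within'' term $\E_{\tilde{Z},W}\sbr{\var\rbr{B\mid W,\tilde{Z}}}\le \tfrac1n$ is not a routine residual estimate, it is an unconditional claim about the conditional covariances $\E\sbr{\cov\rbr{\Delta_i,\Delta_k\mid W,\tilde{Z}}}$ that does not follow from the two ``global facts'' you cite. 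Marginal independence of the $J_i$ gives you control of $\E_W\sbr{\cov\rbr{J_i,J_k\mid W}}$ only for \emph{fixed} weights, and the $W$-measurable factors $d_i(W)d_k(W)$ destroy that cancellation; a $W$ that reveals (given $\tilde{Z}$) a near-antipodal pair of candidate $J$'s would make each $I^{\tilde{Z}}(W;J_i)$ vanish while driving the within term to order one, so any proof of your claim must use the structural fact that $W$ is generated from $\tilde{Z}_J$ alone — an argument you have not supplied and which is not obviously available at the stated $O(1/n)$ scale. Since your between term already spends the entire information budget $\tfrac1n\sum_i\E_{\tilde{Z}}\sqrt{2I^{\tilde{Z}}(W;J_i)}$, you cannot afford to pay a second information term for the within piece without losing a factor of $2$ against the stated constant.

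The paper avoids isolating this quantity altogether. It expands $\E\sbr{B^2}$ into the diagonal (giving $\tfrac1n$) plus the cross terms, and for each $i$ bounds the \emph{entire} cross term $\E\sbr{\Delta_i\cdot\tfrac1n\sum_{k\neq i}\Delta_k}$ in one shot by applying \cref{lemma:mutual-info-lemma} to the pair $(W,J_i)$ conditionally on $\tilde{Z}$, with the test function being the conditional expectation of the product over $J_{-i}$ given $(J_i,W,\tilde{Z})$; this function is $[-1,1]$-valued, and the decoupled expectation under $P_{W\mid\tilde{Z}}\otimes P_{J_i}$ is argued to vanish because $\ell(w,\tilde{Z}_{i,J_i})-\ell(w,\tilde{Z}_{i,\bar{J}_i})$ has zero mean under $P_{J_i}$ for each fixed $w$. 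In that route the conditional-covariance mass you are struggling with is simply absorbed into the $\sqrt{2\,I^{\tilde{Z}}(W;J_i)}$ bound rather than needing a separate $O(1/n)$ estimate, and your between-term computation becomes redundant. To complete your proof you would either need a new argument for the within term or abandon the law-of-total-variance split and bound the cross terms directly as the paper does.
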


Therefore, the counterexample is a discrete case where the bound of \cref{eq:random-subsampling-sawple-wise-strong} is much better than the weaker bound of \cref{eq:random-subsampling-sawple-wise-weak}.
It is also a case where $I(W; Z_i) \ll I(W; J_i \mid \tilde{Z})$ (i.e., CMI bounds are not always better).

Finally, another way of improving information-theoretic bounds is to use evaluated conditional mutual information (e-CMI)~\citep{steinke2020reasoning} or functional conditional mutual information (f-CMI, see \cref{ch:sample-info}).
Similarly to the functional CMI bounds, one can derive the following expected generalization gap bound.
\begin{theorem}
In the random subsampling setting, let $W \sim Q_{W|S}$. If $\ell(w,z)\in[0,1]$, then
\begin{align}
    \abs{\mathbb{E}\sbr{R(W) - r_S(W)}} &\le \frac{1}{n}\sum_{i=1}^n \sqrt{2 I(\ell(W, \tilde{Z}_i); J_i)},\label{eq:random-subsampling-sawple-wise-ECMI-strongest}\\
    \abs{\mathbb{E}\sbr{R(W) - r_S(W)}} &\le \frac{1}{n}\sum_{i=1}^n \E_{\tilde{Z}_i}\sbr{\sqrt{2 I^{\tilde{Z}_i}(\ell(W, \tilde{Z}_i); J_i)}},\label{eq:random-subsampling-sawple-wise-ECMI-strong}\\
    \abs{\mathbb{E}\sbr{R(W) - r_S(W)}} &\le \frac{1}{n}\sum_{i=1}^n \E_{\tilde{Z}}\sbr{\sqrt{2 I^{\tilde{Z}}(\ell(W, \tilde{Z}_i); J_i)}}\label{eq:random-subsampling-sawple-wise-ECMI-weak},
\end{align}
where $\ell(W, \tilde{Z}_i) \in \mathset{0,1}^2$ is the pair of losses on the two examples of $\tilde{Z}_i$.
\label{thm:ecmi-sample-wise-exp-gen-gap}
\end{theorem}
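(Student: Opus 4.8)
The plan is to reduce all three inequalities to a single application of the first part of \cref{lemma:mutual-info-lemma}, following the same route as the proof of the $f$-CMI bound in \cref{corr:f-cmi-bound-m=1}, but with the \emph{pair of losses} in the role of the prediction. First I would rewrite the expected generalization gap in the subsampling setting. Writing $\bar{J}_i \triangleq 1-J_i$ for the index of the example of pair $i$ that is \emph{not} placed in the training set, the examples $(\tilde{Z}_{i,\bar{J}_i})_{i=1}^n$ are i.i.d.\ $P_Z$ and, conditioned on $J$, independent of the selected examples and of $R$, hence of $W$. Consequently $\E\sbr{\ell(W,\tilde{Z}_{i,\bar{J}_i})} = \E\sbr{R(W)}$ for every $i$, so that
\[
\E_{P_{S,W}}\sbr{R(W) - r_S(W)} = \frac{1}{n}\sum_{i=1}^n \E\sbr{\Delta_i}, \qquad \Delta_i \triangleq \ell(W,\tilde{Z}_{i,\bar{J}_i}) - \ell(W,\tilde{Z}_{i,J_i}).
\]

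The key structural observation is that $\Delta_i$ is a function of only the loss pair $\ell(W,\tilde{Z}_i) = (L_{i,0}, L_{i,1})$ and the bit $J_i$, namely $\Delta_i = g(\ell(W,\tilde{Z}_i), J_i)$ with $g\rbr{(l_0,l_1),j} = (1-2j)(l_1 - l_0)$. Because $\ell \in [0,1]$, we have $g \in [-1,1]$, so $g$ is $1$-subgaussian under any product law (an interval of length $2$ gives subgaussianity parameter $1$). Moreover, since $J_i \sim \mathrm{Uniform}(\mathset{0,1})$ and is independent of $\tilde{Z}$, the mean of $g$ under the \emph{product} distribution $P_{\ell(W,\tilde{Z}_i)} \otimes P_{J_i}$ vanishes: taking the expectation over $J_i$ first gives $\E\sbr{1-2J_i}=0$. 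Applying the first part of \cref{lemma:mutual-info-lemma} with $\Phi = \ell(W,\tilde{Z}_i)$, $\Psi = J_i$, and $\sigma = 1$ then yields $\abs{\E\sbr{\Delta_i}} = \abs{\E_{P_{\Phi,\Psi}}\sbr{g} - \E_{P_\Phi\otimes P_\Psi}\sbr{g}} \le \sqrt{2 I(\ell(W,\tilde{Z}_i); J_i)}$, and summing over $i$ and dividing by $n$ gives \cref{eq:random-subsampling-sawple-wise-ECMI-strongest}.

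For the disintegrated bounds I would rerun the identical argument after conditioning on $\tilde{Z}_i = \tilde{z}_i$ (respectively $\tilde{Z} = \tilde{z}$). Since $J \indep \tilde{Z}$, the conditional law of $J_i$ is still $\mathrm{Uniform}(\mathset{0,1})$, so the product-measure mean of $g$ remains zero and $g$ remains $1$-subgaussian; the disintegrated form of \cref{lemma:mutual-info-lemma} then gives $\abs{\E\sbr{\Delta_i \mid \tilde{Z}_i = \tilde{z}_i}} \le \sqrt{2 I^{\tilde{Z}_i = \tilde{z}_i}(\ell(W,\tilde{Z}_i); J_i)}$ and likewise with $\tilde{Z}=\tilde{z}$. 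Taking the outer expectation and using $\abs{\E\sbr{\Delta_i}} = \abs{\E_{\tilde{Z}_i}\E\sbr{\Delta_i \mid \tilde{Z}_i}} \le \E_{\tilde{Z}_i}\abs{\E\sbr{\Delta_i \mid \tilde{Z}_i}}$ produces \cref{eq:random-subsampling-sawple-wise-ECMI-strong}, and the same step with $\tilde{Z}$ produces \cref{eq:random-subsampling-sawple-wise-ECMI-weak}. These e-CMI bounds refine the corresponding $f$-CMI bounds of \cref{thm:f-cmi-bound}, since $\ell(W,\tilde{Z}_i)$ is a deterministic function of $(\tilde{F}_i,\tilde{Y}_i)$, and the ``strongest/strong/weak'' labels simply mirror the conditioning hierarchy inherited from that setting.

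I do not expect a deep obstacle here; the work is in the bookkeeping. The two points requiring care are: (i) the CMI identity $\E\sbr{\ell(W,\tilde{Z}_{i,\bar{J}_i})} = \E\sbr{R(W)}$, which relies on the unused examples being fresh $P_Z$ draws independent of $W$ given $J$; and (ii) the vanishing of the product-measure bias term, which is what lets the lemma bound $\abs{\E\sbr{\Delta_i}}$ \emph{directly} rather than a recentered quantity, and which hinges entirely on $J_i$ being uniform and independent. For the fully disintegrated version \cref{eq:random-subsampling-sawple-wise-ECMI-weak} the only subtlety is checking that conditioning on all of $\tilde{Z}$ (not just pair $i$) still leaves $J_i$ uniform and independent, so that the antisymmetry argument for $g$ is undisturbed.
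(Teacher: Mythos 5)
Your proposal is correct and follows essentially the same route as the paper's proof: the paper likewise writes the gap as $\frac{1}{n}\sum_i \E\sbr{\tilde{\Lambda}_{i,J_i}-\tilde{\Lambda}_{i,\bar J_i}}$ with $\tilde{\Lambda}_i=\ell(W,\tilde Z_i)$, applies the first part of \cref{lemma:mutual-info-lemma} with $\Phi=\tilde{\Lambda}_i$ and $\Psi=J_i$, notes the product-measure mean vanishes by uniformity of $J_i$, and obtains the two disintegrated variants by conditioning on $\tilde Z_i$ or $\tilde Z$ before invoking the lemma. Your explicit antisymmetric form $g\rbr{(l_0,l_1),j}=(1-2j)(l_1-l_0)$ and the remark about the fresh test examples are just more careful bookkeeping of the same steps.
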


In the case of the counterexample bounds of \cref{eq:random-subsampling-sawple-wise-ECMI-strongest} and \cref{eq:random-subsampling-sawple-wise-ECMI-strong} will be zero as one cannot guess $J_i$ knowing losses $\ell(W,\tilde{Z}_i)$ and possibly also $\tilde{Z}_i$.
This rules out the possibility of such sample-wise expected squared, PAC-Bayes and single-draw generalization bounds.
Unlike the case of the weaker CMI bound of \cref{eq:random-subsampling-sawple-wise-weak}, the weaker e-CMI bound of \cref{eq:random-subsampling-sawple-wise-ECMI-weak} convergences to zero in the case of the counterexample as $d\rightarrow \infty$.
Therefore, the counterexample is a discrete case where the sample-wise e-CMI bound of \cref{eq:random-subsampling-sawple-wise-ECMI-weak} can be much stronger than the sample-wise CMI bound of \cref{eq:random-subsampling-sawple-wise-weak}.

\section{The case of \texorpdfstring{$m=2$}{m=2}}\label{sec:m=2}
In \cref{sec:limitations-intro} we mentioned that there are expected generalization bounds that are based on information contained in size $m$ subsets of examples.
In \cref{sec:counterexample} we showed that there can be no expected squared generalization bounds with $m=1$.
In this section we show that expected squared generalization bounds are possible for any $m \ge 2$.
For brevity, let $G^{(2)} \triangleq \E_{P_{W,S}}\sbr{\rbr{R(W) - r_S(W)}^2}$ denote 
expected squared generalization gap.

\begin{theorem} Assume $\ell(w,z) \in [0, 1]$. Let $W \sim Q_{W|S}$. Then
\begin{equation}
    G^{(2)} \le \frac{1}{n} + \frac{1}{n^2}\sum_{i \neq k} \sqrt{2 I\rbr{W; Z_i, Z_k}}.
\end{equation}
\label{thm:m=2}
\end{theorem}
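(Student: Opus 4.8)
The plan is to expand the square, separate the diagonal from the off-diagonal terms, and control each off-diagonal covariance-like term with the first part of \cref{lemma:mutual-info-lemma}.

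First I would introduce the centered per-example gap $g(w,z) \triangleq R(w) - \ell(w,z)$, so that $R(W) - r_S(W) = \frac{1}{n}\sum_{i=1}^n g(W,Z_i)$ and hence
\begin{equation*}
G^{(2)} = \frac{1}{n^2}\sum_{i=1}^n\sum_{k=1}^n \E_{P_{W,S}}\sbr{g(W,Z_i) g(W,Z_k)}.
\end{equation*}
For the $n$ diagonal terms ($i=k$), note that $\ell \in [0,1]$ forces $R(w) \in [0,1]$ and thus $g(w,z) \in [-1,1]$, so $g(W,Z_i)^2 \le 1$; the diagonal therefore contributes at most $\frac{1}{n^2}\cdot n = \frac{1}{n}$, which accounts for the first term of the bound.

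The main work is the off-diagonal terms. For fixed $i \neq k$ I would set $\Phi = W$, $\Psi = (Z_i,Z_k)$, and take $h(w,(z_i,z_k)) = g(w,z_i) g(w,z_k)$. The crucial observation is that under the product distribution $P_W \otimes P_{Z_i,Z_k}$, fixing $W=w$, the examples $Z_i$ and $Z_k$ are independent and each satisfies $\E_{Z\sim P_Z}\sbr{g(w,Z)} = R(w) - R(w) = 0$; hence $\E_{P_W \otimes P_{Z_i,Z_k}}\sbr{h} = 0$. Since $h \in [-1,1]$ always, it is $1$-subgaussian under the product measure, so the first part of \cref{lemma:mutual-info-lemma} (with $\sigma = 1$) gives
\begin{equation*}
\abs{\E_{P_{W,Z_i,Z_k}}\sbr{g(W,Z_i)g(W,Z_k)}} \le \sqrt{2 I(W;Z_i,Z_k)}.
\end{equation*}
Summing the $n(n-1)$ off-diagonal contributions and combining with the diagonal yields the claim.

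I do not expect a serious obstacle — the argument is a direct second-moment analogue of the first-moment bound of \cref{thm:generalized-xu-raginsky}. The only points requiring care are verifying that the product-measure expectation vanishes (which relies on the i.i.d. structure so that $Z_i \indep Z_k$ and on the population risk being exactly the conditional mean of the loss) and tracking the subgaussianity constant ($\sigma=1$ from boundedness in $[-1,1]$, giving $\sqrt{2\sigma^2 I} = \sqrt{2I}$). The reason this requires $m \ge 2$ rather than $m=1$ is precisely that the off-diagonal products couple two distinct examples, so the relevant information term is $I(W;Z_i,Z_k)$ rather than the individually-vanishing $I(W;Z_i)$ exploited by the counterexample of \cref{sec:counterexample}.
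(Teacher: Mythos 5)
Your proposal is correct and follows essentially the same route as the paper's proof: the same diagonal/off-diagonal decomposition, the same application of \cref{lemma:mutual-info-lemma} with $\Phi = W$, $\Psi = (Z_i, Z_k)$ and the product of centered losses, and the same observation that the product-measure expectation vanishes by independence and zero mean (your sign convention $g = R - \ell$ is immaterial since only products of two such factors appear). No gaps.
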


\begin{proof}
We have that
\begin{align}
    G^{(2)} &= \E_{P_{W,S}}\sbr{\rbr{\frac{1}{n}\sum_{i=1}^n\rbr{\ell(W,Z_i) - R(W)}}^2}\\
    &= \underbrace{\frac{1}{n^2}\sum_{i=1}^n \E_{P_{W,Z_i}}\sbr{\rbr{\ell(W,Z_i) - R(W)}^2}}_{\le 1/n} \\
    &\quad+ \frac{1}{n^2}\sum_{i \neq k}\underbrace{\E\sbr{\rbr{\ell(W,Z_i) - R(W)}\rbr{\ell(W,Z_k) - R(W)}}}_{C_{i,k}}.
\end{align}
For bounding the $C_{i,k}$ terms we use \cref{lemma:mutual-info-lemma} with $\Phi = W$, $\Psi = (Z_i, Z_k)$, and $f(\Phi,(\Psi_1,\Psi_2)) = \rbr{\ell(\Phi, \Psi_1) - R(\Phi)}\rbr{\ell(\Phi,\Psi_2)-R(\Phi)}$.
As $f(\Phi,\Psi)$ is 1-subgaussian under $P_\Phi \otimes P_\Psi$, by the lemma
\begin{equation}
    \abs{\E_{P_{\Phi,\Psi}}\sbr{f(\Phi,\Psi)} - \E_{P_\Phi \otimes P_\Psi}\sbr{f(\Phi,\Psi)}} \le \sqrt{2 I(\Phi;\Psi)},
\end{equation}
which translates to
\begin{align}
    &\abs{C_{i,k} -  \underbrace{\E_{P_{W}P_{Z_i,Z_k}}\sbr{\rbr{\ell(W, Z_i) - R(W)}\rbr{\ell(W,Z_k)-R(W)}}}_{\bar{C}_{i,k}}} \le \sqrt{2 I(W;Z_i,Z_k)}.
\end{align}
It is left to notice that $\bar{C}_{i,k}=0$, as for any $w$ the factors $\rbr{\ell(w, Z_i) - R(w)}$ and $\rbr{\ell(w,Z_k)-R(w)}$ are independent and have zero mean. 
\end{proof}

\begin{corollary}
Let $U_m$ be a uniformly random subset of $[n]$ of size $m$, independent from $W$ and $S$. Then
\begin{equation}
    G^{(2)} \le \frac{1}{n} +  2\E_{U_m}\sbr{\sqrt{\frac{I^{U_m}(W; S_{U_m})}{m}}}.
\end{equation}
\label{corollary:m=k}
\end{corollary}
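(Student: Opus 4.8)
The plan is to run the argument of \cref{thm:m=2} at the granularity of random size-$m$ subsets rather than pairs. Writing $\Delta_i \triangleq \ell(W,Z_i) - R(W)$, so that $r_S(W) - R(W) = \frac1n\sum_{i=1}^n \Delta_i$, I would first expand
\begin{equation}
G^{(2)} = \frac{1}{n^2}\sum_{i=1}^n \E\sbr{\Delta_i^2} + \frac{1}{n^2}\sum_{i\neq k}\E\sbr{\Delta_i\Delta_k}.
\end{equation}
Since $\ell \in [0,1]$ forces $\Delta_i \in [-1,1]$, the diagonal sum is at most $1/n$, which produces the first term of \cref{corollary:m=k}. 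The whole problem is thus to bound the off-diagonal sum by subset information. The device I would use is the exact counting identity: a uniformly random subset $U_m$ contains a fixed ordered pair $(i,k)$ with probability $\frac{m(m-1)}{n(n-1)}$, whence, setting $g_u \triangleq \frac{1}{m(m-1)}\sum_{i\neq k,\ i,k\in u}\Delta_i\Delta_k$,
\begin{equation}
\frac{1}{n^2}\sum_{i\neq k}\E\sbr{\Delta_i\Delta_k} = \frac{n-1}{n}\,\E_{U_m}\sbr{\E_{P_{W,S_{U_m}}}\sbr{g_{U_m}}} \le \E_{U_m}\sbr{\E_{P_{W,S_{U_m}}}\sbr{g_{U_m}}}.
\end{equation}

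For each fixed $u$ I would bound $\E_{P_{W,S_u}}\sbr{g_u}$ with the first part of \cref{lemma:mutual-info-lemma}, taking $\Phi = W$ and $\Psi = S_u$. Two facts are needed. First, $g_u$ has zero mean under the product measure $P_W \otimes P_{S_u}$: conditioning on $W=w$, the variables $\Delta_i$ are independent with $\E\sbr{\Delta_i \mid w} = R(w) - R(w) = 0$, so every off-diagonal product vanishes in expectation. Second, $g_u$ is $\sqrt{2/m}$-subgaussian under $P_W \otimes P_{S_u}$. Granting these, the lemma gives $\E_{P_{W,S_u}}\sbr{g_u} \le \sqrt{2\cdot \tfrac{2}{m}\, I(W;S_u)} = 2\sqrt{I(W;S_u)/m}$; and because $U_m \indep (W,S)$ we have $I(W;S_u) = I^{U_m = u}(W; S_{U_m})$, so averaging over $U_m$ turns the off-diagonal bound into $2\,\E_{U_m}\sqrt{I^{U_m}(W;S_{U_m})/m}$, exactly the second term of \cref{corollary:m=k}.

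The main obstacle is establishing the $\sqrt{2/m}$-subgaussianity, and this is where care is required: $g_u$ is bounded in $[-\tfrac{1}{m-1},1]$, but invoking Hoeffding's lemma on this range yields only an $\approx \tfrac12$-subgaussian constant, which discards the crucial $1/m$ factor and makes the resulting bound vacuous for large $m$. To recover the right scaling I would instead exploit the $U$-statistic structure of $g_u$ directly. Conditioning on $W=w$, order the examples of $u$ and use the Doob martingale that reveals $\Delta_1,\dots,\Delta_m$ one at a time; the increments are $D_j = \frac{2\Delta_j \sum_{i<j}\Delta_i}{m(m-1)}$, which satisfy $|D_j| \le c_j \triangleq \frac{2(j-1)}{m(m-1)}$ because $|\Delta_i|\le 1$. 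Then $\sum_{j=1}^m c_j^2 = \frac{2(2m-1)}{3m(m-1)} \le \frac{2}{m}$ for every $m\ge 2$, so Azuma–Hoeffding gives conditional $\sqrt{2/m}$-subgaussianity with conditional mean zero, uniformly in $w$; integrating over $w$ lifts this to subgaussianity under the product measure, as required.

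Assembling the three pieces yields $G^{(2)} \le \frac1n + 2\,\E_{U_m}\sqrt{I^{U_m}(W;S_{U_m})/m}$. I would emphasize that the proof is genuinely the subset-level analogue of \cref{thm:m=2}: the constant $2$ and the $1/m$ inside the square root are produced precisely by the martingale variance proxy $\sum_j c_j^2 \le 2/m$, and they cannot be obtained either by data-processing from the pairwise bound of \cref{thm:m=2} (which would require an unavailable superadditivity of $I(W;S_u)$ in the pairwise mutual informations) or by a Jensen reduction of the size-$n$ problem to the size-$m$ problem (which degrades the $1/n$ diagonal term to $1/m$). The martingale concentration step is the only nonroutine ingredient; the remaining manipulations are the combinatorial counting identity and a direct application of \cref{lemma:mutual-info-lemma}.
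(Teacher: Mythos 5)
Your proof is correct, but it takes a genuinely different route from the paper's. The paper derives \cref{corollary:m=k} in two lines from machinery it has already built: it starts from \cref{thm:m=2}, rewrites $\frac{1}{n^2}\sum_{i\neq k}\sqrt{2 I(W;Z_i,Z_k)}$ as at most $2\E_{U_2}\bigl[\sqrt{I^{U_2}(W;S_{U_2})/2}\bigr]$, and then invokes the monotonicity-in-$m$ statement (\cref{prop:xu-raginsky-generalization-choice-of-m} with $\phi(x)=\sqrt{x}$) to pass from $U_2$ to $U_m$; that monotonicity is itself proved via the Han-type inequality of \cref{lemma:mi-hans-ineq} plus Jensen. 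You instead argue directly at the level of a fixed size-$m$ subset: you treat the centered off-diagonal average $g_u$ as a $U$-statistic, establish its $\sqrt{2/m}$-subgaussianity under the product measure through a Doob-martingale/Azuma computation with variance proxy $\sum_j c_j^2 = \tfrac{2(2m-1)}{3m(m-1)} \le \tfrac{2}{m}$ (lifted to the product measure via \cref{lemma:zero-mean-sub-gaussianity}), and then apply \cref{lemma:mutual-info-lemma} once per subset; the counting identity and the martingale bound both check out, and your route in fact yields the slightly sharper constant $\sqrt{2(2m-1)/(3(m-1))}$ if one does not round up to $\sqrt{2}$. The one point I would correct is your closing remark: the deduction from the pairwise bound is \emph{not} unavailable --- it is precisely what the paper does, the required ``superadditivity'' being the iterated one-step inequality $I(W;S_{u'}) \ge \frac{1}{m}\sum_{k\in u'} I(W;S_{u'\setminus\{k\}})$ of \cref{lemma:mi-hans-ineq} rather than a single jump from pairs to size-$m$ subsets. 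What your approach buys is a self-contained argument for each fixed $m$ that makes transparent where the $1/m$ scaling comes from; what the paper's approach buys is the explicit monotonicity in $m$, i.e., the observation that $m=2$ already gives the tightest bound in this family.
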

\begin{proof}
By \cref{prop:cmi-sharpened-generalized-choice-of-m} we have that for any $m\in[n-1]$,
\begin{equation}
\E_{U_m}\sbr{\sqrt{\frac{1}{m} I^{U_m}(W; S_{U_m})}} \le \E_{U_{m+1}}\sbr{\sqrt{{\frac{1}{m+1} I^{U_{m+1}}(W; S_{U_{m+1}})}}}.
\end{equation}
Therefore, for any $m=2,\ldots,n$, starting with \cref{thm:m=2},
\begin{align}
G^{(2)} &\le \frac{1}{n} + \frac{1}{n^2}\sum_{i \neq k} \sqrt{2 I\rbr{W; Z_i, Z_k}}\\
&\le \frac{1}{n} + 2 \E_{U_2}\sbr{\sqrt{\frac{I^{U_2}(W; S_{U_2})}{2}}}\\
&\le \frac{1}{n} +  2 \E_{U_m}\sbr{\sqrt{\frac{I^{U_m}(W; S_{U_m})}{m}}}.
\end{align}
\end{proof}

At $m=n$ this bound is weaker that the bound of \cref{thm:generalized-xu-raginsky}, which depends on $I(W;S)/n$ rather than $\sqrt{I(W;S)/n}$.
We leave improving the bound of \cref{thm:m=2} for future work.
Nevertheless, \cref{corollary:m=k} shows that it is \emph{possible} to bound the expected \emph{squared} generalization gap with quantities that involve mutual information terms between $W$ and subsets of examples of size $m$, where $m \ge 2$ (unlike the case of $m=1$).
Possibility of bounding expected squared generalization gap with $m\ge 2$ information terms makes it possible for single-draw and PAC-Bayes bounds as well.
The simplest way is to use Markov's inequality, even though it will not give high probability bounds.

Bounds similar to \cref{thm:g2-weak-sample-wise-bound} but with CMI and e-CMI bounds are also possible, as shown by the following result.
\begin{theorem}
In the random subsampling setting, let $W \sim Q_{W|S}$. If $\ell(w,z)\in[0,1]$, then
\begin{align}
\E_{P_{W,S}}\sbr{\rbr{R(W) - r_S(W)}^2} &\le \frac{5}{2n} + \frac{2}{n^2}\sum_{i \neq k} \sqrt{2I(\ell(W,\tilde{Z}_i),\ell(W,\tilde{Z}_k); J_i, J_k)},\label{eq:g2-pairwise-ecmi-strongest}\\
\E_{P_{W,S}}\sbr{\rbr{R(W) - r_S(W)}^2} &\le \frac{5}{2n} + \frac{2}{n^2}\sum_{i \neq k} \E\sqrt{2I^{\tilde{Z}_i,\tilde{Z}_k}(\ell(W,\tilde{Z}_i),\ell(W,\tilde{Z}_k); J_i, J_k)},\label{eq:g2-pairwise-ecmi-strong}\\
\E_{P_{W,S}}\sbr{\rbr{R(W) - r_S(W)}^2} &\le \frac{5}{2n} + \frac{2}{n^2}\sum_{i \neq k} \E\sqrt{2I^{\tilde{Z}}(\ell(W,\tilde{Z}_i),\ell(W,\tilde{Z}_k); J_i, J_k)}\label{eq:g2-pairwise-ecmi-weak},
\end{align}
and
\begin{align}
\E_{P_{W,S}}\sbr{\rbr{R(W) - r_S(W)}^2} &\le \frac{5}{2n} + \frac{2}{n^2}\sum_{i \neq k} \E\sqrt{2I^{\tilde{Z}_i,\tilde{Z}_k}(W; J_i, J_k)},\label{eq:g2-pairwise-cmi-strong}\\
\E_{P_{W,S}}\sbr{\rbr{R(W) - r_S(W)}^2} &\le \frac{5}{2n} + \frac{2}{n^2}\sum_{i \neq k} \E\sqrt{2I^{\tilde{Z}}(W; J_i, J_k)}\label{eq:g2-pairwise-cmi-weak}.
\end{align}
\label{thm:g2-pairwise-CMI}
\end{theorem}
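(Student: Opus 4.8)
The plan is to mimic the proof of \cref{thm:m=2}, but adapted to the random subsampling setting so that the two held-out halves of each pair furnish a ``test'' empirical risk that can play the role of the population risk in the decomposition. Concretely, write $\widehat{R}(W) = \frac{1}{n}\sum_{i=1}^n \ell(W,\tilde{Z}_{i,\bar{J}_i})$ for the empirical risk on the held-out examples and set $D_i \triangleq \ell(W,\tilde{Z}_{i,\bar{J}_i}) - \ell(W,\tilde{Z}_{i,J_i}) \in [-1,1]$, so that $\widehat{R}(W) - r_S(W) = \frac{1}{n}\sum_i D_i$. I would then split
\begin{equation*}
R(W) - r_S(W) = \underbrace{\rbr{R(W) - \widehat{R}(W)}}_{A} + \underbrace{\frac{1}{n}\sum_{i=1}^n D_i}_{B},
\end{equation*}
and apply $(A+B)^2 \le 2A^2 + 2B^2$ to obtain $G^{(2)} \le 2\E\sbr{A^2} + 2\E\sbr{B^2}$.

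First I would control $\E\sbr{A^2}$. The key structural fact of the subsampling setting is that, conditioned on the training set $S$, the split $J$, and the internal randomness of $Q$, the held-out examples $\tilde{Z}_{i,\bar{J}_i}$ are i.i.d.\ draws from $P_Z$ that are independent of $W$; hence $\E\sbr{\ell(W,\tilde{Z}_{i,\bar{J}_i}) \given W,S,J} = R(W)$ and these terms are conditionally independent. Since each loss lies in $[0,1]$, the conditional variance of $\widehat{R}(W)$ is at most $\frac{1}{4n}$, giving $\E\sbr{A^2} \le \frac{1}{4n}$. Expanding $B^2$ into diagonal and off-diagonal parts, the diagonal contributes $\frac{1}{n^2}\sum_i \E\sbr{D_i^2} \le \frac{1}{n}$ because $D_i^2 \le 1$. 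Collecting the constants, $2\cdot\frac{1}{4n} + 2\cdot\frac{1}{n} = \frac{5}{2n}$ already yields the additive term, leaving $\frac{2}{n^2}\sum_{i\neq k}\E\sbr{D_i D_k}$ to be bounded.

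The heart of the argument is bounding the cross terms $\E\sbr{D_i D_k}$ with \cref{lemma:mutual-info-lemma}, exactly as the terms $C_{i,k}$ are handled in \cref{thm:m=2}. The crucial algebraic observation is $D_i = (1-2J_i)\rbr{\ell(W,\tilde{Z}_{i,1}) - \ell(W,\tilde{Z}_{i,0})}$, so $D_i$ is a measurable function of $J_i$ together with the loss pair $\ell(W,\tilde{Z}_i)$ (equivalently, of $J_i$ and $W$ once $\tilde{Z}_i$ is fixed). I would apply \cref{lemma:mutual-info-lemma} with $\Psi = (J_i,J_k)$ and $\Phi$ chosen as either $\rbr{\ell(W,\tilde{Z}_i),\ell(W,\tilde{Z}_k)}$ (for the e-CMI bounds) or $W$ (for the CMI bounds), taking $g(\Phi,\Psi)=D_iD_k \in [-1,1]$, which is $1$-subgaussian under any product measure. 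The zero-mean condition $\E_{P_\Phi \otimes P_\Psi}\sbr{g}=0$ holds because under the product of marginals $J_i$ and $J_k$ are independent uniform bits, so $\E\sbr{(1-2J_i)(1-2J_k)}=0$; this is the analogue of the step ``$\bar{C}_{i,k}=0$'' in \cref{thm:m=2}. The lemma then gives $\abs{\E\sbr{D_iD_k}} \le \sqrt{2 I(\ell(W,\tilde{Z}_i),\ell(W,\tilde{Z}_k);J_i,J_k)}$, which is \cref{eq:g2-pairwise-ecmi-strongest}. The remaining four bounds follow by applying the \emph{disintegrated} form of \cref{lemma:mutual-info-lemma} conditionally on $(\tilde{Z}_i,\tilde{Z}_k)$ or on $\tilde{Z}$, then averaging the resulting $\abs{\E\sbr{D_iD_k \given \cdot}}$ and keeping that expectation outside the square root; the zero-mean condition survives conditioning because $J \indep \tilde{Z}$. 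A fully unconditioned $W$-based bound is absent precisely because $D_i$ is not a function of $(W,J_i)$ alone unless $\tilde{Z}_i$ is fixed, which is why the CMI variants start at the level of conditioning on $\tilde{Z}$.

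I expect the main obstacle to be the careful justification of the $\E\sbr{A^2} \le \frac{1}{4n}$ step: one must argue precisely which variables to condition on so that the held-out losses become conditionally i.i.d.\ with conditional mean $R(W)$ and bounded conditional variance, despite $W$ depending on the entire split $J$. The second delicate point is verifying, for each of the five variants, that $g=D_iD_k$ is a legitimate function of the chosen $(\Phi,\Psi)$ under the relevant conditioning and that the zero-mean condition continues to hold; these checks are routine but must be stated carefully to keep the five bounds correctly aligned with their conditioning sets.
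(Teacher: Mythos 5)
Your proposal is correct and follows essentially the same route as the paper's proof: the same decomposition into a held-out-test concentration term (variance at most $1/(4n)$), a diagonal term of at most $1/n$, and off-diagonal cross terms $\E\sbr{D_iD_k}$ bounded via \cref{lemma:mutual-info-lemma} with the zero-mean observation under the product of marginals, with the conditional variants obtained by disintegration. The only cosmetic difference is that the paper derives the two $W$-based bounds from the loss-based ones via the data processing inequality rather than by re-applying the lemma with $\Phi=W$, which is equivalent.
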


Finally, it is worth to mention that the bound of \cref{thm:m=2} holds for higher order moments of generalization gap too, as for $[0,1]$-bounded loss functions 
\begin{equation}
    \E_{P_{W,S}}\sbr{\rbr{R(W) - r_S(W)}^k} \le \E_{P_{W,S}}\sbr{\rbr{R(W) - r_S(W)}^2},
\end{equation}
for any $k \ge 2, k \in \mathbb{N}$.

\section{Conclusion}
In the counterexample presented in \cref{sec:counterexample} the empirical risk is sometimes larger than the population risk, which is  rare in practice.
In fact, if empirical risk is never larger than population risk, then $\E\sbr{\abs{R(W)-r_S(W)}}$ reduces to $\E\sbr{R(W)-r_S(W)}$, implying existence of sample-wise bounds.
Furthermore, the constructed learning algorithm intentionally captures only high-order information about samples.
This suggests, that sample-wise generalization bounds might be possible if we consider specific learning algorithms.

\chapter{Supervision Complexity and its Role in Knowledge Distillation}\label{ch:sup-complexity}
\section{Introduction}
\label{sec:sup-complexity-intro}

Knowledge distillation (KD)~\citep{Bucilla:2006, hinton2015distilling} is a popular method of  compressing a large ``teacher'' model into a more compact ``student'' model.
In its most basic form, this involves training the student to fit the teacher's predicted \emph{label distribution} or \emph{soft labels} for each sample.
There is strong empirical evidence that distilled students usually perform better than students trained on raw dataset labels~\citep{hinton2015distilling, furlanello2018born, stanton2021does, gou2021knowledge}.
Multiple works have devised novel knowledge distillation procedures that further improve the student model performance (see \citet{gou2021knowledge} and references therein).

The fact that knowledge distillation outperforms training on the raw dataset labels is surprising from a purely information-theoretic perspective.
This is because the teacher itself is usually trained on the same dataset.
By data-processing inequality, the teacher's predicted soft labels add no new information beyond what is present in the original dataset.
Clearly, the distillation dataset has some additional properties that enable better learning for the student.

Several works have aimed to rigorously formalize \emph{why} knowledge distillation can improve the student model performance. 
Some prominent observations from this line of work are that (self-)distillation induces certain favorable optimization biases in the training objective
~\citep{phuong19understand,ji2020knowledge}, 
lowers variance of the objective~\citep{menon2021statistical,Dao:2021,ren2022better}, 
increases regularization towards learning ``simpler'' functions~\citep{mobahi2020self}, 
transfers information from different data views~\citep{allen-zhu2023understanding}, and 
scales per-example gradients based on the teacher's confidence~\citep{furlanello2018born,tang2020understanding}.

Despite this remarkable progress, there are still many open problems and unexplained phenomena around knowledge distillation;
to name a few:
\begin{itemize}[label=---,itemsep=0pt,topsep=0pt,leftmargin=16pt]
    \item \emph{Why do soft labels (sometimes) help?}
    It is agreed that teacher's soft predictions carry information about class similarities~\citep{hinton2015distilling, furlanello2018born}, and that this softness of predictions has a regularization effect similar to label smoothing~\citep{yuan2020revisiting}.
    Nevertheless, knowledge distillation also works in binary classification settings with limited class similarity information~\citep{muller2020subclass}.
    How exactly the softness of teacher predictions (controlled by a temperature parameter) affects the student learning remains far from well understood.

    \item \emph{The role of capacity gap}. 
    There is evidence that when there is a significant capacity gap between the teacher and the student, the distilled model usually falls behind its teacher~\citep{mirzadeh2020improved, cho2019efficacy, stanton2021does}. It is unclear whether this is due to difficulties in optimization, or due to insufficient student capacity.

    \item \emph{What makes a good teacher?} 
    Sometimes less accurate models are better teachers~\citep{cho2019efficacy, mirzadeh2020improved}. Moreover, early stopped or exponentially averaged models are often better teachers~\citep{ren2022better}.
    A comprehensive explanation of this remains elusive.
\end{itemize}
The aforementioned wide range of phenomena suggest that there is a complex interplay between teacher accuracy, softness of teacher-provided targets, and complexity of the distillation objective.

This work provides a new theoretically grounded perspective on knowledge distillation through the lens of \emph{supervision complexity}.
In a nutshell, this quantifies why certain targets 
(e.g., temperature-scaled teacher probabilities) may be ``easier'' for a student model to learn compared to others (e.g., raw one-hot labels), 
owing to better alignment with the student's \emph{neural tangent kernel} (\emph{NTK})~\citep{jacot2018ntk,lee2019wide}.
In particular, we provide a novel theoretical analysis (\cref{sec:analysis}, \cref{thm:margin-bound-label-complexity,thm:margin-bound-label-complexity-vectorcase}) of the role of {supervision complexity} on kernel classifier generalization, and use this to derive a new generalization bound for distillation (\cref{prop:distillation-error-wrt-dataset-labels}).
The latter highlights how student generalization is controlled by a balance of 
the \emph{teacher generalization}, the student's \emph{margin} with respect to the teacher predictions, and the complexity of the teacher's predictions.

Based on the preceding analysis, we establish the conceptual and practical efficacy of a simple \emph{online distillation} approach (\cref{sec:sup-complexity-experiments}), wherein the student is fit to progressively more complex targets, in the form of teacher predictions at various checkpoints during its training.
This method can be seen as guiding the student in the function space (see \cref{fig:offline-vs-online}), and leads to better generalization compared to offline distillation.
We provide empirical results on a range of image classification benchmarks confirming the value of online distillation, particularly for students with weak inductive biases.

Beyond practical benefits, the supervision complexity view yields new insights into distillation:
\begin{itemize}[label=---,itemsep=0pt,topsep=0pt,leftmargin=16pt]
    \item \emph{The role of temperature scaling and early-stopping}.
    Temperature scaling and early-stopping of the teacher have proven effective for knowledge distillation.
    We show that 
    both of these techniques reduce the supervision complexity, at the expense of also lowering the classification margin.
    Online distillation manages to smoothly increase teacher complexity, without degrading the margin.
    
    \item \emph{Teaching a weak student}.
    We show that 
    for students with weak inductive biases, 
    and/or with much less capacity than the teacher,
    the final teacher predictions 
    \emph{are often as complex as dataset labels}, particularly during the early stages of training.
    In contrast, online distillation allows the supervision complexity to progressively increase, thus allowing even a weak student to learn.
    
    \item \emph{NTK and relational transfer}.
    We show that 
    online distillation is highly effective at matching the teacher and student NTK matrices.
    This transfers \emph{relational knowledge} in the form of example-pair similarity,
    as opposed to standard distillation which only transfers \emph{per-example knowledge}.
\end{itemize}

\begin{figure}[t]
    \centering
    \begin{subfigure}{0.4\textwidth}
       \includegraphics[width=\textwidth]{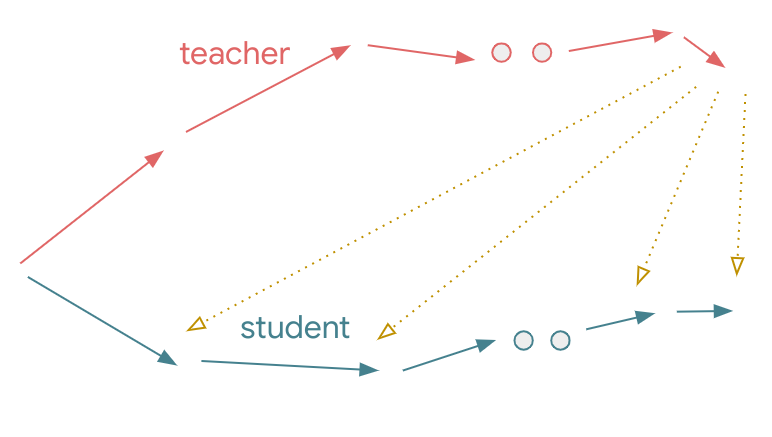}
       \caption{{\centering Offline distillation}}
    \end{subfigure}%
    \hspace{2em}%
    \begin{subfigure}{0.4\textwidth}
       \includegraphics[width=\textwidth]{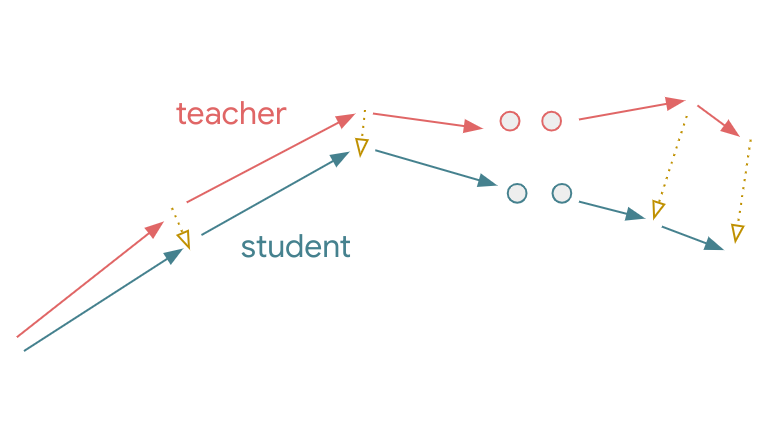}
       \caption{{\centering Online distillation}}
    \end{subfigure}%
    \caption{Online vs. online distillation. Figures (a) and (b) illustrate possible teacher and student function trajectories in offline and offline knowledge distillations respectively.
    The yellow dotted lines indicate knowledge distillation.
    }
    \label{fig:offline-vs-online}
\end{figure}

\paragraph{Problem setting.}
We focus on classification problems from input domain $\XX$ to $d$ classes.
We are given a training set of $n$ labeled examples $\{ (x_1, y_1), \ldots, (x_n, y_n) \}$, 
with one-hot encoded labels
$y_i \in \mathset{0,1}^d$.
Typically, a model $f_w : \XX \rightarrow \bR^d$ is trained with the \emph{softmax cross-entropy} loss:
\begin{equation}
    \label{eqn:softmax-ce}
    \LL_\mathrm{ce}( w ) = -\frac{1}{n}\sum\nolimits_{i=1}^n y_i^\top \log\rbr{ \mathrm{softmax}(f_w(x_i))}.
\end{equation}
In {standard} knowledge distillation, given a trained \emph{teacher} model $g : \XX \rightarrow \bR^d$ that outputs logits,
one trains a \emph{student} model $f_w : \XX \rightarrow \bR^d$ to fit the teacher predictions.
\citet{hinton2015distilling} propose the following knowledge distillation loss:
\begin{equation}
    \label{eqn:standard-kd}
    \LL_\mathrm{kd-ce}(w; g, \tau ) = -\frac{\tau^2}{n}\sum\nolimits_{i=1}^n \mathrm{softmax}(g(x_i) / \tau)^\top \log\rbr{\mathrm{softmax}(f_w(x_i) / \tau)},
\end{equation}
where temperature $\tau > 0$ controls the softness of teacher predictions.
To highlight the effect of knowledge distillation and simplify exposition, 
we assume that the student is not trained with the dataset labels.

\section{Supervision complexity and generalization}
\label{sec:analysis}
One apparent difference between standard training and knowledge distillation (\cref{eqn:softmax-ce,eqn:standard-kd}) is that
the latter modifies the \emph{targets} that the student attempts to fit. The targets used during distillation ensure a better generalization for the student;
what is the reason for this?
Towards answering this question,
we present a new perspective on knowledge distillation in terms of \emph{supervision complexity}.
To begin, we show how the generalization of a \emph{kernel-based} classifier is controlled by a measure of alignment between the target labels and the kernel matrix. 
We first treat \emph{binary} kernel-based classifiers (\cref{thm:margin-bound-label-complexity}), and later extend our analysis to \emph{multiclass} kernel-based classifiers (\cref{thm:margin-bound-label-complexity-vectorcase}). 
Finally, 
{by leveraging the neural tangent kernel machinery,}
we discuss the implications of our analysis for neural classifiers in~\cref{sec:target-complexity-neural}.

\subsection{Supervision complexity controls kernel machine generalization}\label{subsec:target-complexity-scalar-output}
The notion of supervision complexity is easiest to introduce and study for kernel-based classifiers.
We briefly review some necessary background~\citep{Scholkopf:2001}.
Let $k : \XX \times \XX \rightarrow \bR$ be a positive semidefinite kernel defined over an input space $\XX$.
Any such kernel uniquely determines a reproducing kernel Hilbert space (RKHS) $\HH$ of functions from $\XX$ to $\bR$.
This RKHS is the completion of the set of functions of form $f(x) = \sum_{i=1}^m \alpha_i k(x_i, x)$, with 
$x_i \in \XX, \alpha_i \in \mathbb{R}$.
Any $f(x) = \sum_{i=1}^m \alpha_i k(x_i, x) \in \HH$ has (RKHS) norm
\begin{align}
\nbr{f}^2_\HH = \sum_{i=1}^m\sum_{j=1}^m \alpha_i \alpha_j k(x_i, x_j) = \bs{\alpha}^\top K \bs{\alpha},
\end{align}
where 
$\bs{\alpha} = (\alpha_1, \ldots, \alpha_n)^\top$ and $K_{i,j} = k(x_i, x_j)$.
Intuitively, $\nbr{f}_\HH^2$ measures the \emph{smoothness} of $f$, e.g.,
for a Gaussian kernel it measures the Fourier spectrum decay of $f$~\citep{Scholkopf:2001}.

For simplicity, we start with the case of binary classification.
Suppose $\{(X_i, Y_i)\}_{i \in [n]}$ are
$n$ i.i.d. examples sampled from some probability distribution
on $\XX \times \YY$, with $\YY \subset \mathbb{R}$, where positive and negative labels correspond to distinct classes.
Let $K_{i,j} = k(X_i, X_j)$ denote the kernel matrix, and
$\bs{Y} = (Y_1,\ldots,Y_n)^\top$ be the concatenation of all training labels.

\begin{definition}[Supervision complexity]
The \emph{supervision complexity} of targets $Y_1,\ldots,Y_n$ with respect to a kernel $k$ is defined to be $\bs{Y}^\top K^{-1}\bs{Y}$ in cases when $K$ is invertible, and $+\infty$ otherwise.    
\end{definition}

We now establish how supervision complexity controls the \emph{smoothness} of the optimal kernel classifier.
Consider a classifier obtained by solving a \textit{regularized} kernel classification problem:
\begin{equation}
    f^* \in \argmin_{f \in \HH} \frac{1}{n}\sum\nolimits_{i=1}^n \ell(f(X_i), Y_i) + \frac{\lambda}{2} \nbr{f}_\HH^2,
    \label{eq:kernel-method-general}
\end{equation}
where $\ell$ is a loss function and $\lambda > 0$.
The following proposition shows whenever the supervision complexity is small, the RKHS norm of any optimal solution $f^*$ will also be small.
This is an important learning bias that shall help us explain certain aspects of knowledge distillation.
\begin{proposition}
Assume that $K$ is full rank almost surely;
$\ell(y, y') \ge 0, \forall y, y' \in \YY$; and $\ell(y,y) = 0, \forall y\in\YY$. Then, with probability 1, for any solution $f^*$ of \cref{eq:kernel-method-general}, we have 
$
    \nbr{f^*}_\HH^2 \le \bs{Y}^\top K^{-1} \bs{Y}.
$
\label{prop:kernel-method-solution-norm}
\end{proposition}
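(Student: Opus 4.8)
The plan is to exploit the fact that $f^*$ minimizes the regularized objective, by exhibiting an explicit comparator function whose objective value is easy to control. The natural candidate is the minimum-norm interpolant of the training targets. First I would construct the function $f_0(x) = \sum_{i=1}^n \alpha_i k(X_i, x)$ with coefficient vector $\bs{\alpha} = K^{-1}\bs{Y}$, which is well-defined with probability $1$ since $K$ is full rank almost surely. By construction $f_0 \in \HH$, and evaluating at a training point gives $f_0(X_j) = \sum_{i=1}^n \alpha_i K_{i,j} = \rbr{K \bs{\alpha}}_j = \rbr{K K^{-1} \bs{Y}}_j = Y_j$, so $f_0$ interpolates the targets exactly. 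Using the RKHS norm formula quoted earlier in the excerpt, its squared norm is
\begin{equation}
\nbr{f_0}_\HH^2 = \bs{\alpha}^\top K \bs{\alpha} = \rbr{K^{-1}\bs{Y}}^\top K \rbr{K^{-1}\bs{Y}} = \bs{Y}^\top K^{-1} \bs{Y},
\end{equation}
where I use the symmetry of $K$ (hence of $K^{-1}$).

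Next I would evaluate the objective in \cref{eq:kernel-method-general} at $f_0$. Because $f_0(X_i) = Y_i$ and the loss satisfies $\ell(y,y) = 0$, every data-fitting term vanishes, so the objective value at $f_0$ is exactly $\frac{\lambda}{2}\bs{Y}^\top K^{-1}\bs{Y}$. Since $f^*$ is a global minimizer of the same objective, its objective value can be no larger:
\begin{equation}
\frac{1}{n}\sum_{i=1}^n \ell(f^*(X_i), Y_i) + \frac{\lambda}{2}\nbr{f^*}_\HH^2 \le \frac{\lambda}{2}\bs{Y}^\top K^{-1}\bs{Y}.
\end{equation}
Finally, invoking the nonnegativity assumption $\ell(y,y') \ge 0$, the sum of loss terms on the left is nonnegative and may be dropped, leaving $\frac{\lambda}{2}\nbr{f^*}_\HH^2 \le \frac{\lambda}{2}\bs{Y}^\top K^{-1}\bs{Y}$. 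Dividing by $\lambda/2 > 0$ yields the claimed bound $\nbr{f^*}_\HH^2 \le \bs{Y}^\top K^{-1}\bs{Y}$, holding with probability $1$ on the event that $K$ is invertible.

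I do not anticipate a genuine obstacle here; the argument is a one-comparator variational bound. The only point requiring minor care is the ``with probability $1$'' qualifier: everything — the existence of $K^{-1}$, the interpolant $f_0$, and the norm identity — is conditioned on the almost-sure event that the random kernel matrix $K$ is full rank, and I would state this explicitly at the start. It is also worth noting that the bound holds for \emph{every} minimizer $f^*$ (not merely the representer-theorem solution), since the comparison step never uses any structural property of $f^*$ beyond optimality; this is exactly why the statement can quantify over ``any solution.''
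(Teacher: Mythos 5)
Your proposal is correct and follows essentially the same argument as the paper: construct the interpolant $f_0 = \sum_i \alpha_i k(X_i,\cdot)$ with $\bs{\alpha}=K^{-1}\bs{Y}$, note that its empirical loss vanishes and $\nbr{f_0}_\HH^2 = \bs{Y}^\top K^{-1}\bs{Y}$, and compare objective values using optimality of $f^*$ together with nonnegativity of the loss and $\lambda>0$. You simply spell out the final comparison step that the paper leaves implicit.
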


Equipped with the above result, we now show how supervision complexity controls generalization. 
In the following, 
let $\phi_\gamma : \bR \rightarrow [0, 1]$ be the \emph{margin loss}~\citep{mohri2018foundations} with scale $\gamma > 0$:
\begin{equation}
    \phi_\gamma(\alpha) = \begin{cases}
        1 &\text{ if }  \alpha \le 0\\
        1 - \alpha / \gamma &\text{ if } 0 < \alpha \le \gamma \\
        0 &\text{ if } \alpha > \gamma.
        \end{cases}
\end{equation}

\begin{theorem}
Assume that $\kappa=\sup_{x \in \XX} k(x,x) < \infty$ and $K$ is full rank almost surely.
Further, assume that $\ell(y, y') \ge 0, \forall y, y' \in \YY$ and $\ell(y,y) = 0, \forall y\in\YY$.
Let $M_0 = \left\lceil{\gamma\sqrt{n}}/{(2\sqrt{\kappa})}\right\rceil$.
Then, with probability at least $1-\delta$, for any solution $f^*$ of problem in \cref{eq:kernel-method-general}, we have
\begin{align*}
    P_{X,Y}(Y f^*(X) \le 0) &\le \frac{1}{n}\sum_{i=1}^n \phi_\gamma(\sign\rbr{Y_i} f^*(X_i)) + \frac{2\sqrt{\bs{Y}^\top K^{-1} \bs{Y}}+2}{\gamma n} \sqrt{\trace\rbr{K}}\\
    &\quad+3\sqrt{\frac{\ln\rbr{2M_0/\delta}}{2n}}.
    \numberthis
    \label{eqn:target-complexity-bound}
\end{align*}
\label{thm:margin-bound-label-complexity}
\end{theorem}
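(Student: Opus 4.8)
The plan is to derive the bound as an instance of a standard margin-based generalization bound for kernel classifiers, where the effective RKHS norm is controlled by the supervision complexity via \cref{prop:kernel-method-solution-norm}. First I would fix a radius $\Lambda > 0$ and consider the ball $\HH_\Lambda = \{f \in \HH : \nbr{f}_\HH \le \Lambda\}$. The key tool is the Rademacher-complexity bound for margin losses (see \citet{mohri2018foundations}): with probability at least $1-\delta$, every $f \in \HH_\Lambda$ satisfies
\begin{equation}
P_{X,Y}(Y f(X) \le 0) \le \frac{1}{n}\sum_{i=1}^n \phi_\gamma(\sign(Y_i) f(X_i)) + \frac{2}{\gamma}\widehat{\mathfrak{R}}_n(\HH_\Lambda) + 3\sqrt{\frac{\ln(2/\delta)}{2n}},
\end{equation}
where I would use the standard estimate $\widehat{\mathfrak{R}}_n(\HH_\Lambda) \le \frac{\Lambda}{n}\sqrt{\trace(K)}$ for the empirical Rademacher complexity of a kernel ball of radius $\Lambda$, together with the Lipschitz contraction property of $\phi_\gamma$ (it is $1/\gamma$-Lipschitz).

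The obstacle is that \cref{prop:kernel-method-solution-norm} only guarantees $\nbr{f^*}_\HH^2 \le \bs{Y}^\top K^{-1}\bs{Y}$, which is a data-dependent random radius, so I cannot fix a single $\Lambda$ in advance and apply the bound to $f^*$ directly. The remedy is a \emph{stratification} (peeling) argument over a grid of radii, which is exactly why the factor $M_0 = \lceil \gamma\sqrt{n}/(2\sqrt{\kappa})\rceil$ and the $\ln(2M_0/\delta)$ term appear. Concretely, I would observe that because $\kappa = \sup_x k(x,x) < \infty$, any $f \in \HH$ has $|f(x)| \le \kappa^{1/2}\nbr{f}_\HH$, so once $\nbr{f}_\HH$ is large enough the margin loss contributes nothing beyond the trivial bound; this caps the useful range of radii and yields a finite grid of size $M_0$. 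I would define radii $\Lambda_j = 2j\sqrt{\kappa}/\sqrt{n}$ (or similar) for $j = 1,\ldots,M_0$, apply the fixed-radius bound with confidence $\delta/M_0$ to each $\HH_{\Lambda_j}$, and take a union bound so that all $M_0$ statements hold simultaneously with probability $1-\delta$.

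Given the random bound $\nbr{f^*}_\HH \le \sqrt{\bs{Y}^\top K^{-1}\bs{Y}} =: B$, I would then select the smallest grid index $j^*$ with $\Lambda_{j^*} \ge B$, so that $f^* \in \HH_{\Lambda_{j^*}}$ and the corresponding bound applies. The spacing of the grid guarantees $\Lambda_{j^*} \le B + 2\sqrt{\kappa}/\sqrt{n}$, and substituting $\Lambda_{j^*}$ into $\frac{2}{\gamma}\cdot\frac{\Lambda_{j^*}}{n}\sqrt{\trace(K)}$ produces the leading term $\frac{2\sqrt{\bs{Y}^\top K^{-1}\bs{Y}}+2}{\gamma n}\sqrt{\trace(K)}$ in \cref{eqn:target-complexity-bound}, where the additive $2$ in the numerator absorbs the $2\sqrt{\kappa}/\sqrt{n}$ slack after noting $\sqrt{\trace(K)} \le \sqrt{n\kappa}$ so that $\frac{2}{\gamma}\cdot\frac{2\sqrt{\kappa}/\sqrt{n}}{n}\sqrt{\trace(K)} \le \frac{2}{\gamma n}\sqrt{\trace(K)}$. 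The confidence term becomes $3\sqrt{\ln(2M_0/\delta)/(2n)}$ from the union bound over the $M_0$ strata.

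The step I expect to be the main obstacle is making the peeling argument tight enough to recover the \emph{exact} constants claimed — in particular, choosing the grid spacing and the definition of $M_0$ so that (i) the grid provably covers every radius that contributes a nontrivial margin term, (ii) the rounding slack $\Lambda_{j^*}-B$ is cleanly absorbed into the ``$+2$'' rather than inflating the constant, and (iii) the almost-sure full-rank assumption is invoked so that $K^{-1}$ and hence $B$ are well defined. The Rademacher bound, the contraction lemma, and \cref{prop:kernel-method-solution-norm} are all off-the-shelf or already established; the delicate bookkeeping is entirely in the union-bound stratification and constant tracking.
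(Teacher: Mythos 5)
Your proposal follows essentially the same route as the paper's proof: a fixed-radius margin bound for the kernel ball $\{f\in\HH:\nbr{f}_\HH\le M\}$ obtained from the standard Rademacher-complexity bound, the estimate $\widehat{\mathfrak{R}}_n \le \frac{M}{n}\sqrt{\trace(K)}$ with the $1/\gamma$-Lipschitz contraction, \cref{prop:kernel-method-solution-norm} to control $\nbr{f^*}_\HH$, and a union bound over a finite grid of $M_0$ radii followed by rounding the data-dependent radius up to the grid. The one place your bookkeeping does not close is the choice of grid: with spacing $\Lambda_{j+1}-\Lambda_j = 2\sqrt{\kappa}/\sqrt{n}$, absorbing the rounding slack into the ``$+2$'' requires $\frac{2}{\gamma}\cdot\frac{2\sqrt{\kappa}/\sqrt{n}}{n}\sqrt{\trace(K)} \le \frac{2}{\gamma n}\sqrt{\trace(K)}$, which is equivalent to $n \ge 4\kappa$ and is not assumed (the inequality $\sqrt{\trace(K)}\le\sqrt{n\kappa}$ you invoke points in the wrong direction here). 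The paper avoids this by taking the grid to be the integers $\mathcal{M}=\{1,\ldots,M_0\}$, treating the case $\bs{Y}^\top K^{-1}\bs{Y}\ge M_0$ as trivially satisfying the bound, and otherwise setting $M=\left\lceil\sqrt{\bs{Y}^\top K^{-1}\bs{Y}}\right\rceil\le\sqrt{\bs{Y}^\top K^{-1}\bs{Y}}+1$, so that $2M\le 2\sqrt{\bs{Y}^\top K^{-1}\bs{Y}}+2$ with no condition on $n$ or $\kappa$; substituting that grid into your argument makes it exactly the paper's proof.
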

One can compare \cref{thm:margin-bound-label-complexity}
with the standard Rademacher bound for kernel classifiers~\citep{bartlett2002rademacher}.
The latter typically consider learning over functions with RKHS norm bounded by a \emph{constant} $M > 0$.
The corresponding complexity term then decays as $\mathscr{O}( \sqrt{M \cdot \trace\rbr{K} / n} )$,
which is \emph{data-independent}.
Consequently, 
such a bound cannot adapt to the intrinsic ``difficulty'' of the targets $\bs{Y}$.
In contrast, \cref{thm:margin-bound-label-complexity} considers functions with RKHS norm bounded by the \emph{data-dependent} supervision complexity term. This results in a more informative generalization bound, which captures the ``difficulty'' of the targets. 
Here, we note that \cite{arora2019fine} characterized the generalization of an overparameterized two-layer neural network via a term closely related to the supervision complexity (see \cref{sec:sup-complexity-related} for additional discussion).

The supervision complexity $\bs{Y}^\top K^{-1} \bs{Y}$ is small whenever $\bs{Y}$ is aligned with top eigenvectors of $K$ and/or $\bs{Y}$ has small scale. Furthermore, one cannot make the bound close to zero by just reducing the scale of targets, as one would need a small $\gamma$ to control the margin loss that would otherwise increase due to student predictions getting closer to zero (as the student aims to match $Y_i$).

To better understand the role of supervision complexity, it is instructive to consider two special cases that lead to a poor generalization bound:
(1) uninformative \emph{features}, and (2) uninformative \emph{labels}.

\paragraph{Complexity under uninformative features.} 
Suppose the kernel matrix $K$ is diagonal, so that the kernel provides \emph{no} information on example-pair similarity; i.e., the kernel is ``uninformative''.
An application of Cauchy-Schwarz reveals
the key expression in the second term in~\cref{eqn:target-complexity-bound} satisfies:
\begin{equation}
    \frac{1}{n}\sqrt{\bs{Y}^\top K^{-1} \bs{Y} \trace(K)} = \frac{1}{n}\sqrt{\bbr{\sum\nolimits_{i=1}^n Y^2_i \cdot k(X_i, X_i)^{-1}}\bbr{\sum\nolimits_{i=1}^n k(X_i,X_i)}} \ge \frac{1}{n}\sum_{i=1}^n \abs{Y_i}.
\end{equation}
Consequently, this term is least constant in order, and \emph{does not} vanish as $n \to \infty$.

\paragraph{Complexity under uninformative labels.} 
Suppose the labels $Y_i$ are 
purely random, and
independent from inputs $X_i$.
Conditioned on $\{X_i\}$, 
$\bs{Y}^\top K^{-1} \bs{Y}$ concentrates
around its mean by the Hanson-Wright inequality~\citep{vershynin2018high}.
Hence, $\exists \, \epsilon(K, \delta, n)$ such that with probability 
$\geq 1-\delta$, $\bs{Y}^\top K^{-1} \bs{Y} \ge \E_{\{Y_i\}}\big[{\bs{Y}^\top K^{-1} \bs{Y}}\big] - \epsilon = \E\sbr{Y_1^2}\trace(K^{-1}) - \epsilon$.
Thus, with the same probability, 
\begin{align}
    \frac{1}{n}\sqrt{\bs{Y}^\top K^{-1} \bs{Y} \trace(K)} &\ge \frac{1}{n}\sqrt{\rbr{\E\sbr{Y_1^2}\trace(K^{-1}) - \epsilon} \trace(K)}
    \ge \frac{1}{n}\sqrt{\E\sbr{Y_1^2}n^2 - \epsilon \trace(K)},
\end{align}
where the last inequality is by Cauchy-Schwarz.
For sufficiently large $n$, the quantity $\E\sbr{Y_1^2}n^2$ dominates $\epsilon \trace\rbr{K}$, rendering the bound of \cref{thm:margin-bound-label-complexity} close to a constant.

\subsection{Extensions: multiclass classification and neural networks}
\label{sec:target-complexity-neural}
We now show that a result similar to \cref{thm:margin-bound-label-complexity} holds for multiclass classification as well. In addition, we also discuss how our results are instructive about the behavior of neural networks.

\paragraph{Extension to multiclass classification.}
Let $\{(X_i, Y_i)\}_{i \in [n]}$ be drawn i.i.d.\, from a distribution over $\XX \times \YY$, where $\YY \subset \bR^d$.
Let $k : \XX \times \XX \rightarrow \bR^{d \times d}$ be a matrix-valued positive definite kernel and $\HH$ be the corresponding vector-valued RKHS.
As in the binary classification case, we consider a kernel problem in \cref{eq:kernel-method-general}.
Let $\bs{Y}^\top = (Y_1^\top, \ldots, Y_n^\top)$ and $K$ be the kernel matrix of training examples:
\begin{equation}
    K = \left[ \begin{matrix}
        k(X_1, X_1) & \cdots & k(X_1, X_n)\\
        \cdots & \cdots & \cdots \\
        k(X_n, X_1) & \cdots & k(X_n, X_n)\\
    \end{matrix} \right] \in \bR^{nd \times nd}.
\end{equation}
For 
$f : \XX \rightarrow \bR^d$ and a labeled example $(x,y)$, let 
$\rho_{f}(x,y) = f(x)_y - \max_{y'\neq y} f(x)_{y'}$ be the \emph{prediction margin}.
Then, the following analogue of \cref{thm:margin-bound-label-complexity} holds.

\begin{theorem}
Assume that $\kappa=\sup_{x \in \XX, y\in[d]} k(x,x)_{y,y} < \infty$,
and $K$ is full rank almost surely.
Further, assume that $\ell(y, y') \ge 0, \forall y, y' \in \YY$ and $\ell(y,y) = 0, \forall y\in\YY$.
Let $M_0 = \left\lceil{\gamma\sqrt{n}}/{(4d\sqrt{\kappa})}\right\rceil$.
Then, with probability at least $1-\delta$, for any solution $f^*$ of problem in \cref{eq:kernel-method-general},
\begin{align*}
    P_{X,Y}(\rho_{f^*}(X,Y) \le 0) &\le \frac{1}{n}\sum_{i=1}^n \ind{\rho_{f^*}(X_i, Y_i) \le \gamma} + \frac{4 d (\bs{Y}^\top K^{-1} \bs{Y}+1)}{\gamma n} \sqrt{\trace\rbr{K}}\\
    &\quad+3\sqrt{\frac{\log(2M_0/\delta)}{2n}}.
    \numberthis
\end{align*}
\label{thm:margin-bound-label-complexity-vectorcase}
\end{theorem}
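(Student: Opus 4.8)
The plan is to follow the same three-part architecture as the proof of \cref{thm:margin-bound-label-complexity}, now in the vector-valued setting: (i) convert the regularization bias into an RKHS-norm bound via supervision complexity, (ii) invoke a multiclass margin Rademacher bound over a \emph{fixed}-radius ball, and (iii) remove the data-dependence of the radius by a peeling/union-bound argument. First I would note that \cref{prop:kernel-method-solution-norm} holds verbatim for the vector-valued RKHS $\HH$: the minimum-norm interpolant $\tilde f$ with $\tilde f(X_i)=Y_i$ has $\nbr{\tilde f}_\HH^2 = \bs{Y}^\top K^{-1}\bs{Y}$ and zero empirical loss, so any minimizer $f^*$ of \cref{eq:kernel-method-general} satisfies $\nbr{f^*}_\HH^2 \le \bs{Y}^\top K^{-1}\bs{Y}$ using $\ell\ge 0$ and $\ell(y,y)=0$. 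Thus $f^*$ lives in the ball $\mathcal{F}_M = \cbr{f\in\HH : \nbr{f}_\HH \le M}$ with the \emph{random} radius $M = \sqrt{\bs{Y}^\top K^{-1}\bs{Y}}$.

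For a fixed radius I would use the standard multiclass margin generalization bound of \citet{mohri2018foundations}: since $\ind{\rho\le 0}\le\phi_\gamma(\rho)$, one has $P_{X,Y}(\rho_{f^*}(X,Y)\le 0)\le\E\sbr{\phi_\gamma(\rho_{f^*}(X,Y))}$, and uniform convergence over $\mathcal{F}_M$ contributes $\tfrac{4d}{\gamma}\,\mathfrak{R}_n\rbr{\Pi_1(\mathcal{F}_M)} + 3\sqrt{\log(2/\delta)/(2n)}$, where $\Pi_1(\mathcal{F}_M)=\cbr{x\mapsto f(x)_c : c\in[d], f\in\mathcal{F}_M}$ is the class of scalar coordinate projections. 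The factor $4d$ originates from a Ledoux--Talagrand contraction applied to the $1/\gamma$-Lipschitz margin operator together with the handling of the $\max_{y'\neq y}$ over the $d$ classes. The empirical term $\tfrac1n\sum_i\phi_\gamma(\rho_{f^*}(X_i,Y_i))$ is then upper bounded by $\tfrac1n\sum_i\ind{\rho_{f^*}(X_i,Y_i)\le\gamma}$ via $\phi_\gamma(\rho)\le\ind{\rho\le\gamma}$.

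The key RKHS computation is $\mathfrak{R}_n(\Pi_1(\mathcal{F}_M))\le \tfrac{M}{n}\sqrt{\trace(K)}$. For a fixed class $c$, $\sup_{\nbr{f}_\HH\le M}\sum_i\sigma_i f(X_i)_c = M\,\nbr{\sum_i\sigma_i K_{X_i}e_c}_\HH =: M a_c$; then taking the maximum over $c$ via $\max_c a_c \le \sqrt{\sum_c a_c^2}$ and Jensen collapses the coordinate maximum, since $\E_\sigma\sqrt{\sum_{c}\sum_{i,j}\sigma_i\sigma_j k(X_i,X_j)_{c,c}} \le \sqrt{\sum_i \trace(k(X_i,X_i))} = \sqrt{\trace(K)}$ using $\E_\sigma\sbr{\sigma_i\sigma_j}=\ind{i=j}$. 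Substituting this into the fixed-radius bound gives a complexity term $\tfrac{4dM}{\gamma n}\sqrt{\trace(K)}$.

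Finally, I would resolve the data-dependence of $M$ by peeling: apply the fixed-radius bound to each $\mathcal{F}_j$, $j\in\cbr{1,\dots,M_0}$, each at confidence $\delta/M_0$, and union bound, so the concentration term becomes $3\sqrt{\log(2M_0/\delta)/(2n)}$. Here $M_0$ is chosen (using $\trace(K)\le nd\kappa$) so that for any radius exceeding $M_0$ the complexity term already exceeds $1$, rendering the claim vacuous, so only radii in $\cbr{1,\dots,M_0}$ need covering. For the realized $f^*$, set $j^\star = \lceil \nbr{f^*}_\HH\rceil$, so $f^*\in\mathcal{F}_{j^\star}$ and the complexity scales with $j^\star$; using the elementary inequality $\lceil M\rceil \le M^2 + 1$ with $M=\nbr{f^*}_\HH$ and $\nbr{f^*}_\HH^2\le\bs{Y}^\top K^{-1}\bs{Y}$ turns $j^\star$ into $\bs{Y}^\top K^{-1}\bs{Y}+1$, yielding exactly $\tfrac{4d(\bs{Y}^\top K^{-1}\bs{Y}+1)}{\gamma n}\sqrt{\trace(K)}$. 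I expect the main obstacle to be the third step's interaction with the second: correctly carrying the $\max$ over $d$ classes through the contraction to obtain the $4d$ constant and the coordinate-wise collapse to $\sqrt{\trace(K)}$, and then matching the stated $M_0$ and the $\lceil M\rceil\le M^2+1$ relaxation so that the constants (not merely the order) come out as claimed; the remainder is a mechanical adaptation of the binary argument.
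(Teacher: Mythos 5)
Your proposal is correct and follows essentially the same route as the paper's proof: the vector-valued analogue of \cref{prop:kernel-method-solution-norm} to bound $\nbr{f^*}_\HH^2$ by the supervision complexity, the multiclass margin bound with the $\tfrac{4d}{\gamma}$-scaled empirical Rademacher complexity of the coordinate-projection class (the paper cites Theorem 2 of Kuznetsov et al.\ rather than Mohri et al., but it is the same bound), the identical reproducing-property/Cauchy--Schwarz/Jensen computation collapsing the class maximum into $\sqrt{\trace(K)}$, and the union bound over integer radii up to $M_0$. Your relaxation $\lceil M\rceil \le M^2+1$ even reproduces the exact constant $\bs{Y}^\top K^{-1}\bs{Y}+1$ in the statement, so no further comment is needed.
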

\paragraph{Implications for neural classifiers.}
Our analysis has so far focused on kernel-based classifiers.
While neural networks are not exactly kernel methods, many aspects of their performance can be 
understood via a corresponding \emph{linearized neural network} (see \citet{ortiz2021can} and references therein).
We follow this approach, and given a neural network $f_w$ with current weights $w_0$, we consider the corresponding linearized neural network 
comprising
the linear terms of the Taylor expansion of $f_w(x)$ around $w_0$~\citep{jacot2018ntk, lee2019wide}:
\begin{equation}
f^\mathrm{lin}_w(x) \triangleq f_{w_0}(x) + \nabla_w f_{w_0}(x)^\top (w-w_0).
\end{equation}
Let $\omega \triangleq w - w_0$.
This network $f^\mathrm{lin}_\omega(x)$ is a linear function with respect to 
the parameters $\omega$, but is generally non-linear with respect to the input $x$.
Note that $\nabla_w f_{w_0}(x)$ acts as a feature representation, 
and induces the \emph{neural tangent kernel} (\emph{NTK}) $k_0(x, x') = \nabla_w f_{w_0}(x)^\top \nabla_w f_{w_0}(x') \in \bR^{d\times d}$.

Given a labeled dataset $S = \{(x_i, y_i)\}_{i\in[n]}$
and a loss function $\LL( f; S )$, the dynamics of gradient flow with learning rate $\eta > 0$ for $f^\mathrm{lin}_\omega$
can be fully characterized in the function space, and depends only on the predictions at $w_0$ and the NTK $k_0$:
\begin{align}
    \dot{f}_t^\mathrm{lin}(x') &= -\eta \cdot K_0(x', \bs{x}) \phantom{^\top} 
    \nabla_{f}\LL( f_{t}^\mathrm{lin}(\bs{x}); S ),\label{eq:lin-pred-dyn}
\end{align}
where $f(\bs{x}) \in \bR^{nd}$ denotes the concatenation of predictions on training examples and $K_0(x', \bs{x}) = \nabla_w f_{w_0}(x')^\top \nabla_w f_{w_0}(\bs{x})$.
\citet{lee2019wide} show that as one increases the width of the network or when $(w - w_0)$ does not change much during training, the dynamics of the linearized and original neural network become close.

When $f_w$ is sufficiently overparameterized and $\LL$ is convex with respect to $\omega$, then $\omega_t$ converges to an interpolating solution.
Furthermore, for the mean squared error objective, the solution has the minimum Euclidean norm~\citep{gunasekar2017implicit}.
As the Euclidean norm of $\omega$ corresponds to norm of $f^\mathrm{lin}_\omega(x) - f_{w_0}(x)$ in the vector-valued RKHS $\HH$ corresponding to 
$k_0$, training a linearized network to interpolation is equivalent to solving the following with a small $\lambda > 0$:
\begin{equation}
    h^* = \argmin_{h \in \HH} \frac{1}{n}\sum\nolimits_{i=1}^n (f_{w_0}(x_i) + h(x_i) - y_i)^2 + \frac{\lambda}{2} \nbr{h}_\HH^2.
    \label{eq:kernel-method-NN-fn-space}
\end{equation}
Therefore, the generalization bounds of \cref{thm:margin-bound-label-complexity,thm:margin-bound-label-complexity-vectorcase} apply to $h^*$ with supervision complexity of residual targets $y_i - f_{w_0}(x_i)$.
However, we are interested in the performance of $f_{w_0} + h^*$.
As the proofs of these results rely on bounding the Rademacher complexity of hypothesis sets of form $\mathset{h \in \HH \colon \nbr{h} \le M}$, and shifting a hypothesis set by a constant function does not change the Rademacher complexity (see \cref{remark:shfiting-rademacher} of \cref{app:proofs}), these proofs can be easily modified to handle hypotheses shifted by the constant function $f_{w_0}$.

\section{Knowledge distillation: a supervision complexity lens}
\label{sec:kd-analysis}
We now turn to knowledge distillation, 
and explore how supervision complexity affects student's generalization.
We show that student's generalization depends on three terms:
the \emph{teacher generalization}, 
the student's \emph{margin} with respect to the teacher predictions,
and the complexity of the teacher's predictions.

\subsection{Trade-off between teacher accuracy, margin, and complexity}
Consider the binary classification setting of \cref{sec:analysis}, and a fixed teacher $g : \XX \rightarrow \bR$ that outputs a logit.
Let $\{(X_i, Y^*_i)\}_{i \in [n]}$ be $n$ i.i.d. labeled examples, where $Y^*_i \in \mathset{-1, 1}$ denotes the ground truth labels.
For temperature $\tau > 0$,
let $Y_i \triangleq 2\ \mathrm{sigmoid}(g(X_i)/\tau) - 1 \in [-1, +1]$ denote the teacher's \emph{soft predictions}, 
for sigmoid function $z \mapsto (1 + \exp(-z))^{-1}$.
Our key observation is:
if the teacher predictions $Y_i$ are accurate enough
and have significantly lower complexity compared to ground truth labels 
$Y^*_i$, then a student kernel method (cf.~\cref{eq:kernel-method-general}) trained with $Y_i$  
can generalize better than the one trained with $Y^*_i$. 
The following result quantifies the trade-off between teacher accuracy, student prediction margin, and teacher prediction complexity.
\begin{proposition}
Assume that $\kappa=\sup_{x \in \XX} k(x,x) < \infty$ and $K$ is full rank almost surely, $\ell(y, y') \ge 0, \forall y, y' \in \YY$, and $\ell(y,y) = 0, \forall y\in\YY$.
Let $Y_i$ and $Y^*_i$ be defined as above.
Let $M_0 = \left\lceil{\gamma\sqrt{n}}/{(2\sqrt{\kappa})}\right\rceil$.
Then, with probability at least $1-\delta$, any solution $f^*$ of problem \cref{eq:kernel-method-general} satisfies
\begin{align*}
    \underbrace{P_{X,Y^*}(Y^* f^*(X) \le 0)}_{\text{student risk}} &\le \underbrace{P_{X,Y^*}(Y^* g(X) \le 0)}_{\text{teacher risk}}\ \ + \underbrace{\frac{1}{n}\sum\nolimits_{i=1}^n \phi_\gamma(\sign\rbr{Y_i} f^*(X_i))}_{\text{student's empirical margin loss w.r.t. teacher predictions}}\\
    &\quad+\underbrace{{\rbr{2\sqrt{\bs{Y}^\top K^{-1} \bs{Y}}+2}} \sqrt{\trace\rbr{K}}/{(\gamma n)}}_{\text{complexity of teacher's predictions}}
    +3\sqrt{{\ln\rbr{2M_0/\delta}}/{(2n)}}.\numberthis
\end{align*}
\label{prop:distillation-error-wrt-dataset-labels}
\end{proposition}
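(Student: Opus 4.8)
The plan is to reduce \cref{prop:distillation-error-wrt-dataset-labels} to \cref{thm:margin-bound-label-complexity}, which already controls the student's generalization against the \emph{teacher's} soft targets $Y_i$. The only genuinely new ingredient is transferring that guarantee into a bound against the \emph{ground-truth} labels $Y^*_i$, at the cost of paying for the teacher's own classification errors. Since $f^*$ is by construction a solution of \cref{eq:kernel-method-general} with targets $Y_i = 2\,\mathrm{sigmoid}(g(X_i)/\tau)-1$, and these $Y_i$ are a deterministic function of $X_i$, the pairs $(X_i,Y_i)$ are i.i.d.\ draws from a distribution on $\XX\times\YY$ with $\YY=[-1,1]$; hence \cref{thm:margin-bound-label-complexity} applies to them verbatim.

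First I would decompose the student's risk event according to whether the teacher is correct:
$$
P_{X,Y^*}(Y^* f^*(X) \le 0) \le P_{X,Y^*}(Y^* g(X) \le 0) + P_{X,Y^*}\rbr{Y^* g(X) > 0,\ Y^* f^*(X) \le 0}.
$$
The first term is precisely the teacher risk. For the second, I would use the key structural fact that temperature scaling preserves the sign of the logit: the map $z \mapsto 2\,\mathrm{sigmoid}(z/\tau)-1$ is strictly increasing and vanishes at $z=0$, so $\sign(Y)=\sign(g(X))$ for every $\tau>0$. Consequently, on the event $\{Y^* g(X) > 0\}$ the teacher's logit has the same sign as $Y^*$, giving $\sign(Y)=Y^*$ and therefore the identity of events $\{Y^* f^*(X)\le 0\} = \{\sign(Y) f^*(X)\le 0\}$ there. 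Dropping the (intersected) conditioning event then yields
$$
P_{X,Y^*}\rbr{Y^* g(X) > 0,\ Y^* f^*(X) \le 0} \le P\rbr{\sign(Y) f^*(X) \le 0}.
$$

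Next I would note that $\sign(Y) f^*(X) \le 0 \iff Y f^*(X) \le 0$ holds pointwise — checking $Y>0$, $Y<0$, and the boundary case $Y=0$ separately, all of which agree — so that $P(\sign(Y) f^*(X) \le 0) = P_{X,Y}(Y f^*(X)\le 0)$. At this point I can invoke \cref{thm:margin-bound-label-complexity} directly to bound $P_{X,Y}(Y f^*(X)\le 0)$ by the sum of the empirical margin loss $\frac1n\sum_i \phi_\gamma(\sign(Y_i) f^*(X_i))$, the complexity term $\rbr{2\sqrt{\bs{Y}^\top K^{-1}\bs{Y}}+2}\sqrt{\trace(K)}/(\gamma n)$, and the confidence term $3\sqrt{\ln(2M_0/\delta)/(2n)}$. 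Adding back the teacher-risk term recovers exactly the claimed inequality.

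The proof is essentially bookkeeping once \cref{thm:margin-bound-label-complexity} is available; the one conceptual step — and the step I would be most careful about — is the sign-preservation argument, as it is what lets the teacher's soft targets inherit the correct decision boundary on the subpopulation where the teacher is already right. The only other point worth verifying is that the high-probability event of \cref{thm:margin-bound-label-complexity} (over the draw of the training sample, on which $K$ is full rank) coincides with the event on which the final bound is asserted, so that the reduction incurs no union bound and no loss in the confidence parameter $\delta$.
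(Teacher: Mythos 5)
Your proposal is correct and follows essentially the same route as the paper: decompose the student's risk according to whether the teacher is correct, observe that on the teacher-correct event a student error forces $\sign(Y)f^*(X)\le 0$ (equivalently $g(X)f^*(X)\le 0$, since temperature scaling preserves the sign of the logit), and then apply \cref{thm:margin-bound-label-complexity} to the teacher-target pairs $(X_i,Y_i)$. The paper's proof is just a terser version of the same argument; your extra care about the sign-preservation step and the reuse of the same high-probability event is exactly the bookkeeping the paper leaves implicit.
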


Note that a similar result is easy to establish for multiclass classification  using \cref{thm:margin-bound-label-complexity-vectorcase}.
The first term in the above accounts for the misclassification rate of the teacher.
While this term is not irreducible (it is possible for a student to perform better than its teacher), generally a student performs worse that its teacher, especially when there is a significant teacher-student capacity gap.
The second term is student's empirical margin loss w.r.t. teacher predictions. This captures the price of making teacher predictions too soft.
Intuitively, the softer (i.e., closer to zero) teacher predictions are, the harder it is for the student to learn the classification rule.
The third term accounts for the supervision complexity and the margin parameter $\gamma$.
Thus, one has to choose $\gamma$ carefully to achieve a good balance between empirical margin loss and margin-normalized supervision complexity.

\paragraph{The effect of temperature.}
For a fixed margin parameter $\gamma > 0$, increasing the temperature $\tau$ makes teacher's predictions $Y_i$ softer.
On the one hand, the reduced scale decreases the supervision complexity $\bs{Y}^\top K^{-1} \bs{Y}$.
Moreover, we shall see that in the case of neural networks the complexity decreases even further due to $\bs{Y}$ becoming more aligned with top eigenvectors of $K$.
On the other hand, the scale of predictions of the (possibly interpolating) student $f^*$ will decrease too, increasing the empirical margin loss.
This suggests that setting the value of $\tau$ is not trivial: the optimal value can be different based on the kernel $k$ and teacher logits $g(X_i)$.

\subsection{From offline to online knowledge distillation}
\label{sec:online-kd}
We identified that supervision complexity plays a key role in determining the efficacy of a distillation procedure.
The supervision from a fully trained teacher model can prove to be very complex for a student model in an early stage of its training (\cref{fig:sup-res-complexity-comparison-test}).
This raises the question: 
\emph{is there value in providing progressively difficult supervision to the student}? 
In this section, we describe a simple online distillation method, where 
the the teacher is updated during the student training.

Over the course of their training, neural models learn functions of increasing complexity~\citep{Nakkiran:2019}. This provides a natural way to construct a set of teachers with varying prediction complexities.
Similar to \citet{jin2019rco}, for practical considerations of not training the teacher and the student simultaneously, we assume the availability of teacher checkpoints over the course of its training.
Given $m$ teacher checkpoints at times $\mathcal{T} = \{t_i\}_{i \in [m]}$,
during the $t$-th step of distillation, the student receives supervision from the teacher checkpoint at time $\min\{t' \in \mathcal{T}: t' > t\}$.
Note that the student is trained for the same number of epochs in total as in offline distillation.
We use the term ``online distillation'' for this approach (cf.~\cref{alg:online-distillation}).

\begin{algorithm}[t]
\small
    \caption{Online knowledge distillation.}
    \label{alg:online-distillation}
    \begin{algorithmic}[1] 
    \STATE \textbf{Require:} Training sample $S$;
    teacher checkpoints $\{ g^{(t_1)}, \ldots, g^{(t_m)} \}$;
    temperature $\tau > 0$;
    training steps $T$;
    minibatch size $b$
        \FOR{$t = 1, \ldots, T$}
            \STATE Draw random $b$-sized minibatch $S'$ from $S$
            \STATE Compute nearest teacher checkpoint $t^* = \min \{ i \in [m] \colon t_i > t \}$
            \STATE Update student $w \gets w - \eta_t \cdot \nabla_w \LL_{\mathrm{kd-ce}}( w; g^{(t^*)}, \tau )$ over $S'$
        \ENDFOR
        \STATE \textbf{return} $f_w$
    \end{algorithmic}
\end{algorithm}

Online distillation can be seen as guiding the student network to follow the teacher's trajectory in \emph{function space} (see \cref{fig:offline-vs-online}).
Given that NTK can be interpreted as a principled notion of example similarity and controls which examples affect each other during training~\citep{charpiat2019input}, it is desirable for the student to have an NTK similar to that of its teacher at each time step.
To test whether online distillation also transfers NTK, we propose to measure similarity between the final student and final teacher NTKs.
For computational efficiency we work with NTK matrices corresponding to a batch of $b$ examples ($bd \times bd$ matrices).
Explicit computation of even batch NTK matrices can be costly, especially when the number of classes $d$ is large.
We propose to view student and teacher batch NTK matrices (denoted by $K_f$ and $K_g$ respectively) as operators and measure their similarity by comparing their behavior on random vectors:
\begin{equation}
\mathrm{sim}(K_f, K_g)=\E_{V \sim \mathcal{N}(0, I_{bd})}\sbr{\frac{\langle K_f V, K_g V\rangle}{\nbr{K_f V}\nbr{K_g V}}}.
\end{equation}
Note that the cosine distance is used to account for scale differences of $K_g$ and $K_f$.
The kernel-vector products appearing in this similarity measure above can be computed efficiently without explicitly constructing the kernel matrices.
For example, with $\bs{x'}$ denoting the collection of inputs of the mini-batch, $K_f v =\nabla_{w} f_w(\bs{x'})^\top\rbr{\nabla_{w} f_w(\bs{x'}) v}$ can be computed with one vector-Jacobian product followed by a Jacobian-vector product.
The former can be computed efficiently using backpropagation, while the latter can be computed efficiently using forward-mode differentiation.

\section{Experimental results}
\label{sec:sup-complexity-experiments}
\begin{table}[!t]
    \centering
    \small
    \caption{Knowledge distillation results on CIFAR-10. Every second line is an MSE student.}
    \label{tbl:results-cifar10}
    \begin{tabular}{@{}lcccccc@{}}
    \toprule
    {\bf Setting} & {\bf No KD} & \multicolumn{2}{c}{\bf Offline KD} & \multicolumn{2}{c}{\bf Online KD} & {\bf Teacher} \\
    &  & $\tau = 1$ & $\tau = 4$ & $\tau = 1$ & $\tau = 4$ & \\
    \midrule
    ResNet-56 $\rightarrow$ LeNet-5x8 & 81.8 $\pm$ 0.5 & 82.4 $\pm$ 0.5 & 86.0 $\pm$ 0.2 & 86.8 $\pm$ 0.2 & \best{88.6 $\pm$ 0.1} & 93.2 \\
    ResNet-56 $\rightarrow$ LeNet-5x8 & 83.4 $\pm$ 0.3 & 83.1 $\pm$ 0.2 & 84.9 $\pm$ 0.1 & 85.6 $\pm$ 0.1 & \best{87.1 $\pm$ 0.1} & 93.2 \\
    \midrule
    ResNet-110 $\rightarrow$ LeNet-5x8 & 81.7 $\pm$ 0.3 & 81.9 $\pm$ 0.5 & 85.8 $\pm$ 0.1 & 86.5 $\pm$ 0.1 & \best{88.8 $\pm$ 0.1} & 93.9 \\
    ResNet-110 $\rightarrow$ LeNet-5x8 & 83.2 $\pm$ 0.4 & 83.2 $\pm$ 0.1 & 85.0 $\pm$ 0.3 & 85.6 $\pm$ 0.1 & \best{87.1 $\pm$ 0.2} & 93.9 \\
    \midrule
    ResNet-110 $\rightarrow$ ResNet-20 & 91.4 $\pm$ 0.2 & 91.4 $\pm$ 0.1 & 92.8 $\pm$ 0.0 & 92.2 $\pm$ 0.3 & \best{93.1 $\pm$ 0.1} & 93.9 \\
    ResNet-110 $\rightarrow$ ResNet-20 & 90.9 $\pm$ 0.1 & 90.9 $\pm$ 0.2 & 91.6 $\pm$ 0.2 & 91.2 $\pm$ 0.1 & \best{92.1 $\pm$ 0.2} & 93.9 \\
    \bottomrule
    \end{tabular}%
\end{table}

\begin{table}[!t]
    \centering
    \small
    \caption{Knowledge distillation results on CIFAR-100.}
    \label{tbl:results-cifar100}
    \begin{tabular}{@{}lcccccc@{}}
    \toprule
    {\bf Setting} & {\bf No KD} & \multicolumn{2}{c}{\bf Offline KD} & \multicolumn{2}{c}{\bf Online KD} & {\bf Teacher} \\
     &  & $\tau = 1$ & $\tau = 4$ & $\tau = 1$ & $\tau = 4$ & \\
    \midrule
    ResNet-56 $\rightarrow$ LeNet-5x8 & 47.3 $\pm$ 0.6 & 50.1 $\pm$ 0.4 & 59.9 $\pm$ 0.2 & 61.9 $\pm$ 0.2 & \best{66.1 $\pm$ 0.4} & 72.0\\
    ResNet-56 $\rightarrow$ ResNet-20 & 67.7 $\pm$ 0.5 & 68.2 $\pm$ 0.3 & \best{71.6 $\pm$ 0.2} & 69.6 $\pm$ 0.3 & 71.4 $\pm$ 0.3 & 72.0\\
    ResNet-110 $\rightarrow$ LeNet-5x8 & 47.2 $\pm$ 0.5 & 48.6 $\pm$ 0.8 & 59.0 $\pm$ 0.3 & 60.8 $\pm$ 0.2 & \best{65.8 $\pm$ 0.2} & 73.4\\
    ResNet-110 $\rightarrow$ ResNet-20 & 67.8 $\pm$ 0.3 & 67.8 $\pm$ 0.2 & 71.2 $\pm$ 0.0 & 69.0 $\pm$ 0.3 & \best{71.4 $\pm$ 0.0} & 73.4\\
    \bottomrule
    \end{tabular}%
\end{table}

\begin{table}[!t]
    \small
    \centering
    \caption{Knowledge distillation results on Tiny ImageNet.}
    \label{tbl:results-imagenet}
    \scalebox{0.97}{
    \begin{tabular}{@{}lcccccc@{}}
    \toprule
    {\bf Setting} & {\bf No KD} & \multicolumn{2}{c}{\bf Offline KD} & \multicolumn{2}{c}{\bf Online KD} & {\bf Teacher} \\
    &  & $\tau = 1$ & $\tau = 2$ & $\tau = 1$ & $\tau = 2$ & \\
    \midrule
    MobileNet-V3-125 $\rightarrow$ MobileNet-V3-35 & 58.5 $\pm$ 0.2 & 59.2 $\pm$ 0.1 & 60.2 $\pm$ 0.2 & 60.7 $\pm$ 0.2 & \best{62.3 $\pm$ 0.3} & 62.7 \\
    ResNet-101 $\rightarrow$ MobileNet-V3-35 & 58.5 $\pm$ 0.2& 59.4 $\pm$ 0.5 & 61.6 $\pm$ 0.2 & 61.1 $\pm$ 0.3 & \best{62.0 $\pm$ 0.3} & 66.0 \\
    MobileNet-V3-125 $\rightarrow$ VGG-16 & 48.9 $\pm$ 0.3 & 54.1 $\pm$ 0.4 & 59.4 $\pm$ 0.4 & 58.9 $\pm$ 0.7 & \best{62.3 $\pm$ 0.3} & 62.7 \\
    ResNet-101 $\rightarrow$ VGG-16 & 48.6 $\pm$ 0.4 & 53.1 $\pm$ 0.4 & 60.6 $\pm$ 0.2 & 60.4 $\pm$ 0.2 & \best{64.0 $\pm$ 0.1} & 66.0 \\
    \bottomrule
    \end{tabular}%
    }
\end{table}
We now present experimental results to showcase the importance of supervision complexity in distillation, and to establish efficacy of online distillation.

\subsection{Experimental setup}
We consider standard image classification benchmarks: CIFAR-10, CIFAR-100, and Tiny-ImageNet.
Additionally, we derive a binary classification task from CIFAR-100 by grouping the first and last 50 classes into two meta-classes.
We consider teacher and student architectures that are ResNets~\citep{he2016deep}, VGGs~\citep{vgg2014simonyan2014}, and MobileNets~\citep{howard2019searching} of various depths.
As a student architecture with relatively weaker inductive biases, we also consider the LeNet-5~\citep{lecun1998gradient} with 8 times wider hidden layers.

\begin{table}[t]
    \small
    \caption{Initial learning rates for different dataset and model pairs.}
    \centering
    \begin{tabular}{llc}
    \toprule
    Dataset & Model & Learning rate\\
    \midrule
    \multirow{4}*{CIFAR-10, CIFAR-100, binary CIFAR-100} & ResNet-56 (teacher) & 0.1 \\
     & ResNet-110 (teacher) & 0.1 \\
     & ResNet-20 (CE or MSE students) & 0.1 \\
     & LeNet-5x8 (CE or MSE students) & 0.04 \\
    \midrule
    \multirow{3}*{Tiny ImageNet} & MobileNet-V3-125 (teacher) & 0.04\\
     & ResNet-101 (teacher) & 0.1 \\
     & MobileNet-V3-35 (student) & 0.04 \\
     & VGG-16 (student) & 0.01 \\
    \bottomrule
    \end{tabular}
    \label{tab:learning-rate-table}
\end{table}
We use standard hyperparameters to train these models. In particular, in all experiments we use stochastic gradient descent optimizer with 128 batch size and 0.9 Nesterov momentum.
The starting learning rates are presented in \cref{tab:learning-rate-table}.
All models for CIFAR datasets are trained for 256 epochs, with a learning schedule that divides the learning rate by 10 at epochs 96, 192, and 224.
All models for Tiny ImageNet are trained for 200 epochs, with a learning rate schedule that divides the learning rate by 10 at epochs 75 and 135.
The learning rate is warmed-up linearly to its initial value in the first 10 and 5 epochs for CIFAR and Tiny ImageNet models respectively.
All VGG and ResNet models use 2e-4 weight decay, while MobileNet models use 1e-5 weight decay.

The LeNet-5 uses ReLU activations.
We use the CIFAR variants of ResNets in experiments with CIFAR-10 or (binary) CIFAR-100 datasets.
Tiny ImageNet examples are resized to 224x224 resolution to suit the original ResNet, VGG and MobileNet architectures.
In all experiments we use standard data augmentation -- random cropping and random horizontal flip.
In all online learning methods we consider one teacher checkpoint per epoch.

We compare
(1) regular one-hot training (without any distillation),
(2) regular offline distillation using the temperature-scaled softmax cross-entropy,
and (3) online distillation using the same loss.
For CIFAR-10 and binary CIFAR-100, we also consider training with mean-squared error (MSE) loss and its corresponding KD loss:
\begin{align}
    \LL_\mathrm{mse}( w) &= \frac{1}{2n}\sum_{i=1}^n \nbr{y_i - f_w(x_i)}_2^2,\\
    \LL_\mathrm{kd-mse}( w; g, \tau ) &= \frac{\tau}{2n}\sum_{i=1}^n \nbr{\mathrm{softmax}(g(x_i) / \tau) - f_w(x_i)}_2^2.\label{eq:kd-mse}
\end{align}
The MSE loss allows for interpolation in case of one-hot labels $y_i$, making it amenable to the analysis in \cref{sec:analysis,sec:kd-analysis}. 
Moreover,~\citet{hui2021evaluation} show that under standard training, the CE and MSE losses perform similarly; as we shall see, the same is true for distillation as well.

As mentioned in \cref{sec:sup-complexity-intro}, in all KD experiments student networks receive supervision only through a knowledge distillation loss (i.e., dataset labels are not used).
This choice help us decrease differences between the theory and experiments.
Furthermore, in our preliminary experiments we observed that this choice does not result in student performance degradation (see \cref{tbl:cifar100-kd-alpha}).
\subsection{Results and discussion}
\cref{tbl:results-cifar10,tbl:results-cifar100,tbl:results-imagenet,tbl:results-cifar100-binary} 
present the results 
(mean and standard deviation of test accuracy over 3 random trials).
First, we see that 
online distillation with proper temperature scaling 
typically
yields the 
most accurate
student.
The gains over regular distillation are particularly pronounced when there is a large teacher-student gap.
For example, on CIFAR-100, 
ResNet to LeNet distillation
with temperature scaling
appears to hit a limit of $\approx 60\%$ accuracy.
Online distillation however manages to further increase accuracy  by $+6\%$, which is a
$\approx 20\%$ increase compared to standard training.
Second, the 
similar results on binary CIFAR-100 shows that ``dark knowledge'' in the form of 
membership information in multiple classes is not necessary for distillation to succeed.

The results also demonstrate that knowledge distillation with the MSE loss of \cref{eq:kd-mse} has a qualitatively similar behavior to KD with CE objective.
We use these MSE models to highlight the role of supervision complexity.
As an instructive case, we consider a LeNet-5x8 network trained on binary CIFAR-100 with the standard MSE loss function.
For a given checkpoint of this network and a given set of $m$ labeled (\emph{test}) examples $\mathset{(X'_i,Y'_i)}_{i \in [m]}$, we compute the \emph{adjusted supervision complexity} defined as 
\begin{equation}
1/n \sqrt{(\bs{Y'} - f(\bs{X'}))^\top (K')^{-1} (\bs{Y'} - f(\bs{X'})) \cdot \trace\rbr{K'}},
\label{eq:adjusted-res-complexity}
\end{equation}
where $f$ denotes the current prediction function,
and $K$ is derived from the current NTK.
Note that the subtraction of initial predictions is the appropriate way to measure complexity given the form of the optimization problem~\cref{eq:kernel-method-NN-fn-space}.
Nevertheless, it is meaningful to consider the following quantity as well:
\begin{equation}
    \frac{1}{n} \sqrt{\bs{Y'}^\top (K')^{-1} \bs{Y'} \cdot \trace\rbr{K'}},\label{eq:adjusted-non-res-complexity}
\end{equation}
in order to measure ``alignment'' of targets $\bs{Y'}$ with the NTK $k$. We call this quantity \emph{adjusted supervision complexity*}.
As the training NTK matrix becomes aligned with dataset labels during training (see~\citet{baratin2021implicit} and \cref{fig:ntk-alignment-with-labels}), we pick $\mathset{X'_i}_{i\in [m]}$ to be a set of $2^{12}$ 
\emph{test} examples.

\begin{table}[!t]
    \centering
    \small
    \caption{Knowledge distillation results on binary CIFAR-100. Every second line is an MSE student.}
    \label{tbl:results-cifar100-binary}
    
    \begin{tabular}{@{}lcccccc@{}}
    \toprule
    {\bf Setting} & {\bf No KD} & \multicolumn{2}{c}{\bf Offline KD} & \multicolumn{2}{c}{\bf Online KD} & {\bf Teacher} \\
    &  & $\tau = 1$ & $\tau = 4$ & $\tau = 1$ & $\tau = 4$ & \\
    \midrule
    ResNet-56 $\rightarrow$ LeNet-5x8 & 71.5 $\pm$ 0.2 & 72.4 $\pm$ 0.1 & 73.6 $\pm$ 0.2 & 74.7 $\pm$ 0.2 & \best{76.1 $\pm$ 0.2} & 77.9 \\
    ResNet-56 $\rightarrow$ LeNet-5x8 & 71.5 $\pm$ 0.4 & 71.9 $\pm$ 0.3 & 73.0 $\pm$ 0.3 & \best{75.1 $\pm$ 0.3} & \best{75.1 $\pm$ 0.1} & 77.9 \\
    \midrule
    ResNet-56 $\rightarrow$ ResNet-20 & 75.8 $\pm$ 0.5 & 76.1 $\pm$ 0.2 & 77.1 $\pm$ 0.6 & 77.8 $\pm$ 0.3 & \best{78.1 $\pm$ 0.1} & 77.9 \\
    ResNet-56 $\rightarrow$ ResNet-20 & 76.1 $\pm$ 0.5 & 76.0 $\pm$ 0.2 & 77.4 $\pm$ 0.3 & 78.0 $\pm$ 0.2 & \best{78.4 $\pm$ 0.3} & 77.9 \\
    \midrule
    ResNet-110 $\rightarrow$ LeNet-5x8 & 71.4 $\pm$ 0.4 & 71.9 $\pm$ 0.1 & 72.9 $\pm$ 0.3 & 74.3 $\pm$ 0.3 & \best{75.4 $\pm$ 0.3} & 78.4 \\
    ResNet-110 $\rightarrow$ LeNet-5x8 & 71.6 $\pm$ 0.2 & 71.5 $\pm$ 0.4 & 72.6 $\pm$ 0.4 & \best{74.8 $\pm$ 0.4} & 74.6 $\pm$ 0.2 & 78.4 \\
    \midrule
    ResNet-110 $\rightarrow$ ResNet-20 & 76.0 $\pm$ 0.3 & 76.0 $\pm$ 0.2 & 77.0 $\pm$ 0.1 & 77.3 $\pm$ 0.2 & \best{78.0 $\pm$ 0.4} & 78.4 \\
    ResNet-110 $\rightarrow$ ResNet-20 & 76.1 $\pm$ 0.2 & 76.4 $\pm$ 0.3 & 77.6 $\pm$ 0.3 & 77.9 $\pm$ 0.2 & \best{78.1 $\pm$ 0.1} & 78.4 \\
    \bottomrule
    \end{tabular}%
\end{table}

\begin{figure}[!t]
    \centering
     \begin{subfigure}{0.45\textwidth}
       \includegraphics[width=\textwidth]{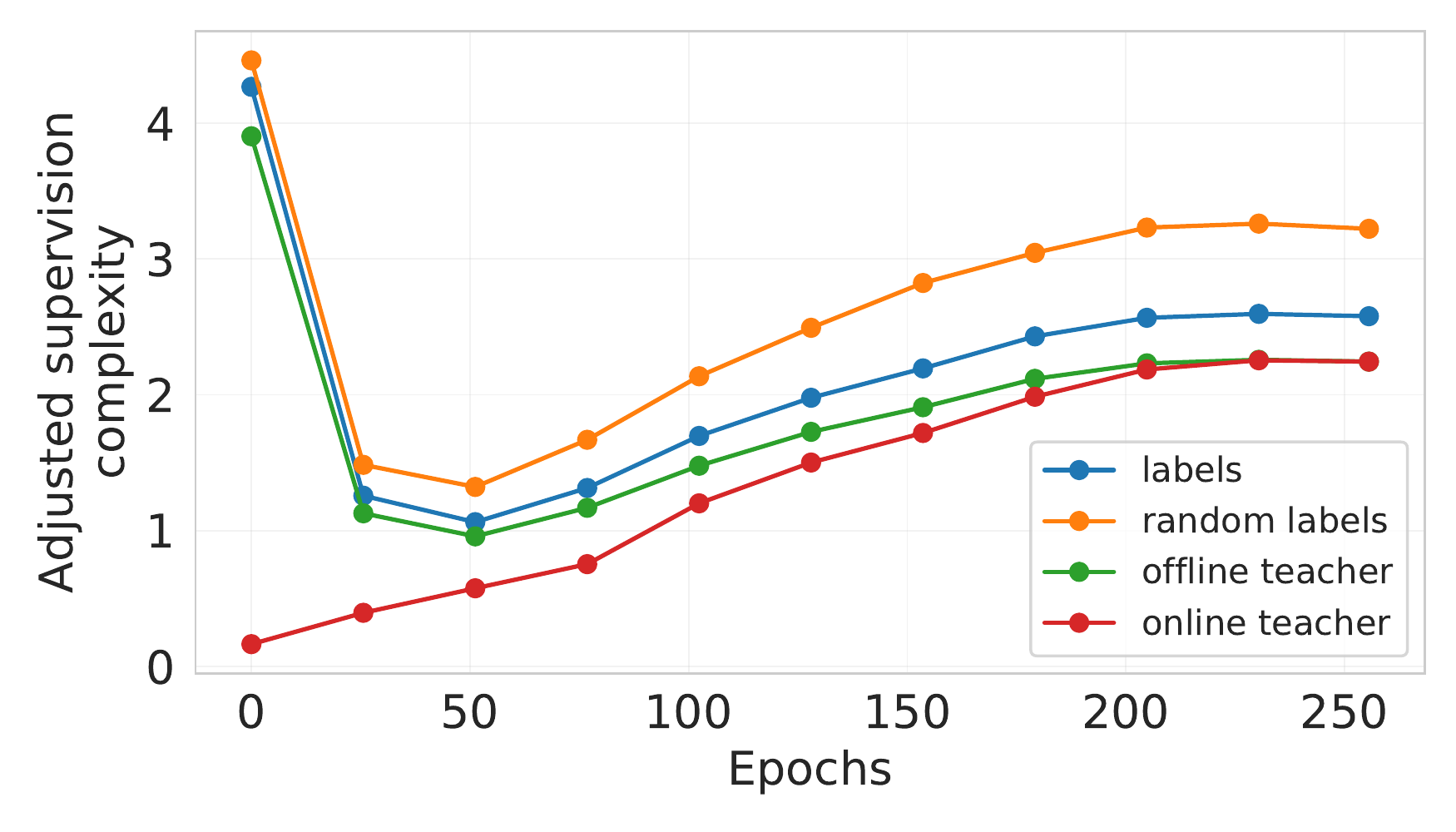}
       \caption{{\centering LeNet-5x8}}
       \label{fig:sup-res-complexity-lenet-test}
    \end{subfigure}%
    \hspace{2em}
    \begin{subfigure}{0.45\textwidth}
        \includegraphics[width=\textwidth]{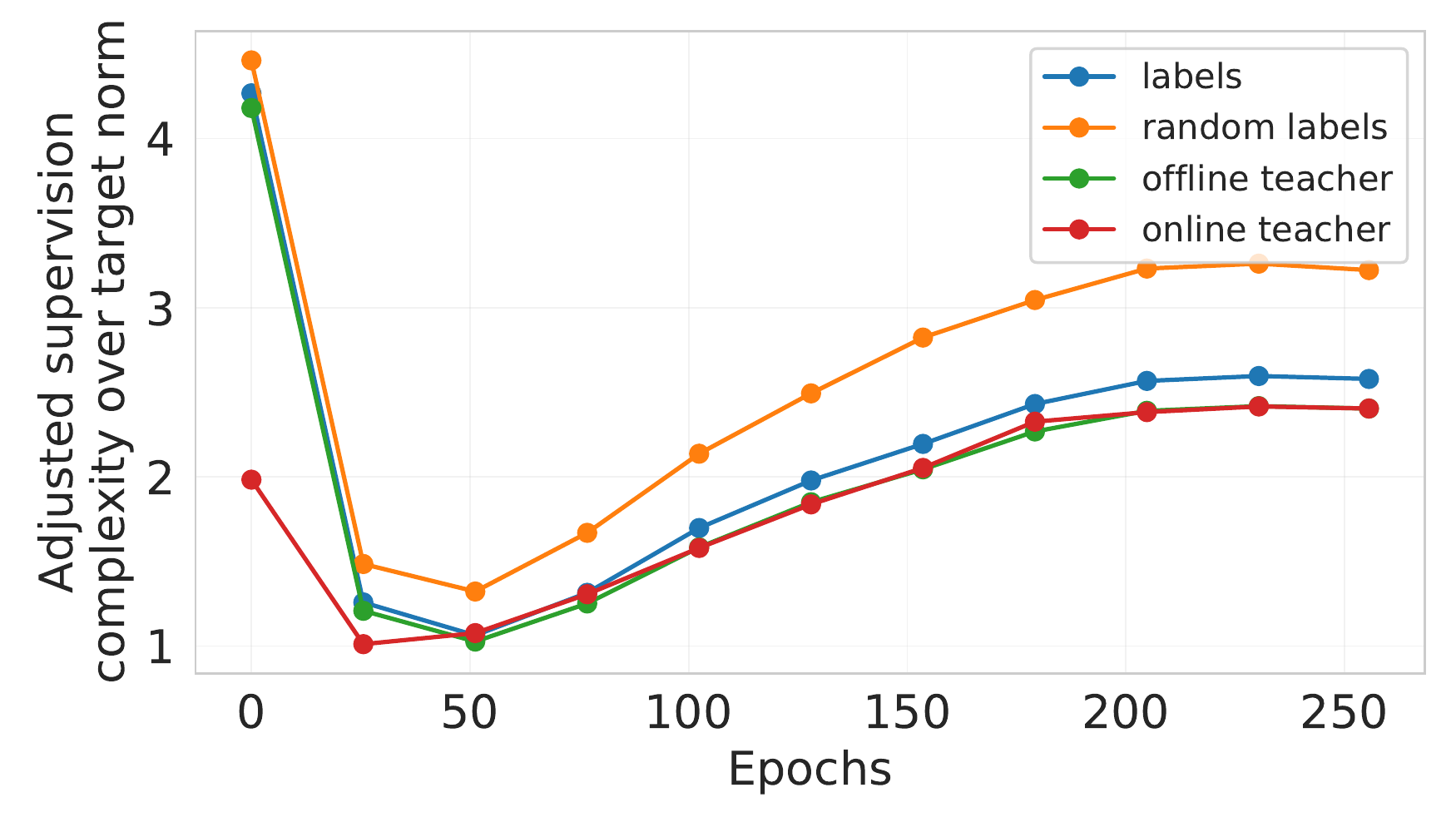}
        \caption{{\centering LeNet-5x8, normalized complexity}}
        \label{fig:normalzied-sup-res-complexity-lenet-test}
    \end{subfigure}

    \begin{subfigure}{0.45\textwidth}
        \includegraphics[width=\textwidth]{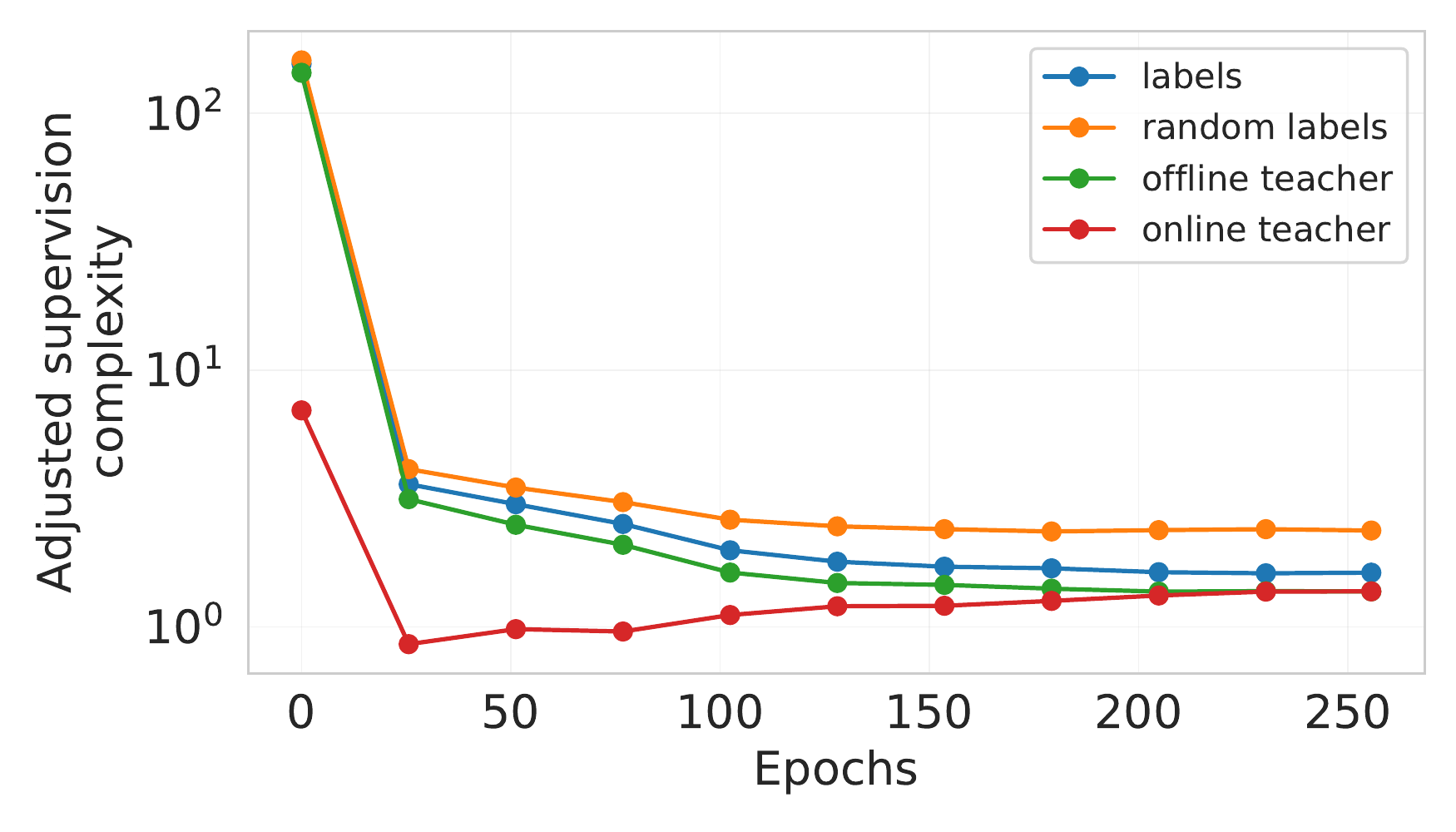}
        \caption{{\centering ResNet-20}}
        \label{fig:sup-res-complexity-resnet-test}
    \end{subfigure}
    \hspace{2em}
    \begin{subfigure}{0.45\textwidth}
        \includegraphics[width=\textwidth]{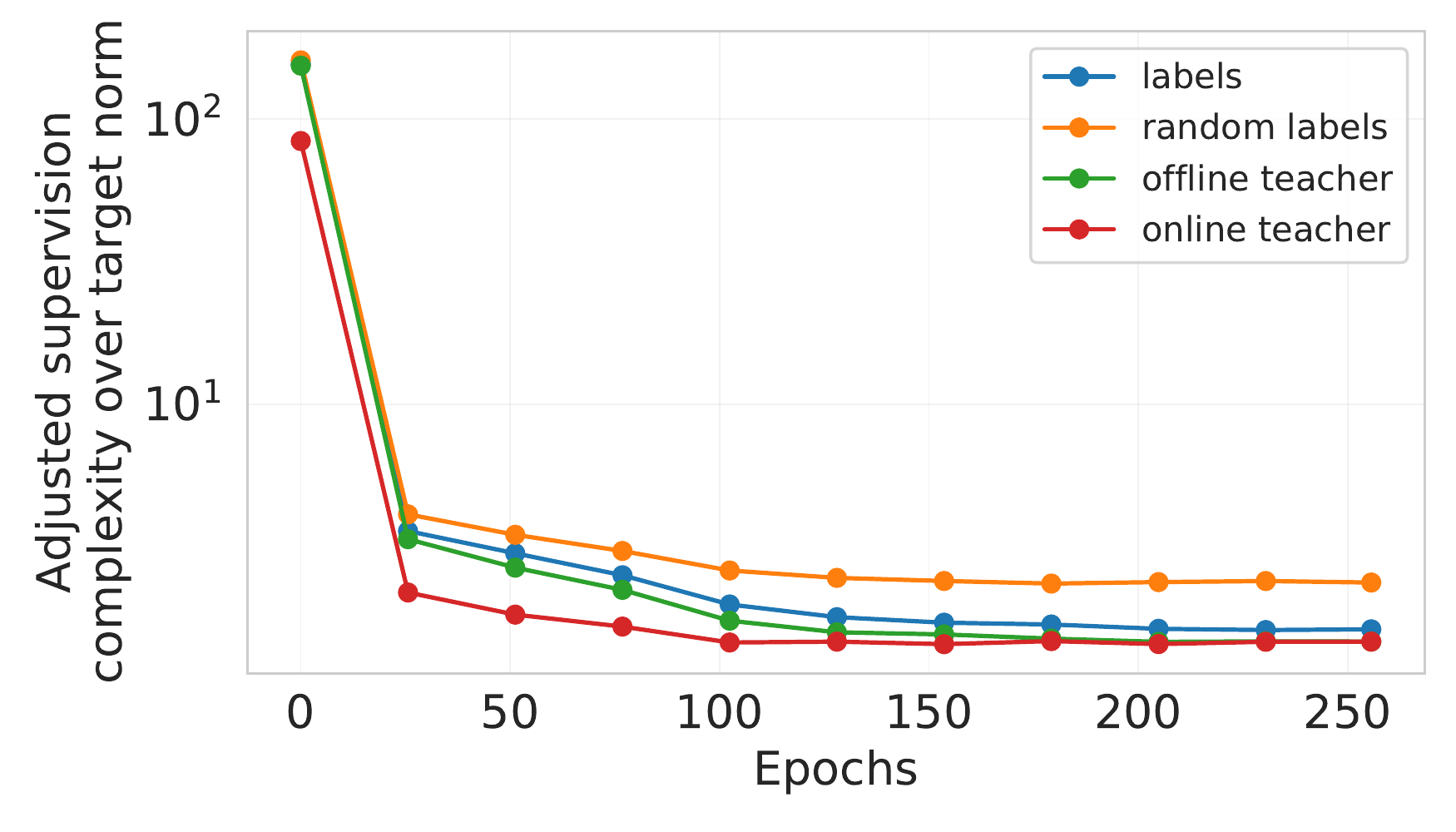}
        \caption{{\centering ResNet-20, normalized complexity}}
        \label{fig:normalzied-sup-res-complexity-resnet-test}
    \end{subfigure}
    \caption{Adjusted supervision complexity of various targets with respect to NTKs at different stages of training. The underlying dataset is binary CIFAR-100.
    Panels (b) and (d) plot adjusted supervision complexity normalized by norm of the targets.
    Note the y-axes of ResNet-20 plots are in logarithmic scale.
    }
    \label{fig:sup-res-complexity-comparison-test}
\end{figure}

\paragraph{Comparison of supervision complexities.}
We compare the adjusted supervision complexities of random labels, dataset labels, and predictions of an offline and online ResNet-56 teacher predictions with respect to various checkpoints of the LeNet-5x8 and ResNet-20 networks.
The results presented in \cref{fig:sup-res-complexity-lenet-test,fig:sup-res-complexity-resnet-test} indicate that 
the dataset labels and offline teacher predictions are as complex as random labels in the beginning.
After some initial decline, the complexity of these targets increases as the network starts to overfit.
Given the lower bound on the supervision complexity of random labels (see~\cref{sec:analysis}), this increase means that the NTK spectrum becomes less uniform.
This is confirmed in \cref{fig:ntk-condition-number}.
Unlike LeNet-5x8, for ResNet-20, random labels, dataset labels, and offline teacher predictions do not exhibit a U-shaped behavior.
In this case too, the shape of these curves is in agreement with the behavior of the condition number of the NTK (see \cref{fig:ntk-condition-number}).

\begin{figure}[!t]
    \centering
    \begin{subfigure}{0.45\textwidth}
        \includegraphics[width=\textwidth]{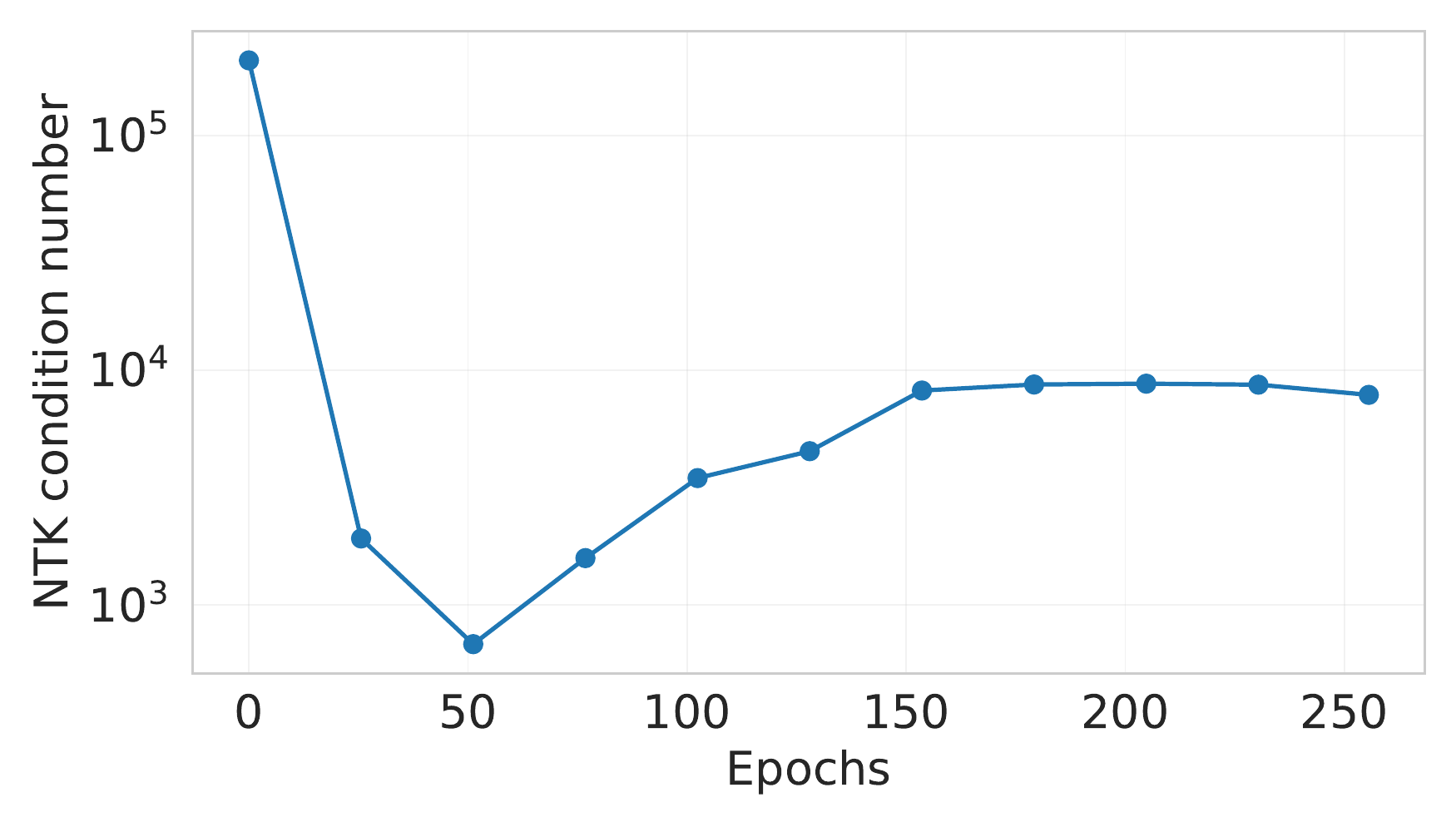}
        \caption{LeNet-5x8 student}
    \end{subfigure}
    \hspace{2em}
    \begin{subfigure}{0.45\textwidth}
        \includegraphics[width=\textwidth]{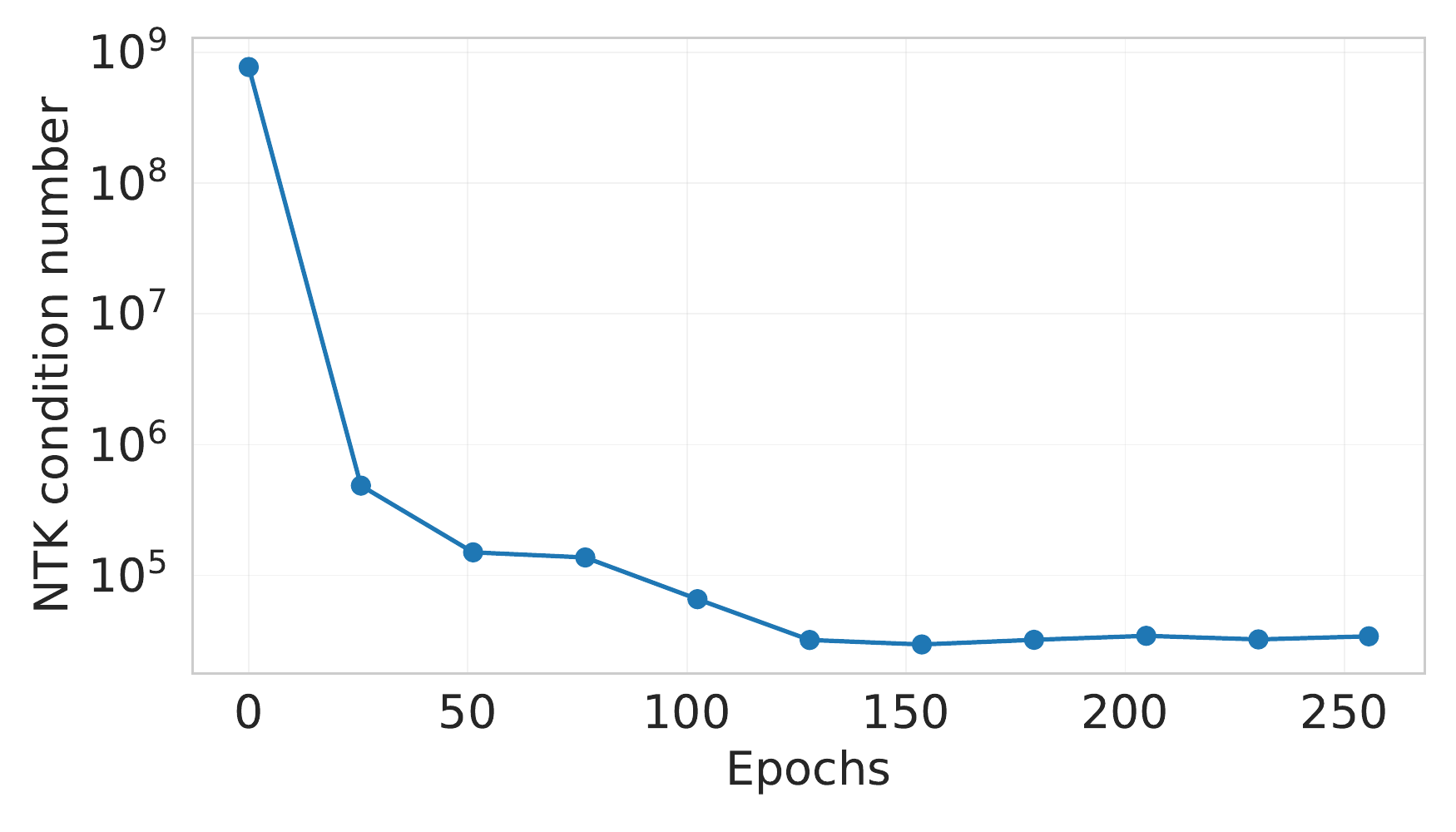}
        \caption{ResNet-20 student}
    \end{subfigure}
    \caption{Condition number of the NTK matrix of a LeNet5x8 (a) and ResNet-20 (b) students trained with MSE loss on binary CIFAR-100. The NTK matrices are computed on $2^{12}$ test examples.}
    \label{fig:ntk-condition-number}
\end{figure}

In contrast to these static targets, the complexity of the online teacher predictions smoothly increases, and is significantly smaller for most of the epochs.
To account for softness differences of the various targets, we consider plotting the adjusted supervision complexity normalized by the target norm $\nbr{\bs{Y'}}_2$.
As shown in \cref{fig:normalzied-sup-res-complexity-lenet-test,fig:normalzied-sup-res-complexity-resnet-test}, the normalized complexity of offline and online teacher predictions is smaller compared to the dataset labels, indicating a better alignment with top eigenvectors of the corresponding NTKs.
Importantly, we see that the predictions of an online teacher have significantly lower normalized complexity in the critical early stages of training.

\begin{figure}[!t]
    \centering
    \begin{subfigure}{0.45\textwidth}
        \includegraphics[width=\textwidth]{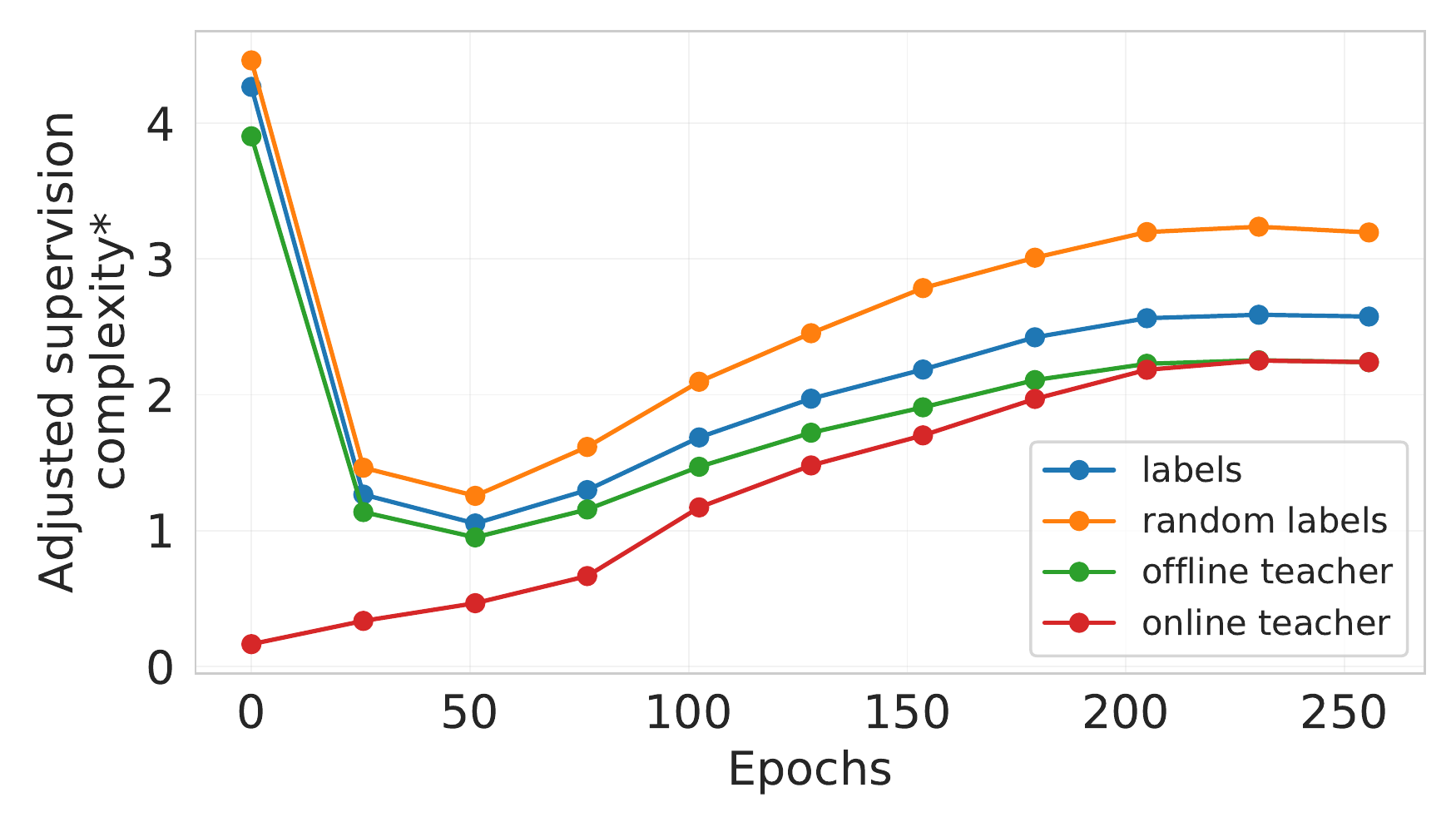}
    \end{subfigure}
    \hspace{2em}
    \begin{subfigure}{0.45\textwidth}
        \includegraphics[width=\textwidth]{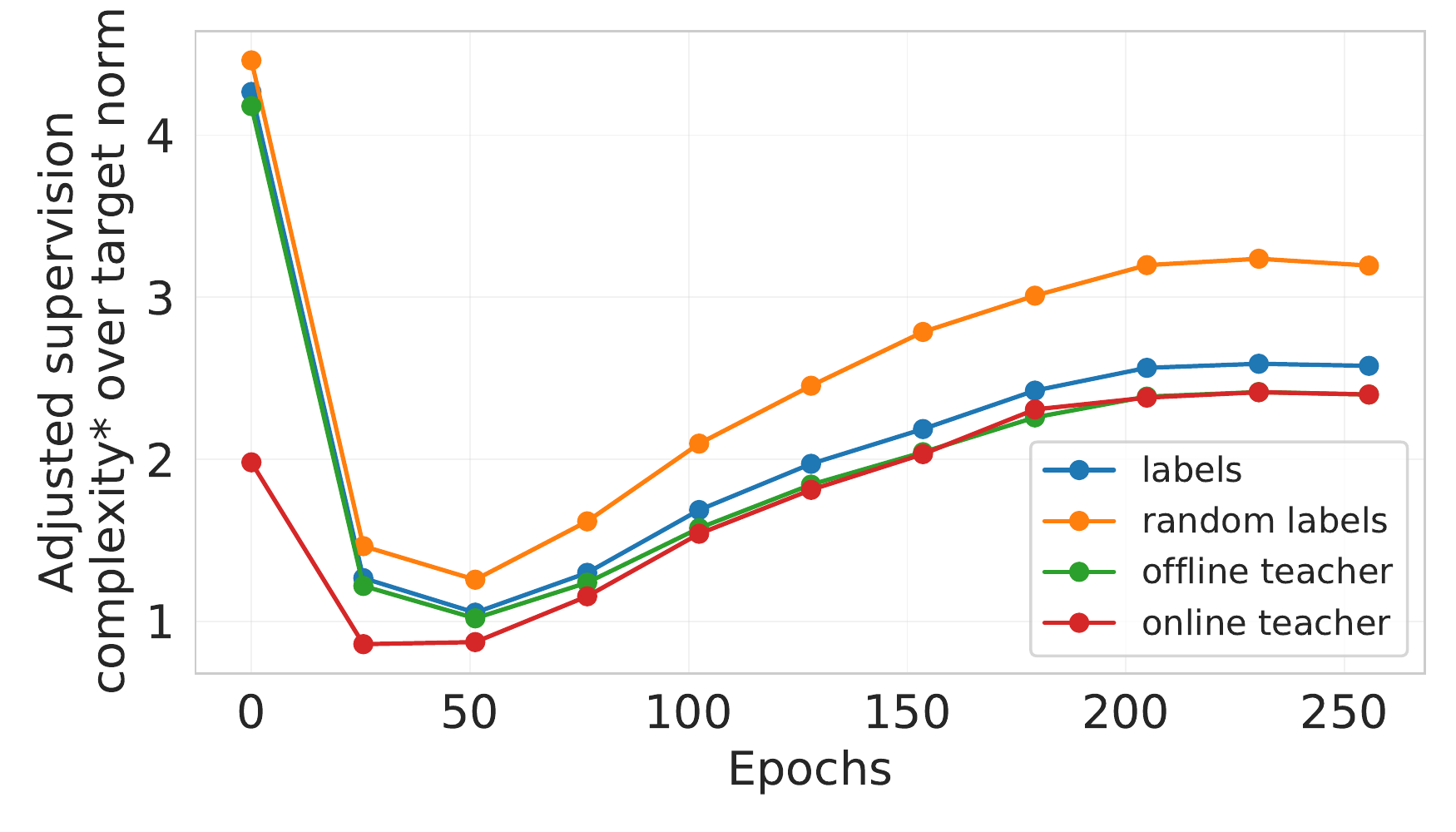}
    \end{subfigure}
    \caption{Adjusted supervision complexities* of various targets with respect to a LeNet-5x8 network at different stages of its training. The experimental setup of the left and right plots matches that of \cref{fig:sup-res-complexity-lenet-test} and \cref{fig:normalzied-sup-res-complexity-lenet-test} respectively.}
    \label{fig:sup-non-res-complexity-comparison-test}
\end{figure}

We compare the adjusted supervision complexities* of random labels, dataset labels, and predictions of an offline and online ResNet-56 teacher predictions with respect to various checkpoints of the LeNet-5x8 network.
The results presented in \cref{fig:sup-non-res-complexity-comparison-test} are remarkably similar to the results with adjusted supervision complexity (\cref{fig:sup-res-complexity-lenet-test} and \cref{fig:normalzied-sup-res-complexity-lenet-test}).
We therefore, focus only on adjusted supervision complexity of \cref{eq:adjusted-res-complexity} when comparing various targets.
The only other experiment where we compute adjusted supervision complexities* (i.e., without subtracting the current predictions from labels) is presented in \cref{fig:ntk-alignment-with-labels}, where the goal is to demonstrate that training labels become aligned with the training NTK matrix over the course of training.

\begin{figure}[t]
    \centering
    \begin{subfigure}{0.45\textwidth}
        \includegraphics[width=\textwidth]{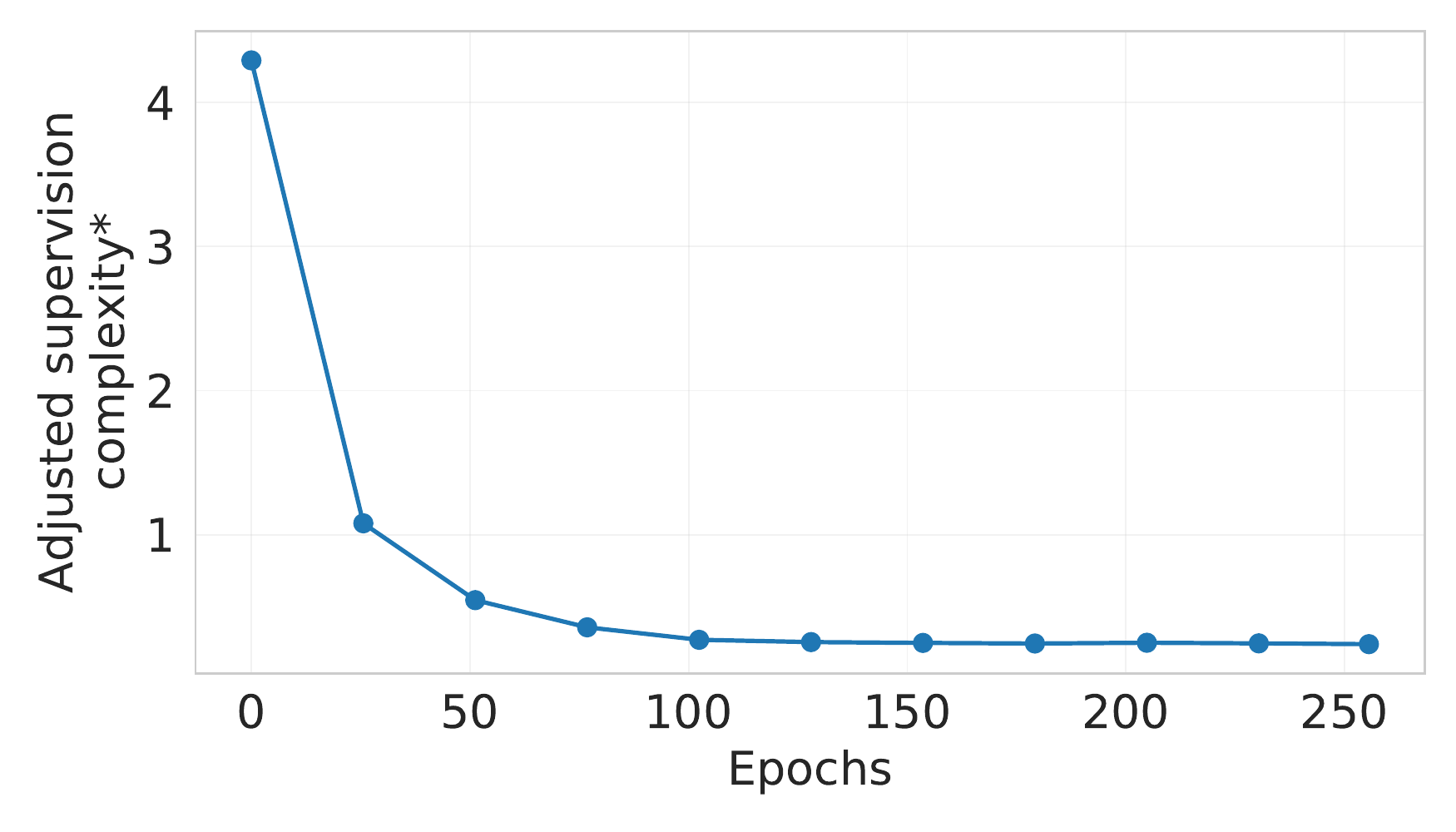}
        \caption{LeNet-5x8 student}
    \end{subfigure}
    \hspace{2em}
    \begin{subfigure}{0.45\textwidth}
        \includegraphics[width=\textwidth]{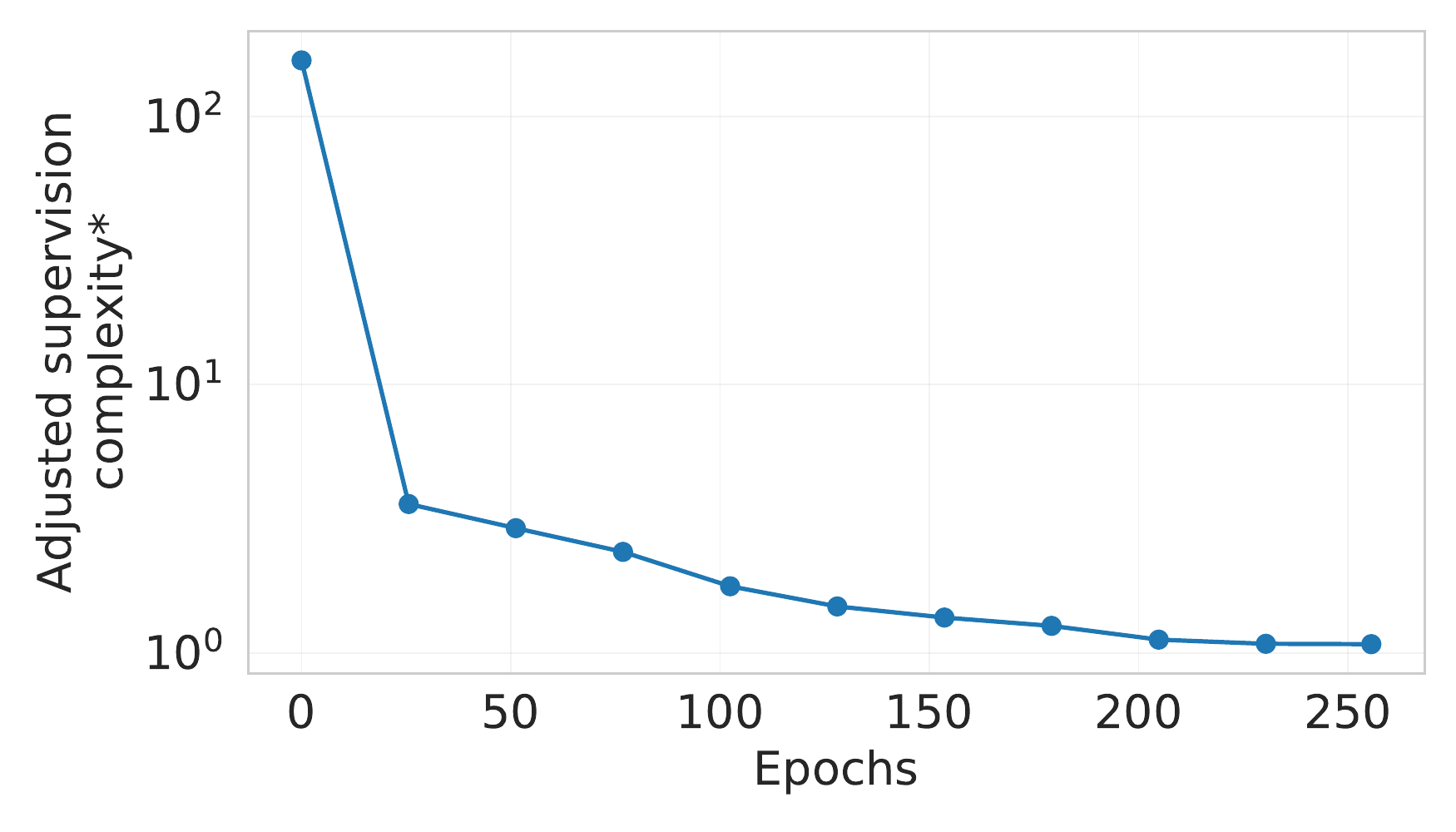}
        \caption{ResNet-20 student}
    \end{subfigure}
    \caption{Adjusted supervision complexity* of dataset labels measured on a subset of $2^{12}$ \emph{training} examples of binary CIFAR-100. Complexities are measured with respect to either a LeNet-5x8 (on the left) or ResNet-20 (on the right) models trained with MSE loss and without knowledge distillation. Note that the plot on the right is in logarithmic scale.}
    \label{fig:ntk-alignment-with-labels}
\end{figure}

\paragraph{On early stopped teachers.} \citet{cho2019efficacy} observe that sometimes offline KD works better with early stopped teachers.
Such teachers have worse accuracy and perhaps results in a smaller student margin, but they also have a significantly smaller supervision complexity (see \cref{fig:sup-res-complexity-comparison-test}), which provides a possible explanation for this phenomenon.

\paragraph{Effect of temperature scaling.}
As discussed earlier, higher temperature makes the teacher predictions softer, decreasing their norm.
This has a large effect on supervision complexity (\cref{fig:temperature-effect}).
Even when one controls for the norm of the predictions, the complexity still decreases (\cref{fig:temperature-effect}).

\begin{figure}[!t]
    \centering
    \begin{subfigure}{0.45\textwidth}
        \includegraphics[width=\textwidth]{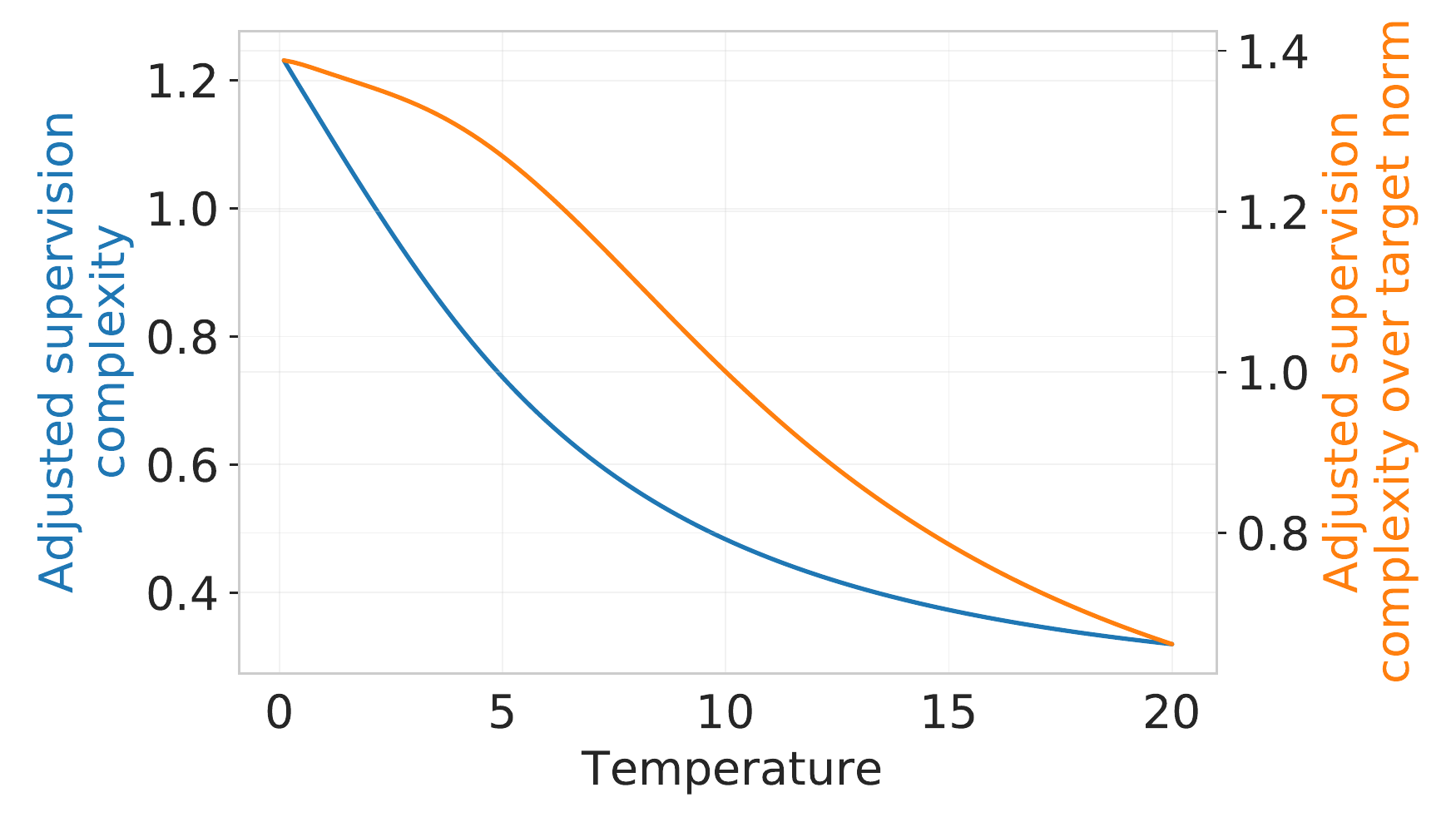}
        \caption{{\centering Temperature effect}}
        \label{fig:temperature-effect}
    \end{subfigure}
    \hspace{2em}
    \begin{subfigure}{0.45\textwidth}
        \includegraphics[width=\textwidth]{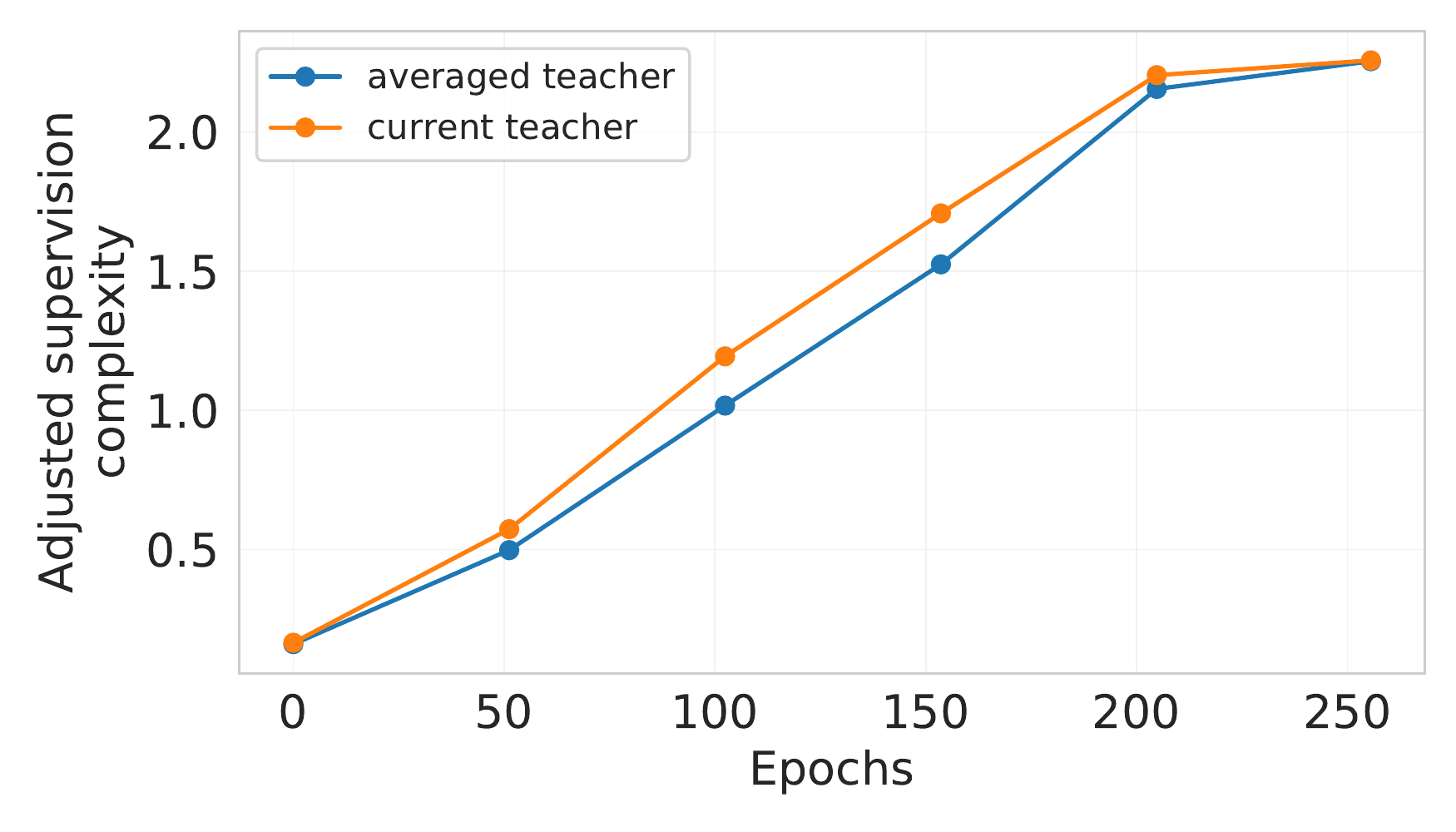}
        \caption{{\centering Teacher averaging}}
        \label{fig:teacher-averaging}
    \end{subfigure}
    \caption{Adjusted supervision complexity for various targets. \emph{One the left:} The effect of temperature on the supervision complexity of an offline teacher for a LeNet-5x8 after training for 25 epochs. \emph{On the right:} The effect of averaging teacher predictions.}
\end{figure}

\paragraph{Average teacher complexity.} \citet{ren2022better} observe that teacher predictions fluctuate over time, and showe that using exponentially averaged teachers improves knowledge distillation. \cref{fig:teacher-averaging} demonstrates that the supervision complexity of an online teacher predictions is always slightly larger than that of the average of predictions of teachers of the last 10 preceding epochs.

\paragraph{Teaching students with weak inductive biases.}
As we saw earlier, a fully trained teacher can have predictions as complex as random labels for a weak student at initialization.
This low alignment of student NTK and teacher predictions can result in memorization.
In contrast, an early stopped teacher captures simple patterns and has a better alignment with the student NTK, allowing the student to learn these patterns in a generalizable fashion.
This feature learning improves the student NTK and allows learning more complex patterns in future iterations.
We hypothesize that this is the mechanism that allows online distillation to outperform offline distillation in some cases.

\paragraph{NTK similarity.}
Remarkably, we observe that across all of our experiments, the final test accuracy of the student is strongly correlated with the similarity of final teacher and student NTKs (see \cref{fig:ntk-matching-comparision,fig:ntk-matching-comparision-1-appendix,fig:ntk-matching-comparision-2-appendix}).
This cannot be explained by better matching the teacher predictions.
In fact, we see that the final fidelity 
(the rate of classification agreement of a teacher-student pair) measured on training set has no clear relationship with test accuracy.
Furthermore, we see that online KD results in better NTK transfer without an explicit regularization loss enforcing such transfer.

\begin{figure}[!t]
    \centering
    \begin{subfigure}{0.3\textwidth}
        \includegraphics[width=\textwidth]{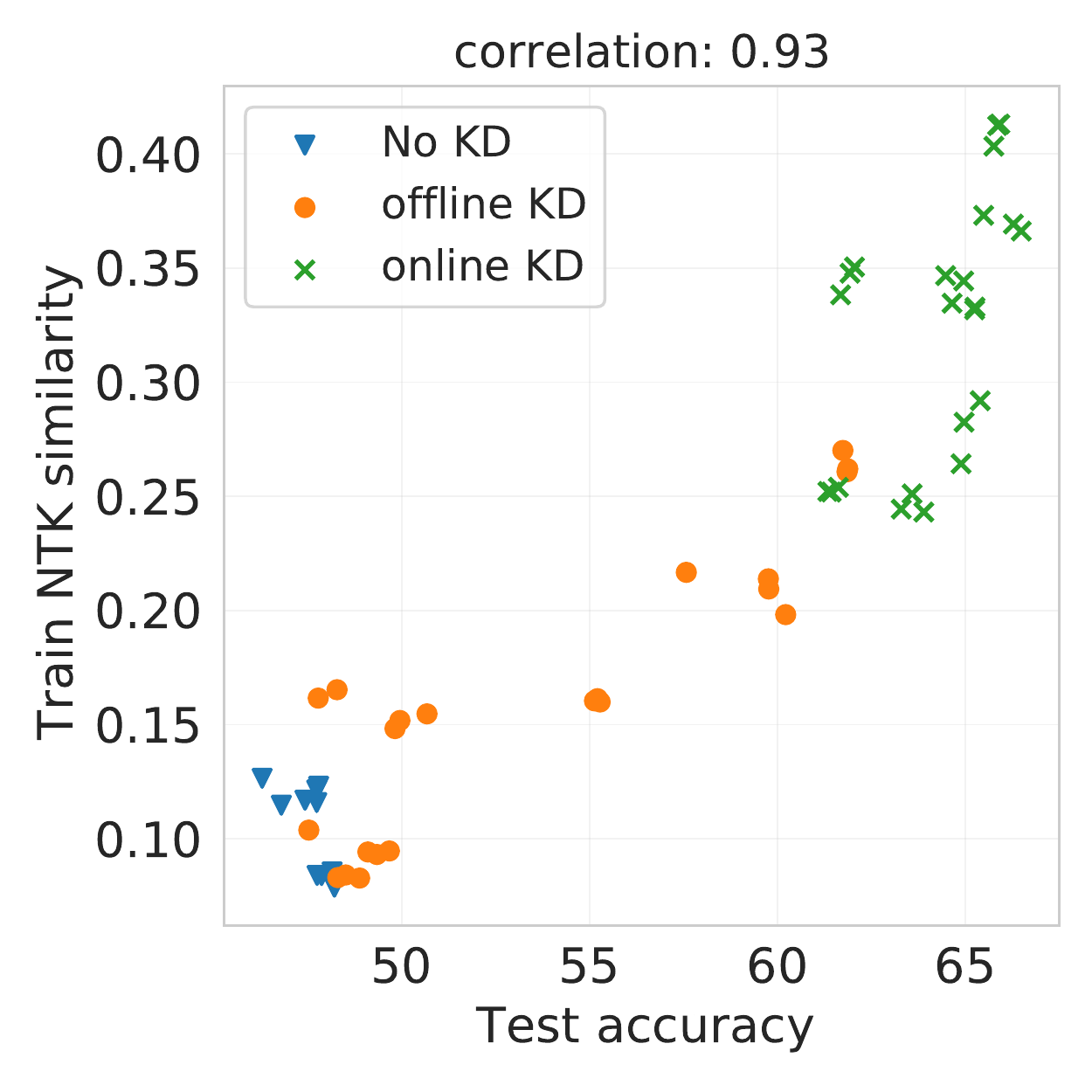}
        \caption{{\centering LeNet-5x8 student}}
    \end{subfigure}
    \hspace{1em}
    \begin{subfigure}{0.3\textwidth}
        \includegraphics[width=\textwidth]{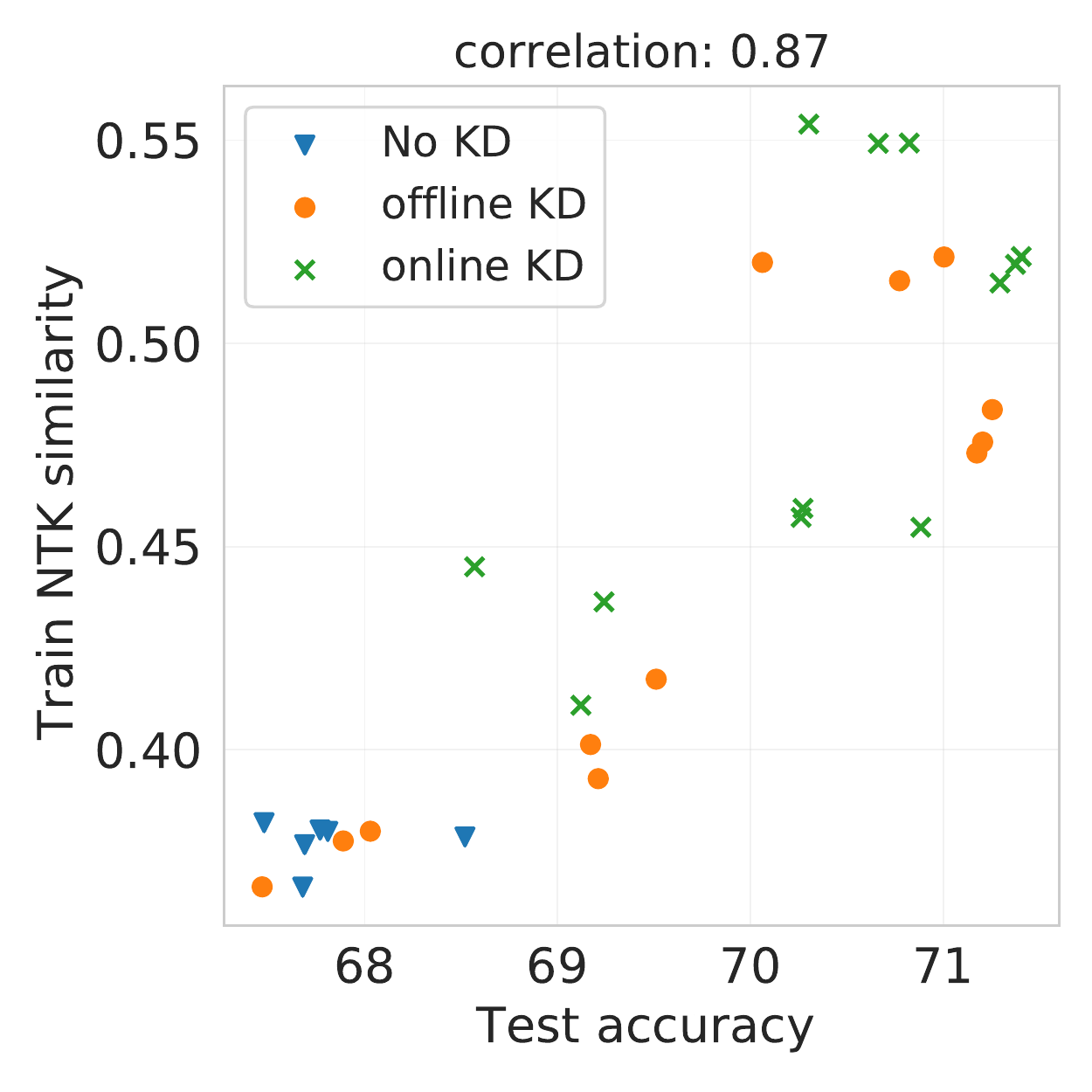}
        \caption{{\centering ResNet-20 student}}
    \end{subfigure}
    \hspace{1em}
    \begin{subfigure}{0.3\textwidth}
        \includegraphics[width=\textwidth]{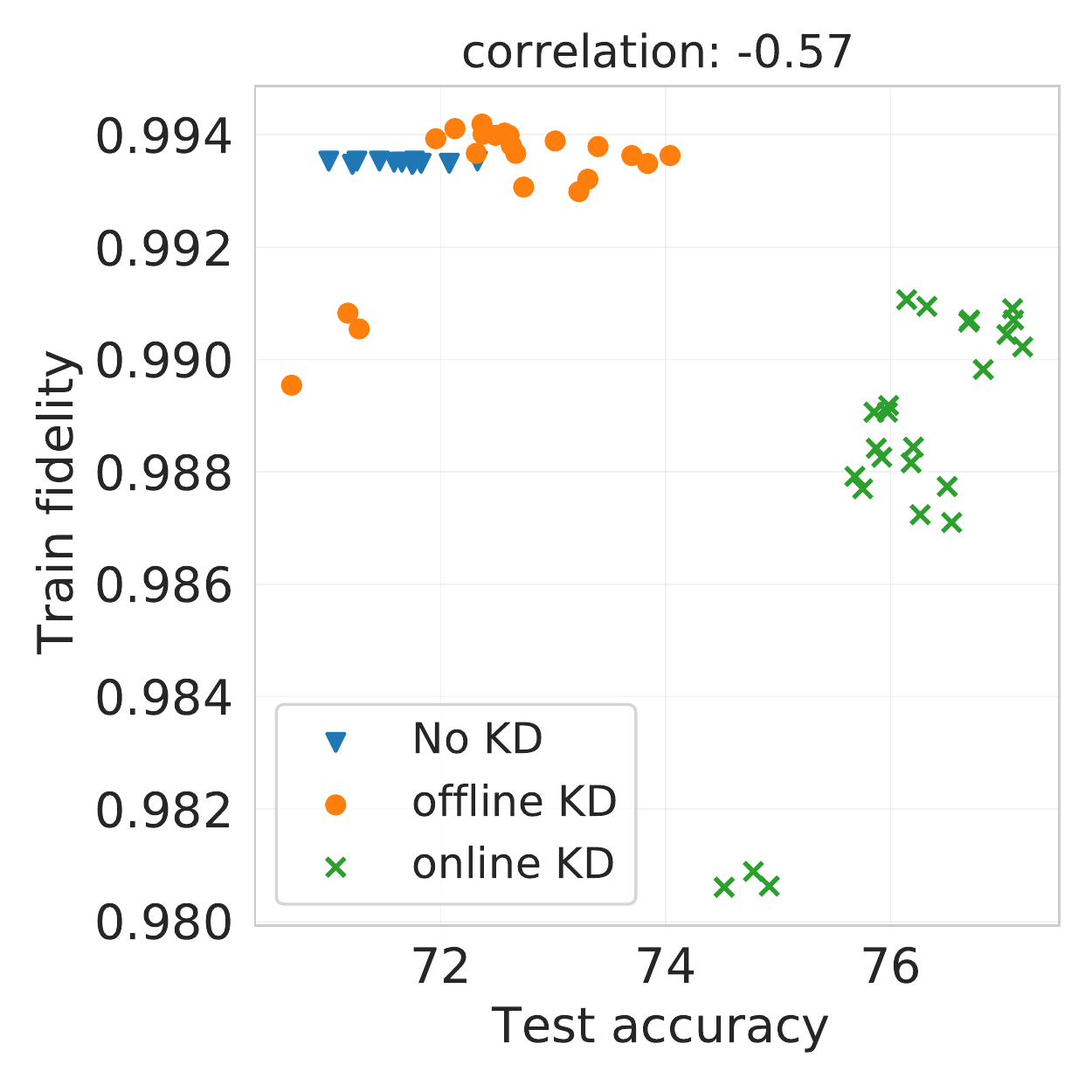}
        \caption{{\centering LeNet-5x8 student}}
    \end{subfigure}
    \caption{Relationship between test accuracy, train NTK similarity, and train fidelity for CIFAR-100 students training with either ResNet-56 teacher (panels (a) and (c)) or ResNet-110 (panel (b)).}
    \label{fig:ntk-matching-comparision}
\end{figure}

\paragraph{The effect of frequency of teacher checkpoints.}
As mentioned earlier, we used one teacher checkpoint per epoch so far. While this served our goal of establishing efficacy of online distillation, this choice is prohibitive for large teacher networks.
To understand the effect of the frequency of teacher checkpoints, we conduct an experiment on CIFAR-100 with ResNet-56 and LeNet-5x8 student with varying frequency of teacher checkpoints. In particular, we consider checkpointing the teacher once in every $\mathset{1, 2, 4, 8, 16, 32, 64, 128}$ epochs.
The results presented in \cref{fig:cifar100-ckpt-frequency} show that reducing the teacher checkpointing frequency to once in 16 epochs results in only a minor performance drop for online distillation with $\tau=4$.

\begin{figure}
    \centering
    \small
    \begin{subfigure}[h]{0.49\textwidth}
    \scalebox{1}{
    \begin{tabular}{@{}lcc@{}}
        \toprule
        {\bf Teacher update period} & \multicolumn{2}{c}{\bf Online KD} \\
        {\bf in epochs} & $\tau = 1$ & $\tau = 4$ \\
        \midrule
        1 (the default value) & 61.9 $\pm$ 0.2 & 66.1 $\pm$ 0.4 \\
        2   & 61.5 $\pm$ 0.4 & 66.0 $\pm$ 0.3  \\
        4   & 61.4 $\pm$ 0.2 & 65.6 $\pm$ 0.2  \\
        8   & 60.0 $\pm$ 0.3 & 65.4 $\pm$ 0.0  \\
        16  & 59.3 $\pm$ 0.6 & 65.4 $\pm$ 0.0  \\
        32  & 56.9 $\pm$ 0.0 & 64.1 $\pm$ 0.4  \\
        64  & 55.5 $\pm$ 0.4 & 62.8 $\pm$ 0.7  \\
        128 & 51.4 $\pm$ 0.5 & 61.3 $\pm$ 0.1  \\
        \bottomrule
    \end{tabular}%
    }
    \end{subfigure}
    \begin{subfigure}[h]{0.49\textwidth}
        \includegraphics[width=0.9\textwidth]{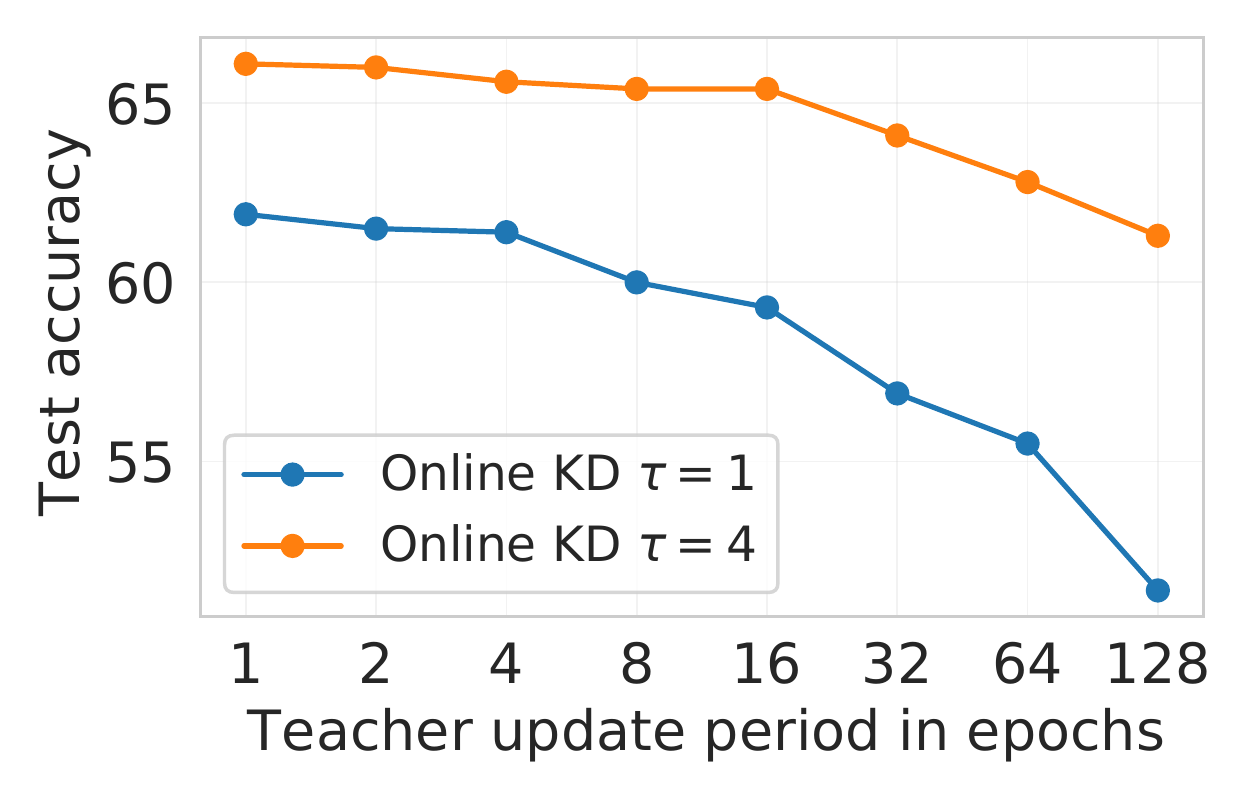}
    \end{subfigure}
    \caption{Online KD results for a LeNet-5x8 student on CIFAR-100 with varying frequency of a ResNet-56 teacher checkpoints.}
    \label{fig:cifar100-ckpt-frequency}
\end{figure}

\paragraph{On label supervision in KD.}
So far in all distillation methods dataset labels were not used as an additional source of supervision for students.
However, in practice it is common to train a student with a convex combination of knowledge distillation and standard losses: $(1 - \alpha) \LL_\mathrm{ce} + \alpha \LL_\mathrm{kd-ce}$.
To verify that the choice of $\alpha=1$ does not produce unique conclusions regarding efficacy of online distillation, we do experiments on CIFAR-100 with varying values of $\alpha$.
The results presented in \cref{tbl:cifar100-kd-alpha} confirm our main conclusions on online distillation.
Furthermore, we observe that picking $\alpha=1$ does not result in significant degradation of student performance.

\begin{table}[!t]
    \centering
    \small
    \caption{Knowledge distillation results on CIFAR-100 with varying loss mixture coefficient $\alpha$.}
    \label{tbl:cifar100-kd-alpha}
    \begin{tabular}{@{}llcccccc@{}}
    \toprule
    {\bf Setting} & {\bf $\alpha$} & {\bf No KD} & \multicolumn{2}{c}{\bf Offline KD} & \multicolumn{2}{c}{\bf Online KD} & {\bf Teacher} \\
     & &  & $\tau = 1$ & $\tau = 4$ & $\tau = 1$ & $\tau = 4$ & \\
    \midrule
    \multirow{5}{*}{{\centering \begin{tabular}{c}ResNet-56 $\rightarrow$\\LeNet-5x8\end{tabular}}} & 0.2 & \multirow{5}{*}{47.3 $\pm$ 0.6} & 47.6 $\pm$ 0.7 & 57.6 $\pm$ 0.2 & 54.3 $\pm$ 0.7 & 59.0 $\pm$ 0.6 & \multirow{5}{*}{72.0}\\
    & 0.4 & & 48.9 $\pm$ 0.3 & 58.9 $\pm$ 0.4 & 56.7 $\pm$ 0.5 & 62.5 $\pm$ 0.2 & \\
    & 0.6 & & 49.4 $\pm$ 0.5 & 59.7 $\pm$ 0.0 & 61.1 $\pm$ 0.0 & 65.3 $\pm$ 0.2 & \\
    & 0.8 & & 49.8 $\pm$ 0.1 & 60.1 $\pm$ 0.1 & 62.0 $\pm$ 0.1 & 65.9 $\pm$ 0.2 & \\
    & 1.0 & & 50.1 $\pm$ 0.4 & 59.9 $\pm$ 0.2 & 61.9 $\pm$ 0.2 & 66.1 $\pm$ 0.4 & \\
    \midrule
    \multirow{5}{*}{{\centering \begin{tabular}{c}ResNet-56 $\rightarrow$\\ResNet-20\end{tabular}}} & 0.2 & \multirow{5}{*}{67.7 $\pm$ 0.5} & 67.9 $\pm$ 0.3 & 70.3 $\pm$ 0.3 & 68.2 $\pm$ 0.3 & 70.3 $\pm$ 0.1 & \multirow{5}{*}{72.0}\\
    & 0.4 & & 67.9 $\pm$ 0.1 & 71.0 $\pm$ 0.2 & 68.7 $\pm$ 0.2 & 71.4 $\pm$ 0.2 & \\
    & 0.6 & & 68.1 $\pm$ 0.3 & 71.3 $\pm$ 0.1 & 69.6 $\pm$ 0.4 & 71.5 $\pm$ 0.2 & \\
    & 0.8 & & 68.3 $\pm$ 0.2 & 71.4 $\pm$ 0.4 & 69.8 $\pm$ 0.3 & 71.1 $\pm$ 0.3 & \\
    & 1.0 & & 68.2 $\pm$ 0.3 & 71.6 $\pm$ 0.2 & 69.6 $\pm$ 0.3 & 71.4 $\pm$ 0.3 & \\
    \bottomrule
    \end{tabular}%
\end{table}

\section{Related work}
\label{sec:sup-complexity-related}
The key contributions of this work are the demonstration of the role of supervision complexity in student generalization, and the establishment of online knowledge distillation as a theoretically grounded and effective method.
Both supervision complexity and online distillation have a number of relevant precedents in the literature that are worth comment.

\paragraph{Transferring knowledge beyond logits.}
In the seminal works of~\citet{Bucilla:2006} and \citet{hinton2015distilling} transferred ``knowledge'' is in the form of output probabilities.
Later works suggest other notions of ``knowledge`` and other ways of transferring knowledge~\citep{gou2021knowledge}.
These include 
activations of intermediate layers~\citep{Romero15fitnets:hints}, 
attention maps~\citep{zagoruyko2017paying}, 
classifier head parameters~\citep{chen2022knowledge}, and 
various notions of example similarity~\citep{passalis2018learning, park2019relational, tung2019similarity, Tian2020Contrastive, he2021feature}.
Transferring teacher NTK matrix belongs to this latter category of methods.
\citet{zhang2022learning} propose to transfer a low-rank approximation of a feature map corresponding the teacher NTK.

\paragraph{Non-static teachers.}
Some works on KD consider non-static teachers.
In order to bridge teacher-student capacity gap,~\citet{mirzadeh2020improved} propose to perform a few rounds of distillation with teachers of increasing capacity.
In deep mutual learning~\citep{zhang2018deep, chen2020online}, codistillation~\citep{anil2018large}, and collaborative learning~\citep{guo2020online},
multiple students are trained simultaneously, distilling from each other or from an ensemble. 
In~\citet{zhou2018rocket} and \citet{shi2021prokt}, the teacher and the student are trained together.
In the former they have a common architecture trunk, while in the latter the teacher is penalized to keep its predictions close to the student's predictions.
\citet{jin2019rco} study \textit{route constrained optimization} which is closest to the online distillation in \cref{sec:online-kd}.
They employ a few teacher checkpoints to perform a multi-round knowledge distillation.
\citet{rezagholizadeh-etal-2022-pro} employ a similar procedure but with an annealed temperature that decreases linearly with training time, followed by a phase of training with dataset labels only.
The idea of distilling from checkpoints also appears in \citet{yang2019snapshot}, where a network is trained with a cosine learning rate schedule, simultaneously distilling from the checkpoint of the previous learning rate cycle.
We complement this line of work by highlighting the role of supervision complexity and by demonstrating that online distillation can be very powerful for students with weak inductive biases.

\paragraph{Fundamental understanding of distillation.}
The effects of temperature, teacher-student capacity gap, optimization time, data augmentations, and other training details is non-trivial~\citep{cho2019efficacy,beyer2022knowledge,stanton2021does}.
It has been hypothesized and shown to some extent that teacher soft predictions capture class similarities, which is beneficial for the student~\citep{hinton2015distilling, furlanello2018born, tang2020understanding}.
\citet{yuan2020revisiting} demonstrate that this softness of teacher predictions also has a regularization effect, similar to label smoothing.
\citet{menon2021statistical} argue that teacher predictions are sometimes closer to the Bayes classifier than the hard labels of the dataset, reducing the variance of the training objective.
The vanilla knowledge distillation loss also introduces some optimization biases.
\citet{mobahi2020self} prove that for kernel methods with RKHS norm regularization, self-distillation increases regularization strength, resulting in smaller norm RKHS norm solutions.

\citet{phuong19understand} prove that in a self-distillation setting, deep linear networks trained with gradient flow converge to the projection of teacher parameters into the data span, effectively recovering teacher parameters when the number of training points is large than the number of parameters.
They derive a bound of the transfer risk that depends on the distribution of the acute angle between teacher parameters and data points.
This is in spirit related to supervision complexity as it measures an “alignment ” between the distillation objective and data.
\citet{ji2020knowledge} extend this results to linearized neural networks, showing that the quantity $\Delta_z^\top \bs{K}^{-1} \Delta_z$, where $\Delta_z$ is the logit change during training, plays a key role in estimating the bound.
The resulting bound is qualitatively different compared to ours, and the $\Delta_z^\top \bs{K}^{-1} \Delta_z$ becomes ill-defined for hard labels.

\paragraph{Supervision complexity.}
The key quantity in our work is supervision complexity $\bs{Y}^\top K^{-1}\bs{Y}$.
\citet{cristianini2001kernel} introduced a related quantity $\bs{Y}^\top K \bs{Y}$ called \emph{kernel-target alignment} and derived a generalization bound with it for expected Parzen window classifiers.
As an easy-to-compute proxy to supervision complexity,~\citet{deshpande2021linearized} use kernel-target alignment for model selection in transfer learning.
\citet{ortiz2021can} demonstrate that when NTK-target alignment is high, learning is faster and generalizes better.
\citet{arora2019fine} prove a generalization bound for overparameterized two-layer neural networks with NTK parameterization trained with gradient flow.
Their bound is approximately $\sqrt{\bs{Y}^\top (K^\infty)^{-1}\bs{Y}} / \sqrt{n}$, where $K^\infty$ is the \emph{expected NTK matrix} at a random initialization.
Our bound of \cref{thm:margin-bound-label-complexity} can be seen as a generalization of this result for all kernel methods, including linearized neural networks of any depth and sufficient width, with the only difference of using the empirical NTK matrix.
\citet{belkin2018understand} warns that bounds based on RKHS complexity of the learned function can fail to explain the good generalization capabilities of kernel methods in presence of label noise.

\section{Conclusion and future work}
We presented a treatment of knowledge distillation through the lens of supervision complexity.
We formalized how the student generalization is controlled by three key quantities:
the teacher's accuracy,
the student's margin with respect to the teacher labels,
and the supervision complexity of the teacher labels under the student's kernel.
This motivated an online distillation procedure that gradually increases the complexity of the targets that the student fits.
In the broader context, our results highlight the important role of data in generalization.

There are several potential directions for future work.
\emph{Adaptive temperature} scaling for online distillation, where the teacher predictions are smoothened so as to ensure low target complexity, is one such direction.
Another avenue is to explore alternative ways to smoothen teacher prediction besides temperature scaling; e.g., can one perform \emph{sample-dependent} scaling?
There is large potential for improving online KD by making more informed choices for the frequency and positions of teacher checkpoints, and controlling how much the student is trained in between teacher updates.
Finally, while we demonstrated that online distillation results in a better alignment with the teacher's NTK matrix, understanding \emph{why} this happens is an open and interesting problem.

%
%
%
%
\clearpage
\begin{singlespace}
\bibliographystyle{plainnat}
\addcontentsline{toc}{chapter}{Bibliography}
\bibliography{main}

\begin{thebibliography}{223}
\providecommand{\natexlab}[1]{#1}
\providecommand{\url}[1]{\texttt{#1}}
\expandafter\ifx\csname urlstyle\endcsname\relax
  \providecommand{\doi}[1]{doi: #1}\else
  \providecommand{\doi}{doi: \begingroup \urlstyle{rm}\Url}\fi

\bibitem[Achille and Soatto(2018)]{achille2018emergence}
Alessandro Achille and Stefano Soatto.
\newblock Emergence of invariance and disentanglement in deep representations.
\newblock \emph{The Journal of Machine Learning Research}, 19\penalty0
  (1):\penalty0 1947--1980, 2018.

\bibitem[Achille et~al.(2019)Achille, Paolini, and
  Soatto]{achille2019information}
Alessandro Achille, Giovanni Paolini, and Stefano Soatto.
\newblock Where is the information in a deep neural network?
\newblock \emph{arXiv preprint arXiv:1905.12213}, 2019.

\bibitem[Achlioptas(2003)]{achlioptas2003database}
Dimitris Achlioptas.
\newblock Database-friendly random projections: Johnson-lindenstrauss with
  binary coins.
\newblock \emph{Journal of computer and System Sciences}, 66\penalty0
  (4):\penalty0 671--687, 2003.

\bibitem[Alabdulmohsin(2020)]{alabdulmohsin2020towards}
Ibrahim Alabdulmohsin.
\newblock Towards a unified theory of learning and information.
\newblock \emph{Entropy}, 22\penalty0 (4):\penalty0 438, 2020.

\bibitem[Alabdulmohsin(2015)]{alabdulmohsin2015algorithmic}
Ibrahim~M Alabdulmohsin.
\newblock Algorithmic stability and uniform generalization.
\newblock \emph{Advances in Neural Information Processing Systems},
  28:\penalty0 19--27, 2015.

\bibitem[Allen-Zhu and Li(2023)]{allen-zhu2023understanding}
Zeyuan Allen-Zhu and Yuanzhi Li.
\newblock Understanding ensemble, knowledge distillation and self-distillation
  in deep learning.
\newblock In \emph{International Conference on Learning Representations}, 2023.

\bibitem[Alquier(2021)]{alquier2021user}
Pierre Alquier.
\newblock User-friendly introduction to pac-bayes bounds.
\newblock \emph{arXiv preprint arXiv:2110.11216}, 2021.

\bibitem[Aminian et~al.(2021{\natexlab{a}})Aminian, Toni, and
  Rodrigues]{aminian2021information}
Gholamali Aminian, Laura Toni, and Miguel~RD Rodrigues.
\newblock Information-theoretic bounds on the moments of the generalization
  error of learning algorithms.
\newblock In \emph{2021 IEEE International Symposium on Information Theory
  (ISIT)}, pages 682--687. IEEE, 2021{\natexlab{a}}.

\bibitem[Aminian et~al.(2021{\natexlab{b}})Aminian, Toni, and
  Rodrigues]{aminian2021jensen}
Gholamali Aminian, Laura Toni, and Miguel~RD Rodrigues.
\newblock Jensen-shannon information based characterization of the
  generalization error of learning algorithms.
\newblock In \emph{2020 IEEE Information Theory Workshop (ITW)}, pages 1--5.
  IEEE, 2021{\natexlab{b}}.

\bibitem[Anagnostidis et~al.(2023)Anagnostidis, Bachmann, Noci, and
  Hofmann]{anagnostidis2023the}
Sotiris Anagnostidis, Gregor Bachmann, Lorenzo Noci, and Thomas Hofmann.
\newblock The curious case of benign memorization.
\newblock In \emph{The Eleventh International Conference on Learning
  Representations}, 2023.

\bibitem[Anil et~al.(2018)Anil, Pereyra, Passos, Ormandi, Dahl, and
  Hinton]{anil2018large}
Rohan Anil, Gabriel Pereyra, Alexandre Passos, Robert Ormandi, George~E. Dahl,
  and Geoffrey~E. Hinton.
\newblock Large scale distributed neural network training through online
  distillation.
\newblock In \emph{International Conference on Learning Representations}, 2018.

\bibitem[Arazo et~al.(2019)Arazo, Ortego, Albert, O'Connor, and
  McGuinness]{arazo2019unsupervised}
Eric Arazo, Diego Ortego, Paul Albert, Noel~E O'Connor, and Kevin McGuinness.
\newblock Unsupervised label noise modeling and loss correction.
\newblock \emph{arXiv preprint arXiv:1904.11238}, 2019.

\bibitem[Arora et~al.(2019)Arora, Du, Hu, Li, and Wang]{arora2019fine}
Sanjeev Arora, Simon Du, Wei Hu, Zhiyuan Li, and Ruosong Wang.
\newblock Fine-grained analysis of optimization and generalization for
  overparameterized two-layer neural networks.
\newblock In \emph{International Conference on Machine Learning}, pages
  322--332. PMLR, 2019.

\bibitem[Arpit et~al.(2017)Arpit, Jastrzebski, Ballas, Krueger, Bengio, Kanwal,
  Maharaj, Fischer, Courville, Bengio, et~al.]{arpit2017closer}
Devansh Arpit, Stanislaw Jastrzebski, Nicolas Ballas, David Krueger, Emmanuel
  Bengio, Maxinder~S Kanwal, Tegan Maharaj, Asja Fischer, Aaron Courville,
  Yoshua Bengio, et~al.
\newblock A closer look at memorization in deep networks.
\newblock In \emph{Proceedings of the 34th International Conference on Machine
  Learning-Volume 70}, pages 233--242. JMLR. org, 2017.

\bibitem[Baratin et~al.(2021)Baratin, George, Laurent, Hjelm, Lajoie, Vincent,
  and Lacoste-Julien]{baratin2021implicit}
Aristide Baratin, Thomas George, C{\'e}sar Laurent, R~Devon Hjelm, Guillaume
  Lajoie, Pascal Vincent, and Simon Lacoste-Julien.
\newblock Implicit regularization via neural feature alignment.
\newblock In \emph{International Conference on Artificial Intelligence and
  Statistics}, pages 2269--2277. PMLR, 2021.

\bibitem[Bartels and Stewart(1972)]{bartles1972}
R.~H. Bartels and G.~W. Stewart.
\newblock Solution of the matrix equation ax + xb = c [f4].
\newblock \emph{Commun. ACM}, 15\penalty0 (9):\penalty0 820–826, September
  1972.
\newblock ISSN 0001-0782.
\newblock \doi{10.1145/361573.361582}.

\bibitem[Bartlett and Mendelson(2002)]{bartlett2002rademacher}
Peter~L Bartlett and Shahar Mendelson.
\newblock Rademacher and gaussian complexities: Risk bounds and structural
  results.
\newblock \emph{Journal of Machine Learning Research}, 3\penalty0
  (Nov):\penalty0 463--482, 2002.

\bibitem[Bartlett et~al.(2017)Bartlett, Foster, and
  Telgarsky]{bartlett2017spectrally}
Peter~L Bartlett, Dylan~J Foster, and Matus~J Telgarsky.
\newblock Spectrally-normalized margin bounds for neural networks.
\newblock \emph{Advances in neural information processing systems}, 30, 2017.

\bibitem[Bartlett et~al.(2020)Bartlett, Long, Lugosi, and
  Tsigler]{bartlett2020benign}
Peter~L Bartlett, Philip~M Long, G{\'a}bor Lugosi, and Alexander Tsigler.
\newblock Benign overfitting in linear regression.
\newblock \emph{Proceedings of the National Academy of Sciences}, 117\penalty0
  (48):\penalty0 30063--30070, 2020.

\bibitem[Bartlett et~al.(2021)Bartlett, Montanari, and
  Rakhlin]{bartlett2021deep}
Peter~L Bartlett, Andrea Montanari, and Alexander Rakhlin.
\newblock Deep learning: a statistical viewpoint.
\newblock \emph{Acta numerica}, 30:\penalty0 87--201, 2021.

\bibitem[Bartlett(1998)]{661502}
P.L. Bartlett.
\newblock The sample complexity of pattern classification with neural networks:
  the size of the weights is more important than the size of the network.
\newblock \emph{IEEE Transactions on Information Theory}, 44\penalty0
  (2):\penalty0 525--536, 1998.
\newblock \doi{10.1109/18.661502}.

\bibitem[Bassily et~al.(2016)Bassily, Nissim, Smith, Steinke, Stemmer, and
  Ullman]{bassily2016algorithmic}
Raef Bassily, Kobbi Nissim, Adam Smith, Thomas Steinke, Uri Stemmer, and
  Jonathan Ullman.
\newblock Algorithmic stability for adaptive data analysis.
\newblock In \emph{Proceedings of the forty-eighth annual ACM symposium on
  Theory of Computing}, pages 1046--1059, 2016.

\bibitem[Bassily et~al.(2018)Bassily, Moran, Nachum, Shafer, and
  Yehudayoff]{bassily2018learners}
Raef Bassily, Shay Moran, Ido Nachum, Jonathan Shafer, and Amir Yehudayoff.
\newblock Learners that use little information.
\newblock In \emph{Algorithmic Learning Theory}, pages 25--55. PMLR, 2018.

\bibitem[Basu et~al.(2021)Basu, Pope, and Feizi]{basu2021influence}
Samyadeep Basu, Phil Pope, and Soheil Feizi.
\newblock Influence functions in deep learning are fragile.
\newblock In \emph{International Conference on Learning Representations}, 2021.

\bibitem[Belkin et~al.(2018)Belkin, Ma, and Mandal]{belkin2018understand}
Mikhail Belkin, Siyuan Ma, and Soumik Mandal.
\newblock To understand deep learning we need to understand kernel learning.
\newblock In \emph{International Conference on Machine Learning}, pages
  541--549. PMLR, 2018.

\bibitem[Belkin et~al.(2019)Belkin, Hsu, Ma, and Mandal]{belkin2019reconciling}
Mikhail Belkin, Daniel Hsu, Siyuan Ma, and Soumik Mandal.
\newblock Reconciling modern machine-learning practice and the classical
  bias--variance trade-off.
\newblock \emph{Proceedings of the National Academy of Sciences}, 116\penalty0
  (32):\penalty0 15849--15854, 2019.

\bibitem[Beyer et~al.(2022)Beyer, Zhai, Royer, Markeeva, Anil, and
  Kolesnikov]{beyer2022knowledge}
Lucas Beyer, Xiaohua Zhai, Am{\'e}lie Royer, Larisa Markeeva, Rohan Anil, and
  Alexander Kolesnikov.
\newblock Knowledge distillation: A good teacher is patient and consistent.
\newblock In \emph{Proceedings of the IEEE/CVF Conference on Computer Vision
  and Pattern Recognition}, pages 10925--10934, 2022.

\bibitem[Blier and Ollivier(2018)]{blier2018description}
L{\'e}onard Blier and Yann Ollivier.
\newblock The description length of deep learning models.
\newblock In \emph{Advances in Neural Information Processing Systems}, pages
  2216--2226, 2018.

\bibitem[Bousquet and Elisseeff(2002)]{bousquet2002stability}
Olivier Bousquet and Andr{\'e} Elisseeff.
\newblock Stability and generalization.
\newblock \emph{The Journal of Machine Learning Research}, 2:\penalty0
  499--526, 2002.

\bibitem[Brown et~al.(2021)Brown, Bun, Feldman, Smith, and
  Talwar]{brown2021memorization}
Gavin Brown, Mark Bun, Vitaly Feldman, Adam Smith, and Kunal Talwar.
\newblock When is memorization of irrelevant training data necessary for
  high-accuracy learning?
\newblock In \emph{Proceedings of the 53rd annual ACM SIGACT symposium on
  theory of computing}, pages 123--132, 2021.

\bibitem[Bu et~al.(2020)Bu, Zou, and Veeravalli]{bu2020tightening}
Yuheng Bu, Shaofeng Zou, and Venugopal~V Veeravalli.
\newblock Tightening mutual information-based bounds on generalization error.
\newblock \emph{IEEE Journal on Selected Areas in Information Theory},
  1\penalty0 (1):\penalty0 121--130, 2020.

\bibitem[Bucilu\v{a} et~al.(2006)Bucilu\v{a}, Caruana, and
  Niculescu-Mizil]{Bucilla:2006}
Cristian Bucilu\v{a}, Rich Caruana, and Alexandru Niculescu-Mizil.
\newblock Model compression.
\newblock In \emph{Proceedings of the 12th ACM SIGKDD International Conference
  on Knowledge Discovery and Data Mining}, KDD '06, page 535–541, New York,
  NY, USA, 2006. Association for Computing Machinery.
\newblock ISBN 1595933395.
\newblock \doi{10.1145/1150402.1150464}.

\bibitem[Cao et~al.(2022)Cao, Chen, Belkin, and Gu]{cao2022benign}
Yuan Cao, Zixiang Chen, Misha Belkin, and Quanquan Gu.
\newblock Benign overfitting in two-layer convolutional neural networks.
\newblock \emph{Advances in neural information processing systems},
  35:\penalty0 25237--25250, 2022.

\bibitem[Carlini et~al.(2019)Carlini, Liu, Erlingsson, Kos, and
  Song]{carlini2019secret}
Nicholas Carlini, Chang Liu, {\'U}lfar Erlingsson, Jernej Kos, and Dawn Song.
\newblock The secret sharer: Evaluating and testing unintended memorization in
  neural networks.
\newblock In \emph{USENIX Security Symposium}, volume 267, 2019.

\bibitem[Carlini et~al.(2021)Carlini, Tramer, Wallace, Jagielski, Herbert-Voss,
  Lee, Roberts, Brown, Song, Erlingsson, et~al.]{carlini2021extracting}
Nicholas Carlini, Florian Tramer, Eric Wallace, Matthew Jagielski, Ariel
  Herbert-Voss, Katherine Lee, Adam Roberts, Tom~B Brown, Dawn Song, Ulfar
  Erlingsson, et~al.
\newblock Extracting training data from large language models.
\newblock In \emph{USENIX Security Symposium}, volume~6, 2021.

\bibitem[Carlini et~al.(2022)Carlini, Jagielski, Zhang, Papernot, Terzis, and
  Tramer]{carlini2022privacy}
Nicholas Carlini, Matthew Jagielski, Chiyuan Zhang, Nicolas Papernot, Andreas
  Terzis, and Florian Tramer.
\newblock The privacy onion effect: Memorization is relative.
\newblock \emph{Advances in Neural Information Processing Systems},
  35:\penalty0 13263--13276, 2022.

\bibitem[Carlini et~al.(2023{\natexlab{a}})Carlini, Hayes, Nasr, Jagielski,
  Sehwag, Tramer, Balle, Ippolito, and Wallace]{carlini2023extracting}
Nicholas Carlini, Jamie Hayes, Milad Nasr, Matthew Jagielski, Vikash Sehwag,
  Florian Tramer, Borja Balle, Daphne Ippolito, and Eric Wallace.
\newblock Extracting training data from diffusion models.
\newblock \emph{arXiv preprint arXiv:2301.13188}, 2023{\natexlab{a}}.

\bibitem[Carlini et~al.(2023{\natexlab{b}})Carlini, Ippolito, Jagielski, Lee,
  Tramer, and Zhang]{carlini2023quantifying}
Nicholas Carlini, Daphne Ippolito, Matthew Jagielski, Katherine Lee, Florian
  Tramer, and Chiyuan Zhang.
\newblock Quantifying memorization across neural language models.
\newblock In \emph{The Eleventh International Conference on Learning
  Representations}, 2023{\natexlab{b}}.

\bibitem[Catoni(2007)]{catonibook2007}
Olivier Catoni.
\newblock Pac-bayesian supervised classification: The thermodynamics of
  statistical learning.
\newblock \emph{Lecture Notes-Monograph Series}, 56:\penalty0 i--163, 2007.
\newblock ISSN 07492170.

\bibitem[Charpiat et~al.(2019)Charpiat, Girard, Felardos, and
  Tarabalka]{charpiat2019input}
Guillaume Charpiat, Nicolas Girard, Loris Felardos, and Yuliya Tarabalka.
\newblock Input similarity from the neural network perspective.
\newblock \emph{Advances in Neural Information Processing Systems}, 32, 2019.

\bibitem[Chaudhari et~al.(2019)Chaudhari, Choromanska, Soatto, LeCun, Baldassi,
  Borgs, Chayes, Sagun, and Zecchina]{chaudhari2019entropy}
Pratik Chaudhari, Anna Choromanska, Stefano Soatto, Yann LeCun, Carlo Baldassi,
  Christian Borgs, Jennifer Chayes, Levent Sagun, and Riccardo Zecchina.
\newblock Entropy-sgd: Biasing gradient descent into wide valleys.
\newblock \emph{Journal of Statistical Mechanics: Theory and Experiment},
  2019\penalty0 (12):\penalty0 124018, 2019.

\bibitem[Chen et~al.(2020)Chen, Mei, Wang, Feng, and Chen]{chen2020online}
Defang Chen, Jian-Ping Mei, Can Wang, Yan Feng, and Chun Chen.
\newblock Online knowledge distillation with diverse peers.
\newblock In \emph{Proceedings of the AAAI Conference on Artificial
  Intelligence}, volume~34, pages 3430--3437, 2020.

\bibitem[Chen et~al.(2022)Chen, Mei, Zhang, Wang, Feng, and
  Chen]{chen2022knowledge}
Defang Chen, Jian-Ping Mei, Hailin Zhang, Can Wang, Yan Feng, and Chun Chen.
\newblock Knowledge distillation with the reused teacher classifier.
\newblock In \emph{Proceedings of the IEEE/CVF Conference on Computer Vision
  and Pattern Recognition}, pages 11933--11942, 2022.

\bibitem[Chen et~al.(2019)Chen, Liao, Chen, and Zhang]{chen2019understanding}
Pengfei Chen, Ben~Ben Liao, Guangyong Chen, and Shengyu Zhang.
\newblock Understanding and utilizing deep neural networks trained with noisy
  labels.
\newblock In \emph{International Conference on Machine Learning}, pages
  1062--1070. PMLR, 2019.

\bibitem[Cho and Hariharan(2019)]{cho2019efficacy}
Jang~Hyun Cho and Bharath Hariharan.
\newblock On the efficacy of knowledge distillation.
\newblock In \emph{Proceedings of the IEEE/CVF international conference on
  computer vision}, pages 4794--4802, 2019.

\bibitem[Cohen et~al.(2018)Cohen, Sapiro, and Giryes]{cohen2018dnn}
Gilad Cohen, Guillermo Sapiro, and Raja Giryes.
\newblock Dnn or k-nn: That is the generalize vs. memorize question.
\newblock \emph{arXiv preprint arXiv:1805.06822}, 2018.

\bibitem[Cohen et~al.(2017)Cohen, Afshar, Tapson, and
  Van~Schaik]{cohen2017emnist}
Gregory Cohen, Saeed Afshar, Jonathan Tapson, and Andre Van~Schaik.
\newblock Emnist: Extending mnist to handwritten letters.
\newblock In \emph{2017 international joint conference on neural networks
  (IJCNN)}, pages 2921--2926. IEEE, 2017.

\bibitem[Cook(1977)]{cook1977detection}
R~Dennis Cook.
\newblock Detection of influential observation in linear regression.
\newblock \emph{Technometrics}, 19\penalty0 (1):\penalty0 15--18, 1977.

\bibitem[Cover and Thomas(2006)]{cover}
Thomas~M Cover and Joy~A Thomas.
\newblock \emph{Elements of information theory}.
\newblock Wiley-Interscience, 2006.

\bibitem[Cristianini et~al.(2001)Cristianini, Shawe-Taylor, Elisseeff, and
  Kandola]{cristianini2001kernel}
Nello Cristianini, John Shawe-Taylor, Andre Elisseeff, and Jaz Kandola.
\newblock On kernel-target alignment.
\newblock \emph{Advances in neural information processing systems}, 14, 2001.

\bibitem[Dao et~al.(2021)Dao, Kamath, Syrgkanis, and Mackey]{Dao:2021}
Tri Dao, Govinda~M Kamath, Vasilis Syrgkanis, and Lester Mackey.
\newblock Knowledge distillation as semiparametric inference.
\newblock In \emph{International Conference on Learning Representations}, 2021.

\bibitem[Deng et~al.(2009)Deng, Dong, Socher, Li, Li, and
  Fei-Fei]{deng2009imagenet}
Jia Deng, Wei Dong, Richard Socher, Li-Jia Li, Kai Li, and Li~Fei-Fei.
\newblock Imagenet: A large-scale hierarchical image database.
\newblock In \emph{2009 IEEE conference on computer vision and pattern
  recognition}, pages 248--255. Ieee, 2009.

\bibitem[Deshpande et~al.(2021)Deshpande, Achille, Ravichandran, Li, Zancato,
  Fowlkes, Bhotika, Soatto, and Perona]{deshpande2021linearized}
Aditya Deshpande, Alessandro Achille, Avinash Ravichandran, Hao Li, Luca
  Zancato, Charless Fowlkes, Rahul Bhotika, Stefano Soatto, and Pietro Perona.
\newblock A linearized framework and a new benchmark for model selection for
  fine-tuning.
\newblock \emph{arXiv preprint arXiv:2102.00084}, 2021.

\bibitem[Dosovitskiy et~al.(2014)Dosovitskiy, Springenberg, Riedmiller, and
  Brox]{dosovitskiy2014discriminative}
Alexey Dosovitskiy, Jost~Tobias Springenberg, Martin Riedmiller, and Thomas
  Brox.
\newblock Discriminative unsupervised feature learning with convolutional
  neural networks.
\newblock \emph{Advances in neural information processing systems}, 27, 2014.

\bibitem[Dziugaite and Roy(2017)]{DBLP:conf/uai/DziugaiteR17}
Gintare~Karolina Dziugaite and Daniel~M. Roy.
\newblock Computing nonvacuous generalization bounds for deep (stochastic)
  neural networks with many more parameters than training data.
\newblock In Gal Elidan, Kristian Kersting, and Alexander~T. Ihler, editors,
  \emph{Proceedings of the Thirty-Third Conference on Uncertainty in Artificial
  Intelligence, {UAI} 2017, Sydney, Australia, August 11-15, 2017}. {AUAI}
  Press, 2017.

\bibitem[Dziugaite et~al.(2020)Dziugaite, Drouin, Neal, Rajkumar, Caballero,
  Wang, Mitliagkas, and Roy]{dziugaite2020search}
Gintare~Karolina Dziugaite, Alexandre Drouin, Brady Neal, Nitarshan Rajkumar,
  Ethan Caballero, Linbo Wang, Ioannis Mitliagkas, and Daniel~M Roy.
\newblock In search of robust measures of generalization.
\newblock \emph{Advances in Neural Information Processing Systems},
  33:\penalty0 11723--11733, 2020.

\bibitem[Esposito et~al.(2021)Esposito, Gastpar, and
  Issa]{esposito2021generalization}
Amedeo~Roberto Esposito, Michael Gastpar, and Ibrahim Issa.
\newblock Generalization error bounds via r{\'e}nyi-, f-divergences and maximal
  leakage.
\newblock \emph{IEEE Transactions on Information Theory}, 67\penalty0
  (8):\penalty0 4986--5004, 2021.

\bibitem[Feldman(2020)]{feldman2020does}
Vitaly Feldman.
\newblock Does learning require memorization? a short tale about a long tail.
\newblock In \emph{Proceedings of the 52nd Annual ACM SIGACT Symposium on
  Theory of Computing}, pages 954--959, 2020.

\bibitem[Feldman and Steinke(2018)]{feldman2018calibrating}
Vitaly Feldman and Thomas Steinke.
\newblock Calibrating noise to variance in adaptive data analysis.
\newblock In \emph{Conference On Learning Theory}, pages 535--544, 2018.

\bibitem[Feldman and Zhang(2020)]{feldman2020neural}
Vitaly Feldman and Chiyuan Zhang.
\newblock What neural networks memorize and why: Discovering the long tail via
  influence estimation.
\newblock \emph{Advances in Neural Information Processing Systems},
  33:\penalty0 2881--2891, 2020.

\bibitem[Foret et~al.(2020)Foret, Kleiner, Mobahi, and
  Neyshabur]{foret2020sharpness}
Pierre Foret, Ariel Kleiner, Hossein Mobahi, and Behnam Neyshabur.
\newblock Sharpness-aware minimization for efficiently improving
  generalization.
\newblock \emph{arXiv preprint arXiv:2010.01412}, 2020.

\bibitem[Frei et~al.(2022)Frei, Chatterji, and Bartlett]{frei2022benign}
Spencer Frei, Niladri~S Chatterji, and Peter Bartlett.
\newblock Benign overfitting without linearity: Neural network classifiers
  trained by gradient descent for noisy linear data.
\newblock In \emph{Conference on Learning Theory}, pages 2668--2703. PMLR,
  2022.

\bibitem[{Frenay} and {Verleysen}(2014)]{survey}
B.~{Frenay} and M.~{Verleysen}.
\newblock Classification in the presence of label noise: A survey.
\newblock \emph{IEEE Transactions on Neural Networks and Learning Systems},
  25\penalty0 (5):\penalty0 845--869, May 2014.
\newblock ISSN 2162-2388.
\newblock \doi{10.1109/TNNLS.2013.2292894}.

\bibitem[Furlanello et~al.(2018)Furlanello, Lipton, Tschannen, Itti, and
  Anandkumar]{furlanello2018born}
Tommaso Furlanello, Zachary Lipton, Michael Tschannen, Laurent Itti, and Anima
  Anandkumar.
\newblock Born again neural networks.
\newblock In \emph{International Conference on Machine Learning}, pages
  1607--1616. PMLR, 2018.

\bibitem[Gelfand and Mitter(1991)]{gelfand1991recursive}
Saul~B Gelfand and Sanjoy~K Mitter.
\newblock Recursive stochastic algorithms for global optimization in r\^{}d.
\newblock \emph{SIAM Journal on Control and Optimization}, 29\penalty0
  (5):\penalty0 999--1018, 1991.

\bibitem[Germain et~al.(2009)Germain, Lacasse, Laviolette, and
  Marchand]{germain2009pac}
Pascal Germain, Alexandre Lacasse, Fran{\c{c}}ois Laviolette, and Mario
  Marchand.
\newblock Pac-bayesian learning of linear classifiers.
\newblock In \emph{Proceedings of the 26th Annual International Conference on
  Machine Learning}, pages 353--360, 2009.

\bibitem[Ghorbani and Zou(2019)]{pmlr-v97-ghorbani19c}
Amirata Ghorbani and James Zou.
\newblock Data shapley: Equitable valuation of data for machine learning.
\newblock In \emph{International Conference on Machine Learning}, pages
  2242--2251. PMLR, 2019.

\bibitem[Ghosh et~al.(2017)Ghosh, Kumar, and Sastry]{mae}
Aritra Ghosh, Himanshu Kumar, and PS~Sastry.
\newblock Robust loss functions under label noise for deep neural networks.
\newblock In \emph{Thirty-First AAAI Conference on Artificial Intelligence},
  2017.

\bibitem[Golatkar et~al.(2020)Golatkar, Achille, and
  Soatto]{golatkar2020forgetting}
Aditya Golatkar, Alessandro Achille, and Stefano Soatto.
\newblock Forgetting outside the box: Scrubbing deep networks of information
  accessible from input-output observations.
\newblock \emph{Proceedings of the European Conference on Computer Vision
  (ECCV)}, 2020.

\bibitem[Goldberger and Ben-Reuven(2017)]{goldberger2016training}
Jacob Goldberger and Ehud Ben-Reuven.
\newblock Training deep neural-networks using a noise adaptation layer.
\newblock In \emph{ICLR}, 2017.

\bibitem[Golowich et~al.(2018)Golowich, Rakhlin, and Shamir]{golowich2018size}
Noah Golowich, Alexander Rakhlin, and Ohad Shamir.
\newblock Size-independent sample complexity of neural networks.
\newblock In \emph{Conference On Learning Theory}, pages 297--299. PMLR, 2018.

\bibitem[Goodfellow et~al.(2015)Goodfellow, Shlens, and Szegedy]{adv-training}
Ian Goodfellow, Jonathon Shlens, and Christian Szegedy.
\newblock Explaining and harnessing adversarial examples.
\newblock In \emph{International Conference on Learning Representations}, 2015.

\bibitem[Gou et~al.(2021)Gou, Yu, Maybank, and Tao]{gou2021knowledge}
Jianping Gou, Baosheng Yu, Stephen~J Maybank, and Dacheng Tao.
\newblock Knowledge distillation: A survey.
\newblock \emph{International Journal of Computer Vision}, 129\penalty0
  (6):\penalty0 1789--1819, 2021.

\bibitem[Gray(2011)]{gray2011entropy}
Robert~M Gray.
\newblock \emph{Entropy and information theory}.
\newblock Springer Science \& Business Media, 2011.

\bibitem[Gu and Eisenstat(1995)]{gu1995downdating}
Ming Gu and Stanley~C Eisenstat.
\newblock Downdating the singular value decomposition.
\newblock \emph{SIAM Journal on Matrix Analysis and Applications}, 16\penalty0
  (3):\penalty0 793--810, 1995.

\bibitem[Gunasekar et~al.(2017)Gunasekar, Woodworth, Bhojanapalli, Neyshabur,
  and Srebro]{gunasekar2017implicit}
Suriya Gunasekar, Blake~E Woodworth, Srinadh Bhojanapalli, Behnam Neyshabur,
  and Nati Srebro.
\newblock Implicit regularization in matrix factorization.
\newblock \emph{Advances in Neural Information Processing Systems}, 30, 2017.

\bibitem[Guo et~al.(2020)Guo, Wang, Wu, Yu, Liang, Hu, and Luo]{guo2020online}
Qiushan Guo, Xinjiang Wang, Yichao Wu, Zhipeng Yu, Ding Liang, Xiaolin Hu, and
  Ping Luo.
\newblock Online knowledge distillation via collaborative learning.
\newblock In \emph{Proceedings of the IEEE/CVF Conference on Computer Vision
  and Pattern Recognition}, pages 11020--11029, 2020.

\bibitem[Gupta et~al.(2021)Gupta, Stripelis, Lam, Thompson, Ambite, and
  Steeg]{gupta2021membership}
Umang Gupta, Dimitris Stripelis, Pradeep~K. Lam, Paul Thompson, Jose~Luis
  Ambite, and Greg~Ver Steeg.
\newblock Membership inference attacks on deep regression models for
  neuroimaging.
\newblock In \emph{Medical Imaging with Deep Learning}, 2021.

\bibitem[Hafez-Kolahi et~al.(2020)Hafez-Kolahi, Golgooni, Kasaei, and
  Soleymani]{hafez2020conditioning}
Hassan Hafez-Kolahi, Zeinab Golgooni, Shohreh Kasaei, and Mahdieh Soleymani.
\newblock Conditioning and processing: Techniques to improve
  information-theoretic generalization bounds.
\newblock \emph{Advances in Neural Information Processing Systems}, 33, 2020.

\bibitem[Haghifam et~al.(2020)Haghifam, Negrea, Khisti, Roy, and
  Dziugaite]{haghifam2020sharpened}
Mahdi Haghifam, Jeffrey Negrea, Ashish Khisti, Daniel~M Roy, and
  Gintare~Karolina Dziugaite.
\newblock Sharpened generalization bounds based on conditional mutual
  information and an application to noisy, iterative algorithms.
\newblock In H.~Larochelle, M.~Ranzato, R.~Hadsell, M.~F. Balcan, and H.~Lin,
  editors, \emph{Advances in Neural Information Processing Systems}, volume~33,
  pages 9925--9935. Curran Associates, Inc., 2020.

\bibitem[Han et~al.(2018)Han, Yao, Yu, Niu, Xu, Hu, Tsang, and
  Sugiyama]{han2018co}
Bo~Han, Quanming Yao, Xingrui Yu, Gang Niu, Miao Xu, Weihua Hu, Ivor Tsang, and
  Masashi Sugiyama.
\newblock Co-teaching: Robust training of deep neural networks with extremely
  noisy labels.
\newblock In \emph{Advances in neural information processing systems}, pages
  8527--8537, 2018.

\bibitem[Han et~al.(2019)Han, Luo, and Wang]{han2019deep}
Jiangfan Han, Ping Luo, and Xiaogang Wang.
\newblock Deep self-learning from noisy labels.
\newblock In \emph{Proceedings of the IEEE International Conference on Computer
  Vision}, pages 5138--5147, 2019.

\bibitem[Hardt et~al.(2016)Hardt, Recht, and Singer]{hardt2016train}
Moritz Hardt, Ben Recht, and Yoram Singer.
\newblock Train faster, generalize better: Stability of stochastic gradient
  descent.
\newblock In \emph{International conference on machine learning}, pages
  1225--1234. PMLR, 2016.

\bibitem[Harutyunyan et~al.(2020)Harutyunyan, Reing, Ver~Steeg, and
  Galstyan]{harutyunyan2020improving}
Hrayr Harutyunyan, Kyle Reing, Greg Ver~Steeg, and Aram Galstyan.
\newblock Improving generalization by controlling label-noise information in
  neural network weights.
\newblock In \emph{International Conference on Machine Learning}, pages
  4071--4081. PMLR, 2020.

\bibitem[Harutyunyan et~al.(2021{\natexlab{a}})Harutyunyan, Achille, Paolini,
  Majumder, Ravichandran, Bhotika, and Soatto]{harutyunyan2021estimating}
Hrayr Harutyunyan, Alessandro Achille, Giovanni Paolini, Orchid Majumder,
  Avinash Ravichandran, Rahul Bhotika, and Stefano Soatto.
\newblock Estimating informativeness of samples with smooth unique information.
\newblock In \emph{International Conference on Learning Representations},
  2021{\natexlab{a}}.

\bibitem[Harutyunyan et~al.(2021{\natexlab{b}})Harutyunyan, Raginsky, Steeg,
  and Galstyan]{harutyunyan2021informationtheoretic}
Hrayr Harutyunyan, Maxim Raginsky, Greg~Ver Steeg, and Aram Galstyan.
\newblock Information-theoretic generalization bounds for black-box learning
  algorithms.
\newblock In A.~Beygelzimer, Y.~Dauphin, P.~Liang, and J.~Wortman Vaughan,
  editors, \emph{Advances in Neural Information Processing Systems},
  2021{\natexlab{b}}.

\bibitem[Harutyunyan et~al.(2022)Harutyunyan, Steeg, and
  Galstyan]{harutyunyan2022formal}
Hrayr Harutyunyan, Greg~Ver Steeg, and Aram Galstyan.
\newblock Formal limitations of sample-wise information-theoretic
  generalization bounds.
\newblock In \emph{2022 IEEE Information Theory Workshop (ITW)}, pages
  440--445, 2022.
\newblock \doi{10.1109/ITW54588.2022.9965850}.

\bibitem[Harutyunyan et~al.(2023)Harutyunyan, Rawat, Menon, Kim, and
  Kumar]{harutyunyan2023supervision}
Hrayr Harutyunyan, Ankit~Singh Rawat, Aditya~Krishna Menon, Seungyeon Kim, and
  Sanjiv Kumar.
\newblock Supervision complexity and its role in knowledge distillation.
\newblock In \emph{International Conference on Learning Representations}, 2023.

\bibitem[Hastie et~al.(2022)Hastie, Montanari, Rosset, and
  Tibshirani]{hastie2022surprises}
Trevor Hastie, Andrea Montanari, Saharon Rosset, and Ryan~J Tibshirani.
\newblock Surprises in high-dimensional ridgeless least squares interpolation.
\newblock \emph{The Annals of Statistics}, 50\penalty0 (2):\penalty0 949--986,
  2022.

\bibitem[Haviv et~al.(2023)Haviv, Cohen, Gidron, Schuster, Goldberg, and
  Geva]{haviv-etal-2023-understanding}
Adi Haviv, Ido Cohen, Jacob Gidron, Roei Schuster, Yoav Goldberg, and Mor Geva.
\newblock Understanding transformer memorization recall through idioms.
\newblock In \emph{Proceedings of the 17th Conference of the European Chapter
  of the Association for Computational Linguistics}, pages 248--264.
  Association for Computational Linguistics, May 2023.

\bibitem[He and Ozay(2021)]{he2021feature}
Bobby He and Mete Ozay.
\newblock Feature kernel distillation.
\newblock In \emph{International Conference on Learning Representations}, 2021.

\bibitem[He et~al.(2016)He, Zhang, Ren, and Sun]{he2016deep}
Kaiming He, Xiangyu Zhang, Shaoqing Ren, and Jian Sun.
\newblock Deep residual learning for image recognition.
\newblock In \emph{Proceedings of the IEEE conference on computer vision and
  pattern recognition}, pages 770--778, 2016.

\bibitem[Hellstr{\"o}m and Durisi(2020)]{hellstrom2020generalization}
Fredrik Hellstr{\"o}m and Giuseppe Durisi.
\newblock Generalization bounds via information density and conditional
  information density.
\newblock \emph{IEEE Journal on Selected Areas in Information Theory},
  1\penalty0 (3):\penalty0 824--839, 2020.

\bibitem[Hendrycks et~al.(2018)Hendrycks, Mazeika, Wilson, and
  Gimpel]{hendrycks2018using}
Dan Hendrycks, Mantas Mazeika, Duncan Wilson, and Kevin Gimpel.
\newblock Using trusted data to train deep networks on labels corrupted by
  severe noise.
\newblock In \emph{Advances in neural information processing systems}, pages
  10456--10465, 2018.

\bibitem[Hinton et~al.(2015)Hinton, Vinyals, Dean,
  et~al.]{hinton2015distilling}
Geoffrey Hinton, Oriol Vinyals, Jeff Dean, et~al.
\newblock Distilling the knowledge in a neural network.
\newblock \emph{arXiv preprint arXiv:1503.02531}, 2\penalty0 (7), 2015.

\bibitem[Hinton and van Camp(1993)]{hinton1993keeping}
Geoffrey~E. Hinton and Drew van Camp.
\newblock Keeping the neural networks simple by minimizing the description
  length of the weights.
\newblock In \emph{Proceedings of the Sixth Annual Conference on Computational
  Learning Theory}, COLT ’93, page 5–13, New York, NY, USA, 1993.
  Association for Computing Machinery.
\newblock ISBN 0897916115.
\newblock \doi{10.1145/168304.168306}.

\bibitem[Howard et~al.(2019)Howard, Sandler, Chu, Chen, Chen, Tan, Wang, Zhu,
  Pang, Vasudevan, et~al.]{howard2019searching}
Andrew Howard, Mark Sandler, Grace Chu, Liang-Chieh Chen, Bo~Chen, Mingxing
  Tan, Weijun Wang, Yukun Zhu, Ruoming Pang, Vijay Vasudevan, et~al.
\newblock Searching for mobilenetv3.
\newblock In \emph{Proceedings of the IEEE/CVF international conference on
  computer vision}, pages 1314--1324, 2019.

\bibitem[Hu et~al.(2020)Hu, Li, and Yu]{hu2020simple}
Wei Hu, Zhiyuan Li, and Dingli Yu.
\newblock Simple and effective regularization methods for training on noisily
  labeled data with generalization guarantee.
\newblock In \emph{International Conference on Learning Representations}, 2020.

\bibitem[Hui and Belkin(2021)]{hui2021evaluation}
Like Hui and Mikhail Belkin.
\newblock Evaluation of neural architectures trained with square loss vs
  cross-entropy in classification tasks.
\newblock In \emph{International Conference on Learning Representations}, 2021.

\bibitem[Jacot et~al.(2018)Jacot, Gabriel, and Hongler]{jacot2018ntk}
Arthur Jacot, Franck Gabriel, and Clement Hongler.
\newblock Neural tangent kernel: Convergence and generalization in neural
  networks.
\newblock In S.~Bengio, H.~Wallach, H.~Larochelle, K.~Grauman, N.~Cesa-Bianchi,
  and R.~Garnett, editors, \emph{Advances in Neural Information Processing
  Systems 31}, pages 8571--8580. Curran Associates, Inc., 2018.

\bibitem[Ji and Zhu(2020)]{ji2020knowledge}
Guangda Ji and Zhanxing Zhu.
\newblock Knowledge distillation in wide neural networks: Risk bound, data
  efficiency and imperfect teacher.
\newblock \emph{Advances in Neural Information Processing Systems},
  33:\penalty0 20823--20833, 2020.

\bibitem[Jiang et~al.(2018)Jiang, Zhou, Leung, Li, and
  Fei-Fei]{jiang2018mentornet}
Lu~Jiang, Zhengyuan Zhou, Thomas Leung, Li-Jia Li, and Li~Fei-Fei.
\newblock Mentornet: Learning data-driven curriculum for very deep neural
  networks on corrupted labels.
\newblock In \emph{International conference on machine learning}, pages
  2304--2313. PMLR, 2018.

\bibitem[Jiang* et~al.(2020)Jiang*, Neyshabur*, Mobahi, Krishnan, and
  Bengio]{Jiang*2020Fantastic}
Yiding Jiang*, Behnam Neyshabur*, Hossein Mobahi, Dilip Krishnan, and Samy
  Bengio.
\newblock Fantastic generalization measures and where to find them.
\newblock In \emph{International Conference on Learning Representations}, 2020.

\bibitem[Jin et~al.(2019)Jin, Peng, Wu, Liu, Liu, Liang, Yan, and
  Hu]{jin2019rco}
Xiao Jin, Baoyun Peng, Yichao Wu, Yu~Liu, Jiaheng Liu, Ding Liang, Junjie Yan,
  and Xiaolin Hu.
\newblock Knowledge distillation via route constrained optimization.
\newblock In \emph{Proceedings of the IEEE/CVF International Conference on
  Computer Vision (ICCV)}, October 2019.

\bibitem[Kaggle(2013)]{dogsvscats}
Kaggle.
\newblock \emph{Dogs vs. Cats}, 2013.
\newblock URL \url{https://www.kaggle.com/c/dogs-vs-cats/overview}.

\bibitem[Kalimeris et~al.(2019)Kalimeris, Kaplun, Nakkiran, Edelman, Yang,
  Barak, and Zhang]{Nakkiran:2019}
Dimitris Kalimeris, Gal Kaplun, Preetum Nakkiran, Benjamin Edelman, Tristan
  Yang, Boaz Barak, and Haofeng Zhang.
\newblock Sgd on neural networks learns functions of increasing complexity.
\newblock \emph{Advances in Neural Information Processing Systems}, 32, 2019.

\bibitem[Katharopoulos and Fleuret(2018)]{katharopoulos-fleuret-2018}
Angelos Katharopoulos and Fran{\c{c}}ois Fleuret.
\newblock Not all samples are created equal: Deep learning with importance
  sampling.
\newblock In \emph{International conference on machine learning}, pages
  2525--2534. PMLR, 2018.

\bibitem[Keskar et~al.(2017)Keskar, Mudigere, Nocedal, Smelyanskiy, and
  Tang]{keskar2017on}
Nitish~Shirish Keskar, Dheevatsa Mudigere, Jorge Nocedal, Mikhail Smelyanskiy,
  and Ping Tak~Peter Tang.
\newblock On large-batch training for deep learning: Generalization gap and
  sharp minima.
\newblock In \emph{International Conference on Learning Representations}, 2017.

\bibitem[Kingma and Ba(2014)]{kingma2014adam}
Diederik~P Kingma and Jimmy Ba.
\newblock Adam: A method for stochastic optimization.
\newblock \emph{arXiv preprint arXiv:1412.6980}, 2014.

\bibitem[Koh and Liang(2017)]{koh2017understanding}
Pang~Wei Koh and Percy Liang.
\newblock Understanding black-box predictions via influence functions.
\newblock In \emph{International conference on machine learning}, pages
  1885--1894. PMLR, 2017.

\bibitem[Kohavi et~al.(1997)Kohavi, John, et~al.]{kohavi1997wrappers}
Ron Kohavi, George~H John, et~al.
\newblock Wrappers for feature subset selection.
\newblock \emph{Artificial intelligence}, 97\penalty0 (1-2):\penalty0 273--324,
  1997.

\bibitem[Krizhevsky et~al.(2009)Krizhevsky, Hinton,
  et~al.]{krizhevsky2009learning}
Alex Krizhevsky, Geoffrey Hinton, et~al.
\newblock Learning multiple layers of features from tiny images.
\newblock 2009.

\bibitem[Kuznetsov et~al.(2015)Kuznetsov, Mohri, and
  Syed]{kuznetsov2015rademacher}
Vitaly Kuznetsov, Mehryar Mohri, and U~Syed.
\newblock Rademacher complexity margin bounds for learning with a large number
  of classes.
\newblock In \emph{ICML Workshop on Extreme Classification: Learning with a
  Very Large Number of Labels}, volume~2, 2015.

\bibitem[Langford and Seeger(2001)]{langford2001bounds}
John Langford and Matthias Seeger.
\newblock Bounds for averaging classifiers.
\newblock 2001.

\bibitem[LeCun et~al.(1998)LeCun, Bottou, Bengio, and
  Haffner]{lecun1998gradient}
Yann LeCun, L{\'e}on Bottou, Yoshua Bengio, and Patrick Haffner.
\newblock Gradient-based learning applied to document recognition.
\newblock \emph{Proceedings of the IEEE}, 86\penalty0 (11):\penalty0
  2278--2324, 1998.

\bibitem[LeCun et~al.(2010)LeCun, Cortes, and Burges]{lecun2010mnist}
Yann LeCun, Corinna Cortes, and CJ~Burges.
\newblock Mnist handwritten digit database.
\newblock \emph{ATT Labs [Online]. Available:
  http://yann.lecun.com/exdb/mnist}, 2, 2010.

\bibitem[Ledoux and Talagrand(1991)]{ledoux1991probability}
Michel Ledoux and Michel Talagrand.
\newblock \emph{Probability in Banach Spaces: isoperimetry and processes},
  volume~23.
\newblock Springer Science \& Business Media, 1991.

\bibitem[Lee et~al.(2019)Lee, Xiao, Schoenholz, Bahri, Novak, Sohl-Dickstein,
  and Pennington]{lee2019wide}
Jaehoon Lee, Lechao Xiao, Samuel Schoenholz, Yasaman Bahri, Roman Novak, Jascha
  Sohl-Dickstein, and Jeffrey Pennington.
\newblock Wide neural networks of any depth evolve as linear models under
  gradient descent.
\newblock In \emph{Advances in neural information processing systems}, pages
  8572--8583, 2019.

\bibitem[Li et~al.(2018)Li, Farkhoor, Liu, and Yosinski]{li2018intrinsic}
Chunyuan Li, Heerad Farkhoor, Rosanne Liu, and Jason Yosinski.
\newblock Measuring the intrinsic dimension of objective landscapes.
\newblock In \emph{International Conference on Learning Representations}, 2018.

\bibitem[Li et~al.(2019)Li, Wong, Zhao, and Kankanhalli]{li2019learning}
Junnan Li, Yongkang Wong, Qi~Zhao, and Mohan~S Kankanhalli.
\newblock Learning to learn from noisy labeled data.
\newblock In \emph{Proceedings of the IEEE Conference on Computer Vision and
  Pattern Recognition}, pages 5051--5059, 2019.

\bibitem[Li et~al.(2020)Li, Soltanolkotabi, and Oymak]{li2020gradient}
Mingchen Li, Mahdi Soltanolkotabi, and Samet Oymak.
\newblock Gradient descent with early stopping is provably robust to label
  noise for overparameterized neural networks.
\newblock In \emph{International conference on artificial intelligence and
  statistics}, pages 4313--4324. PMLR, 2020.

\bibitem[Li et~al.(2006)Li, Hastie, and Church]{li2006very}
Ping Li, Trevor~J Hastie, and Kenneth~W Church.
\newblock Very sparse random projections.
\newblock In \emph{Proceedings of the 12th ACM SIGKDD international conference
  on Knowledge discovery and data mining}, pages 287--296, 2006.

\bibitem[Li et~al.(2017)Li, Tai, and Weinan]{li2017stochastic}
Qianxiao Li, Cheng Tai, and E~Weinan.
\newblock Stochastic modified equations and adaptive stochastic gradient
  algorithms.
\newblock In \emph{International Conference on Machine Learning}, pages
  2101--2110. PMLR, 2017.

\bibitem[Liang and Rakhlin(2020)]{liang2020just}
Tengyuan Liang and Alexander Rakhlin.
\newblock Just interpolate: Kernel “ridgeless” regression can generalize.
\newblock \emph{The Annals of Statistics}, 48\penalty0 (3):\penalty0
  1329--1347, 2020.

\bibitem[Lorena et~al.(2019)Lorena, Garcia, Lehmann, Souto, and
  Ho]{lorena2019complex}
Ana~C Lorena, Lu{\'\i}s~PF Garcia, Jens Lehmann, Marcilio~CP Souto, and Tin~Kam
  Ho.
\newblock How complex is your classification problem? a survey on measuring
  classification complexity.
\newblock \emph{ACM Computing Surveys (CSUR)}, 52\penalty0 (5):\penalty0 1--34,
  2019.

\bibitem[Ma et~al.(2018)Ma, Wang, Houle, Zhou, Erfani, Xia, Wijewickrema, and
  Bailey]{ma2018dimensionality}
Xingjun Ma, Yisen Wang, Michael~E Houle, Shuo Zhou, Sarah Erfani, Shutao Xia,
  Sudanthi Wijewickrema, and James Bailey.
\newblock Dimensionality-driven learning with noisy labels.
\newblock In \emph{International Conference on Machine Learning}, pages
  3355--3364. PMLR, 2018.

\bibitem[Maennel et~al.(2020)Maennel, Alabdulmohsin, Tolstikhin, Baldock,
  Bousquet, Gelly, and Keysers]{maennel2020neural}
Hartmut Maennel, Ibrahim~M Alabdulmohsin, Ilya~O Tolstikhin, Robert Baldock,
  Olivier Bousquet, Sylvain Gelly, and Daniel Keysers.
\newblock What do neural networks learn when trained with random labels?
\newblock \emph{Advances in Neural Information Processing Systems},
  33:\penalty0 19693--19704, 2020.

\bibitem[Mallinar et~al.(2022)Mallinar, Simon, Abedsoltan, Pandit, Belkin, and
  Nakkiran]{mallinar2022benign}
Neil~Rohit Mallinar, James~B Simon, Amirhesam Abedsoltan, Parthe Pandit, Misha
  Belkin, and Preetum Nakkiran.
\newblock Benign, tempered, or catastrophic: Toward a refined taxonomy of
  overfitting.
\newblock In \emph{Advances in Neural Information Processing Systems}, 2022.

\bibitem[Mandt et~al.(2017)Mandt, Hoffman, and Blei]{mandt2017stochastic}
Stephan Mandt, Matthew~D Hoffman, and David~M Blei.
\newblock Stochastic gradient descent as approximate bayesian inference.
\newblock \emph{The Journal of Machine Learning Research}, 18\penalty0
  (1):\penalty0 4873--4907, 2017.

\bibitem[McAllester(1999{\natexlab{a}})]{mcallester1999pac}
David~A McAllester.
\newblock Pac-bayesian model averaging.
\newblock In \emph{Proceedings of the twelfth annual conference on
  Computational learning theory}, pages 164--170, 1999{\natexlab{a}}.

\bibitem[McAllester(1999{\natexlab{b}})]{mcallester1999some}
David~A McAllester.
\newblock Some pac-bayesian theorems.
\newblock \emph{Machine Learning}, 37\penalty0 (3):\penalty0 355--363,
  1999{\natexlab{b}}.

\bibitem[Menon et~al.(2021)Menon, Rawat, Reddi, Kim, and
  Kumar]{menon2021statistical}
Aditya~K Menon, Ankit~Singh Rawat, Sashank Reddi, Seungyeon Kim, and Sanjiv
  Kumar.
\newblock A statistical perspective on distillation.
\newblock In \emph{International Conference on Machine Learning}, pages
  7632--7642. PMLR, 2021.

\bibitem[Menon et~al.(2020)Menon, Rawat, Reddi, and Kumar]{menon2020can}
Aditya~Krishna Menon, Ankit~Singh Rawat, Sashank~J. Reddi, and Sanjiv Kumar.
\newblock Can gradient clipping mitigate label noise?
\newblock In \emph{International Conference on Learning Representations}, 2020.

\bibitem[Mirzadeh et~al.(2020)Mirzadeh, Farajtabar, Li, Levine, Matsukawa, and
  Ghasemzadeh]{mirzadeh2020improved}
Seyed~Iman Mirzadeh, Mehrdad Farajtabar, Ang Li, Nir Levine, Akihiro Matsukawa,
  and Hassan Ghasemzadeh.
\newblock Improved knowledge distillation via teacher assistant.
\newblock In \emph{Proceedings of the AAAI conference on artificial
  intelligence}, volume~34, pages 5191--5198, 2020.

\bibitem[Mobahi et~al.(2020)Mobahi, Farajtabar, and Bartlett]{mobahi2020self}
Hossein Mobahi, Mehrdad Farajtabar, and Peter Bartlett.
\newblock Self-distillation amplifies regularization in hilbert space.
\newblock \emph{Advances in Neural Information Processing Systems},
  33:\penalty0 3351--3361, 2020.

\bibitem[Mohri et~al.(2018)Mohri, Rostamizadeh, and
  Talwalkar]{mohri2018foundations}
Mehryar Mohri, Afshin Rostamizadeh, and Ameet Talwalkar.
\newblock \emph{Foundations of machine learning}.
\newblock MIT press, 2018.

\bibitem[Mu et~al.(2020)Mu, Liang, and Li]{mu2020gradients}
Fangzhou Mu, Yingyu Liang, and Yin Li.
\newblock Gradients as features for deep representation learning.
\newblock In \emph{International Conference on Learning Representations}, 2020.

\bibitem[M{\"u}ller et~al.(2020)M{\"u}ller, Kornblith, and
  Hinton]{muller2020subclass}
Rafael M{\"u}ller, Simon Kornblith, and Geoffrey Hinton.
\newblock Subclass distillation.
\newblock \emph{arXiv preprint arXiv:2002.03936}, 2020.

\bibitem[Mwebaze et~al.(2019)Mwebaze, Gebru, Frome, Nsumba, and
  Tusubira]{mwebaze2019icassava}
Ernest Mwebaze, Timnit Gebru, Andrea Frome, Solomon Nsumba, and Jeremy
  Tusubira.
\newblock icassava 2019 fine-grained visual categorization challenge.
\newblock \emph{arXiv preprint arXiv:1908.02900}, 2019.

\bibitem[Nagarajan and Kolter(2019)]{nagarajan2019uniform}
Vaishnavh Nagarajan and J~Zico Kolter.
\newblock Uniform convergence may be unable to explain generalization in deep
  learning.
\newblock \emph{Advances in Neural Information Processing Systems}, 32, 2019.

\bibitem[Nasr et~al.(2019)Nasr, Shokri, and Houmansadr]{nasr2019comprehensive}
Milad Nasr, Reza Shokri, and Amir Houmansadr.
\newblock Comprehensive privacy analysis of deep learning: Passive and active
  white-box inference attacks against centralized and federated learning.
\newblock In \emph{2019 IEEE symposium on security and privacy (SP)}, pages
  739--753. IEEE, 2019.

\bibitem[Natarajan et~al.(2013)Natarajan, Dhillon, Ravikumar, and
  Tewari]{natarajan2013learning}
Nagarajan Natarajan, Inderjit~S Dhillon, Pradeep~K Ravikumar, and Ambuj Tewari.
\newblock Learning with noisy labels.
\newblock In \emph{Advances in neural information processing systems}, pages
  1196--1204, 2013.

\bibitem[Neelakantan et~al.(2015)Neelakantan, Vilnis, Le, Sutskever, Kaiser,
  Kurach, and Martens]{neelakantan2015adding}
Arvind Neelakantan, Luke Vilnis, Quoc~V Le, Ilya Sutskever, Lukasz Kaiser,
  Karol Kurach, and James Martens.
\newblock Adding gradient noise improves learning for very deep networks.
\newblock \emph{arXiv preprint arXiv:1511.06807}, 2015.

\bibitem[Negrea et~al.(2019)Negrea, Haghifam, Dziugaite, Khisti, and
  Roy]{negrea2019information}
Jeffrey Negrea, Mahdi Haghifam, Gintare~Karolina Dziugaite, Ashish Khisti, and
  Daniel~M Roy.
\newblock Information-theoretic generalization bounds for sgld via
  data-dependent estimates.
\newblock In \emph{Advances in Neural Information Processing Systems}, pages
  11015--11025, 2019.

\bibitem[Neu et~al.(2021)Neu, Dziugaite, Haghifam, and Roy]{neu2021information}
Gergely Neu, Gintare~Karolina Dziugaite, Mahdi Haghifam, and Daniel~M. Roy.
\newblock Information-theoretic generalization bounds for stochastic gradient
  descent.
\newblock In Mikhail Belkin and Samory Kpotufe, editors, \emph{Proceedings of
  Thirty Fourth Conference on Learning Theory}, volume 134 of \emph{Proceedings
  of Machine Learning Research}, pages 3526--3545. PMLR, 15--19 Aug 2021.

\bibitem[Neyshabur et~al.(2015)Neyshabur, Tomioka, and
  Srebro]{neyshabur2015norm}
Behnam Neyshabur, Ryota Tomioka, and Nathan Srebro.
\newblock Norm-based capacity control in neural networks.
\newblock In \emph{Conference on learning theory}, pages 1376--1401. PMLR,
  2015.

\bibitem[Neyshabur et~al.(2017)Neyshabur, Bhojanapalli, McAllester, and
  Srebro]{neyshabur2017exploring}
Behnam Neyshabur, Srinadh Bhojanapalli, David McAllester, and Nati Srebro.
\newblock Exploring generalization in deep learning.
\newblock \emph{Advances in neural information processing systems}, 30, 2017.

\bibitem[Ortiz-Jim{\'e}nez et~al.(2021)Ortiz-Jim{\'e}nez, Moosavi-Dezfooli, and
  Frossard]{ortiz2021can}
Guillermo Ortiz-Jim{\'e}nez, Seyed-Mohsen Moosavi-Dezfooli, and Pascal
  Frossard.
\newblock What can linearized neural networks actually say about
  generalization?
\newblock \emph{Advances in Neural Information Processing Systems},
  34:\penalty0 8998--9010, 2021.

\bibitem[Paninski(2003)]{paninski2003estimation}
Liam Paninski.
\newblock Estimation of entropy and mutual information.
\newblock \emph{Neural computation}, 15\penalty0 (6):\penalty0 1191--1253,
  2003.

\bibitem[Park et~al.(2019)Park, Kim, Lu, and Cho]{park2019relational}
Wonpyo Park, Dongju Kim, Yan Lu, and Minsu Cho.
\newblock Relational knowledge distillation.
\newblock In \emph{Proceedings of the IEEE/CVF Conference on Computer Vision
  and Pattern Recognition}, pages 3967--3976, 2019.

\bibitem[Passalis and Tefas(2018)]{passalis2018learning}
Nikolaos Passalis and Anastasios Tefas.
\newblock Learning deep representations with probabilistic knowledge transfer.
\newblock In \emph{Proceedings of the European Conference on Computer Vision
  (ECCV)}, pages 268--284, 2018.

\bibitem[Patrini et~al.(2017)Patrini, Rozza, Krishna~Menon, Nock, and
  Qu]{patrini2017making}
Giorgio Patrini, Alessandro Rozza, Aditya Krishna~Menon, Richard Nock, and
  Lizhen Qu.
\newblock Making deep neural networks robust to label noise: A loss correction
  approach.
\newblock In \emph{Proceedings of the IEEE Conference on Computer Vision and
  Pattern Recognition}, pages 1944--1952, 2017.

\bibitem[Paul et~al.(2021)Paul, Ganguli, and Dziugaite]{paul2021deep}
Mansheej Paul, Surya Ganguli, and Gintare~Karolina Dziugaite.
\newblock Deep learning on a data diet: Finding important examples early in
  training.
\newblock \emph{Advances in Neural Information Processing Systems},
  34:\penalty0 20596--20607, 2021.

\bibitem[Pensia et~al.(2018)Pensia, Jog, and Loh]{pensia2018generalization}
Ankit Pensia, Varun Jog, and Po-Ling Loh.
\newblock Generalization error bounds for noisy, iterative algorithms.
\newblock In \emph{2018 IEEE International Symposium on Information Theory
  (ISIT)}, pages 546--550. IEEE, 2018.

\bibitem[P{\'e}rez-Ortiz et~al.(2021)P{\'e}rez-Ortiz, Rivasplata, Shawe-Taylor,
  and Szepesv{\'a}ri]{perez2021tighter}
Mar{\i}a P{\'e}rez-Ortiz, Omar Rivasplata, John Shawe-Taylor, and Csaba
  Szepesv{\'a}ri.
\newblock Tighter risk certificates for neural networks.
\newblock \emph{Journal of Machine Learning Research}, 22, 2021.

\bibitem[Petroni et~al.(2019)Petroni, Rockt{\"a}schel, Riedel, Lewis, Bakhtin,
  Wu, and Miller]{petroni-etal-2019-language}
Fabio Petroni, Tim Rockt{\"a}schel, Sebastian Riedel, Patrick Lewis, Anton
  Bakhtin, Yuxiang Wu, and Alexander Miller.
\newblock Language models as knowledge bases?
\newblock In \emph{Proceedings of the 2019 Conference on Empirical Methods in
  Natural Language Processing and the 9th International Joint Conference on
  Natural Language Processing (EMNLP-IJCNLP)}, pages 2463--2473. Association
  for Computational Linguistics, November 2019.
\newblock \doi{10.18653/v1/D19-1250}.

\bibitem[Phuong and Lampert(2019)]{phuong19understand}
Mary Phuong and Christoph Lampert.
\newblock Towards understanding knowledge distillation.
\newblock In Kamalika Chaudhuri and Ruslan Salakhutdinov, editors,
  \emph{Proceedings of the 36th International Conference on Machine Learning},
  volume~97 of \emph{Proceedings of Machine Learning Research}, pages
  5142--5151. PMLR, 09--15 Jun 2019.

\bibitem[Pondenkandath et~al.(2018)Pondenkandath, Alberti, Puran, Ingold, and
  Liwicki]{pondenkandath2018leveraging}
Vinaychandran Pondenkandath, Michele Alberti, Sammer Puran, Rolf Ingold, and
  Marcus Liwicki.
\newblock Leveraging random label memorization for unsupervised pre-training.
\newblock \emph{arXiv preprint arXiv:1811.01640}, 2018.

\bibitem[Raginsky et~al.(2016)Raginsky, Rakhlin, Tsao, Wu, and
  Xu]{raginsky2016information}
Maxim Raginsky, Alexander Rakhlin, Matthew Tsao, Yihong Wu, and Aolin Xu.
\newblock Information-theoretic analysis of stability and bias of learning
  algorithms.
\newblock In \emph{2016 IEEE Information Theory Workshop (ITW)}, pages 26--30.
  IEEE, 2016.

\bibitem[Raginsky et~al.(2021)Raginsky, Rakhlin, and Xu]{raginsky202110}
Maxim Raginsky, Alexander Rakhlin, and Aolin Xu.
\newblock Information-theoretic stability and generalization.
\newblock \emph{Information-Theoretic Methods in Data Science}, page 302, 2021.

\bibitem[Rahaman et~al.(2019)Rahaman, Baratin, Arpit, Draxler, Lin, Hamprecht,
  Bengio, and Courville]{rahaman2019spectral}
Nasim Rahaman, Aristide Baratin, Devansh Arpit, Felix Draxler, Min Lin, Fred
  Hamprecht, Yoshua Bengio, and Aaron Courville.
\newblock On the spectral bias of neural networks.
\newblock In \emph{International Conference on Machine Learning}, pages
  5301--5310. PMLR, 2019.

\bibitem[Reed et~al.(2014)Reed, Lee, Anguelov, Szegedy, Erhan, and
  Rabinovich]{Reed2014TrainingDN}
Scott~E. Reed, Honglak Lee, Dragomir Anguelov, Christian Szegedy, Dumitru
  Erhan, and Andrew Rabinovich.
\newblock Training deep neural networks on noisy labels with bootstrapping.
\newblock \emph{CoRR}, abs/1412.6596, 2014.

\bibitem[Ren et~al.(2018)Ren, Zeng, Yang, and Urtasun]{ren2018learning}
Mengye Ren, Wenyuan Zeng, Bin Yang, and Raquel Urtasun.
\newblock Learning to reweight examples for robust deep learning.
\newblock In \emph{International conference on machine learning}, pages
  4334--4343. PMLR, 2018.

\bibitem[Ren et~al.(2022)Ren, Guo, and Sutherland]{ren2022better}
Yi~Ren, Shangmin Guo, and Danica~J. Sutherland.
\newblock Better supervisory signals by observing learning paths.
\newblock In \emph{International Conference on Learning Representations}, 2022.

\bibitem[Rezagholizadeh et~al.(2022)Rezagholizadeh, Jafari, Saladi, Sharma,
  Pasand, and Ghodsi]{rezagholizadeh-etal-2022-pro}
Mehdi Rezagholizadeh, Aref Jafari, Puneeth~S.M. Saladi, Pranav Sharma,
  Ali~Saheb Pasand, and Ali Ghodsi.
\newblock Pro-{KD}: Progressive distillation by following the footsteps of the
  teacher.
\newblock In \emph{Proceedings of the 29th International Conference on
  Computational Linguistics}, pages 4714--4727, 2022.

\bibitem[{Rissanen}(1996)]{rissanen1996fisher}
J.~J. {Rissanen}.
\newblock Fisher information and stochastic complexity.
\newblock \emph{IEEE Transactions on Information Theory}, 42\penalty0
  (1):\penalty0 40--47, Jan 1996.
\newblock ISSN 0018-9448.
\newblock \doi{10.1109/18.481776}.

\bibitem[Rivasplata et~al.(2020)Rivasplata, Kuzborskij, Szepesv{\'a}ri, and
  Shawe-Taylor]{rivasplata2020pac}
Omar Rivasplata, Ilja Kuzborskij, Csaba Szepesv{\'a}ri, and John Shawe-Taylor.
\newblock Pac-bayes analysis beyond the usual bounds.
\newblock \emph{Advances in Neural Information Processing Systems},
  33:\penalty0 16833--16845, 2020.

\bibitem[Rodr{\'\i}guez-G{\'a}lvez et~al.(2021)Rodr{\'\i}guez-G{\'a}lvez,
  Bassi, Thobaben, and Skoglund]{rodriguez2021random}
Borja Rodr{\'\i}guez-G{\'a}lvez, Germ{\'a}n Bassi, Ragnar Thobaben, and Mikael
  Skoglund.
\newblock On random subset generalization error bounds and the stochastic
  gradient langevin dynamics algorithm.
\newblock In \emph{2020 IEEE Information Theory Workshop (ITW)}, pages 1--5.
  IEEE, 2021.

\bibitem[Rodr{\'\i}guez~G{\'a}lvez et~al.(2021)Rodr{\'\i}guez~G{\'a}lvez,
  Bassi, Thobaben, and Skoglund]{rodriguez2021tighter}
Borja Rodr{\'\i}guez~G{\'a}lvez, Germ{\'a}n Bassi, Ragnar Thobaben, and Mikael
  Skoglund.
\newblock Tighter expected generalization error bounds via wasserstein
  distance.
\newblock \emph{Advances in Neural Information Processing Systems}, 34, 2021.

\bibitem[Romero et~al.(2015)Romero, Kahou, Montréal, Bengio, Montréal,
  Romero, Ballas, Kahou, Chassang, Gatta, and Bengio]{Romero15fitnets:hints}
Adriana Romero, Samira~Ebrahimi Kahou, Polytechnique Montréal, Y.~Bengio,
  Université~De Montréal, Adriana Romero, Nicolas Ballas, Samira~Ebrahimi
  Kahou, Antoine Chassang, Carlo Gatta, and Yoshua Bengio.
\newblock Fitnets: Hints for thin deep nets.
\newblock In \emph{in International Conference on Learning Representations
  (ICLR}, 2015.

\bibitem[Russo and Zou(2019)]{russo2019much}
Daniel Russo and James Zou.
\newblock How much does your data exploration overfit? controlling bias via
  information usage.
\newblock \emph{IEEE Transactions on Information Theory}, 66\penalty0
  (1):\penalty0 302--323, 2019.

\bibitem[Sauer(1972)]{sauer1972density}
Norbert Sauer.
\newblock On the density of families of sets.
\newblock \emph{Journal of Combinatorial Theory, Series A}, 13\penalty0
  (1):\penalty0 145--147, 1972.

\bibitem[Saxe et~al.(2019)Saxe, Bansal, Dapello, Advani, Kolchinsky, Tracey,
  and Cox]{saxe2019information}
Andrew~M Saxe, Yamini Bansal, Joel Dapello, Madhu Advani, Artemy Kolchinsky,
  Brendan~D Tracey, and David~D Cox.
\newblock On learning.
\newblock \emph{Journal of Statistical Mechanics: Theory and Experiment},
  2019\penalty0 (12):\penalty0 124020, 2019.

\bibitem[Scholkopf and Smola(2001)]{Scholkopf:2001}
Bernhard Scholkopf and Alexander~J. Smola.
\newblock \emph{Learning with Kernels: Support Vector Machines, Regularization,
  Optimization, and Beyond}.
\newblock MIT Press, Cambridge, MA, USA, 2001.
\newblock ISBN 0262194759.

\bibitem[Shawe-Taylor and Williamson(1997)]{ShaweTaylor1997APA}
John Shawe-Taylor and Robert~C. Williamson.
\newblock A pac analysis of a bayesian estimator.
\newblock In \emph{COLT '97}, 1997.

\bibitem[Shelah(1972)]{shelah1972combinatorial}
Saharon Shelah.
\newblock A combinatorial problem; stability and order for models and theories
  in infinitary languages.
\newblock \emph{Pacific Journal of Mathematics}, 41\penalty0 (1):\penalty0
  247--261, 1972.

\bibitem[Shi et~al.(2021)Shi, Song, Zhou, Li, and Li]{shi2021prokt}
Wenxian Shi, Yuxuan Song, Hao Zhou, Bohan Li, and Lei Li.
\newblock Follow your path: A progressive method for knowledge distillation.
\newblock In Nuria Oliver, Fernando P{\'e}rez-Cruz, Stefan Kramer, Jesse Read,
  and Jose~A. Lozano, editors, \emph{Machine Learning and Knowledge Discovery
  in Databases. Research Track}, pages 596--611, Cham, 2021. Springer
  International Publishing.
\newblock ISBN 978-3-030-86523-8.

\bibitem[Shokri et~al.(2017)Shokri, Stronati, Song, and
  Shmatikov]{shokri2017membership}
Reza Shokri, Marco Stronati, Congzheng Song, and Vitaly Shmatikov.
\newblock Membership inference attacks against machine learning models.
\newblock In \emph{2017 IEEE Symposium on Security and Privacy (SP)}, pages
  3--18. IEEE, 2017.

\bibitem[Shu et~al.(2019)Shu, Xie, Yi, Zhao, Zhou, Xu, and Meng]{shu2019meta}
Jun Shu, Qi~Xie, Lixuan Yi, Qian Zhao, Sanping Zhou, Zongben Xu, and Deyu Meng.
\newblock Meta-weight-net: Learning an explicit mapping for sample weighting.
\newblock In \emph{Advances in Neural Information Processing Systems}, pages
  1917--1928, 2019.

\bibitem[Shwartz-Ziv and Alemi(2020)]{ziv2020information}
Ravid Shwartz-Ziv and Alexander~A Alemi.
\newblock Information in infinite ensembles of infinitely-wide neural networks.
\newblock In \emph{Symposium on Advances in Approximate Bayesian Inference},
  pages 1--17. PMLR, 2020.

\bibitem[Shwartz-Ziv and Tishby(2017)]{shwartzziv2017opening}
Ravid Shwartz-Ziv and Naftali Tishby.
\newblock Opening the black box of deep neural networks via information.
\newblock \emph{arXiv preprint arXiv:1703.00810}, 2017.

\bibitem[Simonyan and Zisserman(2015)]{vgg2014simonyan2014}
Karen Simonyan and Andrew Zisserman.
\newblock Very deep convolutional networks for large-scale image recognition.
\newblock In Yoshua Bengio and Yann LeCun, editors, \emph{3rd International
  Conference on Learning Representations, {ICLR} 2015, San Diego, CA, USA, May
  7-9, 2015, Conference Track Proceedings}, 2015.

\bibitem[Song et~al.(2022)Song, Kim, Park, Shin, and Lee]{song2022learning}
Hwanjun Song, Minseok Kim, Dongmin Park, Yooju Shin, and Jae-Gil Lee.
\newblock Learning from noisy labels with deep neural networks: A survey.
\newblock \emph{IEEE Transactions on Neural Networks and Learning Systems},
  2022.

\bibitem[Sorscher et~al.(2022)Sorscher, Geirhos, Shekhar, Ganguli, and
  Morcos]{sorscher2022beyond}
Ben Sorscher, Robert Geirhos, Shashank Shekhar, Surya Ganguli, and Ari Morcos.
\newblock Beyond neural scaling laws: beating power law scaling via data
  pruning.
\newblock \emph{Advances in Neural Information Processing Systems},
  35:\penalty0 19523--19536, 2022.

\bibitem[Soudry et~al.(2018)Soudry, Hoffer, Nacson, Gunasekar, and
  Srebro]{soudry2018implicit}
Daniel Soudry, Elad Hoffer, Mor~Shpigel Nacson, Suriya Gunasekar, and Nathan
  Srebro.
\newblock The implicit bias of gradient descent on separable data.
\newblock \emph{The Journal of Machine Learning Research}, 19\penalty0
  (1):\penalty0 2822--2878, 2018.

\bibitem[Stanton et~al.(2021)Stanton, Izmailov, Kirichenko, Alemi, and
  Wilson]{stanton2021does}
Samuel Stanton, Pavel Izmailov, Polina Kirichenko, Alexander~A Alemi, and
  Andrew~G Wilson.
\newblock Does knowledge distillation really work?
\newblock \emph{Advances in Neural Information Processing Systems},
  34:\penalty0 6906--6919, 2021.

\bibitem[Steinke and Zakynthinou(2020)]{steinke2020reasoning}
Thomas Steinke and Lydia Zakynthinou.
\newblock {R}easoning {A}bout {G}eneralization via {C}onditional {M}utual
  {I}nformation.
\newblock In Jacob Abernethy and Shivani Agarwal, editors, \emph{Proceedings of
  Thirty Third Conference on Learning Theory}, volume 125 of \emph{Proceedings
  of Machine Learning Research}, pages 3437--3452. PMLR, 09--12 Jul 2020.

\bibitem[Sukhbaatar et~al.(2014)Sukhbaatar, Bruna, Paluri, Bourdev, and
  Fergus]{Sukhbaatar2014TrainingCN}
Sainbayar Sukhbaatar, Joan Bruna, Manohar Paluri, Lubomir~D. Bourdev, and Rob
  Fergus.
\newblock Training convolutional networks with noisy labels.
\newblock In \emph{ICLR 2015}, 2014.

\bibitem[Szegedy et~al.(2014)Szegedy, Zaremba, Sutskever, Bruna, Erhan,
  Goodfellow, and Fergus]{adv-examples}
Christian Szegedy, Wojciech Zaremba, Ilya Sutskever, Joan Bruna, Dumitru Erhan,
  Ian Goodfellow, and Rob Fergus.
\newblock Intriguing properties of neural networks.
\newblock In \emph{International Conference on Learning Representations}, 2014.

\bibitem[Tanaka et~al.(2018)Tanaka, Ikami, Yamasaki, and
  Aizawa]{tanaka2018joint}
Daiki Tanaka, Daiki Ikami, Toshihiko Yamasaki, and Kiyoharu Aizawa.
\newblock Joint optimization framework for learning with noisy labels.
\newblock In \emph{Proceedings of the IEEE Conference on Computer Vision and
  Pattern Recognition}, pages 5552--5560, 2018.

\bibitem[Tang et~al.(2020)Tang, Shivanna, Zhao, Lin, Singh, Chi, and
  Jain]{tang2020understanding}
Jiaxi Tang, Rakesh Shivanna, Zhe Zhao, Dong Lin, Anima Singh, Ed~H Chi, and
  Sagar Jain.
\newblock Understanding and improving knowledge distillation.
\newblock \emph{arXiv preprint arXiv:2002.03532}, 2020.

\bibitem[Tian et~al.(2020)Tian, Krishnan, and Isola]{Tian2020Contrastive}
Yonglong Tian, Dilip Krishnan, and Phillip Isola.
\newblock Contrastive representation distillation.
\newblock In \emph{International Conference on Learning Representations}, 2020.

\bibitem[Tirumala et~al.(2022)Tirumala, Markosyan, Zettlemoyer, and
  Aghajanyan]{tirumala2022memorization}
Kushal Tirumala, Aram~H. Markosyan, Luke Zettlemoyer, and Armen Aghajanyan.
\newblock Memorization without overfitting: Analyzing the training dynamics of
  large language models.
\newblock In \emph{Advances in Neural Information Processing Systems}, 2022.

\bibitem[Toneva et~al.(2019)Toneva, Sordoni, des Combes, Trischler, Bengio, and
  Gordon]{toneva2018an}
Mariya Toneva, Alessandro Sordoni, Remi~Tachet des Combes, Adam Trischler,
  Yoshua Bengio, and Geoffrey~J. Gordon.
\newblock An empirical study of example forgetting during deep neural network
  learning.
\newblock In \emph{International Conference on Learning Representations}, 2019.

\bibitem[Tung and Mori(2019)]{tung2019similarity}
Frederick Tung and Greg Mori.
\newblock Similarity-preserving knowledge distillation.
\newblock In \emph{Proceedings of the IEEE/CVF International Conference on
  Computer Vision}, pages 1365--1374, 2019.

\bibitem[Vapnik(1998)]{vapnik1998statistical}
Vladimir Vapnik.
\newblock \emph{Statistical learning theory}.
\newblock Wiley, 1998.
\newblock ISBN 978-0-471-03003-4.

\bibitem[Verdu and Weissman(2008)]{verdu2008information}
Sergio Verdu and Tsachy Weissman.
\newblock The information lost in erasures.
\newblock \emph{IEEE Transactions on Information Theory}, 54\penalty0
  (11):\penalty0 5030--5058, 2008.

\bibitem[Vershynin(2018)]{vershynin2018high}
Roman Vershynin.
\newblock \emph{High-dimensional probability: An introduction with applications
  in data science}, volume~47.
\newblock Cambridge university press, 2018.

\bibitem[Wang et~al.(2016)Wang, Lei, and Fienberg]{wang2016average}
Yu-Xiang Wang, Jing Lei, and Stephen~E Fienberg.
\newblock On-average kl-privacy and its equivalence to generalization for
  max-entropy mechanisms.
\newblock In \emph{International Conference on Privacy in Statistical
  Databases}, pages 121--134. Springer, 2016.

\bibitem[Welling and Teh(2011)]{welling2011bayesian}
Max Welling and Yee~W Teh.
\newblock Bayesian learning via stochastic gradient langevin dynamics.
\newblock In \emph{Proceedings of the 28th international conference on machine
  learning (ICML-11)}, pages 681--688. Citeseer, 2011.

\bibitem[Williams and Beer(2010)]{williams-beer}
Paul~L. Williams and Randall~D. Beer.
\newblock Nonnegative decomposition of multivariate information.
\newblock \emph{CoRR}, abs/1004.2515, 2010.

\bibitem[Wu et~al.(2020)Wu, Dobriban, and Davidson]{wu2020deltagrad}
Yinjun Wu, Edgar Dobriban, and Susan Davidson.
\newblock Deltagrad: Rapid retraining of machine learning models.
\newblock In \emph{International Conference on Machine Learning}, pages
  10355--10366. PMLR, 2020.

\bibitem[Xiao et~al.(2015)Xiao, Xia, Yang, Huang, and Wang]{xiao2015learning}
Tong Xiao, Tian Xia, Yi~Yang, Chang Huang, and Xiaogang Wang.
\newblock Learning from massive noisy labeled data for image classification.
\newblock In \emph{Proceedings of the IEEE conference on computer vision and
  pattern recognition}, pages 2691--2699, 2015.

\bibitem[Xu and Raginsky(2017)]{xu2017information}
Aolin Xu and Maxim Raginsky.
\newblock Information-theoretic analysis of generalization capability of
  learning algorithms.
\newblock In \emph{Advances in Neural Information Processing Systems}, pages
  2524--2533, 2017.

\bibitem[Xu et~al.(2019)Xu, Cao, Kong, and Wang]{dmi}
Yilun Xu, Peng Cao, Yuqing Kong, and Yizhou Wang.
\newblock L\_dmi: A novel information-theoretic loss function for training deep
  nets robust to label noise.
\newblock In \emph{Advances in Neural Information Processing Systems}, pages
  6222--6233, 2019.

\bibitem[Yang et~al.(2019)Yang, Xie, Su, and Yuille]{yang2019snapshot}
Chenglin Yang, Lingxi Xie, Chi Su, and Alan~L Yuille.
\newblock Snapshot distillation: Teacher-student optimization in one
  generation.
\newblock In \emph{Proceedings of the IEEE/CVF Conference on Computer Vision
  and Pattern Recognition}, pages 2859--2868, 2019.

\bibitem[Yang et~al.(2022)Yang, Mao, and Chaudhari]{yang2022does}
Rubing Yang, Jialin Mao, and Pratik Chaudhari.
\newblock Does the data induce capacity control in deep learning?
\newblock In \emph{International Conference on Machine Learning}, pages
  25166--25197. PMLR, 2022.

\bibitem[Yao et~al.(2019)Yao, Wu, Zhang, Tsang, and Sun]{safeguard2019}
Jiangchao Yao, Hao Wu, Ya~Zhang, Ivor Tsang, and Jun Sun.
\newblock Safeguarded dynamic label regression for noisy supervision.
\newblock \emph{Proceedings of the AAAI Conference on Artificial Intelligence},
  33:\penalty0 9103--9110, 07 2019.
\newblock \doi{10.1609/aaai.v33i01.33019103}.

\bibitem[Yeom et~al.(2018)Yeom, Giacomelli, Fredrikson, and
  Jha]{yeom2018privacy}
Samuel Yeom, Irene Giacomelli, Matt Fredrikson, and Somesh Jha.
\newblock Privacy risk in machine learning: Analyzing the connection to
  overfitting.
\newblock In \emph{2018 IEEE 31st Computer Security Foundations Symposium
  (CSF)}, pages 268--282. IEEE, 2018.

\bibitem[Yin et~al.(2020)Yin, Tucker, Zhou, Levine, and
  Finn]{yin2020metalearning}
Mingzhang Yin, George Tucker, Mingyuan Zhou, Sergey Levine, and Chelsea Finn.
\newblock Meta-learning without memorization.
\newblock In \emph{International Conference on Learning Representations}, 2020.

\bibitem[Yoon et~al.(2020)Yoon, Arik, and Pfister]{yoon2020data}
Jinsung Yoon, Sercan Arik, and Tomas Pfister.
\newblock Data valuation using reinforcement learning.
\newblock In \emph{International Conference on Machine Learning}, pages
  10842--10851. PMLR, 2020.

\bibitem[Yu et~al.(2019)Yu, Han, Yao, Niu, Tsang, and Sugiyama]{yu2019does}
Xingrui Yu, Bo~Han, Jiangchao Yao, Gang Niu, Ivor Tsang, and Masashi Sugiyama.
\newblock How does disagreement help generalization against label corruption?
\newblock In \emph{International Conference on Machine Learning}, pages
  7164--7173. PMLR, 2019.

\bibitem[Yuan et~al.(2020)Yuan, Tay, Li, Wang, and Feng]{yuan2020revisiting}
Li~Yuan, Francis~EH Tay, Guilin Li, Tao Wang, and Jiashi Feng.
\newblock Revisiting knowledge distillation via label smoothing regularization.
\newblock In \emph{Proceedings of the IEEE/CVF Conference on Computer Vision
  and Pattern Recognition}, pages 3903--3911, 2020.

\bibitem[Zagoruyko and Komodakis(2017)]{zagoruyko2017paying}
Sergey Zagoruyko and Nikos Komodakis.
\newblock Paying more attention to attention: Improving the performance of
  convolutional neural networks via attention transfer.
\newblock In \emph{International Conference on Learning Representations}, 2017.

\bibitem[Zancato et~al.(2020)Zancato, Achille, Ravichandran, Bhotika, and
  Soatto]{zancato2020predicting}
Luca Zancato, Alessandro Achille, Avinash Ravichandran, Rahul Bhotika, and
  Stefano Soatto.
\newblock Predicting training time without training.
\newblock \emph{Advances in Neural Information Processing Systems 33}, 2020.

\bibitem[Zhang et~al.(2017)Zhang, Bengio, Hardt, Recht, and
  Vinyals]{zhang2016understanding}
Chiyuan Zhang, Samy Bengio, Moritz Hardt, Benjamin Recht, and Oriol Vinyals.
\newblock Understanding deep learning requires rethinking generalization.
\newblock In \emph{5th International Conference on Learning Representations,
  {ICLR} 2017, Toulon, France, April 24-26, 2017, Conference Track
  Proceedings}. OpenReview.net, 2017.

\bibitem[Zhang et~al.(2020)Zhang, Bengio, Hardt, Mozer, and
  Singer]{Zhang2020Identity}
Chiyuan Zhang, Samy Bengio, Moritz Hardt, Michael~C. Mozer, and Yoram Singer.
\newblock Identity crisis: Memorization and generalization under extreme
  overparameterization.
\newblock In \emph{International Conference on Learning Representations}, 2020.

\bibitem[Zhang et~al.(2022)Zhang, Zhai, Littwin, and
  Susskind]{zhang2022learning}
Ruixiang Zhang, Shuangfei Zhai, Etai Littwin, and Joshua~M. Susskind.
\newblock Learning representation from neural fisher kernel with low-rank
  approximation.
\newblock In \emph{International Conference on Learning Representations}, 2022.

\bibitem[Zhang et~al.(2018)Zhang, Xiang, Hospedales, and Lu]{zhang2018deep}
Ying Zhang, Tao Xiang, Timothy~M Hospedales, and Huchuan Lu.
\newblock Deep mutual learning.
\newblock In \emph{Proceedings of the IEEE conference on computer vision and
  pattern recognition}, pages 4320--4328, 2018.

\bibitem[Zhang and Sabuncu(2018)]{gce}
Zhilu Zhang and Mert Sabuncu.
\newblock Generalized cross entropy loss for training deep neural networks with
  noisy labels.
\newblock In \emph{Advances in neural information processing systems}, pages
  8778--8788, 2018.

\bibitem[Zhou et~al.(2018)Zhou, Fan, Cui, Bian, Zhu, and Gai]{zhou2018rocket}
Guorui Zhou, Ying Fan, Runpeng Cui, Weijie Bian, Xiaoqiang Zhu, and Kun Gai.
\newblock Rocket launching: A universal and efficient framework for training
  well-performing light net.
\newblock In \emph{Proceedings of the AAAI Conference on Artificial
  Intelligence}, volume~32, 2018.

\bibitem[Zhou et~al.(2021)Zhou, Tian, and Liu]{Zhou2021IndividuallyCI}
Ruida Zhou, Chao Tian, and Tie Liu.
\newblock Individually conditional individual mutual information bound on
  generalization error.
\newblock \emph{2021 IEEE International Symposium on Information Theory
  (ISIT)}, pages 670--675, 2021.

\bibitem[Zhou et~al.(2019)Zhou, Veitch, Austern, Adams, and
  Orbanz]{zhou2018nonvacuous}
Wenda Zhou, Victor Veitch, Morgane Austern, Ryan~P. Adams, and Peter Orbanz.
\newblock Non-vacuous generalization bounds at the imagenet scale: a
  {PAC}-bayesian compression approach.
\newblock In \emph{International Conference on Learning Representations}, 2019.

\end{thebibliography}
\end{singlespace}






\crefalias{subsection}{appendix}
\crefalias{section}{appendix}

\phantomsection
\addcontentsline{toc}{chapter}{Appendices}%
\chapter*{Appendices}
\renewcommand\thesection{\Alph{section}}
\renewcommand*{\thesubsection}{\Alph{section}.\arabic{subsection}}
\begingroup
\numberwithin{equation}{section}

\section{Proofs}\label{app:proofs}
This appendix presents the missing proofs.
\subsection{Proof of \texorpdfstring{\cref{thm:fano}}{TEXT}}

\begin{proof}
For each example we consider the following Markov chain:
\begin{equation}
Y_i \rightarrow \left[\begin{matrix} \bs{X} \\ \bs{Y} \end{matrix}\right] \rightarrow \left[\begin{matrix} X_i \\ W \end{matrix}\right] \rightarrow \widehat{Y}_i.
\end{equation}
In this setup, Fano's inequality gives a lower bound for the error probability:
\begin{equation}
H(E_i) + P(E_i = 1) \log \left(\lvert \YY \rvert - 1\right) \ge H(Y_i \mid X_i, W),
\label{eq:original-fano}
\end{equation}
which can be written as:
\begin{equation}
P(E_i = 1) \ge \frac{H(Y_i \mid X_i, W) - H(E_i)}{ \log \left(\lvert \YY \rvert - 1\right)}.
\end{equation}
Summing this inequality for $i=1,\ldots,n$ we get
\begin{align}
\sum_{i=1}^n \mathbb{P}(E_i = 1) &\ge \frac{\sum_{i=1}^n\left(H(Y_i \mid X_i, W) - H(E_i)\right)}{ \log \left(\lvert \YY \rvert - 1\right)}\\
&\ge \frac{\sum_{i=1}^n\left(H(Y_i \mid \bs{X}, W) - H(E_i)\right)}{ \log \left(\lvert \YY \rvert - 1\right)}\\
&\ge \frac{H(\bs{Y} \mid \bs{X}, W) - \sum_{i=1}^n H(E_i)}{ \log \left(\lvert \YY \rvert - 1\right)}.
\end{align}
The correctness of the last step follows from the fact that total correlation is always non-negative~\citep{cover}:
\begin{equation}
\sum_{i=1}^n H(Y_i \mid \bs{X}, W) - H(\bs{Y} \mid \bs{X}, W) = \mathrm{TC}(\bs{Y} \mid \bs{X}, W) \ge 0.
\end{equation}
Finally, using the fact that $H(\bs{Y} \mid \bs{X}, W) = H(\bs{Y} \mid \bs{X}) - I(W; \bs{Y} \mid \bs{X})$, we get that the desired result:
\begin{equation}
\mathbb{E}\left[\sum_{i=1}^n E_i\right] \ge \frac{H(\bs{Y} \mid \bs{X}) - I(W ; \bs{Y} \mid \bs{X}) - \sum_{i=1}^n H(E_i)}{ \log \left(\lvert \YY \rvert - 1\right)}.
\label{eq:lower-bound-num-errors}
\end{equation}
\end{proof}

\subsection{Proof of \texorpdfstring{\cref{prop:gradient_capacity}}{TEXT}}

\begin{proof}
Given that $\epsilon_t$ and $\mu_t$ are independent, let us bound the expected L2 norm of $G_t$:
\begin{align}
\mathbb{E}\left[ G_t^T G_t \right] &= \mathbb{E}\left[ (\epsilon_t + \mu_t)^T (\epsilon_t + \mu_t) \right]\\
&=\mathbb{E}\left[\epsilon_t^T \epsilon_t\right] + \mathbb{E}\left[\mu_t^T\mu_t\right]\\
&\le d \sigma_q^2  + L^2.
\end{align}
Among all random variables $V \in \bR^d$ with $\mathbb{E}[V^T V] \le C$, the Gaussian distribution $\mathcal{N}\left(0, \frac{C}{d} I_d\right)$ has the largest entropy, given by $\frac{d}{2}\log\left(\frac{2\pi e C}{d}\right)$.
Therefore,
\begin{align}
H(G_t) \le \frac{d}{2}\log\left(\frac{2\pi e (d \sigma_q^2 + L^2)}{d}\right).
\end{align}
With this we can upper bound the $I(G_t ; \bs{Y} \mid \bs{X}, G_{<t})$ as follows:
\begin{align}
I(G_t ; \bs{Y} \mid \bs{X}, G_{<t}) &= H(G_t \mid \bs{X}, G_{<t}) - H(G_t \mid \bs{X}, \bs{Y}, G_{<t})\\
&= H(G_t \mid \bs{X}, G_{<t}) - H(\epsilon_t)\\
&\le \frac{d}{2}\log\left(\frac{2\pi e (d \sigma_q^2 + L^2)}{d}\right)\numberthis\label{eq:max-ent} - \frac{d}{2}\log\left(2\pi e \sigma_q^2\right)\\
&= \frac{d}{2}\log\left(1 + \frac{L^2}{d\sigma_q^2}\right).
\end{align}
\end{proof}
Note that the proof will work for arbitrary $\epsilon_t$ that has zero mean and independent components, where the L2 norm of each component is bounded by $\sigma_q^2$.
This holds because in such cases $H(\epsilon_t) \le \frac{d}{2} \log(2\pi e \sigma^2_q)$ (as Gaussians have highest entropy for fixed L2 norm) and the transition of \cref{eq:max-ent} remains correct.
Therefore, the same result holds when $\epsilon_t$ is sampled from a product of univariate zero-mean Laplace distributions with scale parameter $\sigma_q/\sqrt{2}$ (which makes the second moment equal to $\sigma_q^2$).

A similar result has been derived by \citet{pensia2018generalization} (lemma 5) to bound $I(W_t ; (X_t, Y_t) \mid W_{t-1})$.

\subsection{Proof of \texorpdfstring{\cref{prop:unique-info-upper-bound}}{TEXT}}
\begin{proof}
Note that by definition $\forall S=s, i\in[n],\ Q_{W|S=s}=P_{W|S=s} \ll P_{W|S_{-i}=s_{-i}}$.
By the assumption, we also have that $\forall S=s,i\in[n],\ P_{W|S_{-i}=s_{-i}} \ll Q_{W|S=s_{-i}}$.
By transitivity, $\forall S=s, i\in[n],\ P_{W|S=s} \ll  Q_{W|S=s_{-i}}$.
\begin{align*}
     &\KL{P_{W|S}}{Q_{W|S_{-i}}} - \KL{P_{W|S_{-i}}}{Q_{W|S_{-i}}}\\
     &\hspace{2em}=\E_{P_{Z_i}}\E_{P_{W|S}}\sbr{\log \frac{\text{d} P_{W|S}}{\text{d} Q_{W|S_{-i}}}} - \E_{P_{W|S_{-i}}}\sbr{\log \frac{\text{d} P_{W|S_{-i}}}{\text{d} Q_{W|S_{-i}}}}\numberthis\\
     &\hspace{2em}= \E_{P_{Z_i}}\E_{P_{W|S}}\sbr{\log \frac{\text{d} P_{W|S}}{\text{d} Q_{W|S_{-i}}}} - \E_{P_{Z_i}}\E_{P_{W|S}}\sbr{\log \frac{\text{d} P_{W|S_{-i}}}{\text{d} Q_{W|S_{-i}}}}\numberthis\\
     &\hspace{2em}= \E_{P_{Z_i}}\E_{P_{W|S}}\sbr{\log \frac{\text{d} P_{W|S}}{\text{d} Q_{W|S_{-i}}} - \log \frac{\text{d} P_{W|S_{-i}}}{\text{d} Q_{W|S_{-i}}}}\numberthis\\
     &\hspace{2em}= \E_{P_{Z_i}}\E_{P_{W|S}}\sbr{\log \frac{\text{d} P_{W|S}}{\text{d} P_{W|S_{-i}}}}\numberthis\\
     &\hspace{2em}=\KL{P_{W|S}}{P_{W|S_{-i}}}.\numberthis
\end{align*}
As KL-divergence is non-negative, we get that $\KL{P_{W|S}}{Q_{W|S_{-i}}} \ge \KL{P_{W|S}}{P_{W|S_{-i}}}$.
\end{proof}

\subsection{Proof of \texorpdfstring{\cref{prop:equality-of-non-smooth-and-smooth-SIs}}{TEXT}}

\begin{proof}
Assuming $\Lambda(w)$ is approximately constant around $w^*$, the steady-state distributions of \cref{eq:SGD-SDE-plain} is a Gaussian distribution with mean $w^*$ and covariance $\Sigma$ such that:
\begin{equation}
H \Sigma + \Sigma H^T = \frac{\eta}{b}\Lambda(w^*),
\end{equation}
where $H = (\nabla_w f_0(\bs{x}) \nabla_w f_0(\bs{x})^T + \lambda I)$ is the Hessian of the loss function~\citep{mandt2017stochastic}.
This can be verified by checking that the distribution $\mathcal{N}(\cdot; w^*, \Sigma)$ satisfies the Fokker-Planck equation (see \cref{sec:sgd-final-dist}).
Having $Q^\mathrm{SGD}_{W|S=s}$ given by the Gaussian density $\mathcal{N}(w ; w^*, \Sigma)$ and $Q^\mathrm{SGD}_{W|S=s_{-i}}$ by the Gaussian density $\mathcal{N}(w; w^*_{-i}, \Sigma_{-i})$, we have that
\begin{align}
\USI(z_i, Q^\mathrm{SGD}) = \frac{1}{2} \left((w^* - w^*_{-i})^T\Sigma_{-i}^{-1}(w^* - w^*_{-i}) + \mathrm{tr}(\Sigma_{-i}^{-1}\Sigma) + \log |\Sigma_{-i} \Sigma^{-1}| - d\right).
\label{eq:appendix-KL-stability-KL-between-Gaussians}
\end{align}
By the assumption that SGD steady-state covariance stays constant after removing an example, i.e. $\Sigma_{-i} = \Sigma$, equation \cref{eq:appendix-KL-stability-KL-between-Gaussians} simplifies to:
\begin{align*}
\KL{Q^\mathrm{SGD}_{W|S=s}}{Q^\mathrm{SGD}_{W|S=s_{-i}}} &= \frac{1}{2} (w^* - w^*_{-i})^T \Sigma^{-1}(w^* - w^*_{-i}). \numberthis\label{eq:appendix-weights-stability-simplified}
\end{align*}
By the definition of the $Q^\mathrm{ERM}$ algorithm and smooth sample information, this is equal to $\USI_\Sigma(z_i, Q^\mathrm{ERM})$.
\end{proof}

\subsection{Proof of \texorpdfstring{\cref{lemma:mutual-info-lemma}}{TEXT}}
The proof of \cref{lemma:mutual-info-lemma} uses the Donsker-Varadhan inequality and a simple result on the moment generating function of a square of a Gaussian random variable.
\begin{fact}[Donsker-Varadhan inequality, Thm. 5.2.1 of \citet{gray2011entropy}]
Let $P$ and $Q$ be two probability measures defined on the same measurable space $(\Omega, \FF)$, such that $P$ is absolutely continuous with respect to $Q$. Then the Donsker-Varadhan dual characterization of Kullback-Leibler divergence states that
\begin{equation}
    \KL{P}{Q} = \sup_{f} \cbr{\int_\Omega f dP  - \log\int_\Omega e^{f}dQ},
\end{equation}
where $f : \Omega \rightarrow \bR$ is a measurable function, such that both integrals above exist.
\label{fact:mi-donsker-varadhan}
\end{fact}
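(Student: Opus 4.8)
The plan is to prove the two inequalities whose conjunction gives the claimed equality, working throughout with the Radon--Nikodym derivative $p := \frac{dP}{dQ}$, which exists because $P \ll Q$ and satisfies $\KL{P}{Q} = \int_\Omega \log p \, dP = \int_\Omega p \log p \, dQ$. The two directions are established by, respectively, a tilted-measure argument and an explicit near-optimal choice of $f$.

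For the direction $\sup_f\{\cdots\} \le \KL{P}{Q}$, I would fix an admissible real-valued $f$ and reduce to the nontrivial case $Z := \int_\Omega e^f \, dQ \in (0,\infty)$ (when $Z = \infty$ the objective is $-\infty$, and $Z = 0$ is impossible for real-valued $f$). I would then introduce the tilted (Gibbs) measure $Q_f$ defined by $\frac{dQ_f}{dQ} = e^f / Z$. Since $f$ is finite, $Q_f \sim Q \gg P$, so $\frac{dP}{dQ_f} = p \, Z e^{-f}$, and a direct computation gives the decomposition $\KL{P}{Q_f} = \KL{P}{Q} + \log Z - \int_\Omega f \, dP$. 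Non-negativity of KL divergence ($\KL{P}{Q_f} \ge 0$, itself a consequence of Jensen's inequality) rearranges to $\int_\Omega f \, dP - \log Z \le \KL{P}{Q}$, and taking the supremum over $f$ yields the bound.

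For achievability $\sup_f\{\cdots\} \ge \KL{P}{Q}$, the natural candidate is $f = \log p$. When $\KL{P}{Q} < \infty$ this choice (clamped to $-n$ on the $P$-null set $\{p = 0\}$ to keep it real-valued, then letting $n \to \infty$) gives $\int_\Omega e^f \, dQ = \int_\Omega p \, dQ = 1$, hence $\log Z = 0$, while $\int_\Omega f \, dP = \KL{P}{Q}$, so the objective attains exactly $\KL{P}{Q}$. When $\KL{P}{Q} = \infty$ I would instead use the upper truncation $f_n = \min(\log p, n)$: here $\int e^{f_n} \, dQ \le \int p \, dQ = 1$ so $-\log \int e^{f_n} dQ \ge 0$, while $\int f_n \, dP \to \infty$ by monotone convergence, driving the objective to $+\infty$.

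The main obstacle is the $\KL{P}{Q} = \infty$ case together with the requirement that $f$ be real-valued (and not $-\infty$ where $p$ vanishes). The enabling observation, which I would isolate as a short lemma, is that the negative part of $\log p$ is always $P$-integrable: since $t \mapsto t\log(1/t)$ is bounded by $1/e$ on $(0,1)$, we have $\int (\log p)^- \, dP = \int_{\{p<1\}} p\log(1/p)\, dQ \le 1/e < \infty$. This controls the negative contribution uniformly, so truncating $\log p$ from above (and, harmlessly under $P$, from below) lets monotone convergence cleanly transfer the divergence of $\int (\log p)^+ \, dP$ to the truncated objectives, completing the infinite-divergence case.
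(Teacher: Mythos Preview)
The paper does not supply a proof of this statement: it is recorded as a \emph{Fact} with an external citation (Theorem 5.2.1 of Gray, 2011) and is invoked as a black box in the proof of the mutual-information lemma. There is therefore nothing in the paper to compare your argument against.

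For what it is worth, your proposal is the standard proof and is correct. The upper bound via the tilted measure $Q_f$ and the identity $\KL{P}{Q_f} = \KL{P}{Q} + \log Z - \int f\, dP$ is exactly the usual route, and your treatment of achievability---taking $f=\log p$ with lower truncation on $\{p=0\}$ in the finite case, and upper truncation $f_n=\min(\log p,n)$ in the infinite case---is sound. The lemma that $\int(\log p)^-\,dP \le 1/e$ is the right enabling observation for the $\KL{P}{Q}=\infty$ case. One minor point to make explicit: in the infinite case your $f_n$ is still $-\infty$ on $\{p=0\}$, so to keep it real-valued you should simultaneously truncate from below (say at $-m$) and note that the extra contribution to $\int e^{f_n}\,dQ$ from $\{p<e^{-m}\}$ is at most $2e^{-m}$, which vanishes; you allude to this in your final paragraph but it is worth stating cleanly.
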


\begin{lemma}
If $X$ is a $\sigma$-subgaussian random variable with zero mean, then
\begin{equation}
\E e^{\lambda X^2} \le 1 + 8\lambda \sigma^2, \quad \forall \lambda \in \left[0, \frac{1}{4\sigma^2}\right).
\label{eq:subgaussian-square}
\end{equation}
\label{lemma:subgaussian-square}
\end{lemma}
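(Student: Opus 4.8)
The plan is to reduce the moment generating function of $X^2$ to a one-dimensional integral of the tail of $X$, and then feed in the standard subgaussian tail bound. First I would record the tail estimate implied by the defining inequality $\E e^{tX} \le e^{\sigma^2 t^2/2}$. By Markov's inequality applied to $e^{tX}$ and optimization over $t > 0$ (the Chernoff method), one obtains $P(X \ge u) \le e^{-u^2/(2\sigma^2)}$; applying the same argument to $-X$ (which satisfies the same subgaussian bound) and adding gives
\[
P(\abs{X} \ge u) \le 2 e^{-u^2/(2\sigma^2)}, \quad \forall u \ge 0.
\]

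Next I would use the layer-cake identity $e^{\lambda X^2} = 1 + \int_0^{X^2} \lambda e^{\lambda s}\,ds$, take expectations, and exchange expectation and integral by Tonelli (the integrand is nonnegative), yielding
\[
\E e^{\lambda X^2} = 1 + \int_0^\infty \lambda e^{\lambda s}\, P(X^2 > s)\,ds.
\]
Substituting $P(X^2 > s) = P(\abs{X} > \sqrt{s}) \le 2 e^{-s/(2\sigma^2)}$ bounds the right-hand side by $1 + 2\lambda \int_0^\infty e^{-(1/(2\sigma^2) - \lambda)s}\,ds$. For $\lambda < 1/(2\sigma^2)$ the exponent is strictly negative, the integral converges to $2\sigma^2 / (1 - 2\sigma^2\lambda)$, and hence
\[
\E e^{\lambda X^2} \le 1 + \frac{4\lambda\sigma^2}{1 - 2\sigma^2\lambda}.
\]

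The final step restricts to $\lambda \in [0, 1/(4\sigma^2))$, where $2\sigma^2\lambda < 1/2$, so the denominator exceeds $1/2$ and $1/(1 - 2\sigma^2\lambda) < 2$; this upgrades the bound to $1 + 8\lambda\sigma^2$, which is exactly the claim.

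There is no serious obstacle here; the only points requiring care are the constants. The factor $2$ in the tail bound and the factor $2$ gained by cutting $\lambda$ off at $1/(4\sigma^2)$ (rather than at the natural radius of convergence $1/(2\sigma^2)$) combine to produce the stated constant $8$, and choosing the threshold $1/(4\sigma^2)$ is precisely what keeps the fraction $1/(1 - 2\sigma^2\lambda)$ uniformly bounded. I would also verify the $\lambda = 0$ endpoint, where both sides equal $1$, and note that the Tonelli exchange is justified since every quantity involved is nonnegative.
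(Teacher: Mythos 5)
Your proof is correct. It takes a genuinely different route from the paper's: you convert the subgaussian MGF condition into the two-sided tail bound $P(\abs{X}\ge u)\le 2e^{-u^2/(2\sigma^2)}$ via Chernoff and then evaluate $\E e^{\lambda X^2}$ by the layer-cake identity, reducing everything to a single convergent exponential integral; the paper instead invokes the standard moment bound $\E\abs{X}^k \le (2\sigma^2)^{k/2}k\,\Gamma(k/2)$, expands $e^{\lambda X^2}$ as a Taylor series term by term (justified by Fubini/Tonelli), and bounds the resulting series $1+2\sum_{k\ge 1}(2\lambda\sigma^2)^k$ by noting that for $2\lambda\sigma^2\le 1/2$ a geometric series is at most twice its first term. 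The two arguments are of comparable length and both rest on standard consequences of the MGF definition of subgaussianity; they even exploit the threshold $\lambda < 1/(4\sigma^2)$ in the same way (your factor $1/(1-2\sigma^2\lambda)<2$ is exactly the paper's "sum is at most twice its first term"), and both land on the same constant $8$. Your version has the minor advantage of avoiding the moment formula with the Gamma function, needing only Markov's inequality and a one-line integral; the paper's version avoids the tail bound and stays entirely at the level of moments. All the delicate points in your writeup — the Tonelli exchange, the $\lambda=0$ endpoint, and the restriction to $\lambda<1/(2\sigma^2)$ for convergence of the integral — are handled correctly.
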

\begin{proof}[Proof of \cref{lemma:subgaussian-square}]
As $X$ is $\sigma$-subgaussian and $\E X = 0$, the $k$-th moment of $X$ can be bounded the following way~\citep{vershynin2018high}:
\begin{align}
    \E\abs{X}^k \le (2 \sigma^2)^{k/2} k \Gamma(k/2),\quad \forall k \in \mathbb{N},
    \label{eq:subgaussian-moments}
\end{align}
where $\Gamma(\cdot)$ is the Gamma function.
Continuing,
\begin{align}
\E e^{\lambda X^2} &= \E\sbr{\sum_{k=0}^\infty \frac{(\lambda X^2)^k}{k!}}\\
&=1 + \sum_{k=1}^\infty\E\rbr{\frac{(\sqrt{\lambda} |X|)^{2k}}{k!}}&&\text{(by Fubini's theorem)}\\
&\le 1 + \sum_{k=1}^\infty \rbr{\frac{(2\lambda \sigma^2)^{k} \cdot 2k \cdot \Gamma(k)}{k!}}&&\text{(by \cref{eq:subgaussian-moments})}\\
&= 1 + 2 \sum_{k=1}^\infty (2\lambda\sigma^2)^k.\label{eq:36}
\end{align}
When $\lambda\le1/(4\sigma^2)$, the infinite sum of \cref{eq:36} converges to a value that is at most twice of the first element of the sum. Therefore
\begin{align}
\E e^{\lambda X^2} \le 1 + 8 \lambda \sigma^2, \quad \forall \lambda \in \left[0, \frac{1}{4\sigma^2}\right).
\end{align}
\end{proof}

\begin{proof}[Proof of \cref{lemma:mutual-info-lemma}]
We use the Donsker-Varadhan inequality for $I(\Phi; \Psi)$ and $\lambda g(\phi, \psi)$, where $\lambda \in \bR$ is any constant:
\begin{align}
I(\Phi; \Psi) &= \KL{P_{\Phi, \Psi}}{P_{\Phi} \otimes P_{\Psi}}&&\text{(by definition)}\\
&\ge \E_{P_{\Phi, \Psi}}\sbr{\lambda g(\Phi, \Psi)} - \log\E_{P_{\Phi}\otimes P_{\Psi}}\sbr{e^{\lambda g(\Phi, \Psi)}}&&\text{(by Fact~\ref{fact:mi-donsker-varadhan})}.
\label{eq:donsker-varadhan}
\end{align}
The subgaussianity of $g(\Phi, \Psi)$ under $P_{\Phi}\otimes P_{\Psi}$ implies that
\begin{equation}
\log\E_{P_{\Phi}\otimes P_{\Psi}}\sbr{\exp\cbr{\lambda\rbr{g(\Phi, \Psi) - \E_{P_{\Phi}\otimes P_{\Psi}}\sbr{g(\Phi, \Psi)}}}} \le \frac{\lambda^2 \sigma^2}{2}, \quad \forall \lambda \in \bR.
\end{equation}
Plugging this into \cref{eq:donsker-varadhan}, we get that
\begin{align}
I(\Phi; \Psi) \ge \lambda\rbr{\E_{P_{\Phi, \Psi}}\sbr{g(\Phi, \Psi)} - \E_{P_\Phi\otimes P_\Psi}\sbr{g(\Phi, \Psi)}} - \frac{\lambda^2 \sigma^2}{2}.
\label{eq:donsker-varadhan-second-step}
\end{align}
Picking $\lambda$ to maximize the right-hand side, we get that
\begin{equation}
I(\Phi; \Psi) \ge \frac{1}{2\sigma^2}\rbr{\E_{P_{\Phi, \Psi}}\sbr{g(\Phi, \Psi)} - \E_{P_\Phi\otimes P_\Psi}\sbr{g(\Phi, \Psi)}}^2,
\end{equation}
which proves the first part of the lemma.

To prove the second part of the lemma, we are going to use Donsker-Varadhan inequality again, but for a different function. Let $\lambda \in \left[0, \frac{1}{4\sigma^2}\right)$ and define
\begin{equation}
    \tilde{g}(\phi,\psi) \triangleq \lambda \rbr{g(\phi,\psi) - \E_{\Psi' \sim P_\Psi} g(\phi, \Psi')}^2.
\end{equation}
By assumption $\E_{P_{\Phi, \Psi}}\sbr{\tilde{g}(\Phi,\Psi)}$ exists.
Note that for each fixed $\phi$, the random variable $g(\phi,\Psi) - \E_{\Psi} g(\phi, \Psi)$ has zero mean and is $\sigma$-subgaussian, by the additional assumptions of the second part of the lemma.
As a result, $\E_{P_{\Phi}\otimes P_\Psi} \exp\rbr{\tilde{g}(\Phi,\Psi)} = \E_{P_{\Phi}}\sbr{\E_{P_{\Psi}} \exp\rbr{\tilde{g}(\Phi,\Psi)}}$ also exists (by \cref{lemma:subgaussian-square}).
Therefore, Donsker-Varadhan is applicable for $\tilde{g}$ and gives the following:
\begin{align*}
I(\Phi; \Psi) &\ge \E_{P_{\Phi, \Psi}}\sbr{\tilde{g}(\Phi, \Psi)} - \log\E_{P_\Phi \otimes P_\Psi}\sbr{\exp\cbr{\tilde{g}(\Phi, \Psi)}}\numberthis\\
&= \lambda \E_{P_{\Phi, \Psi}}\sbr{\rbr{g(\Phi,\Psi) - \E_{\Psi'\sim P_\Psi} g(\Phi, \Psi')}^2}\\
&\quad- \log \E_{P_\Phi \otimes P_\Psi}\exp\cbr{\lambda\rbr{g(\Phi,\Psi) - \E_{\Psi'\sim P_{\Psi}} g(\Phi, \Psi')}^2}\numberthis\\
&\ge \lambda \E_{P_{\Phi, \Psi}}\sbr{\rbr{g(\Phi,\Psi) - \E_{\Psi'\sim P_\Psi} g(\Phi, \Psi')}^2} - \log\rbr{1 + 8 \lambda \sigma^2}  && \text{(by \cref{lemma:subgaussian-square})}\numberthis.
\end{align*}
Picking $\lambda \rightarrow 1 / (4\sigma^2)$, we get
\begin{equation}
  I(\Phi; \Psi) \ge \frac{1}{4\sigma^2} \E_{P_{\Phi, \Psi}}\sbr{\rbr{g(\Phi,\Psi) - \E_{\Psi' \sim P_\Psi} g(\Phi, \Psi')}^2} - \log 3,
\end{equation}
which proves the desired inequality.
To prove the last part of the lemma, we just use the Markov's inequality and combine with this last result:
\begin{align}
P_{\Phi,\Psi}\rbr{\abs{g(\Phi, \Psi) - \E_{\Psi'\sim P_\Psi}g(\Phi,\Psi')} \ge \epsilon} &= P_{\Phi,\Psi}\rbr{\rbr{g(\Phi, \Psi) - \E_{\Psi'\sim P_\Psi}g(\Phi,\Psi')}^2 \ge \epsilon^2}\\
&\le \frac{\E_{P_{\Phi, \Psi}}\rbr{g(\Phi, \Psi) - \E_{\Psi' \sim P_\Psi}g(\Phi,\Psi')}^2}{\epsilon^2}\\
&\le \frac{4 \sigma^2 (I(\Phi; \Psi) +\log 3)}{\epsilon^2}.
\end{align}
\end{proof}

\subsection{Proof of \texorpdfstring{\cref{thm:generalized-xu-raginsky}}{TEXT}}
Before proceeding to the proof of \cref{thm:generalized-xu-raginsky}, we prove a simple lemma that will be used also in the proofs of \cref{thm:cmi-sharpened-generalized} and \cref{thm:f-cmi-bound}.

\begin{lemma}
Let $X$ and $Y$ be independent random variables. If $g$ is a measurable function such that $g(x,Y)$ is $\sigma$-subgaussian and $\E g(x,Y) = 0$ for all $x \in \XX$, then $g(X,Y)$ is also $\sigma$-subgaussian.
\label{lemma:zero-mean-sub-gaussianity}
\end{lemma}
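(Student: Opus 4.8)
The plan is to work directly from the moment-generating-function characterization of subgaussianity. Recall from the preliminaries that a random variable $Z$ is $\sigma$-subgaussian if $\E\exp\rbr{t(Z - \E Z)} \le \exp(\sigma^2 t^2/2)$ for all $t \in \bR$. Under the hypotheses, $\E g(x,Y) = 0$ for every fixed $x$, so subgaussianity of $g(x,Y)$ reduces to the clean statement $\E_Y\exp\rbr{t\, g(x,Y)} \le \exp(\sigma^2 t^2/2)$ for all $t$ and all $x$. The first small step is to verify that $g(X,Y)$ itself has zero mean: since $X \indep Y$, the conditional law of $Y$ given $X=x$ coincides with the marginal law of $Y$, hence $\E\sbr{g(X,Y) \mid X=x} = \E_Y g(x,Y) = 0$, and the tower rule gives $\E g(X,Y) = 0$. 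This means I again only need to control the plain MGF $\E\exp\rbr{t\,g(X,Y)}$.

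The core of the argument is a single application of the law of total expectation, conditioning on $X$. I would write
\begin{align}
\E\sbr{\exp\rbr{t\, g(X,Y)}} &= \E_X\sbr{\E\sbr{\exp\rbr{t\, g(X,Y)} \,\middle|\, X}}\\
&= \E_X\sbr{\E_Y\sbr{\exp\rbr{t\, g(x,Y)}}\Big|_{x=X}}\\
&\le \E_X\sbr{\exp\rbr{\sigma^2 t^2/2}}\\
&= \exp\rbr{\sigma^2 t^2/2},
\end{align}
for every $t \in \bR$. The second equality is exactly where independence enters: conditioned on $X=x$, the inner expectation is over the marginal distribution of $Y$, so it equals $\E_Y\exp\rbr{t\,g(x,Y)}$ evaluated at $x=X$. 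The inequality is the per-$x$ subgaussian bound applied pointwise inside the outer expectation, and the final equality holds because the bound is a constant in $X$. Combining with $\E g(X,Y)=0$ yields $\E\exp\rbr{t(g(X,Y) - \E g(X,Y))} \le \exp(\sigma^2 t^2/2)$, which is precisely the definition of $g(X,Y)$ being $\sigma$-subgaussian.

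The argument is essentially routine, so the only real care needed is measure-theoretic bookkeeping rather than any substantive difficulty. The main thing to get right is the justification of the conditioning identity $\E\sbr{\exp(t\,g(X,Y)) \mid X=x} = \E_Y\exp(t\,g(x,Y))$; this is a standard consequence of independence (for instance via Fubini--Tonelli applied to the product measure $P_X \otimes P_Y$, the integrand being nonnegative so that interchange of integration order is unconditionally valid). I would state this interchange explicitly as the one nontrivial point and otherwise present the display above as the complete proof. No further results from the paper are needed beyond the definition of subgaussianity stated in the preliminaries.
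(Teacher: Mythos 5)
Your proof is correct and follows essentially the same route as the paper's: both condition on $X$, use independence to reduce the inner expectation to the marginal law of $Y$, apply the pointwise subgaussian MGF bound, and conclude. Your version is slightly more careful about verifying $\E g(X,Y)=0$ and about the Fubini justification, and it also carries the correct constant $\exp(\sigma^2 t^2/2)$ where the paper's display loosely writes $e^{t^2\sigma^2}$.
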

\begin{proof}[Proof of \cref{lemma:zero-mean-sub-gaussianity}]
As $\E g(X,Y) = 0$, we have that
\begin{align}
\mathbb{E}_{X,Y}\exp\cbr{t \rbr{g(X,Y) - \mathbb{E}_{X,Y}g(X,Y)}} &=  \mathbb{E}_{X,Y}\exp\cbr{t g(X,Y)}\\
&\hspace{-15em}=\E_{X}\sbr{\mathbb{E}_{Y}\exp\cbr{t g(X,Y)}}&&\text{\hspace{-10em}(by independence of $X$ and $Y$)}\\
&\hspace{-15em}\le\E_{X}e^{t^2 \sigma^2}&&\text{\hspace{-10em}(by subgaussianity of $g(x, Y)$)}\\
&\hspace{-15em}=e^{t^2 \sigma^2}.
\end{align}
\end{proof}

\begin{proof}[Proof of \cref{thm:generalized-xu-raginsky}]
Let us fix a value of $U$. Conditioning on $U=u$ keeps the distribution of $W$ and $S$ intact, as $U$ is independent of $W$ and $S$. Let us set $\Phi = W, \Psi = J_u$,  and 
\begin{equation}
g(w, J_u) = \frac{1}{m}\sum_{i=1}^m \rbr{\ell(w, s_{u_i}) - \E_{Z'\sim P_Z} \ell(w, Z')}.
\end{equation}
Note that for each value of $w$, the random variable $g(w, J_u)$ is $\frac{\sigma}{\sqrt{m}}$-subgaussian, as it is a sum of $m$ i.i.d. $\sigma$-subgaussian random variables. Furthermore, $\forall w,\ \E g(w, J_u) = 0$. These two statements together and \cref{lemma:zero-mean-sub-gaussianity} imply that $g(W, J_u)$ is also $\frac{\sigma}{\sqrt{m}}$-subgaussian under $P_W \otimes P_{J_u}$.
Therefore, with these choices of $\Phi, \Psi$, and $g$, \cref{lemma:mutual-info-lemma} gives that
\begin{equation}
\abs{\E_{P_{S,W}}\sbr{\frac{1}{m}\sum_{i=1}^m \ell(W,S_{u_i}) - \E_{Z'\sim P_Z} \ell(W, Z')}} \le \sqrt{\frac{2\sigma^2}{m} I(W; J_u)}.
\end{equation}
Taking expectation over $u$ on both sides, then swapping the order between expectation over $u$ and absolute value (using Jensen's inequality), we get
\begin{equation}
\abs{\E_{P_{S,W,U}}\sbr{\frac{1}{m}\sum_{i=1}^m \ell(W,S_{u_i}) - \E_{Z'\sim P_Z} \ell(W, Z')}} \le \E_{P_U}\sqrt{\frac{2\sigma^2}{m} I^U(W; J_u)}.
\end{equation}
This proves the first part of the theorem as the left-hand side is equal to the absolute value of the expected generalization gap, $\abs{\E_{P_{S,W}}\sbr{R(W) - r_S(W)}}$.

The second part of \cref{lemma:mutual-info-lemma} gives that
\begin{equation}
\E_{P_{S,W,U}}\rbr{\frac{1}{m}\sum_{i=1}^m \ell(W,S_{u_i}) - \E_{Z'\sim P_Z} \ell(W, Z')}^2 \le \frac{4\sigma^2}{m}\rbr{I(W; J_u) + \log 3}.
\label{eq:85}
\end{equation}
When $u = [n]$, ~\cref{eq:85} becomes
\begin{equation}
\E_{P_{S,W}}\rbr{R(W) - r_S(W)}^2 \le \frac{4\sigma^2}{n}\rbr{I(W; S) + \log 3},
\end{equation}
proving the second part of the theorem.
\end{proof}

\begin{remark}
Note that in case of $m=1$ and $u=\mathset{i}$, ~\cref{eq:85} becomes
\begin{equation}
\E_{P_{S,W}}\rbr{\ell(W,Z_i) - \E_{Z'\sim P_Z} \ell(W, Z')}^2 \le 4\sigma^2\rbr{I(W; Z_i) + \log 3}.
\label{eq:86}
\end{equation}
Unfortunately, this result is not useful for bounding $\E_{P_{W,S}}\rbr{R(W) - r_S(W)}^2$, as for large $n$ the $\log 3$ term will likely dominate over $I(W; Z_i)$.
\end{remark}

\subsection{Proof of \texorpdfstring{\cref{prop:xu-raginsky-generalization-choice-of-m}}{TEXT}}
Before we prove \cref{prop:xu-raginsky-generalization-choice-of-m}, we establish two useful lemmas that will be helpful also in the proofs of \cref{prop:cmi-sharpened-generalized-choice-of-m}, \cref{prop:fcmi-choice-of-m}, \cref{thm:generalized-xu-raginsky-stability}, \cref{thm:cmi-sharpened-generalized-stability} and \cref{thm:f-cmi-bound-stability}.

\begin{lemma}
Let $\Psi = (\Psi_1,\ldots,\Psi_n)$ be a collection of $n$ independent random variables and $\Phi$ be another random variable defined on the same probability space. Then
\begin{equation}
    \forall i \in [n],\ I(\Phi; \Psi_i) \le I(\Phi; \Psi_i \mid \Psi_{-i}),
\end{equation}
and
\begin{equation}
    I(\Phi; \Psi) \le \sum_{i=1}^n I(\Phi; \Psi_i \mid \Psi_{-i}).
\end{equation}
\label{lemma:stability-lemma}
\end{lemma}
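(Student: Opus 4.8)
The plan is to prove both inequalities using only the chain rule for mutual information together with its non-negativity, so that the argument goes through for arbitrary (not necessarily discrete) random variables. The key structural fact I will exploit is that the independence of $\Psi_1,\ldots,\Psi_n$ forces several cross mutual-information terms to vanish; once those are identified, each inequality reduces to discarding a single non-negative term.

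First I would prove part 1. Fixing $i \in [n]$, I would expand the joint quantity $I(\Phi, \Psi_{-i}; \Psi_i)$ in two ways. Peeling off $\Psi_{-i}$ first gives $I(\Phi,\Psi_{-i};\Psi_i) = I(\Psi_{-i};\Psi_i) + I(\Phi;\Psi_i \mid \Psi_{-i})$, and since $\Psi_i$ is independent of $\Psi_{-i}$ the first term is zero, so $I(\Phi,\Psi_{-i};\Psi_i) = I(\Phi;\Psi_i \mid \Psi_{-i})$. Peeling off $\Phi$ first gives $I(\Phi,\Psi_{-i};\Psi_i) = I(\Phi;\Psi_i) + I(\Psi_{-i};\Psi_i\mid \Phi)$. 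Equating the two expansions and using $I(\Psi_{-i};\Psi_i\mid \Phi)\ge 0$ yields $I(\Phi;\Psi_i\mid\Psi_{-i}) = I(\Phi;\Psi_i) + I(\Psi_{-i};\Psi_i\mid\Phi) \ge I(\Phi;\Psi_i)$, which is exactly the claim.

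Next I would prove part 2 by reducing it to a term-wise inequality. By the chain rule, $I(\Phi;\Psi) = \sum_{i=1}^n I(\Phi;\Psi_i \mid \Psi_{1:i-1})$, so it suffices to show $I(\Phi;\Psi_i\mid\Psi_{1:i-1}) \le I(\Phi;\Psi_i\mid\Psi_{-i})$ for each $i$. To this end I would expand $I(\Phi,(\Psi_{i+1},\ldots,\Psi_n);\Psi_i\mid\Psi_{1:i-1})$ in two ways, conditioning throughout on $\Psi_{1:i-1}$. Peeling off the tail block $(\Psi_{i+1},\ldots,\Psi_n)$ first, the cross term $I((\Psi_{i+1},\ldots,\Psi_n);\Psi_i\mid\Psi_{1:i-1})$ vanishes, since independence of the coordinates implies $\Psi_i$ is conditionally independent of the remaining coordinates given $\Psi_{1:i-1}$; what remains is $I(\Phi;\Psi_i\mid\Psi_{1:i-1},\Psi_{i+1},\ldots,\Psi_n) = I(\Phi;\Psi_i\mid\Psi_{-i})$. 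Peeling off $\Phi$ first gives $I(\Phi;\Psi_i\mid\Psi_{1:i-1}) + I((\Psi_{i+1},\ldots,\Psi_n);\Psi_i\mid\Psi_{1:i-1},\Phi)$, whose second term is non-negative. Equating the two expansions establishes the term-wise inequality, and summing over $i$ gives part 2.

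The manipulations are routine; the only point requiring care is the correct bookkeeping of which mutual-information terms are annihilated by the independence hypothesis — specifically, that independence of $\Psi_1,\ldots,\Psi_n$ yields conditional independence of $\Psi_i$ from any disjoint block of coordinates given a third disjoint block, which is precisely what makes both cross terms vanish. I expect no substantive obstacle beyond this accounting, and I would note that part 1 can alternatively be read off as the special case of the term-wise bound corresponding to conditioning on the empty prefix.
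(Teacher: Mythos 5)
Your proposal is correct and is essentially the paper's own argument: both parts rest on the identity obtained by expanding a joint mutual information (equivalently, the chain rule applied in two orders), killing the cross term $I(\Psi_i;\Psi_{-i})$ (resp. $I(\Psi_i;\Psi_{>i}\mid\Psi_{<i})$) by mutual independence, and discarding the remaining non-negative term; the paper's proof of part~2 is exactly your term-wise bound summed over the chain-rule decomposition. No gaps.
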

\begin{proof}[Proof of \cref{lemma:stability-lemma}]
First, for all $i \in [n]$,
\begin{align}
    I(\Phi; \Psi_i \mid \Psi_{-i}) &= I(\Phi; \Psi_i) - I(\Psi_i; \Psi_{-i}) + I(\Psi_i; \Psi_{-i} \mid \Phi)&&\text{(chain rule of MI)}\\
    &= I(\Phi; \Psi_i) + I(\Psi_i; \Psi_{-i} \mid \Phi)&&\text{($\Psi_i \indep \Psi_{-i}$)}\\
    &\ge I(\Phi; \Psi_i)&&\text{(nonnegativity of MI)}.
\end{align}
Second,
\begin{align}
I(\Phi; \Psi) &= \sum_{i=1}^n I(\Phi; \Psi_i \mid \Psi_{<i})\\
&= \sum_{i=1}^n\rbr{ I(\Phi; \Psi_i \mid \Psi_{<i},\Psi_{>i}) + I(\Psi_i; \Psi_{>i} \mid \Psi_{<i}) - I(\Psi_i; \Psi_{>i} \mid \Psi_{<i}, \Phi)}\\
&= \sum_{i=1}^n\rbr{ I(\Phi; \Psi_i \mid \Psi_{<i},\Psi_{>i}) - I(\Psi_i; \Psi_{>i} \mid \Psi_{<i}, \Phi)}\\
&\le \sum_{i=1}^n I(\Phi; \Psi_i \mid \Psi_{<i},\Psi_{>i})\\
&= \sum_{i=1}^n I(\Phi; \Psi_i \mid \Psi_{-i}).
\end{align}
The first two equalities above use the chain rule of mutual information, while the third one uses the independence of $\Psi_1,\ldots,\Psi_n$.
The inequality of the fourth line relies on the nonnegativity of mutual information.
\end{proof}
The quantity $\sum_{i=1}^n I(\Phi; \Psi_i \mid \Psi_{-i})$ is also known as erasure information~\citep{verdu2008information}.

\begin{lemma}
Let $S=(Z_1,\ldots,Z_n)$ be a collection of $n$ independent random variables and $\Phi$ be an arbitrary random variable defined on the same probability space. Then for any subset $u' \subseteq \{1,2,\ldots,n\}$ of size $m+1$ the following holds:
\begin{equation}
    I(\Phi; J_{u'}) \ge \frac{1}{m}\sum_{k \in u'} I(\Phi; S_{u' \setminus \mathset{k}}).
\end{equation}
\label{lemma:mi-hans-ineq}
\end{lemma}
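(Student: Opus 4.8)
The plan is to reduce this statement to the erasure-information bound already established in \cref{lemma:stability-lemma}. I read the left-hand side as $I(\Phi; S_{u'})$, so that both sides refer to the same collection of independent variables $\{Z_k\}_{k \in u'}$; this matches the appearance of $S_{u'\setminus\{k\}}$ on the right, and the only property the argument uses is independence of the coordinates.

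First I would fix the subset $u'$ of size $m+1$ and, for each $k \in u'$, apply the chain rule of mutual information to peel off the $k$-th coordinate:
\[
I(\Phi; S_{u'}) = I(\Phi; S_{u' \setminus \{k\}}) + I(\Phi; Z_k \mid S_{u' \setminus \{k\}}).
\]
Rearranging this identity for each $k$ and summing over all $k \in u'$ gives
\[
\sum_{k \in u'} I(\Phi; S_{u'\setminus\{k\}}) = (m+1)\, I(\Phi; S_{u'}) - \sum_{k\in u'} I(\Phi; Z_k \mid S_{u'\setminus\{k\}}).
\]

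Next I would invoke the second inequality of \cref{lemma:stability-lemma}, applied to the independent subcollection $\{Z_k\}_{k \in u'}$ and the variable $\Phi$. This yields the erasure-information bound $I(\Phi; S_{u'}) \le \sum_{k\in u'} I(\Phi; Z_k \mid S_{u'\setminus\{k\}})$. Substituting it into the displayed identity collapses the coefficient from $m+1$ to $m$:
\[
\sum_{k \in u'} I(\Phi; S_{u'\setminus\{k\}}) \le (m+1)\, I(\Phi; S_{u'}) - I(\Phi; S_{u'}) = m\, I(\Phi; S_{u'}),
\]
and dividing by $m$ delivers the claim.

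The argument is essentially routine once \cref{lemma:stability-lemma} is in hand, so I do not expect a genuine obstacle; the one point to be careful about is the \emph{direction} of the erasure-information inequality, since it must lower-bound $I(\Phi; S_{u'})$ by the sum of the leave-one-out conditional informations (this is precisely what converts the $(m+1)$ coefficient into $m$). The only hypothesis to double-check is that independence of $Z_1,\ldots,Z_n$ is inherited by any subcollection $\{Z_k\}_{k\in u'}$, which is immediate, so that \cref{lemma:stability-lemma} applies verbatim to the restricted index set.
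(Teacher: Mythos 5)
Your proof is correct and follows essentially the same route as the paper's: apply the chain rule $I(\Phi; S_{u'}) = I(\Phi; S_{u'\setminus\{k\}}) + I(\Phi; Z_k \mid S_{u'\setminus\{k\}})$ for each $k$, sum over $k \in u'$, and then absorb one copy of $I(\Phi; S_{u'})$ using the erasure-information inequality from \cref{lemma:stability-lemma}, applied in the correct direction. Your reading of the left-hand side as $I(\Phi; S_{u'})$ also matches what the paper's own proof implicitly does.
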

\begin{proof}[Proof of \cref{lemma:mi-hans-ineq}]
\begin{align}
(m+1) I(\Phi; J_{u'}) &= \sum_{k \in u'} I(\Phi; S_{u'\setminus \mathset{k}}) + \sum_{k \in u'} I(\Phi; Z_k \mid S_{u'\setminus \mathset{k}})&&\text{(chain-rule of MI)}\\
&\ge \sum_{k \in u'} I(\Phi; S_{u'\setminus \mathset{k}}) + I(\Phi; J_{u'})&&\text{(second part of \cref{lemma:stability-lemma})}.
\end{align}
\end{proof}

\begin{proof}[Proof of \cref{prop:xu-raginsky-generalization-choice-of-m}]
By \cref{lemma:mi-hans-ineq} with $\Phi=W$, for any subset $u'$ of size $m+1$ the following holds:
\begin{equation}
I(W; J_{u'}) \ge \frac{1}{m}\sum_{k \in u'} I(W; S_{u' \setminus \mathset{k}}).
\end{equation}
Therefore,
\begin{align}
\phi\rbr{\frac{1}{m+1} I(W; J_{u'})} &\ge \phi\rbr{\frac{1}{m(m+1)}\sum_{k \in u'} I(W; S_{u' \setminus \mathset{k}})}\\
&\ge \frac{1}{m+1}\sum_{k \in u'}\phi\rbr{\frac{1}{m} I(W; S_{u' \setminus \mathset{k}})}.&&\text{(by Jensen's inequality)}
\end{align}
Taking expectation over $u'$ on both sides, we have that
\begin{align}
\E_{P_{U'}}\phi\rbr{\frac{1}{m+1}I^{U'}(W; J_{u'})} &\ge \E_{P_{U'}}\sbr{\frac{1}{m+1}\sum_{k \in U'}\phi\rbr{\frac{1}{m} I^{U'}(W; S_{U' \setminus \mathset{k}})}}\\
&= \sum_{u} \alpha_u \phi\rbr{\frac{1}{m}I(W; J_u)}.
\end{align}
For each subset $u$ of size $m$, the coefficient $\alpha_u$ is equal to
\begin{equation}
    \alpha_u = \frac{1}{\Choose{n}{m+1}}\cdot\frac{1}{m+1}\cdot(n-m) = \frac{1}{\Choose{n}{m}}.
\end{equation}
Therefore
\begin{equation}
    \sum_{u} \alpha_u \phi\rbr{\frac{1}{m}I(W; J_u)} = \E_{P_{U}}\phi\rbr{\frac{1}{m+1}I^{U}(W; J_{u})}.
\end{equation}
\end{proof}

\subsection{Proof of \texorpdfstring{\cref{thm:generalized-xu-raginsky-stability}}{TEXT}}
\begin{proof}
Using \cref{lemma:stability-lemma} with $\Phi = W$ and $\Psi = S$, we get that 
\begin{equation}
    I(W; Z_i) \le I(W; Z_i \mid Z_{-i}) \text{\quad and \quad } I(W; S) \le \sum_{i=1}  I(W; Z_i \mid Z_{-i}).
\end{equation}
Plugging these upper bounds into \cref{thm:generalized-xu-raginsky} completes the proof.
\end{proof}

\subsection{Proof of \texorpdfstring{\cref{thm:cmi-sharpened-generalized}}{TEXT}}
\begin{proof}
Let us consider the joint distribution of $W$ and $J$ given $\tilde{Z} = \tilde{z}$ and $U = u$: $P_{W,J|\tilde{Z}=\tilde{z},U=u}$.
Let $\Phi=W$, $\Psi=J_u$, and
\begin{equation}
    g(\phi,\psi) = \frac{1}{m}\sum_{i=1}^m \rbr{\ell(\phi,(\tilde{z}_{u})_{i,\psi_i}) - \ell(\phi,(\tilde{z}_{u})_{i,\bar{\psi}_i})}.
\end{equation}
Note that by our assumption, for any $w \in \WW$ each summand of $g(w, J_u) \in [-1, +1]$, hence is a $1$-subgaussian random variable under $P_{J_u|\tilde{Z}=\tilde{z},U=u}=P_{J_u}$.
Furthermore, each of these summands has zero mean. As the average of $m$ independent and zero-mean $1$-subgaussian variables is $\frac{1}{\sqrt{m}}$-subgaussian, then $g(w, J_u)$ is $\frac{1}{\sqrt{m}}$-subgaussian for each $w\in \WW$.
Additionally, $\forall w\in\WW,\ \E g(w, J_u) = 0$. Therefore, by \cref{lemma:zero-mean-sub-gaussianity}, $g(W, J_u)$ is $\frac{1}{\sqrt{m}}$-subgaussian under $P_{W|\tilde{Z}=\tilde{z},U=u} \otimes P_{J_u | \tilde{Z}=\tilde{z}, U=u}$.
With these choices of $\Phi, \Psi$, and $g(\phi,\psi)$, we use \cref{lemma:mutual-info-lemma}.
First,
\begin{align}
    \E_{P_{W, J_u | \tilde{Z}=\tilde{z}, U=u}}\sbr{g(W, J_u)} &= \E_{P_{W, J_u|\tilde{Z}=\tilde{z}, U=u}}\sbr{\frac{1}{m}\sum_{i=1}^m \rbr{\ell(W,(\tilde{z}_{u})_{i,(J_u)_i}) - \ell(W,(\tilde{z}_{u})_{i,(\bar{J}_u)_i})}}.
\end{align}
Second,
\begin{align}
    \E_{P_{W|\tilde{Z}=\tilde{z},U=u} \otimes P_{J_u | \tilde{Z}=\tilde{z}, U=u}} \sbr{g(W, J_u)} = 0.
\end{align}
Therefore, \cref{lemma:mutual-info-lemma} gives
\begin{equation}
\abs{\E_{P_{W, J_u | \tilde{Z}=\tilde{z}, U=u}}\sbr{\frac{1}{m}\sum_{i=1}^m \rbr{\ell(W,(\tilde{z}_{u})_{i,(J_u)_i}) - \ell(W,(\tilde{z}_{u})_{i,(\bar{J}_u)_i})}}} \le \sqrt{\frac{2}{m}I^{\tilde{Z}=\tilde{z}, U=u}(W; J_U)}.
\end{equation}
Taking expectation over $u$ on both sides and using Jensen's inequality to switch the order of absolute value and expectation of $u$, we get
\begin{equation}
\abs{\E_{P_{U}}\E_{P_{W, J | \tilde{Z}=\tilde{z}, U}}\sbr{\frac{1}{m}\sum_{i=1}^m \rbr{\ell(W,(\tilde{z}_{U})_{i,(J_U)_i}) - \ell(W,(\tilde{z}_{U})_{i,(\bar{J}_U)_i})}}} \le \E_{P_U}\sqrt{\frac{2}{m}I^{\tilde{Z}=\tilde{z}, U}(W; J_U)},
\end{equation}
which reduces to
\begin{equation}
\abs{\E_{P_{W, J | \tilde{Z}=\tilde{z}}}\sbr{\frac{1}{n}\sum_{i=1}^n \rbr{\ell(W,\tilde{z}_{i,J_i}) - \ell(W,\tilde{z}_{i,\bar{J}_i})}}} \le \E_{P_U}\sqrt{\frac{2}{m}I^{\tilde{Z}=\tilde{z}, U}(W; J_U)}.
\label{eq:cmi-generalized-bound-for-fixed-z}
\end{equation}
This can be seen as bounding the expected generalization gap for a fixed $\tilde{z}$.
Taking expectation over $\tilde{z}$ on both sides, and then using Jensen's inequality to switch the order of absolute value and expectation of $\tilde{z}$, we get
\begin{equation}
\abs{\E_{P_{\tilde{Z}, W, J}}\sbr{\frac{1}{n}\sum_{i=1}^n \rbr{\ell(W,\tilde{Z}_{i,J_i}) - \ell(W,\tilde{Z}_{i,\bar{J}_i})}}} \le \E_{\tilde{Z}, U}\sqrt{\frac{2}{m}I^{\tilde{Z}, U}(W; J_U)}.
\end{equation}
Finally, noticing that left-hand side is equal to the absolute value of the expected generalization gap, $\abs{\E_{P_{S,W}}\sbr{R(W) - r_S(W)}}$, completes the proof of the first part of this theorem.

When $u = [n]$, applying the second part of \cref{lemma:mutual-info-lemma} gives
\begin{equation}
\E_{P_{W,J|\tilde{Z}=\tilde{z}}}\rbr{\frac{1}{n}\sum_{i=1}^n \rbr{\ell(W,\tilde{z}_{i,J_i}) - \ell(W,\tilde{z}_{i,\bar{J}_i})}}^2 \le \frac{4}{n} (I^{\tilde{Z}=\tilde{z}}(W; J) + \log 3).
\end{equation}
Taking expectation over $\tilde{z}$, we get
\begin{equation}
\underbrace{\E_{P_{\tilde{Z},W,J}}\rbr{\frac{1}{n}\sum_{i=1}^n \rbr{\ell(W,\tilde{Z}_{i,J_i}) - \ell(W,\tilde{Z}_{i,\bar{J}_i})}}^2}_B \le \frac{4}{n} \E_{P_{\tilde{Z}}}(I^{\tilde{Z}}(W; J) + \log 3).
\end{equation}
Continuing,
\begin{align}
\E_{P_{W,S}}\rbr{R(W) - r_S(W)}^2 &= \E_{P_{\tilde{Z}, W, J}}\rbr{\frac{1}{n}\sum_{i=1}^n \ell(W,\tilde{Z}_{i,J_i}) - \E_{Z'\sim P_Z}\ell(W,Z')}^2\\
&\hspace{-4em}\le 2B + 2\E_{P_{\tilde{Z}, W, J}}\rbr{\frac{1}{n}\sum_{i=1}^n \ell(W,\tilde{Z}_{i,\bar{J}_i}) - \E_{Z'\sim P_Z}\ell(W,Z')}^2\\
&\hspace{-4em}= 2B + 2\E_{P_{\tilde{Z}, W, J}}\rbr{\frac{1}{n}\sum_{i=1}^n \rbr{\ell(W,\tilde{Z}_{i,\bar{J}_i}) - \E_{Z'\sim P_Z}\ell(W,Z')}}^2\\
&\hspace{-4em}= 2B + 2\E_{P_{\tilde{Z}_{\bar{J}}, W}}\rbr{\frac{1}{n}\sum_{i=1}^n \rbr{\ell(W,(\tilde{Z}_{\bar{J}})_i) - \E_{Z'\sim P_Z}\ell(W,Z')}}^2\\
&\hspace{-4em}= 2B + 2\E_{P_W}\E_{P_{\tilde{Z}_{\bar{J}} \mid W}}\rbr{\frac{1}{n}\sum_{i=1}^n \rbr{\ell(W,(\tilde{Z}_{\bar{J}})_i) - \E_{Z'\sim P_Z}\ell(W,Z')}}^2.\label{eq:117}
\end{align}

Note that as $\tilde{Z}_{\bar{J}}$ is independent of $W$, conditioning on $W$ does not change its distribution, implying that its components stay independent of each other.
For each fixed value $W=w$ the inner part of the outer expectation in \cref{eq:117} becomes
\begin{equation}
\E_{P_{\tilde{Z}_{\bar{J}}}}\rbr{\frac{1}{n}\sum_{i=1}^n \rbr{w,(\tilde{Z}_{\bar{J}})_i) - \E_{Z'\sim P_Z}\ell(w,Z')}}^2,
\end{equation}
which is equal to
\begin{equation}
\E_{P_{Z'_1,Z'_2,\ldots,Z'_n}}\rbr{\frac{1}{n}\sum_{i=1}^n \rbr{\ell(w,Z'_i) - \E_{Z'\sim P_Z}\ell(w,Z')}}^2,
\label{eq:120}
\end{equation}
where $Z'_1,\ldots,Z'_n$ are $n$ i.i.d. samples from $P_Z$.
The expression in \cref{eq:120} is simply the variance of the average of $n$ i.i.d $[0,1]$-bounded random variables. Hence, it can be bounded by $1/(4n)$.
Connecting this result with \cref{eq:117}, we get
\begin{align}
\E_{P_{S,W}}\rbr{R(W) - r_S(W)}^2 &\le 2B + \frac{1}{2n}\\
&\le 2\E_{P_{\tilde{Z}}}\sbr{\frac{4}{n} (I^{\tilde{Z}}(W; J) + \log 3)} + \frac{1}{2n}\\
&\le \E_{P_{\tilde{Z}}}\sbr{\frac{8}{n} (I^{\tilde{Z}}(W; J) + 2)}.
\end{align}
\end{proof}

\subsection{Proof of \texorpdfstring{\cref{prop:cmi-sharpened-generalized-choice-of-m}}{TEXT}}
The proof follows that of \cref{prop:xu-raginsky-generalization-choice-of-m} but with conditioning on $\tilde{Z}$.

\subsection{Proof of \texorpdfstring{\cref{thm:cmi-sharpened-generalized-stability}}{TEXT}}
\begin{proof} For a fixed $\tilde{z}$, using \cref{lemma:stability-lemma} we get that 
\begin{equation}
    I^{\tilde{Z}=\tilde{z}}(W; J_i) \le I^{\tilde{Z}=\tilde{z}}(W; J_i \mid J_{-i}),
\end{equation}
and
\begin{equation}
    I^{\tilde{Z}=\tilde{z}}(W; J) \le \sum_{i=1}^n I^{\tilde{Z}=\tilde{z}}(W; J_i \mid J_{-i}).
\end{equation}
Using these upper bounds in \cref{thm:cmi-sharpened-generalized} proves the theorem.
\end{proof}

\subsection{Proof of \texorpdfstring{\cref{thm:f-cmi-bound}}{TEXT}}
\begin{proof}
First, with a slight abuse of notation, we will use $\ell(\bs{\widehat{y}}, \bs{y})$ to denote the average loss between a collection of predictions $\bs{\widehat{y}}$ and collection of labels $\bs{y}$.
Let us consider the joint distribution of $\tilde{F}_u$ and $J_u$ given $\tilde{Z}=\tilde{z}$ and $U=u$.
We are going to use \cref{lemma:mutual-info-lemma} for $P_{\tilde{F}_u,J_u | \tilde{Z}=\tilde{z}, U=u}$ with $\Phi=\tilde{F}_u$, $\Psi=J_u$,
\begin{align}
    g(\phi,\psi) &= \ell(\phi_\psi,(\tilde{y}_u)_{\psi}) - \ell(\phi_{\bar{\psi}},(\tilde{y}_u)_{\bar{\psi}})\\
    &=\frac{1}{m}\rbr{\sum_{i=1}^m \ell(\phi_{i,\psi_i},(\tilde{y}_u)_{i,\psi_i}) - \ell(\phi_{i,\bar{\psi}_i},(\tilde{y}_u)_{i,\bar{\psi}_i})}.
\end{align}
The function $g(\phi,\psi)$ computes the generalization gap measured on pairs of the examples specified by subset $u$, assuming that predictions are given by $\phi$ and the training/test set split is given by $\psi$.
Note that by our assumption, for any $\phi$ each summand of $g(\phi,J_u)$ is a $1$-subgaussian random variable under $P_{\tilde{F}_u,J_u | \tilde{Z}=\tilde{z}, U=u}$.
Furthermore, each of these summands has zero mean. 
As the average of $m$ independent and zero-mean $1$-subgaussian variables is $\frac{1}{\sqrt{m}}$-subgaussian, then $g(\phi, J_u)$ is $\frac{1}{\sqrt{m}}$-subgaussian for each possible $\phi$.
Additionally, $\forall \phi \in \mathcal{K}^{m\times2},\ \E_{J_u} g(\phi, J_u) = 0$.
By \cref{lemma:zero-mean-sub-gaussianity}, $g(\tilde{F}_u, J_u)$ is $\frac{1}{\sqrt{m}}$-subgaussian under $P_{\tilde{F}_u | \tilde{Z}=\tilde{z},U=u} \otimes P_{J_u | \tilde{Z}=\tilde{z},U=u}$.
Hence, these choices of $\Phi, \Psi$, and $g(\phi,\psi)$ satisfy the assumptions of \cref{lemma:mutual-info-lemma}. 
We have that
\begin{align}
    \E_{\tilde{F}_u, J_u| \tilde{Z}=\tilde{z},U=u} g(\tilde{F}_u, J_u) &= \E_{\tilde{F}_u, J_u| \tilde{Z}=\tilde{z},U=u} \sbr{\ell((\tilde{F}_u)_{J_u},(\tilde{y}_u)_{J_u})- \ell((\tilde{F}_u)_{\bar{J}_u},(\tilde{y}_u)_{\bar{J}_u}},
\end{align}
and
\begin{align}
    \E_{P_{\tilde{F}_u | \tilde{Z}=\tilde{z},U=u} \otimes P_{J_u | \tilde{Z}=\tilde{z},U=u}} \sbr{g(\tilde{F}_u, J_u)} = 0.
\end{align}
Therefore, the first part of \cref{lemma:mutual-info-lemma} gives
\begin{equation}
\abs{\E_{\tilde{F}_u, J_u| \tilde{Z}=\tilde{z},U=u}\sbr{\ell((\tilde{F}_u)_{J_u},(\tilde{y}_u)_{J_u})- \ell((\tilde{F}_u)_{\bar{J}_u},(\tilde{y}_u)_{\bar{J}_u}}} \le \sqrt{\frac{2}{m}I^{\tilde{Z}=\tilde{z}}(\tilde{F}_u; J_u)}.
\end{equation}
Taking expectation over $U$ on both sides, and then using Jensen's inequality to swap the order of absolute value and expectation of $U$, we get
\begin{equation}
\abs{\E_{U}\E_{U, \tilde{F}_U, J_U| \tilde{Z}=\tilde{z},U}\sbr{\ell((\tilde{F}_U)_{J_U},(\tilde{y}_U)_{J_U})- \ell((\tilde{F}_U)_{\bar{J}_U},(\tilde{y}_U)_{\bar{J}_U}}} \le \E_{U}\sqrt{\frac{2}{m}I^{\tilde{Z}=\tilde{z},U}(\tilde{F}_U; J_U)}.
\end{equation}
which reduces to
\begin{equation}
\abs{\E_{\tilde{F}, J | \tilde{Z}=\tilde{z}}\sbr{\ell(\tilde{F}_J,\tilde{y}_J)- \ell(\tilde{F}_{\bar{J}},\tilde{y}_{\bar{J}})}} \le \E_{U}\sqrt{\frac{2}{m}I^{\tilde{Z}=\tilde{z},U}(\tilde{F}_U; J_U)}.
\label{eq:fcmi-generalized-bound-for-fixed-z}
\end{equation}
This can be seen as bounding the expected generalization gap for a fixed $\tilde{z}$.
Taking expectation over $\tilde{Z}$ on both sides and using Jensen's inequality to switch the order of absolute value and expectation of $\tilde{Z}$, we get
\begin{equation}
\abs{\E_{\tilde{Z}, J, \tilde{F}}\sbr{\ell(\tilde{F}_J,\tilde{Y}_J)- \ell(\tilde{F}_{\bar{J}},\tilde{Y}_{\bar{J}})}} \le \E_{\tilde{Z},U}\sqrt{\frac{2}{m}I^{\tilde{Z},U}(\tilde{F}_U; J_U)}.
\end{equation}
Noticing that the left-hand side is equal to $\abs{\E_{S,R}\sbr{R(f) - r_S(f)}}$ completes the proof of the first part of the theorem.

When $u = [n]$, applying the second part of \cref{lemma:mutual-info-lemma} gives
\begin{equation}
\E_{\tilde{F}, J | \tilde{Z}=\tilde{z}}\rbr{\ell(\tilde{F}_J,\tilde{y}_J)- \ell(\tilde{F}_{\bar{J}},\tilde{y}_{\bar{J}})}^2 \le \frac{4}{n} \rbr{I^{\tilde{Z}=\tilde{z}}(\tilde{F}; J) + \log 3}.
\end{equation}
Taking expectation over $\tilde{Z}$, we get
\begin{equation}
\underbrace{\E_{\tilde{Z}, J, \tilde{F}}\rbr{\ell(\tilde{F}_J,\tilde{Y}_J)- \ell(\tilde{F}_{\bar{J}},\tilde{Y}_{\bar{J}})}^2}_B \le \E_{\tilde{Z}}\sbr{\frac{4}{n} \rbr{I^{\tilde{Z}}(\tilde{F}; J) + \log 3}}.
\end{equation}
Continuing,
\begin{align}
\E_{S,R}\rbr{R(f) - r_S(f)}^2
&= \E_{\tilde{Z}, J, R}\rbr{\ell(\tilde{F}_J,\tilde{Y}_J) - \E_{Z'\sim P_Z}\ell(f(\tilde{Z}_J,X',R),Y')}^2\\
&\hspace{-10em}\le 2B + 2\E_{\tilde{Z}, J, R}\rbr{\ell(\tilde{F}_{\bar{J}},\tilde{Y}_{\bar{J}}) - \E_{Z'\sim P_Z}\ell(f(\tilde{Z}_J,X',R),Y')}^2\\
&\hspace{-10em}= 2B + 2\E_{\tilde{Z},J,R}\rbr{\frac{1}{n}\sum_{i=1}^n\rbr{\ell(\tilde{F}_{i,\bar{J}_i},\tilde{Y}_{i,\bar{J}_i}) - \E_{Z'\sim P_Z}\ell(f(\tilde{Z}_J,X',R),Y')}}^2\\
&\hspace{-10em}= 2B + 2\E_{\tilde{Z}_J,R,\tilde{Z}_{\bar{J}}}\rbr{\frac{1}{n}\sum_{i=1}^n\rbr{\ell(f(\tilde{Z}_J,\tilde{X}_{\bar{J}},R)_i,(\tilde{Y}_{\bar{J}})_i) - \E_{Z'\sim P_Z}\ell(f(\tilde{Z}_J,X',R),Y')}}^2\\
&\hspace{-10em}= 2B + 2\E_{\tilde{Z}_J,R}\E_{\tilde{Z}_{\bar{J}} \mid \tilde{Z}_J,R}\rbr{\frac{1}{n}\sum_{i=1}^n\rbr{\ell(f(\tilde{Z}_J,\tilde{X}_{\bar{J}},R)_i,(\tilde{Y}_{\bar{J}})_i) - \E_{Z'\sim P_Z}\ell(f(\tilde{Z}_J,X',R),Y')}}^2.
\label{eq:136}
\end{align}

Note that as $\tilde{Z}_{\bar{J}}$ is independent of $(\tilde{Z}_J, R)$, conditioning on $(\tilde{Z}_J, R)$ does not change its distribution, implying that its components stay independent of each other.
For each fixed values $\tilde{Z}_J=s$ and $R=r$, the inner part of the expectation in \cref{eq:136} becomes
\begin{equation}
\E_{\tilde{Z}_{\bar{J}}}\rbr{\frac{1}{n}\sum_{i=1}^n\rbr{\ell(f(s,\tilde{X}_{\bar{J}},r)_i,(\tilde{Y}_{\bar{J}})_i) - \E_{Z'\sim P_Z}\ell(f(s,X',r),Y')}}^2,
\end{equation}
which is equal to
\begin{equation}
\E_{Z'_1,Z'_2,\ldots,Z'_n}\rbr{\frac{1}{n}\sum_{i=1}^n\rbr{\ell(f(s,X'_i,Y'_i) - \E_{Z'_i}\ell(f(s,X'_i,r),Y'_i)}}^2,
\label{eq:138}
\end{equation}
where $Z'_1,\ldots,Z'_n$ are $n$ i.i.d. samples from $P_Z$.
The expression in \cref{eq:138} is simply the variance of the average of $n$ i.i.d $[0,1]$-bounded random variables. Hence, it can be bounded by $1/(4n)$.
Connecting this result with \cref{eq:136}, we get
\begin{align}
\E_{S,R}\rbr{R(f) - r_S(f)}^2 &\le 2B + \frac{1}{2n}\\
&\le 2\E_{\tilde{Z}}\sbr{\frac{4}{n} \rbr{I^{\tilde{Z}}(\tilde{F}; J) + \log 3}} + \frac{1}{2n}\\
&\le \E_{\tilde{Z}}\sbr{\frac{8}{n} \rbr{I^{\tilde{Z}}(\tilde{F}; J) + 2}}.
\end{align}
\end{proof}

\subsection{Proof of \texorpdfstring{\cref{prop:fcmi-choice-of-m}}{TEXT}}
\begin{proof}
The proof closely follows that of \cref{prop:cmi-sharpened-generalized-choice-of-m}. The only important difference is that $\tilde{F}_u$ depends on $u$, while $W$ does not.

Let us fix a value of $\tilde{Z}$ and consider the conditional joint distribution $P_{\tilde{F}, J | \tilde{Z}}$.
If we fix a subset $u'$ of size $m+1$, set $\Phi=\tilde{F}_{u'}$, and use \cref{lemma:mi-hans-ineq} under $P_{\tilde{F}, J | \tilde{Z}}$, we get
\begin{align}
I^{\tilde{Z}}(\tilde{F}_{u'}; J_{u'}) &\ge \frac{1}{m}\sum_{k \in u'} I^{\tilde{Z}}\rbr{\tilde{F}_{u'}; J_{u' \setminus \mathset{k}}}\\
&\ge \frac{1}{m}\sum_{k \in u'} I^{\tilde{Z}}\rbr{\tilde{F}_{u'\setminus\mathset{k}}; J_{u' \setminus \mathset{k}}},
\end{align}
as removing predictions on pair $k$ can not increase the mutual information.
Therefore,
\begin{align}
\phi\rbr{\frac{1}{m+1} I^{\tilde{Z}}(\tilde{F}_{u'}; J_{u'})} &\ge \phi\rbr{\frac{1}{m(m+1)}\sum_{k \in u'} I^{\tilde{Z}}(\tilde{F}_{u'\setminus\mathset{k}}; J_{u' \setminus \mathset{k}})}\\
&\hspace{-6em}\ge \frac{1}{m+1}\sum_{k \in u'}\phi\rbr{\frac{1}{m} I^{\tilde{Z}}(\tilde{F}_{u'\setminus\mathset{k}}; J_{u' \setminus \mathset{k}})}.&&\text{\hspace{-8em}(by Jensen's inequality)}
\end{align}
Taking expectation over $U'$ on both sides, we have
\begin{align}
\E_{U'}\phi\rbr{\frac{1}{m+1} I^{\tilde{Z}, U'}\rbr{\tilde{F}_{U'}; J_{U'}}} &\ge \E_{U'}\sbr{\frac{1}{m+1}\sum_{k \in U'}\phi\rbr{\frac{1}{m} I^{\tilde{Z},U'}(\tilde{F}_{U'\setminus\mathset{k}}; J_{U' \setminus \mathset{k}})}}\\
&= \sum_{u} \alpha_u \phi\rbr{\frac{1}{m}I^{\tilde{Z}}(\tilde{F}_{u}; J_u)}.
\end{align}
For each subset $u$ of size $m$, the coefficient $\alpha_u$ is equal to
\begin{equation}
    \alpha_u = \frac{1}{\Choose{n}{m+1}}\cdot\frac{1}{m+1}\cdot(n-m) = \frac{1}{\Choose{n}{m}}.
\end{equation}
Therefore
\begin{equation}
    \sum_{u} \alpha_u \phi\rbr{\frac{1}{m}I^{\tilde{Z}}(\tilde{F}_{u}; J_u)} = \E_{U}\phi\rbr{\frac{1}{m} I^{\tilde{Z},U}(\tilde{F}_{U}; J_U)}.
\end{equation}
\end{proof}

\subsection{Proof of \texorpdfstring{\cref{thm:f-cmi-bound-stability}}{TEXT}}
\begin{proof}
Let us fix $\tilde{Z}=\tilde{z}$.
Setting $\Phi=\tilde{F}_i$, $\Psi = J$, and using the first part of \cref{lemma:stability-lemma} under $P_{\tilde{F}, J | \tilde{Z}=\tilde{z}}$, we get that
\begin{equation}
    I^{\tilde{Z}=\tilde{z}}(\tilde{F}_i; J_i) \le I^{\tilde{Z}=\tilde{z}}(\tilde{F}_i; J_i \mid J_{-i}).
\end{equation}
Next, setting $\Phi=\tilde{F}$, $\Psi=J$, and using the second part of \cref{lemma:stability-lemma} under $P_{\tilde{F}, J | \tilde{Z}=\tilde{z}}$, we get that
\begin{align}
    I^{\tilde{Z}=\tilde{z}}(\tilde{F}; J) \le \sum_{i=1}^n I^{\tilde{Z}=\tilde{z}}(\tilde{F}; J_i \mid J_{-i}).
\end{align}
Using these upper bounds in \cref{thm:f-cmi-bound} proves this theorem.
\end{proof}

\subsection{Proof of \texorpdfstring{\cref{thm:vc}}{TEXT}}
\begin{proof}
Let $k$ denote the number of distinct values $\tilde{F}$ can take by varying $J$ and $R$ for a fixed $\tilde{Z}=\tilde{z}$. Clearly, $k$ is not more than the growth function of $\HH$ evaluated at $2n$.
Applying the Sauer-Shelah lemma~\citep{sauer1972density,shelah1972combinatorial}, we get that
\begin{equation}
    k \le \sum_{i=0}^d\rbr{\begin{matrix}2n\\i\end{matrix}}.
\end{equation}
The Sauer-Shelah lemma also states that if $2n > d+1$ then
\begin{equation}
\sum_{i=0}^d\rbr{\begin{matrix}2n\\i\end{matrix}} \le \rbr{\frac{2en}{d}}^d.
\end{equation}
If $2n \le d+1$, one can upper bound $k$ by $2^{2n} \le 2^{d+1}$.
Therefore
\begin{equation}
    k \le \max\mathset{2^{d+1}, \rbr{\frac{2en}{d}}^d}.
\end{equation}
Finally, as a $\tilde{F}$ is a discrete variable with $k$ states,
\begin{align}
    \ofcmi(f,\tilde{z}) \le H(\tilde{F} \mid \tilde{Z}=\tilde{z}) \le \log(k).
\end{align}
\end{proof}

\subsection{Proof of \texorpdfstring{\cref{prop:stability-to-kl}}{TEXT}}
\begin{proof}
The proof below uses the independence of $J_1,\ldots,J_n$ and the convexity of KL divergence, once for the first and once for the second argument.
\begin{align*}
I(f(\tilde{z}_J,\bs{x'},R); J_i \mid J_{-i}) &= \KL{f(\tilde{z}_J,\bs{x'},R) | J}{f(\tilde{z}_J,\bs{x'},R) | J_{-i}}\numberthis\\
&\hspace{-8em}=\frac{1}{2}\KL{f(\tilde{z}_{J^{i\leftarrow 0}},\bs{x'},R) | J_{-i}}{f(\tilde{Z}_J,\bs{x'},R) | J_{-i}}\\
&\hspace{-8em}\quad+\frac{1}{2}\KL{f(\tilde{z}_{J^{i\leftarrow 1}},\bs{x'},R) | J_{-i}}{f(\tilde{Z}_J,\bs{x'},R) | J_{-i}}\numberthis\\
&\hspace{-8em}=\frac{1}{2}\KL{f(\tilde{z}_{J^{i\leftarrow 0}},\bs{x'},R) | J_{-i}}{\frac{1}{2}\rbr{f(\tilde{z}_{J^{i\leftarrow 0}},\bs{x'},R) + f(\tilde{z}_{J^{i \leftarrow 1}},\bs{x'},R)} | J_{-i}}\\
&\hspace{-8em}\quad+\frac{1}{2}\KL{f(\tilde{z}_{J^{i\leftarrow 1}},\bs{x'},R) | J_{-i}}{\frac{1}{2}\rbr{f(\tilde{z}_{J^{i\leftarrow 0}},\bs{x'},R) + f(\tilde{z}_{J^{i \leftarrow 1}},\bs{x'},R)} | J_{-i}}\numberthis\\
&\hspace{-8em}\le\frac{1}{4}\KL{f(\tilde{z}_{J^{i\leftarrow 0}},\bs{x'},R) | J_{-i}}{f(\tilde{z}_{J^{i\leftarrow 0}},\bs{x'},R)| J_{-i}}\\
&\hspace{-8em}\quad+ \frac{1}{4}\KL{f(\tilde{z}_{J^{i\leftarrow 0}},\bs{x'},R) | J_{-i}}{f(\tilde{z}_{J^{i\leftarrow 1}},\bs{x'},R)| J_{-i}}\\
&\hspace{-8em}\quad+\frac{1}{4}\KL{f(\tilde{z}_{J^{i\leftarrow 1}},\bs{x'},R) | J_{-i}}{f(\tilde{z}_{J^{i\leftarrow 0}},\bs{x'},R)| J_{-i}}\\
&\hspace{-8em}\quad+\frac{1}{4}\KL{f(\tilde{z}_{J^{i\leftarrow 1}},\bs{x'},R) | J_{-i}}{f(\tilde{z}_{J^{i\leftarrow 1}},\bs{x'},R)| J_{-i}}\numberthis\\
&\hspace{-8em}=\frac{1}{4}\KL{f(\tilde{z}_{J^{i\leftarrow 0}},\bs{x'},R) | J_{-i}}{f(\tilde{z}_{J^{i\leftarrow 1}},\bs{x'},R)| J_{-i}}\\
&\hspace{-8em}\quad+\frac{1}{4}\KL{f(\tilde{z}_{J^{i\leftarrow 1}},\bs{x'},R) | J_{-i}}{f(\tilde{z}_{J^{i\leftarrow 0}},\bs{x'},R)| J_{-i}}.\numberthis
\end{align*}
\end{proof}

\subsection{Proof of \texorpdfstring{\cref{thm:f-cmi-deterministic-stability}}{TEXT}}
\begin{proof}
Given a deterministic algorithm $f$, we consider the algorithm that adds Gaussian noise to the predictions of $f$:
\begin{equation}
f_\sigma(s,x,R) = f(s,x) + \xi(s,x),
\end{equation}
where $\xi(s,x) \sim \mathcal{N}(0,\sigma^2 I_k)$.
The function $f_\sigma$ is constructed in a way that the noise terms are independent for each possible combination of $s$ and $x$.
This can be achieved by viewing $R$ as an infinite collection of independent Gaussian variables, one of which is selected for each possible combination of $s$ and $x$.

Let us consider the random subsample setting and let $Z' \sim P_Z$ be a test example independent of $\tilde{Z}, J$ and randomness $R$.
First we relate the generalization gap of $f_\sigma$ to that of $f$:
\begin{align*}
&\abs{\E_{S}\sbr{R(f) - r_S(f)}}\\
&\quad=
\abs{\E_{\tilde{Z},J,R,Z}\sbr{\ell(f_\sigma(\tilde{Z}_J,X',R), Y')  - \frac{1}{n}\sum_{i=1}^n\ell(f_\sigma(\tilde{Z}_J,\tilde{X}_{i,J_i},R), \tilde{Y}_{i,J_i})}}\numberthis\\
&\quad=
\abs{\E_{\tilde{Z},J,R,Z}\sbr{\ell(f(\tilde{Z}_J,X') + \xi'(\tilde{Z}_J, X'), Y')  - \frac{1}{n}\sum_{i=1}^n\ell(f(\tilde{Z}_J,\tilde{X}_{i,J_i}) + \xi(\tilde{Z}_J, \tilde{X}_{i, J_i}), \tilde{Y}_{i,J_i})}}\numberthis\\
&\quad=
\abs{\E_{\tilde{Z},J,R,Z}\sbr{\ell(f(\tilde{Z}_J,X'), Y') + \Delta'  - \frac{1}{n}\sum_{i=1}^n\rbr{\ell(f(\tilde{Z}_J,\tilde{X}_{i,J_i}), \tilde{Y}_{i,J_i}) + \Delta_i}}},\numberthis
\label{eq:171}
\end{align*}
where
\begin{align*}
    \Delta' &= \ell(f(\tilde{Z}_J,X') + \underbrace{\xi(\tilde{Z}_J, X'), Y')}_{\triangleq \xi'} - \ell(f(\tilde{Z}_J,X'), Y'),\\
    \Delta_i &= \ell(f(\tilde{Z}_J,\tilde{X}_{i,J_i}) + \underbrace{\xi(\tilde{Z}_J, \tilde{X}_{i, J_i})}_{\triangleq \xi_i}, \tilde{Y}_{i,J_i}) - \ell(f(\tilde{Z}_J,\tilde{X}_{i,J_i}), \tilde{Y}_{i,J_i}).
\end{align*}
As $\ell(\widehat{y},y)$ is $\gamma$-Lipschitz in its first argument, $\abs{\Delta'} \le \gamma \nbr{\xi'}$ and $\abs{\Delta_i} \le \gamma \nbr{\xi_i}$.
Connecting this to \cref{eq:171} we get
\begin{align}
\abs{\E_{S,R}\sbr{R(f_\sigma) - r_S(f_\sigma)}} &\ge \abs{\E_{S}\sbr{R(f) - r_S(f)}} - \gamma \E \nbr{\xi'} - \frac{\gamma}{n}\sum_{i=1}^n\E\nbr{\xi_i}\\
&=\abs{\E_{S}\sbr{R(f) - r_S(f)}} - 2 \sqrt{d} \gamma \sigma.\label{eq:174}
\end{align}
Similarly, we relate the expected squared generalization gap of $f_\sigma$ to that of $f$:
\begin{align*}
&\E_{S,R}\rbr{R(f_\sigma) - r_S(f_\sigma)}^2\\
&\quad=\E_{\tilde{Z},J,R}\rbr{\E_{Z'\sim P_Z}\sbr{\ell(f(\tilde{Z}_J,X'), Y') + \Delta'}  - \frac{1}{n}\sum_{i=1}^n\rbr{\ell(f(\tilde{Z}_J,\tilde{X}_{i,J_i}), \tilde{Y}_{i,J_i}) + \Delta_i}}^2\numberthis\\
&\quad= \E_{S}\rbr{R(f) - r_S(f)}^2 + \E_{\tilde{Z},J, R}\rbr{\E_{Z'\sim P_Z}[\Delta'] - \frac{1}{n}\sum_{i=1}^n \Delta_i}^2\\
&\quad\quad+2\E_{\tilde{Z},J,R}\sbr{\rbr{R(f) - r_S(f)}\rbr{\E_{Z'\sim P_Z}[\Delta'] - \frac{1}{n}\sum_{i=1}^n \Delta_i}}\numberthis\\
&\quad\ge \E_{S}\rbr{R(f) - r_S(f)}^2\\
&\quad\quad - 2\E_{\tilde{Z},J,R}\sbr{\abs{R(f) - r_S(f)}\abs{\E_{Z'\sim P_Z}[\Delta'] - \frac{1}{n}\sum_{i=1}^n \Delta_i}}\numberthis\\
&\quad= \E_{S}\rbr{R(f) - r_S(f)}^2\\
&\quad\quad - 2\E_{\tilde{Z},J}\sbr{\abs{R(f) - r_S(f)}\E_R\sbr{\abs{\E_{Z'\sim P_Z}[\Delta'] - \frac{1}{n}\sum_{i=1}^n \Delta_i}}}\numberthis\label{eq:182}.
\end{align*}
As
\begin{align}
\E_R\sbr{\abs{\E_{Z'\sim P_Z}[\Delta'] - \frac{1}{n}\sum_{i=1}^n \Delta_i}} &\le \E_{R} \sbr{\E_{Z'\sim P_Z}\abs{\Delta'}} + \frac{1}{n}\sum_{i=1}^n\E_{R}\abs{\Delta_i}\\
&\le \E_{R} \sbr{\E_{Z'\sim P_Z}\sbr{\gamma \nbr{\xi'}}} + \frac{1}{n}\sum_{i=1}^n\E_{R}\sbr{\gamma \nbr{\xi}}\\
&= 2\gamma \sqrt{d} \sigma,
\end{align}
we can write \cref{eq:182} as 
\begin{align*}
\E_{S,R}\rbr{R(f_\sigma) - r_S(f_\sigma)}^2 &\ge \E_{S}\rbr{R(f) - r_S(f)}^2 - 4\gamma \sqrt{d} \sigma \E_{\tilde{Z},J}\sbr{\abs{R(f) - r_S(f)}}\numberthis\\
&\quad\ge \E_{S}\rbr{R(f) - r_S(f)}^2 - 4\gamma \sqrt{d} \sigma \sqrt{\E_{S}\rbr{R(f) - r_S(f)}^2},\numberthis\label{eq:188}
\end{align*}
where the second line follows from Jensen's inequality ($(\E \abs{X})^2 \le \E X^2$).
Summarizing, \cref{eq:174} and \cref{eq:188} relate expected generalization gap and expected squared generalization gap of $f_\sigma$ to those of $f$.

\paragraph{Bounding expected generalization gap of $f$.}
\begin{align*}
&\abs{\E_{S}\sbr{R(f) - r_S(f)}} \\
&\quad\le \abs{\E_{S,R}\sbr{R(f_\sigma) - r_S(f_\sigma)}} + 2 \sqrt{d} \gamma \sigma &&\text{\hspace{-10em}(by \cref{eq:174})}\numberthis\\
&\quad\le \frac{1}{n}\sum_{i=1}^n\E_{\tilde{Z}} \sqrt{2 I^{\tilde{Z}}(f_\sigma(\tilde{Z}_J,\tilde{X}_i,R); J_i \mid J_{-i})} + 2 \sqrt{d} \gamma \sigma &&\text{\hspace{-10em}(by \cref{thm:f-cmi-bound-stability})}\numberthis\\
&\quad\le\frac{1}{n}\sum_{i=1}^n\E_{\tilde{Z}} \sqrt{\begin{matrix}\frac{1}{2}\KLdis{\tilde{Z}}{f_\sigma(\tilde{Z}_{J^{i\leftarrow 1}},\tilde{X}_i,R) | J_{-i}}{f_\sigma(\tilde{Z}_{J^{i\leftarrow 0}},\tilde{X}_i,R) | J_{-i}}\\ \quad+ \frac{1}{2}\KLdis{\tilde{Z}}{f_\sigma(\tilde{Z}_{J^{i\leftarrow 0}},\tilde{X}_i,R) | J_{-i}}{f_\sigma(\tilde{Z}_{J^{i\leftarrow 1}},\tilde{X}_i,R)| {J_{-i}}}\end{matrix}} + 2 \sqrt{d} \gamma \sigma\numberthis\\
&\quad=\frac{1}{n}\sum_{i=1}^n\E_{\tilde{Z}} \sqrt{\frac{1}{2 \sigma^2} \E_{J_{-i}}\nbr{f(\tilde{Z}_{J^{i\leftarrow 0}},\tilde{X}_i) - f(\tilde{Z}_{J^{i\leftarrow 1}},\tilde{X}_i)}^2_2 } + 2 \sqrt{d} \gamma \sigma\numberthis\\
&\quad\le\frac{1}{n}\sum_{i=1}^n\sqrt{\frac{1}{2 \sigma^2} \E_{\tilde{Z},J_{-i}}\nbr{f(\tilde{Z}_{J^{i\leftarrow 0}},\tilde{X}_i) - f(\tilde{Z}_{J^{i\leftarrow 1}},\tilde{X}_i)}^2_2 } + 2 \sqrt{d} \gamma \sigma\numberthis\\
&\quad\le\sqrt{\frac{\beta^2}{\sigma^2}} + 2 \sqrt{d} \gamma \sigma &&\text{\hspace{-15em}(by $\beta$ self-stability of $f$)}\numberthis.
\end{align*}
Picking $\sigma^2 = \frac{\beta}{2 \sqrt{d} \gamma}$, we get
\begin{equation}
\abs{\E_{S}\sbr{R(f) - r_S(f)}} \le 2^{\frac{3}{2}} d^{\frac{1}{4}}\sqrt{\gamma  \beta}.
\end{equation}

\paragraph{Bounding expected squared generalization gap of $f$.}
For brevity below, let $G\triangleq \E_{S}\rbr{R(f) - r_S(f)}^2$.
Starting with \cref{eq:188}, we get
\begin{align}
G &\le \E_{S,R}\rbr{R(f_\sigma) - r_S(f_\sigma)}^2 + 4\gamma \sqrt{d} \sigma \sqrt{G}\\
&\le \frac{8}{n} \rbr{\E_{\tilde{Z}}\sbr{\sum_{i=1}^n I^{\tilde{Z}}(f_\sigma(\tilde{Z}_J, \tilde{X}, R); J_i \mid J_{-i})} + 2} + 4\gamma \sqrt{d} \sigma \sqrt{G}&&\text{\hspace{-9em}(by \cref{thm:f-cmi-bound-stability})}\\
&\le \frac{16}{n} + \frac{8}{n}\sum_{i=1}^n\rbr{\begin{matrix}\frac{1}{4}\KLdis{\tilde{Z}}{f_\sigma(\tilde{Z}_{J^{i\leftarrow 1}},\tilde{X},R) | J_{-i}}{f_\sigma(\tilde{Z}_{J^{i\leftarrow 0}},\tilde{X},R) | J_{-i}}\\ \quad+ \frac{1}{4}\KLdis{\tilde{Z}}{f_\sigma(\tilde{Z}_{J^{i\leftarrow 0}},\tilde{X},R) | J_{-i}}{f_\sigma(\tilde{Z}_{J^{i\leftarrow 1}},\tilde{X},R)| {J_{-i}}}\end{matrix}} + 4\gamma \sqrt{d} \sigma \sqrt{G}\\
&=\frac{16}{n} + \frac{8}{n}\sum_{i=1}^n E_{\tilde{Z},J}\sbr{\frac{1}{4\sigma^2} \nbr{f(\tilde{Z}_{J^{i\leftarrow 0}},\tilde{X}) - f(\tilde{Z}_{J^{i\leftarrow 1}},\tilde{X})}^2_2} + 4\gamma \sqrt{d} \sigma \sqrt{G}\\
&\le\frac{16}{n} + \frac{2}{\sigma^2}\rbr{2 \beta^2 + n \beta_1^2 + n \beta_2^2} + 4\gamma \sqrt{d} \sigma \sqrt{G}.
\end{align}
The optimal $\sigma$ is given by
\begin{equation}
\sigma = \rbr{\frac{2\beta^2 + n\beta_1^2 + n \beta_2^2}{\gamma \sqrt{G} \sqrt{d}}}^{\frac{1}{3}},
\end{equation}
and gives
\begin{equation}
    G \le \frac{16}{n} + 6 d^{\frac{1}{3}} \gamma^{\frac{2}{3}} \rbr{2\beta^2 + n\beta_1^2 + n \beta_2^2}^{\frac{1}{3}} G^{\frac{1}{3}}.
\end{equation}
We discuss 2 cases.
\begin{itemize} 
\item[(i)] $\frac{16}{n} \ge 6 d^{\frac{1}{3}} \gamma^{\frac{2}{3}} \rbr{2\beta^2 + n\beta_1^2 + n \beta_2^2}^{\frac{1}{3}} G^{\frac{1}{3}}$. In this case $G \le \frac{32}{n}$.
\item[(ii)] $\frac{16}{n} < 6 d^{\frac{1}{3}} \gamma^{\frac{2}{3}} \rbr{2\beta^2 + n\beta_1^2 + n \beta_2^2}^{\frac{1}{3}} G^{\frac{1}{3}}$. In this case, we have
\begin{align}
    G \le 12 d^{\frac{1}{3}} \gamma^{\frac{2}{3}} \rbr{2\beta^2 + n\beta_1^2 + n \beta_2^2}^{\frac{1}{3}} G^{\frac{1}{3}},
\end{align}
which simplifies to
\begin{align}
    G \le 12^{\frac{3}{2}}\sqrt{d}\gamma\sqrt{2\beta^2 + n\beta_1^2 + n \beta_2^2}.
\end{align}
\end{itemize}
Combining these cases we can write that
\begin{align}
    G &\le \max\mathset{\frac{32}{n},12^{\frac{3}{2}}\sqrt{d}\gamma\sqrt{2\beta^2 + n\beta_1^2 + n \beta_2^2}}\\
    &\le \frac{32}{n} + 12^{\frac{3}{2}}\sqrt{d}\gamma\sqrt{2\beta^2 + n\beta_1^2 + n \beta_2^2}.
\end{align}
\end{proof}

\begin{remark}
The bounds of this theorem work even when $\YY = \mathcal{K} = [a,b]^k$ instead of $\bR^k$.
To see this, we first clip the noisy predictions to be in $[a,b]^k$:
\begin{equation}
f^c_\sigma(z,x)_i \triangleq \mathrm{clip}(f_\sigma(z,x), a, b)_i, \quad \forall i \in [k].
\end{equation}
Inequalities \cref{eq:174} and \cref{eq:188} that relate the expected generalization gap and expected squared generalization gap of $f_\sigma$ to those of $f$ stay true when replacing $f_\sigma$ with $f_\sigma^c$.
Furthermore, by data processing inequality, mutual informations measured with $f^c_\sigma$ can always be upper bounded by the corresponding mutual informations informations measures with $f_\sigma$.
Therefore, generalization bounds that hold for $f_\sigma$ will also for $f_\sigma^c$, allowing us to follow the exact same proofs above.
\end{remark}

\begin{remark}
In the construction of $f_\sigma$ we used Gaussian noise with zero mean and $\sigma^2 I$ covariance matrix.
A natural question arises whether a different type of noise would give better bounds.
Inequalities \cref{eq:174} and \cref{eq:188} only use the facts that noise components are independent, have zero-mean and $\sigma^2$ variance.
Therefore, if we restrict ourselves to noise distributions with independent components, each of which has zero mean and $\sigma^2$ variance, then the best bounds will be produced by noise distributions that result in the smallest KL divergence of form $\KLdis{\tilde{Z}}{f_\sigma(\tilde{Z}_{J^{i\leftarrow 1}},x',R) | J_{-i}}{f_\sigma(\tilde{Z}_{J^{i\leftarrow 0}},x',R) | J_{-i}}$.
An informal argument below hints that the Gaussian distribution might be the optimal choice of the noise distribution when for fixed $\tilde{Z}=\tilde{z}$, $f_\sigma(\tilde{z}_{J^{i\leftarrow 1}},x',R)$ and $f_\sigma(\tilde{z}_{J^{i\leftarrow 0}},x',R)$ are close to each other.

Let us fix $\sigma^2$ and consider two means $\mu_1 < \mu_2 \in \bR$. Let $\FF=\mathset{p(x; \mu) \mid \mu \in \bR}$ be a family of probability distributions with one mean parameter $\mu$, such that every distribution of it has variance $\sigma^2$ and KL divergences between members of $\FF$ exist. 
Let $X_1 \sim p(x, \mu_1)$ and $X_2 \sim p(x,\mu_2)$.
We are interested in finding such a family $\FF$ that $\KL{X}{Y}$ is minimized. For small $\mu_2 - \mu_1$, we know that
\begin{equation}
    \KL{P_X}{P_Y} \approx \frac{1}{2}(\mu_2 - \mu_1) \mathcal{I}(\mu_1) (\mu_2 - \mu_1),
    \label{eq:kl-fisher-approx}
\end{equation}
where $\mathcal{I}(\mu)$ is the Fisher information of $p(x; \mu)$.
Furthermore, let $\widehat{\mu_1} \triangleq X$. As $\E \widehat{\mu_1} = \mu_1$ and $\\var[\widehat{\mu_1}] = \sigma^2$, the Cramer-Rao bound gives
\begin{equation}
    \sigma^2 = \\var[\widehat{\mu_1}] \ge \frac{1}{\mathcal{I}(\mu_1)}.
\end{equation}
This gives us the following lower bound on the KL divergence between $X$ and $Y$:
\begin{equation}
    \KL{P_X}{P_Y} \gtrapprox \frac{1}{2\sigma^2}(\mu_2 - \mu_1)^2,
\end{equation}
which is matched by the Gaussian distribution.
\end{remark}

\subsection{Proof of \texorpdfstring{\cref{thm:g2-weak-sample-wise-bound}}{TEXT}}
\begin{proof}
Let $\bar{J} \triangleq (1-J_1,\ldots,1 - J_n)$ be the negation of $J$. We have that
\begin{align}
\E_{P_{W,S}}\sbr{\rbr{R(W) - r_S(W)}^2} &= \E_{P_{W,S}}\sbr{\rbr{\frac{1}{n}\sum_{i=1}^n \ell(W,Z_i) - \E_{Z'\sim P_Z}\ell(W,Z')}^2}\\
&\hspace{-9em}=\E_{\tilde{Z},J,W}\sbr{\rbr{\frac{1}{n}\sum_{i=1}^n \ell(W,\tilde{Z}_{i, J_i}) - \E_{Z'\sim P_Z}\ell(W,Z')}^2}\\
&\hspace{-9em}=\E_{\tilde{Z},J,W}\sbr{\rbr{\frac{1}{n}\sum_{i=1}^n \ell(W,\tilde{Z}_{i, J_i}) - \E_{Z'\sim P_Z}\ell(W,Z') + \frac{1}{n}\sum_{i=1}^n \ell(W,\tilde{Z}_{i, \bar{J}_i}) - \frac{1}{n}\sum_{i=1}^n \ell(W,\tilde{Z}_{i, \bar{J}_i})}^2}\\
&\hspace{-9em}\le 2\underbrace{\E_{\tilde{Z},J,W}\sbr{\rbr{\frac{1}{n}\sum_{i=1}^n \ell(W,\tilde{Z}_{i, J_i}) - \frac{1}{n}\sum_{i=1}^n \ell(W,\tilde{Z}_{i, \bar{J}_i})}^2}}_{B}\\
&\hspace{-9em}\quad+2\E_{\tilde{Z},J,W}\sbr{\rbr{\frac{1}{n}\sum_{i=1}^n \ell(W,\tilde{Z}_{i, \bar{J}_i}) - \E_{Z'\sim P_Z}\ell(W,Z')}^2}\\
&\hspace{-9em}= 2B + 2\E_{\tilde{Z}, J, W}\sbr{\rbr{\frac{1}{n}\sum_{i=1}^n \rbr{\ell(W,\tilde{Z}_{i,\bar{J}_i}) - \E_{Z'\sim P_Z}\ell(W,Z')}}^2}\\
&\hspace{-9em}= 2B + 2\E_{W}\E_{\tilde{Z},J | W}\sbr{\rbr{\frac{1}{n}\sum_{i=1}^n \rbr{\ell(W,\tilde{Z}_{i,\bar{J}_i}) - \E_{Z'\sim P_Z}\ell(W,Z')}}^2}.\label{eq:test-concentration}
\end{align}
For any fixed $w\in\mathcal{W}$, the terms $\ell(W, \tilde{Z}_{i,\bar{J_i}})$ are independent of each other under $P_{\tilde{Z}, J | W=w}$.
Furthermore, $W$ and $\tilde{Z}_{i, \bar{J}_i}$ are independent.
Therefore, the average in \cref{eq:test-concentration} is an average of $n$ i.i.d. random variables with zero mean. The variance of this average is at most $\frac{1}{4n}$.
Hence,
\begin{equation}
    \E_{P_{W,S}}\sbr{\rbr{R(W) - r_S(W)}^2} \le 2B + \frac{1}{2n}.\label{eq:bounding-g2-first-step}
\end{equation}
Let us bound $B$ now:
\begin{align}
B &= \E_{\tilde{Z},J,W}\sbr{\rbr{\frac{1}{n}\sum_{i=1}^n\rbr{ \ell(W,\tilde{Z}_{i, J_i}) - \ell(W,\tilde{Z}_{i, \bar{J}_i})}}^2}\\
    &= \frac{1}{n^2}\sum_{i=1}^n \E_{\tilde{Z},J,W}\sbr{\rbr{\ell(W,\tilde{Z}_{i, J_i}) - \ell(W,\tilde{Z}_{i, \bar{J}_i})}^2} \\
    &\quad+ \frac{1}{n^2}\sum_{i \neq k}\E_{\tilde{Z},J,W}\sbr{\rbr{\ell(W,\tilde{Z}_{i, J_i}) - \ell(W,\tilde{Z}_{i, \bar{J}_i})}\rbr{\ell(W,\tilde{Z}_{k, J_k}) - \ell(W,\tilde{Z}_{k, \bar{J}_k})}}\\
    &\le \frac{1}{n} + \E_{\tilde{Z},J,W}\sbr{\frac{1}{n}\sum_{i=1}^n\rbr{\rbr{\ell(W,\tilde{Z}_{i, J_i}) - \ell(W,\tilde{Z}_{i, \bar{J}_i})}\frac{1}{n}\sum_{k \neq i} \rbr{\ell(W,\tilde{Z}_{k, J_k}) - \ell(W,\tilde{Z}_{k, \bar{J}_k})}}}.\label{eq:bounding-g2-second-step}
\end{align}
Let us consider a fixed $i\in[n]$. Then
\begin{align*}
    &\E_{\tilde{Z},J,W}\sbr{\rbr{\ell(W,\tilde{Z}_{i, J_i}) - \ell(W,\tilde{Z}_{i, \bar{J}_i})}\frac{1}{n}\sum_{k \neq i} \rbr{\ell(W,\tilde{Z}_{k, J_k}) - \ell(W,\tilde{Z}_{k, \bar{J}_k})}}\\
    &\quad\quad=\E_{\tilde{Z}}\E_{J_i, W|\tilde{Z}}\underbrace{\E_{J_{-i} | J_i, W,  \tilde{Z}}\sbr{\rbr{\ell(W,\tilde{Z}_{i, J_i}) - \ell(W,\tilde{Z}_{i, \bar{J}_i})}\frac{1}{n}\sum_{k \neq i} \rbr{\ell(W,\tilde{Z}_{k, J_k}) - \ell(W,\tilde{Z}_{k, \bar{J}_k})}}}_{f(W,J_i,\tilde{Z})}.\label{eq:bounding-g2-third-step}\numberthis
\end{align*}
Note that $f(w,j_i,\tilde{z}) \in [-1, +1]$ for all $w\in\mathcal{W}, j \in \mathset{0,1}^n$, and $\tilde{z} \in \ZZ^{n\times 2}$. Therefore, by \cref{lemma:mutual-info-lemma}, for any value of $\tilde{Z}$
\begin{equation}
\E_{W,J_i|\tilde{Z}}\sbr{f(W, J_i, \tilde{Z})} \le \sqrt{2 I^{\tilde{Z}}(W; J_i)} + \E_{W\mid \tilde{Z}}\E_{J_i\mid \tilde{Z}}\sbr{f(W, J_i, \tilde{Z})}.\label{eq:bounding-g2-dv}
\end{equation}
It is left to notice that for any $w\in\mathcal{W}$, $\E_{J_i | \tilde{Z}}\sbr{f(w, J_i \mid \tilde{Z})} = 0$, as under $P_{J_i | \tilde{Z}}$ the term $\ell(w,\tilde{Z}_{i, J_i})-\ell(w,\tilde{Z}_{i,\bar{J}_i})$ has zero mean.
Therefore, \cref{eq:bounding-g2-dv} reduces to
\begin{equation}
\E_{W,J_i|\tilde{Z}}\sbr{f(W, J_i, \tilde{Z})} \le \sqrt{2 I^{\tilde{Z}}(W; J_i)}.\label{eq:bounding-g2-dv-reduced}
\end{equation}
Putting together \cref{eq:bounding-g2-first-step}, \cref{eq:bounding-g2-second-step}, \cref{eq:bounding-g2-third-step} and \cref{eq:bounding-g2-dv-reduced}, we get that
\begin{equation}
    \E_{P_{W,S}}\sbr{\rbr{R(W) - r_S(W)}^2} \le \frac{5}{2n} + \frac{2}{n}\sum_{i=1}^n \E_{\tilde{Z}}\sqrt{2 I^{\tilde{Z}}(W; J_i)}.
\end{equation}
\end{proof}

\subsection{Proof of \texorpdfstring{\cref{thm:ecmi-sample-wise-exp-gen-gap}}{TEXT}}
\begin{proof}
Let $\tilde{\Lambda} \in [0, 1]^{n\times 2}$ be the losses on examples of $\tilde{Z}$:
\begin{equation}
    \tilde{\Lambda}_{i, c} = \ell(W, \tilde{Z}_{i, c}),\ \ \forall i\in [n], c\in\mathset{0,1}.
\end{equation}
Let $\bar{J} \triangleq (1-J_1,\ldots,1 - J_n)$ be the negation of $J$.
We have that
\begin{align}
    \E_{P_{W,S}}\sbr{r_S(W) - R(W)} &= \frac{1}{n}\sum_{i=1}^n \E\sbr{\ell(W,Z_i) - \E_{Z'\sim P_Z}\ell(W,Z')}\\
    &= \frac{1}{n}\sum_{i=1}^n \E\sbr{\ell(W,Z_{i, J_i}) - \ell(W, Z_{i, \bar{J}_i})}\\
    &=\frac{1}{n}\sum_{i=1}^n \E\sbr{\tilde{\Lambda}_{i,J_i} - \tilde{\Lambda}_{i, \bar{J}_i}}.
\end{align}
If we use \cref{lemma:mutual-info-lemma} with $\Phi = \tilde{\Lambda}_i$, $\Psi=J_i$, and $f(\Phi,\Psi) = \tilde{\Lambda}_{i,J_i} - \tilde{\Lambda}_{i, \bar{J}_i}$, we get that
\begin{align}
    \abs{\E\sbr{\tilde{\Lambda}_{i,J_i} - \tilde{\Lambda}_{i, \bar{J}_i}} - \E_{\tilde{\Lambda}_i}\E_{J_i}\sbr{\tilde{\Lambda}_{i,J_i} \tilde{\Lambda}_{i, \bar{J}_i}}} \le \sqrt{2 I(\tilde{\Lambda}_i, J_i)}
\end{align}
This proves the first part of the theorem, as $\E_{\tilde{\Lambda}_i}\E_{J_i}\sbr{\tilde{\Lambda}_{i,J_i} \tilde{\Lambda}_{i, \bar{J}_i}} = 0$.
The second part can be proven by first conditioning on $\tilde{Z}_i$:
\begin{equation}
    \E\sbr{\tilde{\Lambda}_{i,J_i} - \tilde{\Lambda}_{i, \bar{J}_i}} = \E_{\tilde{Z}_i}\E_{\tilde{\Lambda}_i, J_i |\tilde{Z}_i}\sbr{\tilde{\Lambda}_{i,J_i} - \tilde{\Lambda}_{i, \bar{J}_i}},
\end{equation}
and then applying the lemma to upper bound $\abs{\E_{\tilde{\Lambda}_i, J_i |\tilde{Z}_i}\sbr{\tilde{\Lambda}_{i,J_i} - \tilde{\Lambda}_{i, \bar{J}_i}}}$ with $\sqrt{2I^{\tilde{Z_i}}(\tilde{\Lambda}_i;J_i)}$.
Finally, the third part can be proven by first conditioning on $\tilde{Z}$:
\begin{equation}
    \E\sbr{\tilde{\Lambda}_{i,J_i} - \tilde{\Lambda}_{i, \bar{J}_i}} = \E_{\tilde{Z}}\E_{\tilde{\Lambda}_i, J_i |\tilde{Z}}\sbr{\tilde{\Lambda}_{i,J_i} - \tilde{\Lambda}_{i, \bar{J}_i}},
\end{equation}
and then applying the lemma to upper bound $\abs{\E_{\tilde{\Lambda}_i, J_i |\tilde{Z}}\sbr{\tilde{\Lambda}_{i,J_i} - \tilde{\Lambda}_{i, \bar{J}_i}}}$ with $\sqrt{2I^{\tilde{Z}}(\tilde{\Lambda}_i;J_i)}$.
\end{proof}

\begin{remark} As $\tilde{Z}$ and $J_i$ are independent, we have that
\begin{equation}
    I\rbr{\ell(W, \tilde{Z}_i); J_i} \le \E_{\tilde{Z}_i}\sbr{I^{\tilde{Z}_i}\rbr{\ell(W, \tilde{Z}_i); J_i}} \le \E_{\tilde{Z}}\sbr{I^{\tilde{Z}}\rbr{\ell(W, \tilde{Z}_i); J_i}}.
\end{equation}
However, if we consider expected square root of disintegrated mutual informations (as in the bound of this theorem), then this relation might not be true.
\end{remark}

\subsection{Proof of \texorpdfstring{\cref{thm:g2-pairwise-CMI}}{TEXT}}
\begin{proof}
It is enough to prove only \cref{eq:g2-pairwise-ecmi-strongest}, \cref{eq:g2-pairwise-ecmi-strong}, and \cref{eq:g2-pairwise-ecmi-weak}. Inequality \cref{eq:g2-pairwise-cmi-strong} can be derived from \cref{eq:g2-pairwise-ecmi-strong} using data processing inequality, while \cref{eq:g2-pairwise-cmi-weak} can be derived from \cref{eq:g2-pairwise-ecmi-weak}.

As in the proof of \cref{thm:g2-weak-sample-wise-bound}, 
\begin{align}
    \E_{P_{W,S}}\sbr{\rbr{R(W) - r_S(W)}^2} \le 2\underbrace{\E_{\tilde{Z},J,W}\sbr{\rbr{\frac{1}{n}\sum_{i=1}^n \ell(W,\tilde{Z}_{i, J_i}) - \frac{1}{n}\sum_{i=1}^n \ell(W,\tilde{Z}_{i, \bar{J}_i})}^2}}_{B} + \frac{1}{2n}.
\end{align}
where $\bar{J} \triangleq (1-J_1,\ldots,1 - J_n)$ and 
\begin{equation}
   B \le \frac{1}{n} + \frac{1}{n^2}\sum_{i \neq k}\underbrace{\E_{\tilde{Z},J,W}\sbr{\rbr{\ell(W,\tilde{Z}_{i, J_i}) - \ell(W,\tilde{Z}_{i, \bar{J}_i})}\rbr{\ell(W,\tilde{Z}_{k, J_k}) - \ell(W,\tilde{Z}_{k, \bar{J}_k})}}}_{C_{i,k}}.
\end{equation}
Let $\tilde{\Lambda} \in [0, 1]^{n\times 2}$ be the losses on examples of $\tilde{Z}$:
\begin{equation}
    \tilde{\Lambda}_{i, c} = \ell(W, \tilde{Z}_{i, c}),\ \ \forall i\in [n], c\in\mathset{0,1}.
\end{equation}
Then we can write
\begin{align}
    C_{i,k} = \E_{\tilde{\Lambda}_i,\tilde{\Lambda}_k,J_i, J_k}\sbr{\rbr{\tilde{\Lambda}_{i, J_i} - \tilde{\Lambda}_{i, \bar{J}_i}}\rbr{\tilde{\Lambda}_{k, J_k} - \tilde{\Lambda}_{k, \bar{J}_k}}}.
\end{align}
and use \cref{lemma:mutual-info-lemma} to arrive at:
\begin{align}
    C_{i,k} &\le \E_{\tilde{\Lambda}_i,\tilde{\Lambda}_k}\E_{J_i, J_k}\sbr{\rbr{\tilde{\Lambda}_{i, J_i} - \tilde{\Lambda}_{i, \bar{J}_i}}\rbr{\tilde{\Lambda}_{k, J_k} - \tilde{\Lambda}_{k, \bar{J}_k}}} + \sqrt{2 I(\tilde{\Lambda}_i, \tilde{\Lambda}_k; J_i, J_k)}\\
    &=\sqrt{2 I(\tilde{\Lambda}_i, \tilde{\Lambda}_k; J_i, J_k)},
\end{align}
where the last equality holds as for any fixed value of $(\tilde{\Lambda}_i, \tilde{\Lambda}_k)$ the difference terms $\rbr{\tilde{\Lambda}_{i, J_i} - \tilde{\Lambda}_{i, \bar{J}_i}}$ and $\rbr{\tilde{\Lambda}_{k, J_k} - \tilde{\Lambda}_{k, \bar{J}_k}}$ are independent and have zero mean under $P_{J_i, J_k}$.

To derive \cref{eq:g2-pairwise-ecmi-strong}, one can condition on $\tilde{Z}_i, \tilde{Z}_k$:
\begin{equation}
    C_{i,k} = \E_{\tilde{Z}_i,\tilde{Z}_k}\E_{\tilde{\Lambda}_i,\tilde{\Lambda}_k,J_i, J_k | \tilde{Z}_i,\tilde{Z}_k}\sbr{\rbr{\tilde{\Lambda}_{i, J_i} - \tilde{\Lambda}_{i, \bar{J}_i}}\rbr{\tilde{\Lambda}_{k, J_k} - \tilde{\Lambda}_{k, \bar{J}_k}}},
\end{equation}
and then apply \cref{lemma:mutual-info-lemma} for the inner expectation.
Similarly, \cref{eq:g2-pairwise-ecmi-weak} can be derived by conditioning on $\tilde{Z}$ and then applying \cref{lemma:mutual-info-lemma}.
\end{proof}

\begin{remark}
Wit the expectation inside the square root, inequalities \cref{eq:g2-pairwise-ecmi-strongest}, \cref{eq:g2-pairwise-ecmi-strong}, and \cref{eq:g2-pairwise-ecmi-weak} would be in non-decreasing order. With the expectation outside the square root, this relation might not be true.
\end{remark}

\subsection{Proof of \texorpdfstring{\cref{prop:kernel-method-solution-norm}}{TEXT}}
\begin{proof}
As $K$ is a full rank matrix almost surely, then with probability 1 there exists a vector $\bs{\alpha}\in\bR^n$, such that $K \bs{\alpha} = \bs{Y}$.
Consider the function $f(x) = \sum_{i=1}^n \alpha_i k(X_i, x) \in \HH$. Clearly, $f(X_i) = Y_i,\ \forall i \in [n]$. Furthermore, $\nbr{f}^2_\HH = \bs{\alpha}^\top K\bs{\alpha} = \bs{Y}^\top K^{-1} \bs{Y}$.
The existence of such $f \in \HH$ with zero empirical loss and the assumptions on the loss function imply that any optimal solution of problem \cref{eq:kernel-method-general} has a norm at most $\bs{Y}^\top K^{-1} \bs{Y}$.
\end{proof}

\subsection{Proof of \texorpdfstring{\cref{thm:margin-bound-label-complexity}}{TEXT}}

To prove \cref{thm:margin-bound-label-complexity} we need the following definition of Rademacher complexity~\citep{mohri2018foundations}.

\begin{definition}[Rademacher complexity] Let $\GG$ be a family of functions from $\ZZ$ to $\bR$, and $Z_1,\ldots,Z_n$ be $n$ i.i.d. examples from a distribution $P_Z$ on $\ZZ$. Then, the \emph{empirical Rademacher complexity} of $\GG$ with respect to $(Z_1,\ldots,Z_n)$ is defined as
\begin{equation}
    \widehat{\mathfrak{R}}_n(\GG) = \E_{\sigma_1,\ldots,\sigma_n}\sbr{\sup_{g\in\GG} \frac{1}{m}\sum_{i=1}^n \sigma_i g(Z_i)},
\end{equation}
where $\sigma_i$ are independent Rademacher random variables (i.e., uniform random variables taking values in $\mathset{-1, 1}$).
The \emph{Rademacher complexity} of $\GG$ is then defined as
\begin{equation}
    \mathfrak{R}_n(\GG) = \E_{Z_1,\ldots,Z_n}\sbr{ \widehat{\mathfrak{R}}_n(\GG)}.
\end{equation}    
\end{definition}

\begin{remark}
    Shifting the hypothesis class $\GG$ by a constant function does not change the empirical Rademacher complexity:
    \begin{align}
     \widehat{\mathfrak{R}}_n\rbr{\mathset{f + g : g \in \GG}} &= \E_{\sigma_1,\ldots,\sigma_n}\sbr{\sup_{g\in\GG} \frac{1}{m}\sum_{i=1}^n \sigma_i \rbr{f(Z_i) + g(Z_i)}}\\
     &= \E_{\sigma_1,\ldots,\sigma_n}\sbr{\frac{1}{m}\sum_{i=1}^n \sigma_i f(Z_i) + \sup_{g\in\GG} \frac{1}{m}\sum_{i=1}^n \sigma_i g(Z_i)}\\
     &= \E_{\sigma_1,\ldots,\sigma_n}\sbr{\sup_{g\in\GG} \frac{1}{m}\sum_{i=1}^n \sigma_i g(Z_i)} = \widehat{\mathfrak{R}}_n(\GG).
    \end{align}
    \label{remark:shfiting-rademacher}
\end{remark}

Given the kernel classification setting described in \cref{subsec:target-complexity-scalar-output}, we first prove a slightly more general variant of a classical generalization gap bound in~\citet[Theorem 21]{bartlett2002rademacher}.

\begin{lemma}
Assume $\sup_{x \in \XX} k(x,x) < \infty$. Fix any constant $M > 0$.
Then with probability at least $1-\delta$, every function $f \in \HH$ with $\nbr{f}_\HH \le M$ satisfies
\begin{align*}
    P_{X,Y}(Y f(X) \le 0) &\le \frac{1}{n}\sum_{i=1}^n \phi_\gamma(\sign\rbr{Y_i} f(X_i)) + \frac{2M}{\gamma n} \sqrt{\trace\rbr{K}} + 3\sqrt{\frac{\ln\rbr{2/\delta}}{2n}}.\numberthis 
\end{align*}
\label{lemma:margin-bound-classical}
\end{lemma}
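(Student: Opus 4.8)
The plan is to follow the classical Rademacher-complexity route for margin bounds (as in~\citet{bartlett2002rademacher}), carrying the norm budget $M$ through explicitly rather than normalizing it to $1$. First I would fix the hypothesis ball $\HH_M = \cbr{f\in\HH : \nbr{f}_\HH \le M}$ and dominate the target $0$--$1$ risk by the empirical margin risk. Since $\sign(Y)$ and $Y$ have the same sign, the event $\cbr{Yf(X)\le 0}$ coincides with $\cbr{\sign(Y)f(X)\le 0}$, and because $\phi_\gamma(\alpha)\ge \ind{\alpha\le 0}$ for every $\alpha$, we obtain $P_{X,Y}(Yf(X)\le 0)\le \E\sbr{\phi_\gamma(\sign(Y)f(X))}$. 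Thus it suffices to control the uniform generalization gap of the loss class $\GG = \cbr{(x,y)\mapsto \phi_\gamma(\sign(y)f(x)) : f\in\HH_M}$, whose members take values in $[0,1]$ (here the assumption $\sup_x k(x,x)<\infty$ enters, since it guarantees boundedness and hence the bounded-differences condition below).

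Second, I would invoke the standard empirical-Rademacher uniform deviation bound for $[0,1]$-valued function classes (two applications of McDiarmid's inequality, which produce exactly the $3\sqrt{\ln(2/\delta)/(2n)}$ term): with probability at least $1-\delta$, simultaneously for all $g\in\GG$, $\E[g]\le \frac{1}{n}\sum_{i=1}^n g(Z_i) + 2\widehat{\mathfrak{R}}_n(\GG) + 3\sqrt{\ln(2/\delta)/(2n)}$. It then remains to bound $\widehat{\mathfrak{R}}_n(\GG)$. I would peel off $\phi_\gamma$ using Talagrand's contraction lemma: $\phi_\gamma$ is $1/\gamma$-Lipschitz, so $\widehat{\mathfrak{R}}_n(\GG)\le \frac{1}{\gamma}\widehat{\mathfrak{R}}_n(\GG')$ with $\GG' = \cbr{(x,y)\mapsto \sign(y)f(x) : f\in\HH_M}$. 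The factors $\sign(Y_i)\in\cbr{-1,+1}$ can then be absorbed into the Rademacher variables: since $\sigma_i\sign(Y_i)$ is equal in distribution to $\sigma_i$, we get $\widehat{\mathfrak{R}}_n(\GG')=\widehat{\mathfrak{R}}_n(\HH_M)$, reducing everything to the Rademacher complexity of the kernel ball.

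Third, I would bound $\widehat{\mathfrak{R}}_n(\HH_M)$ by the usual reproducing-property computation. Writing $f(X_i)=\inner{f}{k(X_i,\cdot)}_\HH$ and applying Cauchy--Schwarz over $\nbr{f}_\HH\le M$ gives $\widehat{\mathfrak{R}}_n(\HH_M) = \frac{M}{n}\E_\sigma\nbr{\sum_{i=1}^n \sigma_i k(X_i,\cdot)}_\HH$; Jensen's inequality together with $\E_\sigma[\sigma_i\sigma_j]=\ind{i=j}$ then yields $\frac{M}{n}\sqrt{\sum_{i=1}^n k(X_i,X_i)} = \frac{M}{n}\sqrt{\trace(K)}$. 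Combining with the factor $2/\gamma$ from the contraction step produces the claimed complexity term $\frac{2M}{\gamma n}\sqrt{\trace(K)}$, and assembling the three steps gives the stated inequality.

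The argument is essentially careful bookkeeping of standard tools rather than a genuinely hard step; the only places needing attention are (i) correctly absorbing the $\sign(Y)$ factor so that it does not inflate the complexity, and (ii) ensuring the \emph{empirical}-Rademacher version of the uniform-deviation bound is used, so that the data-dependent quantity $\trace(K)=\sum_i k(X_i,X_i)$ appears (rather than its expectation), which is exactly what is needed downstream when this lemma is combined with \cref{prop:kernel-method-solution-norm} to prove \cref{thm:margin-bound-label-complexity}.
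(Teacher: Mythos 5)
Your proposal is correct and follows essentially the same route as the paper's proof: dominate the $0$--$1$ loss by the margin loss, apply the empirical-Rademacher uniform deviation bound for the class $\GG$, peel off $\phi_\gamma$ by contraction while absorbing $\sign(Y_i)$ into the Rademacher variables, and bound the kernel ball's complexity by $\frac{M}{n}\sqrt{\trace(K)}$. The only cosmetic difference is that you derive the last step directly from the reproducing property, whereas the paper cites Lemma 22 of \citet{bartlett2002rademacher}.
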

\begin{proof}[Proof of \cref{lemma:margin-bound-classical}]
Let $\FF = \mathset{f \in \HH : \nbr{f} \le M}$ and consider the following class of functions:
\begin{equation}
    \GG = \mathset{(x,y) \mapsto \phi_\gamma(\sign(y) f(x)): f \in \FF}.
\end{equation}
By the standard Rademacher complexity  classification generalization bound~\citep[Theorem 3.3]{mohri2018foundations}, for any $\delta > 0$, with probability at least $1-\delta$, the following holds for all $f \in \FF$:
\begin{equation}
    \E_{X,Y}\sbr{\phi_\gamma(\sign(Y) f(X))} \le \frac{1}{n}\sum_{i=1}^n \phi_\gamma(\sign(Y_i) f(X_i)) + 2 \widehat{\mathfrak{R}}_n(\GG) + 3 \sqrt{\frac{\log(2/\delta)}{2n}}.
\end{equation}
Therefore, with probability at least $1-\delta$, for all $f \in \FF$
\begin{equation}
P_{X,Y}\rbr{Y f(X) \le 0} \le \frac{1}{n}\sum_{i=1}^n \phi_\gamma(\sign(Y_i) f(X_i)) + 2 \widehat{\mathfrak{R}}_n(\GG) + 3 \sqrt{\frac{\log(2/\delta)}{2n}}.
\end{equation}

To finish the proof, we upper bound $\widehat{\mathfrak{R}}_n(\GG)$:
\begin{align}
    \widehat{\mathfrak{R}}_n(\GG) &= \E_{\sigma_1,\ldots,\sigma_n}\sbr{\sup_{g \in \GG}\frac{1}{n}\sum_{i=1}^n \sigma_i g(X_i, Y_i)}\\
    &=\E_{\sigma_1,\ldots,\sigma_n}\sbr{\sup_{f \in \FF}\frac{1}{n}\sum_{i=1}^n \sigma_i \phi_\gamma\rbr{\sign(Y_i) f(X_i)}}\\
    &\le \frac{1}{\gamma} \E_{\sigma_1,\ldots,\sigma_n}\sbr{\sup_{f \in \FF}\frac{1}{n}\sum_{i=1}^n \sigma_i \sign(Y_i) f(X_i)}\\
    &=\frac{1}{\gamma} \E_{\sigma_1,\ldots,\sigma_n}\sbr{\sup_{f \in \FF}\frac{1}{n}\sum_{i=1}^n \sigma_i f(X_i)}\\
    &=\frac{1}{\gamma}\widehat{\mathfrak{R}}_n(\FF),
\end{align}
where the third line is due to \citet{ledoux1991probability}.
By Lemma 22 of \citet{bartlett2002rademacher}, we thus conclude that
\begin{equation}
    \widehat{\mathfrak{R}}_n(\FF) \le \frac{M}{n} \sqrt{\trace\rbr{K}}.
\end{equation}
\end{proof}

\begin{proof}[Proof of \cref{thm:margin-bound-label-complexity}]
To get a generalization bound for $f^*$ it is tempting to use \cref{lemma:margin-bound-classical} with $M = \nbr{f^*}$.
However, $\nbr{f^*}$ is a random variable depending on the training data and is an invalid choice for the constant $M$. 
This issue can be resolved by paying a small logarithmic penalty.

For any $M \ge M_0 = \left\lceil \frac{\gamma \sqrt{n}}{2 \sqrt{\kappa}}\right \rceil$ the bound of \cref{lemma:margin-bound-classical} is vacuous.
Let us consider the set of integers $\mathcal{M} = \mathset{1, 2, \ldots, M_0}$ and write \cref{lemma:margin-bound-classical} for each element of $\mathcal{M}$ with $\delta / M_0$ failure probability.
By union bound, we have that with probability at least $1-\delta$, all instances of \cref{lemma:margin-bound-classical} with $M$ chosen from $\mathcal{M}$ hold simultaneously.

If $\bs{Y}^\top K^{-1} \bs{Y} \ge M_0$, then the desired bound holds trivially, as the right-hand side becomes at least 1.
Otherwise, we set $M = \left\lceil \sqrt{\bs{Y}^\top K^{-1} \bs{Y}} \right\rceil \in \mathcal{M}$ and consider the corresponding part of the union bound.
We thus have that with at least $1 - \delta$ probability, every function $f \in \FF$ with $\nbr{f} \le M$ satisfies
\begin{equation}
    P_{X,Y}(Y f(X) \le 0) \le \frac{1}{n}\sum_{i=1}^n \phi_\gamma(\sign\rbr{Y_i} f(X_i)) + \frac{2M}{\gamma n} \sqrt{\trace\rbr{K}} + 3\sqrt{\frac{\ln\rbr{2 M_0/\delta}}{2n}}. 
\end{equation}
As by \cref{prop:kernel-method-solution-norm} any optimal solution $f^*$ has norm at most $\sqrt{\bs{Y}^\top K^{-1} \bs{Y}}$ and $M \le \sqrt{\bs{Y}^\top K^{-1} \bs{Y}} + 1$, we have with probability at least $1-\delta$,
\begin{align*}
    P_{X,Y}(Y f^*(X) \le 0) &\le \frac{1}{n}\sum_{i=1}^n \phi_\gamma(\sign{\rbr{Y_i}} f^*(X_i)) + \frac{2 \sqrt{\bs{Y}^\top K^{-1} \bs{Y}} + 2}{\gamma n} \sqrt{\trace\rbr{K}}+3\sqrt{\frac{\ln\rbr{2 M_0/\delta}}{2n}}. 
\end{align*}
\end{proof}

\subsection{Proof of \texorpdfstring{\cref{thm:margin-bound-label-complexity-vectorcase}}{TEXT}}
The proof of \cref{thm:margin-bound-label-complexity-vectorcase} follows closely that of \cref{thm:margin-bound-label-complexity}.
We first introduce some concepts related to vector-valued positive semidefinite kernels.

For the given matrix-valued positive definite kernel  $k : \XX \times \XX \rightarrow \bR^{d \times d}$, input $x \in \XX$, and $a\in\bR^d$, let $k_x a = k(\cdot, x)a$ be the function from $\XX$ to $\bR^d$ defined the following way:
\begin{equation}
    k_x a (x') = k(x', x) a, \text{ for all } x' \in \XX.
\end{equation}
With any such kernel $k$ there is a unique vector-valued RKHS $\HH$ of functions from $\XX$ to $\bR^d$.
This RKHS is the completion of $\mathrm{span}\mathset{k_x a : x \in \XX, a\in \bR^d}$, with the following inner product:
\begin{equation}
    \left\langle \sum_{i=1}^n k_{x_i} a_i, \sum_{j=1}^m k_{x'_j} a'_j \right\rangle_\HH = \sum_{i=1}^n\sum_{j=1}^m a_i^\top k(x_i, x'_j) a'_j.
\end{equation}
For any $f \in \HH$, the norm $\nbr{f}_\HH$ is defined as $\sqrt{\langle f, f\rangle_\HH}$.
Therefore, if $f(x) = \sum_{i=1}^n  k_{x_i} a_i$ then
\begin{align}
\langle f, f \rangle^2_\HH &= \sum_{i,j=1}^n a_i^\top k(x_i, x_j) a_j\\
    &= \bs{a}^\top K \bs{a},
\end{align}
where $\bs{a}^\top = (a_1^\top, \ldots, a_n^\top) \in \bR^{n d}$ and
\begin{equation}
    K = \left[ \begin{matrix}
        k(x_1, x_1) & \cdots & k(x_1, x_n)\\
        \cdots & \cdots & \cdots \\
        k(x_n, x_1) & \cdots & k(x_n, x_n)\\
    \end{matrix} \right] \in \bR^{nd \times nd}.
\end{equation}

Suppose $\mathset{(X_i, Y_i)}_{i\in[n]}$ are $n$ i.i.d. examples sampled from some probability distribution on $\XX \times \YY$, where $\YY \subset \bR^d$.
As in the binary classification case, we consider the regularized kernel problem \cref{eq:kernel-method-general}.
Let $\bs{Y}^\top = (Y_1^\top, \ldots, Y_n^\top)$ be the concatenation of targets.
The following proposition is the analog of \cref{prop:kernel-method-solution-norm} in this vector-valued setting.
\begin{proposition}
Assume $K$ is full rank almost surely.
Assume also $\ell(y, y') \ge 0, \forall y, y' \in \YY,$ and $\ell(y,y) = 0, \forall y\in\YY$. Then, with probability 1, for any solution $f^*$ of \cref{eq:kernel-method-general}, we have that
\begin{equation}
    \nbr{f^*}_\mathcal{H}^2 \le \bs{Y}^\top K^{-1} \bs{Y}.
\end{equation}
\label{prop:kernel-method-solution-norm-vectorcase}
\end{proposition}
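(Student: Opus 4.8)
The plan is to mirror the proof of \cref{prop:kernel-method-solution-norm} almost verbatim, substituting the scalar representer computation with its vector-valued counterpart set up immediately above this statement. The key idea is again to exhibit an explicit interpolating function in $\HH$ whose squared RKHS norm is exactly $\bs{Y}^\top K^{-1}\bs{Y}$, and then argue that any minimizer of \cref{eq:kernel-method-general} cannot have larger norm.

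First I would use the almost-sure full-rank assumption on the $nd\times nd$ block matrix $K$ to define $\bs{a} = K^{-1}\bs{Y} \in \bR^{nd}$, and split it into $d$-dimensional blocks $a_1,\ldots,a_n$ so that $\bs{a}^\top = (a_1^\top,\ldots,a_n^\top)$. With these coefficients I would set $f = \sum_{i=1}^n k_{X_i} a_i \in \HH$, i.e.\ $f(x) = \sum_{i=1}^n k(x,X_i)a_i$. The next step is to verify that $f$ interpolates the targets: evaluating at $X_j$ gives $f(X_j) = \sum_{i=1}^n k(X_j, X_i)a_i$, which is precisely the $j$-th block of $K\bs{a} = \bs{Y}$, hence $f(X_j) = Y_j$ for every $j\in[n]$.

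Then I would compute the norm using the inner product defined for the vector-valued RKHS: $\nbr{f}_\HH^2 = \langle f, f\rangle_\HH = \bs{a}^\top K \bs{a} = (K^{-1}\bs{Y})^\top K (K^{-1}\bs{Y}) = \bs{Y}^\top K^{-1}\bs{Y}$, where the last equality uses the symmetry of $K$ (which follows from $k(x,x') = k(x',x)^\top$ for a matrix-valued positive definite kernel). Finally, since $\ell(y,y)=0$, this $f$ achieves zero empirical loss, so its objective value in \cref{eq:kernel-method-general} equals $\tfrac{\lambda}{2}\bs{Y}^\top K^{-1}\bs{Y}$. Any optimal solution $f^*$ has objective no larger than this, and because $\ell \ge 0$ forces the empirical-loss term to be non-negative, I would conclude $\tfrac{\lambda}{2}\nbr{f^*}_\HH^2 \le \tfrac{\lambda}{2}\bs{Y}^\top K^{-1}\bs{Y}$, which yields the claim.

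The only place requiring genuine care --- and the main (if modest) obstacle --- is the block bookkeeping: making sure the evaluation $f(X_j)$ lines up with the correct block row of $K$, and that the kernel's symmetry is invoked so the quadratic form collapses to $\bs{Y}^\top K^{-1}\bs{Y}$ rather than an asymmetric expression. Once the indexing and symmetry are handled, the rest is identical to the scalar argument.
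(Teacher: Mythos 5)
Your proof is correct and follows essentially the same route as the paper's: exhibit the interpolant $f = \sum_{i=1}^n k_{X_i} a_i$ with $\bs{a} = K^{-1}\bs{Y}$, note it achieves zero empirical loss with $\nbr{f}_\HH^2 = \bs{Y}^\top K^{-1}\bs{Y}$, and conclude by optimality (using $\lambda > 0$ and $\ell \ge 0$) that any minimizer has norm at most this. Your explicit block-indexing check and the objective-value comparison are just slightly more spelled-out versions of what the paper leaves implicit.
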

\begin{proof}[Proof of \cref{prop:kernel-method-solution-norm-vectorcase}]
With probability 1, the kernel matrix $K$ is full rank. Therefore, there exists a vector $\bs{a}^\top = (a_1^\top, \ldots, a_n^\top)\in\bR^{nd}$, with $a_i \in \bR^d$, such that $K \bs{a} = \bs{Y}$. Consider the function $f(x) = \sum_{i=1}^n k_{X_i}a_i \in \HH$. Clearly, $f(X_i) = Y_i,\ \forall i \in [n]$. Furthermore,
\begin{align}
\nbr{f}^2_\HH &= \bs{a}^\top K \bs{a}\\
              &= \bs{Y}^\top K^{-1} \bs{Y}.
\end{align}
The existence of such $f(x) \in \HH$ with zero empirical loss and assumptions on the loss function imply that any optimal solution of problem \cref{eq:kernel-method-general} has a norm at most $\bs{Y}^\top K^{-1} \bs{Y}$.
\end{proof}

\begin{proof}[Proof of \cref{thm:margin-bound-label-complexity-vectorcase}]
Consider the class of functions $\FF = \mathset{f \in \HH : \nbr{f} \le M}$ for some $M > 0$.
By Theorem 2 of \citet{kuznetsov2015rademacher}, for any $\gamma > 0$ and $\delta > 0$, with probability at least $1-\delta$, the following bound holds for all $f \in \FF$:\footnote{Note that their result is in terms of Rademacher complexity rather than empirical Rademacher complexity. The variant we use can be proved with the same proof, with a single modification of bounding $R(g)$ with empirical Rademacher complexity of $\tilde{\GG}$ using Theorem 3.3 of \citet{mohri2018foundations}.}
\begin{equation}
P_{X,Y}(\rho_f (X,Y) \le 0) \le \frac{1}{n}\sum_{i=1}^n \ind{\rho_{f}(X_i, Y_i) \le \gamma} + \frac{4d}{\gamma} \widehat{\mathfrak{R}}_n(\tilde{\FF}) + 3\sqrt{\frac{\log(2/\delta)}{2n}},    
\end{equation}
where $\tilde{\FF} = \mathset{(x,y) \mapsto f(x)_y : f\in\FF, y\in[d]}$.
Next we upper bound the empirical Rademacher complexity of $\tilde{\FF}$:
\begin{align}
    \widehat{\mathfrak{R}}_n(\tilde{\FF}) 
    &= \E_{\sigma_1,\ldots,\sigma_n}\sbr{\sup_{y \in [d], h \in \HH, \nbr{h}\le M}\frac{1}{n}\sum_{i=1}^n \sigma_i h(X_i)_y}\\
    &= \E_{\sigma_1,\ldots,\sigma_n}\sbr{\sup_{y \in [d], h \in \HH, \nbr{h}\le M}\frac{1}{n}\sum_{i=1}^n \sigma_i h(X_i)^\top \mathbf{y}} && \hspace{-5.5em}\text{($\bf{y}$ is the one-hot enc. of $y$)}\\
    &= \E_{\sigma_1,\ldots,\sigma_n}\sbr{\sup_{y \in [d], h \in \HH, \nbr{h}\le M}\left\langle h, \frac{1}{n}\sum_{i=1}^n \sigma_i k_{X_i}\mathbf{y}\right\rangle_\HH}&&\hspace{-5.5em}\text{(reproducing property)}\\
    &\le\frac{M}{n} \E_{\sigma_1,\ldots,\sigma_n}\sbr{\sup_{y \in [d]} \nbr{\sum_{i=1}^n \sigma_i k_{X_i}\mathbf{y}}_\HH}&&\hspace{-4.5em}\text{(Cauchy-Schwarz)}\\
    &=\frac{M}{n} \sqrt{\E_{\sigma_1,\ldots,\sigma_n}\sbr{\sup_{y \in [d]} \nbr{\sum_{i=1}^n \sigma_i k_{X_i}\mathbf{y}}^2_\HH}}&&\hspace{-5.5em}\text{(Jensen's inequality)}\\
    &\le \frac{M}{n} \sqrt{\E_{\sigma_1,\ldots,\sigma_n}\sbr{\sum_{y=1}^d \nbr{\sum_{i=1}^n \sigma_i k_{X_i}\mathbf{y}}^2_\HH}}\\
    &= \frac{M}{n} \sqrt{\sum_{y=1}^d\E_{\sigma_1,\ldots,\sigma_n}\sbr{\sum_{i=1}^n \nbr{\sigma_i k_{X_i}\mathbf{y}}^2_\HH + \sum_{i\neq j}\left\langle \sigma_i k_{X_i}\mathbf{y}, \sigma_j k_{X_j}\mathbf{y}\right\rangle}}\\
    &= \frac{M}{n} \sqrt{\sum_{y=1}^d\E_{\sigma_1,\ldots,\sigma_n}\sbr{\sum_{i=1}^n \nbr{\sigma_i k_{X_i}\mathbf{y}}^2_\HH}} &&\text{\hspace{-5.5em}(independence of $\sigma_i$)}\\
    &=\frac{M}{n} \sqrt{\sum_{y=1}^d\sbr{\sum_{i=1}^n \mathbf{y}^\top k(X_i, X_i) \mathbf{y}}}\\
    &=\frac{M}{n}\sqrt{\trace\rbr{K}}.
\end{align}
The proof is concluded with the same reasoning of the proof of \cref{thm:margin-bound-label-complexity}.
\end{proof}

\subsection{Proof of \texorpdfstring{\cref{prop:distillation-error-wrt-dataset-labels}}{TEXT}}
\begin{proof}
We have that 
\begin{align*}
    P_{X,Y^*} (Y^* f^*(X) \le 0) &= P_{X,Y^*} (Y^* f^*(X) \le 0 \wedge Y^* g(X) \le 0)\\
    &\quad + P_{X,Y^*} (Y^* f^*(X) \le 0 \wedge Y^* g(X) > 0)\numberthis\\
    &\le P_{X,Y^*} (Y^* g(X) \le 0) + P_{X} (g(X) f^*(X) \le 0).\numberthis
\end{align*}
The rest follows from bounding $P_{X}(g(X) f^*(X) \le 0)$ using \cref{thm:margin-bound-label-complexity}.
\end{proof}

\section{Derivations}\label{app:derivations}
\subsection{SGD noise covariance}\label{sec:sgd-noise-cov}
Assume we have $n$ examples and the batch size is $b$. Let $g_i \triangleq \nabla_w \ell_i(w)$, $g\triangleq \frac{1}{n}\sum_{i=1}^n g_i$, and $\tilde{G}\triangleq\frac{1}{b}\sum_{i=1}^b g_{k_i}$, where $k_i$ are sampled independently from $\{1,\ldots,n\}$ uniformly at.
Then
\begin{align}
\cov[\tilde{G},\tilde{G}] &= \mathbb{E}\left[\left(\frac{1}{b}\sum_{i=1}^b g_{k_i} - g\right)\left(\frac{1}{b}\sum_{i=1}^b g_{k_i} - g\right)^T\right]\\
&= \frac{1}{b^2} \sum_{i=1}^b \mathbb{E}\left[(g_{k_i} - g)(g_{k_i} - g)^T\right]\\
&= \frac{1}{bn} \sum_{i=1}^n (g_{i} - g)(g_{i} - g)^T\\
&= \frac{1}{b} \left(\frac{1}{n}\left(\sum_{i=1}^n g_i g_i^T\right) -  gg^T\right).
\end{align}
We denote the per-sample covariance, $b \cdot\cov\left[\tilde{G}, \tilde{G}\right]$ with $\Lambda(w)$:
\begin{equation}
   \Lambda(w)=\frac{1}{n}\left(\sum_{i=1}^n g_i g_i^T\right) -  gg^T.
\end{equation}
We can see that whenever the number of samples times number of outputs is less than number of parameters ($n k < d$), then $\Lambda(w)$ will be rank deficient.
Also, note that if we add weight decay to the total loss then covariance $\Lambda(w)$ will not change, as all gradients will be shifted by the same vector.

\subsection{Steady-state covariance of SGD}\label{sec:sgd-final-dist}
In this section we verify that the normal distribution $\mathcal{N}(\cdot; w^*, \Sigma)$, with $w^*$ being the global minimum of the regularization MSE loss and covariance matrix $\Sigma$ satisfying the continuous Lyapunov equation $\Sigma H + H \Sigma = \frac{\eta}{b}\Lambda(w^*)$, is the steady-state distribution of the stochastic differential equation of \cref{eq:SGD-SDE-plain}.
We assume that (a) $\Lambda(w)$ is constant in a small neighborhood of $w^*$ and (b) the steady-state distribution is unique.
We start with the Fokker-Planck equation:
\begin{equation}
\frac{\partial p(w, t)}{\partial t} = \sum_{i=1}^n \frac{\partial}{\partial w_i} \left[ \eta \nabla_{w_i} \LL(w)p(w,t)\right] + \frac{\eta^2}{2b}\sum_{i=1}^n\sum_{j=1}^n \frac{\partial^2}{\partial w_i \partial w_j}\left[ \Lambda(w)_{i,j} p(w,t) \right].
\end{equation}
If $p(w) = \mathcal{N}(w ; w^*, \Sigma) = \frac{1}{Z} \exp\left\{-\frac{1}{2} (w - w^*)^T \Sigma^{-1} (w - w^*)\right\}$ is the steady-state distribution, then the Fokker-Planck becomes:
\begin{equation}
    0 = \sum_{i=1}^d \frac{\partial}{\partial w_i} \left[ \eta \nabla_{w_i} \LL(w)p(w)\right] + \frac{\eta^2}{2b}\sum_{i=1}^d\sum_{j=1}^d \frac{\partial^2}{\partial w_i \partial w_j}\left[ \Lambda(w)_{i,j} p(w) \right].
\label{eq:fokker-planck-final}
\end{equation}
In the case of MSE loss:
\begin{align}
&\nabla_{w} \LL(w) = \sum_{k=1}^n \nabla f_0(x_k) (f(x_k) - y_k) + \lambda w = \nabla f_0(\bs{x}) (f(\bs{x}) - \bs{y}) + \lambda w,\\
&\nabla^2_{w} \LL(w) = \nabla f_0(\bs{x}) \nabla f_0(\bs{x})^T + \lambda I.
\end{align}
Additionally, for $p(w)$ the following two statements hold:
\begin{align}
&\frac{\partial}{\partial w_i} p(w) = -p(w) \Sigma^{-1}_i(w-w^*),\\
&\frac{\partial^2}{\partial w_i \partial w_j} p(w) = -p(w) \Sigma^{-1}_{i,j} + p(w) \Sigma_j^{-1}(w-w^*) \Sigma_i^{-1}(w-w^*),
\end{align}
where $\Sigma^{-1}_i$ is the $i$-th row of $\Sigma^{-1}$.
Let's compute the first term of \cref{eq:fokker-planck-final}:
\begin{align}
\sum_{i=1}^d\frac{\partial}{\partial w_i} \left[ \nabla_{w_i}\LL(w) p(w) \right] &= \sum_{i=1}^d \left[p(w) \left( \nabla f_0(\bs{x})_i \nabla f_0(\bs{x})_i^T + \lambda w_i\right) - \nabla_{w_i}\LL(w) \cdot p(w) \Sigma_i^{-1}(w-w^*)\right]\\
&\hspace{-8em}= p(w) \trace\left(\nabla f_0(\bs{x}) \nabla f_0(\bs{x})^T + \lambda I\right) - p(w) \sum_{i=1}^d \left(\nabla f_0(\bs{x})_i
(f(\bs{x}) - \bs{y}) + \lambda w_i\right) \Sigma_i^{-1} (w-w^*)\\
&\hspace{-8em}= p(w) \trace(H) - p(w) \left((f(\bs{x}) - \bs{y})^T \nabla f_0(\bs{x})^T + \lambda w^T\right) \Sigma^{-1} (w-w^*)\numberthis\label{eq:fokker-planck-first-term}.
\end{align}
As  $w^*$ is a critical point of $\LL(w)$, we have that $\nabla f_0(\bs{x}) (f_{w^*}(\bs{x}) - \bs{y}) + \lambda w^* = 0$.
Therefore, we can subtract $p(w)\left((f_{w^*}(\bs{x}) - \bs{y})^T \nabla f_0(\bs{x})^T + \lambda (w^*)^T\right) \Sigma^{-1}(w-w^*)$ from \cref{eq:fokker-planck-first-term}:
\begin{align*}
\sum_{i=1}^d\frac{\partial}{\partial w_i} \left[ \nabla_{w_i}\LL(w) p(w) \right] &=\\
&\hspace{-8em}=p(w) \trace(H) - p(w) \left((f(\bs{x}) - f_{w^*}(\bs{x}))^T \nabla f_0(\bs{x})^T  + \lambda (w-w^*)^T\right)\Sigma^{-1} (w-w^*)\numberthis\\
&\hspace{-8em}=p(w) \trace(H) - p(w) (w-w^*)^T  \left(\nabla f_0(\bs{x}) \nabla f_0(\bs{x})^T + \lambda I\right)\Sigma^{-1} (w-w^*)\numberthis\\
&\hspace{-8em}=p(w) \trace(H) - p(w) (w-w^*)^T  H \Sigma^{-1} (w-w^*).\numberthis\label{eq:fokker-planck-first-summand}
\end{align*}

\paragraph{Isotropic case: $\boldsymbol{\Lambda(w) = \sigma^2 I_d}$.}
In the case when $\Lambda(w) = \sigma^2 I_d$, we have
\begin{equation}
\sum_{i,j} \frac{\partial^2}{\partial w_i \partial w_j}\left[ \Lambda(w)_{i,j} p(w) \right] = \sigma^2 \trace(\nabla^2_w p(w)) = -\sigma^2 p(w) \trace(\Sigma^{-1}) + \sigma^2 p(w) \lVert \Sigma^{-1}(w-w^*)\rVert_2^2.
\end{equation}
Putting everything together in the Fokker-Planck we get:
\begin{align*}
&\eta \left(p(w) \trace(H) - p(w) (w-w^*)^T  H \Sigma^{-1} (w-w^*))\right)\\
&\quad+ \frac{\eta^2}{2b} \left(-\sigma^2 p(w) \trace(\Sigma^{-1}) + \sigma^2 p(w) \lVert \Sigma^{-1}(w-w^*)\rVert_2^2\right) = 0.\numberthis
\end{align*}
It is easy to verify that $\Sigma^{-1} = \frac{2b}{\eta \sigma^2} H$ is a valid inverse covariance matrix and satisfies the equation above.
Hence, it is the unique steady-state distribution of the stochastic differential equation.
The result confirms that variance is high when batch size is low or learning rate is large. Additionally, the variance is low along directions of low curvature.

\paragraph*{Non-isotropic case.}
We assume $\Lambda(w)$ is constant around $w^*$ and is equal to $\Lambda$.
This assumption is acceptable to a degree, because SGD converges to a relatively small neighborhood, in which we can assume $\Lambda(w)$ to not change much.
With this assumption,
\begin{align}
\sum_{i,j} \frac{\partial^2}{\partial w_i \partial w_j}\left[ \Lambda_{i,j} p(w) \right] &= \sum_{i,j} \Lambda_{i,j} \left[ -p(w) \Sigma^{-1}_{i,j} + p(w) \Sigma_j^{-1}(w-w^*) \Sigma_i^{-1}(w-w^*) \right]\\
&\hspace{-4em}= -p(w)\trace(\Sigma^{-1}\Lambda) + p(w) \sum_{i,j} \Lambda_{i,j} (\Sigma^{-1} (w-w^*) (w-w^*)^T \Sigma^{-1})_{i,j}\\
&\hspace{-4em}= -p(w)\trace(\Sigma^{-1}\Lambda) + p(w) \trace(\Sigma^{-1} (w-w^*) (w-w^*)^T \Sigma^{-1} \Lambda)\\
&\hspace{-4em}=-p(w)\trace(\Sigma^{-1}\Lambda) + p(w) (w-w^*)^T \Sigma^{-1}\Lambda \Sigma^{-1} (w-w^*))\numberthis\label{eq:fp-second-summand-constant-sigma}.
\end{align}
It is easy to verify that if $\Sigma H + H \Sigma = \frac{\eta}{b} \Lambda$, then terms in \cref{eq:fokker-planck-first-summand,eq:fp-second-summand-constant-sigma} will be negatives of each other up to a constant $\frac{\eta}{2b}$, implying that $p(w)$ satisfies the Fokker-Planck equation.
Note that $\Sigma^{-1} = \frac{2b}{\eta} H\Lambda^{-1}$ also satisfies the Fokker-Planck, but will not be positive definite unless $H$ and $\Lambda$ commute.

\subsection{Fast update of NTK inverse after data removal}
\label{sec:inverse-update}
For computing weights or predictions of a linearized network at some time $t$, we need to compute the inverse of the NTK matrix.
To compute the informativeness scores, we need to do this inversion $n$ time, each time with one data point excluded.
In this section, we describe how to update the inverse of NTK matrix after removing one example in $O(n^2k^3)$ time, instead of doing the straightforward $O(n^3k^3)$ computation.
Without loss of generality let's assume we remove last $r$ rows and corresponding columns from the NTK matrix.
We can represent the NTK matrix as a block matrix:
\begin{equation}
    \Theta_0 = \left[\begin{matrix}A_{11} & A_{12} \\ A_{21} & A_{22}\end{matrix}\right].
\end{equation}
The goal is to compute $A_{11}^{-1}$ from $\Theta_0^{-1}$. We start with the block matrix inverse formula:
\begin{equation}
    \Theta_0^{-1} = \left[\begin{matrix}A_{11} & A_{12} \\ A_{21} & A_{22}\end{matrix}\right]^{-1} = \left[\begin{matrix}F_{11}^{-1} & -F_{11}^{-1} A_{12}A_{22}^{-1} \\  -A_{22}^{-1} A_{21} F_{11}^{-1} & F_{22}^{-1}\end{matrix}\right],
    \label{eq:block-matrix-inv}
\end{equation}
where
\begin{align}
F_{11} &= A_{11} - A_{12} A_{22}^{-1} A_{21},\label{eq:f11-def}\\
F_{22} &= A_{22} - A_{21} A_{11}^{-1} A_{12}.
\end{align}
From \cref{eq:f11-def} we have $A_{11} = F_{11} +  A_{12} A_{22}^{-1} A_{21}$.
Applying the Woodbury matrix identity on this we get:
\begin{equation}
A_{11}^{-1} = F_{11}^{-1} - F_{11}^{-1} A_{12} (A_{22} + A_{21} F_{11}^{-1} A_{12})^{-1} A_{21} F_{11}^{-1}\label{eq:ntk_inv_update}.
\end{equation}
This \cref{eq:ntk_inv_update} gives the recipe for computing $A_{11}^{-1}$.
Note that $F_{11}^{-1}$ can be read from $\Theta_0^{-1}$ using \cref{eq:block-matrix-inv}, $A_{12}, A_{21},$ and $A_{22}$ can be read from $\Theta$.
Finally, the complexity of computing $A_{11}^{-1}$ using \cref{eq:ntk_inv_update} is $O(n^2k^3)$ if we remove one example.

\subsection{Adding weight decay to linearized neural network training}\label{sec:wd-linearized}
Let us consider the loss function $\LL(w) = \sum_{i=1}^n \ell_i(w) + \frac{\lambda}{2} \lVert w-w_0 \rVert_2^2$.
In this case continuous-time gradient descent is described by the following ODE:
\begin{align}
\dot w(t) &= -\eta \nabla_w f_0(\bs{x}) (f_t(\bs{x}) - \bs{y}) - \eta \lambda (w(t)-w_0)\\
&= -\eta \nabla_w f_0(\bs{x}) (\nabla_w f_0(\bs{x})^T (w(t) - w_0) + f_0(\bs{x}) - \bs{y}) - \eta \lambda (w(t) - w_0)\\
&= \underbrace{-\eta (\nabla_w f_0(\bs{x}) \nabla_w f_0(\bs{x})^T + \lambda I)}_A (w(t)-w_0) + \underbrace{\eta \nabla_w f_0(\bs{x}) \left(-f_0(\bs{x}) + \bs{y} \right)}_b.
\end{align}
Let $\omega(t) \triangleq w(t) - w_0$, then we have
\begin{align}
\dot \omega(t) = A \omega(t) + b.
\end{align}
Since all eigenvalues of $A$ are negative, this ODE is stable and has steady-state 
\begin{align}
\omega^* &= -A^{-1}b\\
&= (\nabla_w f_0(\bs{x}) \nabla_w f_0(\bs{x})^T + \lambda I)^{-1} \nabla_w f_0(\bs{x}) (\bs{y} - f_0(\bs{x})).
\end{align}
The solution $\omega(t)$ is given by:
\begin{align}
\omega(t) &= \omega^* + e^{A t} (\omega_0 - \omega^*)\\
&= (I - e^{A t}) \omega^*.
\label{eq:weight-decay-omega-sol}
\end{align}
Let $\Theta_w \triangleq \nabla_w f_0(\bs{x}) \nabla_w f_0(\bs{x})^T$ and $\Theta_0 \triangleq \nabla_w f_0(\bs{x})^T \nabla_w f_0(\bs{x})$.
If the SVD of $\nabla_w f_0(\bs{x})$ is $U D V^T$, then $\Theta_w = U D U^T$ and $\Theta_0 = V D V^T$.
Additionally, we can extend the columns of $U$ to full basis of $\mathbb{R}^d$ (denoted with $\tilde{U}$) and append zeros to $D$ (denoted with $\tilde{D}$) to write down the eigendecomposition $\Theta_w = \tilde{U} \tilde{D} \tilde{U}^T$.
With this, we have $(\Theta_w + \lambda I)^{-1} = \tilde{U} (\tilde{D} + \lambda I)^{-1} \tilde{U}^T$.
Continuing \cref{eq:weight-decay-omega-sol} we have
\begin{align}
\omega(t) &= (I - e^{A t}) \omega^*\\
&= (I - e^{A t}) (\Theta_w + \lambda I)^{-1} \nabla_w f_0(\bs{x}) (\bs{y} - f_0(\bs{x}))\\
&= (I - e^{A t}) \tilde{U}(\tilde{D} + \lambda I)^{-1} \tilde{U}^T U D V^T (\bs{y} - f_0(\bs{x}))\\
&= \tilde{U}(I - e^{-\eta t (\tilde{D} + \lambda I)})\tilde{U}^T \tilde{U}(\tilde{D} + \lambda I)^{-1} \tilde{U}^T U D V^T (\bs{y} - f_0(\bs{x}))\\
&= \tilde{U}(I - e^{-\eta t (\tilde{D} + \lambda I)})(\tilde{D} + \lambda I)^{-1} I_{d \times nk} D V^T (\bs{y} - f_0(\bs{x}))\\
&= \tilde{U}(I - e^{-\eta t (\tilde{D} + \lambda I)}) \tilde{Z} V^T (\bs{y} - f_0(\bs{x})),
\end{align}
where $\tilde{Z} = \left[\begin{matrix} (D + \lambda I)^{-1} D\\ 0 \end{matrix}\right] \in \mathbb{R}^{d \times nk}$. Denoting $Z \triangleq (D + \lambda I)^{-1} D$ and continuing,
\begin{align}
\omega(t) &= \tilde{U}(I - e^{-\eta t (\tilde{D} + \lambda I)}) \tilde{Z} V^T (\bs{y} - f_0(\bs{x}))\\
&= U (I - e^{-\eta t (D + \lambda I)}) Z V^T (\bs{y} - f_0(\bs{x}))\\
&= U Z (I - e^{-\eta t (D + \lambda I)}) V^T (\bs{y} - f_0(\bs{x}))\\
&= U Z V^T V (I - e^{-\eta t (D + \lambda I)}) V^T (\bs{y} - f_0(\bs{x}))\\
&= U Z V^T (I - e^{-\eta t (\Theta_0 + \lambda I)}) (\bs{y} - f_0(\bs{x}))\\
&= \nabla_w f_0(\bs{x}) (\Theta_0 + \lambda I)^{-1} (I - e^{-\eta t (\Theta_0 + \lambda I)}) (\bs{y} - f_0(\bs{x})).\label{eq:weight-dacey-weights-final-containing}
\end{align}

\paragraph{Solving for outputs.}
Having $w(t)$ derived, we can write down dynamics of $f_t(\bs{x})$ for any $x$:
\begin{align}
f_t(\bs{x}) &= f_0(\bs{x}) + \nabla_w f_0(\bs{x})^T \omega(t)\\
&= f_0(\bs{x}) + \nabla_w f_0(\bs{x})^T \nabla_w f_0(\bs{x}) (\Theta_0 + \lambda I)^{-1} (I - e^{-\eta t (\Theta_0 + \lambda I)}) (\bs{y} - f_0(\bs{x}))\\
&= f_0(\bs{x}) + \Theta_0(x, X) (\Theta_0 + \lambda I)^{-1} (I - e^{-\eta t (\Theta_0 + \lambda I)}) (\bs{y} - f_0(\bs{x})).
\end{align}

\section{Additional experimental details}\label{app:add-exp-details}
This appendix presents some of the experimental details that were not included in the main text for better readability.
\begin{table}[!t]
    \centering
    \small
    \caption{Experimental details for MNIST 4 vs 9 classification in the case of standard training.}
    \begin{tabular}{ll}
    \toprule
    Network & The 5-layer CNN of \cref{tab:4-layer-cnn-arch} with 2 output units.\\
    Optimizer & ADAM with $0.001$ learning rate and $\beta_1=0.9$.\\
    Batch size & 128\\
    Number of examples ($n$) & [75, 250, 1000, 4000]\\
    Number of epochs & 200\\
    Number of samples for $\tilde{Z}$ ($k_1$) & 5\\
    Number of samplings for $S$ for each $\tilde{z}$ ($k_2$) & 30\\
    \bottomrule\\
    \end{tabular}
    \label{tab:mnist4vs9-standard}
\end{table}

\begin{table}[!t]
    \centering
    \small
    \caption{Experimental details for MNIST 4 vs 9 classification in the case of SGLD training.}
    \begin{tabular}{lp{7cm}}
    \toprule
    Network & The 5-layer CNN of \cref{tab:4-layer-cnn-arch} with 2 output units.\\
    Learning rate schedule & Starts at 0.004 and decays by a factor of 0.9 after each 100 iterations.\\
    Inverse temperature schedule & $\min(4000, \max(100, 10 e^{t / 100}))$, where $t$ is the iteration.\\
    Batch size & 100\\ 
    Number of examples ($n$) & 4000\\
    Number of epochs & 40\\
    Number of samples for $\tilde{Z}$ ($k_1$) & 5\\
    Number of samplings for $S$ for each $\tilde{z}$ ($k_2$)& 30\\
    \bottomrule\\
    \end{tabular}
    \label{tab:mnist4vs9-langevin}
\end{table}

\begin{table}[!t]
    \centering
    \small
    \caption{Experimental details for CIFAR-10 classification using fine-tuned ResNet-50 networks.}
    \begin{tabular}{ll}
    \toprule
    Network & ResNet-50 pretrained on ImageNet.\\
    Optimizer & SGD with $0.01$ learning rate and $0.9$ momentum.\\
    Data augmentations & Random horizontal flip and random 28x28 cropping.\\
    Batch size & 64\\
    Number of examples ($n$) & [1000, 5000, 20000]\\
    Number of epochs & 40\\
    Number of samples for $\tilde{Z}$ ($k_1$)& 1\\
    Number of samplings for $S$ for each $\tilde{z}$ ($k_2$) & 40\\
    \bottomrule\\
    \end{tabular}
    \label{tab:cifar10-resnet50}
\end{table}

\begin{table}[!t]
    \centering
    \small
    \caption{Experimental details for the CIFAR-10 classification experiment where a pretrained ResNet-50 is fine-tuned using SGLD.}
    \begin{tabular}{lp{7cm}}
    \toprule
    Network & ResNet-50 pretrained on ImageNet.\\
    Data augmentations & Random horizontal flip and random 28x28 cropping.\\
    Learning rate schedule & Starts at 0.01 and decays by a factor of 0.9 after each 300 iterations.\\
    Inverse temperature schedule & $\min(16000, \max(100, 10 e^{t / 300}))$, where $t$ is the iteration.\\
    Batch size & 64\\ 
    Number of examples ($n$) & 20000\\
    Number of epochs & 16\\
    Number of samples for $\tilde{Z}$ ($k_1$) & 1\\
    Number of samplings for $S$ for each $\tilde{z}$ ($k_2$)& 40\\
    \bottomrule\\
    \end{tabular}
    \label{tab:cifar10-langevin}
\end{table}

\section{Additional results}\label{app:add-results}
\paragraph{Mislabeled and confusing examples.}
\cref{fig:mnist-cifar-confusing-more-examples,fig:clothgin1m-confusing-more-examples} present incorrectly labeled or confusing examples for each class in MNIST, CIFAR-10, and Clothing1M datasets.

\begin{figure}
    \centering
    \begin{subfigure}{0.49\textwidth}
    \includegraphics[width=\textwidth]{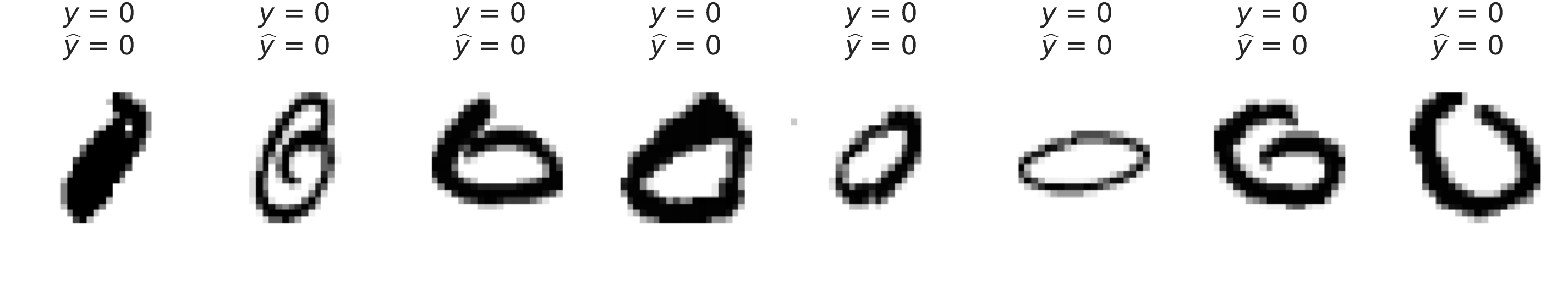}
    \includegraphics[width=\textwidth]{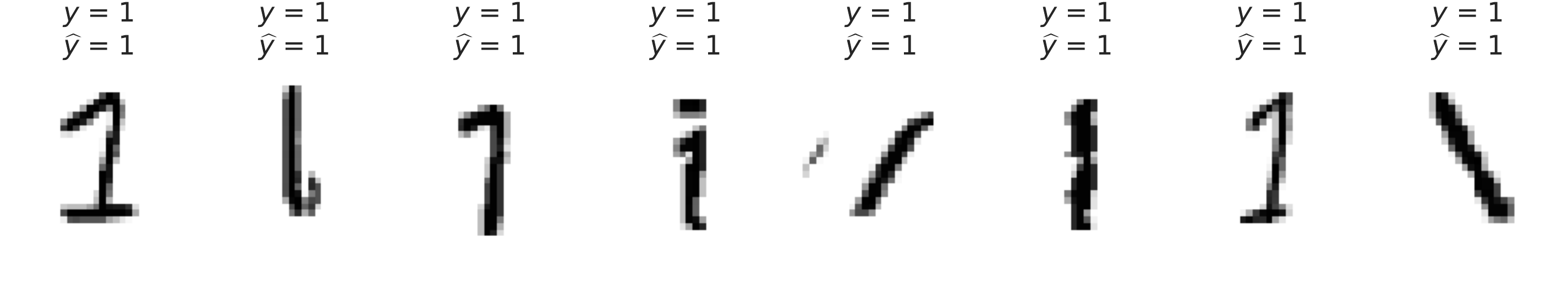}
    \includegraphics[width=\textwidth]{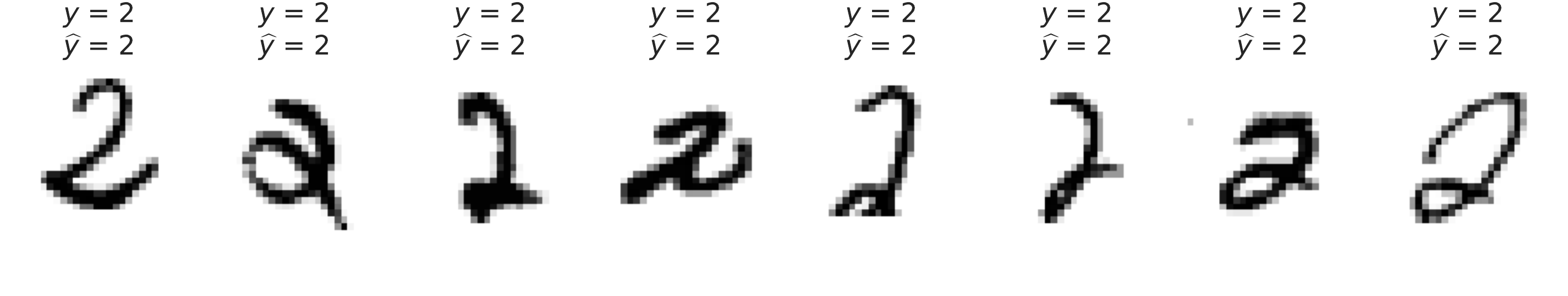}
    \includegraphics[width=\textwidth]{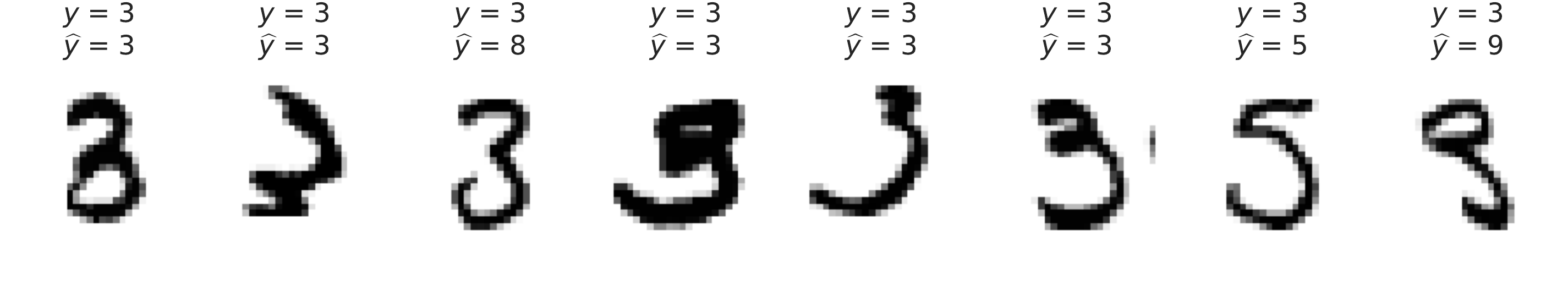}
    \includegraphics[width=\textwidth]{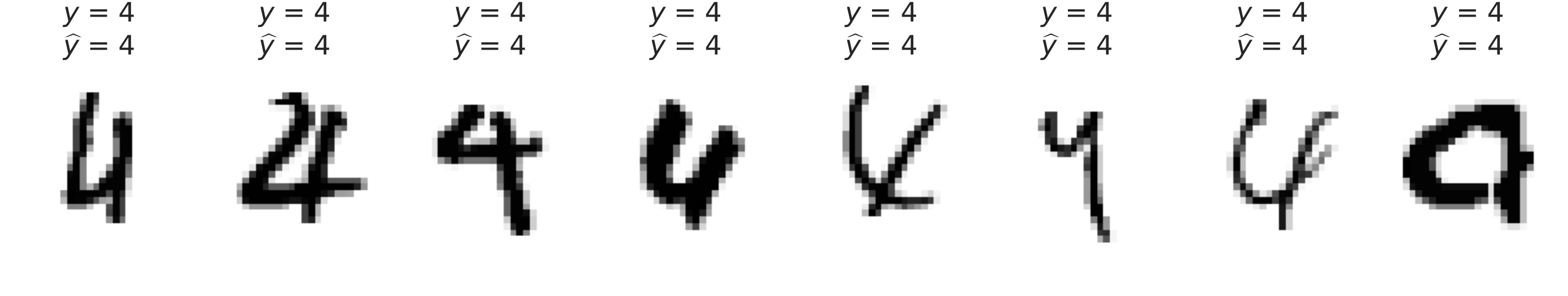}
    \includegraphics[width=\textwidth]{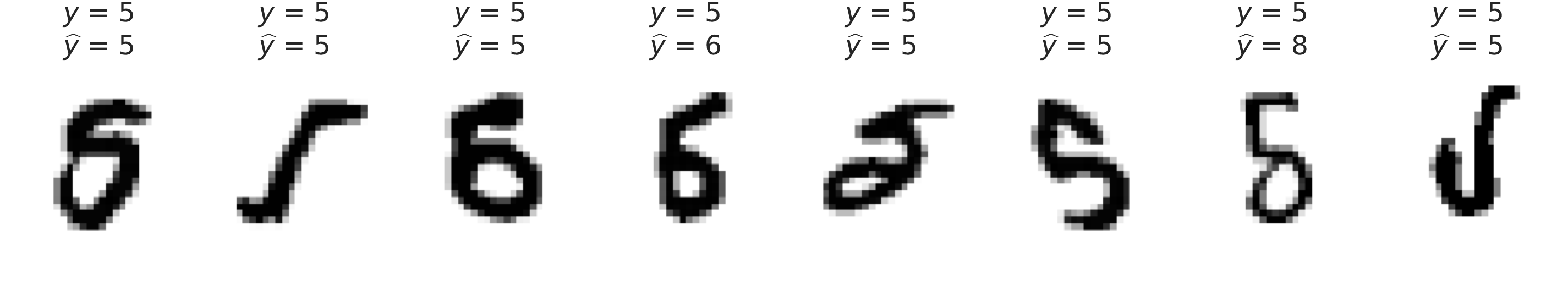}
    \includegraphics[width=\textwidth]{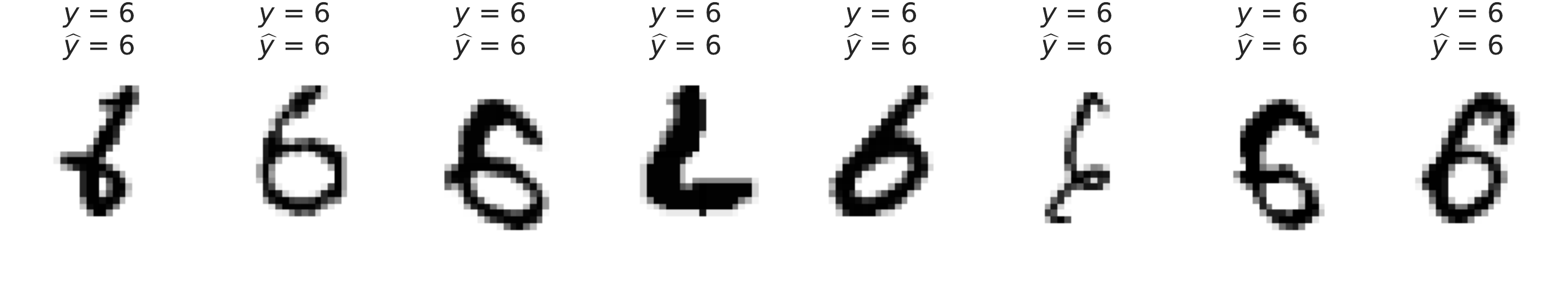}
    \includegraphics[width=\textwidth]{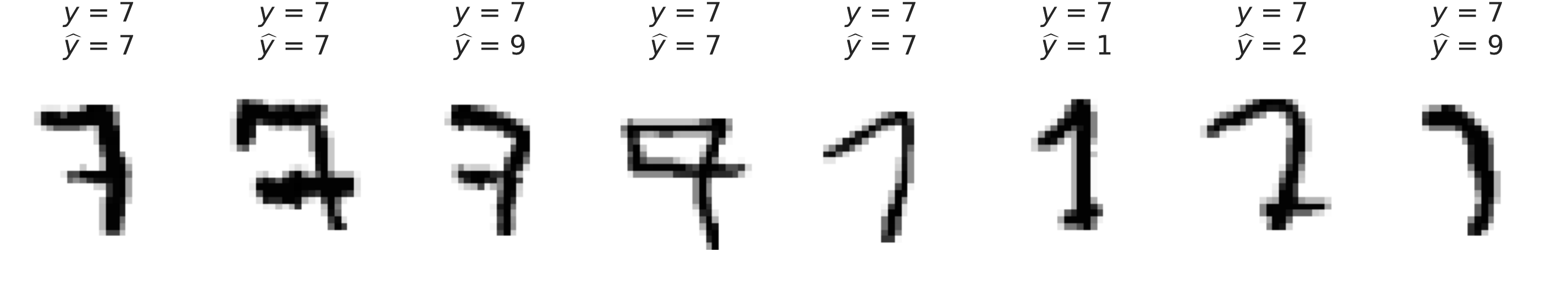}
    \includegraphics[width=\textwidth]{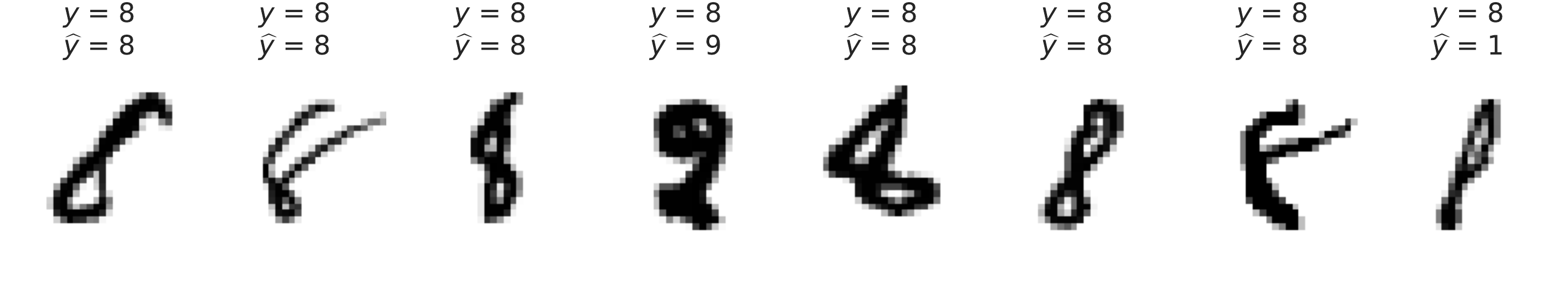}
    \includegraphics[width=\textwidth]{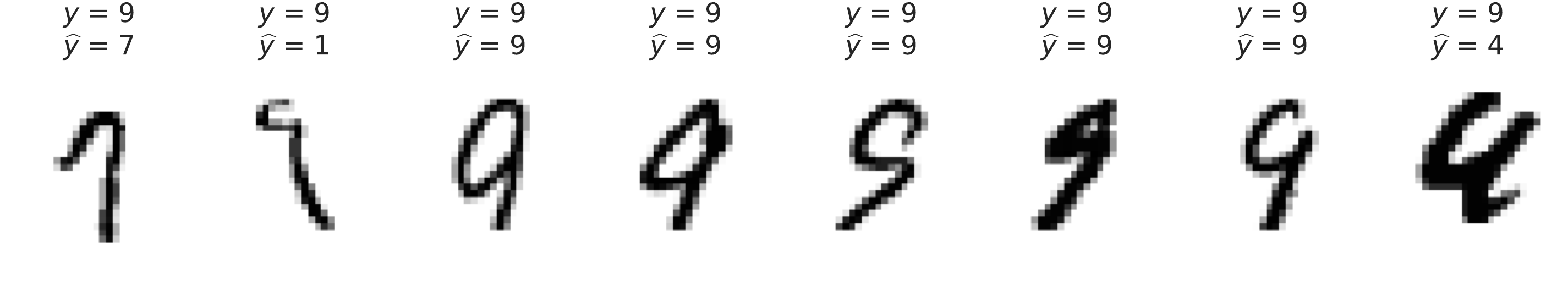}
    \caption{MNIST}
    \end{subfigure}%
    ~
    \begin{subfigure}{0.49\textwidth}
    \includegraphics[width=\textwidth]{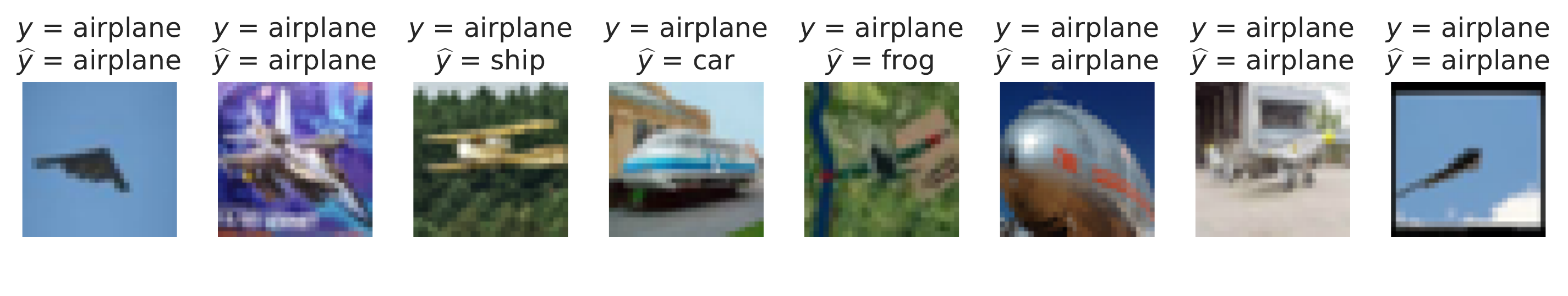}
    \includegraphics[width=\textwidth]{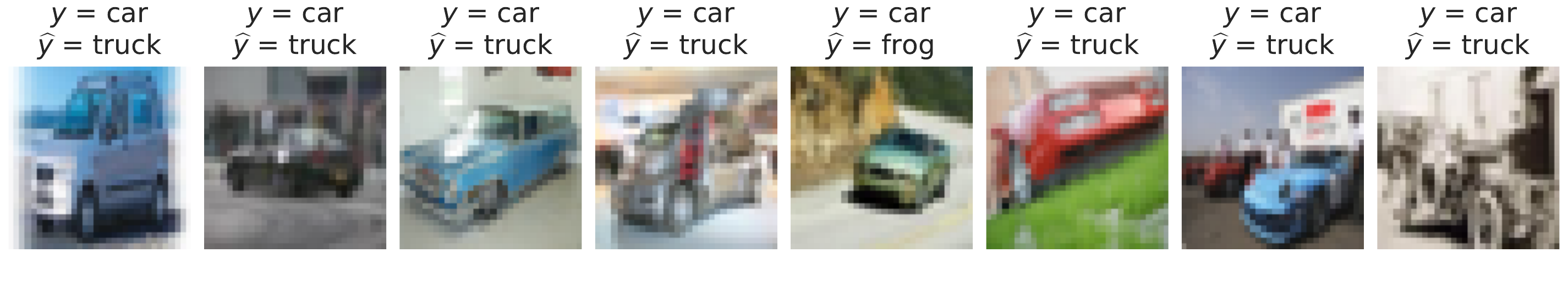}
    \includegraphics[width=\textwidth]{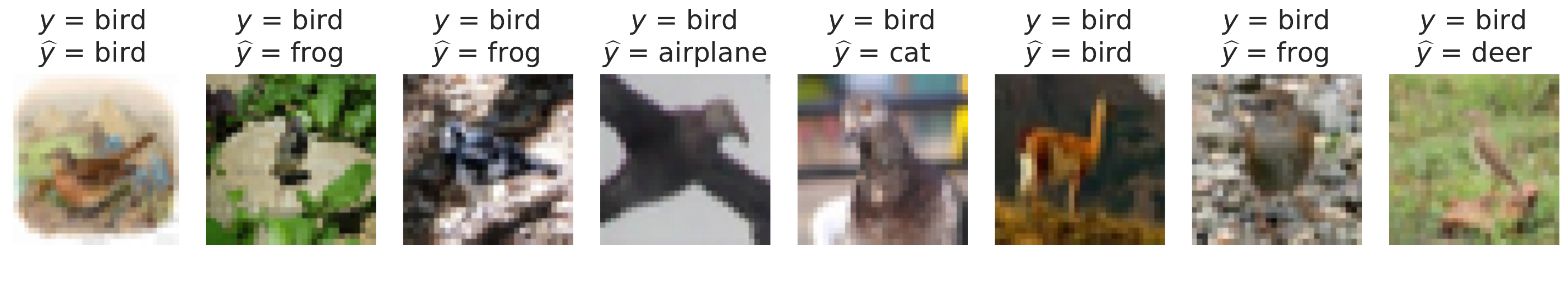}
    \includegraphics[width=\textwidth]{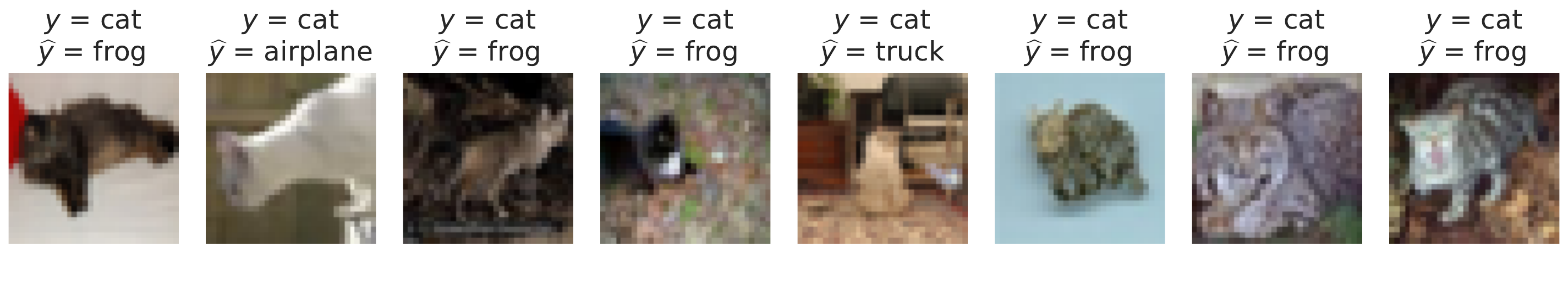}
    \includegraphics[width=\textwidth]{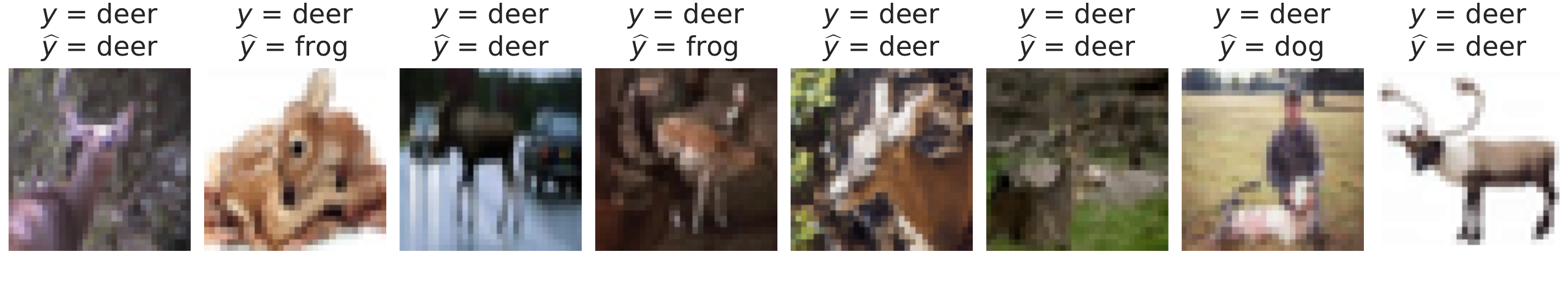}
    \includegraphics[width=\textwidth]{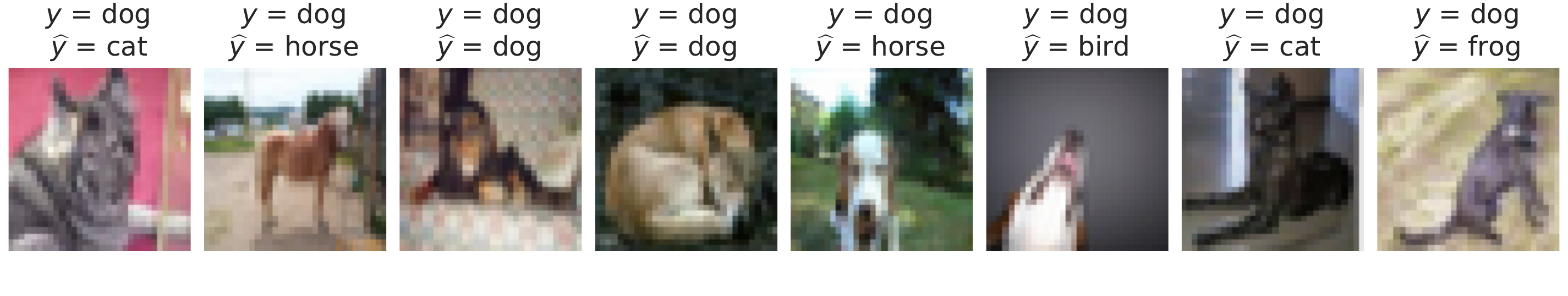}
    \includegraphics[width=\textwidth]{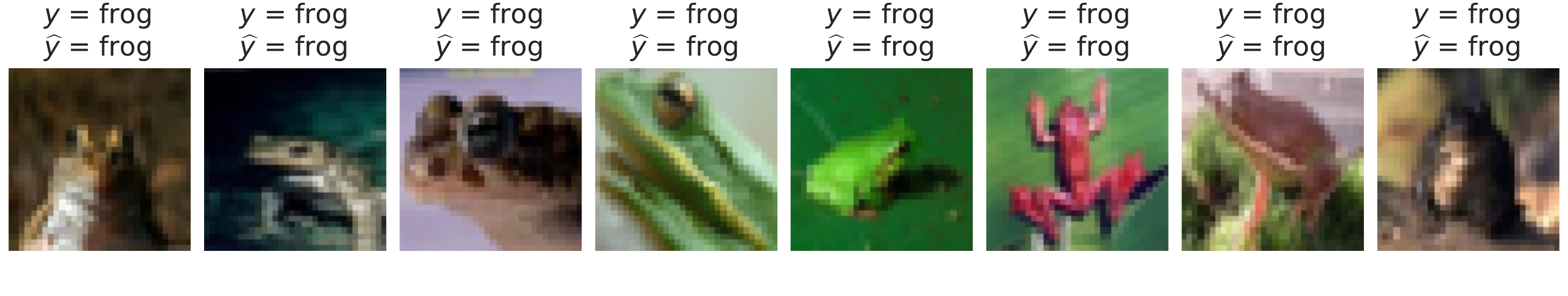}
    \includegraphics[width=\textwidth]{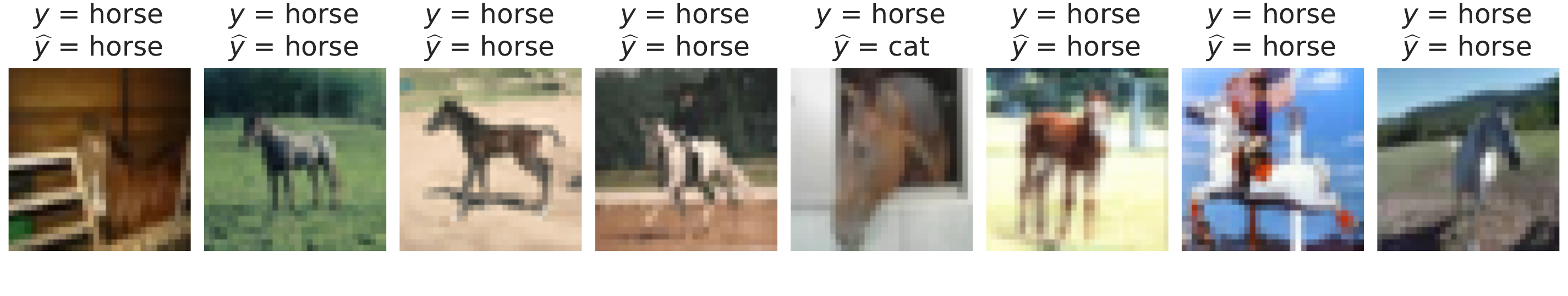}
    \includegraphics[width=\textwidth]{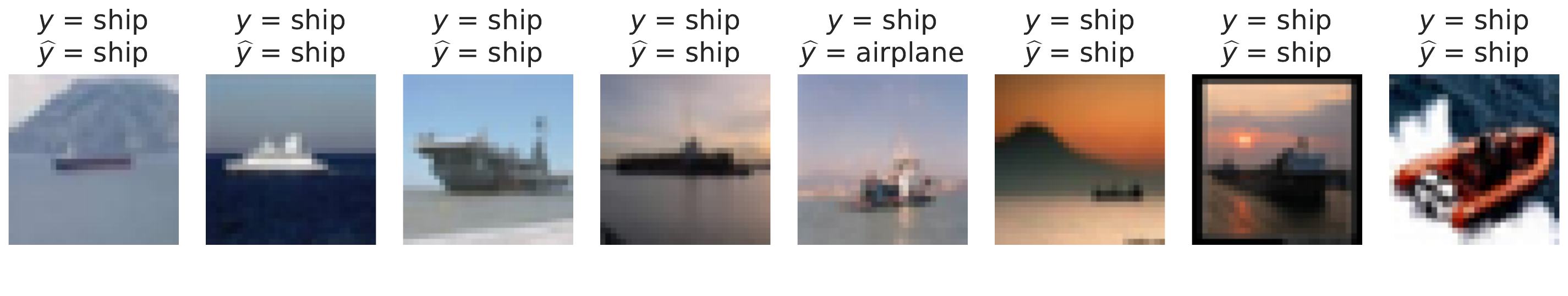}
    \includegraphics[width=\textwidth]{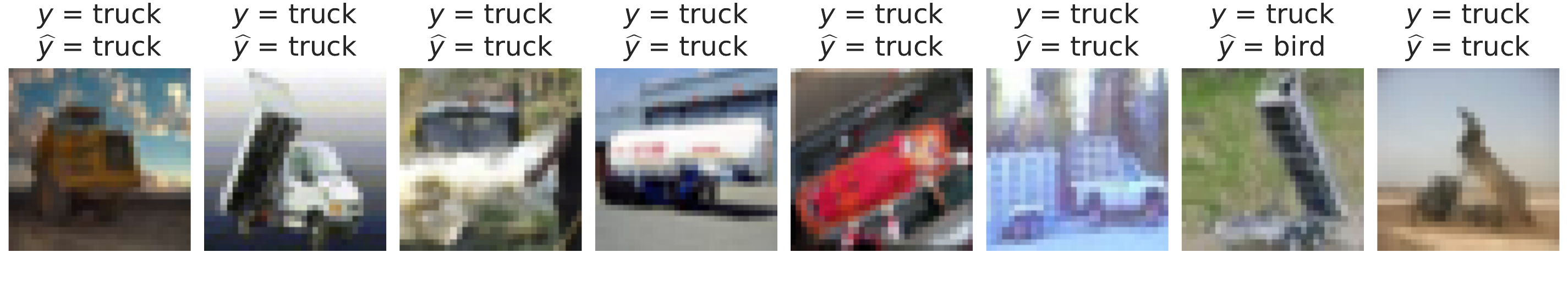}
    \caption{CIFAR-10}
    \end{subfigure}
    \caption{Most confusing 8 labels per class in the MNIST (on the left) and CIFAR-10 (on the right) datasets, according to the distance between predicted and cross-entropy gradients. The gradient predictions are done using the best instances of LIMIT.}
    \label{fig:mnist-cifar-confusing-more-examples}
\end{figure}

\begin{figure}
    \centering
    \begin{subfigure}{0.9\textwidth}
    \includegraphics[width=\textwidth]{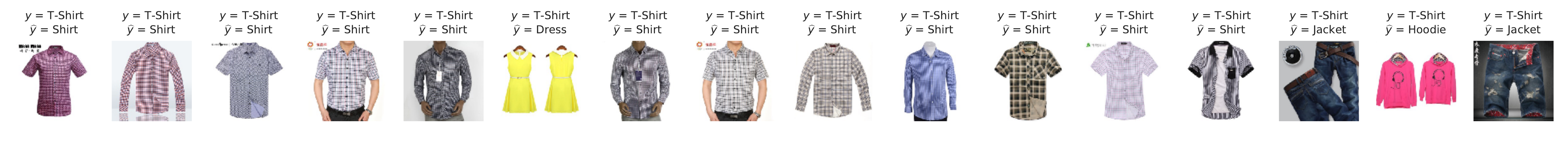}
    \end{subfigure}
    \begin{subfigure}{0.9\textwidth}
    \includegraphics[width=\textwidth]{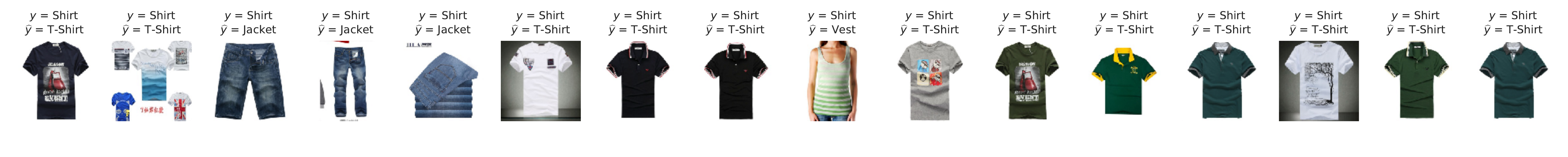}
    \end{subfigure}
    \begin{subfigure}{0.9\textwidth}
    \includegraphics[width=\textwidth]{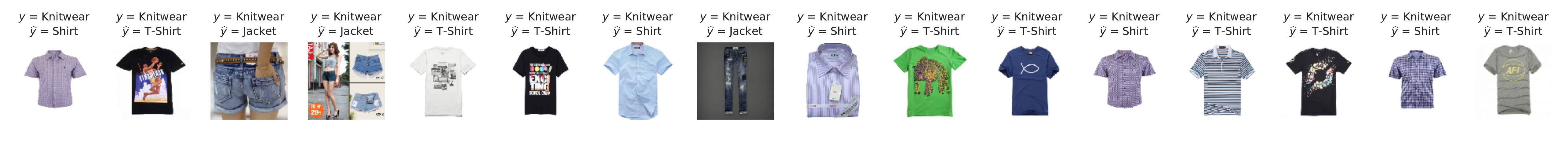}
    \end{subfigure}
    \begin{subfigure}{0.9\textwidth}
    \includegraphics[width=\textwidth]{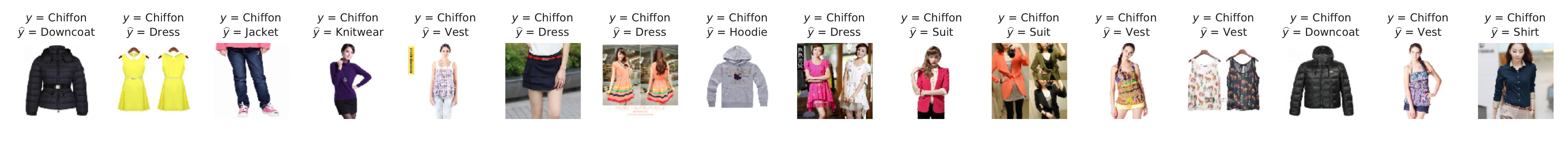}
    \end{subfigure}
    \begin{subfigure}{0.9\textwidth}
    \includegraphics[width=\textwidth]{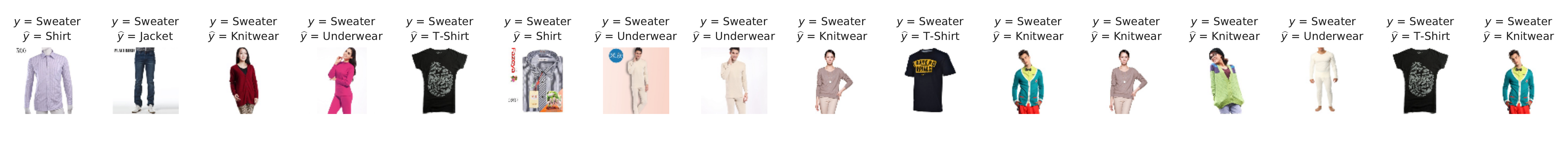}
    \end{subfigure}
    \begin{subfigure}{0.9\textwidth}
    \includegraphics[width=\textwidth]{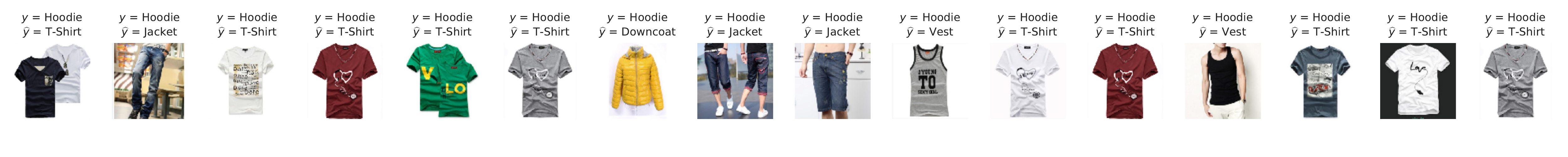}
    \end{subfigure}
    \begin{subfigure}{0.9\textwidth}
    \includegraphics[width=\textwidth]{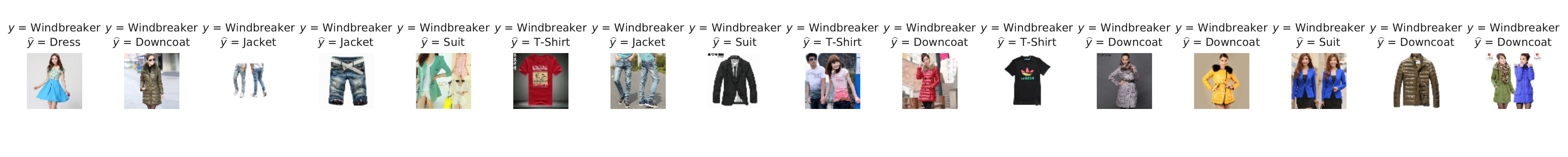}
    \end{subfigure}
    \begin{subfigure}{0.9\textwidth}
    \includegraphics[width=\textwidth]{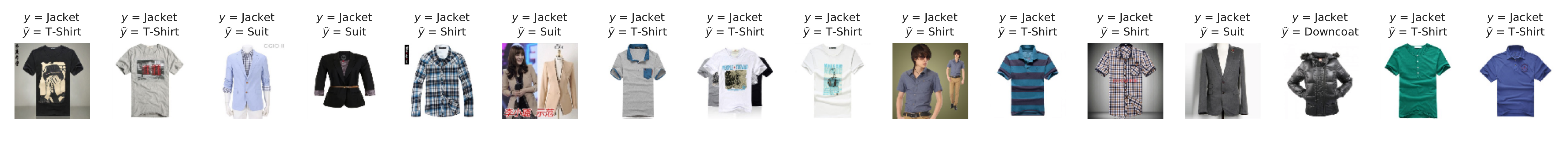}
    \end{subfigure}
    \begin{subfigure}{0.9\textwidth}
    \includegraphics[width=\textwidth]{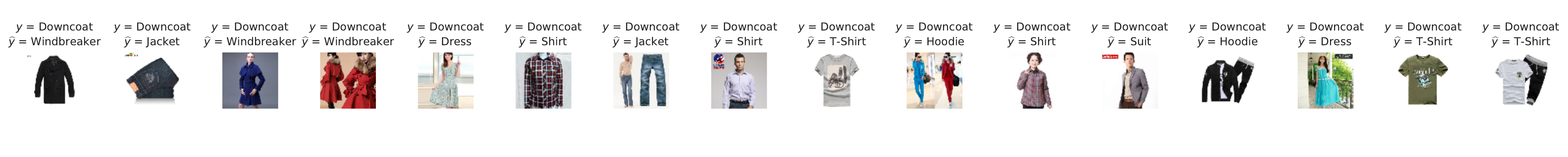}
    \end{subfigure}
    \begin{subfigure}{0.9\textwidth}
    \includegraphics[width=\textwidth]{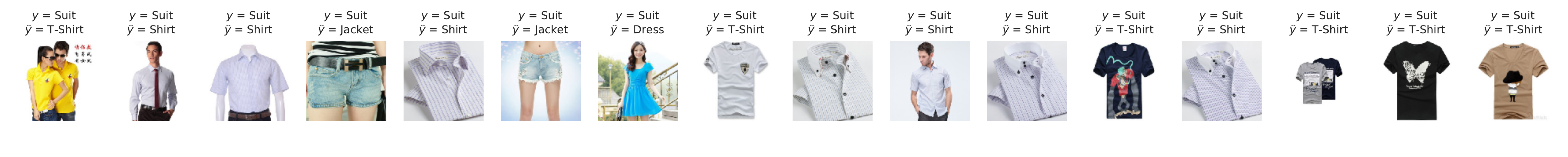}
    \end{subfigure}
    \begin{subfigure}{0.9\textwidth}
    \includegraphics[width=\textwidth]{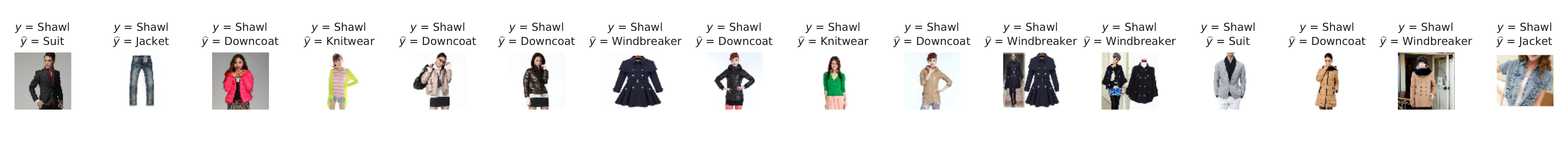}
    \end{subfigure}
    \begin{subfigure}{0.9\textwidth}
    \includegraphics[width=\textwidth]{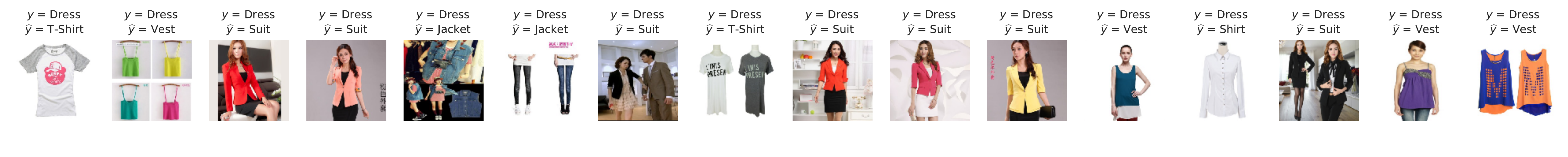}
    \end{subfigure}
    \begin{subfigure}{0.9\textwidth}
    \includegraphics[width=\textwidth]{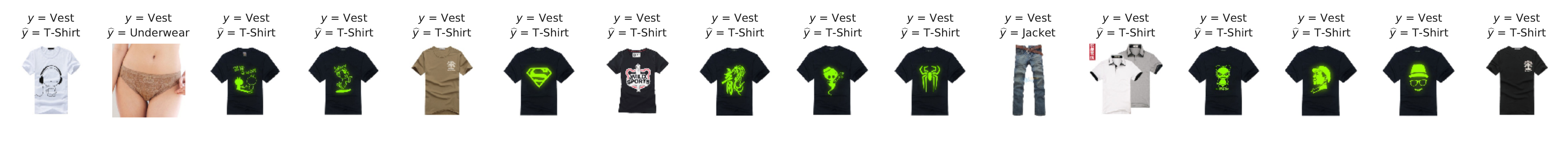}
    \end{subfigure}
    \begin{subfigure}{0.9\textwidth}
    \includegraphics[width=\textwidth]{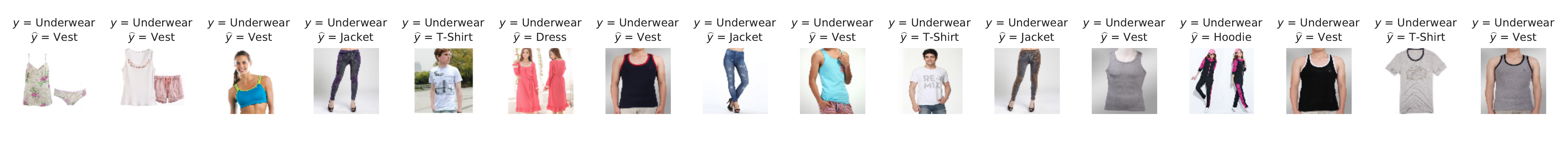}
    \end{subfigure}
    \caption{Most confusing 16 labels per class in the Clothing1M dataset, according to the distance between predicted and cross-entropy gradients. The gradient predictions are done using the best instance of LIMIT.}
    \label{fig:clothgin1m-confusing-more-examples}
\end{figure}

\paragraph{Teacher-student NTK similarity.}
\cref{fig:ntk-matching-comparision-1-appendix,fig:ntk-matching-comparision-2-appendix} present additional evidence that (a) training NTK similarity of the final student and the teacher is correlated with the final student test accuracy; and (b) that online distillation manages to transfer teacher NTK better.

\begin{figure}[!t]
    \centering
    \begin{subfigure}{0.35\textwidth}
        \includegraphics[width=\textwidth]{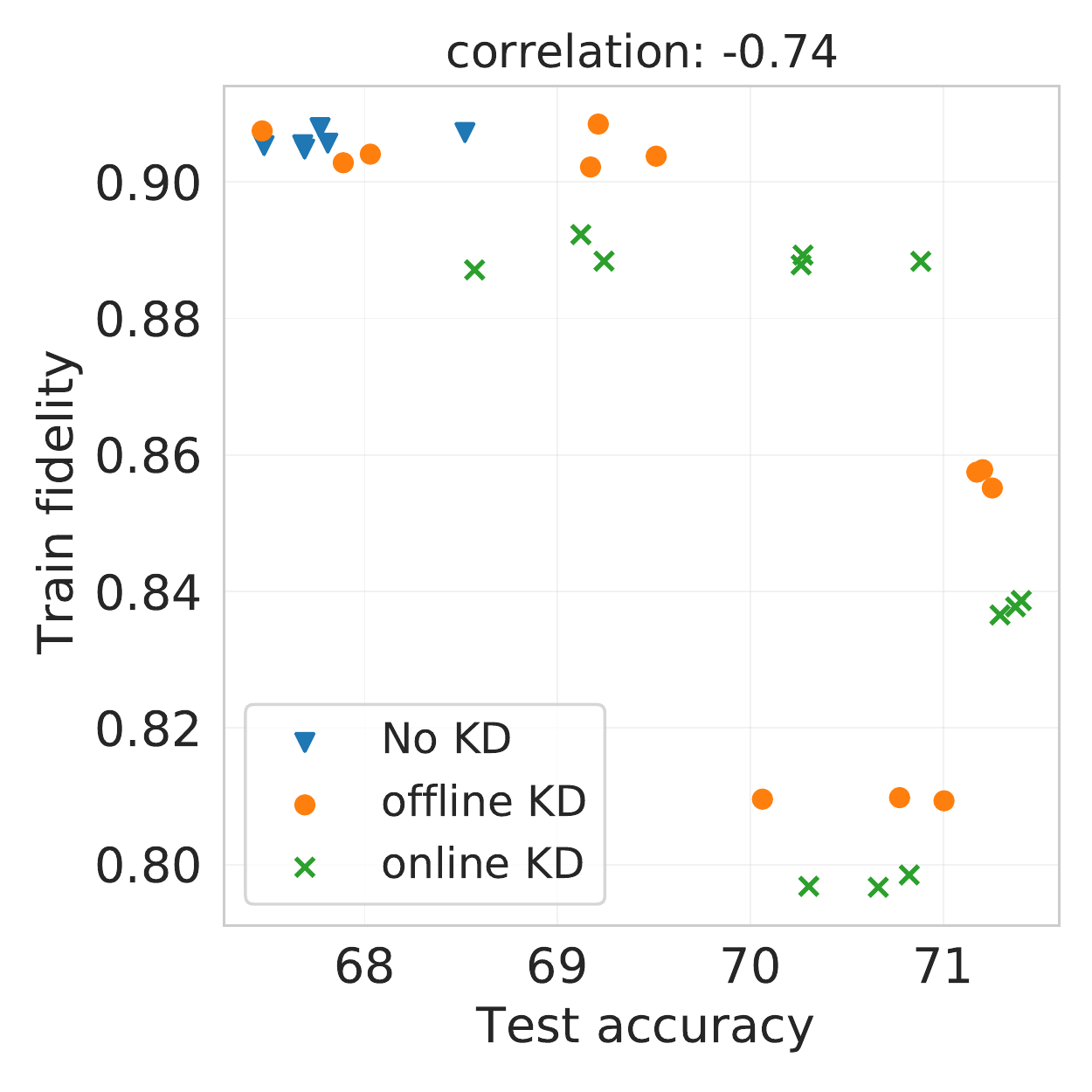}
        \caption{{\centering CIFAR-100, ResNet-20 student, ResNet-110 teacher}}
    \end{subfigure}
    \hspace{4em}
    \begin{subfigure}{0.35\textwidth}
        \includegraphics[width=\textwidth]{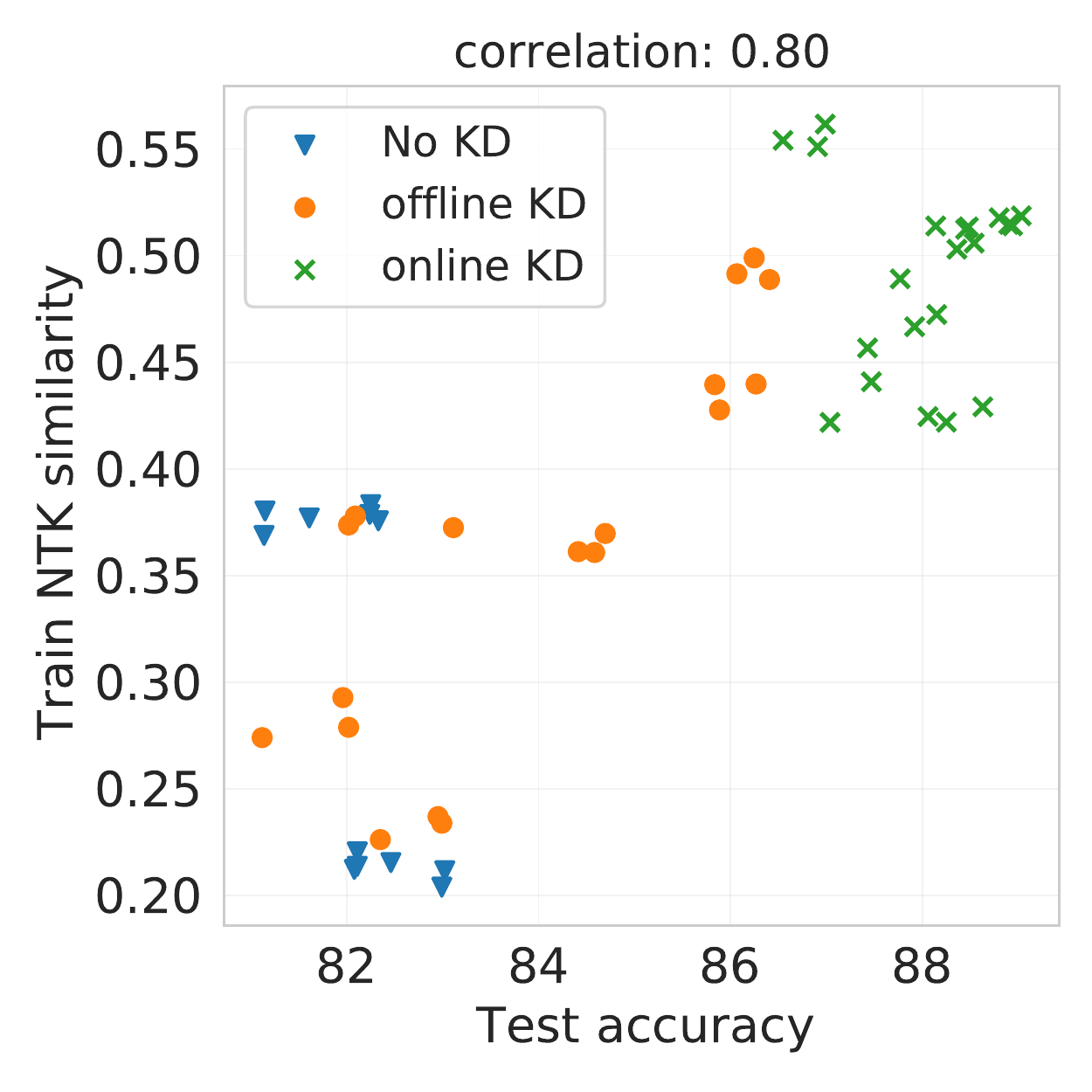}
        \caption{{\centering CIFAR-10, LeNet-5x8 student, ResNet-56 teacher}}
    \end{subfigure}
    \vskip 1em
    \begin{subfigure}{0.35\textwidth}
        \includegraphics[width=\textwidth]{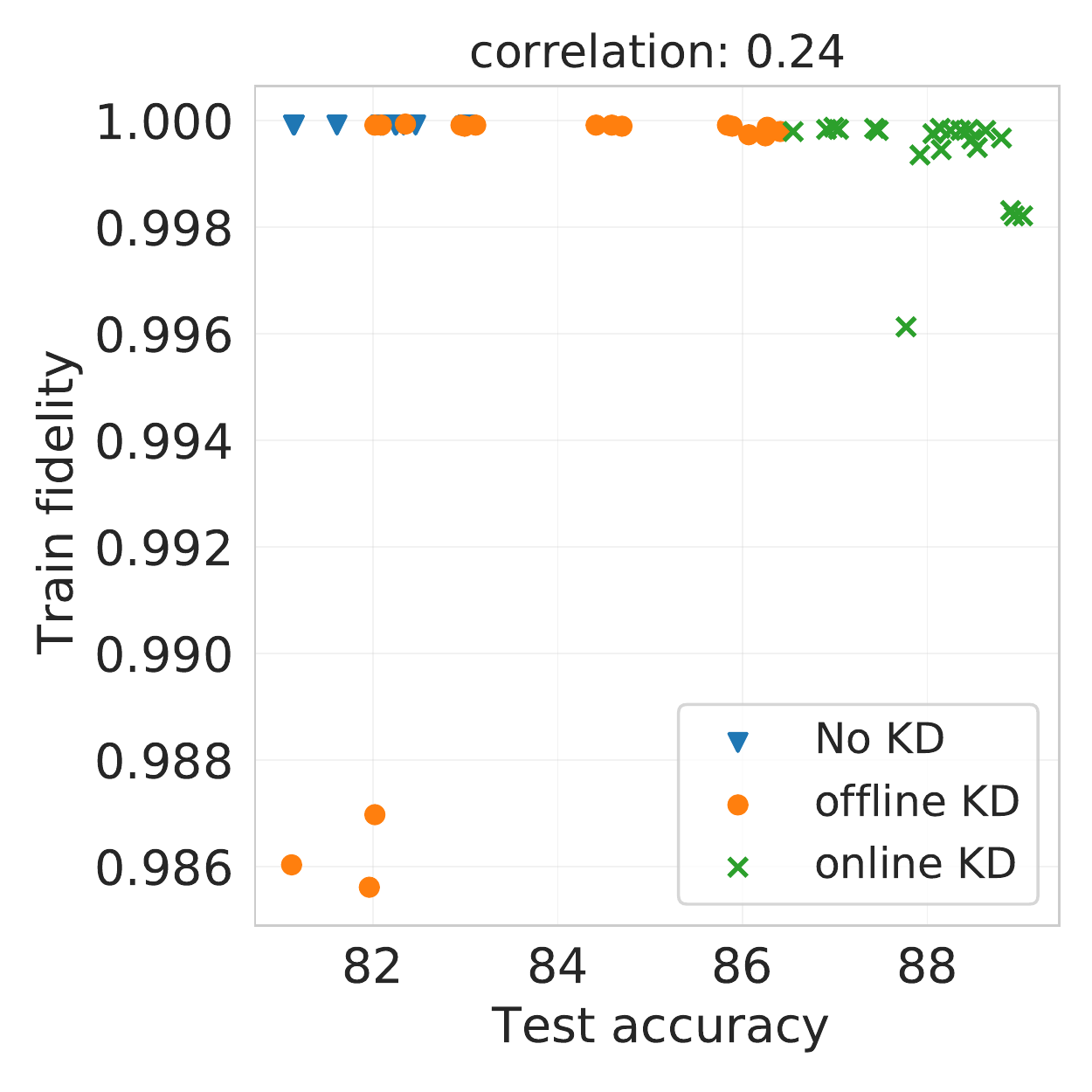}
        \caption{{\centering CIFAR-10, LeNet-5x8 student, ResNet-56 teacher}}
    \end{subfigure}
    \hspace{4em}
    \begin{subfigure}{0.35\textwidth}
        \includegraphics[width=\textwidth]{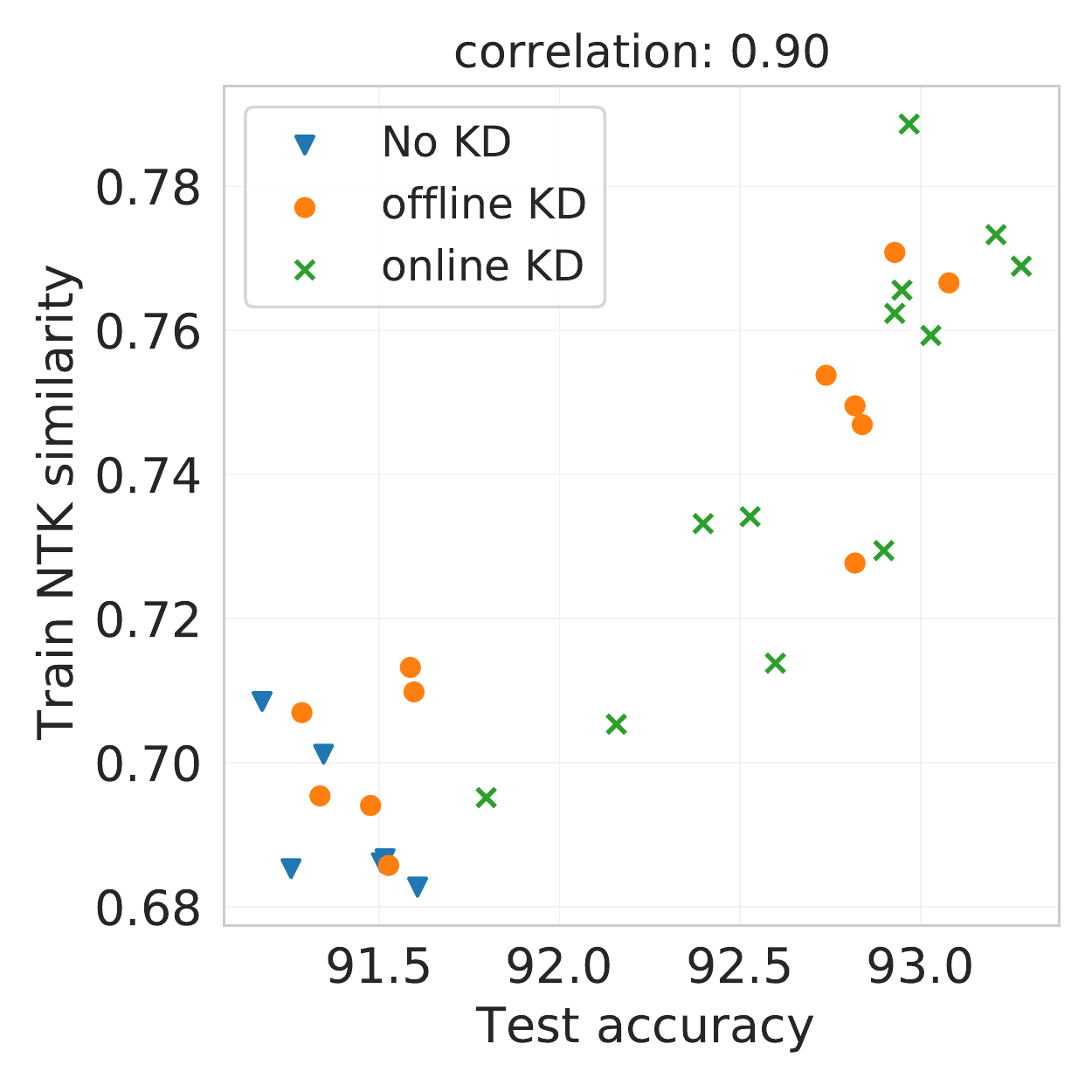}
        \caption{{\centering CIFAR-10, ResNet-20 student, ResNet-101 teacher}}
    \end{subfigure}
    \vskip 1em
    \begin{subfigure}{0.35\textwidth}
        \includegraphics[width=\textwidth]{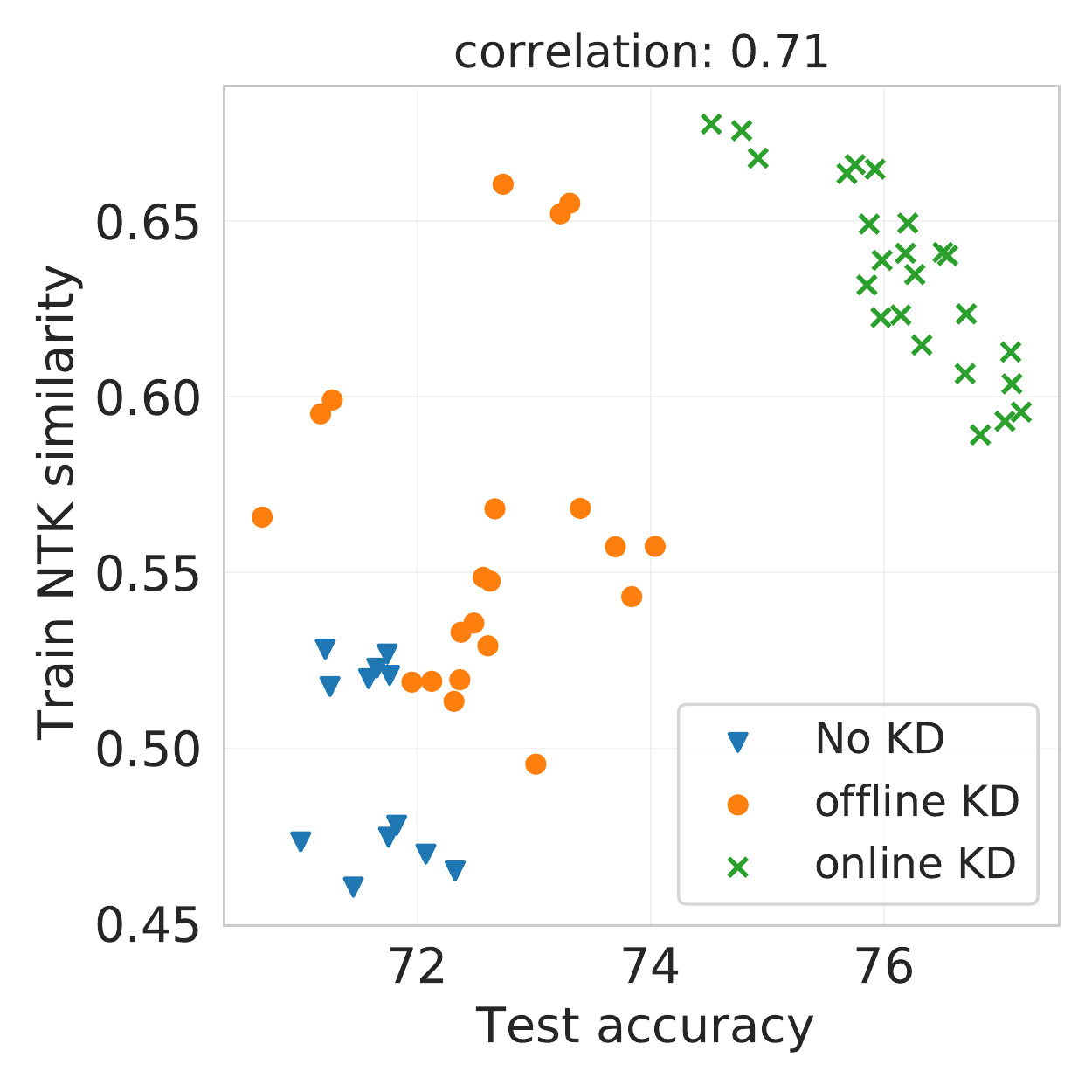}
        \caption{{\centering Binary CIFAR-100, LeNet-5x8 student, ResNet-56 teacher}}
    \end{subfigure}
    \hspace{4em}
    \begin{subfigure}{0.35\textwidth}
        \includegraphics[width=\textwidth]{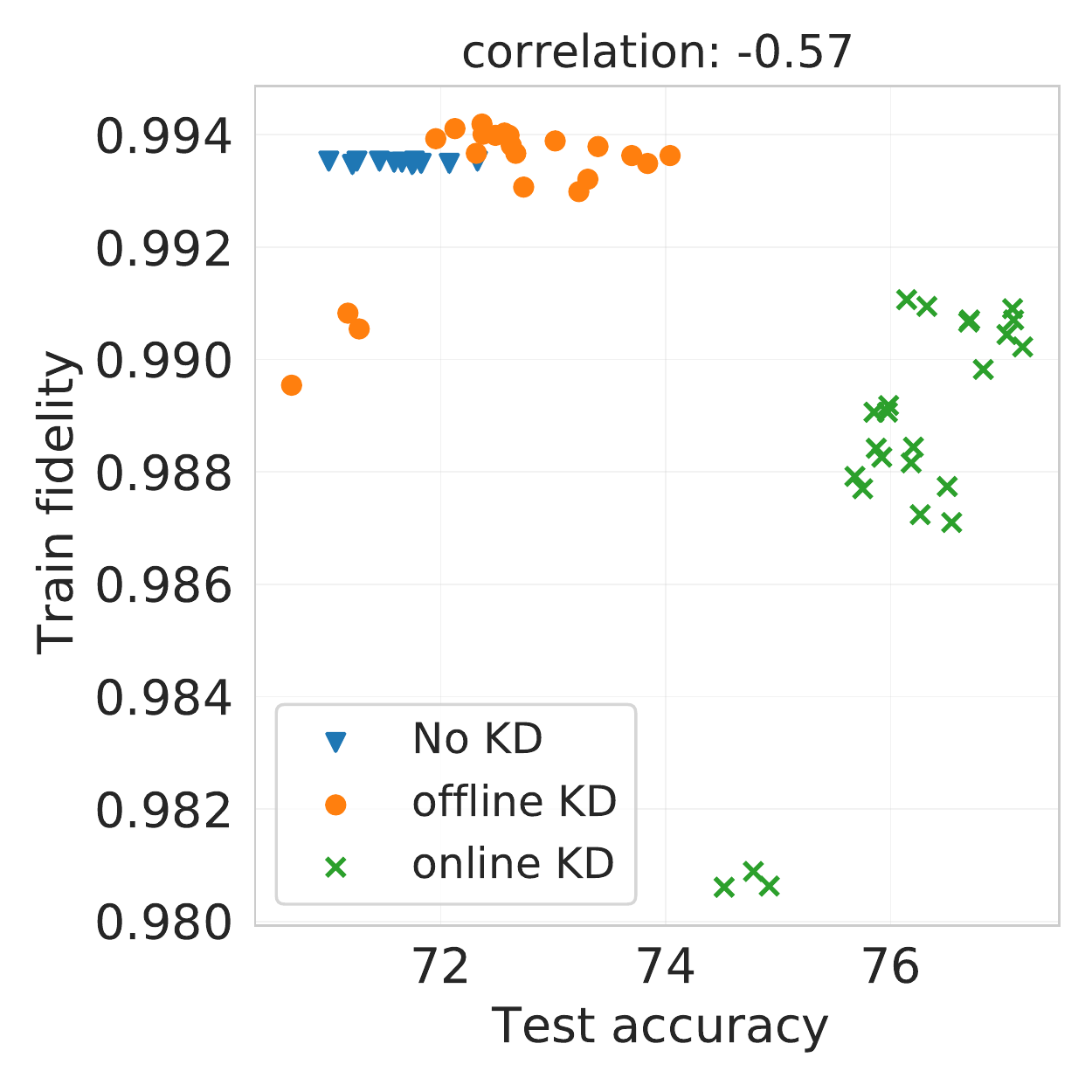}
        \caption{{\centering Binary CIFAR-100, LeNet-5x8 student, ResNet-56 teacher}}
    \end{subfigure}
    \caption{Relationship between test accuracy, train NTK similarity, and train fidelity for various teacher, student, and dataset configurations.}
    \label{fig:ntk-matching-comparision-1-appendix}
\end{figure}

\begin{figure}[!t]
    \centering
    \begin{subfigure}{0.35\textwidth}
        \includegraphics[width=\textwidth]{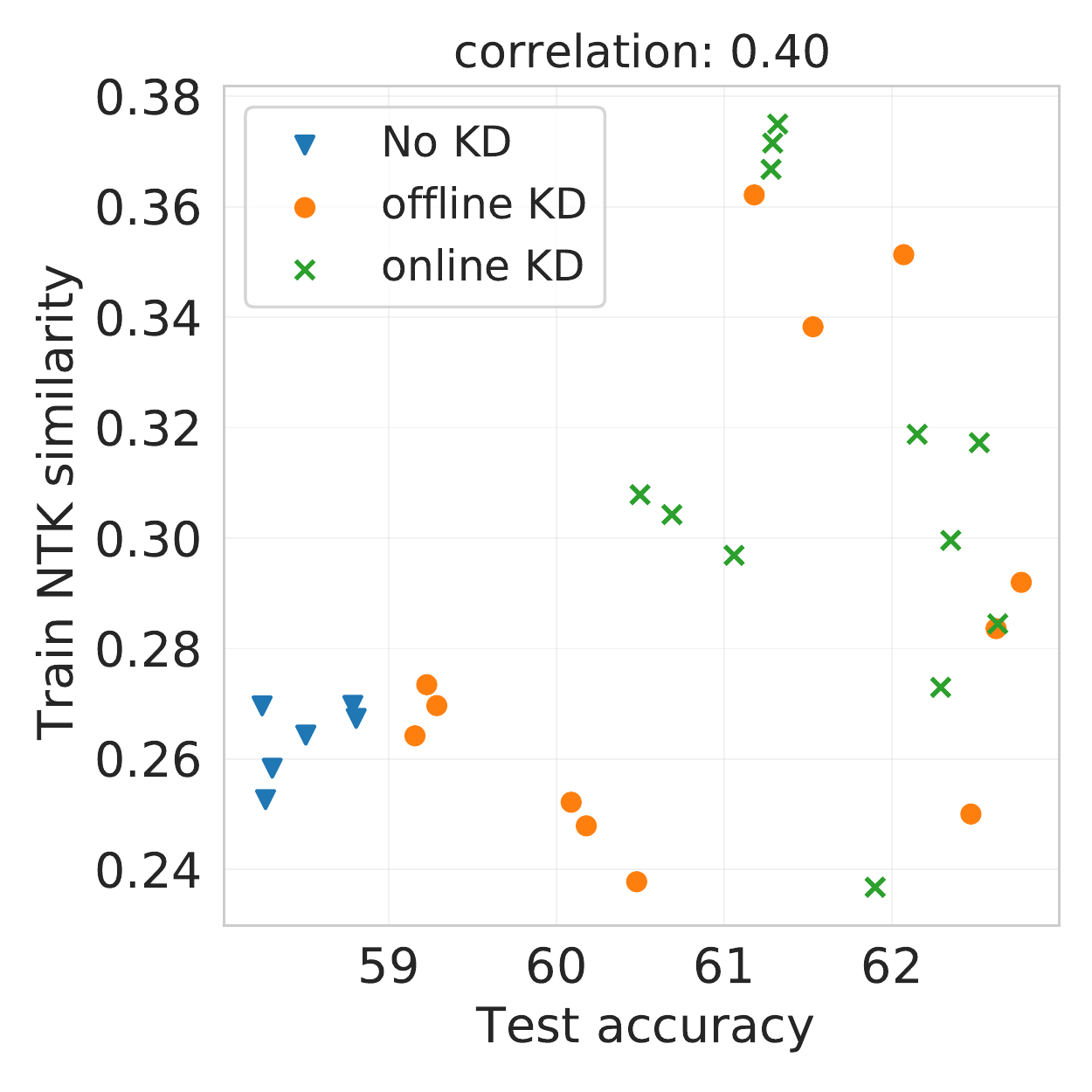}
        \caption{{\centering Tiny ImageNet, MobileNet-V3-35 student, MobileNet-V3-125 teacher}}
    \end{subfigure}
    \hspace{4em}
    \begin{subfigure}{0.35\textwidth}
        \includegraphics[width=\textwidth]{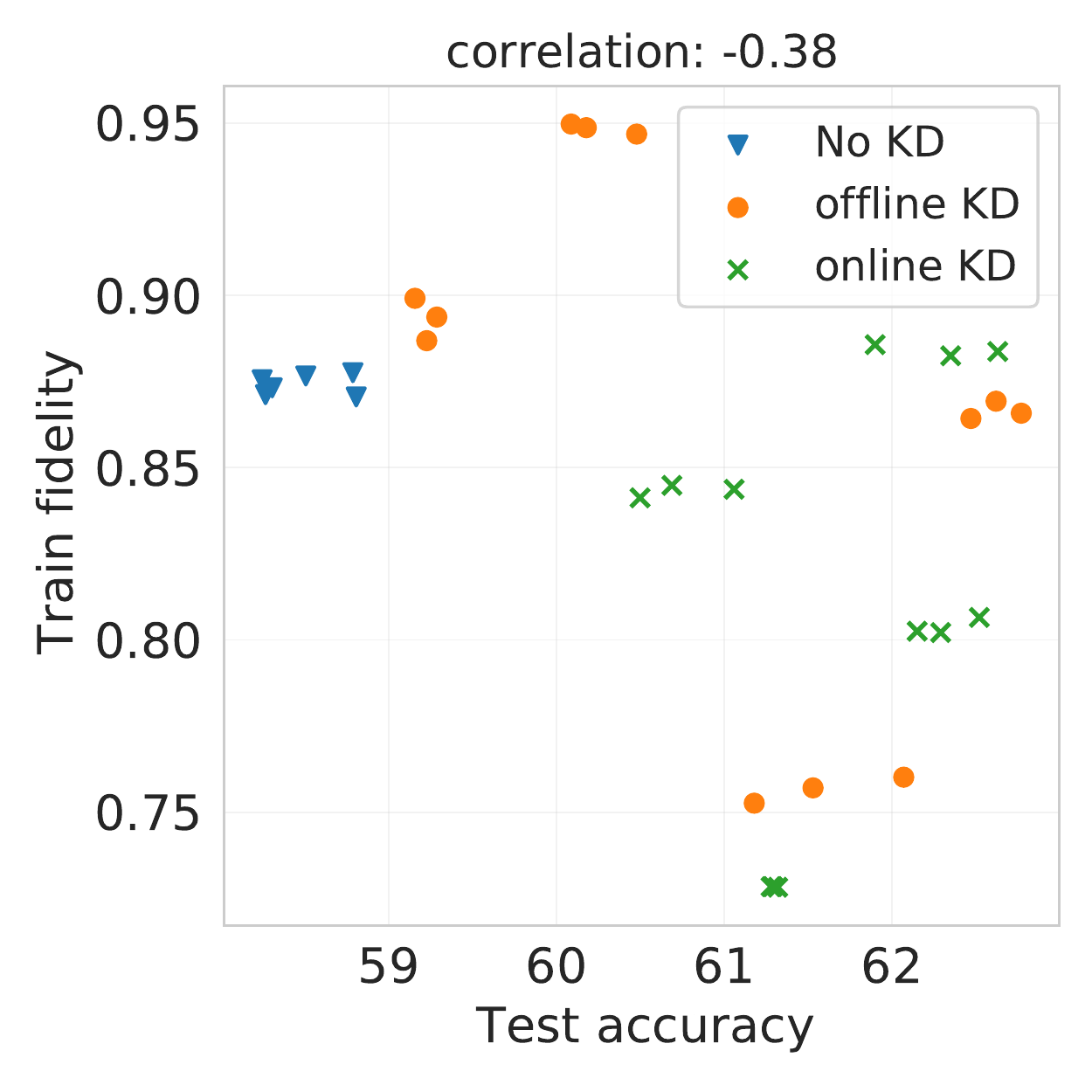}
        \caption{{\centering Tiny ImageNet, MobileNet-V3-35 student, MobileNet-V3-125 teacher}}
    \end{subfigure}
    \vskip 1em
    \begin{subfigure}{0.35\textwidth}
        \includegraphics[width=\textwidth]{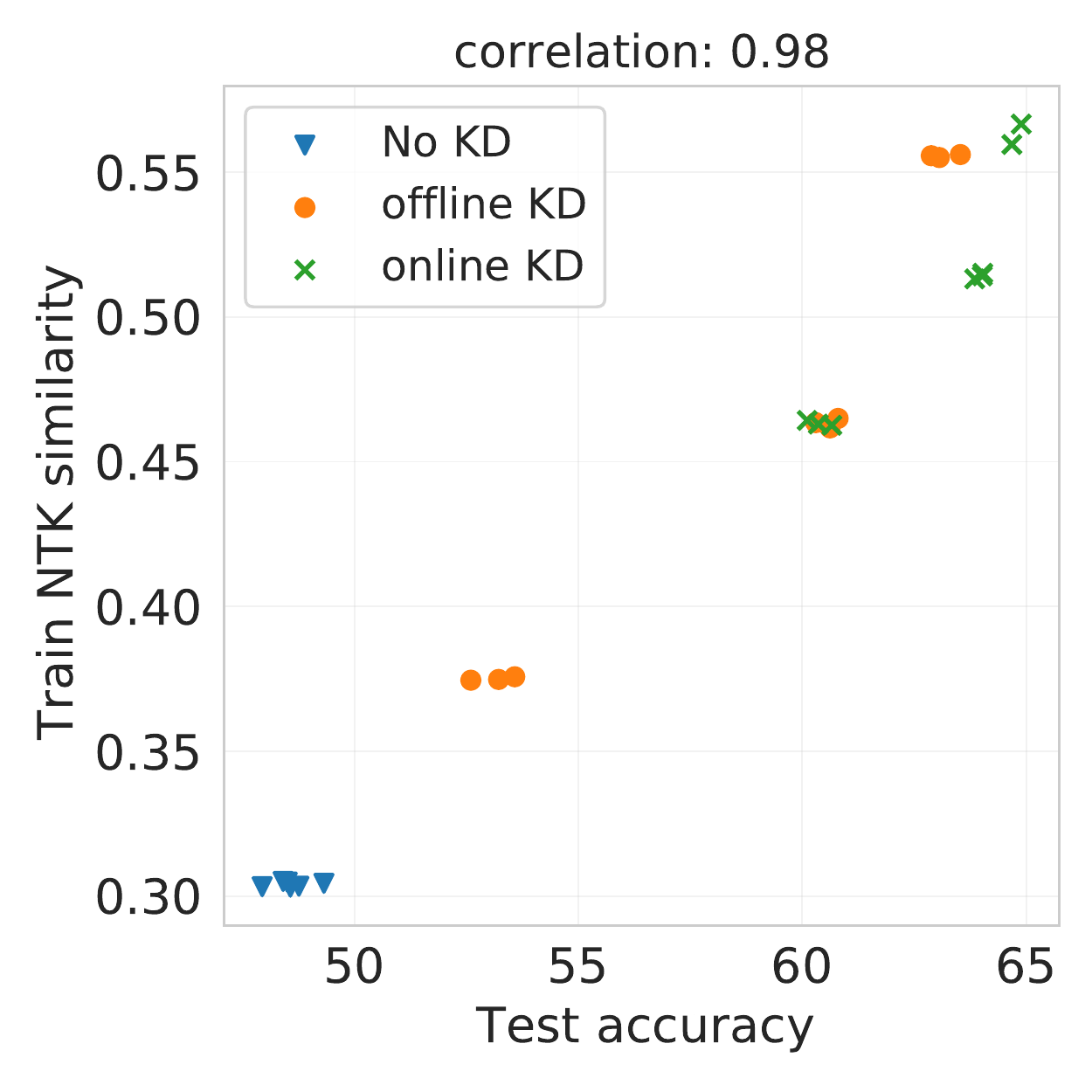}
        \caption{{\centering Tiny ImageNet, VGG-16 student, ResNet-101 teacher}}
    \end{subfigure}
    \hspace{4em}
    \begin{subfigure}{0.35\textwidth}
        \includegraphics[width=\textwidth]{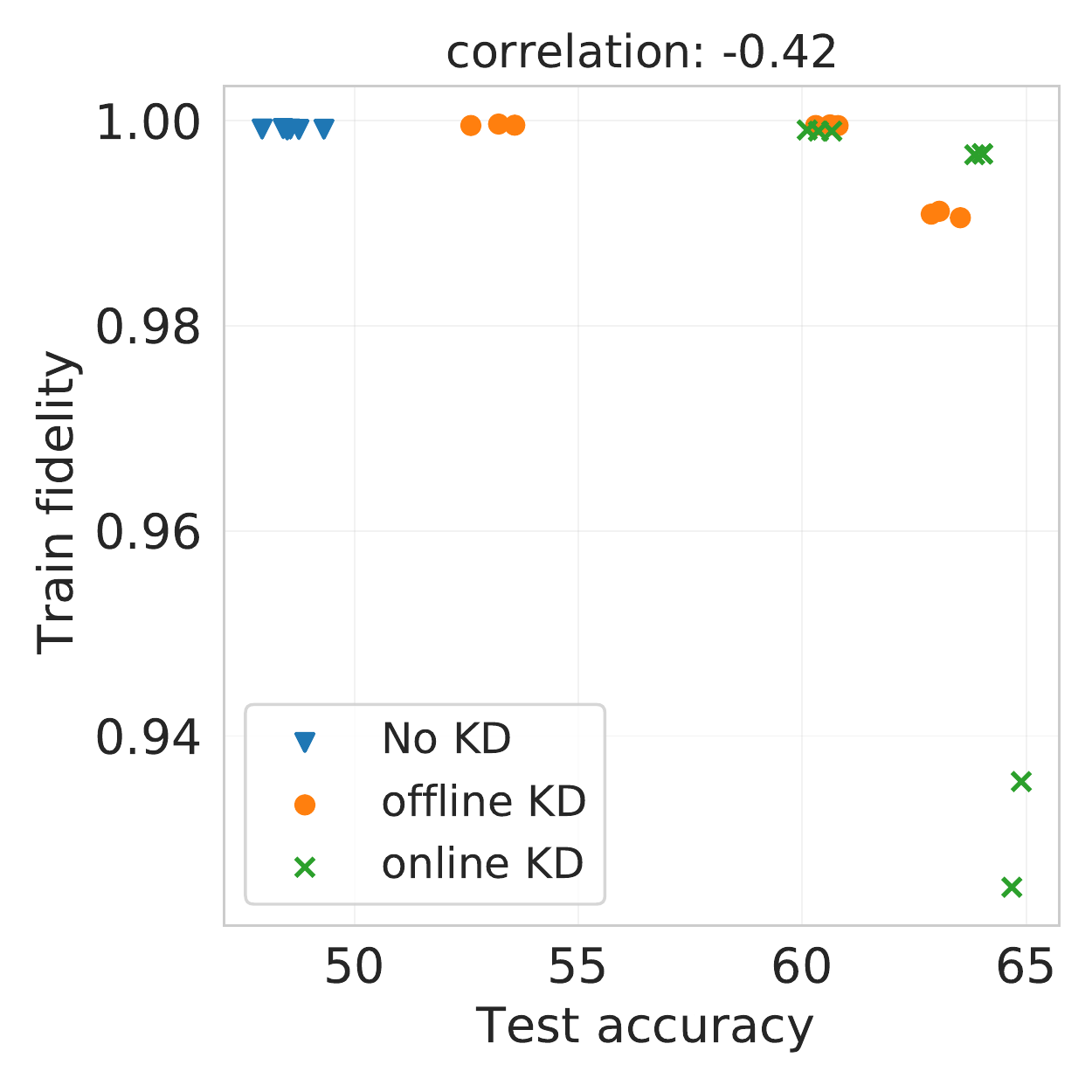}
        \caption{{\centering Tiny ImageNet, VGG-16 student, ResNet-101 teacher}}
    \end{subfigure}
    \vskip 1em
    \begin{subfigure}{0.35\textwidth}
        \includegraphics[width=\textwidth]{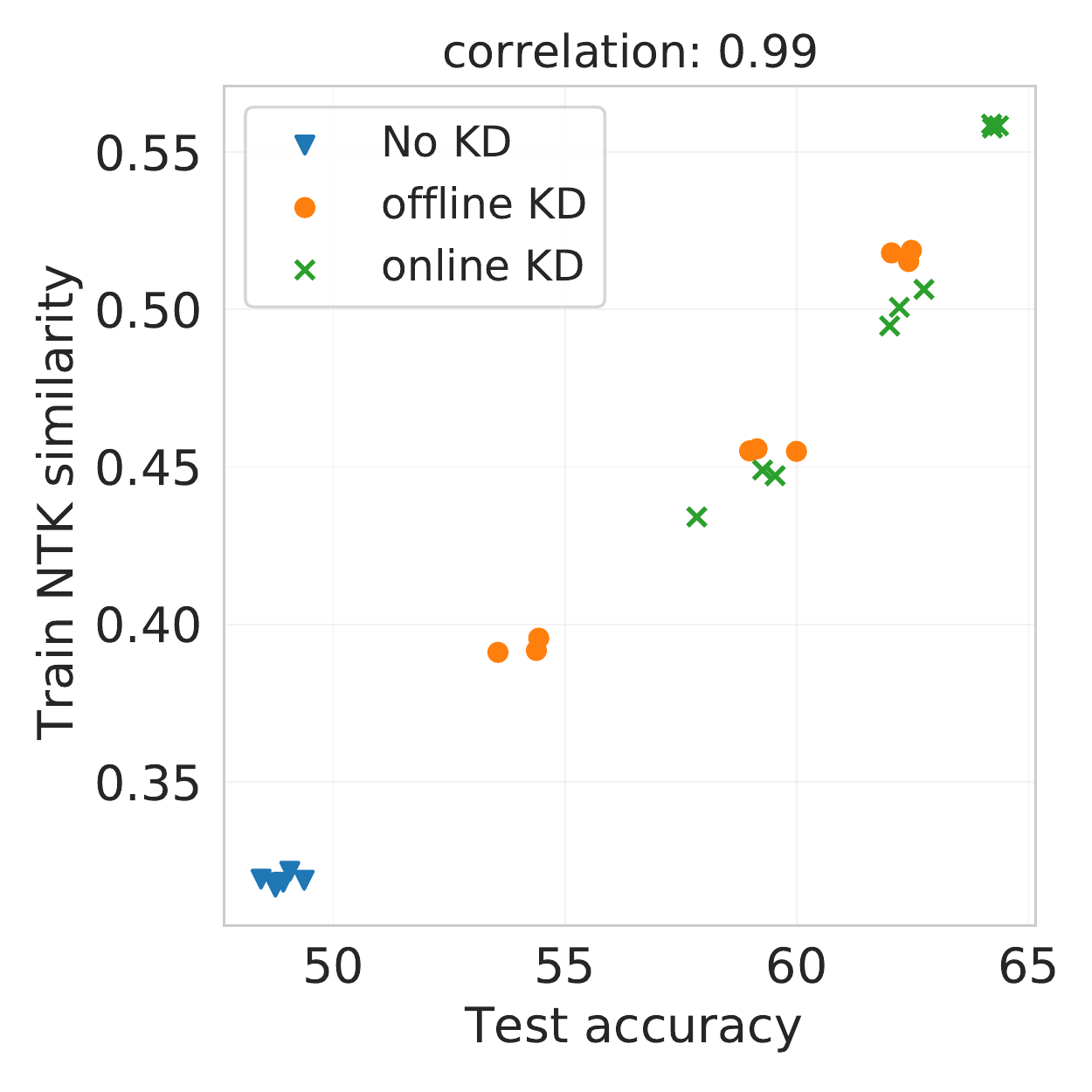}
        \caption{{\centering Tiny ImageNet, VGG-16 student, MobileNet-V3-125 teacher}}
    \end{subfigure}
    \hspace{4em}
    \begin{subfigure}{0.35\textwidth}
        \includegraphics[width=\textwidth]{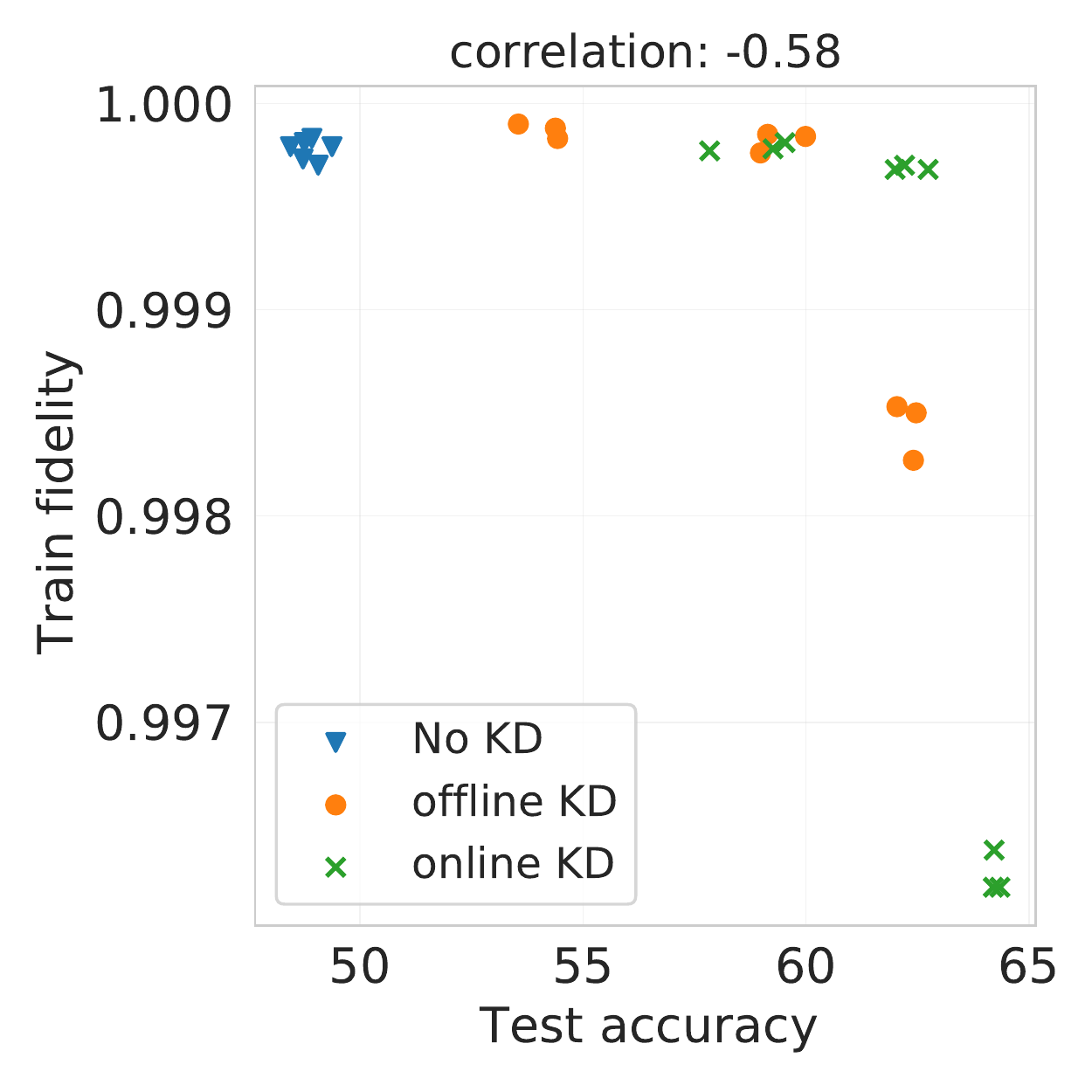}
        \caption{{\centering Tiny ImageNet, VGG-16 student, MobileNet-V3-125 teacher}}
    \end{subfigure}
    \caption{Relationship between test accuracy, train NTK similarity, and train fidelity for various teacher, student, and dataset configurations.}
    \label{fig:ntk-matching-comparision-2-appendix}
\end{figure}

\endgroup

\end{document}